\newtheorem{theorem}{Theorem}
\newtheorem{corollary}{Corollary}
\newtheorem{lemma}{Lemma}
\newtheorem{proposition}{Proposition}
\newtheorem{remark}{Remark}
\numberwithin{equation}{section}
\newenvironment{rcases}
  {\left.\begin{aligned}}
  {\end{aligned}\right\rbrace}
	\renewcommand{\thefootnote}{\arabic{footnote}}
\newcommand{\calC}{\ensuremath{\mathcal{C}}}
\newcommand{\calG}{\ensuremath{\mathcal{G}}}
\newcommand{\calS}{\ensuremath{\mathcal{S}}}
\newcommand{\calN}{\ensuremath{\mathcal{N}}}
\DeclareMathOperator{\tr}{tr}
\newcommand{\norm}[1]{\parallel{#1}\parallel}
\newcommand{\abs}[1]{|{#1}|}
\newcommand{\set}[1]{\left\{{#1}\right\}}
\newcommand{\dotprod}[2]{\langle#1,#2\rangle}
\newcommand{\est}[1]{\widehat{#1}}
\newcommand{\expec}{\ensuremath{\mathbb{E}}}
\newcommand{\matR}{\ensuremath{\mathbb{R}}}
\newcommand{\argmin}[1]{\underset{#1}{\operatorname{argmin}}}
\newcommand{\prob}{\ensuremath{\mathbb{P}}}
\newcommand{\vecx}{\mathbf x}
\newcommand{\vecq}{\mathbf q}
\newcommand{\veci}{\mathbf i}
\newcommand{\vecj}{\mathbf j}
\newcommand{\vecv}{\mathbf v}
\newcommand{\vecb}{\mathbf b}
\newcommand{\vecy}{\mathbf y}
\newcommand{\vecf}{\mathbf f}
\newcommand{\vecz}{\mathbf z}
\newcommand{\veca}{\mathbf a}
\newcommand{\vecg}{\mathbf g}
\newcommand{\matA}{\ensuremath{\mathbf{A}}}
\newcommand{\matI}{\ensuremath{I}} 
\newcommand{\matP}{\ensuremath{\mathbf{P}}}
\newcommand{\real}{\text{Re}}
\newcommand{\imag}{\text{Im}}
\newcommand{\vecgbar}{\bar{\mathbf g}}
\newcommand{\vecgbarest}{\est{\vecgbar}}
\newcommand{\Hbar}{H}
\newcommand{\veczbar}{\bar{\mathbf z}}
\newcommand{\vecgest}{\est{\vecg}}
\newcommand{\gest}{\est{g}}
\newcommand{\qtathresh}{\zeta}
\newcommand{\sign}{\text{sign}}
\newcommand{\htil}{h}
\newcommand{\vechtil}{\mathbf{h}}
\newcommand{\vechtilbar}{\bar{\mathbf{h}}}
\newcommand{\mean}{{\bar{\boldsymbol{\mu}}}}
\newcommand{\trseq}{\ensuremath{TSR_{=}}}
\newcommand{\trsineq}{\ensuremath{TSR_{\leq}}}
\newcommand{\mb}[1]{\mbox{\boldmath$#1$}} 
\newcommand\mydef{\mathrel{\overset{\makebox[0pt]{\mbox{\normalfont\tiny\sffamily def}}}{=}}}
\begin{document}
 
%
\title{Provably robust estimation of modulo $1$ samples of a smooth function with applications to phase unwrapping}
%
\author{Mihai~Cucuringu\footnotemark[1] \footnotemark[3] , Hemant Tyagi \footnotemark[2]  }

\renewcommand{\thefootnote}{\fnsymbol{footnote}}
\footnotetext[1]{Department of Statistics and Mathematical Institute, University of Oxford, Oxford, UK. Email: mihai.cucuringu@stats.ox.ac.uk}
\footnotetext[2]{INRIA Lille - Nord Europe, Lille, France.  Email: hemant.tyagi@inria.fr}
\footnotetext[3]{Alan Turing Institute, London, UK.}
\renewcommand{\thefootnote}{\arabic{footnote}}


\maketitle

\begin{abstract}
Consider an unknown smooth function  $f: [0,1]^d \rightarrow \matR$, and assume we are given 
$n$ noisy mod 1 samples of $f$, i.e., $y_i = (f(x_i) + \eta_i)\mod 1$, for  $x_i \in [0,1]^d$,  
where $\eta_i$ denotes the noise. Given the samples $(x_i,y_i)_{i=1}^{n}$, our goal is to recover smooth, robust estimates of the clean  samples $f(x_i) \bmod 1$. We formulate a natural approach for solving this problem, which works with angular embeddings of
the noisy  mod 1 samples over the unit circle, inspired by the angular synchronization framework. 
This amounts to solving a smoothness regularized least-squares problem -- a quadratically constrained quadratic program (QCQP) -- where the 
variables are constrained to lie on the unit circle. Our proposed approach is based on solving its relaxation, which is a  \emph{trust-region sub-problem} and hence solvable efficiently. We provide theoretical guarantees demonstrating its robustness to noise for adversarial, as well as random Gaussian and Bernoulli noise models. To the best of our knowledge, these are the first such theoretical results for this problem.
We demonstrate the robustness and efficiency
 of our  proposed approach via extensive numerical simulations on synthetic data, along with a simple least-squares based solution for the unwrapping stage, that recovers  the original samples of $f$ (up to a global shift). It is shown to perform well at high levels of noise, when taking as input the denoised modulo $1$ samples.

Finally, we also consider two other approaches for denoising the modulo 1 samples that leverage tools from Riemannian optimization on manifolds, including a Burer-Monteiro approach for a semidefinite programming relaxation of our formulation. For the  two-dimensional version of the problem, which has applications in synthetic aperture radar interferometry (InSAR),  
we are able to solve instances of real-world data with a million sample points in under 10 seconds, on a personal laptop.
\end{abstract}

\textbf{Keywords:} quadratically constrained quadratic programming (QCQP), trust-region sub-problem, angular embedding, phase unwrapping, semidefinite programming, angular synchronization.

\tableofcontents

\section{Introduction} \label{sec:Intro}
In many domains of science and engineering, one is given access to noisy samples of a signal $f$, and the 
goal is to recover the original clean samples. The signal $f$ is typically smooth in some sense, and  one would like to have an algorithm that 
is not only robust to noise, 
but also outputs smooth estimates. 
More formally, we can think of the signal $f: \matR^d \rightarrow \matR$ 
for which we are typically given samples $f(\vecx_i); \ i=1,\dots,n$ for $\vecx_i \in \mathcal{U}$ for a compact $\mathcal U \subset \matR^d$. Perhaps one of the most important applications of this problem arises  in image denoising, with a rich body of literature (see for eg., \cite{elad2006image, lou2010imageBertozzi}).

Interestingly, there are applications where one does not observe the samples directly, but only the 
\emph{modulo} samples of $f$, i.e., $f(\vecx_i) \bmod \zeta$ for some $\zeta \in \matR^{+}$. For instance, when $\zeta = 1$, 
 $f(\vecx) \bmod 1$ simply corresponds to the fractional part of $f(\vecx)$. Such measurements are typically obtained due to 
constraints on the sampling hardware, or due to physical constraints imposed by the specific nature of the  problem. 
To see this, we discuss two important applications below.  

\paragraph{Self-reset analog-to-digital converters (ADCs).} 
Traditional ADCs have voltage limits in place that cut off the signal, i.e., saturate, whenever the signal 
value lies outside the limits. Recently, a new generation of ADCs  have adopted a different approach to this problem, wherein  
they simply \emph{reset} the signal value to the other threshold value (\cite{Kester,RHEE03,kavusi04,sasa16,yamaguchi16}). 
For example, if the voltage range is $[0,b]$, the reset operation simply corresponds to a modulo $b$ operation on the signal value. 
This especially helps in working with signals whose dynamic range is much larger than what can be handled by a standard ADC.
This signal acquisition mechanism was the main motivation behind the recent work of \cite{bhandari17} (also covered in the media\footnote{\url{http://news.mit.edu/2017/ultra-high-contrast-digital-sensing-cameras-0714}}), wherein the authors 
derived conditions for exact recovery of a band limited function from its samples, in the noiseless setting \cite[Theorem 1]{bhandari17}. 
Let us note that band limitedness is implicitly a smoothness assumption on $f$, such an assumption being clearly necessary in order to be able to provably recover the original (unwrapped) samples of $f$. 

\paragraph{Phase unwrapping.} Phase unwrapping refers to the problem of recovering the original phase values of a signal $\phi$ (at different 
spatial locations), from their modulo $2\pi$ (in radians) versions. The case where $\phi :\matR^2 \rightarrow \matR$ has received considerable attention,  in particular due to important applications arising for instance in synthetic aperture radar interferometry (InSAR) (\cite{graham_insar,zebker86}), 
magnetic resonance imaging (MRI) (\cite{hedley92,laut73}), optics (\cite{Venema08}), diffraction tomography (\cite{pratt88}) and 
non destructive testing of components (\cite{paoletti94,hung96}), to name a few. Generally speaking, 
remote sensing systems typically obtain information about the structure of an object by measuring the phase coherence between the transmitted and 
scattered waveforms. For instance, in radar interferometry, information about the terrain elevation is inherently present in the phase values. 
In MRI, information regarding the velocity of blood flow or the position of veins in tissues can be obtained from the phase values.
The higher-dimensional case has received comparatively less attention. The three-dimensional version of the problem has applications 
in $3$D MRI imaging (\cite{jenkinson03}) and radar interferometry (\cite{Hooper_2007_PhaseUnwrap3D, Osmanoglu_3D_2014}).
Lastly, some papers have also formulated methods for the general $d$ dimensional case (\cite{jenkinson03,fang06}).


Our work can also be placed in the context of denoising smooth functions taking values in a nonlinear space. \cite{DonohoManifoldData} considered multiscale representations for manifold-valued data observed on equispaced grids and taking values on manifolds such as the sphere $S^2$, the special orthogonal group  $SO(3)$ or the Special Euclidean Group $SE(3)$. They proposed a method that generalizes wavelet analysis from the traditional setting where functions defined on the equispaced values in a cartesian grid no longer take simple  values such as  numerical array, but rather arrays whose entries have highly structured values obeying nonlinear constraints, and show that such representations are successful in tasks such as denoising, compression or contrast enhancement.

%
As a word of caution  to the reader, we note that this problem is different from the celebrated  \textit{phase retrieval}, a classical problem in optics that has attracted a surge of interest in recent years (see \cite{phase_retrieval_survery_candes, phase_retrieval_survery}). There, one attempts to recover an unknown signal from the magnitude (intensity)  of its Fourier transform.  Just like phase retrieval, the recovery of a function from mod 1 measurements is, by its very nature, an ill-posed problem, and one needs to incorporate prior structure on the signal. In our case is smoothness of $f$ (in an analogous way  to how enforcing sparsity renders the phase retrieval problem well-posed).  While there have  been a variety of approaches to phase retrieval, recent progress  in the compressed sensing and convex optimization-based signal processing have inspired new potential research directions.  The approach we pursue in this paper is inspired by developments in  the \textit{trust region sub-problem}  (\cite{Naka17}) and \textit{group synchronization} (\cite{sync, syncRank}) literatures.

\paragraph{Overview of approach and contributions.} 
At a high level, one would like to recover denoised samples (i.e., smooth, robust estimates) of $f$ from 
its noisy mod  1 versions. A natural approach to tackle this problem is the following two-stage method. 
In the first stage, one recovers denoised  mod 1 samples of $f$, while in the (unwrapping) second stage, 
one uses these samples to recover the original real-valued samples of $f$. 
In this paper, we mainly focus on the first stage, which is a challenging problem in itself.  To the best of our knowledge, we provide the first algorithm for denoising $\bmod 1$ samples of a function, which comes with  robustness guarantees. 
In particular, we make the following contributions\footnote{A preliminary version of this paper containing the results of Sections \ref{sec:trust_reg_relax}, \ref{sec:bound_noise_analysis} and parts of Section \ref{sec:num_exps} will appear in AISTATS 2018 (\cite{mod1hemant}). This is a significantly expanded version containing 
additional theoretical results, numerical experiments and a detailed discussion.}.

\begin{enumerate} 
\item We formulate a general framework for denoising the $\bmod 1$  samples of $f$ which involves 
mapping the noisy  mod 1 values (lying in $[0,1)$) to the angular domain (i.e. in $[0,2\pi)$)) and solving 
a smoothness regularized least-squares problem. This amounts to a quadratically constrained quadratic program (QCQP) with 
non-convex constraints wherein the variables are required to lie on the unit circle. 
We consider solving a relaxation of this QCQP which is a \emph{trust-region sub-problem}, and hence solvable efficiently.

\item We provide a detailed theoretical analysis for the above approach, proving its robustness 
to noise for different noise models, provided the noise level is not large. Specifically, for $d = 1$, we show this for arbitrary bounded 
noise (see \eqref{eq:arb_bd_noise_model},\eqref{eq:BoundedNoiseModel}, Theorem \ref{thm:arb_noise_model}), 
Bernoulli-uniform noise (see \eqref{eq:bern_unif_noise_model}, Theorem \ref{thm:bern_unif_noise_model}) 
and Gaussian noise (see \eqref{eq:gauss_noise_model}, Theorem \ref{thm:gauss_noise_model}). For the multivariate case $d \geq 1$, we show 
this for arbitrary bounded noise (see Theorem \ref{thm:arb_noise_multiv}).

\item We test the above trust-region based method on synthetic data which demonstrates that it performs well for reasonably high levels of noise.
To complete the picture, we also implement the second stage with a simple least-squares based method for recovering the (real-valued) 
samples of $f$, and show that it performs surprisingly well via extensive simulations. 

\item Finally, we also consider two other approaches for denoising the modulo 1 samples that leverage tools 
from Riemannian optimization on manifolds (\cite{absil2007trust}).
The first one is based on a semidefinite programming (SDP) relaxation of the QCQP which we solve via the Burer-Monteiro approach. The second one involves solving 
the original QCQP by optimizing over the manifold associated with the constraints. We implement both approaches    using the Manopt toolbox (\cite{manopt}), and  highlight their scalability and robustness to noise
via extensive experiments on the two-dimensional version of the problem. 
In particular, 
we are able to solve  instances of real-world problems containing a million samples in under 10 seconds on a personal laptop.
\end{enumerate}
%
%

\paragraph{Outline of paper.} Section \ref{sec:ProblemSetup} formulates the problem formally, and introduces notation for the $d=1$ case.
Section \ref{sec:AngularLS} sets up the  mod 1 denoising problem as a smoothness regularized least-squares problem 
in the angular domain which is a QCQP. 
Section \ref{sec:trust_reg_relax} describes 
its relaxation to a \textit{trust-region sub-problem}, and some simple approaches for unwrapping, i.e., recovering the samples of $f$, along 
with our complete two-stage algorithm. 
Section \ref{sec:bound_noise_analysis} contains  approximation guarantees for the arbitrary bounded noise model for the trust region based 
relaxation for recovering the denoised  $\bmod 1$ samples of $f$ (when $d=1$).  Section \ref{sec:analysis_rand_noise} contains similar results for two 
random noise models (Gaussian and Bernoulli-uniform). Section \ref{sec:multiv_analysis_bdnoise} discusses the generalization of our approach to 
the multivariate setting when $d \geq 1$, along with approximation guarantees for the bounded noise model.
Section \ref{sec:num_exps} contains experiments on synthetic data with $d=1$ for different noise models, as well a comparison with the algorithm of
\cite{bhandari17}.
In Section \ref{sec:opt_manifolds}, we describe two other approaches based on optimization on manifolds for denoising the modulo 1 samples. 
It also contains experiments involving these approaches for the $d=2$ setting, on synthetic and real-world data.
Section  \ref{sec:related_work} surveys a number of related approaches and applications, with a focus on those arising in the phase unwrapping literature.
Section \ref{sec:conclusion}, summarizes our findings, and also contains a discussion of  possible future research directions.  
Finally, the Appendix contains supplementary material related to proofs and additional numerical experiments.

\section{Problem setup} \label{sec:ProblemSetup}
We begin with the problem setup for the univariate case $d = 1$, as much of the analysis in the ensuing sections is for this 
particular setting. The multivariate case where $d \geq 1$ is treated separately in Section \ref{sec:multiv_analysis_bdnoise}.

Consider a smooth, unknown function $f : [0,1] \rightarrow \mathbb{R}$, and a uniform grid on $[0,1]$, 
\begin{equation}
0 = x_1 < x_2 < \cdots < x_n = 1 \ \text{with} \ x_i = \frac{i-1}{n-1}.
\end{equation}
%
We assume that we are given \emph{mod 1} samples of $f$ on the above grid. Note that for each  sample we can decompose the function as 
%
\begin{equation}   \label{eq:fmod1}
 f(x_i) = q_i + r_i \in \mathbb{R}, 
\end{equation}
%
%
with $q_i \in \mathbb{Z}$ and $r_i \in [0,1)$, we have 
$r_i = f(x_i) \bmod 1$. The modulus is fixed to 1 without loss of generality since 
$\frac{f \bmod s}{s} = \frac{f}{s} \bmod 1$. This is easily seen by writing $ f = sq + r $, with $q \in \mathbb{Z}$, 
and observing that $ \frac{f}{s} \bmod 1 =  \frac{sq + r}{s} \bmod 1 = \frac{r}{s} = \frac{f \bmod  s}{s}$.
In particular, we assume that the mod 1 samples are \emph{noisy}, and consider the following three noise models. 
%
%
\begin{enumerate}
\item \textbf{Arbitrary bounded noise}
%
\begin{equation} \label{eq:arb_bd_noise_model}
y_i = (f(x_i) + \delta_i) \bmod 1; \ \abs{\delta_i} \in (0,1/2); \quad i=1,\dots,n.
\end{equation}

\item \textbf{Bernoulli-Uniform noise}
\begin{equation} \label{eq:bern_unif_noise_model}
y_i = \left\{
\begin{array}{rl}
f(x_i) \mod 1 \quad ; & \text{w.p} \ 1-p \\
\sim U[0,1] \quad ; & \text{w.p} \ p
\end{array} \right. \quad \text{i.i.d}; \quad i=1,\dots,n.
\end{equation}
Hence for some parameter $p \in (0,1)$ we either observe the clean sample with probability $1-p$, 
or some garbage value generated uniformly at random in $[0,1]$.

%
\item \textbf{Gaussian noise}
%
\begin{equation} \label{eq:gauss_noise_model}
y_i = (f(x_i) + \eta_i) \mod 1; \quad i=1,\dots,n, 
\end{equation}
%
where $\eta_i \sim \mathcal{N}(0,\sigma^2)$ i.i.d.

\end{enumerate}
We will denote $f(x_i)$ by $f_i$ for convenience. Our aim is to recover smooth, robust estimates (up to a global shift) 
of the original samples $(f_i)_{i=1}^{n}$ from the measurements $(x_i, y_i)_{i=1}^{n}$. 
We will assume $f$ to be H\"older continuous meaning that for constants $M > 0$, $\alpha \in (0,1]$, 
%
%
\begin{equation} \label{eq:f_smooth_hold}
\abs{f(x) - f(y)} \leq M \abs{x - y}^{\alpha}; \quad \forall \ x,y \in [0,1].
\end{equation}  

The above assumption is quite general and reduces to Lipschitz continuity when $\alpha = 1$. 
%

\paragraph{Notation.} Scalars and matrices are denoted by lower case and upper cases symbols respectively, 
while vectors are denoted by lower bold face symbols. Sets are denoted by calligraphic symbols (eg., $\calN$), 
with the exception of $[n] = \set{1,\dots,n}$ for $n \in \mathbb{N}$. The imaginary unit is denoted by $\iota = \sqrt{-1}$.  
The  notation introduced throughout  Sections  \ref{sec:AngularLS}  and   \ref{sec:trust_reg_relax}   is summarized in Table \ref{tab:methodAbbrev}.
We will denote the $\ell_p$ ($1 \leq p \leq \infty$) norm of a vector $\vecx \in \matR^n$ by 
$\norm{\vecx}_p$ (defined as $(\sum_i \abs{x_i}^p)^{1/p}$). In particular, $\norm{\vecx}_{\infty} := \max_i \abs{x_i}$. 
For a matrix $A \in \matR^{m \times n}$, we will denote its spectral norm (i.e., largest singular value) by $\norm{A}$ 
and its Frobenius norm by $\norm{A}_F$ (defined as $(\sum_{i,j} A_{i,j}^2)^{1/2}$). For a square matrix, we denote its 
trace by $\tr(\cdot)$.


\section{Smoothness regularized least-squares in the angular domain}  \label{sec:AngularLS}
Our algorithm essentially works in two stages.
%
\begin{enumerate}
\item \textbf{Denoising stage}. Our goal here is to denoise the mod 1 samples, 
which is also the main focus of this paper. In a nutshell, we map the given noisy mod 1 samples 
to  points on the unit complex circle, 
and solve a smoothness regularized, constrained least-squares problem. 
The solution to this problem, followed by a simple post-processing step, yields the final denoised mod 1 samples  of $f$.

\item \textbf{Unwrapping stage}. The second stage takes as input the above denoised  mod 1 samples, and produces  an estimate 
of  the original real-valued samples of $f$ (up to a global shift).
\end{enumerate}
%
We start the denoising stage by mapping the  mod 1 samples to the angular domain, with 
\begin{equation} \label{eq:unit_complex_circ_rep}
\hspace{-3mm} \htil_i := \exp(2 \pi \iota f_i) = \exp(2 \pi \iota r_i), \; z_i  := \exp(2 \pi \iota y_i), 
\end{equation}
denoting the respective representations of the clean mod 1 and noisy mod 1 samples on the unit  circle in $\mathbb{C}$, where the first equality is due to the fact that $f_i = q_i + r_i$, with $q_i \in \mathbb{Z}$.    
\begin{figure}
\centering
\subcaptionbox[Short Subcaption]{ Clean $ f $ mod 1 
}[ 0.3\textwidth ]
{\includegraphics[width=0.3\textwidth] {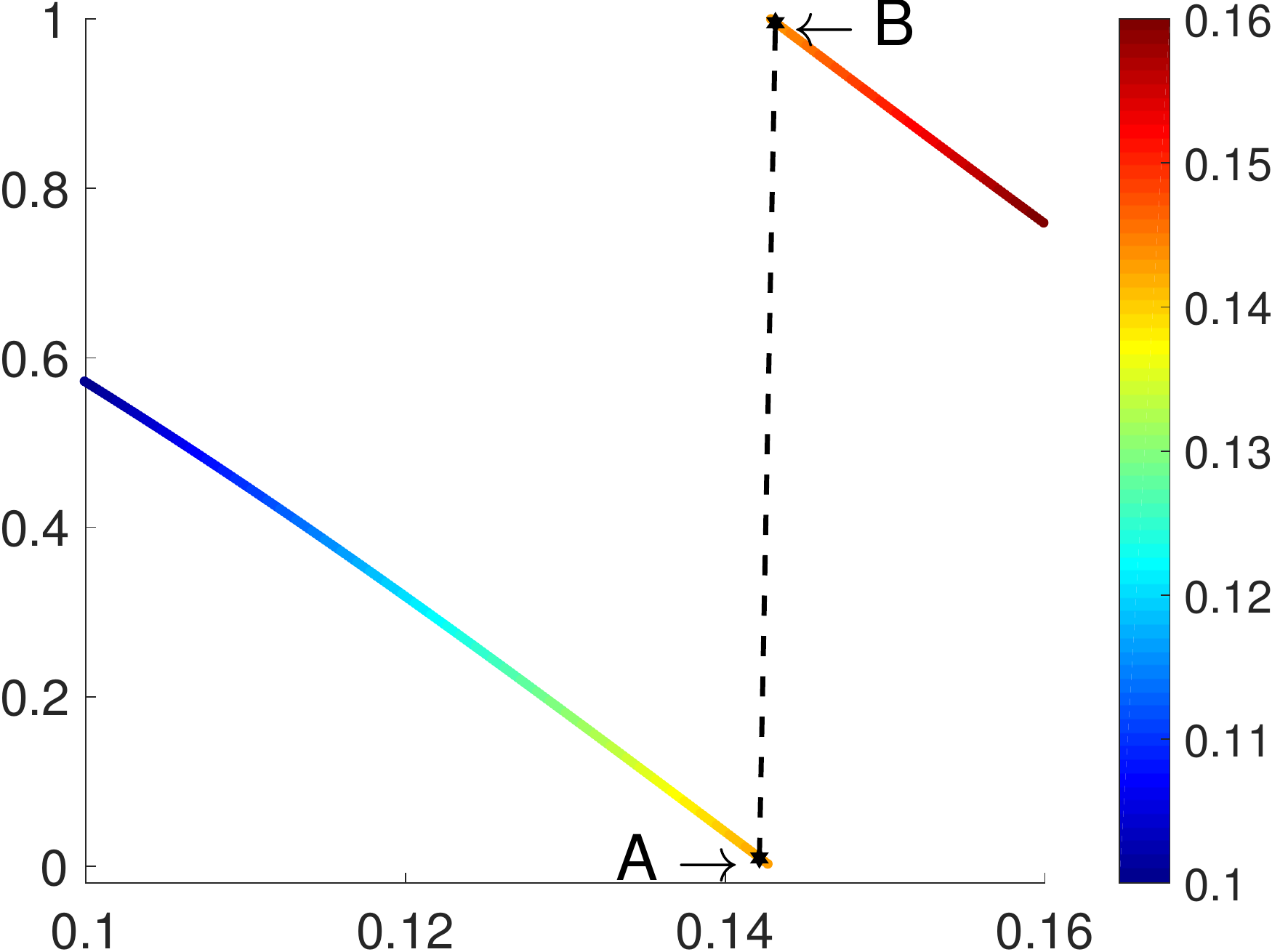} }
\subcaptionbox[Short Subcaption]{ Angular embedding 
}[ 0.3\textwidth ]
{\includegraphics[width=0.3\textwidth] {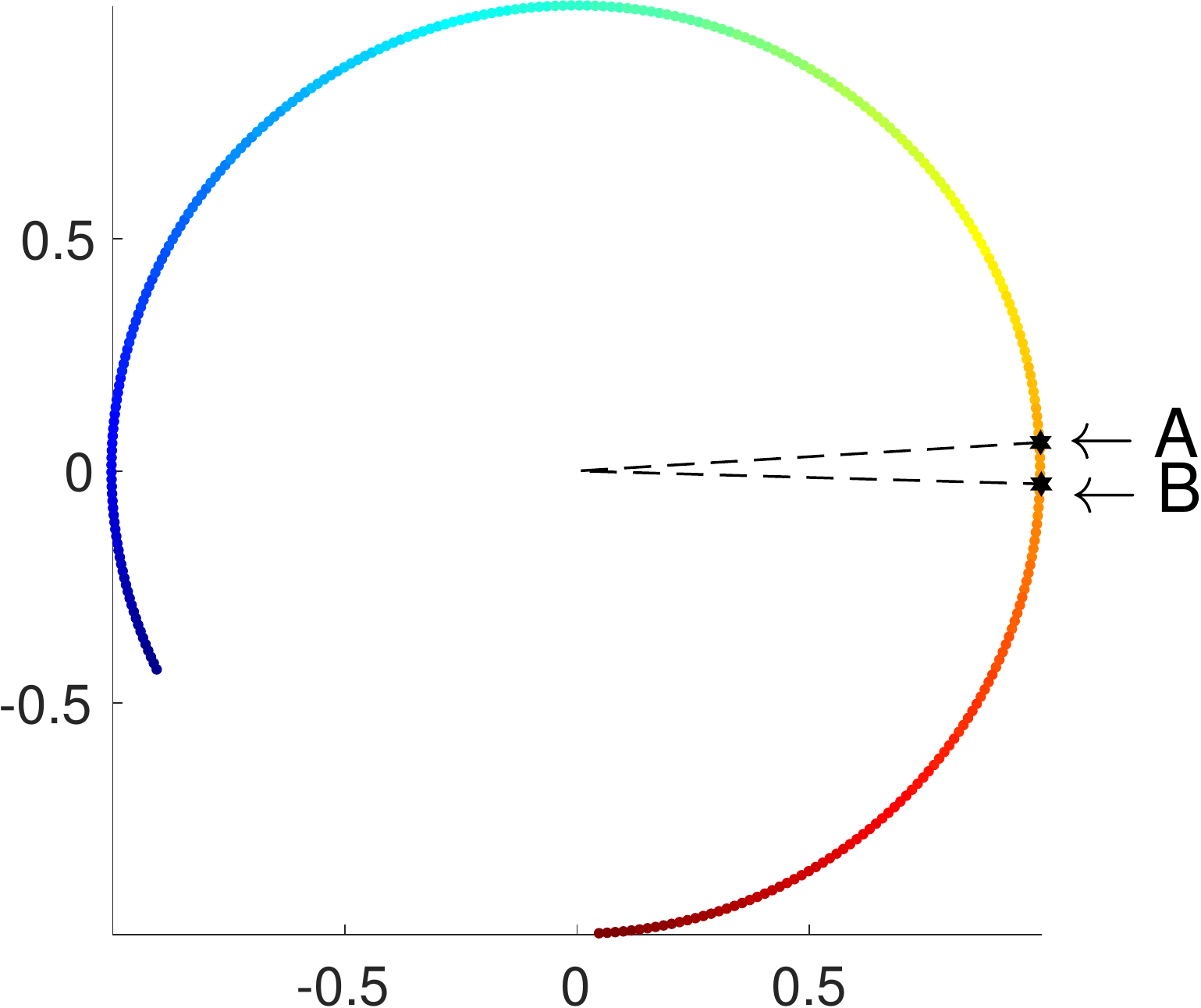} }
%
\caption[Short Caption]{Motivation for the angular embedding approach.}
\label{fig:toyExampleFarNear}
\end{figure}
%
%
%
%
The choice of representing the mod 1 samples in \eqref{eq:unit_complex_circ_rep} is very natural for the following reason. 
For points $x_i,x_j$ sufficiently close, the samples $f_i, f_j$ will also be close (by H\"older continuity of $f$). 
While the corresponding wrapped samples $f_i \mod 1, f_j \mod 1$ can still be far apart,  the 
complex numbers $\exp(\iota 2\pi f_i)$ and $\exp(\iota 2\pi f_j)$ will necessarily be close 
to each other\footnote{Indeed, $\abs{\exp(\iota 2\pi f_i) - \exp(\iota 2\pi f_j)}$ $= \abs{1-\exp(\iota 2\pi (f_j - f_i))}$ 
$= 2\abs{\sin (\pi (f_j - f_i))} \leq 2\pi\abs{f_j - f_i}$ (since $\abs{\sin x} \leq \abs{x}$ $\forall x \in \matR$).}.
This is illustrated in the toy example in Figure \ref{fig:toyExampleFarNear}.  

Figure  \ref{fig:toyExampleFarNear_Noisy} is the analogue of Figure  \ref{fig:toyExampleFarNear}, but for a noisy instance of the problem, making the point that the angular embedding  facilitates the denoising process. For points $x_i,x_j$ sufficiently close, the corresponding  samples  $f_i, f_j$ will also be close in the real domain, by H\"older continuity of $f$. When measurements get perturbed by noise, the distance in the real domain between the noisy mod 1 samples can greatly increase and become close to 1 (in this example, the  point B gets perturbed by noise, hits the ``floor'' and ``resets'' itself). However, in the angular embedding space, the two points still remain close to each other, as depicted in Figure \ref{subfig:toyExampleFarNear_Noisy_c}.

%

%
\begin{figure}
\subcaptionbox[Short Subcaption]{ Clean $ f \mod 1 $ 
}[ 0.32\textwidth ]
{\includegraphics[width=0.32\textwidth] {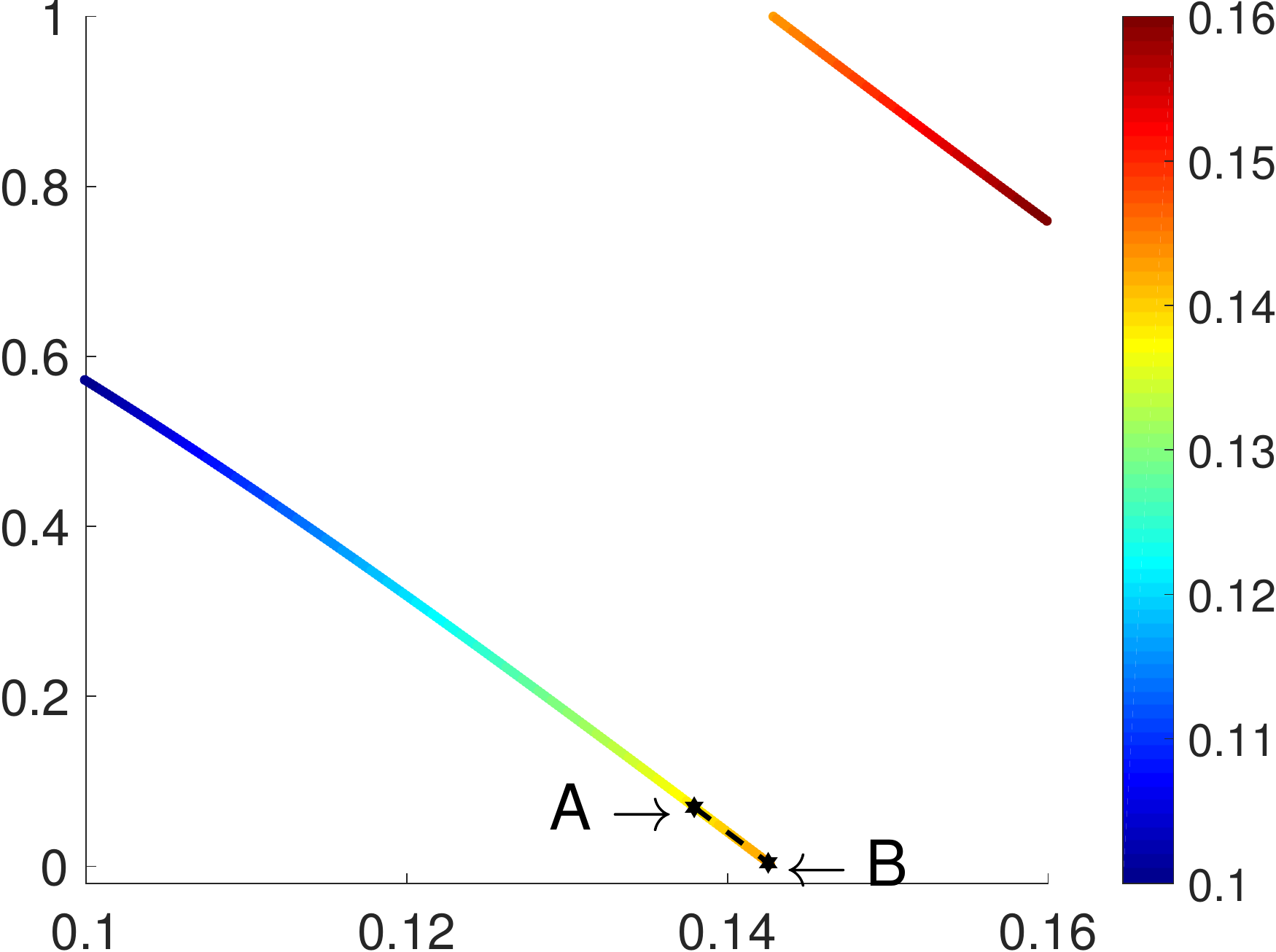} }
\hspace{0.02\textwidth} 
\subcaptionbox[Short Subcaption]{Noisy values  $ f \mod 1 $ 
}[ 0.32\textwidth ]
{\includegraphics[width=0.32\textwidth] {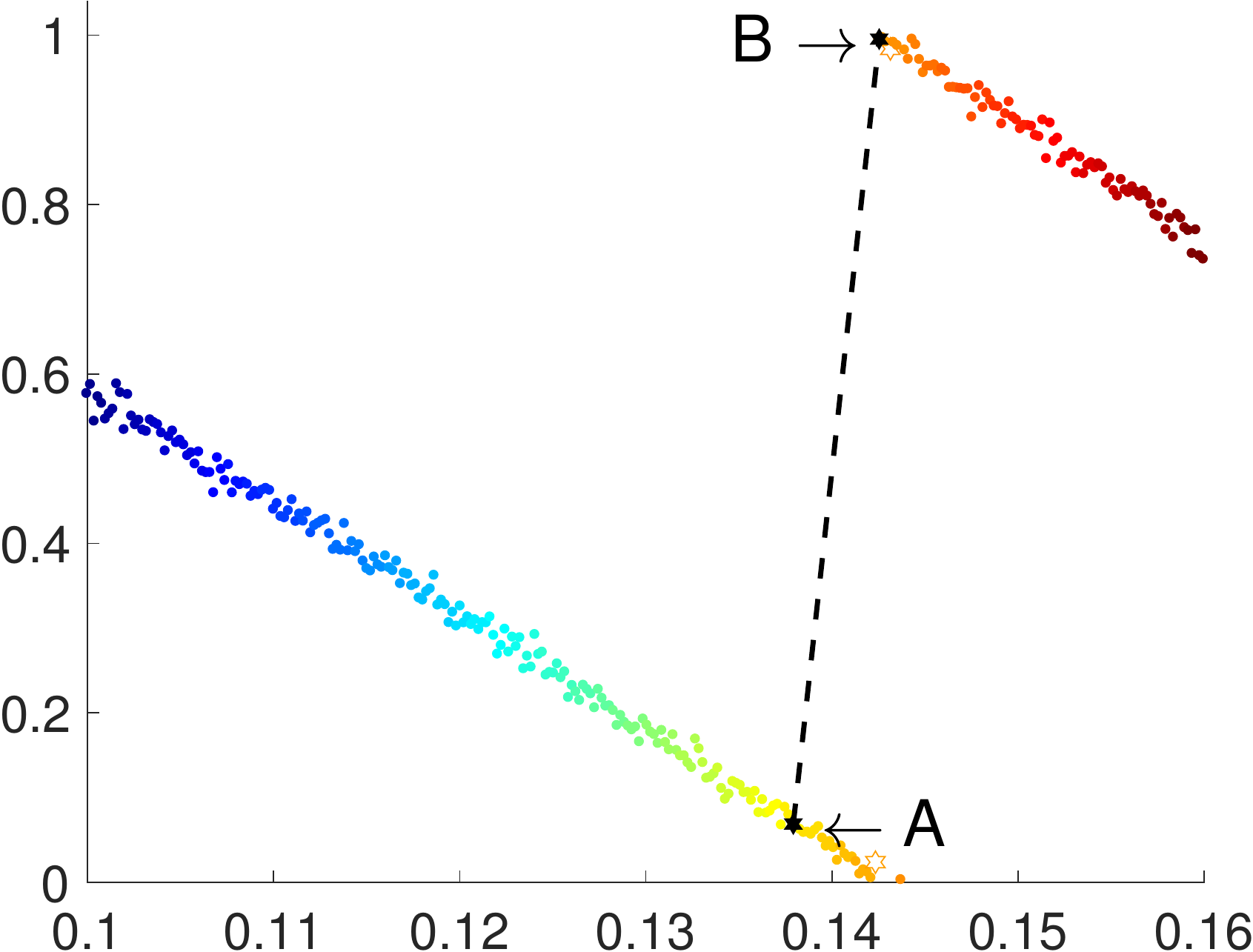} }
\subcaptionbox[Short Subcaption]{ Noisy angular embedding.
\label{subfig:toyExampleFarNear_Noisy_c}
}[ 0.28\textwidth ]
{\includegraphics[width=0.28\textwidth] {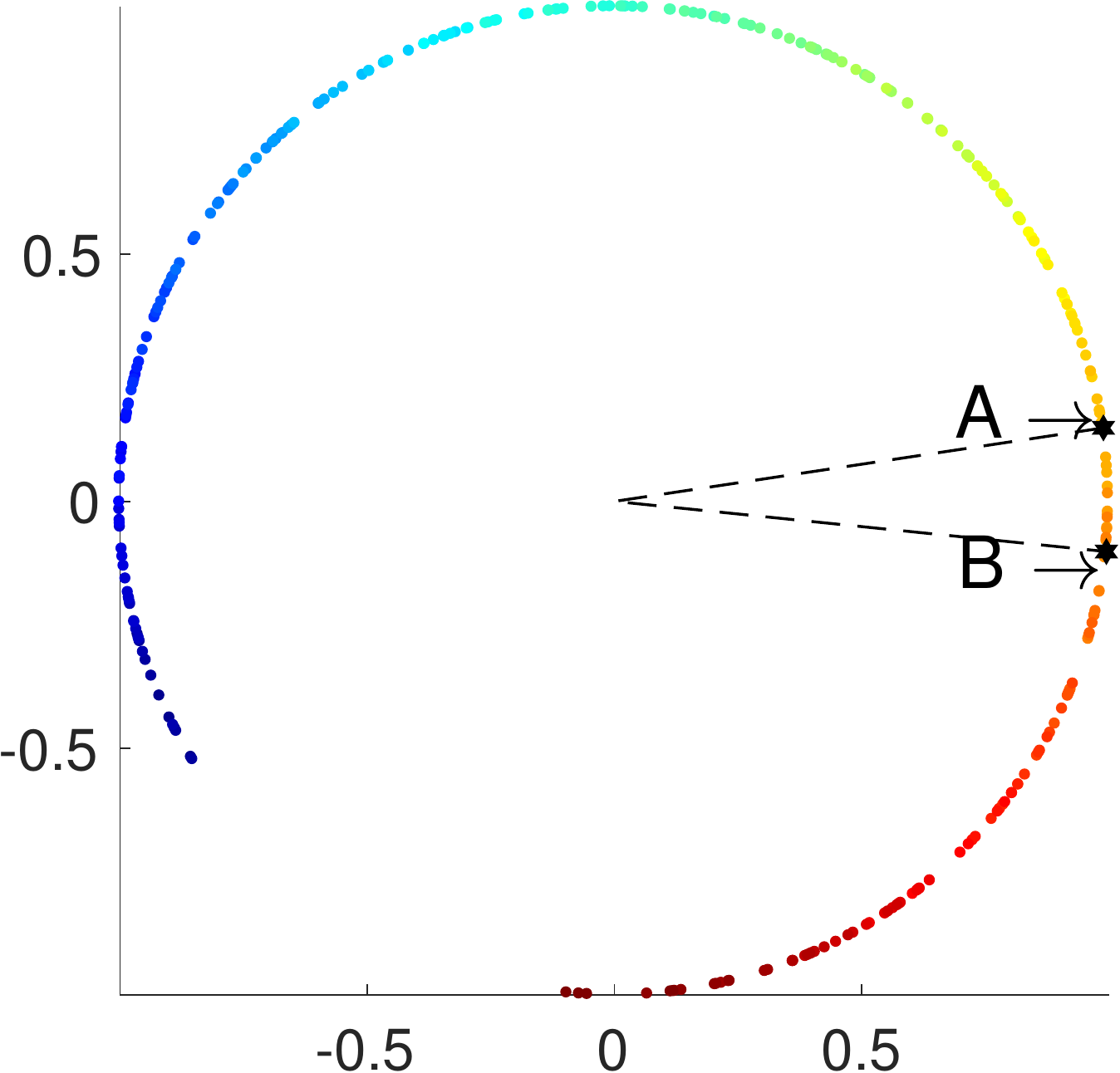} }
\hspace{0.02\textwidth} 
\vspace{-1mm} 
\captionsetup{width=0.95\linewidth}
\caption[Short Caption]{Motivation for the angular embedding approach. Noise perturbations may take nearby points (mod 1 samples) far away in the real domain, yet the points will remain close in the angular domain.}
\label{fig:toyExampleFarNear_Noisy}
\end{figure}

Consider an undirected graph $G = (V,E)$ with $V = \{1,2,\ldots,n\}$ where index $i$ corresponds to the point $x_i$ on our grid, 
and $E = \{\set{i,j}: i,j \in V, i \neq j, |i-j| \leq k\}$ denotes the set of edges for a suitable parameter $k \in \mathbb{N}$. 
A natural approach for recovering smooth estimates of $(h_i)_{i=1}^n$ would be to solve the following optimization problem
%
%
\begin{eqnarray}
\min_{ g_1,\ldots,g_n \in \mathbb{C}; |g_i|=1 } \sum_{i=1}^{n} |g_i - z_i|^2 + \lambda  \sum_{\set{i,j} \in E} |g_i - g_{j}|^2. 
\label{eq:orig_denoise_hard}
\end{eqnarray}
%
%
%
Here, $\lambda > 0$ is a regularization parameter, which along with $k$, controls the smoothness of the solution. 
We  denote by  $L \in \mathbb{R}^{n \times n}$ the Laplacian matrix associated with $G$, defined as
%
%
\begin{equation} \label{eq:laplacian_def}
L_{i,j} = \left\{
\begin{array}{rl}
\text{deg}(i), \quad   & i = j, \\
-1, \quad  & \set{i,j} \in E,   \\
0,  \quad  & \text{otherwise}.  
\end{array} \right.
\end{equation} 
%
%
%
Denoting $\vecg = [g_1 \ g_2 \ \dots \ g_n]^T \in \mathbb{C}^n$,  the second term in \eqref{eq:orig_denoise_hard} can be simplified to 
\begin{align}
\hspace{-5mm}  \lambda \left( \sum_{i \in V} \text{deg}(i) \abs{g_i}^2  -  \hspace{-2mm} \sum_{\set{i,j} \in E} (g_i g_j^{*} + g_i^{*} g_{j}) \right)  
= \lambda \vecg^{*} L \vecg. \label{eq:quadform_exp_compl}
\end{align}
%
%
%
Next, denoting $\vecz = [z_1 \ z_2 \ \dots \ z_n]^T \in \mathbb{C}^n$, we can further simplify the first 
term in \eqref{eq:orig_denoise_hard} as
%
%
\begin{align}
\sum_{i=1}^{n} |g_i - z_i|^2 
&= \sum_{i=1}^{n} (\abs{g_i}^2 + \abs{z_i}^2 - g_i z^{*}_i - g^{*}_i z_i) \\
&= 2n - 2Re(\vecg^{*}\vecz).
\end{align}
%
%
%
This gives us the following equivalent form of \eqref{eq:orig_denoise_hard}
%
%
%
\begin{eqnarray} \label{eq:orig_denoise_hard_1}
\min_{ \vecg \in \mathbb{C}^n: \abs{g_i} = 1} \lambda \vecg^{*}  L \vecg - 2Re(\vecg^{*}\vecz).
\end{eqnarray}
%
%
\begin{table}[tpb]
\begin{minipage}[b]{0.99\linewidth}
\begin{center}
\begin{tabular}{|c|c|c|}
\hline
Symbol & Description   \\
\hline
$f$ 		& unknown real-valued function    \\
$r$ 	    & clean $ f \mod 1$    \\
$q$     &  clean reminder $q = f-r $  \\
$y$ 		&  noisy $ f \mod 1$   \\
\hline
$ \vechtil $     &    clean signal in angular domain  \\
$ \vecz $ 	 	& noisy signal in angular domain    \\
$ \vecg $        &  free complex-valued variable  \\
\hline
$ \vechtilbar $     &  real-valued version of $\vechtil$   \\
$ \veczbar $    &  real-valued version of $\vecz$  \\
$\vecgbar $     &  real-valued version of  $\vecg $   \\
\hline
$L$ & $n \times n$ Laplacian matrix of graph $G$  \\
$\Hbar$ & $2n \times 2n $ block diagonal version of L \\
\hline
\end{tabular}
\end{center}
\end{minipage}
\vspace{-2mm}
\caption{Summary of frequently used symbols in the paper.}
\label{tab:methodAbbrev}
\end{table}
%
%

\section{A trust-region based relaxation for denoising modulo 1 samples} \label{sec:trust_reg_relax} 
The optimization problem in \eqref{eq:orig_denoise_hard_1} is over a non-convex set $\calC_n := \set{\vecg \in \mathbb{C}^n: \abs{g_i} = 1}$. 
In general, the problem $\min_{\vecg \in \calC_n} \vecg^{*} \matA \vecg$, where $\matA \in \mathbb{C}^{n\times n}$ is positive semidefinite, 
is known to be NP-hard \cite[Proposition 3.3]{zhang06}. Note that we can get rid of the linear term $2Re(\vecg^{*}\vecz)$ by 
rewriting \eqref{eq:orig_denoise_hard_1} as
\begin{equation} \label{eq:orig_denoise_hard_2}
\min_{\vecg \in \calC_n} \quad [\vecg^* \ 1]  \left[
  \begin{array}{cc}
 \lambda L & -\vecz \\ 
       -\vecz^* & 0
\end{array}
 \right]   \left[\begin{array}{cc} \vecg \\  1 \end{array} \right].
\end{equation}
The quadratic term in \eqref{eq:orig_denoise_hard_2} is Hermitian, and is of course structured, as 
$L$ is the Laplacian of a nearest neighbor graph. However the complexity of \eqref{eq:orig_denoise_hard_2} is still unclear. 
As pointed out by a reviewer of a preliminary version of this paper (\cite{mod1hemant}), one possible approach is to discretize the  angular domain, and solve \eqref{eq:orig_denoise_hard_1} approximately via dynamic programming. Since the graph $G$ has 
tree-width $k$, the computational cost of this approach may be\footnote{ Note that naive dynamic programming will have a running time that is exponential in $k$. However, it is unclear whether this exponential dependence on $k$ is unavoidable.} exponential in $k$. 

Our approach is to relax the constraints corresponding to $\calC_n$, to one where the points lie on a sphere of radius $n$, which amounts to the following optimization problem
%
%
\begin{eqnarray} \label{eq:qcqp_denoise_complex}
\min_{\vecg \in \mathbb{C}^n: \norm{\vecg}_2^2 = n} \lambda \vecg^{*} L \vecg  - 2 Re (\vecg^{*} \vecz).
\end{eqnarray}
%
%
It is straightforward to reformulate \eqref{eq:qcqp_denoise_complex} in terms of real variables. 
We do so by introducing the following notation 
for the real-valued versions of the variables $\vechtil$ (clean signal), $\vecz$ (noisy signal), and  $\vecg$ (free variable) 
%
%
\begin{eqnarray} \label{eq:real_notat}
\hspace{-2mm}
\vechtilbar = \begin{pmatrix}
    \real(\vechtil)    \\    \imag(\vechtil) 
   \end{pmatrix}, \    
\veczbar = \begin{pmatrix}
  \real(\vecz) \\ \imag(\vecz) 
 \end{pmatrix}, \ 
\vecgbar = \begin{pmatrix}
  \real(\vecg) \\ \imag(\vecg) 
 \end{pmatrix} \in \matR^{2n}, \;
\end{eqnarray}
and the corresponding block-diagonal Laplacian 
%
\begin{eqnarray} \label{eq:def_H}
\Hbar = \begin{pmatrix}
  \lambda L \quad & 0 \\ 0 \quad & \lambda L  
 \end{pmatrix}  =  \lambda \begin{pmatrix}
   1 & 0 \\ 
   0 & 1   
 \end{pmatrix} \otimes L
 \;\;    \in    \matR^{2n \times 2n}.
\end{eqnarray} 
%
%
In light of this, the optimization problem \eqref{eq:qcqp_denoise_complex} can be 
equivalently formulated as 
%
%
\begin{eqnarray} \label{eq:qcqp_denoise_real}
\min_{ \vecgbar \in \mathbb{R}^{2n}: \norm{\vecgbar}_2^2 = n} \vecgbar^{T} \Hbar \vecgbar  - 2 \vecgbar^{T} \veczbar,  
\end{eqnarray}
%
%
which is formally shown in the appendix for completeness. Let us note that 
the Laplacian matrix $L$ is positive semi-definite (p.s.d),  with its smallest eigenvalue $ \lambda_1(L) = 0$ with multiplicity $1$ (since $G$ is connected). 
Therefore,  $\Hbar$ is also p.s.d, with smallest eigenvalue $ \lambda_1(H) = 0$ with 
multiplicity $2$.

The optimization problem   \eqref{eq:qcqp_denoise_real} is actually an instance of the so-called trust-region sub-problem (TRS) with equality 
constraint (which we denote by $\trseq$ from now on),  where one minimizes a general quadratic function (not necessarily convex), subject to a sphere constraint. 
For completeness, we also mention the closely related trust-region sub-problem with inequality constraint (denoted by  $\trsineq$), 
where we have a $\ell_2$ ball constraint. There exist several algorithms that efficiently 
solve $\trsineq$ (\cite{Sorensen82,MoreSoren83,Rojas01,Rendl97,Gould99,Naka17}),  
and also some which explicitly solve $\trseq$ (\cite{Hager01,Naka17}). 
In particular, \cite{Naka17} recently showed that trust-region sub-problems can be solved to high accuracy via a single generalized eigenvalue problem. 
The computational complexity is $O(n^3)$ in the worst case, but improves when the matrices involved are sparse. 
In our case, the Laplacian is sparse when $k$ is not large. In the experiments, we employ their algorithm for solving \eqref{eq:qcqp_denoise_real}.
  
\begin{remark}
We remark that the $\trseq$ formulation is just one possible relaxation. One could also, for instance, consider a semidefinite programming based 
relaxation for the QCQP. We discuss this in Section \ref{sec:opt_manifolds}, and  solve it numerically  via the Burer-Monteiro approach.  
Moreover, we also consider solving the original QCQP \eqref{eq:orig_denoise_hard_1} using tools from optimization on manifolds (\cite{absil2007trust}), 
as $\calC_n$ is a manifold (\cite{AbsMahSep2008}). 
\end{remark}
%
\begin{remark}
Note that \eqref{eq:orig_denoise_hard_1} is similar to the angular synchronization problem (see for eg., \cite{sync}) as they 
both optimize a quadratic form subject to entries lying on the unit circle. The fundamental difference is that
the matrix in the quadratic term in synchronization is formed using the given noisy pairwise angle offsets (embedded on the
unit circle), and thus depends on the data. In our setup, the quadratic term is formed using the Laplacian of the smoothness
regularization graph, and thus is independent of the given data (noisy mod 1 samples). 
\end{remark}

Rather surprisingly, one can fully characterize\footnote{Discussed in detail in the appendix for completeness.} 
the solutions to both $\trseq$ and  $\trsineq$.  
The following Lemma \ref{lemma:qcqp_denoise_real} characterizes the solution for \eqref{eq:qcqp_denoise_real}; 
it follows directly from \cite[Lemma 2.4, 2.8]{Sorensen82} (see also \cite[Lemma 1]{Hager01}).
%
%
\begin{lemma} \label{lemma:qcqp_denoise_real}
$\vecgbarest$ is a solution to \eqref{eq:qcqp_denoise_real} iff $\norm{\vecgbarest}_2^2 = n$ and $\exists \mu^{*}$ such that
(a) $2\Hbar + \mu^{*}\matI \succeq 0$ and (b) $(2\Hbar + \mu^{*}\matI) \vecgbarest = 2\veczbar$. 
Moreover, if $2\Hbar + \mu^{*}\matI \succ 0$, then the solution is unique.
\end{lemma}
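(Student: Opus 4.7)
The plan is to follow the classical Lagrangian/KKT analysis for the trust-region subproblem, proving the two implications separately, and then deriving uniqueness. Introduce the Lagrangian
\begin{equation*}
\mathcal{L}(\vecgbar, \mu) = \vecgbar^{T} \Hbar \vecgbar - 2\vecgbar^{T} \veczbar + \tfrac{\mu}{2}(\norm{\vecgbar}_2^2 - n),
\end{equation*}
whose stationarity in $\vecgbar$ yields $(2\Hbar + \mu\matI)\vecgbar = 2\veczbar$, matching condition (b).

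For the sufficiency (``if'') direction, I would use the standard convexification trick. Define
\begin{equation*}
\phi(\vecgbar) := \vecgbar^{T}\bigl(\Hbar + \tfrac{\mu^{*}}{2}\matI\bigr)\vecgbar - 2\vecgbar^{T} \veczbar.
\end{equation*}
Condition (a) is equivalent to $\Hbar + (\mu^{*}/2)\matI \succeq 0$, so $\phi$ is convex on all of $\matR^{2n}$; condition (b) is precisely $\nabla\phi(\vecgbarest) = \veco$, so $\vecgbarest$ is an unconstrained global minimizer of $\phi$. For any $\vecgbar$ with $\norm{\vecgbar}_2^2 = n$, the original objective can be rewritten as
\begin{equation*}
\vecgbar^{T}\Hbar\vecgbar - 2\vecgbar^{T}\veczbar \;=\; \phi(\vecgbar) - \tfrac{\mu^{*}}{2}\norm{\vecgbar}_2^2 \;=\; \phi(\vecgbar) - \tfrac{\mu^{*}n}{2} \;\geq\; \phi(\vecgbarest) - \tfrac{\mu^{*}n}{2},
\end{equation*}
with equality at $\vecgbarest$, proving that $\vecgbarest$ solves \eqref{eq:qcqp_denoise_real}.

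For the necessity (``only if'') direction, first-order KKT applies directly because LICQ holds everywhere on the sphere (the constraint gradient $2\vecgbar$ is never $\veco$ on the feasible set). This produces a multiplier $\mu^{*}$ with $(2\Hbar + \mu^{*}\matI)\vecgbarest = 2\veczbar$. The subtler half is the global PSD condition (a), which is stronger than the usual second-order necessary condition on the tangent space. I would obtain it as follows: for any feasible $\vecgbar$, the global inequality
\begin{equation*}
\vecgbar^{T}\Hbar\vecgbar - 2\vecgbar^{T}\veczbar \;\geq\; \vecgbarest^{T}\Hbar\vecgbarest - 2\vecgbarest^{T}\veczbar,
\end{equation*}
combined with $2\veczbar = (2\Hbar + \mu^{*}\matI)\vecgbarest$ and $\norm{\vecgbar}_2^2 = \norm{\vecgbarest}_2^2 = n$, simplifies to
\begin{equation*}
(\vecgbar - \vecgbarest)^{T}(2\Hbar + \mu^{*}\matI)(\vecgbar - \vecgbarest) \;\geq\; 0 \quad \text{for all } \vecgbar \text{ with } \norm{\vecgbar}_2^2 = n.
\end{equation*}

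Upgrading this sphere-chord inequality to PSD on all of $\matR^{2n}$ is the main obstacle. I would argue in two steps: (i) for any unit $\vecu \perp \vecgbarest$ and small $t$, the curve $\vecgbar(t) = \sqrt{n}(\vecgbarest + t\vecu)/\norm{\vecgbarest + t\vecu}_2$ is feasible and $\vecgbar(t) - \vecgbarest = t\vecu + O(t^{2})$; dividing by $t^{2}$ and sending $t \to 0$ yields $\vecu^{T}(2\Hbar + \mu^{*}\matI)\vecu \geq 0$ for every tangent direction; (ii) taking the antipodal feasible point $\vecgbar = -\vecgbarest$ gives $\vecgbarest^{T}(2\Hbar + \mu^{*}\matI)\vecgbarest \geq 0$, handling the radial direction. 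A density/scaling argument (or a direct decomposition of an arbitrary $\vecv \in \matR^{2n}$ into components along $\vecgbarest$ and $\vecgbarest^{\perp}$) then extends PSD to all of $\matR^{2n}$. Finally, uniqueness is immediate: when $2\Hbar + \mu^{*}\matI \succ 0$, the linear system in (b) has the unique solution $\vecgbarest = (2\Hbar + \mu^{*}\matI)^{-1}(2\veczbar)$.
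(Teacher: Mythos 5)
The paper does not actually prove this lemma; it invokes the classical characterization of the equality-constrained trust-region subproblem, citing Sorensen and Hager. Your attempt at a self-contained proof follows the standard route, and most of it is sound: the convexification argument for sufficiency is complete and correct, and the chord inequality $(\vecgbar - \vecgbarest)^{T}(2\Hbar + \mu^{*}\matI)(\vecgbar - \vecgbarest) \geq 0$ for all feasible $\vecgbar$ is derived correctly from global optimality, condition (b), and $\norm{\vecgbar}_2^2 = \norm{\vecgbarest}_2^2 = n$.

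The gap is in the final step of the necessity direction. Knowing that $Q(\vecw) := \vecw^{T}(2\Hbar+\mu^{*}\matI)\vecw$ is nonnegative on the tangent hyperplane $\vecgbarest^{\perp}$ (your step (i)) and along the radial direction (your step (ii)) does \emph{not} imply $Q \succeq 0$: the cross term $\vecu^{T}(2\Hbar+\mu^{*}\matI)\vecgbarest$ is unconstrained by those two facts, and a symmetric matrix can be nonnegative on two complementary subspaces yet indefinite (e.g. $\bigl(\begin{smallmatrix}1 & 10\\ 10 & 1\end{smallmatrix}\bigr)$ is nonnegative along both coordinate axes but has a negative eigenvalue). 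So the parenthetical ``direct decomposition into components along $\vecgbarest$ and $\vecgbarest^{\perp}$'' would fail. The correct upgrade uses the full chord inequality rather than only its tangential and antipodal instances: for any $\vecw$ with $\vecw^{T}\vecgbarest < 0$, the point $\vecgbarest + t\vecw$ with $t = -2\vecw^{T}\vecgbarest/\norm{\vecw}_2^2 > 0$ lies on the sphere, so $Q(t\vecw) \geq 0$ and hence $Q(\vecw) \geq 0$; this gives nonnegativity on an open half-space, which extends to its closure by continuity and to all of $\matR^{2n}$ because $Q(-\vecw)=Q(\vecw)$. Separately, your uniqueness argument is slightly too quick: a second minimizer could a priori satisfy (b) with a \emph{different} multiplier, so you cannot conclude directly from the invertibility of $2\Hbar+\mu^{*}\matI$. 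Instead, note that when $2\Hbar+\mu^{*}\matI \succ 0$ your function $\phi$ is strictly convex with unique global minimizer $\vecgbarest$, so every other feasible point has strictly larger objective value.
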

%
Let $\set{\lambda_j(\Hbar)}_{j=1}^{2n}$, with $\lambda_1(\Hbar) \leq \lambda_2(\Hbar) \leq \cdots  \lambda_{2n}(\Hbar) $, 
and $\set{\vecq_j}_{j=1}^{2n}$ denote the eigenvalues, respectively eigenvectors,  of  $\Hbar$. 
Note that $\lambda_1(\Hbar) = \lambda_2(\Hbar) = 0$, and $\lambda_3(\Hbar) > 0$, since $G$ is connected. 
Let us denote the null space of $\Hbar$ by $\calN(\Hbar)$, so $\calN(\Hbar) = \text{span} \set{\vecq_1,\vecq_2}$. 
Next, we analyze the solution to \eqref{eq:qcqp_denoise_real} with the help of Lemma \ref{lemma:qcqp_denoise_real}, by considering 
the following two cases. 
%

\textbf{Case 1. }  \underline{\textit{$\veczbar \not\perp \calN(\Hbar)$.}} The solution is given by 
%
\begin{equation} 
\hspace{-3mm} \vecgbarest(\mu^{*}) = 2(2\Hbar + \mu^{*}\matI)^{-1} \veczbar = 2\sum_{j=1}^{2n} \frac{\dotprod{\veczbar}{\vecq_j}}{2\lambda_j(\Hbar) + \mu^{*}}\vecq_j, 
\end{equation}
%
for a unique $\mu^{*} \in (0,\infty)$ satisfying $\norm{\vecgbarest(\mu^{*})}_2^2 = n$. Indeed, 
denoting $\phi(\mu) = \norm{\vecgbarest(\mu)}_2^2 = 4\sum_{j=1}^{2n} \frac{\dotprod{\veczbar}{\vecq_j}^2}{(2\lambda_j(\Hbar) + \mu)^2}$, 
we can see that $\phi(\mu)$ has a pole at $\mu = 0$ and decreases monotonically to $0$ as $\mu \rightarrow \infty$. Hence, there 
exists a unique $\mu^{*} \in (0,\infty)$ such that $\norm{\vecgbarest(\mu^{*})}_2^2 = n$. The solution $\vecgbarest(\mu^{*})$ will 
be unique by Lemma \ref{lemma:qcqp_denoise_real},  since $2\Hbar + \mu^{*}\matI \succ 0$ holds.

\textbf{Case 2. } \underline{\textit{$\veczbar \perp \calN(\Hbar)$.}} This second scenario requires additional attention. To begin with, note that   
%
\begin{equation}
\phi(0) = 4\sum_{j=1}^{2n} \frac{\dotprod{\veczbar}{\vecq_j}^2}{(2\lambda_j(\Hbar))^2} = \sum_{j=3}^{2n} \frac{\dotprod{\veczbar}{\vecq_j}^2}{\lambda_j(\Hbar)^2}
\end{equation}
%
is now well defined, i.e., $0$ is not a pole of $\phi(\mu)$ anymore. 
If $\phi(0) > n$, then as before, we can again find a unique $\mu^{*} \in (0,\infty)$ satisfying $\phi(\mu^{*}) = n$. 
The solution is given by $\vecgbarest(\mu^{*}) = 2(2\Hbar + \mu^{*}\matI)^{-1} \veczbar$ and is unique since  $2\Hbar + \mu^{*}\matI \succ 0$ (by Lemma \ref{lemma:qcqp_denoise_real}). 

In case $\phi(0) \leq n$, we set $\mu^{*} = 0$ and define our solution to be of the form 
\begin{equation}
\vecgbarest(\theta,\vecv) = (\Hbar)^{\dagger} \veczbar + \theta \vecv; \quad \vecv \in \calN(\Hbar), \norm{\vecv}_2 = 1,
\end{equation}
where $\dagger$ denotes pseudo-inverse and $\theta \in \matR$. In particular, for any given $\vecv \in \calN(\Hbar), \norm{\vecv}_2 = 1$, 
we obtain $\vecgbarest(\theta^{*},\vecv), \vecgbarest(-\theta^{*},\vecv)$ 
as the solutions to \eqref{eq:qcqp_denoise_real}, with $\pm \theta^{*}$ being the solutions to the equation

\begin{equation}
\norm{\vecgbarest(\theta,\vecv)}_2^2 = n \Leftrightarrow \underbrace{\norm{(\Hbar)^{\dagger} \veczbar}_2^2}_{= \phi(0) \leq n} + \theta^2 = n
\end{equation}

Hence the solution is not unique if $\phi(0) < n$. 
%
%
%

\subsection{Recovering the denoised mod 1 samples} \label{subsec:rec_denoised_mod1} 
%
The solution to \eqref{eq:qcqp_denoise_real} is 
a vector $\vecgbarest \in \matR^{2n}$. Let $\vecgest \in \mathbb{C}^n$ be the complex representation of $\vecgbarest$ 
as per \eqref{eq:real_notat}, so that $\vecgbarest = [\real(\vecgest)^T \ \imag(\vecgest)^T]^T$. Denoting $\gest_i \in \mathbb{C}$ to 
be the $i^{th}$ component of $\vecgest$, note that $\abs{\gest_i}$ is not necessarily equal to one. On the other hand, recall that 
$h_i  =\exp(\iota 2\pi f_i \bmod 1),$ $\forall i=i,\dots,n$ for the ground truth $\vechtil \in \mathbb{C}^n$. 
We obtain our final estimate $\widehat{f_i} \bmod 1$ to $f_i \bmod 1$ by projecting $\gest_i$ onto 
the unit complex disk 
%
%
\begin{equation} \label{eq:extr_mod1_vals}
\exp(\iota 2\pi (\widehat{f_i} \bmod 1)) = \frac{\gest_i}{\abs{\gest_i}}; \quad i=1,\dots,n.
\end{equation}
%
%
In order to measure the distance between $\widehat{f_i} \bmod 1$ and $f_i \bmod 1$, we will use 
the so-called  \emph{wrap-around} distance on $[0,1]$ denoted by $d_w : [0,1]^2 \rightarrow [0,1/2]$, where
%
\begin{align}
d_w(t_1,t_2) := \min\set{\abs{t_1-t_2}, 1-\abs{t_1-t_2}},
\end{align}
%
for $t_1,t_2 \in [0,1]$. We will now show that if $\gest_i$ is sufficiently close to $h_i$ for each $i=1,\dots,n$, then 
each $d_w(\widehat{f_i} \bmod 1,f_i\bmod 1)$ will be correspondingly small. This is stated precisely in the 
following lemma.
%
\begin{lemma} \label{lem:wrap_dist_fin_bd}
For $0 < \epsilon < 1/2$, let $\abs{\gest_i - h_i} \leq \epsilon$ hold for each $i=1,\dots,n$. Then,  for each $i=1,\dots,n$
%
\begin{equation} \label{eq:wrap_dist_fin_bd}
d_w(\widehat{f_i} \bmod 1,f_i\bmod 1) \leq \frac{1}{\pi} \sin^{-1}\left(\frac{\epsilon}{1-\epsilon}\right).
\end{equation}
\end{lemma}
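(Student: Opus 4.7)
The plan is to translate the hypothesis $\abs{\gest_i - h_i} \leq \epsilon$ (a bound on the chord length in $\mathbb{C}$ between $\gest_i$ and the unit-modulus target $h_i$) into a bound on the chord length between the \emph{projected} point $\gest_i/\abs{\gest_i}$ and $h_i$ on the unit circle, and then convert that into an arc-length (i.e.\ angular) bound via the identity $\abs{e^{\iota\alpha} - e^{\iota\beta}} = 2\abs{\sin((\alpha-\beta)/2)}$. Finally, I would translate the resulting angular bound into the wrap-around distance on $[0,1]$ using the factor $1/(2\pi)$.

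For the first step, I would write
\begin{equation*}
\frac{\gest_i}{\abs{\gest_i}} - h_i \;=\; \frac{1}{\abs{\gest_i}}\Bigl((\gest_i - h_i) \;+\; h_i\bigl(1 - \abs{\gest_i}\bigr)\Bigr),
\end{equation*}
and then apply the triangle inequality together with the reverse triangle inequality $\bigl|\,\abs{\gest_i} - 1\,\bigr| = \bigl|\,\abs{\gest_i} - \abs{h_i}\,\bigr| \leq \abs{\gest_i - h_i} \leq \epsilon$. This yields $\abs{\gest_i} \geq 1 - \epsilon > 1/2$ (using $\epsilon < 1/2$) and the crucial bound
\begin{equation*}
\left|\frac{\gest_i}{\abs{\gest_i}} - h_i\right| \;\leq\; \frac{\epsilon + \epsilon}{1-\epsilon} \;=\; \frac{2\epsilon}{1-\epsilon}.
\end{equation*}
Observe that the naive bound $\abs{\gest_i/\abs{\gest_i} - h_i} \leq 2\epsilon$ also follows but is weaker; the $1/\abs{\gest_i}$ factor is what produces the $\epsilon/(1-\epsilon)$ appearing in the statement, so it is important to extract it carefully.

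For the second step, write $\gest_i/\abs{\gest_i} = \exp(\iota 2\pi \,\widehat{f_i}\bmod 1)$ and $h_i = \exp(\iota 2\pi\, f_i\bmod 1)$, and let $\theta \in [0,\pi]$ denote the angular distance on the unit circle between these two points. Then $\abs{\gest_i/\abs{\gest_i} - h_i} = 2\sin(\theta/2)$, so combining with the bound above gives $\sin(\theta/2) \leq \epsilon/(1-\epsilon)$. Since $\epsilon < 1/2$ ensures $\epsilon/(1-\epsilon) < 1$ and $\theta/2 \in [0,\pi/2]$, we may invert to obtain $\theta/2 \leq \sin^{-1}\!\bigl(\epsilon/(1-\epsilon)\bigr)$. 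Finally, by definition the wrap-around distance on $[0,1]$ equals $\theta/(2\pi)$, which delivers the claimed inequality \eqref{eq:wrap_dist_fin_bd}.

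The only delicate point is the algebraic splitting in the first step: one needs both the additive decomposition into $(\gest_i-h_i)$ and $h_i(1-\abs{\gest_i})$, and the multiplicative factor $1/\abs{\gest_i}$ lower bounded by $1/(1-\epsilon)$. Beyond that, the remaining manipulations are routine, and the hypothesis $\epsilon < 1/2$ is exactly what is needed to guarantee that the inverse sine argument is in the valid range and that the projection $\gest_i/\abs{\gest_i}$ is well defined.
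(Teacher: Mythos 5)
Your proposal is correct and follows essentially the same route as the paper's proof: the same decomposition of $\gest_i/\abs{\gest_i} - h_i$ into the terms $(\gest_i - h_i)$ and $h_i(1-\abs{\gest_i})$, each bounded by $\epsilon$ with the factor $1/\abs{\gest_i} \le 1/(1-\epsilon)$, yielding the chord bound $2\epsilon/(1-\epsilon)$, followed by the conversion $\abs{e^{\iota\alpha}-e^{\iota\beta}} = 2\abs{\sin((\alpha-\beta)/2)}$ and inversion of the sine on $[0,\pi/2]$. Your phrasing of the last step via the angular distance $\theta$ and the identity $d_w = \theta/(2\pi)$ is a slightly cleaner packaging of the paper's case handling of $\abs{t_1-t_2}$ versus $1-\abs{t_1-t_2}$, but the argument is the same.
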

%
%
\begin{proof}
To begin with, note that $\abs{\gest_i - h_i} \leq \epsilon$  implies $\abs{\gest_i} \in [1-\epsilon,1+\epsilon]$. This means that  
$\abs{\gest_i} > 0$ holds if $\epsilon < 1$. Consequently, we obtain
\begin{align}
\Bigl| \frac{\gest_i}{\abs{\gest_i}} - h_i\Bigr| 
&= \Bigl| \frac{\gest_i}{\abs{\gest_i}} - \frac{h_i}{\abs{\gest_i}} + \frac{h_i}{\abs{\gest_i}} - h_i\Bigr| \\
&\leq \frac{\abs{\gest_i - h_i}}{\abs{\gest_i}} + \abs{h_i}\left(\frac{\abs{\abs{\gest_i} - 1}}{\abs{\gest_i}}\right) \\
&\leq \frac{2\epsilon}{\abs{\gest_i}} \leq \frac{2\epsilon}{1-\epsilon}. \label{eq:denoise_mod1_rec_temp1}
\end{align}  
We will now show that provided $0 < \epsilon < 1/2$ holds, then \eqref{eq:denoise_mod1_rec_temp1} implies the bound \eqref{eq:wrap_dist_fin_bd}.
Indeed, from the definition of $h_i$, and of $\gest_i/\abs{\gest_i}$ (from \eqref{eq:extr_mod1_vals}), we have 
\begin{align}
\Bigl| \frac{\gest_i}{\abs{\gest_i}} - h_i\Bigr| 
&= \abs{\exp(\iota 2\pi (\widehat{f_i} \bmod 1)) - \exp(\iota 2\pi (f_i \bmod 1))} \\
&= \abs{1 - \exp(\iota 2\pi (f_i \bmod 1 - \widehat{f_i} \bmod 1))} \\
&= 2\abs{\sin [\pi \underbrace{(f_i \bmod 1 - \widehat{f_i} \bmod 1)}_{\in (-1,1)}]} \\
&= 2\sin [\pi \abs{(f_i \bmod 1 - \widehat{f_i} \bmod 1)}] \\ 
&= 2\sin [\pi (1- \abs{(f_i \bmod 1 - \widehat{f_i} \bmod 1)})]. \label{eq:denoise_mod1_rec_temp2}
\end{align}
Then, \eqref{eq:wrap_dist_fin_bd} follows from \eqref{eq:denoise_mod1_rec_temp2}, \eqref{eq:denoise_mod1_rec_temp1} 
and by noting that $0 < \epsilon/(1-\epsilon) < 1$ for $0 < \epsilon < 1/2$.
\end{proof}
%
%

\subsection{Unwrapping stage and main algorithm} \label{subsec:unwrap_stage_and_algo} 

Having recovered the denoised mod 1 samples $\widehat{f_i} \bmod 1$ for $i=1,\dots,n$, 
we now move onto the next stage of our method where the goal is to recover the samples $f$, for which we discuss two possible approaches. 
%
%
%

\textbf{1. Quotient tracker (QT) method.} 
The first approach for unwrapping the mod 1 samples is perhaps 
the most natural one. It is based on the idea that, provided the denoised mod 1 samples are very close estimates 
to the original clean mod 1 samples, then we can sequentially find the quotient terms, by checking whether 
$\abs{\widehat{f}_{i+1} \bmod 1 - \widehat{f_i} \bmod 1} \geq \qtathresh$, for a suitable threshold parameter $\qtathresh \in (0,1)$. 
More formally, after  initializing $\widehat{q}_1 = 0$, consider the rule
%
%
\begin{align} \label{eq:qta_recovery_rule}
\widehat{q}_{i+1} &= \widehat{q}_i + \sign_{\qtathresh}(\widehat{f}_{i+1} \bmod 1 - \widehat{f_i} \bmod 1); \nonumber \\  
\hspace{8mm}
\sign_{\qtathresh}(t) &= \left\{
\begin{array}{rl}
-1 ; &  t \geq \qtathresh \\
0 ; &  \abs{t} < \qtathresh \\ 
1; & t \leq -\qtathresh
\end{array}. \right.
\end{align}
%
%
%
Clearly, if $\widehat{f_{i}} \bmod 1 \approx f_i \bmod 1$ for each $i$, then for $n$ sufficiently large,
the procedure \eqref{eq:qta_recovery_rule} will result in correct recovery of the quotients. 
However, as illustrated in Figure \ref{fig:QT_examples_Gaussian}, it is also obviously sensitive to noise, and hence would not be a robust solution for high levels of noise.
%
%

\begin{figure}[!ht] 
\centering
\subcaptionbox[]{  QCQP + QT: $ \sigma=0.05$, \\ RMSE=0.141 \label{subfig:iF1_QCQP_QTA_Gaussian_5}
}[ 0.32\textwidth ]
{\includegraphics[width=0.32\textwidth] {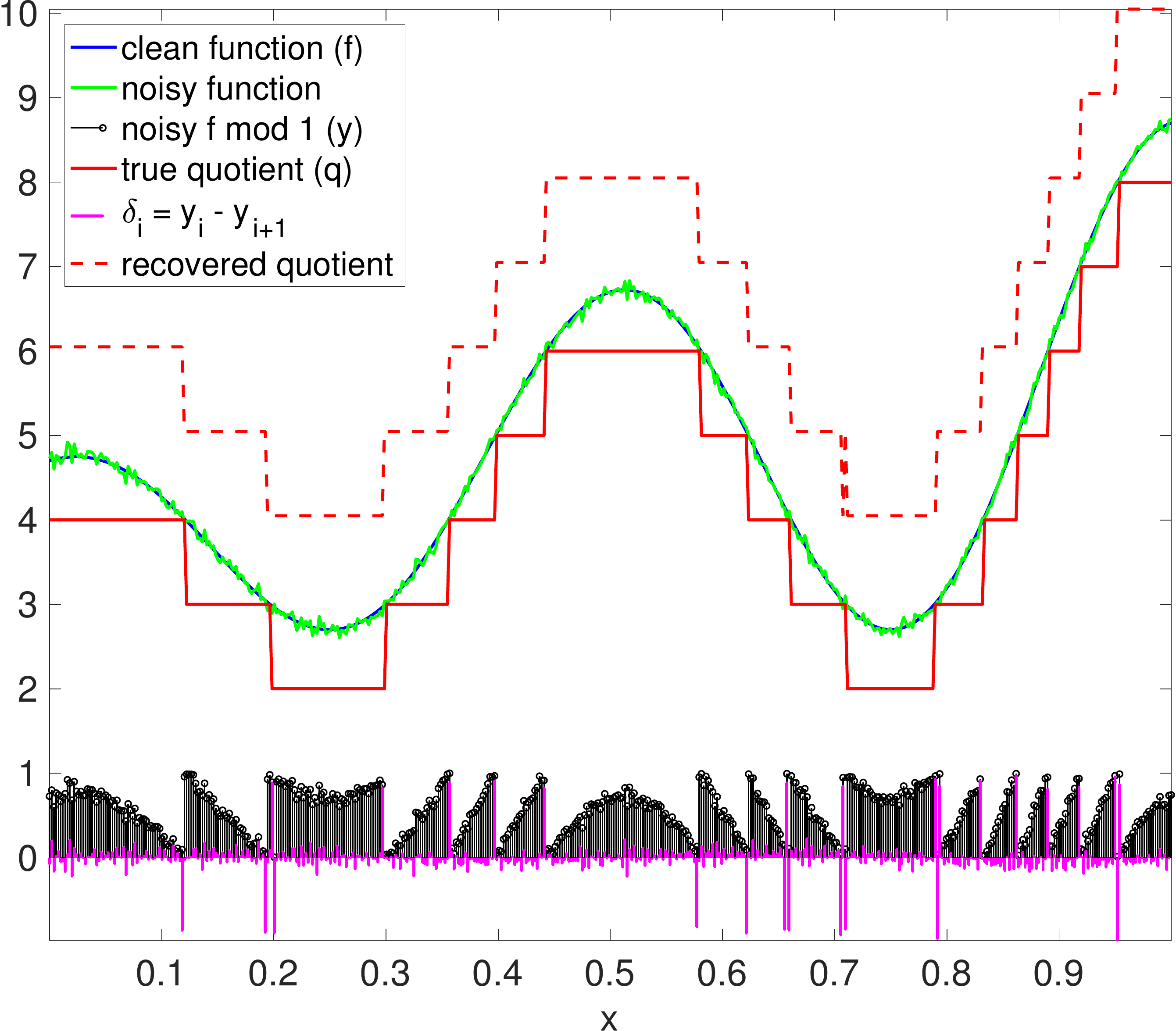} }
\subcaptionbox[]{  QCQP + QT: $ \sigma=0.10$,    \\ RMSE=0.206  \label{subfig:iF1_QCQP_QTA_Gaussian_10}
}[ 0.32\textwidth ]
{\includegraphics[width=0.32\textwidth] {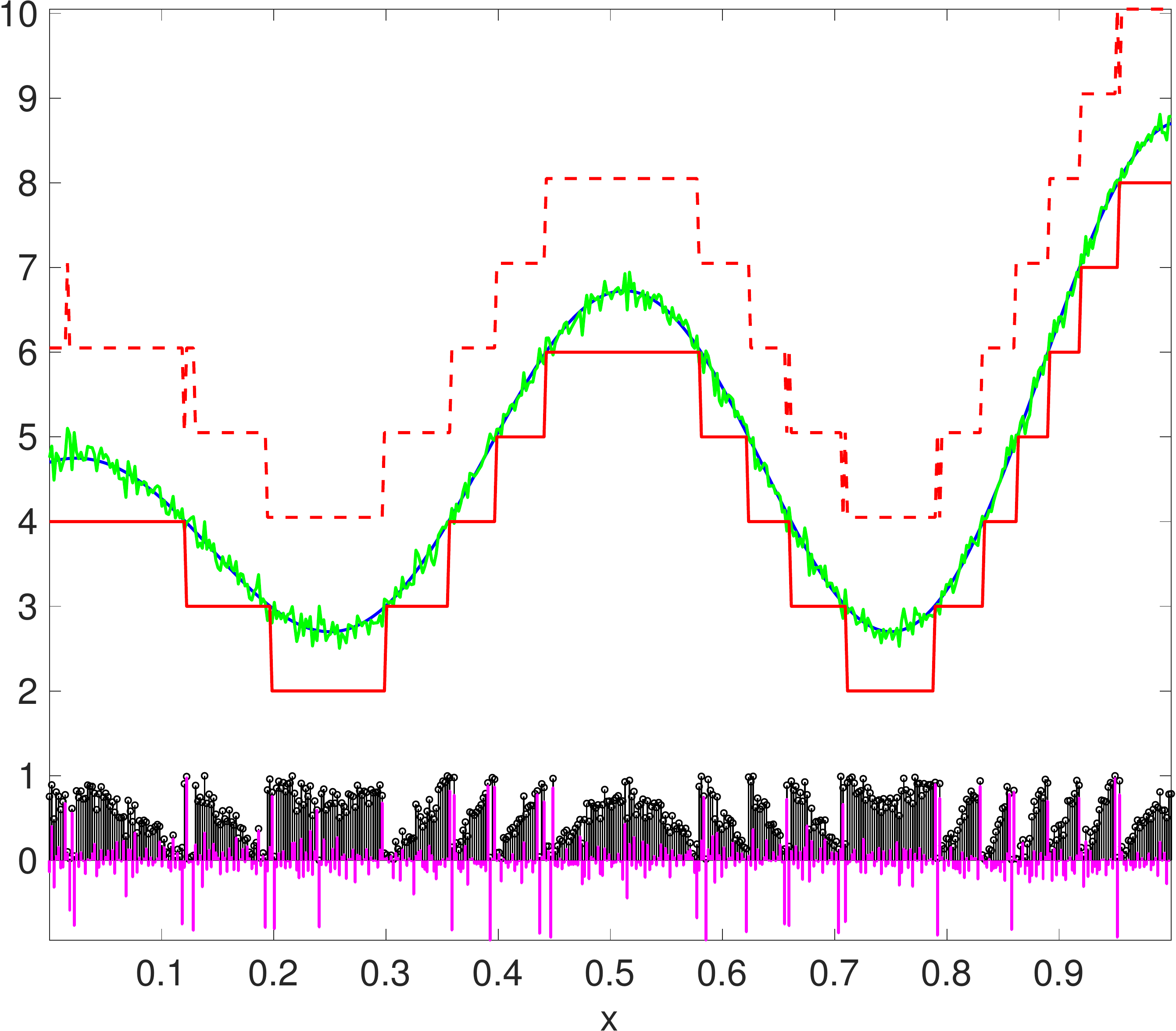} }
\subcaptionbox[]{  QCQP + QT: $ \sigma=0.15$,    \\ RMSE=1.008 \label{subfig:iF1_QCQP_QTA_Gaussian_15}
}[ 0.32\textwidth ]
{\includegraphics[width=0.32\textwidth] {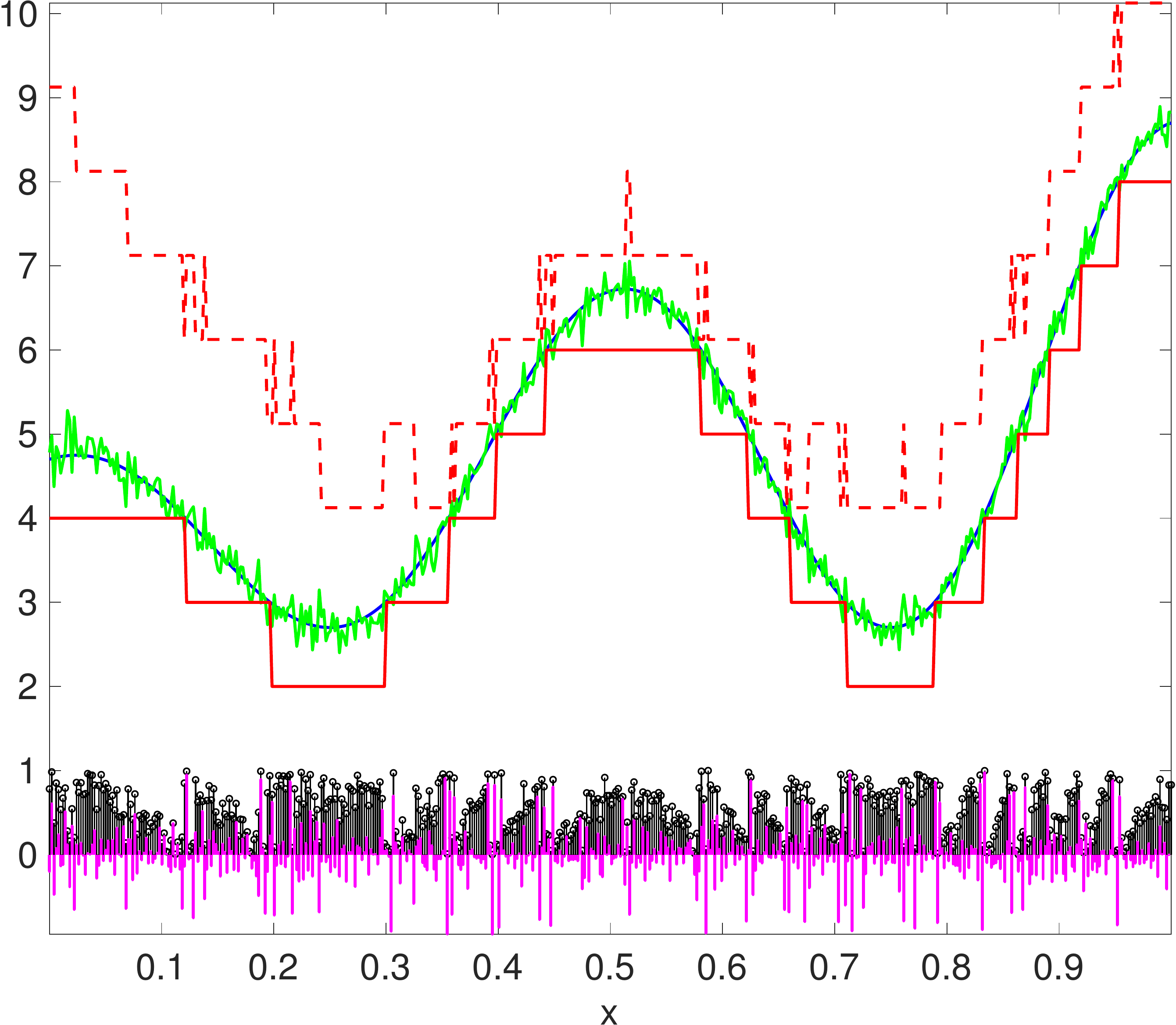} }
\subcaptionbox[]{  QCQP + OLS: $ \sigma=0.05$,    \\ RMSE=0.141 \label{subfig:iF1_QCQP_OLS_Gaussian_5}
}[ 0.32\textwidth ]
{\includegraphics[width=0.32\textwidth] {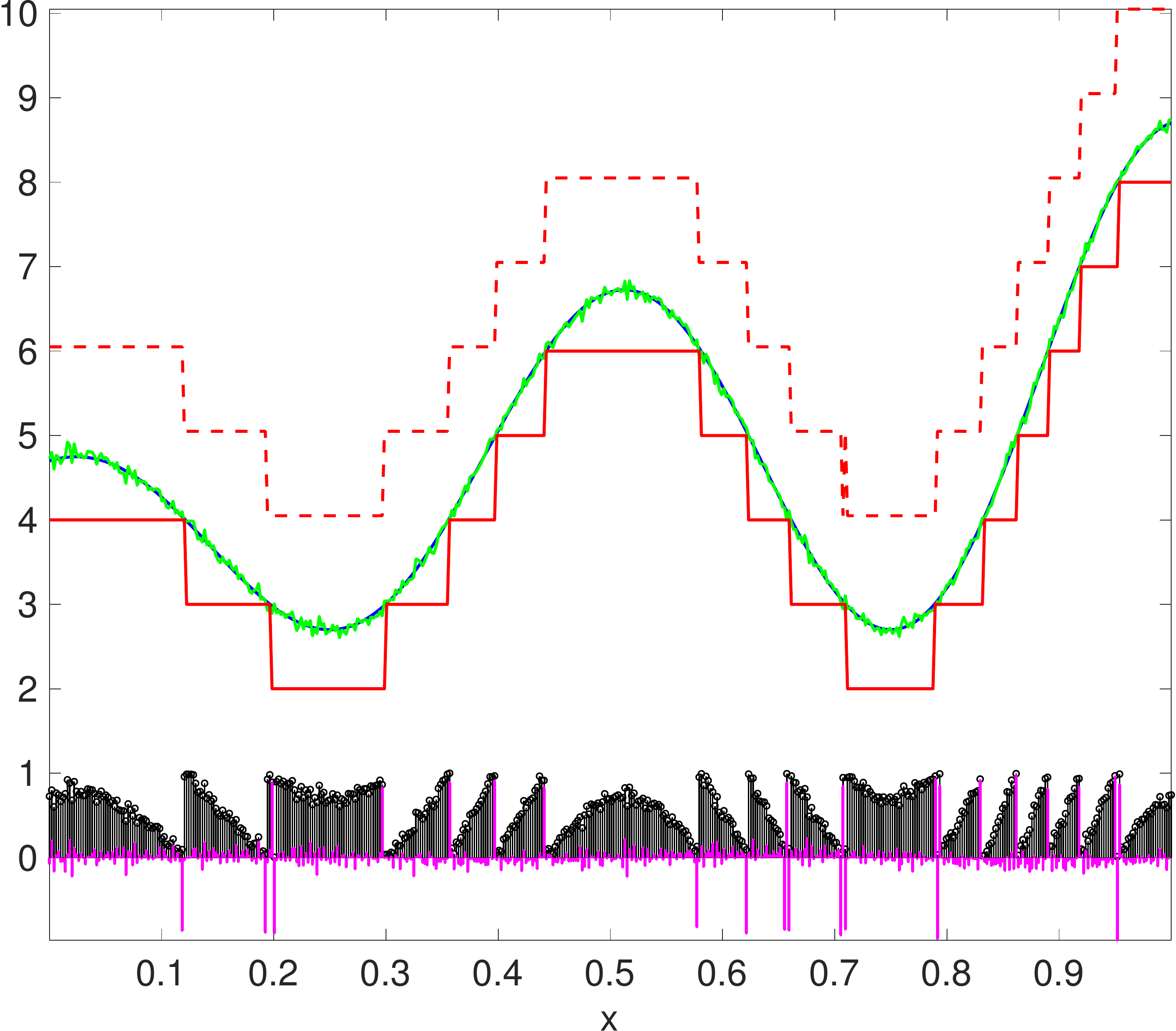} }
\subcaptionbox[]{  QCQP + OLS: $ \sigma=0.10$,   \\   RMSE=0.206 \label{subfig:iF1_QCQP_OLS_Gaussian_10}
}[0.32\textwidth]
{\includegraphics[width=0.32\textwidth] {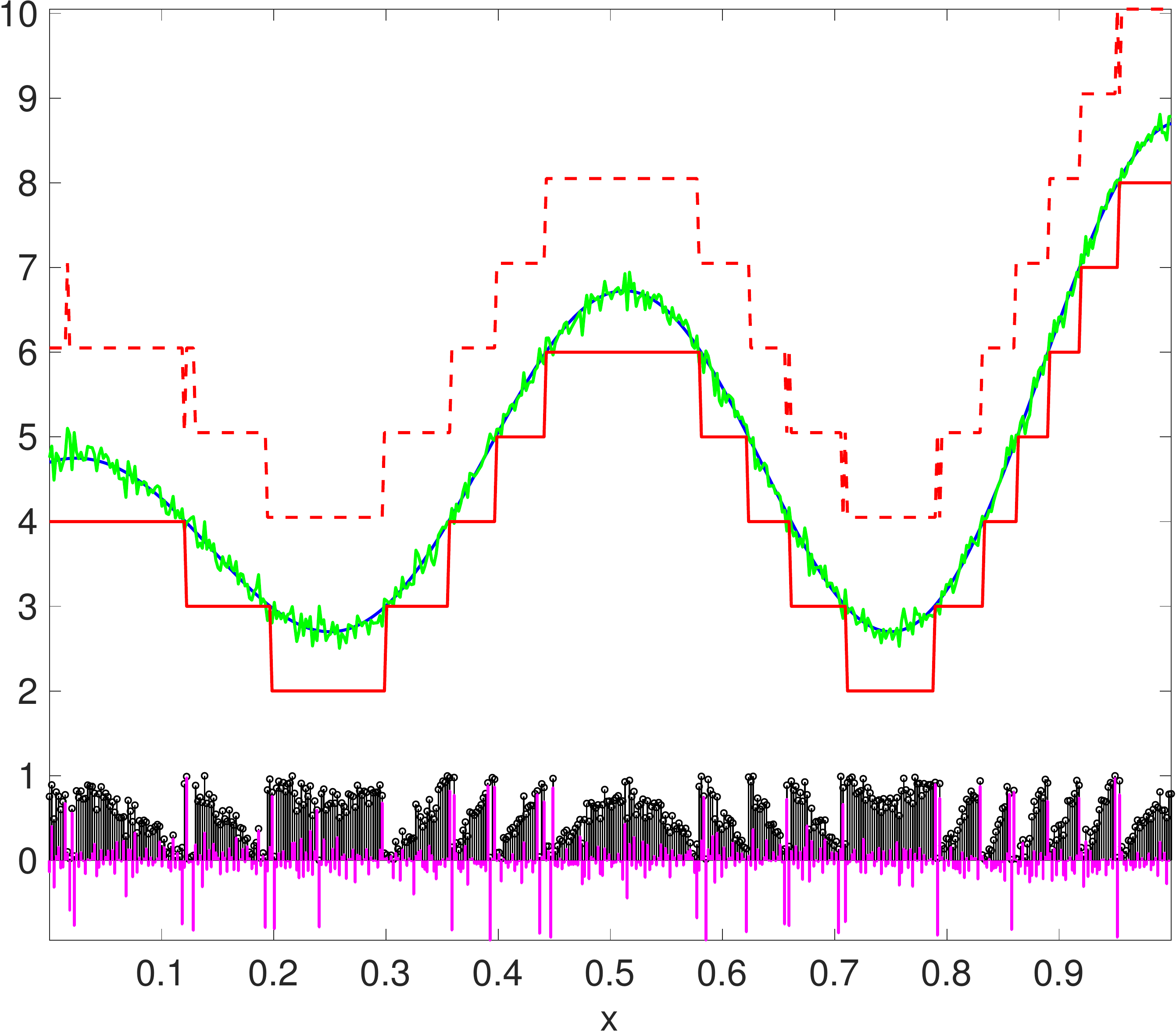} }
\subcaptionbox[]{  QCQP + OLS: $ \sigma=0.15$,   \\  RMSE=0.646   \label{subfig:iF1_QCQP_OLS_Gaussian_15}
}[0.32\textwidth]
{\includegraphics[width=0.32\textwidth] {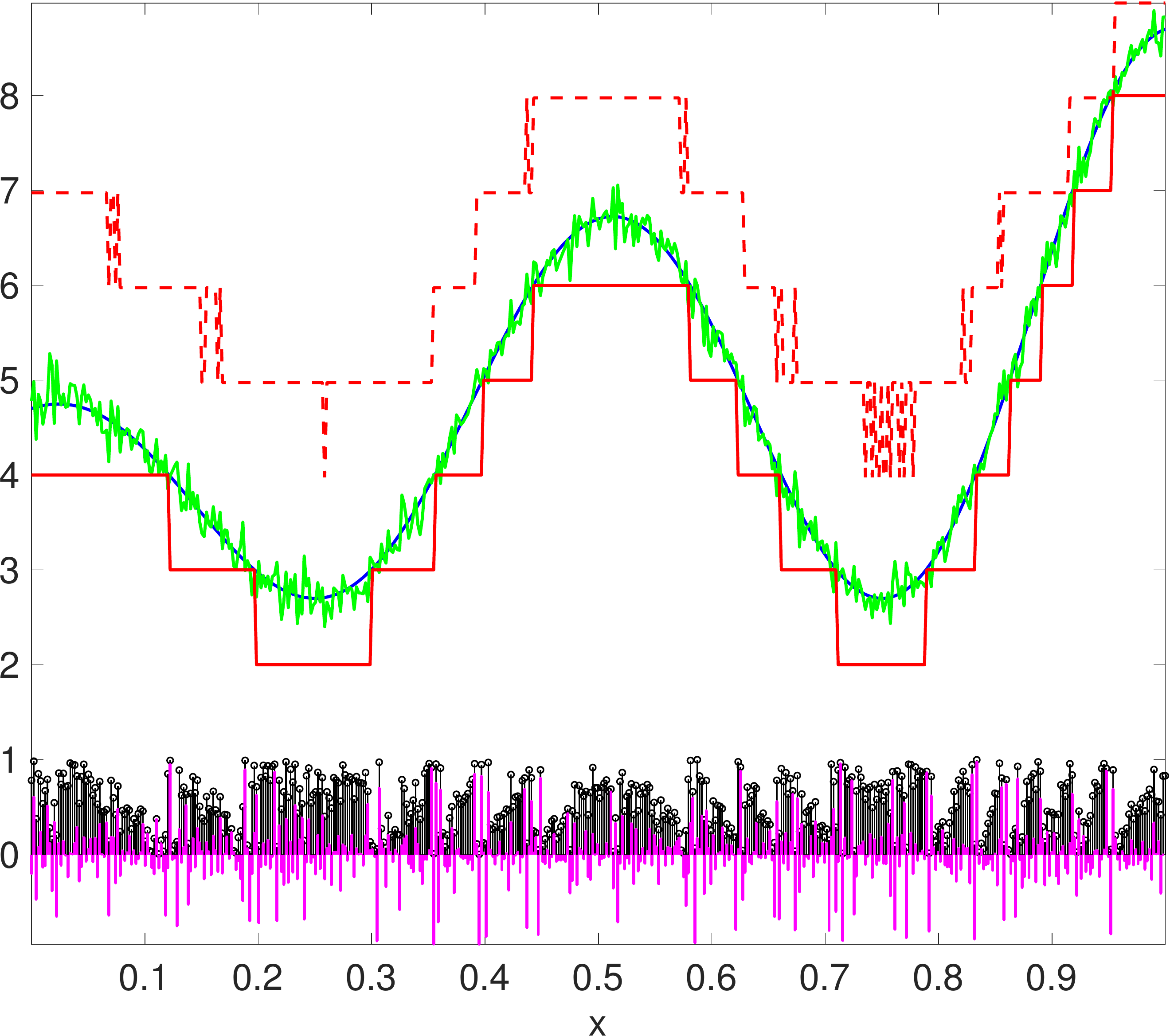} } 
%
\caption{Recovery of the estimated quotient at each sample point, via the  Quotient Tracker (\textbf{QT}) algorithm \eqref{eq:qta_recovery_rule} and the Ordinary Least Squares (\textbf{OLS}) \eqref{eq:ols_unwrap_lin_system}, when the input is given by the denoised modulo samples produced by our \textbf{QCQP} method in  Algorithm \ref{algo:two_stage_denoise}, for which  the unwrapping stage is  performed via \textbf{OLS} \eqref{eq:ols_unwrap_lin_system}.   Different columns pertain to varying levels of noise under the Gaussian noise model. We keep fixed $\lambda=0.1$, and $k=3$. We also plot the difference between consecutive noisy mod 1 measurements, $\delta_i = y_i - y_{i+1}$, highlighting the fact that QT lacks robustness at high levels of noise. 
}
\label{fig:QT_examples_Gaussian}
\vspace{-2mm}
\end{figure}

\textbf{2. Ordinary least-squares (OLS) based method.} 
A robust alternative to the aforementioned approach 
is based on directly recovering the function via a simple least-squares problem. Recall that in the noise-free case, $f_i = q_i + r_i$, $ q_i \in \mathbb{Z}, r_i \in [0,1) $, and consider, for a pair of nearby points $i,j$, the difference 
\begin{equation}   \label{eg:pairwise_f_diffs}
 f_i - f_j = q_i - q_j  + r_i - r_j.
\end{equation}
The OLS formulation we solve stems from the observation that, if $| r_i - r_j | < \qtathresh$ for a small $\qtathresh$, then $q_i = q_j$. This intuition can be easily gained from the top plots of Figure \ref{fig:instances_f1_Bounded_delta_cors}, especially   \ref{subfig:instances_f1_Bounded_delta_cors__Cor_gamma0p15}, which pertains to the noisy case (but in the low noise regime $\gamma=0.15$), that plots   $l_{i+1} - l_i$  versus  $y_i - y_{i+1}$, where  
$l_i$ denotes the noisy quotient of sample $i$, and $y_i$ the noisy remainder. For small enough $|y_i - y_{i+1}|$, we observe that  $| l_{i+1} - l_i | = 0$. Whenever  $y_i - y_{i+1} > \qtathresh$, we see that $ l_{i+1} - l_i = 1$, while $y_i - y_{i+1} < - \qtathresh$, indicates that $ l_{i+1} - l_i = -1$. Throughout  all our experiments we set $\qtathresh=0.5$. 
In Figure \ref{fig:instances_f1_Bounded} we also plot the true quotient $q$, which can be observed to be piecewise constant, in agreement with our above intuition.


For a graph\footnote{In the sequel, we will consider the same graph $G$ as for the denoising stage. But this is of course not necessary, one can consider a different graph here as well.} $G = (V,E)$ with $k \in \mathbb{N}$, and for a suitable threshold parameter $\qtathresh \in (0,1)$, this intuition leads us to estimate the function values $f_i$ as the least-squares solution to an overdetermined system of linear equations without involving  the quotients $q_1, \ldots, q_n$. To this end, we consider a linear system of equations involving the differences $\est{f}_i - \est{f}_j, \ \forall \set{i,j} \in E$ 
\begin{equation} \label{eq:ols_unwrap_lin_system}
	\est{f}_i - \est{f}_j = l_i - l_j  + y_i - y_j =  \sign_{\qtathresh}( y_{i}  - y_{j})   +   y_i - y_j,
\end{equation}
%
%
%
and solve it in the least-squares sense. \eqref{eq:ols_unwrap_lin_system} is analogous to \eqref{eq:qta_recovery_rule}, except that we now recover 
$(\widehat{f}_{i})_{i=1}^n$ collectively as the least-squares solution to \eqref{eq:ols_unwrap_lin_system}. Denoting by $T$ the least-squares matrix associated with the overdetermined linear system \eqref{eq:ols_unwrap_lin_system}, $b_{i,j} = \sign_{\qtathresh}( y_{i}  - y_{j})   +   y_i - y_j $, and $\est{f}_{i,j} = \est{f}_i - \est{f}_j$, the system of equations can be written as $ T \est{\vecf} = \vecb$ where $\est{\vecf}, \vecb \in \matR^{\abs{E}}$. Note that the matrix $T$ is sparse with only two non-zero entries per row, and that the all-ones vector $\mb{1} = ( 1, 1, \ldots, 1)^T$ lies in the null space of $T$, i.e., $T \mb{1} = 0 $. Therefore, we will find the minimum norm least-squares solution to \eqref{eq:ols_unwrap_lin_system}, and  recover $f_i$'s only up to a global shift. We remark that the above line of thought,  initiated by  \eqref{eg:pairwise_f_diffs} is very similar to Step 3 of the ASAP algorithm introduced by \cite{asap2d} (Section 3.3) in the context of the graph realization problem, that performed  synchronization over $\mathbb{R}^d$ to estimate the unknown translations. 
Algorithm \ref{algo:two_stage_denoise} summarizes our two-stage method for recovering the samples of $f$ (up to a global shift). 
Figure \ref{fig:instances_f1_Bounded_delta_cors} shows additional  noisy instances of the Uniform noise model. 
The scatter plots on the first row show that, as the noise  level  
increases, the function \eqref{eq:qta_recovery_rule} will produce more and more errors in \eqref{eq:ols_unwrap_lin_system}. The remaining plots show the corresponding f mod 1 signal (clean, noisy, and denoised via Algorithm \ref{algo:two_stage_denoise}) for three levels of noise.

%
%
\begin{algorithm}[!ht]
\caption{Algorithm for recovering the samples $f_i$} \label{algo:two_stage_denoise} 
\begin{algorithmic}[1] 
\State \textbf{Input:} $(y_i)_{i=1}^n$ (noisy mod 1 samples), $k$, $\lambda$, $n$, $G=(V,E)$. 
\State \textbf{Output:} Denoised mod 1 samples $\widehat{f_i}\bmod 1$; $i=1,\dots,n$. 

\textsc{// Stage 1: Recovering denoised} mod 1 \textsc{samples of} $f$. 

\State Form $\Hbar \in R^{2n \times 2n}$ using $\lambda,L$ as in \eqref{eq:def_H}.

\State Form $\veczbar = [\real(\vecz)^T \imag(\vecz)^T]^T \in \matR^{2n}$ as in \eqref{eq:real_notat}.

\State Obtain $\vecgbarest \in \matR^{2n}$ as the solution to \eqref{eq:qcqp_denoise_real}, i.e., 
%
$$\vecgbarest = \argmin{\vecgbar \in \mathbb{R}^{2n}: \norm{\vecgbar}_2^2 = n} \vecgbar^{T} \Hbar \vecgbar  - 2 \vecgbar^{T} \veczbar.$$

\State Obtain $\vecgest \in \mathbb{C}^n$ from $\vecgbarest$ where $\vecgbarest = [\real(\vecgest)^T \imag(\vecgest)^T]^T$.

\State Recover $\est{f_i}\bmod 1 \in [0,1)$ from $\frac{\gest_i}{\abs{\gest_i}}$ for each $i=1,\dots,n$, as in \eqref{eq:extr_mod1_vals}.

\textsc{// Stage 2: Recovering denoised real valued samples of} $f$.

\State \textbf{Input:} $(\est{f_i}\bmod 1)_{i=1}^n$ (denoised mod 1 samples), $G=(V,E)$, $\qtathresh \in (0,1)$. 

\State \textbf{Output:} Denoised samples $\widehat{f_i}$; $i=1,\dots,n$. 

\State Obtain $(\est{f}_i)_{i=1}^n$ via the Quotient Tracker (QT) or 
       OLS based method for suitable threshold $\qtathresh$.

\end{algorithmic}
\end{algorithm}

\vspace{-2mm}

\vspace{-2mm}

\begin{figure}
\centering
\subcaptionbox[]{  $\gamma=0.15$
 \label{subfig:instances_f1_Bounded_delta_cors__Cor_gamma0p15}
}[ 0.32\textwidth ]
{\includegraphics[width=0.31\textwidth] {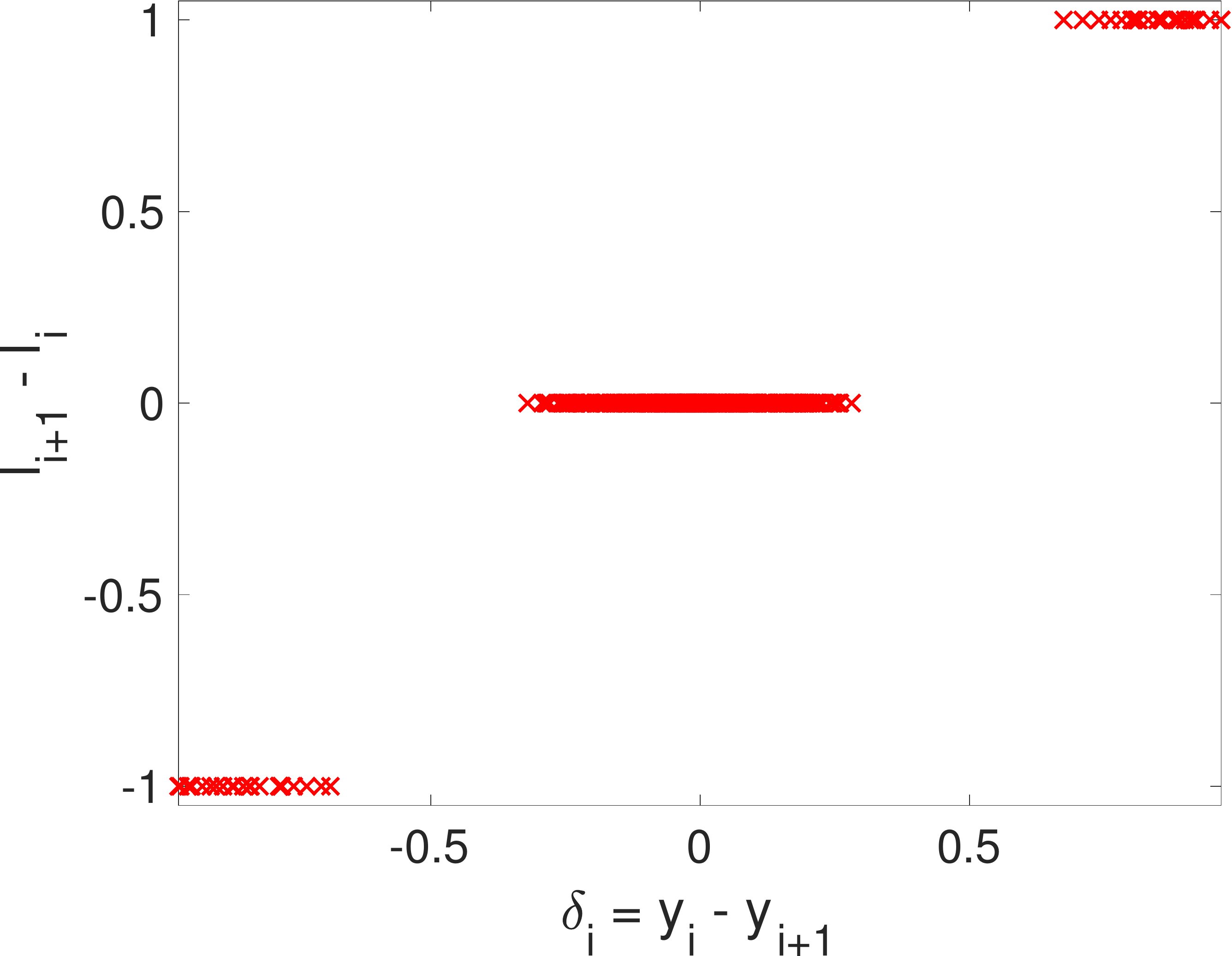} }
%
\subcaptionbox[]{  $\gamma=0.25$
}[ 0.32\textwidth ]
{\includegraphics[width=0.31\textwidth] {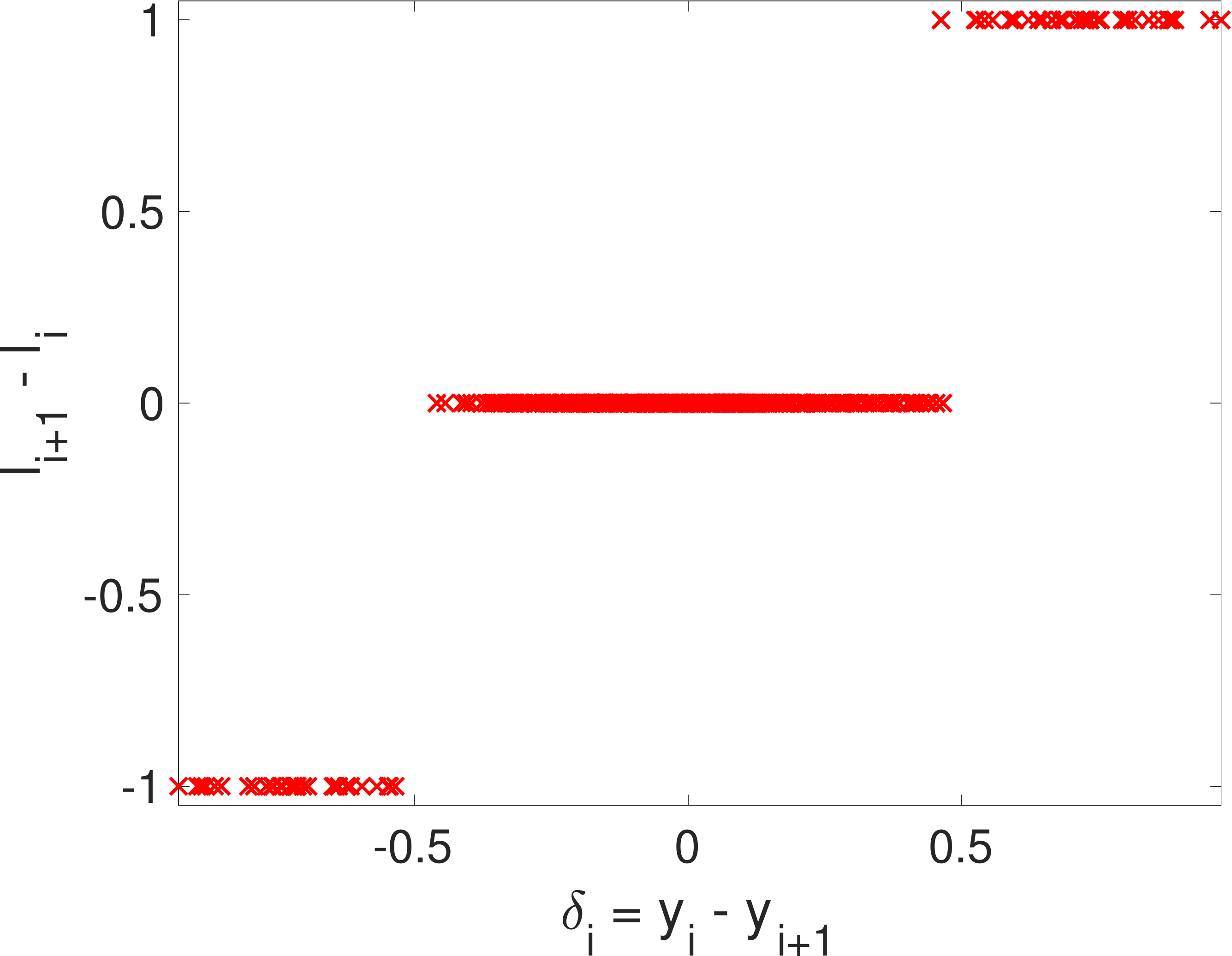} }
%
\subcaptionbox[]{  $\gamma=0.30$
}[ 0.32\textwidth ]
{\includegraphics[width=0.31\textwidth] {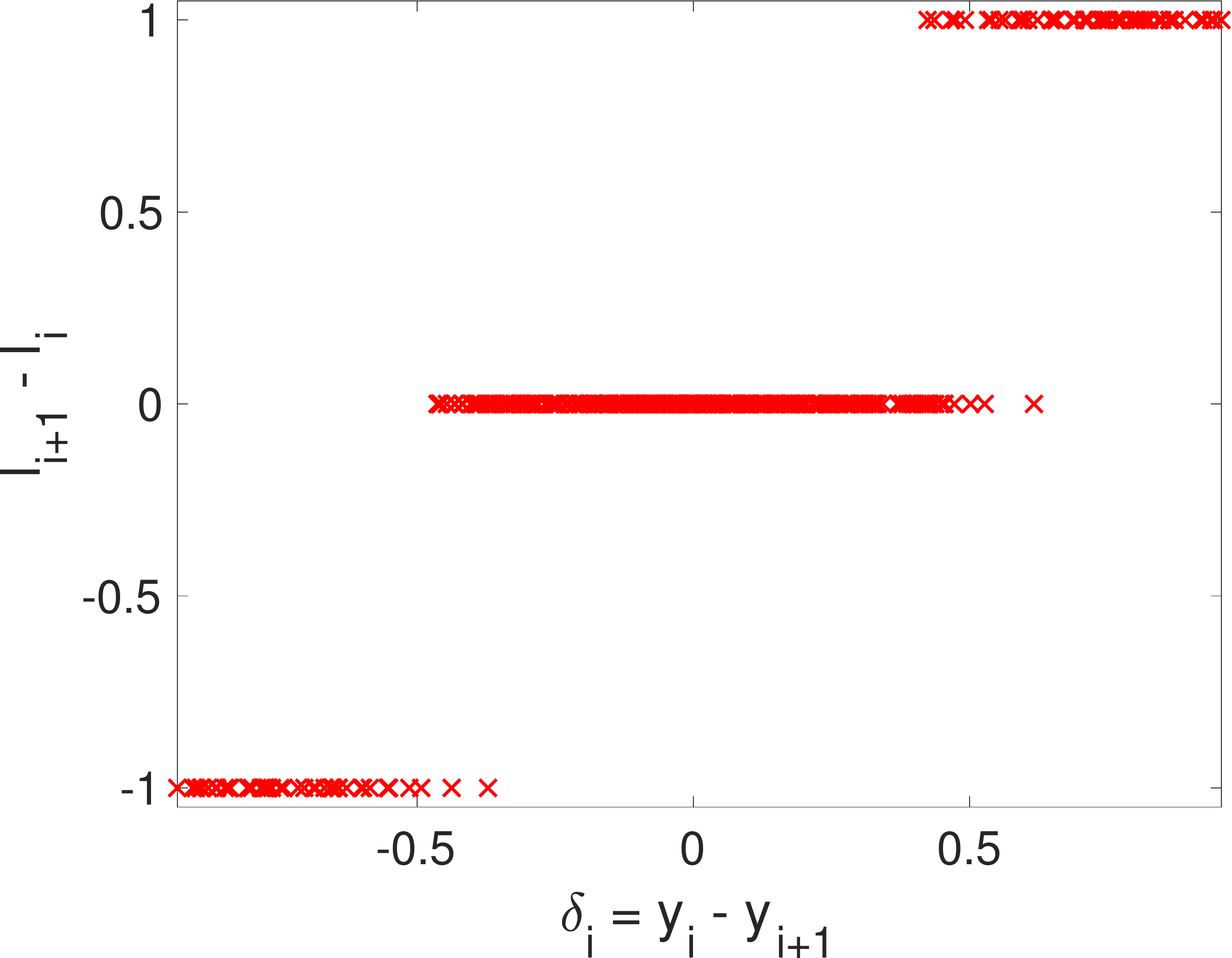} }
%
\vspace{2mm}

\subcaptionbox[]{  $\gamma=0.15$
}[ 0.96\textwidth ]
{\includegraphics[width=0.95\textwidth] {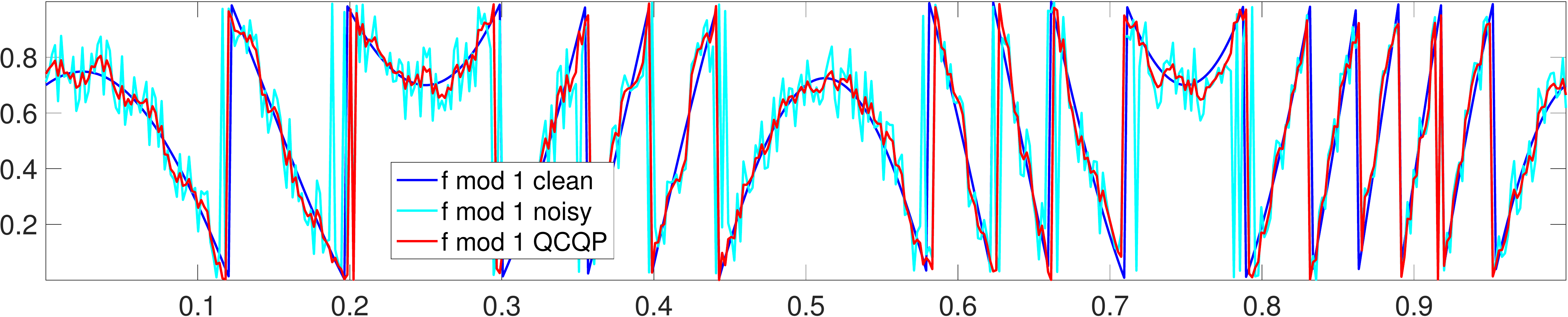} }
\vspace{2mm} 

\subcaptionbox[]{  $\gamma=0.25$
}[ 0.96\textwidth ]
{\includegraphics[width=0.95\textwidth] {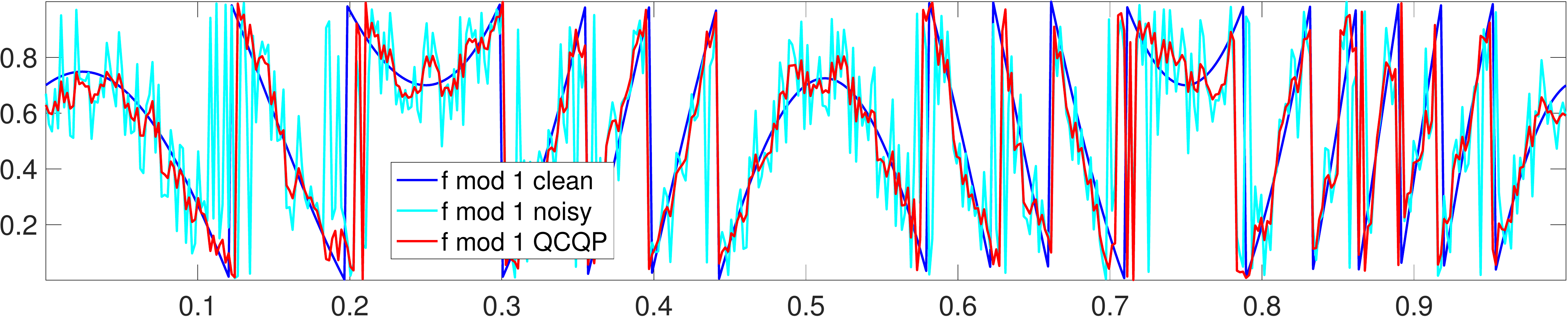} }
\vspace{2mm}

\subcaptionbox[]{  $\gamma=0.30$
}[ 0.96\textwidth ]
{\includegraphics[width=0.95\textwidth] {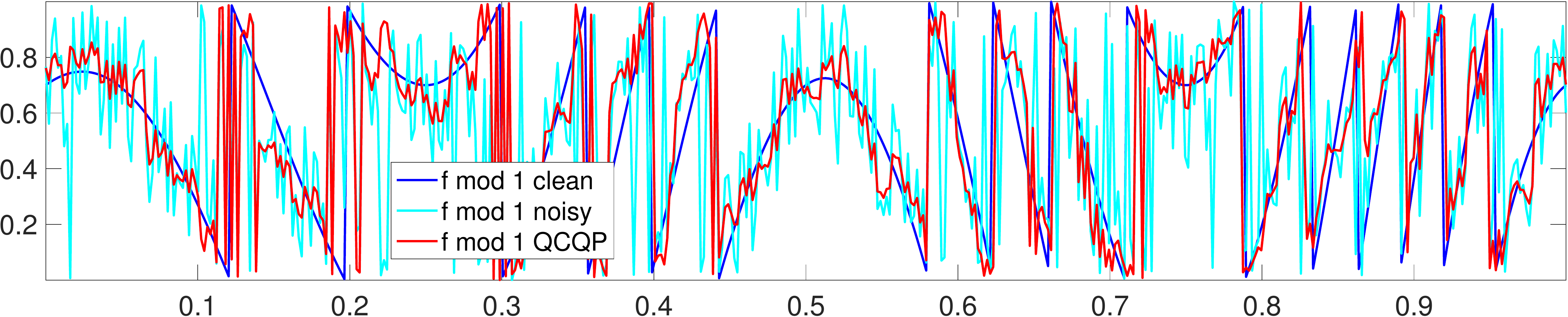} }
%
\vspace{-1mm}

\captionsetup{width=0.95\linewidth}
\caption[Short Caption]{Noisy instances of the Uniform noise model ($n=500$). Top row (a)-(c) shows scatter plots of change in $y$ (the observed noisy $f$ mod 1 values) versus change in $l$ (the noisy quotient). Plots (d)-(f) show  the clean $f$ mod 1 values (blue), the noisy $f$ mod 1 values (cyan) and the denoised (via   \textbf{QCQP}) $f$ mod 1 values  (red), for increasing levels of noise.   \textbf{QCQP} denotes Algorithm \ref{algo:two_stage_denoise} without  the unwrapping stage performed by \textbf{OLS} \eqref{eq:ols_unwrap_lin_system}.       
}
\label{fig:instances_f1_Bounded_delta_cors}
\end{figure}

\section{Analysis for the arbitrary bounded noise model} \label{sec:bound_noise_analysis}
%
This section  provides approximation guarantees for the solution $\vecgbarest \in \matR^{2n}$ 
to \eqref{eq:qcqp_denoise_real} for the arbitrary bounded noise model \eqref{eq:arb_bd_noise_model}. 
In particular, we consider a slightly modified version of this model, assuming 
%
\begin{equation}  \label{eq:BoundedNoiseModel}
\norm{\veczbar - \vechtilbar}_2 \leq \delta \sqrt{n}
\end{equation}
%
holds true for some $\delta \in [0,1]$. This is reasonable, since    
$\norm{\veczbar - \vechtilbar}_2 \leq 2 \sqrt{n}$ holds in general by triangle inequality. 
Also, note for \eqref{eq:arb_bd_noise_model} that $\abs{z_i - h_i} = 2\abs{\sin (\pi (\delta_i \bmod 1))} \leq 2\pi\abs{\delta_i}$, and thus 
$\norm{\veczbar - \vechtilbar}_2 = \norm{\vecz - \vechtil}_2 \leq  2 \pi \max_i ( |\delta_i| ) \sqrt{n}$.
Hence, while a small enough uniform bound on $\max_i (|\delta_i|)$ would of course imply \eqref{eq:BoundedNoiseModel}, however, clearly  
\eqref{eq:BoundedNoiseModel} can also hold even if some of the $\delta_i$'s are large.  

\vspace{-1mm}
\begin{theorem} \label{thm:arb_noise_model} 
Under the above notation and assumptions,  
consider the arbitrary bounded
noise model in \eqref{eq:arb_bd_noise_model}, with $\veczbar$ satisfying 
$\norm{\veczbar - \vechtilbar}_2 \leq \delta \sqrt{n}$ for $\delta \in [0,1]$. 
Let $n \geq 2$, and let $\calN(\Hbar)$ denote the null space of $\Hbar$.
\begin{enumerate}
\item If $\veczbar \not\perp \calN(\Hbar)$ then $\vecgbarest$ is the unique solution 
to \eqref{eq:qcqp_denoise_real} satisfying 
\begin{align}
\frac{1}{n}\dotprod{\vechtilbar}{\vecgbarest} \geq 1 - \frac{3\delta}{2} - \frac{\lambda \pi^2 M^2 (2k)^{2\alpha + 1}}{n^{2\alpha}}  
+ \frac{1}{(4\lambda k + 1)^2} \left(\frac{1}{2n} \veczbar^T \Hbar \veczbar\right).
\end{align}
%
%
\item If $\veczbar \perp \calN(\Hbar)$ and $\lambda < \frac{1}{4k}$ then $\vecgbarest$ is the unique solution 
to \eqref{eq:qcqp_denoise_real} satisfying
\begin{align}
\frac{1}{n}\dotprod{\vechtilbar}{\vecgbarest} \geq 1 - \frac{3\delta}{2} - \frac{\lambda \pi^2 M^2 (2k)^{2\alpha + 1}}{n^{2\alpha}} 
+  \frac{1}{\left(1 + 4\lambda k - 4 \lambda k \sin^2\left(\frac{\pi}{2n}\right) \right)^2} \left(\frac{1}{2n} \veczbar^T \Hbar \veczbar\right).
\end{align}
\end{enumerate}
\end{theorem}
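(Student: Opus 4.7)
The plan is to combine four ingredients: feasibility of $\vechtilbar$ for \eqref{eq:qcqp_denoise_real} (note $\|\vechtilbar\|_2^2 = n$), the noise hypothesis $\|\veczbar - \vechtilbar\|_2 \leq \delta\sqrt{n}$, H\"older smoothness of $f$, and the explicit trust-region solution $\vecgbarest = 2(2\Hbar + \mu^{*}\matI)^{-1}\veczbar$ from Lemma~\ref{lemma:qcqp_denoise_real}.

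First I would use optimality of $\vecgbarest$ against the feasible point $\vechtilbar$, which rearranges to
\begin{equation*}
2\vecgbarest^T\veczbar \;\geq\; 2\vechtilbar^T\veczbar + \vecgbarest^T\Hbar\vecgbarest - \vechtilbar^T\Hbar\vechtilbar.
\end{equation*}
The polarisation identity with $\|\vechtilbar\|_2^2 = \|\veczbar\|_2^2 = n$ gives $2\vechtilbar^T\veczbar = 2n - \|\veczbar - \vechtilbar\|_2^2 \geq 2n - \delta^2 n$, while Cauchy--Schwarz gives $|\vecgbarest^T(\veczbar - \vechtilbar)| \leq \delta n$. Splitting $\vecgbarest^T\veczbar = \vecgbarest^T\vechtilbar + \vecgbarest^T(\veczbar - \vechtilbar)$, dividing by $2n$, and absorbing $\delta^2/2 \leq \delta/2$ for $\delta \in [0,1]$ produces the clean inequality
\begin{equation*}
\frac{1}{n}\dotprod{\vechtilbar}{\vecgbarest} \;\geq\; 1 - \frac{3\delta}{2} + \frac{1}{2n}\vecgbarest^T\Hbar\vecgbarest - \frac{1}{2n}\vechtilbar^T\Hbar\vechtilbar.
\end{equation*}
For the second term on the right, I write $\vechtilbar^T\Hbar\vechtilbar = \lambda\sum_{\{i,j\}\in E}|h_i - h_j|^2$, apply $|h_i - h_j| = 2|\sin(\pi(f_i - f_j))| \leq 2\pi|f_i - f_j|$ together with \eqref{eq:f_smooth_hold}, use $|x_i - x_j| \leq 2k/n$ on edges (valid for $n \geq 2$), and $|E| \leq kn$, to obtain $\frac{1}{2n}\vechtilbar^T\Hbar\vechtilbar \leq \lambda\pi^2 M^2(2k)^{2\alpha+1}/n^{2\alpha}$, which is exactly the smoothness term appearing in both parts of the statement.

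The heart of the proof is the lower bound on $\vecgbarest^T\Hbar\vecgbarest$. In Case~1 the pole of $\phi$ at $0$ (a consequence of $\veczbar \not\perp \calN(\Hbar)$) forces a unique $\mu^{*} \in (0,\infty)$ solving $\phi(\mu^{*}) = n$; in Case~2 the hypothesis $\lambda < 1/(4k)$ makes every $\lambda_j(\Hbar) < 1$, so $\phi(0) = \sum_{j\geq 3}\dotprod{\veczbar}{\vecq_j}^2/\lambda_j(\Hbar)^2 > \sum_{j\geq 3}\dotprod{\veczbar}{\vecq_j}^2 = n$, again forcing $\mu^{*} \in (0,\infty)$ and putting us in the regime of Lemma~\ref{lemma:qcqp_denoise_real}. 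Expanding in the eigenbasis of $\Hbar$ and using that $(2\Hbar + \mu^{*}\matI)^{-1}$ commutes with $\Hbar$,
\begin{equation*}
\vecgbarest^T\Hbar\vecgbarest = 4\sum_{j\geq 3}\frac{\lambda_j(\Hbar)\dotprod{\veczbar}{\vecq_j}^2}{(2\lambda_j(\Hbar) + \mu^{*})^2} \;\geq\; \frac{4\,\veczbar^T\Hbar\veczbar}{(2\lambda_{2n}(\Hbar) + \mu^{*})^2}.
\end{equation*}
The crude upper bound $\mu^{*} \leq 2$, obtained from $n = \phi(\mu^{*}) \leq 4n/(\mu^{*})^2$, combined with $\lambda_{2n}(\Hbar) \leq 4\lambda k$ (the standard $\lambda_{\max}(L) \leq 2 d_{\max}$ estimate) gives $(2\lambda_{2n}(\Hbar) + \mu^{*})^2 \leq 4(4\lambda k + 1)^2$ and produces the Case~1 constant. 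For Case~2 I would instead plug in the sharper estimate $\lambda_{2n}(L) \leq 4k\cos^2(\pi/(2n)) = 4k - 4k\sin^2(\pi/(2n))$ for the $k$-nearest-neighbour path Laplacian, which turns the bound into the refined factor $(1 + 4\lambda k - 4\lambda k\sin^2(\pi/(2n)))^2$.

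The main obstacle is this last trust-region gain step: specifically (i) ruling out the degenerate sub-case $\mu^{*} = 0$ in Case~2 via $\lambda < 1/(4k)$ so that the formula $\vecgbarest = 2(2\Hbar + \mu^{*}\matI)^{-1}\veczbar$ really is applicable, and (ii) securing the refined spectral estimate for the $k$-nearest-neighbour path Laplacian that is required for the tighter Case~2 constant. The optimality comparison, the noise bookkeeping, and the edge count for the smoothness term are essentially routine once the right polarisation identity is used.
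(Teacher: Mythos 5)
Your overall architecture coincides with the paper's: the same three-step decomposition (optimality of $\vecgbarest$ against the feasible point $\vechtilbar$ plus the polarisation/Cauchy--Schwarz bookkeeping, which is the paper's Lemma \ref{lem:lowbd_feas_based}; the H\"older/edge-count bound on $\frac{1}{2n}\vechtilbar^T\Hbar\vechtilbar$, which is Lemma \ref{lem:upbd_clean_quad_form}; and the spectral lower bound on $\frac{1}{2n}\vecgbarest^T\Hbar\vecgbarest$, which is Lemma \ref{lem:lowbd_sol_quad_form}). Case 1 is identical to the paper's, including the crude bound $\mu^{*}\le 2$ and $\lambda_{2n}(\Hbar)\le 4\lambda k$ via Gershgorin, and your uniqueness argument in Case 2 via $\phi(0)>n$ when $\lambda<1/(4k)$ is equivalent to the paper's \eqref{eq:lem3_case2_temp1}.

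Where you genuinely diverge is the source of the refined Case-2 constant, and this is also where the one real gap sits. The paper keeps the crude estimate $\lambda_{2n}(\Hbar)\le 4\lambda k$ and instead sharpens the bound on the multiplier: since $\veczbar\perp\calN(\Hbar)$, the constraint $\phi(\mu^{*})=n$ forces $\mu^{*}\le 2-2\lambda_3(\Hbar)$, and $\lambda_3(\Hbar)=\lambda\beta_2(L)$ is then lower-bounded by $4\lambda k\sin^2(\pi/(2n))$ using Fiedler's inequality $\beta_2(L)\ge 2\kappa'(G)(1-\cos(\pi/n))$ together with $\kappa'(G)\ge\kappa(G)=k$. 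You instead keep $\mu^{*}\le 2$ and claim the sharper upper bound $\lambda_{\max}(L)\le 4k\cos^2(\pi/(2n))$ on the \emph{largest} eigenvalue; arithmetically this does reproduce exactly the same factor $\bigl(1+4\lambda k-4\lambda k\sin^2(\pi/(2n))\bigr)^2$. However, that spectral estimate is not a standard citation (Gershgorin and Anderson--Morley only give $4k$, and it is exact only for $k=1$, where the path-graph spectrum is explicit), and you leave it as an acknowledged "obstacle" rather than proving it --- so as written the Case-2 bound is not established. The claim is in fact true and can be proved by decomposing the edge set of the $k$-NN graph as $E=\bigcup_{d=1}^{k}\{\{i,i+d\}\}$, noting that each distance-$d$ layer is a disjoint union of paths on at most $n$ vertices whose Laplacians have largest eigenvalue at most $4\cos^2(\pi/(2n))$, and invoking subadditivity of $\lambda_{\max}$ over the sum $L=\sum_{d=1}^{k}L^{(d)}$; supplying that argument (or switching to the paper's Fiedler route, which needs only the edge connectivity $\kappa'(G)\ge k$) closes the gap.
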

%
%
%

The following useful Corollary of Theorem \ref{thm:arb_noise_model} is a direct consequence of the fact that 
$1/(2n) \veczbar^T \Hbar \veczbar \geq 0$ 
for all $\veczbar \in \matR^{2n}$, since $\Hbar$ is positive semi-definite.
%
\begin{corollary} \label{cor:main_thm_bounded_noise}
Consider the arbitrary bounded noise model defined  in \eqref{eq:arb_bd_noise_model}, with $\veczbar$ satisfying 
$\norm{\veczbar - \vechtilbar}_2 \leq \delta \sqrt{n}$ for $\delta \in [0,1]$. 
Let $n \geq 2$. If $\lambda < \frac{1}{4k}$ then $\vecgbarest$ is the unique solution 
to \eqref{eq:qcqp_denoise_real} satisfying 
\begin{equation}
\frac{1}{n}\dotprod{\vechtilbar}{\vecgbarest} \geq 1 - \frac{3\delta}{2} - \frac{\lambda \pi^2 M^2 (2k)^{2\alpha + 1}}{n^{2\alpha}}.
\end{equation}
\end{corollary}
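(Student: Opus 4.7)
The corollary follows almost immediately from Theorem \ref{thm:arb_noise_model} once one observes that the ``bonus'' term in each of the two bounds is non-negative. My plan is therefore to split the argument according to whether $\veczbar$ has a component in $\calN(\Hbar)$ or not, apply the appropriate part of Theorem \ref{thm:arb_noise_model}, and then drop the non-negative term in each case.

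First I would verify that in both cases the prefactor multiplying $\frac{1}{2n}\veczbar^T \Hbar \veczbar$ is well defined and positive. In Case~1 this is immediate since $(4\lambda k+1)^2 > 0$ for every $\lambda, k$. In Case~2, the assumption $\lambda < \frac{1}{4k}$ is exactly what guarantees that the denominator is sensible; moreover one can rewrite
\begin{equation*}
1 + 4\lambda k - 4\lambda k \sin^2\!\left(\tfrac{\pi}{2n}\right) = 1 + 4\lambda k \cos^2\!\left(\tfrac{\pi}{2n}\right) > 0,
\end{equation*}
so that the prefactor in the statement of Theorem \ref{thm:arb_noise_model} is strictly positive as well. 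This check also confirms that Theorem \ref{thm:arb_noise_model} is indeed applicable in Case~2 under the corollary's hypothesis $\lambda < \frac{1}{4k}$.

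Second, since $\Hbar = \lambda \big(\begin{smallmatrix} 1 & 0\\ 0 & 1\end{smallmatrix}\big)\otimes L$ is positive semidefinite (as $L$ is a graph Laplacian and $\lambda > 0$), we have $\veczbar^T \Hbar \veczbar \geq 0$ for every $\veczbar \in \matR^{2n}$. Hence in either case the additive correction term appearing in Theorem \ref{thm:arb_noise_model} is non-negative, and discarding it yields the weaker but uniform lower bound
\begin{equation*}
\frac{1}{n}\dotprod{\vechtilbar}{\vecgbarest} \geq 1 - \frac{3\delta}{2} - \frac{\lambda \pi^2 M^2 (2k)^{2\alpha+1}}{n^{2\alpha}}.
\end{equation*}

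Finally, the uniqueness claim passes through automatically: Theorem \ref{thm:arb_noise_model} asserts that $\vecgbarest$ is the unique optimum in both cases (in Case~2 precisely because $\lambda < \frac{1}{4k}$), so no extra work is required beyond combining the two cases into a single statement. There is no real obstacle in the proof; the only bookkeeping step that deserves a sentence of care is checking positivity of the Case~2 denominator, which is what forces the hypothesis $\lambda < \frac{1}{4k}$ to be carried over into the corollary.
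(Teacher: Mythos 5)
Your proposal is correct and matches the paper's argument exactly: the corollary is obtained from Theorem \ref{thm:arb_noise_model} by observing that $\Hbar$ is positive semi-definite, so the term involving $\frac{1}{2n}\veczbar^T \Hbar \veczbar$ is non-negative in both cases and can be dropped. Your additional check that the Case~2 prefactor equals $1 + 4\lambda k \cos^2\left(\frac{\pi}{2n}\right) > 0$ is a harmless extra verification that the paper leaves implicit.
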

%

Let us note that the above bounds are meaningful only when $\delta$ is small enough,  specifically $\delta < 2/3$. 
Before presenting the proof of Theorem \ref{thm:arb_noise_model}, some remarks are in order.
\begin{enumerate}
\item Theorem \ref{thm:arb_noise_model} give us a lower bound on the correlation between 
$\vechtilbar, \vecgbarest \in \matR^{2n}$, where clearly, $\frac{1}{n}\dotprod{\vechtilbar}{\vecgbarest} \in [-1,1]$. 
Note that the correlation improves when the noise term $\delta$ decreases, as one would expect.  
The term $\frac{\lambda \pi^2 M^2 (2k)^{2\alpha + 1}}{n^{2\alpha}}$ effectively arises on account of the smoothness 
of $f$, and is an upper bound on the term $\frac{1}{2n} \vechtilbar^T \Hbar \vechtilbar$ (made clear in Lemma \ref{lem:upbd_clean_quad_form}). 
Hence,  as the number of samples increases, $\frac{1}{2n} \vechtilbar^T \Hbar \vechtilbar$ goes to zero at the rate $n^{-2\alpha}$ (for fixed $k,\lambda$). 
Also note that the lower bound on $\frac{1}{n}\dotprod{\vechtilbar}{\vecgbarest}$ readily implies the $\ell_2$ norm bound 
$\norm{\vecgbarest-\vechtilbar}^2_2 = O(\delta n + n^{1-2\alpha})$.

\item The term $\frac{1}{2n} \veczbar^T \Hbar \veczbar$ represents the smoothness of the observed noisy samples. 
While an increasing amount of noise would usually render 
$\veczbar$ to be more and more non-smooth, and thus typically increase $\frac{1}{2n} \veczbar^T \Hbar \veczbar$, note that this would be met by a corresponding 
increase in $\delta$, and hence the lower bound on the correlation would not necessarily improve.

\item It is easy to verify that \eqref{eq:BoundedNoiseModel} implies 
$\dotprod{\veczbar}{\vechtilbar}/n \geq 1 - \frac{\delta^2}{2} \geq 1- \frac{\delta}{2}$.
Thus for $\veczbar$, which is feasible for \eqref{eq:qcqp_denoise_real}, we have a bound on correlation which is better than the bound in 
Corollary \ref{cor:main_thm_bounded_noise} by a $\delta + O(n^{-2\alpha})$ term. However, the solution $\vecgbarest$ 
to \eqref{eq:qcqp_denoise_real} is a \emph{smooth} estimate of $\vechtilbar$ (and hence more interpretable), while $\veczbar$ is typically highly 
non-smooth. 
\end{enumerate}
%
%
\begin{proof}[Proof of Theorem \ref{thm:arb_noise_model}] 
The proof of Theorem \ref{thm:arb_noise_model} relies heavily on Lemma \ref{lem:lowbd_feas_based} 
outlined below.
\begin{lemma} \label{lem:lowbd_feas_based}
Consider the arbitrary bounded noise model in \eqref{eq:arb_bd_noise_model}, with $\veczbar$ satisfying 
$\norm{\veczbar - \vechtilbar}_2 \leq \delta \sqrt{n}$ for $\delta \in [0,1]$. 
Any solution $\vecgbarest$ to \eqref{eq:qcqp_denoise_real} satisfies
\begin{equation} \label{eq:lowbd_ang_feas_form}
\frac{1}{n}\dotprod{\vechtilbar}{\vecgbarest} \geq  1 - \frac{3\delta}{2} 
 - \; \;   \frac{1}{2n} \vechtilbar^T \Hbar \vechtilbar + \; \; \frac{1}{2n} \vecgbarest^T \Hbar \vecgbarest. 
\end{equation}
\end{lemma}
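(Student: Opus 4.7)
The key observation that drives the plan is that the clean signal $\vechtilbar$ is itself feasible for \eqref{eq:qcqp_denoise_real}: each entry of $\vechtil$ lies on the unit complex circle, so $\norm{\vechtilbar}_2^2 = \norm{\vechtil}_2^2 = n$. Analogously, since $z_i = \exp(2\pi\iota y_i)$, we also have $\norm{\veczbar}_2^2 = n$. This means I can use optimality of $\vecgbarest$ against the feasible test point $\vechtilbar$, which is the standard device for this sort of ``correlation'' bound.

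The plan is as follows. First I would invoke optimality of $\vecgbarest$:
\begin{equation*}
\vecgbarest^T \Hbar \vecgbarest - 2\vecgbarest^T \veczbar \;\leq\; \vechtilbar^T \Hbar \vechtilbar - 2\vechtilbar^T \veczbar,
\end{equation*}
and rearrange to isolate $\dotprod{\vecgbarest}{\veczbar}$:
\begin{equation*}
\frac{1}{n}\dotprod{\vecgbarest}{\veczbar} \;\geq\; \frac{1}{n}\dotprod{\vechtilbar}{\veczbar} \;+\; \frac{1}{2n}\vecgbarest^T \Hbar \vecgbarest \;-\; \frac{1}{2n}\vechtilbar^T \Hbar \vechtilbar.
\end{equation*}

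Next I would translate the right-hand side to the ``clean'' inner product $\dotprod{\vechtilbar}{\veczbar}/n$. Using $\norm{\vechtilbar}_2^2 = \norm{\veczbar}_2^2 = n$ in the polarization identity and the hypothesis $\norm{\veczbar - \vechtilbar}_2 \leq \delta\sqrt{n}$, I get
\begin{equation*}
\frac{1}{n}\dotprod{\vechtilbar}{\veczbar} \;=\; 1 - \frac{1}{2n}\norm{\veczbar - \vechtilbar}_2^2 \;\geq\; 1 - \frac{\delta^2}{2} \;\geq\; 1 - \frac{\delta}{2},
\end{equation*}
where the final step uses $\delta \in [0,1]$.

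Finally I would swap $\veczbar$ for $\vechtilbar$ in the left-hand inner product at the cost of a Cauchy–Schwarz error term:
\begin{equation*}
\frac{1}{n}\dotprod{\vecgbarest}{\vechtilbar} \;\geq\; \frac{1}{n}\dotprod{\vecgbarest}{\veczbar} \;-\; \frac{1}{n}\norm{\vecgbarest}_2\norm{\vechtilbar - \veczbar}_2 \;\geq\; \frac{1}{n}\dotprod{\vecgbarest}{\veczbar} \;-\; \delta,
\end{equation*}
using $\norm{\vecgbarest}_2 = \sqrt{n}$ (feasibility) and the noise bound. Chaining this with the two previous displays produces exactly \eqref{eq:lowbd_ang_feas_form}, with the three ``$-\delta/2$'' contributions (one from optimality cross-term, one from polarization, one from Cauchy–Schwarz) combining into the $-3\delta/2$ appearing on the right-hand side.

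There is no real obstacle here: the lemma is essentially a one-line consequence of the feasibility of $\vechtilbar$, plus two elementary inequalities. The only mild subtlety is making sure to exploit $\delta^2 \leq \delta$ (valid because $\delta \leq 1$) so that the three small error terms combine cleanly into $3\delta/2$; this is why the statement restricts to $\delta \in [0,1]$. Note also that the smoothness constant $M$, H\"older exponent $\alpha$, and neighborhood size $k$ do not enter at this stage — they will only appear later when one upper-bounds $\frac{1}{2n}\vechtilbar^T \Hbar \vechtilbar$ (which Lemma \ref{lem:upbd_clean_quad_form}, referenced in the remarks after the theorem, handles) to convert Lemma \ref{lem:lowbd_feas_based} into Theorem \ref{thm:arb_noise_model}.
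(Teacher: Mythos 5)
Your proof is correct and follows essentially the same route as the paper: feasibility of $\vechtilbar$ for \eqref{eq:qcqp_denoise_real}, the polarization identity converting $\norm{\veczbar-\vechtilbar}_2\le\delta\sqrt{n}$ into $\dotprod{\veczbar}{\vechtilbar}\ge n(1-\delta^2/2)$, and a Cauchy--Schwarz swap of $\veczbar$ for $\vechtilbar$, with $\delta^2\le\delta$ closing the bookkeeping. The only (harmless) slip is in your narrative attribution of the $-3\delta/2$: it arises as $-\delta/2$ from the polarization step plus $-\delta$ from Cauchy--Schwarz, not as three separate $-\delta/2$ contributions with one coming from the optimality cross-term; your displayed inequalities are nonetheless exactly right.
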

%
%
%
\begin{proof}
To begin with, note that
\begin{equation} \label{eq:lem1_temp1}
\norm{\veczbar - \vechtilbar}_2 \leq \delta\sqrt{n} \Leftrightarrow \dotprod{\veczbar}{\vechtilbar} \geq n - \frac{\delta^2 n}{2}.
\end{equation}
Since $\vechtilbar \in \matR^{2n}$ is feasible for \eqref{eq:qcqp_denoise_real}, we get 
\begin{align}
\vechtilbar^{T} \Hbar \vechtilbar  - 2 \dotprod{\vechtilbar}{\veczbar} 
&\geq \vecgbarest^{T} \Hbar \vecgbarest  - 2 \dotprod{\vecgbarest}{\veczbar} \\
\Leftrightarrow \dotprod{\vecgbarest}{\veczbar} &\geq \dotprod{\vechtilbar}{\veczbar} - \frac{1}{2} \vechtilbar^{T} \Hbar \vechtilbar
+ \frac{1}{2} \vecgbarest^{T} \Hbar \vecgbarest \\
&\geq n - \frac{\delta^2 n}{2} - \frac{1}{2} \vechtilbar^{T} \Hbar \vechtilbar 
+ \frac{1}{2} \vecgbarest^{T} \Hbar \vecgbarest \quad \text{(from \eqref{eq:lem1_temp1})}.  \label{eq:lem1_temp2}
\end{align}
Moreover, we can upper bound $\dotprod{\vecgbarest}{\veczbar}$ as follows.
\begin{align}
\dotprod{\vecgbarest}{\veczbar} 
&= \dotprod{\vecgbarest}{\veczbar - \vechtilbar} + \dotprod{\vecgbarest}{\vechtilbar} \\
&\leq \norm{\vecgbarest}_2 \norm{{\veczbar - \vechtilbar}}_2 + \dotprod{\vecgbarest}{\vechtilbar} \quad \text{(Cauchy-Schwarz)} \\
&\leq \sqrt{n} (\sqrt{n} \delta) + \dotprod{\vecgbarest}{\vechtilbar} \quad \text{(from \eqref{eq:lem1_temp1})}. \label{eq:lem1_temp3}
\end{align}
Plugging \eqref{eq:lem1_temp3} in \eqref{eq:lem1_temp2} and using $\delta^2 \leq \delta$ for $\delta \in [0,1]$ completes the proof.
\end{proof}
We now upper bound the term $\frac{1}{2n} \vechtilbar^T \Hbar \vechtilbar$ 
in \eqref{eq:lowbd_ang_feas_form} using the H\"older continuity of $f$. 
This is formally shown below in Lemma \ref{lem:upbd_clean_quad_form}.
\begin{lemma} \label{lem:upbd_clean_quad_form}
For $n \geq 2$, it holds true that 
\begin{equation}
\frac{1}{2n} \vechtilbar^T \Hbar \vechtilbar \leq \frac{\lambda \pi^2 M^2 (2k)^{2\alpha + 1}}{n^{2\alpha}}, 
\end{equation} 
where $\alpha \in (0,1]$ and $M > 0$ are related to the smoothness of $f$ and defined in \eqref{eq:f_smooth_hold}, and  
$\lambda \geq 0$ is the regularization parameter in \eqref{eq:orig_denoise_hard}.
\end{lemma}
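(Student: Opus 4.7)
The plan is to unwind the block-diagonal quadratic form into a Laplacian quadratic form on the unit circle embeddings, then bound each edge contribution by the smoothness of $f$ and count edges.

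First I would rewrite
\begin{equation}
\vechtilbar^T \Hbar \vechtilbar = \lambda\bigl(\real(\vechtil)^T L \real(\vechtil) + \imag(\vechtil)^T L \imag(\vechtil)\bigr) = \lambda\, \vechtil^{*} L\, \vechtil = \lambda \sum_{\{i,j\}\in E} |\htil_i - \htil_j|^2,
\end{equation}
using the standard Laplacian identity. Since $\htil_i = \exp(2\pi\iota f_i)$, a direct computation gives $|\htil_i - \htil_j| = 2|\sin(\pi(f_i-f_j))| \leq 2\pi |f_i - f_j|$, exactly as in the footnote accompanying the angular embedding discussion.

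Next, I would apply H\"older continuity \eqref{eq:f_smooth_hold} together with the fact that $\{i,j\}\in E$ implies $|i-j|\leq k$, giving
\begin{equation}
|f_i - f_j| \leq M\,|x_i-x_j|^\alpha = M\left(\frac{|i-j|}{n-1}\right)^\alpha \leq M\left(\frac{k}{n-1}\right)^\alpha.
\end{equation}
Therefore $|\htil_i - \htil_j|^2 \leq 4\pi^2 M^2 (k/(n-1))^{2\alpha}$ for every edge. Since each vertex has degree at most $2k$, we have $|E|\leq nk$, and so
\begin{equation}
\frac{1}{2n}\vechtilbar^T\Hbar\vechtilbar \leq \frac{\lambda}{2n}\cdot nk \cdot 4\pi^2 M^2 \left(\frac{k}{n-1}\right)^{2\alpha} = \frac{2\lambda \pi^2 M^2 k^{2\alpha+1}}{(n-1)^{2\alpha}}.
\end{equation}

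Finally I would convert $(n-1)$ in the denominator to $n$ at the cost of a factor $2^{2\alpha}$: for $n\geq 2$ one has $n-1 \geq n/2$, hence $(n-1)^{2\alpha}\geq n^{2\alpha}/2^{2\alpha}$. Multiplying through, the upper bound becomes
\begin{equation}
\frac{2\lambda \pi^2 M^2 k^{2\alpha+1}\cdot 2^{2\alpha}}{n^{2\alpha}} = \frac{\lambda \pi^2 M^2\, (2k)^{2\alpha+1}}{n^{2\alpha}},
\end{equation}
since $(2k)^{2\alpha+1} = 2\cdot 2^{2\alpha}\, k^{2\alpha+1}$, which is exactly the claimed bound. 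There is no real obstacle here; the only mild care point is the replacement of $n-1$ by $n$, for which the assumption $n\geq 2$ is precisely what is needed.
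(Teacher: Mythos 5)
Your proof is correct and follows essentially the same route as the paper: expand the quadratic form into the edge sum $\lambda\sum_{\{i,j\}\in E}|\htil_i-\htil_j|^2$, bound each edge term via $|\htil_i-\htil_j|\le 2\pi|f_i-f_j|$ and H\"older continuity, and use $|E|\le kn$. The only (immaterial) difference is that you apply the $n-1\ge n/2$ substitution at the end rather than inside the H\"older estimate, and the constants match exactly.
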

%
%
\begin{proof}
Denoting $\vechtil = (\htil_1 \dots \htil_n)^T \in \mathbb{C}^n$ to be the complex valued 
representation of $\vechtilbar \in \matR^{2n}$ as per \eqref{eq:real_notat}, clearly
{\small
\begin{align} 
\frac{1}{2n} \vechtilbar^T \Hbar \vechtilbar = \frac{1}{2n} \vechtil^{*} (\lambda L) \vechtil
= \frac{\lambda}{2n} \sum_{\set{i,j} \in E} \abs{\htil_i - \htil_j}^2 
\leq \frac{\lambda}{2n} \abs{E} \max_{\set{i,j} \in E} \abs{\htil_i - \htil_j}^2,  \label{eq:lem2_temp1}
\end{align}}
where the first equality is shown in Appendix \ref{sec:qcqp_compl_to_real}. 
Since for each $i \in V$ we have $\text{deg}(i) \leq 2k$, hence $\abs{E} = (1/2) \sum_{i \in V} \text{deg}(i) \leq k n$. 
Next, for any $\set{i,j} \in E$ note that by H\"older continuity of $f$ we have
\begin{equation} \label{eq:lem2_temp2}
\abs{f_i - f_j} \leq M \abs{x_i - x_j}^{\alpha} \leq M \left(\frac{k}{n-1}\right)^{\alpha} \leq M\left(\frac{2k}{n}\right)^{\alpha}, 
\end{equation}
if $n \geq 2$ (since then $n-1 \geq n/2$). Finally, we can bound $\abs{\htil_i - \htil_j}$ as follows.
%
%
\begin{align}
\abs{\htil_i - \htil_j} &= \abs{1 - \exp(\iota 2\pi (f_j - f_i))} \\
&= 2\abs{\sin \pi (f_j - f_i)} \\
&\leq 2\pi \abs{f_j - f_i} \quad (\text{since} \ \abs{\sin x} \leq \abs{x}; \ \forall x \in \matR) \label{eq:lem2_newround_temp1} \\ 
&\leq \frac{2\pi M (2k)^{\alpha}}{n^{\alpha}} \quad (\text{using} \ \eqref{eq:lem2_temp2}).  \label{eq:lem2_temp3}
\end{align}
Plugging \eqref{eq:lem2_temp3} in \eqref{eq:lem2_temp1} with the bound $\abs{E} \leq kn$ yields the desired  bound.
\end{proof}
Lastly, we lower bound the term $\frac{1}{2n} \vecgbarest^T \Hbar \vecgbarest$ in \eqref{eq:lowbd_ang_feas_form}
using knowledge of the structure of the solution $\vecgbar$. This is outlined below as Lemma \ref{lem:lowbd_sol_quad_form}. 
\begin{lemma} \label{lem:lowbd_sol_quad_form}
Denoting $\calN(\Hbar)$ to be the null space of $\Hbar$, the following holds 
for the solution $\vecgbarest$ to \eqref{eq:qcqp_denoise_real}.
%
%
\begin{enumerate}
\item If $\veczbar \not\perp \calN(\Hbar)$ then $\vecgbarest$ is unique and 
\begin{equation}
\frac{1}{2n} \vecgbarest^T \Hbar \vecgbarest \geq \frac{1}{\left(1 + 4\lambda k \right)^2} \left(\frac{1}{2n} \veczbar^T \Hbar \veczbar\right).
\end{equation}

\item If $\veczbar \perp \calN(\Hbar)$ and $\lambda < \frac{1}{4k}$, then $\vecgbarest$ is unique and 
{\small
\begin{equation}
\frac{1}{2n} \vecgbarest^T \Hbar \vecgbarest \geq 
\frac{1}{\left(1 + 4\lambda k - 4 \lambda k \sin^2\left(\frac{\pi}{2n}\right)\right)^2} \left(\frac{1}{2n} \veczbar^T \Hbar \veczbar\right).
\end{equation}}
\end{enumerate}
\end{lemma}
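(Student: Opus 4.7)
The plan is to invoke the KKT characterization from Lemma~\ref{lemma:qcqp_denoise_real}, from which any solution satisfies $(2\Hbar + \mu^*\matI)\vecgbarest = 2\veczbar$ with $\mu^* \ge 0$ and $2\Hbar + \mu^*\matI \succeq 0$. The core algebraic step is to observe that $\Hbar$ and $\Hbar + (\mu^*/2)\matI$ commute, so
\begin{equation*}
\veczbar^T \Hbar \veczbar \;=\; \vecgbarest^T \bigl(\Hbar + \tfrac{\mu^*}{2}\matI\bigr) \Hbar \bigl(\Hbar + \tfrac{\mu^*}{2}\matI\bigr) \vecgbarest \;\le\; \Bigl(\lambda_{2n}(\Hbar) + \tfrac{\mu^*}{2}\Bigr)^{2} \vecgbarest^T \Hbar \vecgbarest,
\end{equation*}
which rearranges to the master inequality $\vecgbarest^T \Hbar \vecgbarest \ge (\lambda_{2n}(\Hbar) + \mu^*/2)^{-2}\,\veczbar^T \Hbar \veczbar$. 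Since $\Hbar$ is block diagonal with $\lambda L$ on each block and $L$ has maximum degree at most $2k$, Gershgorin immediately gives $\lambda_{2n}(\Hbar) \le 4\lambda k$. The remaining work is a careful bound on $\mu^*$ in each case, with $\mu^* > 0$ yielding $2\Hbar + \mu^*\matI \succ 0$ and thus uniqueness via Lemma~\ref{lemma:qcqp_denoise_real}.

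For Case~1, I would use the function $\phi(\mu) = 4\sum_j d_j^{2}/(2\lambda_j(\Hbar)+\mu)^{2}$ already introduced in the text, with $d_j := \dotprod{\veczbar}{\vecq_j}$. The hypothesis $\veczbar \not\perp \calN(\Hbar)$ makes $d_1^{2} + d_2^{2} > 0$, so $\phi$ has a pole at $\mu=0$, strictly decreases on $(0,\infty)$, and $\phi(\mu^*) = n$ has a unique positive root. Evaluating at $\mu = 2$,
\begin{equation*}
\phi(2) \;=\; \sum_j \frac{d_j^{2}}{(\lambda_j(\Hbar)+1)^{2}} \;\le\; \sum_j d_j^{2} \;=\; \norm{\veczbar}_{2}^{2} \;=\; n,
\end{equation*}
since $(\lambda_j(\Hbar)+1)^{2} \ge 1$. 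Monotonicity of $\phi$ forces $\mu^* \le 2$, hence $\lambda_{2n}(\Hbar) + \mu^*/2 \le 1 + 4\lambda k$, which yields the stated bound.

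For Case~2, orthogonality gives $d_1 = d_2 = 0$ and so $\phi(0)$ is finite. The hypothesis $\lambda < 1/(4k)$ implies every positive eigenvalue of $\Hbar$ is at most $4\lambda k < 1$, whence $\phi(0) = \sum_{j \ge 3} d_j^{2}/\lambda_j(\Hbar)^{2} > \sum_{j\ge 3} d_j^{2} = n$, forcing $\mu^* > 0$ and uniqueness. To obtain the sharper constant, I would evaluate $\phi$ at $\mu_0 := 2\bigl(1 - 4\lambda k \sin^2(\pi/(2n))\bigr)$: each summand is dominated by $d_j^{2}$ provided $2\lambda_j(\Hbar) + \mu_0 \ge 2$, i.e.\ $\lambda_j(\Hbar) \ge 4\lambda k \sin^2(\pi/(2n))$ for every $j \ge 3$. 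This is the algebraic-connectivity statement $\lambda_3(\Hbar) = \lambda\,\lambda_{\min}^+(L) \ge 4\lambda k \sin^2(\pi/(2n))$ for the $k$-nearest-neighbor path Laplacian, which collapses to the classical exact identity $\lambda_{\min}^+(L) = 2(1-\cos(\pi/n)) = 4\sin^2(\pi/(2n))$ when $k=1$. Granting this, $\phi(\mu_0) \le n$, so $\mu^* \le \mu_0$ and $\lambda_{2n}(\Hbar) + \mu^*/2 \le 1 + 4\lambda k - 4\lambda k \sin^2(\pi/(2n))$, which is the claimed sharper bound.

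The main obstacle is precisely the algebraic-connectivity lower bound in Case~2: the trivial domination $L \succeq L^{(1)}$ by the ordinary path Laplacian only yields $\lambda_{\min}^+(L) \ge 4\sin^2(\pi/(2n))$ and loses the factor of $k$. Recovering the full factor of $k$ will require a more delicate argument exploiting the banded Toeplitz structure of $L$, either through a variational comparison between $\sum_{|i-j|\le k}(x_i-x_j)^{2}$ and $k\sum_{|i-j|=1}(x_i-x_j)^{2}$, or via an explicit spectral computation for the banded symmetric Toeplitz matrix.
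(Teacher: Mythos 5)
Your argument reproduces the paper's proof in all but one step: the master inequality $\vecgbarest^T\Hbar\vecgbarest \geq (\lambda_{2n}(\Hbar)+\mu^*/2)^{-2}\,\veczbar^T\Hbar\veczbar$ obtained from the stationarity condition $(2\Hbar+\mu^*\matI)\vecgbarest = 2\veczbar$, the Gershgorin bound $\lambda_{2n}(\Hbar)\leq 4\lambda k$, the bound $\mu^*\leq 2$ in Case 1, and the bound $\mu^*\leq 2-2\lambda_3(\Hbar)$ in Case 2 all match the paper's computations (your derivations of the $\mu^*$ bounds via evaluating $\phi$ at a candidate point rather than rearranging $\phi(\mu^*)=n$ are equivalent). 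The one genuine gap is the step you yourself flag: the lower bound $\lambda_3(\Hbar) = \lambda\,\beta_2(L) \geq 4\lambda k\sin^2(\pi/(2n))$ is asserted but not proved, and without it Case 2 only delivers the weaker Case-1 constant. The paper closes this by citing Fiedler's classical inequality $\beta_2(L) \geq 2\kappa^{\prime}(G)\bigl(1-\cos(\pi/n)\bigr)$ in terms of the \emph{edge connectivity} $\kappa^{\prime}(G)$, together with the combinatorial observation that the $k$-nearest-neighbour path graph has vertex connectivity $\kappa(G)=k$, hence $\kappa^{\prime}(G)\geq k$; this immediately gives the factor of $k$ you are missing.

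Of your two proposed repairs, the first does not work: the quadratic-form domination $\sum_{|i-j|\leq k}(x_i-x_j)^2 \geq k\sum_{|i-j|=1}(x_i-x_j)^2$ is false as a statement about matrices. Take $k=2$, $n$ even, and $x_i=(-1)^i$ (which is orthogonal to the all-ones vector): the path edges contribute $4(n-1)$ while all distance-$2$ edges contribute $0$, so the left side is $4(n-1)$ against $8(n-1)$ on the right. The inequality you need is only about the bottom of the nonzero spectrum, not the whole quadratic form, which is exactly why a connectivity-based bound such as Fiedler's (or a bespoke test-function argument restricted to low-frequency vectors) is required. The second repair, an explicit spectral computation, is also delicate because $L$ is not exactly Toeplitz owing to the reduced degrees near the boundary. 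With Fiedler's theorem substituted for this step, the rest of your proof is correct and coincides with the paper's.
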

\begin{proof}
Let $\set{\lambda_j(\Hbar)}_{j=1}^{2n}$ (with $\lambda_1(\Hbar) \leq \lambda_2(\Hbar) \leq \cdots$) and $\set{\vecq_j}_{j=1}^{2n}$ denote 
the eigenvalues and eigenvectors respectively for $\Hbar$.  
Also, let $0 = \beta_1(L) < \beta_2(L) \leq \beta_3(L) \leq \cdots \leq \beta_n(L)$ denote the eigenvalues of the Laplacian $L$. 
Note that $\beta_2(L) > 0$ since $G$ is connected. By Gershgorin's disk theorem, 
it is easy to see\footnote{Denote $L_{ij}$ to be the $(i,j)^{\text{th}}$ entry of $L$. Then by Gershgorin's disk theorem, we 
know that each eigenvalue lies in $\bigcup_{i=1}^{2n}\set{x:\abs{x-L_{ii}} \leq \sum_{j \neq i} \abs{L_{ij}}}$. Since $L_{ii} \leq 2k$ 
and $\sum_{j \neq i} \abs{L_{ij}} \leq 2k$ holds for each $i$, the claim follows.}
that $\beta_n(L) \leq 4k$ for the graph $G$. Hence,
\begin{equation} \label{eq:lem3_temp0}
0 = \lambda_1(\Hbar) = \lambda_2(\Hbar) < \lambda_3(\Hbar) \leq \cdots \leq \lambda_{2n}(\Hbar) \leq 4\lambda k
\end{equation}
and $\calN(\Hbar) = \text{span}\set{\vecq_1,\vecq_2}$. We now consider the two cases separately below. 
\begin{enumerate}
\item Consider the case where $\veczbar \not\perp \calN(\Hbar)$. 
We know that $\vecgbarest = 2(2\Hbar + \mu^{*}\matI)^{-1}\veczbar$ 
for a unique $\mu^{*} \in (0,\infty)$ (and so $\vecgbarest$ is the unique solution to \eqref{eq:qcqp_denoise_real} by 
Lemma \ref{lemma:qcqp_denoise_real} since $2\Hbar + \mu^{*}\matI \succ 0$) satisfying 
\begin{equation}\label{eq:lem3_temp1}
\norm{\vecgbarest}_2^2 = 4 \sum_{j=1}^{2n} \frac{\dotprod{\veczbar}{\vecq_j}^2}{(2\lambda_j(\Hbar) + \mu^{*})^2} = n.
\end{equation}
Since $\lambda_j(\Hbar) \geq 0$ for all $j$,  we obtain from \eqref{eq:lem3_temp1} that
\begin{equation}
n  \leq 4\sum_{j=1}^{2n} \frac{\dotprod{\veczbar}{\vecq_j}^2}{(\mu^{*})^2} = \frac{4n}{(\mu^{*})^2}  \quad \quad 
\Longrightarrow  \quad \quad    \mu^{*} \leq 2. \label{eq:lem3_temp2}
\end{equation}
Note that equality holds in \eqref{eq:lem3_temp2} if $\veczbar \in \calN(\Hbar)$. We can now lower bound 
$\frac{1}{2n} \vecgbarest^T \Hbar \vecgbarest$ as follows 
\begin{align}
\frac{1}{2n} \vecgbarest^T \Hbar \vecgbarest 
&= \frac{2}{n}\veczbar^T (2\Hbar + \mu^{*}\matI)^{-1} \Hbar (2\Hbar + \mu^{*}\matI)^{-1} \veczbar \\
& = \frac{2}{n} \sum_{j=1}^{2n} \frac{\dotprod{\veczbar}{\vecq_j}^2 \lambda_j(\Hbar)}{(2\lambda_j(\Hbar) + \mu^{*})^2} \\
&\geq \frac{2}{n (8 \lambda k + 2)^2} \sum_{j=1}^{2n}\dotprod{\veczbar}{\vecq_j}^2 \lambda_j(\Hbar) \quad (\text{from} \ \eqref{eq:lem3_temp0}, \eqref{eq:lem3_temp2}) \\ 
&= \frac{1}{(4\lambda k + 1)^2} \left(\frac{1}{2n} \veczbar^T \Hbar \veczbar\right).
\end{align}
%
\item Let us now consider the case where $\veczbar \perp \calN(\Hbar)$. 
Denote 
\begin{equation}
\phi(\mu) := \norm{\vecgbarest(\mu)}_2^2 = 4 \sum_{j=1}^{2n} \frac{\dotprod{\veczbar}{\vecq_j}^2}{(2\lambda_j(\Hbar) + \mu)^2} 
= 4 \sum_{j=3}^{2n} \frac{\dotprod{\veczbar}{\vecq_j}^2}{(2\lambda_j(\Hbar) + \mu)^2}. 
\end{equation}
Observe that $\phi$ does not have a pole at $0$ anymore, $\phi(0)$ is well defined. In order to have a unique 
$\vecgbarest$, it is sufficient if $\phi(0) > n$ holds. Indeed, we would then have a unique $\mu^{*} \in (0,\infty)$ 
such that $\norm{\vecgbarest(\mu^{*})}_2^2 = n$. Consequently, $\vecgbarest(\mu^{*})$ will be the unique solution to \eqref{eq:qcqp_denoise_real} by 
Lemma \ref{lemma:qcqp_denoise_real} since $2\Hbar + \mu^{*}\matI \succ 0$. Now let us note that 
\begin{equation} \label{eq:lem3_case2_temp1}
\phi(0) = \sum_{j=3}^{2n} \frac{\dotprod{\veczbar}{\vecq_j}^2}{\lambda_j(\Hbar)^2} \geq  \frac{n}{16\lambda^2 k^2}
\end{equation}
since $\lambda_j(\Hbar) \leq 4\lambda k$ for all $j$ (recall \eqref{eq:lem3_temp0}). Therefore clearly, the choice 
$\lambda < \frac{1}{4k}$ implies $\phi(0) > n$, and consequently that the solution $\vecgbarest$ is unique. 
Assuming $\lambda < \frac{1}{4k}$ holds, we can derive an upper bound on $\mu^{*}$ as follows
\begin{align}
n = 4 \sum_{j=3}^{2n} \frac{\dotprod{\veczbar}{\vecq_j}^2}{(2\lambda_j(\Hbar) + \mu^{*})^2} 
\leq 4 \sum_{j=3}^{2n} \frac{\dotprod{\veczbar}{\vecq_j}^2}{(2\lambda_3(\Hbar) + \mu^{*})^2}  
&= \frac{4 n}{(2\lambda_3(\Hbar) + \mu^{*})^2} \\ 
\Rightarrow \mu^{*} &\leq 2 - 2\lambda_3(\Hbar). \label{eq:lem3_case2_temp2}
\end{align}
Hence $\mu^{*} \in (0,2 - 2\lambda_3(\Hbar))$ when $\lambda < \frac{1}{4k}$. We can 
now lower bound $\frac{1}{2n} \vecgbarest^T \Hbar \vecgbarest$ in the same manner as before
\begin{align}
\frac{1}{2n} \vecgbarest^T \Hbar \vecgbarest 
&= \frac{2}{n}\veczbar^T (2\Hbar + \mu^{*}\matI)^{-1} \Hbar (2\Hbar + \mu^{*}\matI)^{-1} \veczbar \\
& = \frac{2}{n} \sum_{j=1}^{2n} \frac{\dotprod{\veczbar}{\vecq_j}^2 \lambda_j(\Hbar)}{(2\lambda_j(\Hbar) + \mu^{*})^2} \\
&\geq \frac{2}{n (8 \lambda k + 2 - 2\lambda_3(\Hbar))^2} \sum_{j=1}^{2n}\dotprod{\veczbar}{\vecq_j}^2 \lambda_j(\Hbar) \quad 
(\text{from} \eqref{eq:lem3_temp0}, \eqref{eq:lem3_case2_temp2}) \\
&= \frac{1}{(1 + 4\lambda k - \lambda_3(\Hbar))^2} \left(\frac{1}{2n} \veczbar^T \Hbar \veczbar\right). \label{eq:lem3_case2_temp3}
\end{align}
It remains to lower bound $\lambda_3(\Hbar) = \lambda \beta_2(L)$. We do this by using the following result
of \cite{Fiedler1973} (adjusted to our notation) for lower bounding the second smallest eigenvalue of the Laplacian of a simple graph.
\begin{theorem}[\cite{Fiedler1973}] \label{thm:fiedler_ss_eigval}
Let $G$ be a simple graph of order $n$ other than a complete graph, 
with vertex connectivity $\kappa(G)$ and edge connectivity $\kappa^{\prime}(G)$. It holds true that    
\begin{equation}
2\kappa^{\prime}(G) (1 - \cos(\pi/n)) \leq \beta_2(L) \leq \kappa(G) \leq \kappa^{\prime}(G).
\end{equation}
\end{theorem}
The graph $G$ in our setting has $\kappa(G) = k$. Indeed, there does not exist a vertex cut of size $k-1$ or less, 
but there does exist a vertex cut of size $k$. 
This in turn implies that $\kappa^{\prime}(G) \geq k$, and Theorem \ref{thm:fiedler_ss_eigval}  yields   
\begin{equation} 
\beta_2(L) \geq 2k (1 - \cos(\pi/n)) = 4k \sin^2\left(\frac{\pi}{2n}\right).
\end{equation} 
Hence $\lambda_3(\Hbar) \geq 4 \lambda k \sin^2\left(\frac{\pi}{2n}\right)$. Plugging this in to \eqref{eq:lem3_case2_temp2} completes the proof.
\end{enumerate}
\end{proof}
This completes the proof of Theorem \ref{thm:arb_noise_model}.
\end{proof}

\section{Analysis for random noise models} \label{sec:analysis_rand_noise}
We now analyze two random noise models, namely the Bernoull-Uniform model described in \eqref{eq:bern_unif_noise_model}, 
and the Gaussian noise model described in \eqref{eq:gauss_noise_model}. 

\subsection{Analysis for the Bernoulli-Uniform random noise model} \label{subsec:bern_unif_noise}
Let $u_1,\dots,u_n \sim U[0,1]$ i.i.d, where $U[0,1]$ denotes the uniform distribution over $[0,1]$. 
Also, let $\beta_1,\beta_2,\dots,\beta_n$ be i.i.d Bernoulli random variables where $\beta_i = 1$ with probability $p$, 
and $0$ with probability $1-p$, for some $p \in [0,1]$. Then \eqref{eq:bern_unif_noise_model} is equivalent to 

\begin{equation} \label{eq:bern_unif_noise_model_1}
y_i = \left\{
\begin{array}{rl}
f_i \bmod 1 \quad ; &  \ \text{if} \ \beta_i = 0 \\
u_i \quad ; & \ \text{if} \ \beta_i = 1
\end{array} \right. ; \quad i=1,\dots,n.
\end{equation}
The following theorem is our main result for this noise model.
%
\begin{theorem} \label{thm:bern_unif_noise_model}
Consider the Bernoulli-Uniform noise model in \eqref{eq:bern_unif_noise_model_1} for some $p \in [0,1]$. 
If $\lambda < \frac{1}{4k}$ then $\vecgbarest$ is the unique solution to \eqref{eq:qcqp_denoise_real}. 
Moreover, assuming $n \geq 2$, then for any $\varepsilon \in (0,1/2)$ satisfying $p + \varepsilon \leq 1/2$ and absolute constants $c,c^{\prime} > 0$, 
the following is true. 
\begin{enumerate}
\item\label{thm:bern_unif_1}
\begin{equation} \label{eq:thm_bern_unif_1}
\frac{1}{n}\dotprod{\vechtilbar}{\vecgbarest} \geq 1 - \left(3\sqrt{\frac{p+\varepsilon}{2}} - \frac{\lambda k (1-\varepsilon)}{6(4\lambda k + 1)^2}p \right)
- \frac{\lambda \pi^2 M^2 (2k)^{2\alpha + 1}}{n^{2\alpha}} \left(1 - \frac{(1-p)^2}{(4\lambda k + 1)^2} \right) 
\end{equation}
with probability at least
\begin{equation}
1 - e \cdot \exp\left(-\frac{c^{\prime} (1-p)^2 \varepsilon^2 n}{16} \right) - 4\exp\left(-\frac{c \varepsilon^2 p^2 n}{512} \right) 
- 2e \cdot \exp\left(-\frac{c^{\prime} p^2 n (1-\varepsilon)^2}{2304} \right).
\end{equation}

\item\label{thm:bern_unif_2} 
\begin{equation} \label{eq:thm_bern_unif_2}
\frac{1}{n}\dotprod{\vechtilbar}{\vecgbarest} \geq 1 - \left(3\sqrt{\frac{p+\varepsilon}{2}} \right) - \frac{\lambda \pi^2 M^2 (2k)^{2\alpha + 1}}{n^{2\alpha}}  
\end{equation}
with probability at least
\begin{equation}
1 - e \cdot \exp\left(-\frac{c^{\prime} (1-p)^2 \varepsilon^2 n}{16} \right).
\end{equation}
\end{enumerate}
\end{theorem}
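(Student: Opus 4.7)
The plan is to reuse the three-step decomposition from the proof of Theorem~\ref{thm:arb_noise_model}, replacing the deterministic hypothesis $\norm{\veczbar-\vechtilbar}_2 \leq \delta\sqrt{n}$ by a high-probability counterpart, and, only for Part~\ref{thm:bern_unif_1}, additionally lower bounding $\frac{1}{2n}\veczbar^T \Hbar \veczbar$ with high probability. The choice of $\delta$ will be dictated by the concentration of $\norm{\veczbar-\vechtilbar}_2$ under the Bernoulli-Uniform model.

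For this first ingredient, since $\abs{z_i}=\abs{\htil_i}=1$, we have $\norm{\veczbar-\vechtilbar}_2^2 = 2n - 2\dotprod{\veczbar}{\vechtilbar}$, and the summands $\xi_i := \real(z_i^*\htil_i)$ are independent with $\xi_i = 1$ on $\set{\beta_i=0}$ and $\xi_i = \real(e^{-\iota 2\pi u_i}\htil_i)\in[-1,1]$ of mean zero on $\set{\beta_i=1}$. Hence $\expec[\dotprod{\veczbar}{\vechtilbar}] = (1-p)n$, and a Hoeffding/Chernoff-type estimate on $\sum_i \xi_i$ (whose conditional variance given $(\beta_i)_i$ scales with the number of corrupted indices, producing the $(1-p)^2$ factor in the tail) will yield $\norm{\veczbar-\vechtilbar}_2 \leq \sqrt{2(p+\varepsilon)n}$ with probability at least $1 - e\exp(-c'(1-p)^2\varepsilon^2 n/16)$. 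I then set $\delta = \sqrt{2(p+\varepsilon)}$; the hypothesis $p+\varepsilon \leq 1/2$ forces $\delta \leq 1$, and Lemma~\ref{lem:lowbd_feas_based} contributes the noise term $3\delta/2 = 3\sqrt{(p+\varepsilon)/2}$. Combining with Lemma~\ref{lem:upbd_clean_quad_form} for the clean smoothness bound and with the $\lambda<1/(4k)$ instance of Lemma~\ref{lem:lowbd_sol_quad_form} (which ensures uniqueness of $\vecgbarest$ in both regimes $\veczbar \not\perp \calN(\Hbar)$ and $\veczbar \perp \calN(\Hbar)$), and finally dropping $\frac{1}{2n}\veczbar^T \Hbar \veczbar \geq 0$ by positive semi-definiteness, immediately yields Part~\ref{thm:bern_unif_2}.

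For Part~\ref{thm:bern_unif_1} I must instead keep and lower bound $\frac{1}{2n}\veczbar^T \Hbar \veczbar = \frac{\lambda}{2n}\sum_{\set{i,j}\in E}\abs{z_i-z_j}^2$. A direct edgewise expectation, using independence of the pairs $(\beta_i,u_i)$ across $i$, gives $\expec\abs{z_i-z_j}^2 = 2(1-(1-p)^2) + (1-p)^2\abs{\htil_i-\htil_j}^2$, hence
\begin{equation*}
\expec\left[\frac{1}{2n}\veczbar^T \Hbar \veczbar\right] = \frac{\lambda\abs{E}}{n}\bigl(1-(1-p)^2\bigr) + (1-p)^2\cdot\frac{1}{2n}\vechtilbar^T \Hbar \vechtilbar.
\end{equation*}
Since $\abs{E}/n \geq k - O(1)$ and $1-(1-p)^2\geq p$, the first term contributes an $\Omega(\lambda k p)$ gain that, after passing through the factor $(1+4\lambda k)^{-2}$ from Lemma~\ref{lem:lowbd_sol_quad_form}, produces the positive $\lambda k p$ contribution in~\eqref{eq:thm_bern_unif_1}, while the $(1-p)^2\cdot\frac{1}{2n}\vechtilbar^T \Hbar \vechtilbar$ summand combines with the $-\frac{1}{2n}\vechtilbar^T \Hbar \vechtilbar$ of Lemma~\ref{lem:lowbd_feas_based} to yield the modifier $\bigl(1-(1-p)^2/(4\lambda k+1)^2\bigr)$ multiplying the clean smoothness term in the statement.

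The main obstacle is the concentration of $\veczbar^T \Hbar \veczbar$ around this expectation, since the edge summands are not independent (adjacent edges share endpoints). My plan is to regard $\veczbar^T \Hbar \veczbar$ as a function of the $n$ independent tuples $(\beta_i,u_i)$, observe that altering a single pair changes the value by at most $O(\lambda k)$ (vertex $i$ participates in at most $2k$ edges, each of weight $\leq 4\lambda$), and invoke McDiarmid's bounded-differences inequality; equivalently, one may decompose $E$ into $O(k)$ edge-disjoint matchings and apply Hoeffding/Bernstein on each. Either route produces a lower tail of the form $\exp(-c'p^2 n(1-\varepsilon)^2/2304)$. A supplementary Chernoff bound on $\abs{\set{i:\beta_i=1}}$ then supplies the $4\exp(-c\varepsilon^2 p^2 n/512)$ factor (used to replace variance-type quantities $p(1-p)$ by $p(1-\varepsilon)$ on a good event), and a union bound over these three events delivers the stated probability. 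The book-keeping of the absolute constants, while tedious, is straightforward and traces back to the Hoeffding/McDiarmid/Chernoff constants invoked at each step.
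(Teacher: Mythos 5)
Your proposal is correct and reaches the theorem by the same overall architecture as the paper — the deterministic master inequality of Lemma~\ref{lem:lowbd_feas_based} combined with Lemma~\ref{lem:upbd_clean_quad_form} and the $\lambda<1/(4k)$ case of Lemma~\ref{lem:lowbd_sol_quad_form}, fed with a high-probability value of $\delta$ and (for Part~\ref{thm:bern_unif_1}) a high-probability lower bound on $\frac{1}{2n}\veczbar^T\Hbar\veczbar$ — but it handles the concentration of the noisy quadratic form by a genuinely different device. The paper (Proposition~\ref{prop:bern_unif_conc}, Appendix~D) centers $\veczbar_R$ around its mean $\mean_R=(1-p)\real(\vechtil)$, expands $\veczbar_R^T L\veczbar_R$ into a centered quadratic term, a cross term, and $\mean_R^T L\mean_R$, and controls the first two by Hanson--Wright and a sub-Gaussian Hoeffding bound respectively; this is where the two tail factors $4\exp(-c\varepsilon^2p^2n/512)$ and $2e\exp(-c'p^2n(1-\varepsilon)^2/2304)$ come from. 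You instead compute $\expec[\veczbar^T\Hbar\veczbar]$ edgewise — your identity $\expec\abs{z_i-z_j}^2=2(1-(1-p)^2)+(1-p)^2\abs{\htil_i-\htil_j}^2$ is correct and, summed over $E$, coincides exactly with the total of the paper's three expected terms — and then invoke McDiarmid with bounded differences $c_i\leq 8\lambda k$ (or a matching decomposition). Taking the deviation $t$ proportional to $\varepsilon\lambda nkp$ gives a single lower-tail term $\exp(-c''\varepsilon^2p^2n)$ and a gain of order $\frac{\lambda kp}{2}(1-\varepsilon)$, which dominates the stated $\frac{\lambda k(1-\varepsilon)}{6}p$; since $c,c'$ are unspecified absolute constants, your (fewer, and in fact stronger) tail terms are absorbed into the stated probability. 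What your route buys is a cleaner expectation computation and one concentration inequality instead of two; what the paper's buys is slightly sharper variance-adapted exponents via Hanson--Wright, at the cost of the three-way decomposition.

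Two small misattributions in your narrative, neither of which damages the argument. First, the $(1-p)^2$ in the tail of the noise bound does not come from the conditional variance given $(\beta_i)_i$; it comes from choosing the deviation threshold proportional to the mean $\expec\bigl[\sum_i\xi_i\bigr]=(1-p)n$ and applying a uniform sub-Gaussian Hoeffding bound (a variance-adapted Bernstein argument would in fact give a better exponent). Second, a Chernoff bound on $\abs{\set{i:\beta_i=1}}$ concentrates at scale $\exp(-c\varepsilon^2pn)$, not $\exp(-c\varepsilon^2p^2n)$, so it cannot be the source of the $4\exp(-c\varepsilon^2p^2n/512)$ factor — but that auxiliary step is unnecessary once McDiarmid is applied directly to the quadratic form, so you should simply drop it.
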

%
Both \eqref{eq:thm_bern_unif_1} and \ref{eq:thm_bern_unif_2} give lower bounds on the correlation between $\vechtilbar, \vecgbarest$, holding with 
high probability over the randomness of the noise samples provided $p$ is small enough. 
However, note that the bound in \eqref{eq:thm_bern_unif_1} is strictly better than 
that in \eqref{eq:thm_bern_unif_2}, albeit with a worse bound on success probability. The usefulness of \eqref{eq:thm_bern_unif_1} is seen when 
$p = \Theta(1)$ and for large $n$, in which case \eqref{eq:thm_bern_unif_1} will hold with a sufficiently large probability, thus giving a better result  than \eqref{eq:thm_bern_unif_2}. In contrast, the bound in \eqref{eq:thm_bern_unif_2} is useful for all values of $p \in [0,1/2)$, and moderately large 
values of $n$. This is especially important when $p$ is close to $0$ and $n$ is moderately large. In that scenario, the bound on the success probability for 
\eqref{eq:thm_bern_unif_1} becomes trivial.

\begin{proof}[Proof of Theorem \ref{thm:bern_unif_noise_model}] Our starting point is the following (slightly restricted) version of 
Theorem \ref{thm:arb_noise_model}, which follows in a straightforward manner from Lemma \ref{lem:lowbd_feas_based} 
and Lemma \ref{lem:lowbd_sol_quad_form}.

\begin{theorem} \label{thm:ar_bd_noise_restrict}
Consider the arbitrary bounded
noise model defined  in \eqref{eq:arb_bd_noise_model}, with $\veczbar$ satisfying 
$\norm{\veczbar - \vechtilbar}_2 \leq \delta \sqrt{n}$ for $\delta \in [0,1]$.
If $\lambda < \frac{1}{4k}$, then $\vecgbarest$ is the unique solution 
to \eqref{eq:qcqp_denoise_real} satisfying 
%
\begin{align}
\frac{1}{n}\dotprod{\vechtilbar}{\vecgbarest} \geq 1 - \frac{3\delta}{2} - \frac{1}{2n} \vechtilbar^T \Hbar \vechtilbar
+ \frac{1}{\left(1 + 4\lambda k \right)^2} \left(\frac{1}{2n} \veczbar^T \Hbar \veczbar\right).
\end{align}
\end{theorem}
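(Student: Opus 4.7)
The plan is to derive Theorem \ref{thm:ar_bd_noise_restrict} by combining the two key lemmas established earlier in the section, noting that the statement is essentially a unified simplification of the case analysis in Theorem \ref{thm:arb_noise_model} under the stronger hypothesis $\lambda < \frac{1}{4k}$.

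First I would invoke Lemma \ref{lem:lowbd_feas_based}, which is purely a consequence of feasibility of $\vechtilbar$ and Cauchy--Schwarz, to obtain the starting inequality
\begin{equation*}
\frac{1}{n}\dotprod{\vechtilbar}{\vecgbarest} \geq 1 - \frac{3\delta}{2} - \frac{1}{2n}\vechtilbar^T \Hbar \vechtilbar + \frac{1}{2n}\vecgbarest^T \Hbar \vecgbarest.
\end{equation*}
This lemma does not require any assumption on $\lambda$, so it supplies the first three terms of the desired bound directly. What remains is to lower bound $\frac{1}{2n}\vecgbarest^T \Hbar \vecgbarest$ by $\frac{1}{(1+4\lambda k)^2}\left(\frac{1}{2n}\veczbar^T \Hbar \veczbar\right)$ and simultaneously argue uniqueness of $\vecgbarest$.

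Next I would apply Lemma \ref{lem:lowbd_sol_quad_form}, splitting on whether $\veczbar \perp \calN(\Hbar)$. In the first case $\veczbar \not\perp \calN(\Hbar)$, uniqueness holds unconditionally and the lemma yields exactly the bound $\frac{1}{(1+4\lambda k)^2}\left(\frac{1}{2n}\veczbar^T \Hbar \veczbar\right)$. In the second case $\veczbar \perp \calN(\Hbar)$, the hypothesis $\lambda < \frac{1}{4k}$ is invoked to guarantee uniqueness and to produce the stronger lower bound with denominator $(1 + 4\lambda k - 4\lambda k \sin^2(\pi/(2n)))^2$. Since $4\lambda k \sin^2(\pi/(2n)) \geq 0$, this denominator is no larger than $(1 + 4\lambda k)^2$, so the bound obtained in the second case is at least as large as that in the first case. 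Thus, under the single assumption $\lambda < \frac{1}{4k}$, both cases yield uniqueness of $\vecgbarest$ and the common lower bound $\frac{1}{(1+4\lambda k)^2}\left(\frac{1}{2n}\veczbar^T \Hbar \veczbar\right)$.

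Plugging this lower bound into the inequality from Lemma \ref{lem:lowbd_feas_based} gives the stated conclusion. There is no genuine obstacle here: the entire content of the theorem is a uniform packaging of the two-case result in Theorem \ref{thm:arb_noise_model}, exploiting the fact that the Case 2 bound dominates the Case 1 bound, so discarding the $\sin^2(\pi/(2n))$ improvement allows both cases to be stated with the same expression once $\lambda < \frac{1}{4k}$ is imposed. The only mild subtlety worth flagging is that uniqueness in Case 1 actually requires no restriction on $\lambda$, so the hypothesis $\lambda < \frac{1}{4k}$ is only truly needed to handle Case 2; nevertheless, imposing it globally produces the clean, unified statement used later in Theorem \ref{thm:bern_unif_noise_model}.
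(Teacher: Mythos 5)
Your proposal is correct and matches the paper's own (one-line) proof, which states that the theorem follows directly from Lemma \ref{lem:lowbd_feas_based} and Lemma \ref{lem:lowbd_sol_quad_form}; your observation that the Case 2 bound dominates the Case 1 bound because $1 + 4\lambda k - 4\lambda k \sin^2(\pi/(2n)) \leq 1 + 4\lambda k$ and $\veczbar^T \Hbar \veczbar \geq 0$ is exactly the right way to unify the two cases. Nothing further is needed.
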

Next, let us recall from \eqref{eq:unit_complex_circ_rep} that $z_i = \exp(\iota 2 \pi y_i)$, and so $(z_i)_{i=1}^n$ are 
independent, complex-valued random variables. Consequently, we can upper bound the noise term parameter $\delta$, and lower bound 
the quadratic form $\frac{1}{2n} \veczbar^T \Hbar \veczbar$, w.h.p.
This is stated precisely in the following Proposition.
%
\begin{proposition} \label{prop:bern_unif_conc} 
Consider the Bernoulli-Uniform noise model in \eqref{eq:bern_unif_noise_model_1} for some $p \in [0,1]$. 
For absolute constants $c,c^{\prime} > 0$, the following is true.
\begin{enumerate}
\item \label{prop:quad_bound_bern_unif} For any $\varepsilon \in (0,1)$, 
\begin{align}
\frac{1}{2n} \veczbar^T \Hbar \veczbar \geq \frac{\lambda p k (1-\varepsilon)}{6} + (1-p)^2 \frac{1}{2n} \vechtilbar^T \Hbar \vechtilbar  
\end{align}
holds with probability at least 
$$1 - 4\exp\left(-\frac{c \varepsilon^2 p^2 n}{512} \right) 
- 2e \cdot \exp\left(-\frac{c^{\prime} p^2 n (1-\varepsilon)^2}{2304} \right).$$

\item \label{prop:noise_bound_bern_unif} 
For any $\varepsilon \in (0,1)$, 
\begin{equation} \label{eq:bern_unif_z_err}
\norm{\veczbar - \vechtilbar}_2 \leq \sqrt{2(p + \varepsilon)} \sqrt{n} 
\end{equation}
holds with probability at least $1 - e \cdot \exp\left(-\frac{c^{\prime} (1-p)^2 \varepsilon^2 n}{16} \right)$.  
\end{enumerate}
\end{proposition}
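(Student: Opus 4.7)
The plan is to exploit the decomposition $z_i = (1-\beta_i) h_i + \beta_i \xi_i$, where $\xi_i := \exp(\iota 2\pi u_i)$ is uniformly distributed on the unit circle (so $\expec \xi_i = 0$), and the variables $\set{\beta_i, \xi_i}_{i=1}^n$ are mutually independent. Each of the two bounds then reduces to a concentration statement for a sum of bounded random variables (independent in (ii), weakly dependent in (i)) around an expectation that can be computed explicitly by conditioning on the corruption pattern $(\beta_i)_{i=1}^n$.

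For Part~\ref{prop:noise_bound_bern_unif} I would write $\norm{\veczbar - \vechtilbar}_2^2 = \sum_{i=1}^n \abs{z_i - h_i}^2$, noting that the summands $X_i := \abs{z_i - h_i}^2 \in [0,4]$ are \emph{independent}. Conditioning on $\beta_i$,
\begin{equation*}
\expec X_i = (1-p) \cdot 0 + p \cdot \expec \abs{\xi_i - h_i}^2 = 2p,
\end{equation*}
where the inner expectation equals $2$ because $\expec \xi_i = 0$. A Bernstein (or Hoeffding) inequality applied to $\sum_i X_i$ then gives $\sum_i X_i \leq 2(p+\varepsilon)n$, i.e.\ \eqref{eq:bern_unif_z_err}, with exponentially small failure probability; careful tracking of the variance $\operatorname{Var}(X_i) = 2p(3-2p)$ yields the $(1-p)^2 \varepsilon^2 n$ scaling in the stated tail.

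For Part~\ref{prop:quad_bound_bern_unif} I would start from the identity $\frac{1}{2n}\veczbar^T \Hbar \veczbar = \frac{\lambda}{2n} \sum_{\set{i,j}\in E} \abs{z_i - z_j}^2$ (established in the proof of Lemma~\ref{lem:upbd_clean_quad_form}). A case analysis over the four joint outcomes of $(\beta_i, \beta_j)$, using that $\xi_i, \xi_j$ are mean-zero and mutually independent as well as independent of $\vechtil$, yields
\begin{equation*}
\expec \abs{z_i - z_j}^2 = (1-p)^2 \abs{h_i - h_j}^2 + 2p(2-p),
\end{equation*}
and hence
\begin{equation*}
\expec\Bigl[\tfrac{1}{2n} \veczbar^T \Hbar \veczbar\Bigr] = (1-p)^2 \tfrac{1}{2n} \vechtilbar^T \Hbar \vechtilbar + \lambda p(2-p) \tfrac{\abs{E}}{n}.
\end{equation*}
Since $\abs{E} = k(n-1) - \binom{k}{2} \geq kn/2$ whenever $n \geq 2k$, the second summand is of order $\lambda p k$, which is the source of the $\lambda p k/6$ floor in the claimed lower bound.

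The main technical obstacle is concentrating $\sum_{\set{i,j}\in E}\abs{z_i - z_j}^2$ around its expectation, since the summands share variables: each $z_i$ appears in up to $2k$ edge terms. I would handle this by partitioning $E$ into at most $2k$ matchings (for each offset $\ell \in \set{1,\dots,k}$, the edges $\set{i, i+\ell}$ form a path which decomposes into two matchings), on which the summands become independent. A Hoeffding bound per matching followed by a union bound over offsets then yields concentration at scale $\sqrt n$ with rate-$p^2 n$ tails, reproducing the two exponential terms in the stated failure probability — one for concentrating the quadratic form around its mean, and one for separately controlling the contribution of the corrupted samples that drive the $\lambda p k/6$ floor.
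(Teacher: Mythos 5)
Your argument is correct, but for Part 1 it takes a genuinely different route from the paper. The paper works with the real and imaginary parts $\veczbar_R,\veczbar_I$ separately, centers each around its mean vector $(1-p)\real(\vechtil)$, $(1-p)\imag(\vechtil)$, expands $\veczbar_R^T L\veczbar_R$ into a centered quadratic form plus a linear cross term plus $\mean_R^T L\mean_R$, and then concentrates the quadratic form via the Hanson--Wright inequality and the cross term via a sub-Gaussian Hoeffding bound; the floor $\lambda pk/6$ emerges from the diagonal of $\expec[(\veczbar_R-\mean_R)^T L(\veczbar_R-\mean_R)]$, and the $(1-p)^2$ factor from $\mean_R^T L\mean_R$. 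You instead work directly with the edge representation $\veczbar^T\Hbar\veczbar=\lambda\sum_{\set{i,j}\in E}\abs{z_i-z_j}^2$, compute $\expec\abs{z_i-z_j}^2=(1-p)^2\abs{h_i-h_j}^2+2p(2-p)$ by conditioning on $(\beta_i,\beta_j)$, and handle the dependence among edge terms by decomposing $E$ into $2k$ matchings and applying Hoeffding on each. This is more elementary (no Hanson--Wright) and makes the provenance of both the $(1-p)^2$ prefactor and the $\lambda pk$ floor completely transparent; what it costs you is a factor of order $k$ in the union bound over matchings, so your failure probability is $O(k)\exp(-c\varepsilon^2p^2n)$ rather than the $k$-free prefactors $4$ and $2e$ in the statement --- harmless for fixed $k$ and large $n$ (and absorbable into the exponent once $n\gtrsim\log k$), but worth stating explicitly, and it means your constants will not literally be $512$ and $2304$. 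Two small points: in Part 2 your route is essentially the paper's in disguise ($\abs{z_i-h_i}^2=2-2\cos(2\pi(f_i\bmod 1-y_i))$, so your $\sum_i X_i$ is exactly their $2n-\sum_i M_i$), and your bound is in fact slightly stronger; however, the $(1-p)^2$ in the stated tail does not come from the variance $2p(3-2p)$ as you suggest --- in the paper it arises purely from choosing the deviation level $t=2(1-p)n\varepsilon$ --- so you should either drop that remark or note that plain Hoeffding at deviation $2\varepsilon n$ already gives a tail $\exp(-c\varepsilon^2 n)\le\exp(-c(1-p)^2\varepsilon^2 n)$, which suffices.
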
 
The proof is deferred to Appendix \ref{sec:proof_prop_bern_unif}. 
Observe from \eqref{eq:bern_unif_z_err} and Theorem \ref{thm:ar_bd_noise_restrict} that $\delta = \sqrt{2(p+\varepsilon)} \in [0,1]$ if $p+\varepsilon \leq 1/2$.
Part \ref{thm:bern_unif_1} of Theorem \ref{thm:bern_unif_noise_model} follows by applying the union bound to the two events in 
Proposition \ref{prop:bern_unif_conc}, and 
combining it with Theorem \ref{thm:ar_bd_noise_restrict} and Lemma \ref{lem:upbd_clean_quad_form}. 
To prove part \ref{thm:bern_unif_2} of Theorem \ref{thm:bern_unif_noise_model}, we simply use the 
fact $\frac{1}{2n} \veczbar^T \Hbar \veczbar \geq 0$ (since $\Hbar$ is p.s.d) in 
Theorem \ref{thm:ar_bd_noise_restrict}, and then combine it with part \ref{prop:noise_bound_bern_unif} of Proposition \ref{prop:bern_unif_conc} 
and Lemma \ref{lem:upbd_clean_quad_form}.  This completes the proof. 
\end{proof} 
\subsection{Analysis for Gaussian noise model} \label{subsec:gaussian_noise}
We now analyze the Gaussian noise model in \eqref{eq:gauss_noise_model}. Recall that  
$y_i = (f(x_i) + \eta_i) \mod 1$ where $\eta_i \sim \mathcal{N}(0,\sigma^2)$ i.i.d for $i=1,\dots,n$.  
The following theorem is our main result for this noise model.
%
\begin{theorem} \label{thm:gauss_noise_model}
Consider the Gaussian noise model in \eqref{eq:gauss_noise_model} for some $\sigma \geq 0$. 
If $\lambda < \frac{1}{4k}$ then $\vecgbarest$ is the unique solution to \eqref{eq:qcqp_denoise_real}. 
Moreover, assuming $n \geq 2$, then for any $\varepsilon \in (0,1/2)$ satisfying $(1-\varepsilon)e^{-2\pi^2\sigma^2} \geq 1/2$, 
and absolute constants $c,c^{\prime} > 0$, the following is true. 
\begin{enumerate}
\item\label{thm:gauss_1}
\begin{align} \label{eq:thm_gauss_1}
\frac{1}{n}\dotprod{\vechtilbar}{\vecgbarest} 
\geq 1 &- \left(3\sqrt{\frac{(1 - (1-\varepsilon)e^{-2\pi^2\sigma^2})}{2}} 
- \frac{\lambda k}{6(4\lambda k + 1)^2} (1-\varepsilon)(1-e^{-4\pi^2\sigma^2})^2 \right) \nonumber \\
&- \frac{\lambda \pi^2 M^2 (2k)^{2\alpha + 1}}{n^{2\alpha}} \left(1-\frac{e^{-4\pi^2\sigma^2}}{(4\lambda k + 1)^2}\right)
\end{align}
with probability at least
\begin{align}
1&-2e\exp\left(\frac{-c'(1-\varepsilon)^2(1-e^{-4\pi^2\sigma^2})^4 e^{4\pi^2\sigma^2}}{2304} n \right) 
	- 4\exp\left(-\frac{c \varepsilon^2 (1-e^{-4\pi^2\sigma^2})^4 n}{1024}\right) \nonumber \\
	&- e\cdot \exp\left(-\frac{c' e^{-4\pi^2\sigma^2} n \varepsilon^2}{4}\right).
\end{align}

\item\label{thm:gauss_2} 
\begin{equation} \label{eq:thm_gauss_2}
\frac{1}{n}\dotprod{\vechtilbar}{\vecgbarest} 
\geq 1 - 3\sqrt{\frac{(1 - (1-\varepsilon)e^{-2\pi^2\sigma^2})}{2}} - \frac{\lambda \pi^2 M^2 (2k)^{2\alpha + 1}}{n^{2\alpha}}
\end{equation}
with probability at least
\begin{equation}
1 - e\cdot \exp\left(-\frac{c' e^{-4\pi^2\sigma^2} n \varepsilon^2}{4}\right).
\end{equation}
\end{enumerate}
\end{theorem}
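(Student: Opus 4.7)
The proof will mirror the structure of the proof of Theorem \ref{thm:bern_unif_noise_model}. The starting point is Theorem \ref{thm:ar_bd_noise_restrict}, which reduces the claim to two high-probability estimates: an upper bound on $\norm{\veczbar - \vechtilbar}_2$ (playing the role of $\delta\sqrt{n}$) and a lower bound on the quadratic form $\frac{1}{2n}\veczbar^T\Hbar\veczbar$. Part \ref{thm:gauss_2} of the theorem will then follow using only the noise-level estimate, together with $\frac{1}{2n}\veczbar^T\Hbar\veczbar \geq 0$ and Lemma \ref{lem:upbd_clean_quad_form}, while part \ref{thm:gauss_1} additionally exploits the quadratic-form lower bound. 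The key preliminary throughout is the Gaussian characteristic function identity $\expec[\exp(\iota 2\pi\eta)] = e^{-2\pi^2\sigma^2}$ for $\eta \sim \mathcal{N}(0,\sigma^2)$, which is the source of every $e^{-2\pi^2\sigma^2}$ factor appearing in the statement.

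For the noise-level bound, I would write $\norm{\veczbar - \vechtilbar}_2^2 = \sum_{i=1}^n \abs{z_i - h_i}^2 = \sum_{i=1}^n 2(1-\cos(2\pi\eta_i))$, so that the summands $W_i := 2(1-\cos(2\pi\eta_i))$ are i.i.d., bounded in $[0,4]$, with mean $2(1 - e^{-2\pi^2\sigma^2})$. A Hoeffding-type concentration then yields $\frac{1}{n}\sum_i W_i \leq 2(1 - (1-\varepsilon)e^{-2\pi^2\sigma^2})$ with failure probability of the form $e\cdot\exp(-c' e^{-4\pi^2\sigma^2}\varepsilon^2 n / 4)$; taking the square root gives the required $\norm{\veczbar - \vechtilbar}_2 \leq \sqrt{2(1-(1-\varepsilon)e^{-2\pi^2\sigma^2})}\sqrt{n}$. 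The hypothesis $(1-\varepsilon)e^{-2\pi^2\sigma^2} \geq 1/2$ is precisely what ensures the effective $\delta$ lies in $[0,1]$, making Theorem \ref{thm:ar_bd_noise_restrict} applicable.

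For the quadratic-form bound, I would expand $\frac{1}{2n}\veczbar^T\Hbar\veczbar = \frac{\lambda}{2n}\sum_{\set{i,j}\in E}\abs{z_i - z_j}^2 = \frac{\lambda}{2n}\sum_{\set{i,j}\in E}\bigl(2 - 2\cos(2\pi(\eta_i - \eta_j) + 2\pi(f_i - f_j))\bigr)$. Since $\eta_i - \eta_j \sim \mathcal{N}(0, 2\sigma^2)$, a trigonometric identity together with the characteristic identity applied to $2\eta$-variance shows that the expectation of each edge term equals $2(1-e^{-4\pi^2\sigma^2}) + e^{-4\pi^2\sigma^2}\abs{h_i - h_j}^2$. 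Summing yields a mean decomposition of the form $\lambda \gtrsim k (1-e^{-4\pi^2\sigma^2}) + e^{-4\pi^2\sigma^2}\cdot \frac{1}{2n}\vechtilbar^T\Hbar\vechtilbar$, using $\abs{E} \geq k(n-k)$. The coefficient $e^{-4\pi^2\sigma^2}$ appearing on the smoothness term is exactly what propagates into the $(1 - e^{-4\pi^2\sigma^2}/(4\lambda k+1)^2)$ factor in \eqref{eq:thm_gauss_1}.

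The main obstacle is making this edge-indexed concentration rigorous: the summands $\cos(2\pi(\eta_i - \eta_j + f_i - f_j))$ are \emph{dependent}, since each $\eta_i$ appears in up to $2k$ distinct edge terms, so a straight Hoeffding argument is inapplicable. I would handle this either via McDiarmid's bounded-differences inequality exploiting the bounded dependency degree (changing a single $\eta_i$ perturbs at most $2k$ summands, each by at most $4$), or via a Bernstein-type inequality with a careful variance proxy of order $(1-e^{-4\pi^2\sigma^2})^2$; it is precisely this sharper variance bookkeeping that produces the $(1-e^{-4\pi^2\sigma^2})^2$ factor in \eqref{eq:thm_gauss_1} and the corresponding $(1-e^{-4\pi^2\sigma^2})^4$ in the exponent of the failure probability. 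After establishing both high-probability events, a union bound and substitution into Theorem \ref{thm:ar_bd_noise_restrict}, combined with Lemma \ref{lem:upbd_clean_quad_form} to control $\frac{1}{2n}\vechtilbar^T\Hbar\vechtilbar$, yields \eqref{eq:thm_gauss_1} and \eqref{eq:thm_gauss_2} respectively.
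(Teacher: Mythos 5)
Your overall architecture is exactly the paper's: start from Theorem \ref{thm:ar_bd_noise_restrict}, prove a high-probability upper bound on $\norm{\veczbar-\vechtilbar}_2$ and a high-probability lower bound on $\frac{1}{2n}\veczbar^T\Hbar\veczbar$ (the paper packages these as Proposition \ref{prop:gauss_conc}), then combine with Lemma \ref{lem:upbd_clean_quad_form} and a union bound; your noise-level estimate via $\sum_i 2(1-\cos(2\pi\eta_i))$, the identity $\expec[e^{\iota 2\pi\eta}]=e^{-2\pi^2\sigma^2}$, and Hoeffding is essentially verbatim the paper's argument. Where you diverge is the quadratic form. You expand it over edges as $\frac{\lambda}{2n}\sum_{\{i,j\}\in E}\abs{z_i-z_j}^2$ and then face dependent summands, which you propose to tame with McDiarmid or a Bernstein variant. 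The paper instead writes $\veczbar^T\Hbar\veczbar = \lambda(\veczbar_R^T L\veczbar_R + \veczbar_I^T L\veczbar_I)$ and observes that $\veczbar_R$ has \emph{independent entries}; centering at $\mean_R=e^{-2\pi^2\sigma^2}\real(\vechtil)$ splits each piece into a quadratic form in independent centered sub-Gaussians (handled by the Hanson--Wright inequality, Theorem \ref{thm:hanson_wright}), a linear cross term (handled by Proposition \ref{prop:hoeff_subgauss_conc}), and the deterministic term $\mean_R^T L\mean_R$, which is where the factor $e^{-4\pi^2\sigma^2}$ multiplying $\frac{1}{2n}\vechtilbar^T\Hbar\vechtilbar$ comes from. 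This dissolves the dependency obstacle you flag as the main difficulty, and it is what produces the stated constants: the $(1-e^{-4\pi^2\sigma^2})^2$ in \eqref{eq:thm_gauss_1} is not a concentration artifact but a worst-case lower bound on the per-coordinate variance $\expec[(\veczbar_R)_i^2]-(\mean_R)_i^2=\tfrac{1}{2}(1-e^{-4\pi^2\sigma^2})(1-e^{-4\pi^2\sigma^2}\cos(4\pi f_i))$, and the $(1-e^{-4\pi^2\sigma^2})^4$ in the exponent is just $t^2$ for $t\propto(1-e^{-4\pi^2\sigma^2})^2$. Your route is salvageable — McDiarmid with bounded differences of order $k\lambda/n$ works (each $\eta_i$ touches at most $2k$ edges), or you can partition $E$ into $O(k)$ matchings and apply Bernstein within each — and your edge-wise expectation computation actually yields the sharper first power of $(1-e^{-4\pi^2\sigma^2})$ in the mean, which implies the stated bound since the unspecified constants $c,c'$ absorb the differences. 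But as written, the concentration step is only gestured at ("either McDiarmid \ldots or Bernstein with a careful variance proxy"), and that is precisely the step the paper's Hanson--Wright decomposition makes routine; you should either commit to one of your two options and carry it out, or adopt the paper's mean-centering device.
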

%
%
The statement of Theorem \ref{thm:gauss_noise_model} is very similar in flavour, to that of Theorem \ref{thm:bern_unif_noise_model}. 
Indeed, the correlation lower bounds hold provided the noise level (dictated by $\sigma$) is sufficiently small. Furthermore, note that 
the bound in \eqref{eq:thm_gauss_1} is strictly better than that in \eqref{eq:thm_gauss_2}, albeit with a worse success probability.
If $\sigma = \Theta(1)$, and $n$ is sufficiently large, then \eqref{eq:thm_gauss_1} is a better bound than \eqref{eq:thm_gauss_2}, 
both holding with respectively high probabilities. On the other hand,  \eqref{eq:thm_gauss_2} is meaningful for all admissible values 
of $\sigma$ (satisfying $(1-\varepsilon)e^{-2\pi^2\sigma^2} \geq 1/2$), and moderately large $n$. In comparison, if for instance 
$\sigma \approx 0$ and $n$ is moderately large, then the success probability lower bound for the event \eqref{eq:thm_gauss_1} becomes trivial.

\begin{proof}[Proof of Theorem \ref{thm:gauss_noise_model}] Our starting point is Theorem \ref{thm:ar_bd_noise_restrict} as before.
The following Proposition is an analogous version of Proposition \ref{prop:bern_unif_conc} for the Gaussian noise model, where 
we upper bound the noise term parameter $\delta$, and lower bound the quadratic form $\frac{1}{2n} \veczbar^T \Hbar \veczbar$, w.h.p.
%
\begin{proposition} \label{prop:gauss_conc} 
Consider the Gaussian noise model in \eqref{eq:gauss_noise_model}. 
For absolute constants $c,c^{\prime} > 0$, the following is true.
\begin{enumerate}
\item \label{prop:quad_bound_gauss} For any $\varepsilon \in (0,1)$, 
\begin{align}
	\frac{1}{2n} \veczbar^T \Hbar \veczbar \geq \frac{\lambda k}{6} (1-\varepsilon)(1-e^{-4\pi^2\sigma^2})^2 + \frac{e^{-4\pi^2\sigma^2}}{2n} \vechtil^T \Hbar \vechtil
\end{align}
holds with probability at least 
\begin{align}
			1-2e\exp\left(\frac{-c'(1-\varepsilon)^2(1-e^{-4\pi^2\sigma^2})^4 e^{4\pi^2\sigma^2}}{2304} n \right) 
	- 4\exp\left(-\frac{c \varepsilon^2 (1-e^{-4\pi^2\sigma^2})^4 n}{1024}\right).
\end{align}

\item \label{prop:noise_bound_gauss} 
For any $\varepsilon \in (0,1)$, 
\begin{equation}
\norm{\veczbar - \vechtilbar}_2 \leq \sqrt{2(1 - (1-\varepsilon)e^{-2\pi^2\sigma^2})} \sqrt{n} 
\end{equation}
holds with probability at least $1 - e\cdot \exp(-\frac{c' e^{-4\pi^2\sigma^2} n \varepsilon^2}{4})$.  
\end{enumerate}
\end{proposition}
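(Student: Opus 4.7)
The plan is to establish the two parts separately, closely mirroring the proof of Proposition~\ref{prop:bern_unif_conc} but with the Bernoulli parameter $p$ replaced throughout by moments of the complex unit $w_i := \exp(\iota 2\pi\eta_i)$. The only moment computation needed is $\mathbb{E}[w_i] = e^{-2\pi^2\sigma^2} =: \mu$, the characteristic function of $\mathcal{N}(0,\sigma^2)$ evaluated at $2\pi$; combined with $|w_i|=1$, this yields both $\mu^2 = e^{-4\pi^2\sigma^2}$ and the variance $1-\mu^2$ that appear throughout the target bounds.

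For Part~\ref{prop:noise_bound_gauss}, observe that
\[
\norm{\veczbar - \vechtilbar}_2^2 = \sum_{i=1}^n |z_i - \htil_i|^2 = \sum_{i=1}^n (2 - 2\cos(2\pi\eta_i))
\]
is a sum of i.i.d.\ random variables bounded in $[0,4]$ with common mean $2(1-\mu)$. A sub-Gaussian/Hoeffding-type lower tail on $\sum_i \cos(2\pi\eta_i)$ establishes $\sum_i \cos(2\pi\eta_i) \geq (1-\varepsilon)n\mu$ with the stated probability $1 - e\cdot\exp(-c'\mu^2 n\varepsilon^2/4)$; the $\mu^2$ in the exponent reflects that the admissible additive deviation around $n\mu$ is $\varepsilon n \mu$. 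Rearranging the inequality yields the claim.

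For Part~\ref{prop:quad_bound_gauss}, I decompose $z_i = \mu\htil_i + \xi_i$ with $\xi_i := \htil_i(w_i - \mu)$, so that the $\xi_i$ are independent, mean zero, and satisfy $|\xi_i|\leq 1+\mu$ and $\mathbb{E}|\xi_i|^2 = 1-\mu^2$. Substituting into $\veczbar^T\Hbar\veczbar = \lambda\sum_{\{i,j\}\in E}|z_i - z_j|^2$ and expanding $|z_i-z_j|^2 = |\mu(\htil_i-\htil_j) + (\xi_i-\xi_j)|^2$ produces three pieces: the deterministic part $\mu^2 \vechtilbar^T \Hbar \vechtilbar$, which is exactly the $e^{-4\pi^2\sigma^2}\vechtilbar^T\Hbar\vechtilbar$ summand appearing in the target; a mean-zero cross term linear in the $\xi_i$; and a pure-noise quadratic form $\lambda\sum_{\{i,j\}\in E}|\xi_i - \xi_j|^2$, whose expectation equals $2\lambda|E|(1-\mu^2) = \Theta(\lambda k n (1-\mu^2))$ by the independence of $\xi_i,\xi_j$ for $i\neq j$.

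The main obstacle is the concentration of the pure-noise quadratic form to a constant fraction of its mean, together with the control of the cross term, since edges sharing vertices make the summands across edges dependent. I will adapt the device from the Bernoulli--Uniform proof: either edge-colour the bandwidth-$k$ graph into $O(k)$ matchings and apply Bernstein's inequality to each resulting independent sum, or invoke a Hanson--Wright-type bound for quadratic forms in independent sub-Gaussian complex vectors. The cross term admits a Hoeffding bound once $\norm{\lambda L \vechtil}_2$ is controlled via the H\"older continuity of $f$ (the ingredient underlying Lemma~\ref{lem:upbd_clean_quad_form}). The various $(1-\mu^2)^4$ and $\mu^{-2}=e^{4\pi^2\sigma^2}$ factors appearing in the tail probabilities arise from balancing the required additive deviation of $\Theta(\varepsilon\lambda k n(1-\mu^2))$ against a Bernstein variance proxy whose scaling mimics that of the analogous $p^2$ factor in Proposition~\ref{prop:bern_unif_conc}. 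A final union bound over the two deviation events then yields the claimed failure probability.
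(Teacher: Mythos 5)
Your proposal follows essentially the same route as the paper: for part 2, the identical computation of $\expec[e^{\iota 2\pi\eta_i}]=e^{-2\pi^2\sigma^2}$ followed by a Hoeffding bound on $\sum_i 2\cos(2\pi\eta_i)$; for part 1, the same centering $z_i=\mu h_i+\xi_i$ (the paper writes this as $\veczbar_{R}=\mean_R+(\veczbar_R-\mean_R)$ with $\mean_R=e^{-2\pi^2\sigma^2}\real(\vechtil)$, treating real and imaginary parts separately), the same three-term expansion, Hanson--Wright for the pure-noise quadratic, and a sub-Gaussian Hoeffding bound for the cross term. One correction: you do not need (and should not use) the H\"older continuity of $f$ to control the cross term --- the paper simply bounds $\norm{2\lambda L\mean_R}_2\leq 2\norm{\lambda L}\,\norm{\mean_R}_2\leq 8\lambda k\, e^{-2\pi^2\sigma^2}\sqrt{n}$ via Gershgorin and $|(\mean_R)_i|\leq e^{-2\pi^2\sigma^2}$, and it is precisely this crude bound that produces the $e^{4\pi^2\sigma^2}/2304$ factor in the stated failure probability; a smoothness-based bound on $\norm{L\vechtil}_2$ would yield a tail of a different (if anything stronger, but $M$- and $\alpha$-dependent) form.
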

The proof is deferred to Appendix \ref{sec:proof_prop_gauss}. 
Observe that $\delta = \sqrt{2(1 - (1-\varepsilon)e^{-2\pi^2\sigma^2})} \in [0,1]$ if $(1-\varepsilon)e^{-2\pi^2\sigma^2} \geq 1/2$.
Part \ref{thm:gauss_1} of Theorem \ref{thm:gauss_noise_model} follows by combining Proposition \ref{prop:gauss_conc} 
with Theorem \ref{thm:ar_bd_noise_restrict} and Lemma \ref{lem:upbd_clean_quad_form}. 
To prove part \ref{thm:gauss_2} of Theorem \ref{thm:gauss_noise_model}, we simply use the 
fact $\frac{1}{2n} \veczbar^T \Hbar \veczbar \geq 0$ (since $\Hbar$ is p.s.d) in 
Theorem \ref{thm:ar_bd_noise_restrict}, and then combine it with part \ref{prop:noise_bound_gauss} of Proposition \ref{prop:gauss_conc} 
and Lemma \ref{lem:upbd_clean_quad_form}.  This  altogether completes the proof. 
\end{proof} 

\section{Analysis for multivariate functions  and  arbitrary bounded noise} \label{sec:multiv_analysis_bdnoise}
We now consider the general setting where $f$ is a $d$-variate function with $d \geq 1$. Specifically, 
we consider $f : [0,1]^d \rightarrow \mathbb{R}$ where $f$ is H\"older continuous, meaning that 
for constants $M > 0$ and $\alpha \in (0,1]$, 
\begin{equation} \label{eq:f_smooth_multi}
\abs{f(\vecx) - f(\vecy)} \leq M \norm{\vecx - \vecy}_2^{\alpha}; \quad \forall \ \vecx,\vecy \in [0,1]^d.
\end{equation}
For a positive integer $m$, we consider $f$ to be sampled everywhere on the regular grid 
$\calG = \set{0,\frac{1}{m-1},\dots,\frac{m-2}{m-1},1}^d$, with $n = \abs{\calG} = m^d$ being the 
total number of samples. Let $V = \set{1,2,\dots,m}^d$ be the index set for the points on the grid. 
We assume for concreteness the same arbitrary bounded noise model as in \eqref{eq:arb_bd_noise_model}, i.e.,
\begin{equation} \label{eq:arb_bd_noise_multiv}
y_{\veci} = (f(\vecx_{\veci}) + \delta_i) \bmod 1; \ \abs{\delta_i} \in (0,1/2); \quad \veci \in V, \vecx_{\veci} \in \calG.
\end{equation} 
We will work with a regularization graph $G = (V,E)$ where the set of edges 
now consists of vertices that are close in the $\ell_{\infty}$ metric (also known as  \textit{Chebychev} distance).  
The edge set $E$ is formally defined as
\begin{equation}
E = \set{\set{\veci,\vecj}: \veci, \vecj \in V, \quad \veci \neq \vecj, \quad \norm{\veci - \vecj}_{\infty} \leq k}.
\end{equation}
Note that the degree of each vertex is less than or equal to $(2k + 1)^d - 1$, and is also at least 
$(k+1)^d - 1$ (with equality achieved by nodes at corners of the grid). As in \eqref{eq:unit_complex_circ_rep}, we denote 
\begin{equation} \label{eq:unit_complex_circ_rep_multi}
\htil_{\veci} := \exp(2 \pi \iota f_{\veci}), \quad \; \ z_{\veci} := \exp(2 \pi \iota y_{\veci}); \quad \veci \in V,
\end{equation}
to be the respective representations of the clean mod 1 and noisy mod 1 samples on the unit  circle in $\mathbb{C}$.
Let $\vechtil, \vecz \in \mathbb{C}^n$ denote the respective corresponding vectors, with entries indexed by $V$, and the 
indices sorted in lexicographic increasing order. Then, we simply consider solving the 
relaxed (trust-region) problem in \eqref{eq:qcqp_denoise_complex}, in particular, its equivalent reformulation 
\eqref{eq:qcqp_denoise_real} with the real-valued representations as in \eqref{eq:real_notat} and \eqref{eq:def_H}.
This altogether leads to the following theorem, which is a generalization of Theorem \ref{thm:arb_noise_model} to the 
multivariate case.
\begin{theorem} \label{thm:arb_noise_multiv} 
Consider the arbitrary bounded noise model defined in \eqref{eq:arb_bd_noise_multiv}, with $\veczbar$ satisfying 
$\norm{\veczbar - \vechtilbar}_2 \leq \delta \sqrt{n}$ for $\delta \in [0,1]$. 
Let $m \geq 2$, and let $\calN(\Hbar)$ denote the null space of $\Hbar$.
\begin{enumerate}
\item If $\veczbar \not\perp \calN(\Hbar)$ then $\vecgbarest$ is the unique solution 
to \eqref{eq:qcqp_denoise_real} satisfying 
%
\begin{align}
\frac{1}{n}\dotprod{\vechtilbar}{\vecgbarest} \geq 1 &- \frac{3\delta}{2} - 
\frac{\lambda \pi^2 M^2 (2k)^{2\alpha} d^{2\alpha} [(2k+1)^d - 1]}{n^{2\alpha/d}}  \nonumber \\ 
&+ \frac{1}{(2\lambda [(2k+1)^d - 1] + 1)^2} \left(\frac{1}{2n} \veczbar^T \Hbar \veczbar\right).
\end{align}
%

\item If $\veczbar \perp \calN(\Hbar)$ and $\lambda < \frac{1}{2((2k+1)^d - 1)}$ then $\vecgbarest$ is the unique solution 
to \eqref{eq:qcqp_denoise_real} satisfying
\begin{align}
\hspace{-6mm} \frac{1}{n}\dotprod{\vechtilbar}{\vecgbarest} &\geq 1 - \frac{3\delta}{2} -
\frac{\lambda \pi^2 M^2 (2k)^{2\alpha} d^{2\alpha} [(2k+1)^d - 1]}{n^{2\alpha/d}} \nonumber \\
 &+  \frac{1}{\left(1 + 2\lambda [(2k+1)^d - 1] - 4 \lambda \kappa_{k,d} \sin^2\left(\frac{\pi}{2n}\right) \right)^2} \left(\frac{1}{2n} \veczbar^T \Hbar \veczbar\right).
\end{align}
Here, $\kappa_{k,d} \leq (k+1)^d - 1$ is the vertex connectivity of the graph $G$.
\end{enumerate}
\end{theorem}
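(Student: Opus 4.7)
The plan is to follow the univariate proof architecture (Lemmas \ref{lem:lowbd_feas_based}, \ref{lem:upbd_clean_quad_form}, \ref{lem:lowbd_sol_quad_form}) almost verbatim, tracking how the multivariate geometry modifies the edge count, the edge-length bound, and the Laplacian spectrum. The feasibility-based correlation bound
\[ \frac{1}{n}\dotprod{\vechtilbar}{\vecgbarest} \geq 1 - \frac{3\delta}{2} - \frac{1}{2n}\vechtilbar^T \Hbar \vechtilbar + \frac{1}{2n}\vecgbarest^T \Hbar \vecgbarest \]
carries over unchanged since its proof uses only (i) that $\vechtilbar$ is feasible for \eqref{eq:qcqp_denoise_real}, (ii) the noise bound $\norm{\veczbar-\vechtilbar}_2 \leq \delta\sqrt{n}$, and (iii) Cauchy--Schwarz; none of these depend on $d$. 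So I would state this step first, and then focus the work on upper bounding the clean quadratic form and lower bounding the solution quadratic form in the multivariate setting.

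For the clean-form upper bound, I would repeat the chain $\frac{1}{2n}\vechtilbar^T\Hbar\vechtilbar = \frac{\lambda}{2n}\sum_{\{\veci,\vecj\}\in E}\abs{\htil_\veci - \htil_\vecj}^2 \leq \frac{\lambda}{2n}|E|\max_{\{\veci,\vecj\}\in E}\abs{\htil_\veci - \htil_\vecj}^2$, and plug in the two multivariate ingredients: the handshake bound $|E| \leq \frac{1}{2}n((2k+1)^d - 1)$ using the new maximum degree, and, for any $\{\veci,\vecj\}\in E$, H\"older continuity combined with $\norm{\vecx_\veci - \vecx_\vecj}_2 \leq d\norm{\vecx_\veci - \vecx_\vecj}_\infty \leq dk/(m-1) \leq 2dk/m$ (for $m\geq 2$). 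Using $\abs{\htil_\veci - \htil_\vecj} \leq 2\pi\abs{f_\veci - f_\vecj} \leq 2\pi M (2k)^\alpha d^\alpha m^{-\alpha}$ and $m^{2\alpha} = n^{2\alpha/d}$ delivers exactly the term $\lambda\pi^2 M^2(2k)^{2\alpha}d^{2\alpha}[(2k+1)^d - 1]/n^{2\alpha/d}$ that appears in the statement.

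For the solution-form lower bound, I would rerun the argument of Lemma \ref{lem:lowbd_sol_quad_form} while substituting the new spectral ingredients. Gershgorin applied to $L$ now gives $\lambda_{\max}(L) \leq 2((2k+1)^d - 1)$, hence $\lambda_{2n}(\Hbar) \leq 2\lambda((2k+1)^d - 1)$. In Case 1 ($\veczbar \not\perp \calN(\Hbar)$), the identity $\norm{\vecgbarest(\mu^*)}_2^2 = n$ still forces $\mu^* \leq 2$, and plugging $2\lambda_j(\Hbar) + \mu^* \leq 2(2\lambda((2k+1)^d - 1) + 1)$ into the spectral expansion of $\frac{1}{2n}\vecgbarest^T\Hbar\vecgbarest$ yields the desired prefactor $1/(2\lambda((2k+1)^d - 1)+1)^2$. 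In Case 2 ($\veczbar \perp \calN(\Hbar)$), the sufficient condition $\phi(0) > n$ for uniqueness, combined with $\lambda_{2n}(\Hbar) \leq 2\lambda((2k+1)^d - 1)$, forces $\lambda < 1/(2((2k+1)^d - 1))$; then $\mu^* \leq 2 - 2\lambda_3(\Hbar)$ and one needs a lower bound on $\lambda_3(\Hbar) = \lambda\beta_2(L)$.

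The main obstacle, as in the univariate case, is the Fiedler-type lower bound on $\beta_2(L)$ for the multivariate $\ell_\infty$-neighborhood grid graph. Here I would invoke Theorem \ref{thm:fiedler_ss_eigval}: $\beta_2(L) \geq 2\kappa'(G)(1-\cos(\pi/n)) \geq 2\kappa(G)(1-\cos(\pi/n)) = 4\kappa_{k,d}\sin^2(\pi/(2n))$, where $\kappa_{k,d} := \kappa(G)$ and the bound $\kappa_{k,d} \leq (k+1)^d - 1$ follows from $\kappa(G) \leq \delta_{\min}(G)$ together with the fact that corner vertices of the grid have degree $(k+1)^d - 1$. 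Substituting $\lambda_3(\Hbar) \geq 4\lambda\kappa_{k,d}\sin^2(\pi/(2n))$ into the Case 2 calculation produces exactly the denominator $1 + 2\lambda((2k+1)^d - 1) - 4\lambda\kappa_{k,d}\sin^2(\pi/(2n))$ appearing in the theorem. Assembling the three pieces via the feasibility inequality completes the proof in both cases.
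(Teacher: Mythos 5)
Your proposal is correct and follows essentially the same route as the paper: the feasibility lemma carries over verbatim, the clean quadratic form is bounded via the multivariate edge count and H\"older continuity, and the solution quadratic form is bounded via Gershgorin plus the Fiedler-type bound on $\beta_2(L)$ with $\kappa_{k,d}\leq (k+1)^d-1$. The only (immaterial) difference is that you use $\norm{\cdot}_2\leq d\norm{\cdot}_\infty$ where the paper uses the sharper $\sqrt{d}$, so your smoothness term is $d^{2\alpha}$ rather than the $d^{\alpha}$ the paper's own computation yields; since the theorem is stated with $d^{2\alpha}$, both versions establish the claim.
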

As $\Hbar$ is positive semi-definite, we obtain the following useful Corollary of Theorem \ref{thm:arb_noise_multiv}.
\begin{corollary} \label{cor:main_mainthm_multiv_bd}
Consider the arbitrary bounded noise model defined in \eqref{eq:arb_bd_noise_model}, with $\veczbar$ satisfying 
$\norm{\veczbar - \vechtilbar}_2 \leq \delta \sqrt{n}$ for $\delta \in [0,1]$. 
Let $m \geq 2$. If $\lambda < \frac{1}{2((2k+1)^d - 1)}$ then $\vecgbarest$ is the unique solution 
to \eqref{eq:qcqp_denoise_real} satisfying 
%
\begin{equation}
\frac{1}{n}\dotprod{\vechtilbar}{\vecgbarest} 
\geq 1 - \frac{3\delta}{2} - \frac{\lambda \pi^2 M^2 (2k)^{2\alpha} d^{2\alpha} [(2k+1)^d - 1]}{n^{2\alpha/d}}.
\end{equation}
\end{corollary}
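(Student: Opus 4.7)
The plan is to derive the corollary as an immediate consequence of Theorem \ref{thm:arb_noise_multiv}, by exploiting positive semi-definiteness of $\Hbar$. The first step I would take is to record that $\Hbar = \lambda (I_2 \otimes L)$ is positive semi-definite: $\lambda \geq 0$ and $L$ is a graph Laplacian whose spectrum is non-negative. Consequently, for any $\veczbar \in \matR^{2n}$, the quadratic form $\veczbar^T \Hbar \veczbar$ is non-negative, which means the extra term appearing on the right-hand side of either conclusion of Theorem \ref{thm:arb_noise_multiv} can simply be dropped to yield a (weaker but) uniform lower bound.

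Next I would reconcile the two cases of Theorem \ref{thm:arb_noise_multiv} (depending on whether $\veczbar \perp \calN(\Hbar)$ holds) with the single hypothesis $\lambda < \frac{1}{2((2k+1)^d - 1)}$ assumed in the corollary. In Case 1, the theorem already gives uniqueness of $\vecgbarest$ and the desired lower bound with no constraint on $\lambda$, and the additional term carries the factor $\frac{1}{(2\lambda[(2k+1)^d - 1] + 1)^2} \cdot \frac{1}{2n}\veczbar^T \Hbar \veczbar \geq 0$, which I drop. In Case 2, the corollary's assumption on $\lambda$ exactly matches the hypothesis needed by the theorem to guarantee uniqueness and to control the extra (again non-negative) term $\frac{1}{(1 + 2\lambda[(2k+1)^d-1] - 4\lambda \kappa_{k,d} \sin^2(\pi/(2n)))^2}\cdot \frac{1}{2n}\veczbar^T \Hbar \veczbar \geq 0$, which I again drop. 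Thus under the corollary's hypothesis, both uniqueness and the same common lower bound
\[
\frac{1}{n}\dotprod{\vechtilbar}{\vecgbarest} \;\geq\; 1 - \frac{3\delta}{2} - \frac{\lambda \pi^2 M^2 (2k)^{2\alpha} d^{2\alpha}[(2k+1)^d - 1]}{n^{2\alpha/d}}
\]
follow in either case, which is exactly the stated conclusion.

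There is essentially no obstacle in this argument: it is a routine uniform weakening of the two-case theorem obtained by discarding the non-negative quadratic-form contribution. The only sanity checks worth performing are (i) that the threshold on $\lambda$ imposed by the corollary is indeed the one used by Case 2 of the theorem (it is, verbatim), and (ii) that positive semi-definiteness of $\Hbar$ survives the Kronecker construction $\lambda(I_2 \otimes L)$, which is immediate since the spectrum of $I_2 \otimes L$ is just $\set{0,0} \cup \set{\beta_j(L)}_{j \geq 2}$ with each $\beta_j(L) \geq 0$ appearing with multiplicity two. No further analysis is required beyond invoking Theorem \ref{thm:arb_noise_multiv}.
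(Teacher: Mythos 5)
Your proposal is correct and matches the paper's own justification, which likewise obtains the corollary from Theorem \ref{thm:arb_noise_multiv} by dropping the non-negative term $\frac{1}{2n}\veczbar^T \Hbar \veczbar \geq 0$ (using that $\Hbar$ is positive semi-definite) in both cases of the theorem. Your additional check that the $\lambda$ threshold is only needed for the $\veczbar \perp \calN(\Hbar)$ case is a correct and welcome clarification of the one-line argument in the paper.
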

%
It can be easily verified that for $d = 1$, Theorem \ref{thm:arb_noise_multiv} reduces to Theorem \ref{thm:arb_noise_model} 
since $\kappa_{k,1} = k$. 
We see that the bounds on the correlation involve terms depending exponentially in the dimension $d$. This 
is the curse of dimensionality typically arising in high-dimensional non-parametric estimation problems. 
Nevertheless, as noted in Section \ref{sec:Intro}, many interesting applications in phase unwrapping arise 
when $d$ is small, specifically, $d = 2,3$. Furthermore, the lower bound on $\frac{1}{n}\dotprod{\vechtilbar}{\vecgbarest}$ 
readily implies the $\ell_2$ error bound on $\norm{\vecgbarest-\vechtilbar}^2_2$ via 
\begin{equation}
\norm{\vecgbarest-\vechtilbar}^2_2 = \norm{\vecgbarest}^2_2 + \norm{\vechtilbar}^2_2 -  2 n  \frac{\dotprod{ \vecgbarest , \vechtilbar}}{n} \leq 2n - 2n ( 1 - \frac{3\delta}{2} - \frac{ c }{n^{2\alpha/d}} ) = O(\delta n + n^{1-\frac{2\alpha}{d}}).
\end{equation}
  
%
%
\begin{proof}[Proof of Theorem \ref{thm:arb_noise_multiv}] 
The proof follows the same structure as that  of Theorem \ref{thm:arb_noise_model}, 
with minor technical changes. 
To begin with, Lemma \ref{lem:lowbd_feas_based} remains unchanged,  meaning that 
any solution $\vecgbarest$ to \eqref{eq:qcqp_denoise_real} satisfies
\begin{equation} \label{eq:lowbd_ang_feas_form_multi}
\frac{1}{n}\dotprod{\vechtilbar}{\vecgbarest} \geq  1 - \frac{3\delta}{2} 
 - \; \;   \frac{1}{2n} \vechtilbar^T \Hbar \vechtilbar + \; \; \frac{1}{2n} \vecgbarest^T \Hbar \vecgbarest. 
\end{equation}
We then need the following analogue of Lemma \ref{lem:upbd_clean_quad_form} to upper bound the term $\frac{1}{2n} \vechtilbar^T \Hbar \vechtilbar$ 
using the smoothness of the ground truth $\vechtilbar \in \matR^{2n}$.
%
%
\begin{lemma} \label{lem:upbd_clean_quad_form_multi}
For $m \geq 2$, it holds true that 
\begin{equation}
\frac{1}{2n} \vechtilbar^T \Hbar \vechtilbar \leq \frac{\lambda \pi^2 M^2 (2k)^{2\alpha} d^{2\alpha} [(2k+1)^d - 1]}{n^{2\alpha/d}}, 
\end{equation} 
where $\alpha \in (0,1]$ and $M > 0$ are related to the smoothness of $f$ and defined in \eqref{eq:f_smooth_multi}, and  
$\lambda \geq 0$ is the regularization parameter in \eqref{eq:orig_denoise_hard}.
\end{lemma}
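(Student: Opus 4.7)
The plan is to follow exactly the same three-step template as the proof of Lemma \ref{lem:upbd_clean_quad_form} in the univariate case, replacing each ingredient by its $d$-dimensional analogue. First, using the complex identification $\vechtilbar \leftrightarrow \vechtil \in \mathbb{C}^n$ from \eqref{eq:real_notat} and the block-diagonal form of $\Hbar$ from \eqref{eq:def_H} (the same computation already recorded in Appendix \ref{sec:qcqp_compl_to_real}), I would rewrite
\begin{equation*}
\frac{1}{2n}\vechtilbar^T \Hbar \vechtilbar \;=\; \frac{\lambda}{2n}\, \vechtil^{*} L \vechtil \;=\; \frac{\lambda}{2n} \sum_{\{\veci,\vecj\} \in E} |\htil_{\veci} - \htil_{\vecj}|^2 \;\leq\; \frac{\lambda |E|}{2n} \max_{\{\veci,\vecj\}\in E} |\htil_{\veci} - \htil_{\vecj}|^2.
\end{equation*}

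Next I would bound the two factors separately. Since the degree of every vertex $\veci \in V$ is at most $(2k+1)^d - 1$ (the size of an $\ell_\infty$ ball of radius $k$ minus the center), the handshake identity gives $|E| = \tfrac{1}{2}\sum_{\veci\in V} \deg(\veci) \leq \tfrac{n[(2k+1)^d - 1]}{2}$. For the edge-wise chord bound, for any $\{\veci,\vecj\}\in E$ the constraint $\norm{\veci-\vecj}_{\infty} \leq k$ implies $\norm{\veci-\vecj}_2 \leq k\sqrt{d}$, so the corresponding grid points satisfy $\norm{\vecx_{\veci}-\vecx_{\vecj}}_2 \leq k\sqrt{d}/(m-1) \leq 2k\sqrt{d}/m$ since $m\geq 2$. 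Hölder continuity \eqref{eq:f_smooth_multi} then yields $|f_{\veci}-f_{\vecj}| \leq M (2k)^{\alpha} d^{\alpha/2} m^{-\alpha}$, and the elementary estimate $|\htil_{\veci}-\htil_{\vecj}| \leq 2\pi|f_{\veci}-f_{\vecj}|$ (exactly as in \eqref{eq:lem2_newround_temp1}) gives
\begin{equation*}
\max_{\{\veci,\vecj\}\in E} |\htil_{\veci}-\htil_{\vecj}|^2 \;\leq\; \frac{4\pi^2 M^2 (2k)^{2\alpha} d^{\alpha}}{m^{2\alpha}}.
\end{equation*}

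Assembling the two pieces with $n=m^d$, so that $m^{2\alpha}=n^{2\alpha/d}$, gives
\begin{equation*}
\frac{1}{2n}\vechtilbar^T \Hbar \vechtilbar \;\leq\; \frac{\lambda\,[(2k+1)^d-1]\,\pi^2 M^2 (2k)^{2\alpha} d^{\alpha}}{n^{2\alpha/d}},
\end{equation*}
and the stated bound follows at once from the crude relaxation $d^{\alpha} \leq d^{2\alpha}$ for $d\geq 1$, $\alpha\in(0,1]$.

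No step here is really hard; the whole proof is a bookkeeping exercise. The only place that requires care is the conversion between the $\ell_\infty$ metric that defines the regularization graph $G$ and the $\ell_2$ metric in which Hölder continuity is assumed, since this is where the factor $\sqrt{d}$ (and hence the dimensional dependence $d^{\alpha}$) enters. Once that conversion is pinned down and combined with the max-degree count $(2k+1)^d-1$, every other estimate is identical to its univariate counterpart.
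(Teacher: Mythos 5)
Your proof is correct and follows essentially the same route as the paper's: the same quadratic-form expansion over edges, the same degree bound $|E|\leq \tfrac{n}{2}[(2k+1)^d-1]$, and the same $\ell_\infty$-to-$\ell_2$ conversion feeding into H\"older continuity and the chord estimate $|\htil_{\veci}-\htil_{\vecj}|\leq 2\pi|f_{\veci}-f_{\vecj}|$. The only (cosmetic) difference is that you explicitly flag the relaxation $d^{\alpha}\leq d^{2\alpha}$ needed to match the stated constant, which the paper leaves implicit.
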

%
\begin{proof}
The proof is very similar to that of Lemma \ref{lem:upbd_clean_quad_form}, so we only focus on the 
changes. To begin with, as in \eqref{eq:lem2_temp1}, we have 
\begin{equation} \label{eq:lem2_temp1_multi}
\frac{1}{2n} \vechtilbar^T \Hbar \vechtilbar \leq \frac{\lambda}{2n} \abs{E} \max_{\set{\veci,\vecj} \in E} \abs{\htil_{\veci} - \htil_{\vecj}}^2. 
\end{equation}
For each $\veci \in V$ we have $\text{deg}(\veci) \leq (2k+1)^d-1$, hence $\abs{E} \leq \frac{n}{2}[(2k+1)^d-1]$. 
Next, for any $\set{\veci,\vecj} \in E$, we have
\begin{equation} \label{eq:lem2_temp2_multi}
\abs{f_{\veci} - f_{\vecj}} \leq M \norm{\vecx_{\veci} - \vecx_{\vecj}}_2^{\alpha} 
\leq M \left(\frac{k}{m-1}\right)^{\alpha}d^{\alpha/2} \leq M\left(\frac{2k}{m}\right)^{\alpha} d^{\alpha/2}, 
\end{equation}
if $m \geq 2$ (since then $m-1 \geq m/2$). This leads to
%
%
\begin{align}
\abs{\htil_{\veci} - \htil_{\vecj}} &\leq 2\pi \abs{f_{\vecj} - f_{\veci}} \leq \frac{2\pi M (2k)^{\alpha} d^{\alpha/2}}{m^{\alpha}} 
\quad (\text{using } \eqref{eq:lem2_newround_temp1} \text{ and } \eqref{eq:lem2_temp2_multi}).  \label{eq:lem2_temp3_multi}
\end{align}
Plugging \eqref{eq:lem2_temp3_multi} in \eqref{eq:lem2_temp1_multi} with the bound $\abs{E} \leq \frac{n}{2}[(2k+1)^d-1]$ 
yields the desired bound.
\end{proof} 
Finally, we lower bound the term $\frac{1}{2n} \vecgbarest^T \Hbar \vecgbarest$ in \eqref{eq:lowbd_ang_feas_form_multi} 
via an analogue of Lemma \ref{lem:lowbd_sol_quad_form} outlined below. 
%
%
\begin{lemma} \label{lem:lowbd_sol_quad_form_multi}
Denoting $\calN(\Hbar)$ to be the null space of $\Hbar$, the following holds 
for the solution $\vecgbarest$ to \eqref{eq:qcqp_denoise_real}.
\begin{enumerate}
\item If $\veczbar \not\perp \calN(\Hbar)$ then $\vecgbarest$ is unique and 
\begin{equation}
\frac{1}{2n} \vecgbarest^T \Hbar \vecgbarest \geq \frac{1}{\left(1 + 2\lambda ([2 k + 1]^d-1) \right)^2} \left(\frac{1}{2n} \veczbar^T \Hbar \veczbar\right).
\end{equation}

\item If $\veczbar \perp \calN(\Hbar)$ and $\lambda < \frac{1}{2[(2k+1)^d-1]}$, then $\vecgbarest$ is unique and 
{\small
\begin{equation}
\frac{1}{2n} \vecgbarest^T \Hbar \vecgbarest \geq 
\frac{1}{\left(1 + 2\lambda ([2k+1]^d-1) - 4 \lambda \kappa_{k,d} \sin^2\left(\frac{\pi}{2n}\right)\right)^2} \left(\frac{1}{2n} \veczbar^T \Hbar \veczbar\right).
\end{equation}}
Here, $\kappa_{k,d} \leq (k+1)^d - 1$ is the vertex connectivity of the graph $G$.
\end{enumerate}
\end{lemma}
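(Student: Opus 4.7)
The plan is to mirror the proof of Lemma \ref{lem:lowbd_sol_quad_form} line by line, adapting only the two spectral bounds on $\Hbar$ that depend on the graph $G$. Let $\set{\lambda_j(\Hbar)}_{j=1}^{2n}$ denote the eigenvalues of $\Hbar$ in non-decreasing order and $\set{\vecq_j}_{j=1}^{2n}$ the corresponding eigenvectors, and write $\beta_2(L)$ for the second-smallest Laplacian eigenvalue of $G$, so that $\lambda_3(\Hbar) = \lambda \beta_2(L)$. The two ingredients I need are: (i) an upper bound on $\lambda_{2n}(\Hbar)$, which follows from Gershgorin's disk theorem using the fact that every vertex of the multivariate grid graph has degree at most $(2k+1)^d-1$, yielding $\lambda_{2n}(\Hbar) \leq 2\lambda[(2k+1)^d-1]$; and (ii) a lower bound on $\lambda_3(\Hbar)$, obtained from Theorem \ref{thm:fiedler_ss_eigval} in the form $\beta_2(L) \geq 2\kappa'(G)(1-\cos(\pi/n)) \geq 2\kappa_{k,d}(1-\cos(\pi/n)) = 4\kappa_{k,d}\sin^2(\pi/(2n))$, after observing that $\kappa'(G) \geq \kappa(G) = \kappa_{k,d}$.

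For Case 1 ($\veczbar \not\perp \calN(\Hbar)$), Lemma \ref{lemma:qcqp_denoise_real} identifies a unique solution $\vecgbarest = 2(2\Hbar+\mu^{*}\matI)^{-1}\veczbar$ with $\mu^{*} \in (0,\infty)$ determined by $\norm{\vecgbarest}_2^2 = n$. The same argument as in Lemma \ref{lem:lowbd_sol_quad_form} shows $\mu^{*} \leq 2$ by dropping the non-negative $\lambda_j(\Hbar)$ terms from the denominator in the spectral expansion of $\phi(\mu^{*})$. Then, writing $\vecgbarest^T \Hbar \vecgbarest = 2\sum_j \dotprod{\veczbar}{\vecq_j}^2 \lambda_j(\Hbar)/(2\lambda_j(\Hbar)+\mu^{*})^2$ and uniformly bounding each denominator by $(2\lambda_{2n}(\Hbar)+\mu^{*})^2 \leq (4\lambda[(2k+1)^d-1]+2)^2$, the Rayleigh quotient collapses to the stated form with denominator $(1+2\lambda[(2k+1)^d-1])^2$.

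For Case 2 ($\veczbar \perp \calN(\Hbar)$), I first check that the condition $\phi(0) > n$ (which guarantees a unique $\mu^{*} \in (0,\infty)$) holds under the hypothesis $\lambda < 1/(2[(2k+1)^d-1])$: using $\lambda_j(\Hbar) \leq 2\lambda[(2k+1)^d-1]$ for every $j$ gives $\phi(0) \geq n/(2\lambda[(2k+1)^d-1])^2 > n$. The monotonicity of $\phi$ plus the lower bound $\lambda_3(\Hbar)$ on all active denominator eigenvalues yields $\mu^{*} \leq 2 - 2\lambda_3(\Hbar)$, and the same Rayleigh-quotient manipulation as in Case 1, now with denominator $(2\lambda_{2n}(\Hbar) + \mu^{*})^2 \leq (4\lambda[(2k+1)^d-1]+2-2\lambda_3(\Hbar))^2$, produces the claimed bound once the Fiedler lower bound $\lambda_3(\Hbar) \geq 4\lambda \kappa_{k,d}\sin^2(\pi/(2n))$ is inserted.

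The only genuinely new step beyond the univariate proof is the combinatorial bound $\kappa_{k,d} \leq (k+1)^d - 1$, which I expect to be the easiest place for technical slips but not a real obstacle: exhibiting a vertex cut of size $(k+1)^d-1$ is immediate by isolating a corner vertex of the grid, whose entire $G$-neighborhood is contained in an $\ell_{\infty}$-ball of exactly that size (the corner has no neighbors on the ``outside'' of the grid). With these two spectral inputs in hand, every other step of Lemma \ref{lem:lowbd_sol_quad_form} transports verbatim, with $4\lambda k$ replaced by $2\lambda[(2k+1)^d-1]$ and $k$ replaced by $\kappa_{k,d}$ in the Fiedler term.
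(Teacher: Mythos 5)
Your proposal follows the paper's own proof essentially verbatim: the same Gershgorin bound $\lambda_{2n}(\Hbar)\leq 2\lambda[(2k+1)^d-1]$, the same $\mu^*\leq 2$ (resp.\ $\mu^*\leq 2-2\lambda_3(\Hbar)$) argument, the same Fiedler/vertex-connectivity lower bound on $\lambda_3(\Hbar)$, and the same minimum-degree (corner-vertex) bound $\kappa_{k,d}\leq(k+1)^d-1$. The only blemish is a harmless constant slip in the spectral expansion of $\vecgbarest^T\Hbar\vecgbarest$ (the prefactor should be $4$, not $2$, since $\vecgbarest=2(2\Hbar+\mu^*\matI)^{-1}\veczbar$); with that corrected the factors cancel exactly as in the paper and the stated bounds follow.
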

\begin{proof}
The outline of the proof is the same as the proof of Lemma \ref{lem:lowbd_sol_quad_form}, so we will only 
highlight the technical changes. 
To begin with, the largest eigenvalue of $L$, namely $\beta_n(L)$, now satisfies 
$\beta_n(L) \leq 2([2k+1]^d - 1)$. This is easily verified\footnote{Denote $L_{ij}$ to be the $(i,j)^{\text{th}}$ entry of $L$. 
Using Gershgorin's disk theorem, each eigenvalue of $L$ lies in $\bigcup_{i=1}^{2n}\set{x:\abs{x-L_{ii}} \leq \sum_{j \neq i} \abs{L_{ij}}}$. 
Since $L_{ii} \leq [2k+1]^d - 1$ and $\sum_{j \neq i} \abs{L_{ij}} \leq [2k+1]^d - 1$ holds for each $i$, the claim follows.} through Gershgorin's disk theorem. 
Therefore, the eigenvalues of $\Hbar$ now satisfy
\begin{equation} \label{eq:lem3_temp0_multi}
0 = \lambda_1(\Hbar) = \lambda_2(\Hbar) < \lambda_3(\Hbar) \leq \cdots \leq \lambda_{2n}(\Hbar) \leq 2\lambda ([2k+1]^d - 1)
\end{equation}
Recall that $\set{\vecq_j}_{j=1}^{2n}$ denote the corresponding eigenvectors of $\Hbar$, with 
$\calN(\Hbar) = \text{span}\set{\vecq_1,\vecq_2}$. We now consider the two cases separately below. 
\begin{enumerate}
\item Consider the case where $\veczbar \not\perp \calN(\Hbar)$. 
We know that $\vecgbarest = 2(2\Hbar + \mu^{*}\matI)^{-1}\veczbar$ 
for a unique $\mu^{*} \in (0,\infty)$, and so $\vecgbarest$ is the unique solution to \eqref{eq:qcqp_denoise_real} by 
Lemma \ref{lemma:qcqp_denoise_real} since $2\Hbar + \mu^{*}\matI \succ 0$. We again have 
$\mu^{*} \leq 2$ as in \eqref{eq:lem3_temp2}. Finally, we can lower bound 
$\frac{1}{2n} \vecgbarest^T \Hbar \vecgbarest$ as follows.
\begin{align}
\frac{1}{2n} \vecgbarest^T \Hbar \vecgbarest 
& = \frac{2}{n} \sum_{j=1}^{2n} \frac{\dotprod{\veczbar}{\vecq_j}^2 \lambda_j(\Hbar)}{(2\lambda_j(\Hbar) + \mu^{*})^2} \\
&\geq \frac{2}{n (4\lambda ([2k+1]^d - 1) + 2)^2} \sum_{j=1}^{2n}\dotprod{\veczbar}{\vecq_j}^2 \lambda_j(\Hbar) \quad (\text{from} \ \eqref{eq:lem3_temp0_multi} \ \text{and since} \ \mu^{*} \leq 2) \\ 
&= \frac{1}{(2\lambda ([2k+1]^d - 1) + 1)^2} \left(\frac{1}{2n} \veczbar^T \Hbar \veczbar\right).
\end{align}
%
%
\item Let us now consider the case where $\veczbar \perp \calN(\Hbar)$. The analogous of  \eqref{eq:lem3_case2_temp1} to the multivariate case is given by 
\begin{equation} \label{eq:lem3_case2_temp1_multi}
\phi(0) = \sum_{j=3}^{2n} \frac{\dotprod{\veczbar}{\vecq_j}^2}{\lambda_j(\Hbar)^2} \geq  \frac{n}{4\lambda^2 ([2k+1]^d-1)^2}, 
\end{equation}
since $\lambda_j(\Hbar) \leq 2\lambda ([2k+1]^d - 1)$ for all $j$ (recall \eqref{eq:lem3_temp0_multi}).
Therefore, the choice $\lambda < \frac{1}{2[(2k+1)^d-1]}$  clearly  implies $\phi(0) > n$, and consequently that the solution $\vecgbarest$ is unique. 
Assuming $\lambda < \frac{1}{2[(2k+1)^d-1]}$ holds,  we arrive at  $\mu^{*} \in (0,2 - 2\lambda_3(\Hbar))$ as 
previously shown in \eqref{eq:lem3_case2_temp2}. As a sanity check, note that  $2 - 2\lambda_3(\Hbar) > 0$, in light of 
\eqref{eq:lem3_temp0_multi}  stating that $   \lambda_3(\Hbar)  \leq 2\lambda ([2k+1]^d - 1)$.
Consequently, we can provide the following  lower bound for $\frac{1}{2n} \vecgbarest^T \Hbar \vecgbarest$. 
\begin{align}
\frac{1}{2n} \vecgbarest^T \Hbar \vecgbarest 
& = \frac{2}{n} \sum_{j=1}^{2n} \frac{\dotprod{\veczbar}{\vecq_j}^2 \lambda_j(\Hbar)}{(2\lambda_j(\Hbar) + \mu^{*})^2} \\
&\geq \frac{2}{n (4\lambda ([2k+1]^d - 1) + 2 - 2\lambda_3(\Hbar))^2} \sum_{j=1}^{2n}\dotprod{\veczbar}{\vecq_j}^2 \lambda_j(\Hbar) \\
&= \frac{1}{(1 + 2\lambda ([2k+1]^d - 1) - \lambda_3(\Hbar))^2} \left(\frac{1}{2n} \veczbar^T \Hbar \veczbar\right), \label{eq:lem3_case2_temp3_multi}
\end{align}
where the second inequality follows from \eqref{eq:lem3_temp0_multi} and the fact $\mu^{*} \in (0,2 - 2\lambda_3(\Hbar))$.
Finally, using Theorem \ref{thm:fiedler_ss_eigval} we readily obtain the bound $\lambda_3(\Hbar) \geq 4 \lambda \kappa_{k,d} \sin^2\left(\frac{\pi}{2n}\right)$,   
where $\kappa_{k,d}$ denotes the vertex connectivity of $G$. The minimum degree of $G$ is $(k+1)^d -1$ and so 
$\kappa_{k,d} \leq (k+1)^d-1$. Plugging the lower bound on $\lambda_3(\Hbar)$ into \eqref{eq:lem3_case2_temp3_multi} completes the proof.
\end{enumerate}
\end{proof}
This completes the proof of Theorem \ref{thm:arb_noise_multiv}.
\end{proof}
%
\paragraph{Unwrapping the samples of $f$.} Once we obtain the denoised $f$ mod $1$ samples, we can recover an estimate of the original samples of $f$ (up to 
a global shift) by using, for instance, the OLS method outlined in Section \ref{subsec:unwrap_stage_and_algo}. This is the approach 
we adopt in our simulations, but one could of course consider using more sophisticated unwrapping algorithms. Exploring other approaches for the unwrapping stage is an interesting direction for future work.

\paragraph{Extensions to random noise models.} One could also consider extending the results to random noise models (Bernoulli-Uniform, Gaussian) as was shown in Sections \ref{subsec:bern_unif_noise} and  \ref{subsec:gaussian_noise} for the $d = 1$ case. We expect the proof outline to be very similar to that of Theorems \ref{thm:bern_unif_noise_model} and \ref{thm:gauss_noise_model} with minor technical changes. 

\section{Numerical experiments for the univariate case via TRS-based modulo denoising} \label{sec:num_exps}
%
This section contains numerical experiments\footnote{Code is publicly available online at \url{http://www.stats.ox.ac.uk/~cucuring/ModuloDenoising.htm}} for the univariate case ($d = 1$) for the following three noise models  discussed in Section \ref{sec:ProblemSetup}.
\begin{enumerate}
\item \textbf{Arbitrary bounded} noise model  \eqref{eq:arb_bd_noise_model}, 
analyzed theoretically in Section \ref{sec:bound_noise_analysis}. In particular, we experiment with the Uniform model, with samples generated uniformly at random in $[-\gamma,\gamma]$ for bounded $\gamma$. Results for this model are shown 
in Appendix \ref{sec:num_exps_Bounded}.
\item \textbf{Bernoulli-Uniform} noise model	\eqref{eq:bern_unif_noise_model}, with guarantees put forth in Section \ref{subsec:bern_unif_noise}.
\item \textbf{Gaussian} noise model 	\eqref{eq:gauss_noise_model}, analyzed theoretically  in Section \ref{subsec:gaussian_noise}.
\end{enumerate}

The function $f$ we consider throughout all the experiments in this Section is  given by 
\begin{equation}
f(x) = 4x \cos^2(2\pi x) - 2 \sin^2(2\pi x).
\label{def:f1}
\end{equation}
For each experiment, we average the results over 20 trials,  and show the RMSE error on a log scale, both for the denoised samples of $f$ mod 1  and the final $f$ estimates. For the latter, we compute the RMSE after we mod out the best global shift\footnote{Any algorithm that takes as input the mod 1 samples will be able to recover $f$ only up to a global shift. We mod out the global shift via a simple procedure, that first computes the offset at each sample point $f_i - \hat{f}_i$, and then considers the mode of this distribution, after a discretization step. More specifically, we compute a histogram of the offsets, and consider the center of the bucket that gives the mode of the distribution as our final global shift estimate with respect to the ground truth. }. 
For each noise model, we compare the performance of three methods.
\begin{itemize}
\item 
\textbf{OLS} denotes the algorithm based on the least-squares formulation \eqref{eq:ols_unwrap_lin_system} used to recover samples of $f$, 
and works directly with noisy  mod 1 samples. The final estimated $f$  mod 1 values are then obtained as the corresponding  
mod 1 versions.
\item  \textbf{QCQP} denotes Algorithm \ref{algo:two_stage_denoise} where the unwrapping stage is 
performed via \textbf{OLS} \eqref{eq:ols_unwrap_lin_system}.
\item  \textbf{iQCQP} denotes an iterated version of \textbf{QCQP}, wherein we repeatedly denoise
the noisy $f \mod 1$ estimates (via Stage $1$ of Algorithm \ref{algo:two_stage_denoise}) for $l  \in  \{ 3,5,10 \} $ iterations, 
and finally perform the unwrapping stage via \textbf{OLS} \eqref{eq:ols_unwrap_lin_system}, to recover the final  sample estimates of $f$.
\end{itemize}
%


\subsection{Numerical experiments: Gaussian Model} \label{sec:num_exps_Gaussian}
Figure \ref{fig:instances_f1_Gaussian_delta_cors} shows noisy instances of the Gaussian noise model, for $n=500$ samples. 
The scatter plots on the top row show that, as the noise  level  
increases, the function \eqref{eq:qta_recovery_rule} will produce more and more errors in \eqref{eq:ols_unwrap_lin_system}, while the  remaining plots show the corresponding f mod 1 signal (clean, noisy, and denoised via \textbf{QCQP}) for three levels of noise. 

Figure \ref{fig:instances_f1_Gaussian} depicts instances of the recovery process, highlighting the noise level at which each method shows a significant decrease in performance. \textbf{iQCQP} shows surprisingly good performance even at very high levels of noise, ($\sigma=0.17$). 

Figures \ref{fig:Sims_f1_Gaussian_fmod1}, respectively  \ref{fig:Sims_f1_Gaussian_f},  plot the recovery errors (averaged over 20 runs) for the denoised values of $f$ mod 1,  respectively the estimated $f$ samples, as we scan across the input parameters $k 
\in \{2,3,5\}$ and  $ \lambda   \in  \{ 0.03, 0.1, 0.3, 0.5, 1\}$, at varying levels of noise $ \sigma \in [0, 0.15]$. We remark that $k=5$ and $\lambda=1$ most often lead to the worst performance. For most parameter combinations  \textbf{iQCQP} improves on \textbf{QCQP} (except for very high values of $\lambda$ and $k$).  The best recovery errors are obtained for $k=2$ and higher values of $\lambda$.

\begin{figure}[!ht]
\centering

\subcaptionbox[]{  $\sigma=0.1$
}[ 0.32\textwidth ]
{\includegraphics[width=0.31\textwidth] {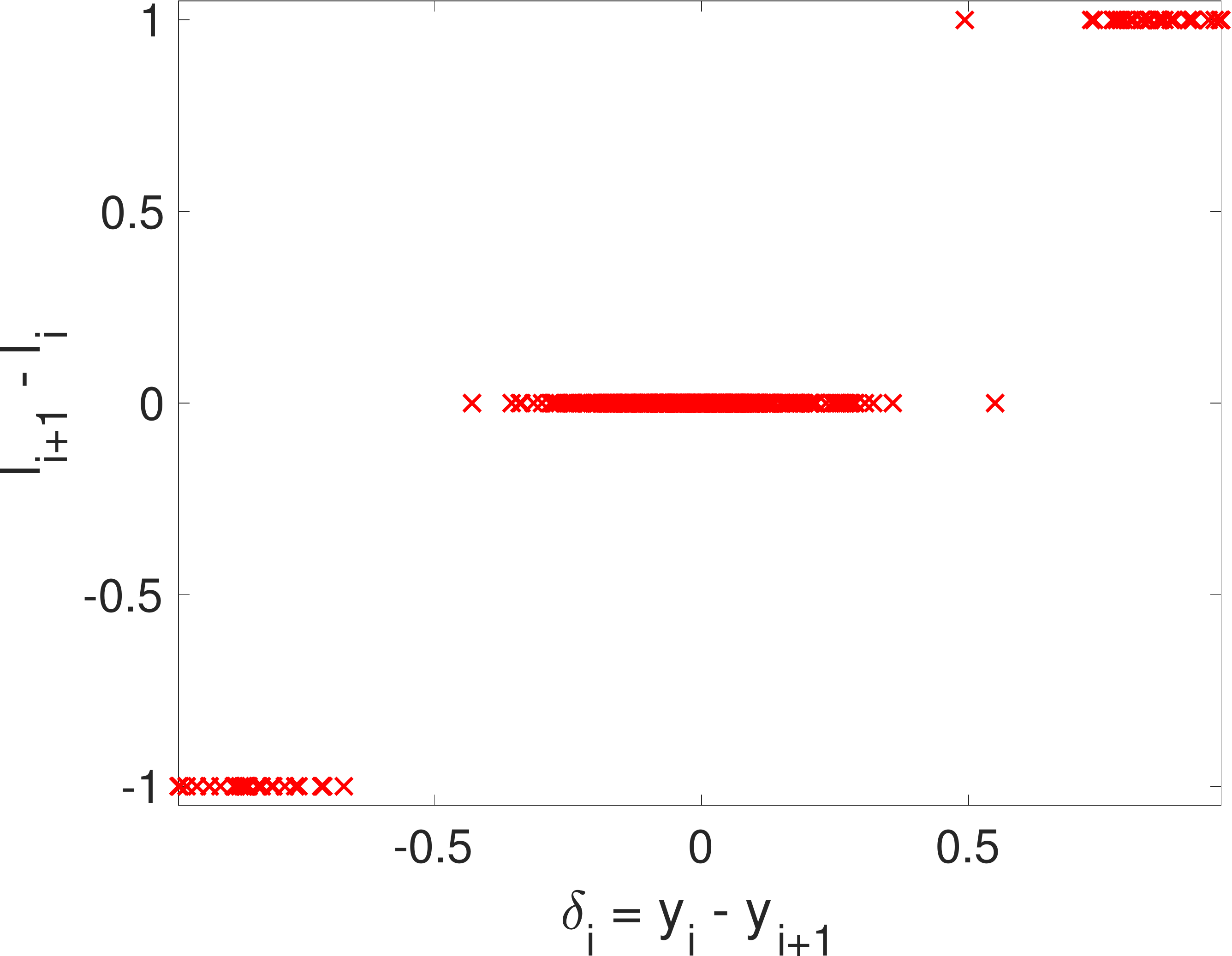} }
%
\subcaptionbox[]{  $\sigma=0.15$
}[ 0.32\textwidth ]
{\includegraphics[width=0.31\textwidth] {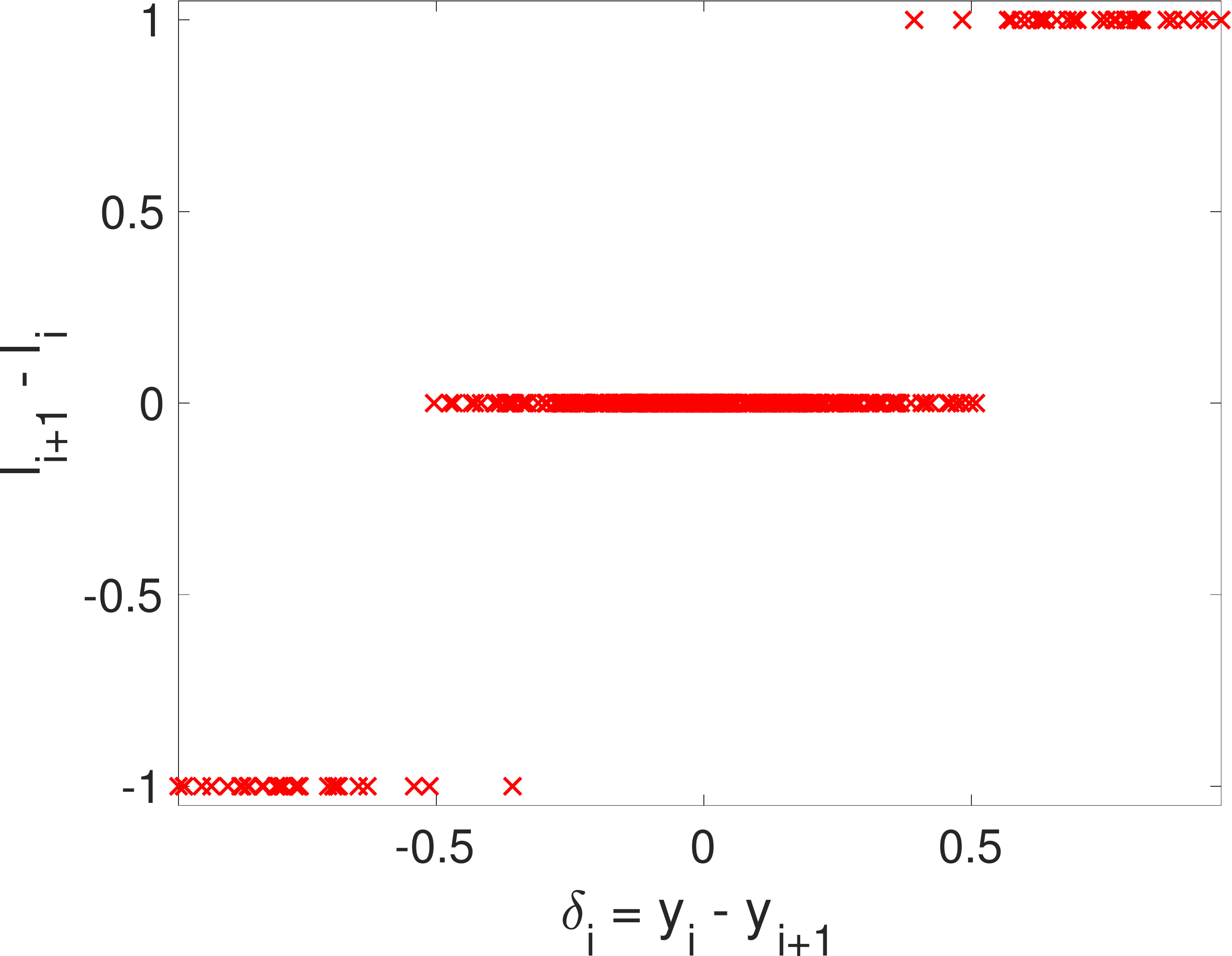} }
%
\subcaptionbox[]{  $\sigma=0.2$
}[ 0.32\textwidth ]
{\includegraphics[width=0.31\textwidth] {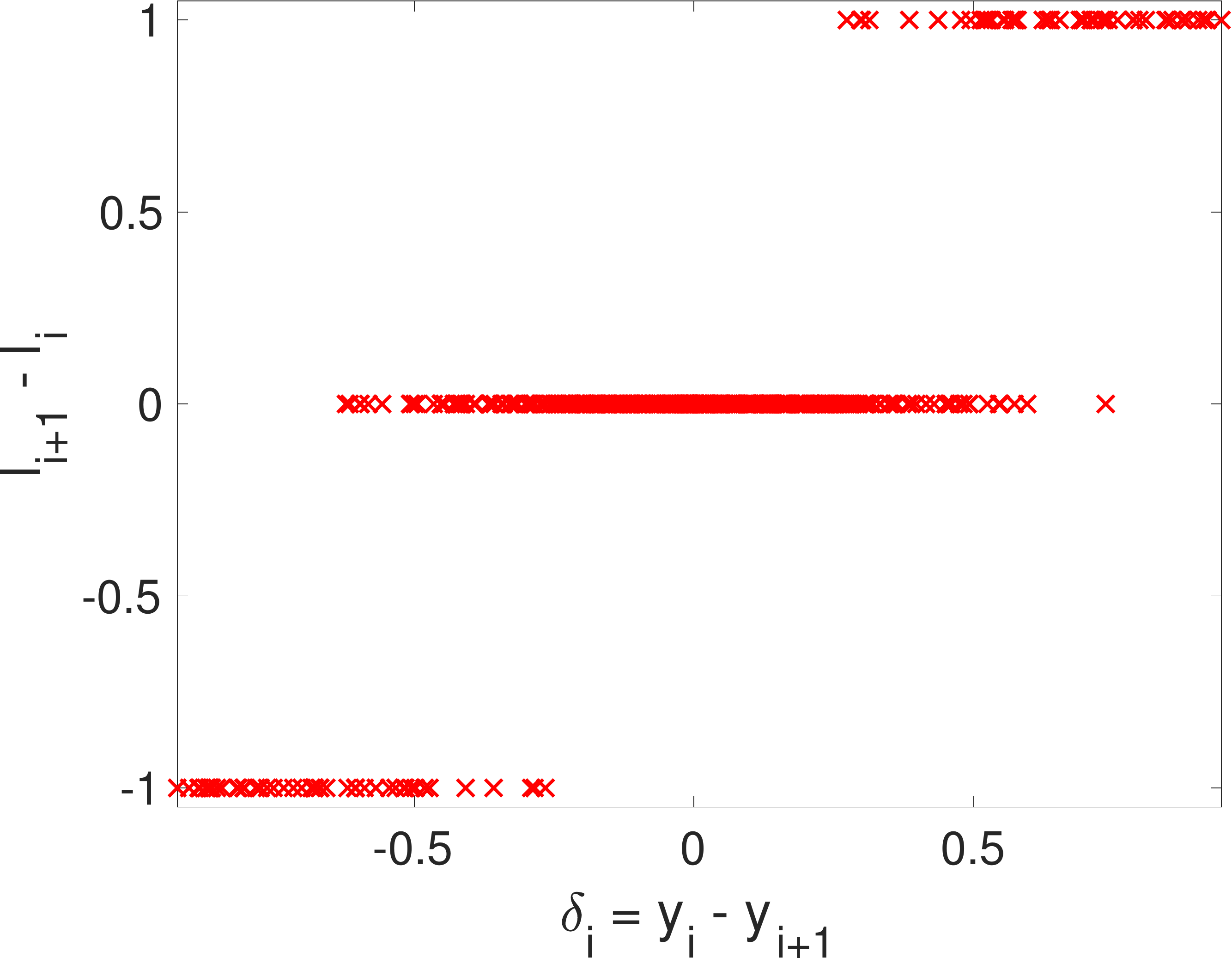} }
\vspace{2mm}

\subcaptionbox[]{  $\sigma=0.1$
}[ 0.96\textwidth ]
{\includegraphics[width=0.95\textwidth] {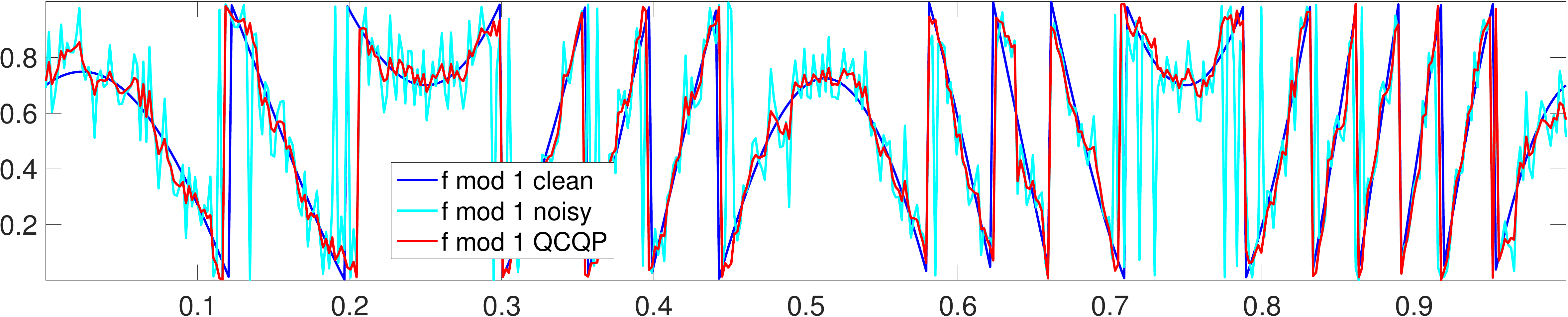} }
%

\vspace{2mm}
\subcaptionbox[]{  $\sigma=0.15$
}[ 0.96\textwidth ]
{\includegraphics[width=0.95\textwidth] {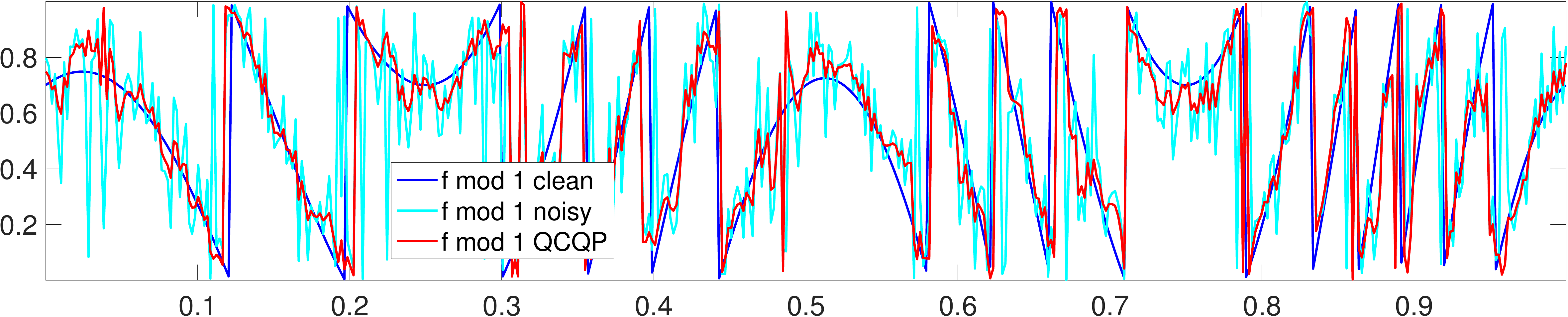} }
%

\vspace{2mm}
\subcaptionbox[]{  $\sigma=0.2$
}[ 0.96\textwidth ]
{\includegraphics[width=0.95\textwidth] {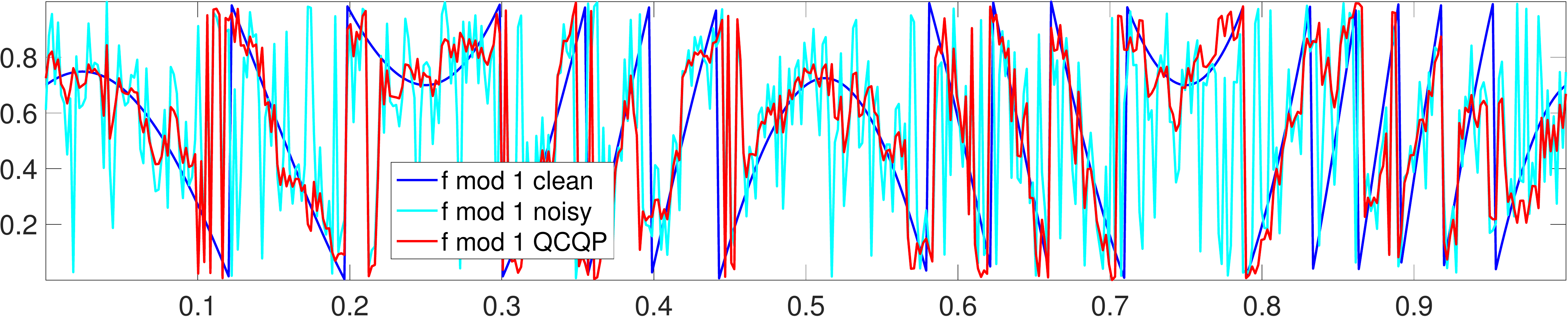} }
%

\captionsetup{width=0.95\linewidth}
\caption[Short Caption]{ Noisy instances of the Gaussian noise model ($n=500$). Top row (a)-(c) shows scatter plots of change in $y$ (the observed noisy f mod 1 values) versus change in $l$ (the noisy quotient). Plots (d)-(f) show  the clean f mod 1 values (blue), the noisy f mod 1 values (cyan) and the denoised    (via \textbf{QCQP}) f mod 1 values (red).     \textbf{QCQP} denotes Algorithm \ref{algo:two_stage_denoise} without  the unwrapping stage performed by \textbf{OLS} \eqref{eq:ols_unwrap_lin_system}.
}
\label{fig:instances_f1_Gaussian_delta_cors}
\end{figure}

\begin{figure}[!ht]
\centering
\subcaptionbox[]{  $\sigma=0.05$, \textbf{OLS}
}[ 0.32\textwidth ]
{\includegraphics[width=0.27\textwidth] {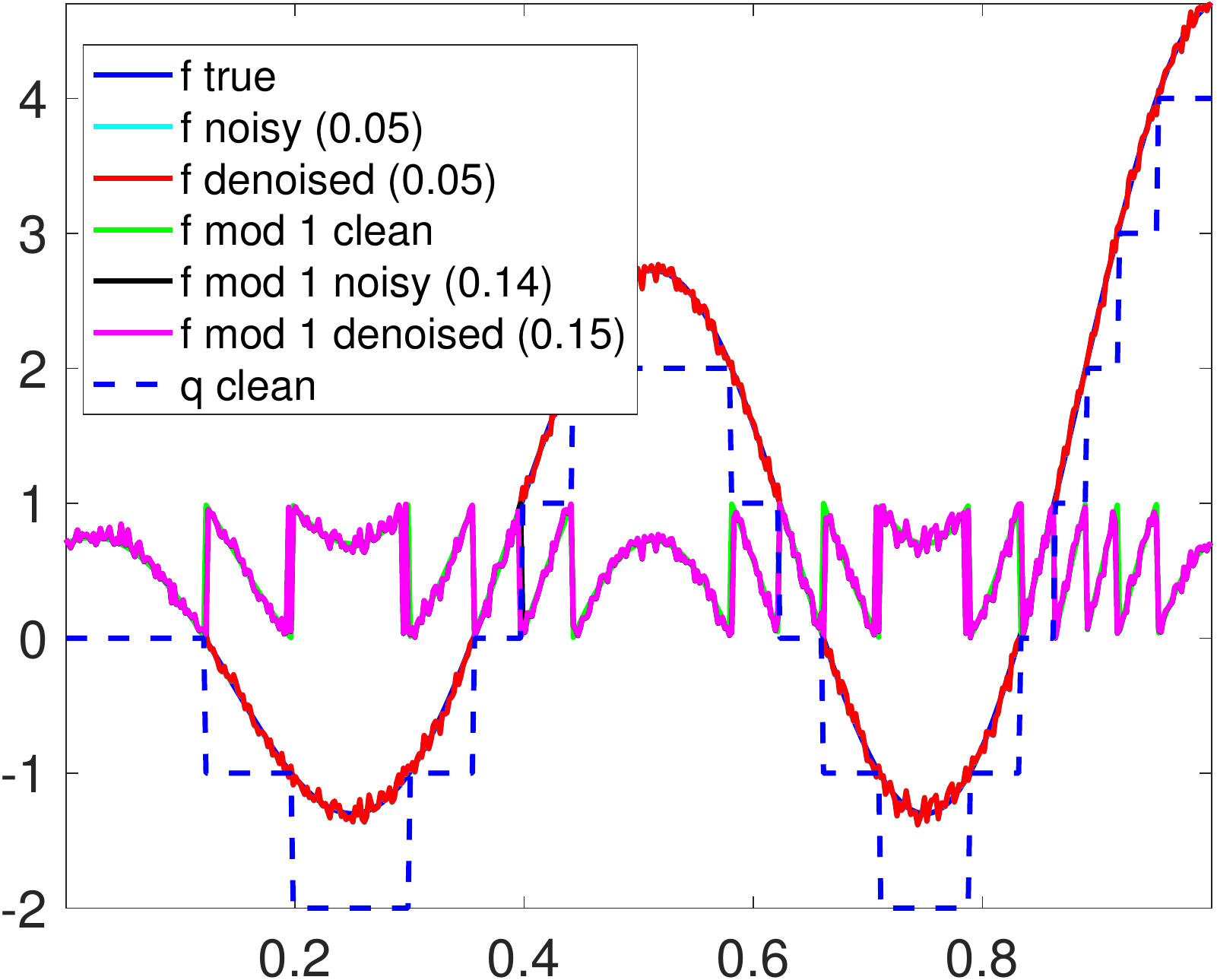} }
%
\subcaptionbox[]{  $\sigma=0.05$, \textbf{QCQP}
}[ 0.32\textwidth ]
{\includegraphics[width=0.27\textwidth] {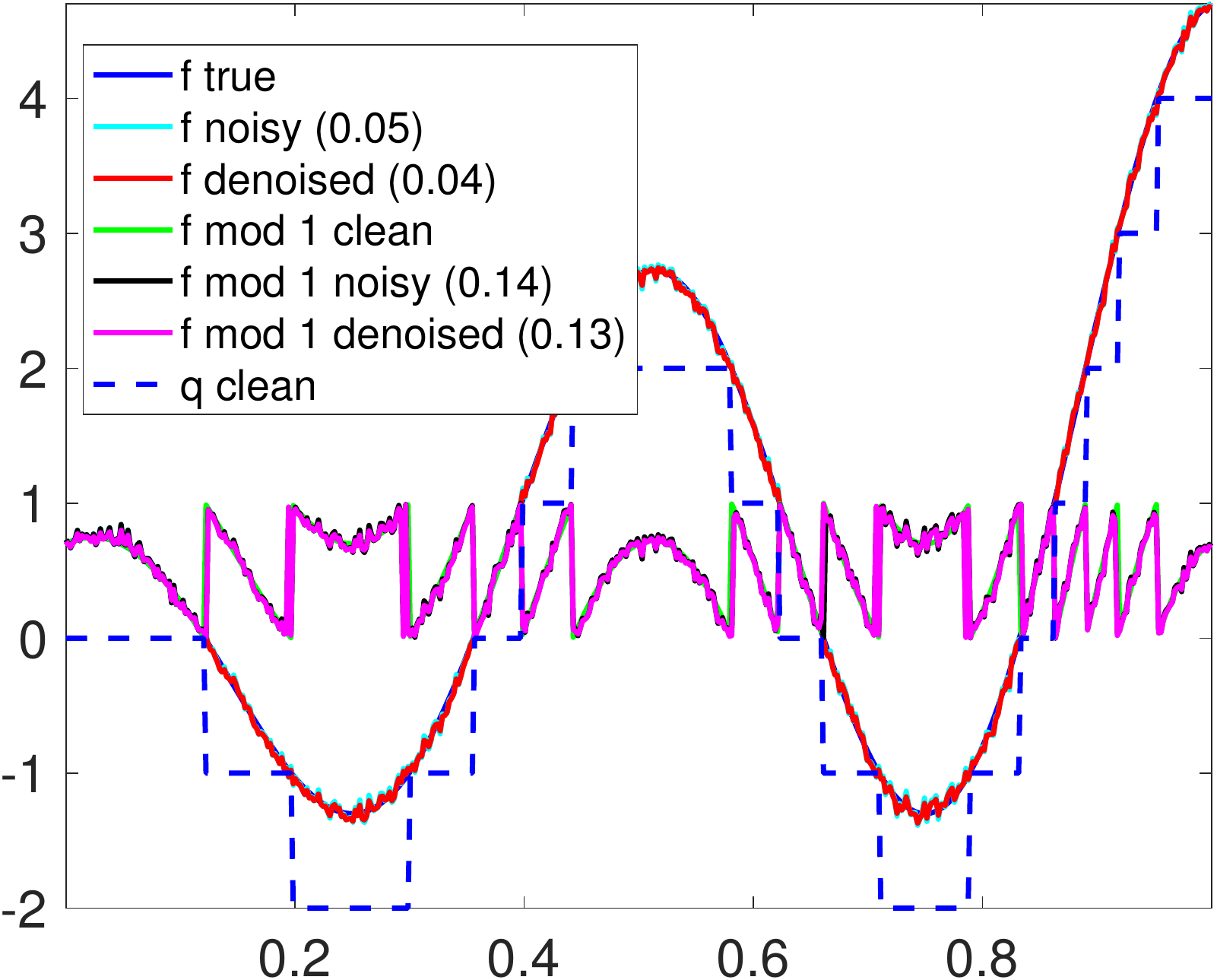} }
%
\subcaptionbox[]{  $\sigma=0.05$, \textbf{iQCQP}  (10  iters.)
}[ 0.32\textwidth ]
{\includegraphics[width=0.27\textwidth] {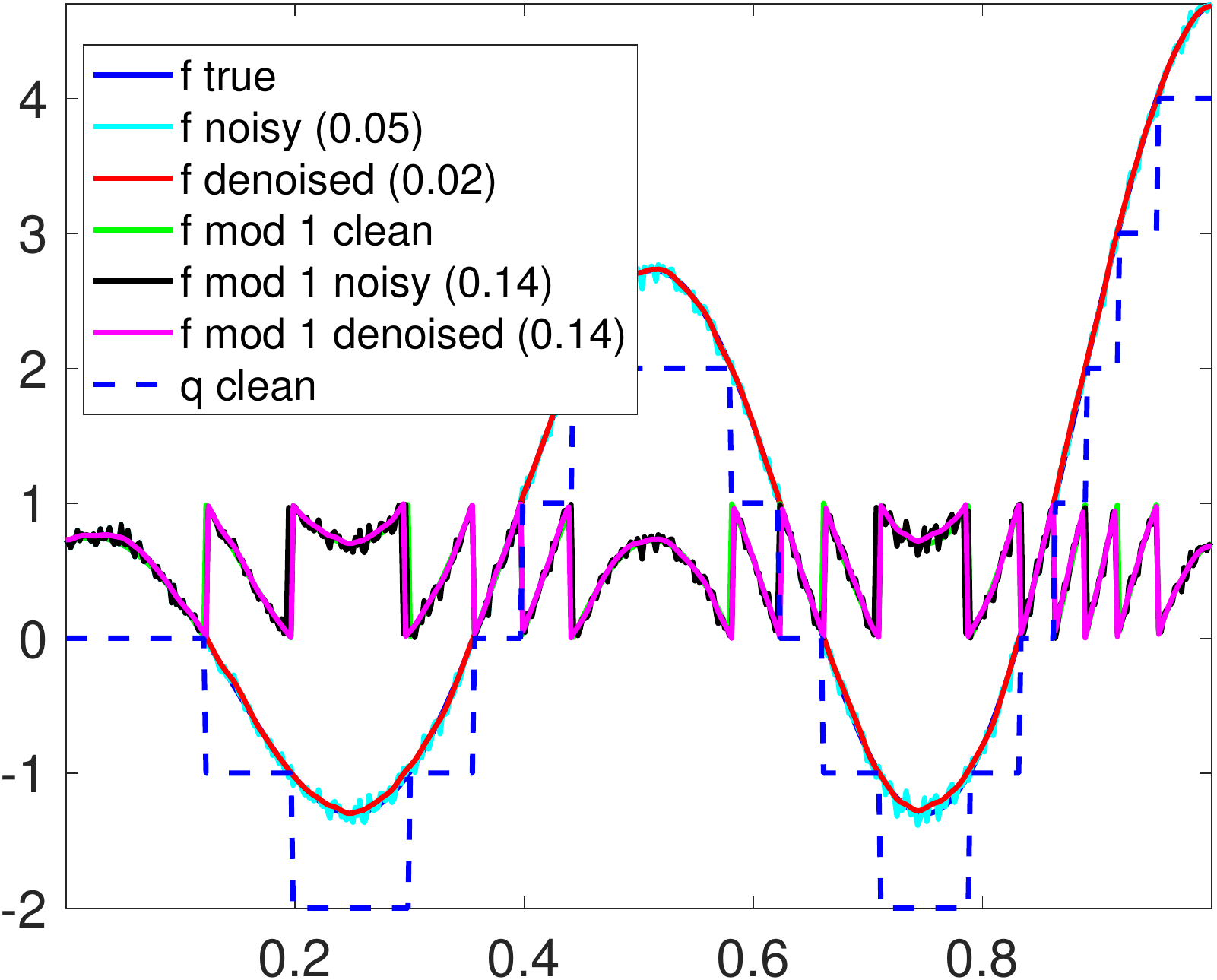} }
%
%
%
\vspace{4mm}

\subcaptionbox[]{  $\sigma=0.13$, \textbf{OLS}
}[ 0.32\textwidth ]
{\includegraphics[width=0.27\textwidth] {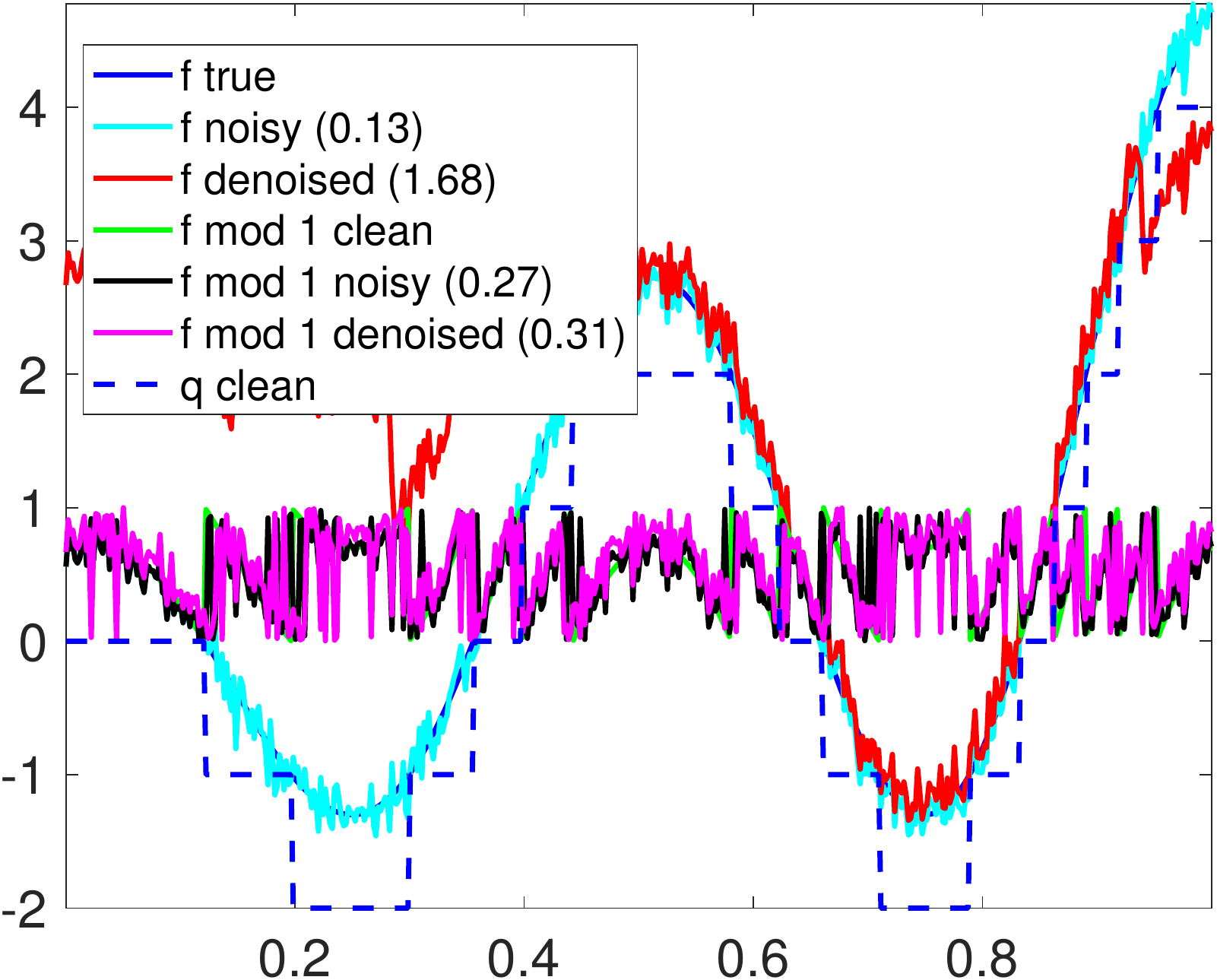} }
%
\subcaptionbox[]{  $\sigma=0.13$, \textbf{QCQP}
}[ 0.32\textwidth ]
{\includegraphics[width=0.27\textwidth] {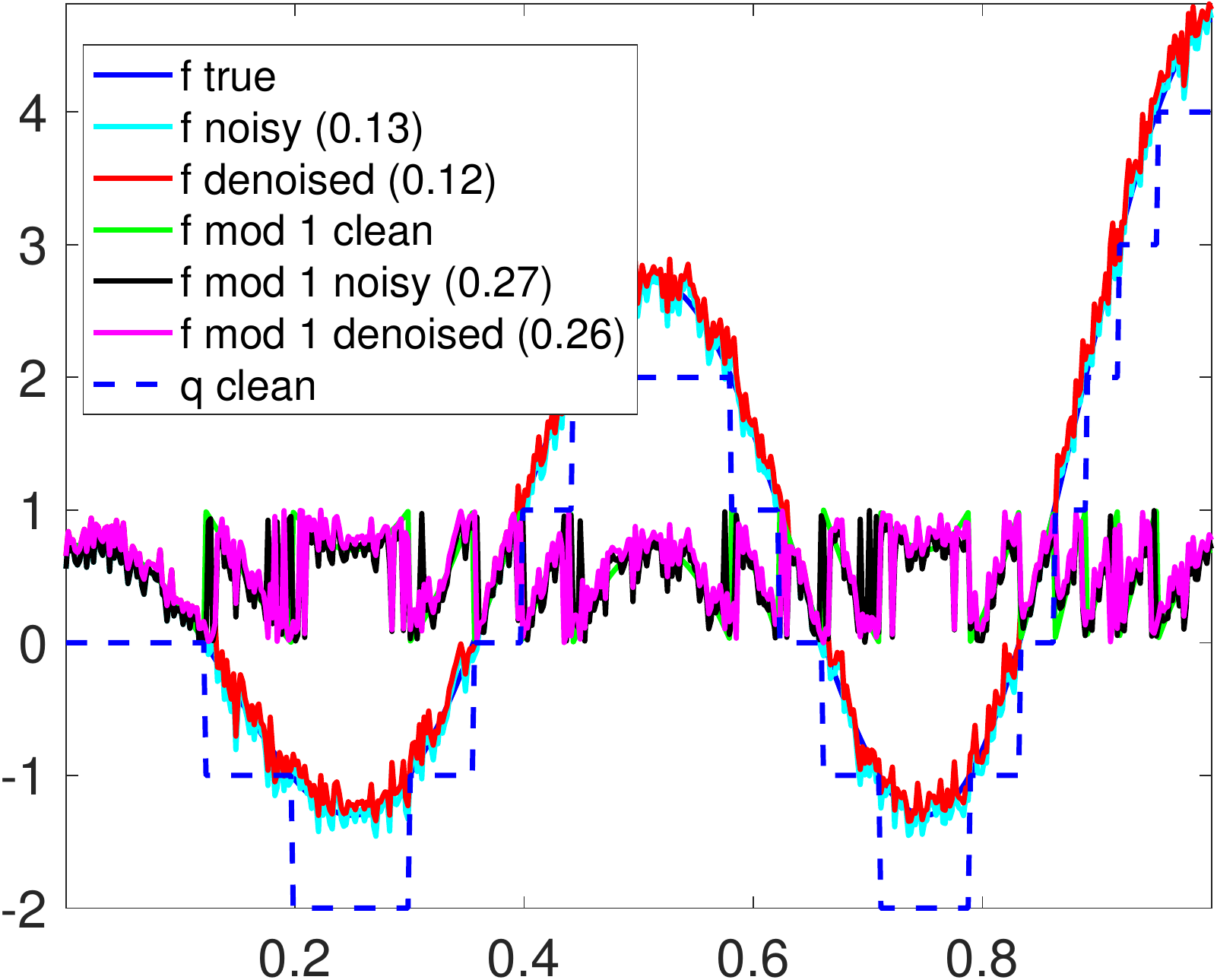} }
%
\subcaptionbox[]{  $\sigma=0.13$, \textbf{iQCQP} (10 iters.)
}[ 0.32\textwidth ]
{\includegraphics[width=0.27\textwidth] {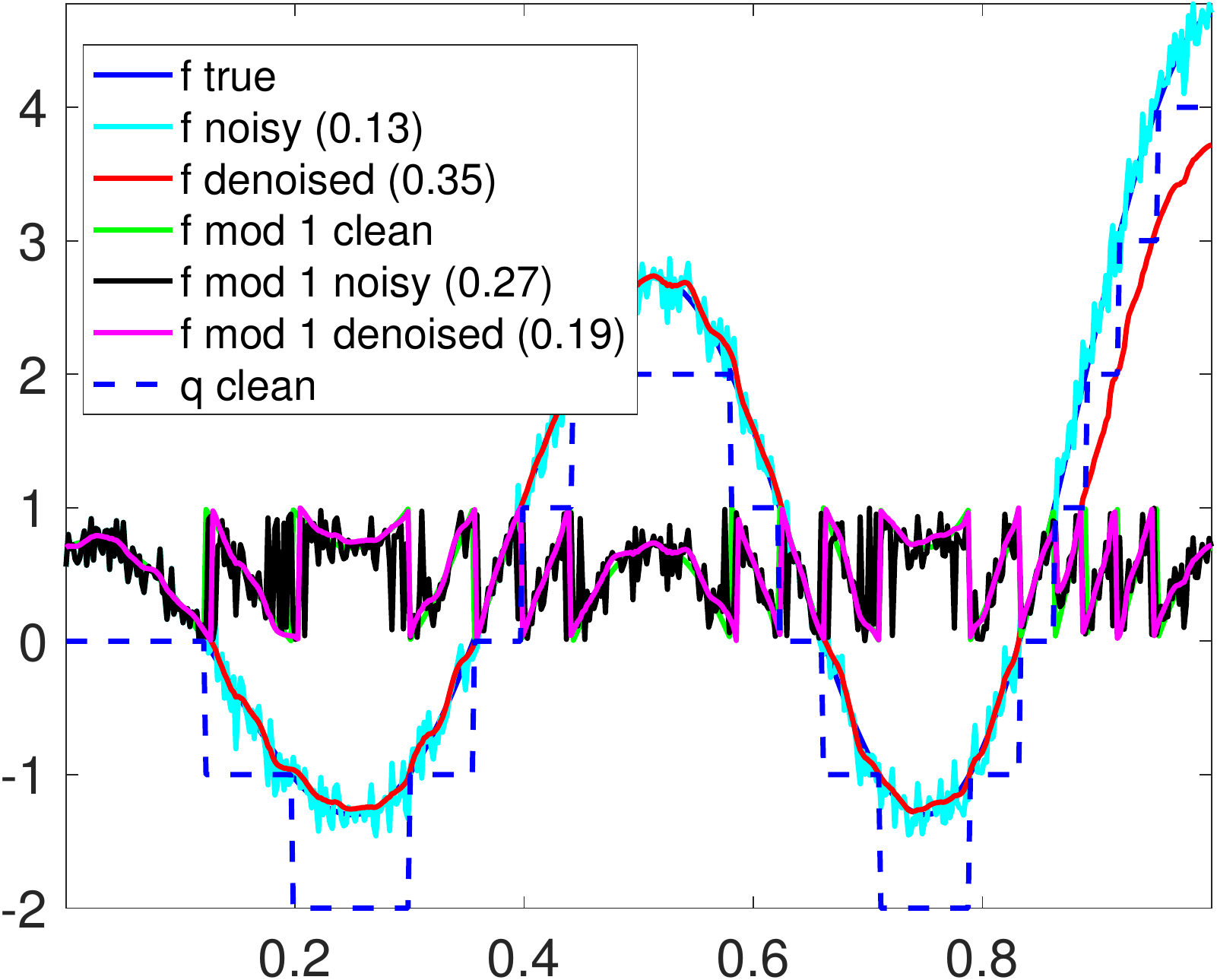} }
%
%

\vspace{4mm}
\subcaptionbox[]{  $\sigma=0.17$, \textbf{OLS}
}[ 0.32\textwidth ]
{\includegraphics[width=0.27\textwidth] {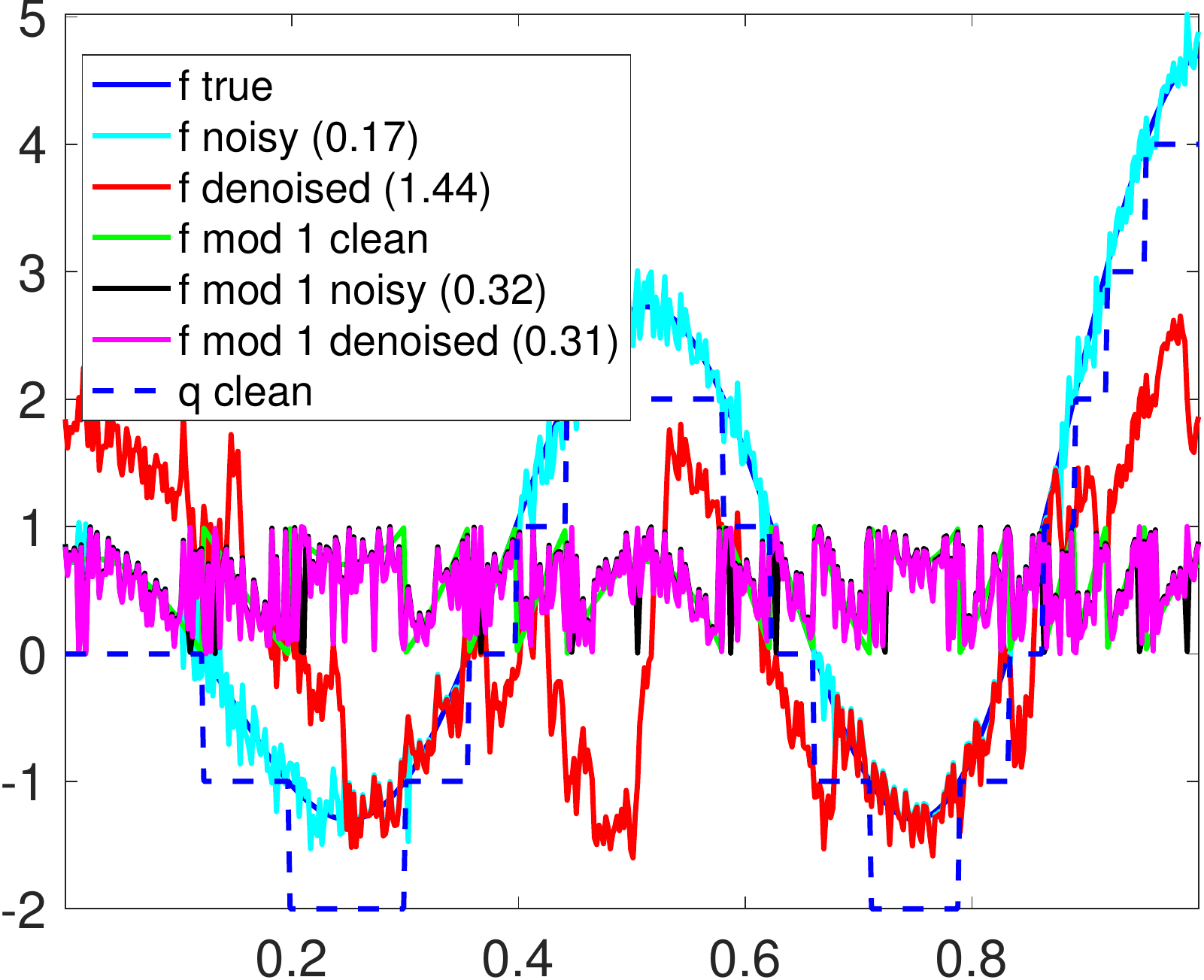} }
%
\subcaptionbox[]{  $\sigma=0.17$, \textbf{QCQP}
}[ 0.32\textwidth ]
{\includegraphics[width=0.27\textwidth] {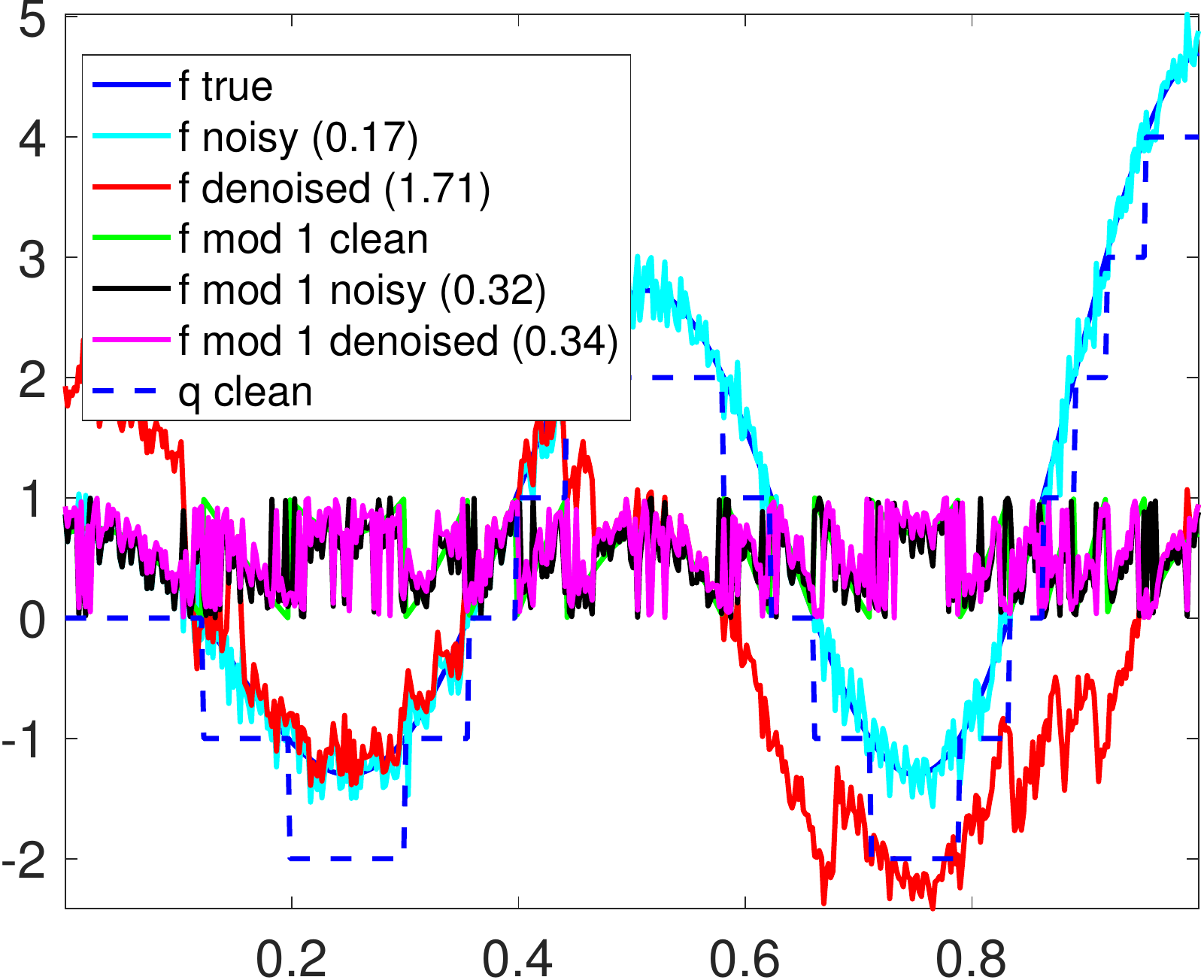} }
%
\subcaptionbox[]{  $\sigma=0.17$, \textbf{iQCQP}(10 iters.)
}[ 0.32\textwidth ]
{\includegraphics[width=0.27\textwidth] {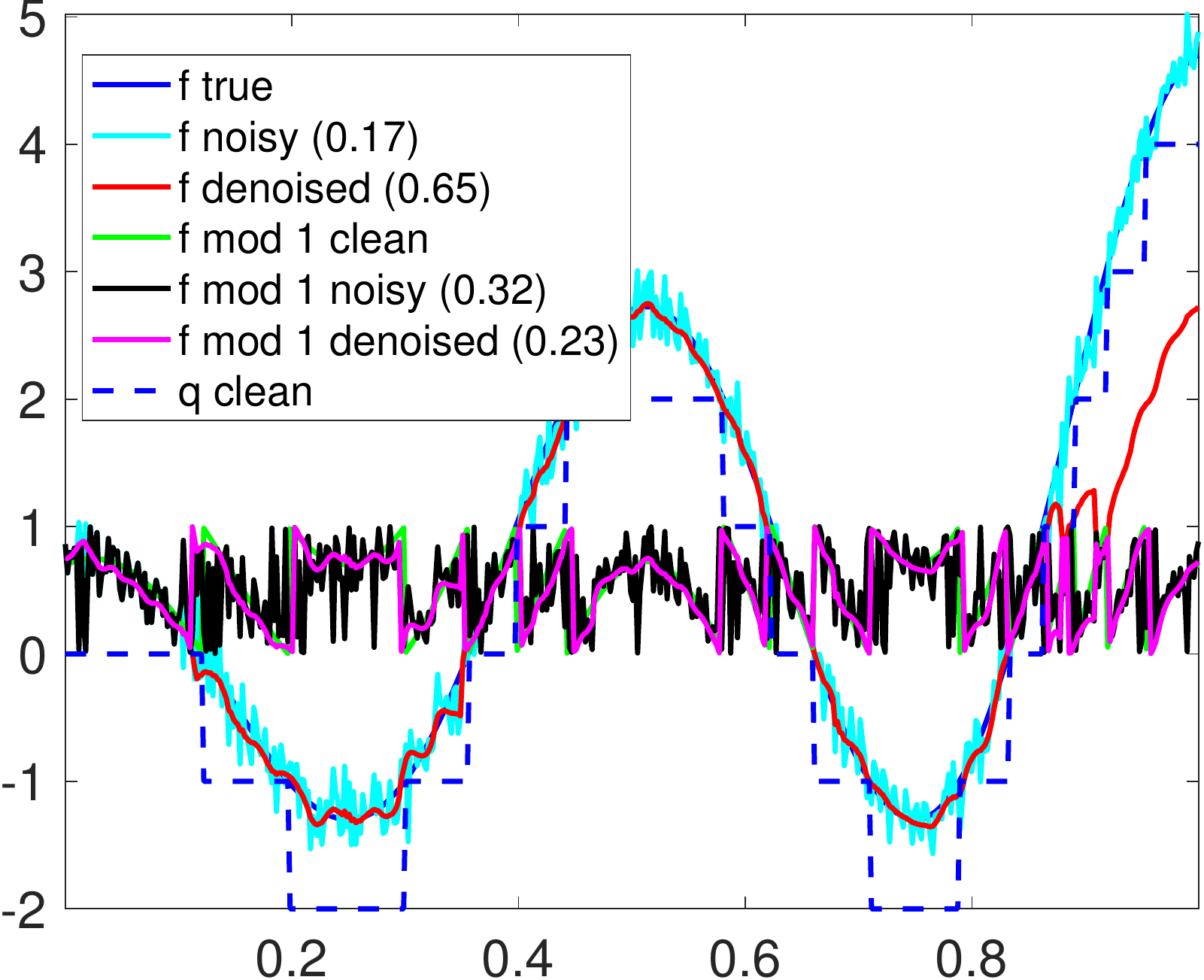} }
%
%
%
\vspace{-2mm}
\captionsetup{width=0.95\linewidth}
\caption[Short Caption]{Denoised instances under the Gaussian noise model, for \textbf{OLS}, \textbf{QCQP} and \textbf{iQCQP},  as we increase the noise level $\sigma$. We keep fixed the parameters $n=500$, $k=2$, $\lambda= 0.1$. \textbf{QCQP} denotes Algorithm \ref{algo:two_stage_denoise} where the unwrapping stage is performed by \textbf{OLS} \eqref{eq:ols_unwrap_lin_system}.
}
\label{fig:instances_f1_Gaussian}
\end{figure}


\begin{figure}[!ht]
\centering
\subcaptionbox[]{ $k=2$, $\lambda= 0.03$}[ 0.19\textwidth ]
{\includegraphics[width=0.19\textwidth] {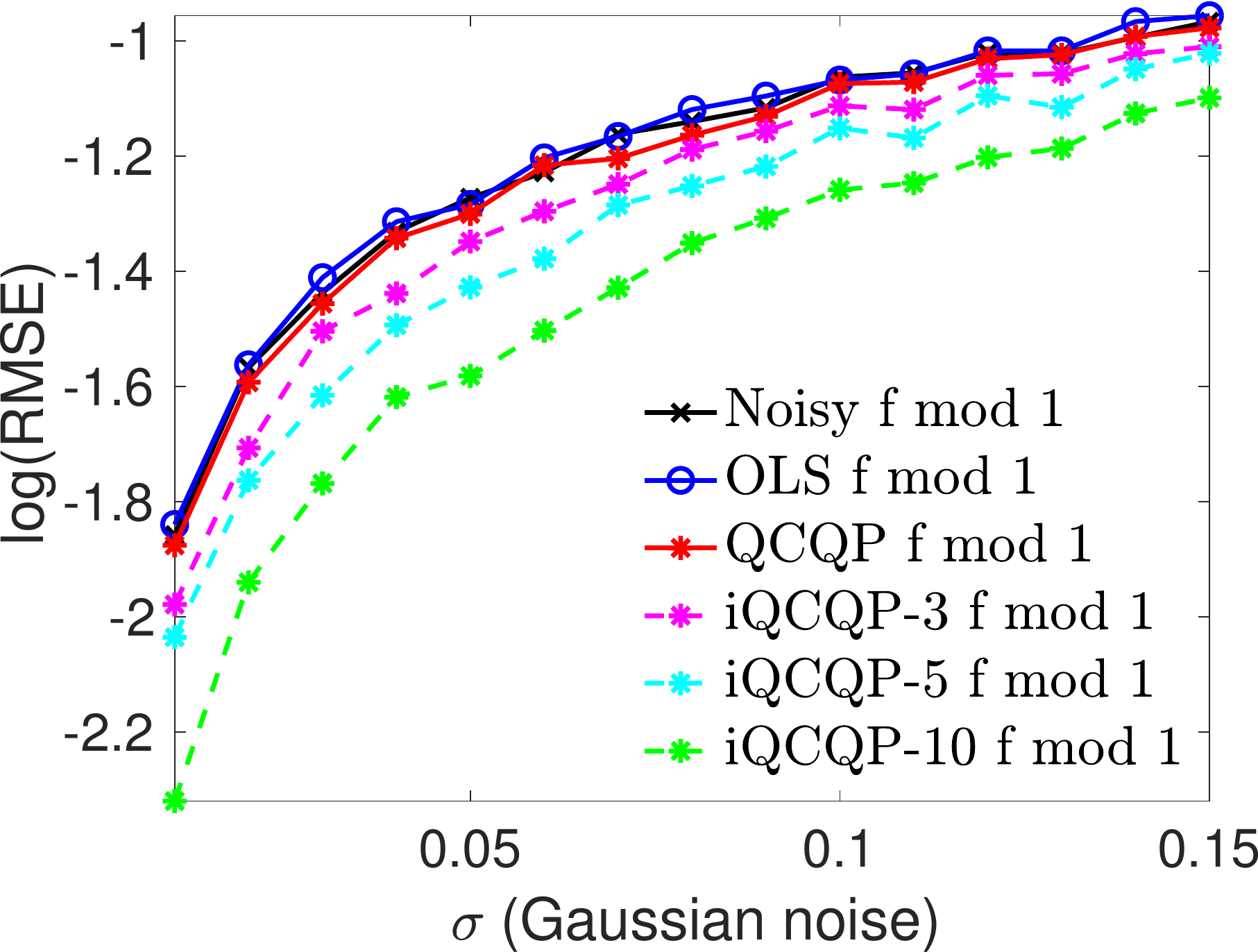} }
\subcaptionbox[]{ $k=2$, $\lambda= 0.1$}[ 0.19\textwidth ]
{\includegraphics[width=0.19\textwidth] {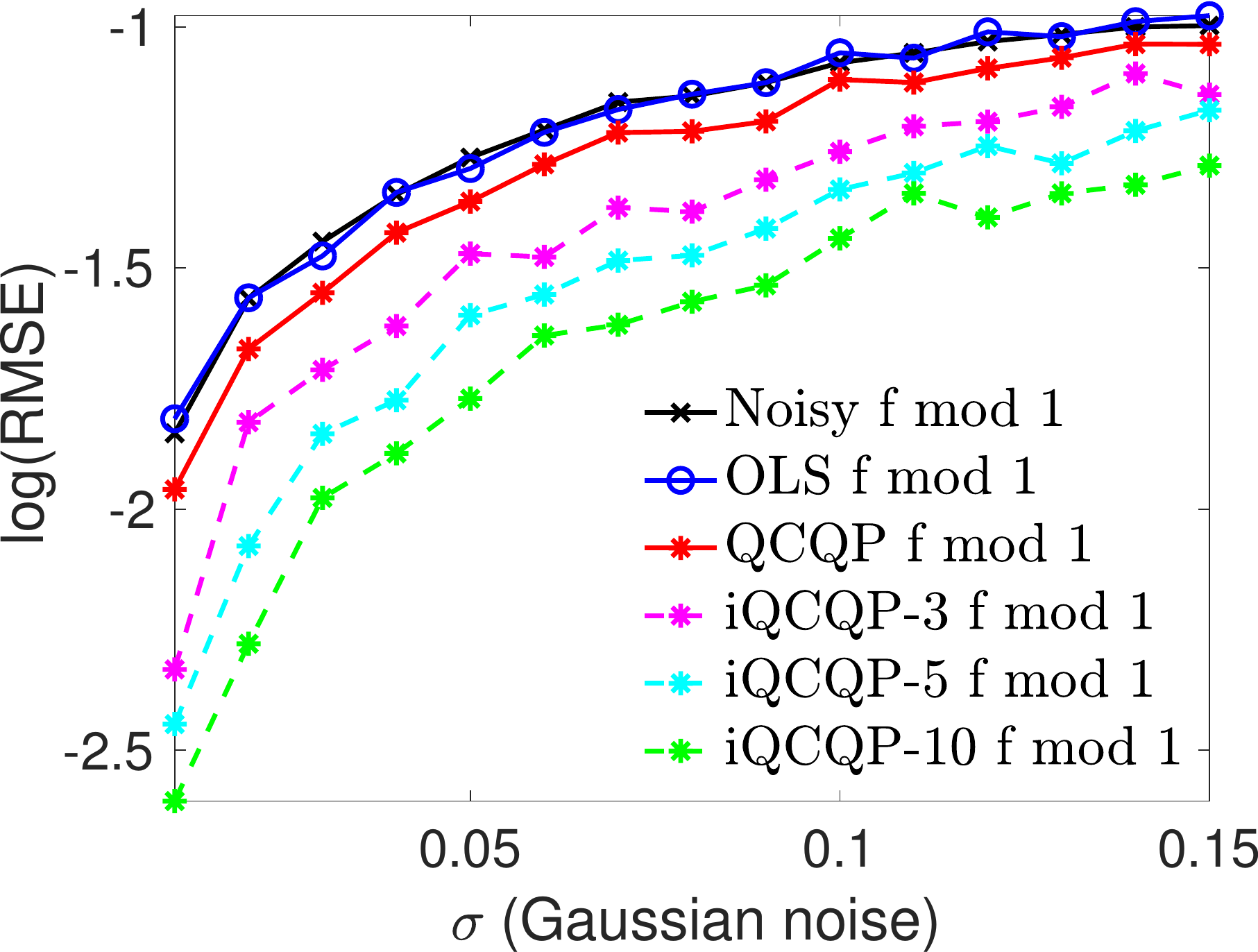} }
%
\subcaptionbox[]{ $k=2$, $\lambda= 0.3$}[ 0.19\textwidth ]
{\includegraphics[width=0.19\textwidth] {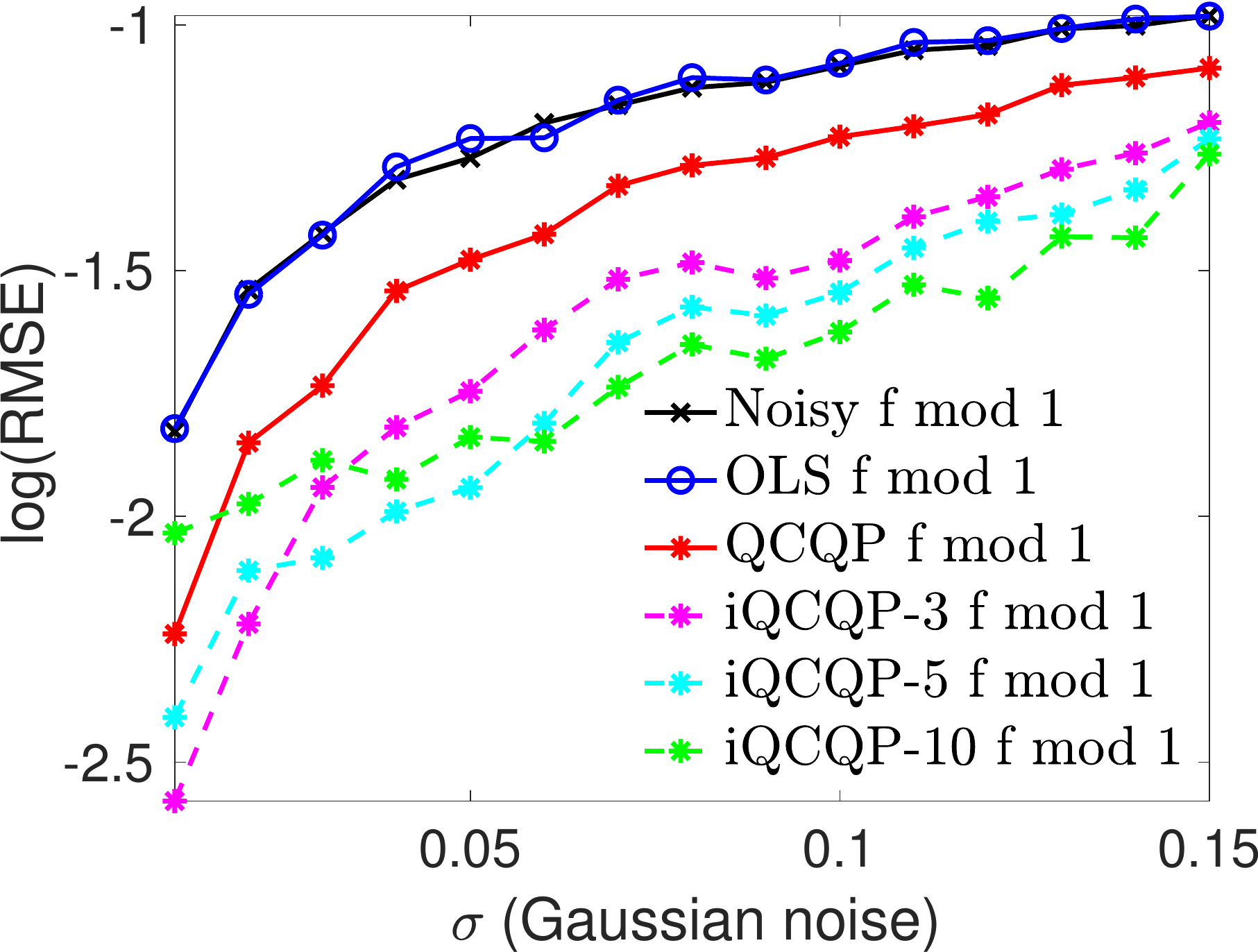} }
%
\subcaptionbox[]{ $k=2$, $\lambda= 0.5$}[ 0.19\textwidth ]
{\includegraphics[width=0.19\textwidth] {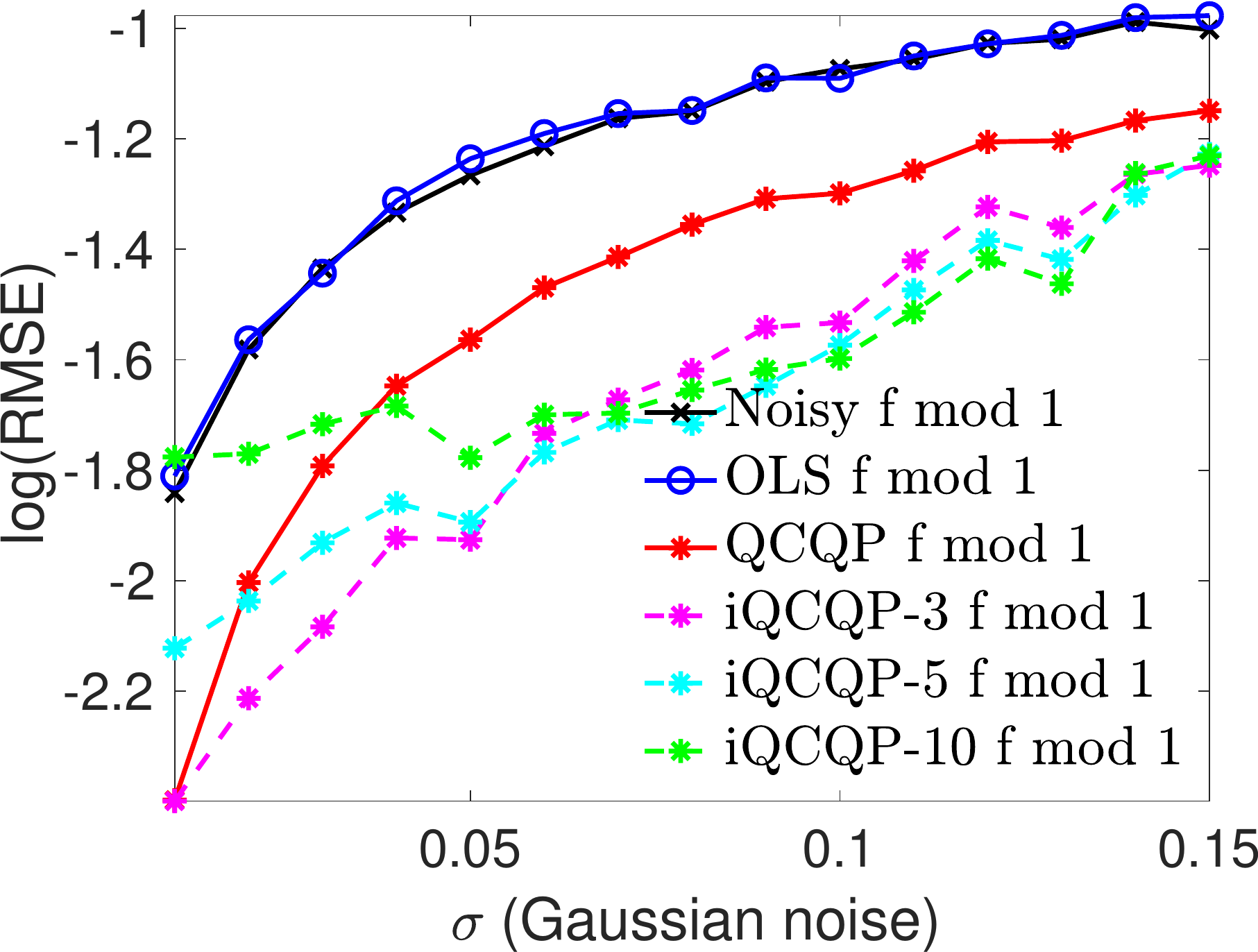} }
%
\subcaptionbox[]{ $k=2$, $\lambda= 1$}[ 0.19\textwidth ]
{\includegraphics[width=0.19\textwidth] {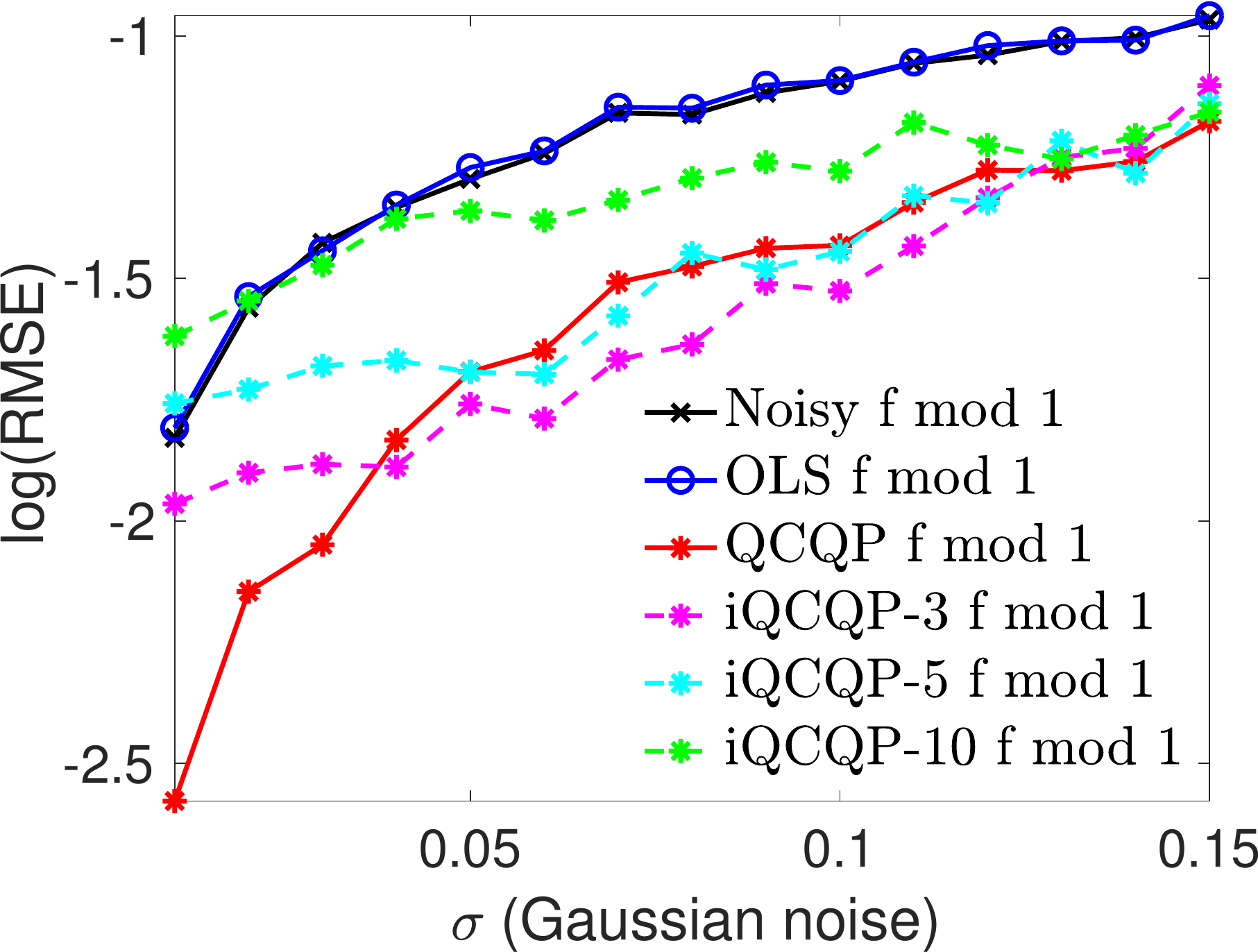} }
\hspace{0.1\textwidth} 
\subcaptionbox[]{ $k=3$, $\lambda= 0.03$}[ 0.19\textwidth ]
{\includegraphics[width=0.19\textwidth] {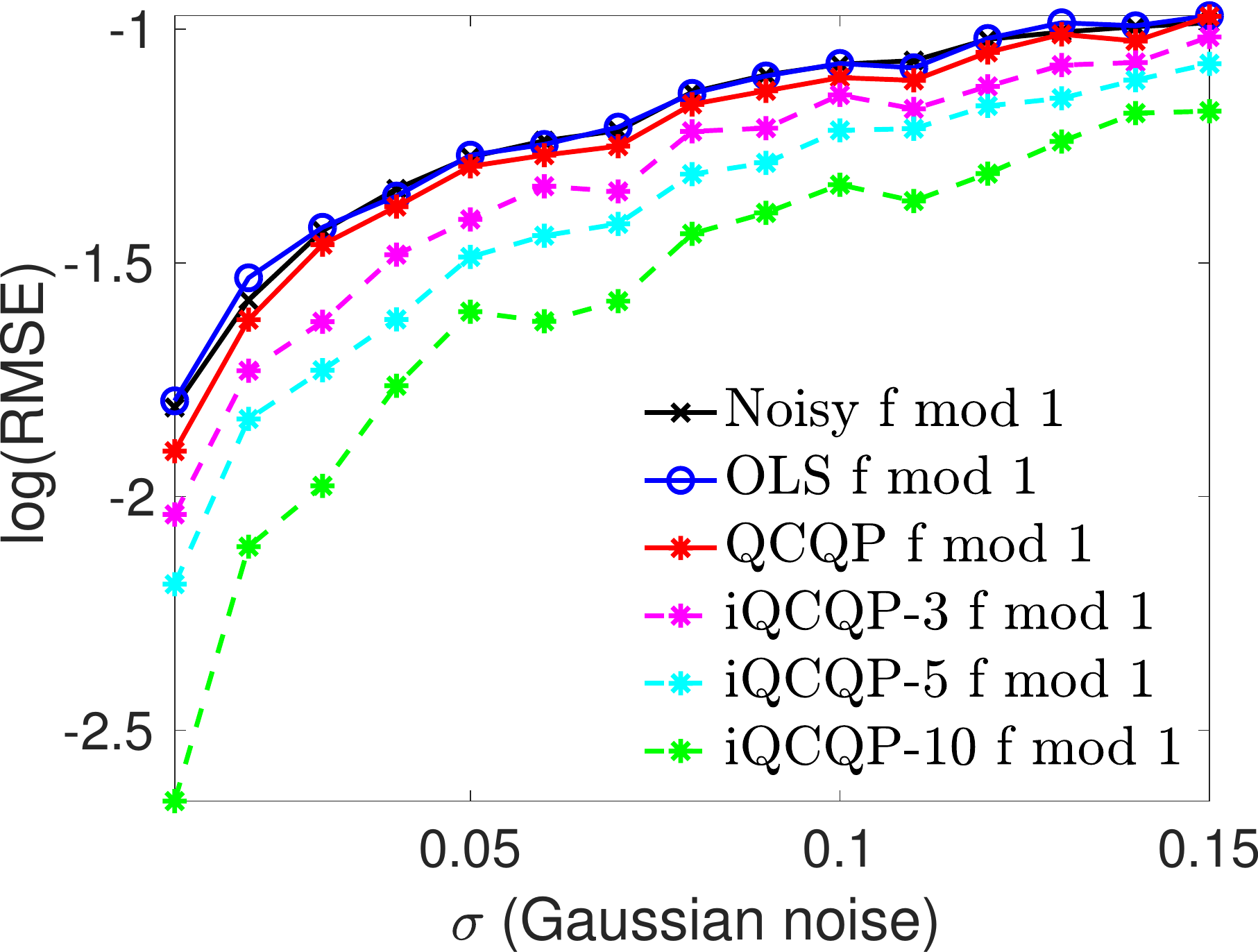} }
\subcaptionbox[]{ $k=3$, $\lambda= 0.1$}[ 0.19\textwidth ]
{\includegraphics[width=0.19\textwidth] {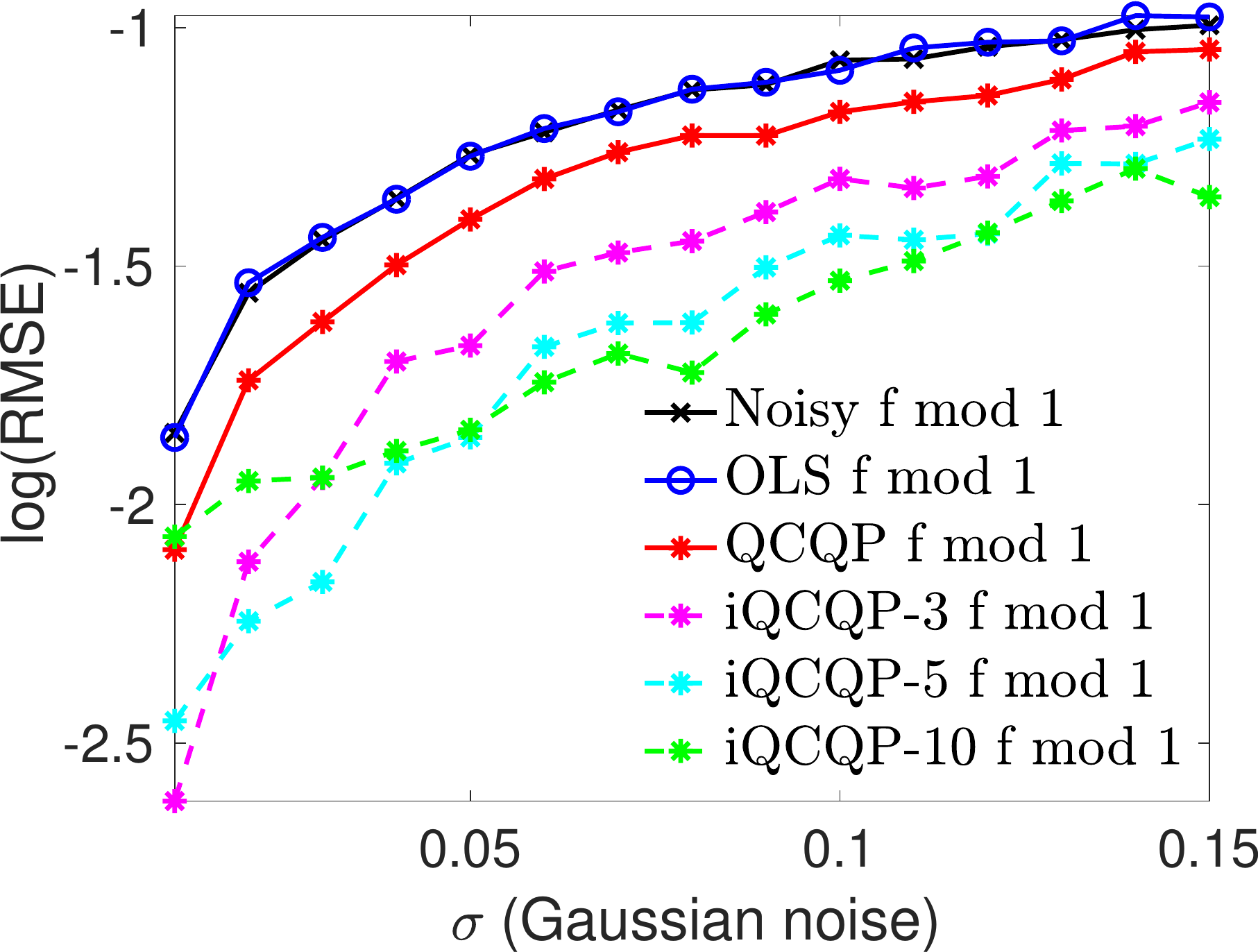} }
%
\subcaptionbox[]{ $k=3$, $\lambda= 0.3$}[ 0.19\textwidth ]
{\includegraphics[width=0.19\textwidth] {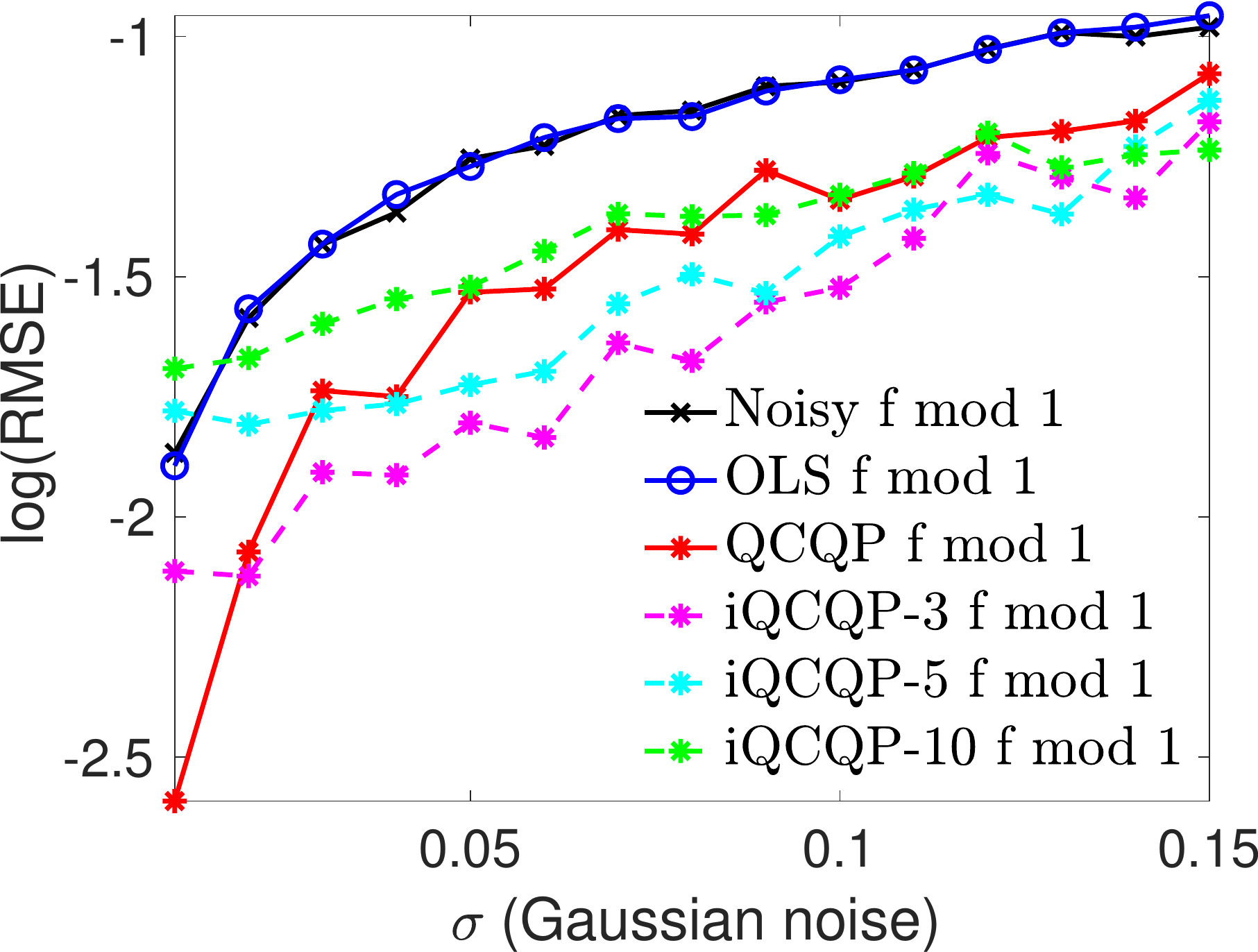} }
%
\subcaptionbox[]{ $k=3$, $\lambda= 0.5$}[ 0.19\textwidth ]
{\includegraphics[width=0.19\textwidth] {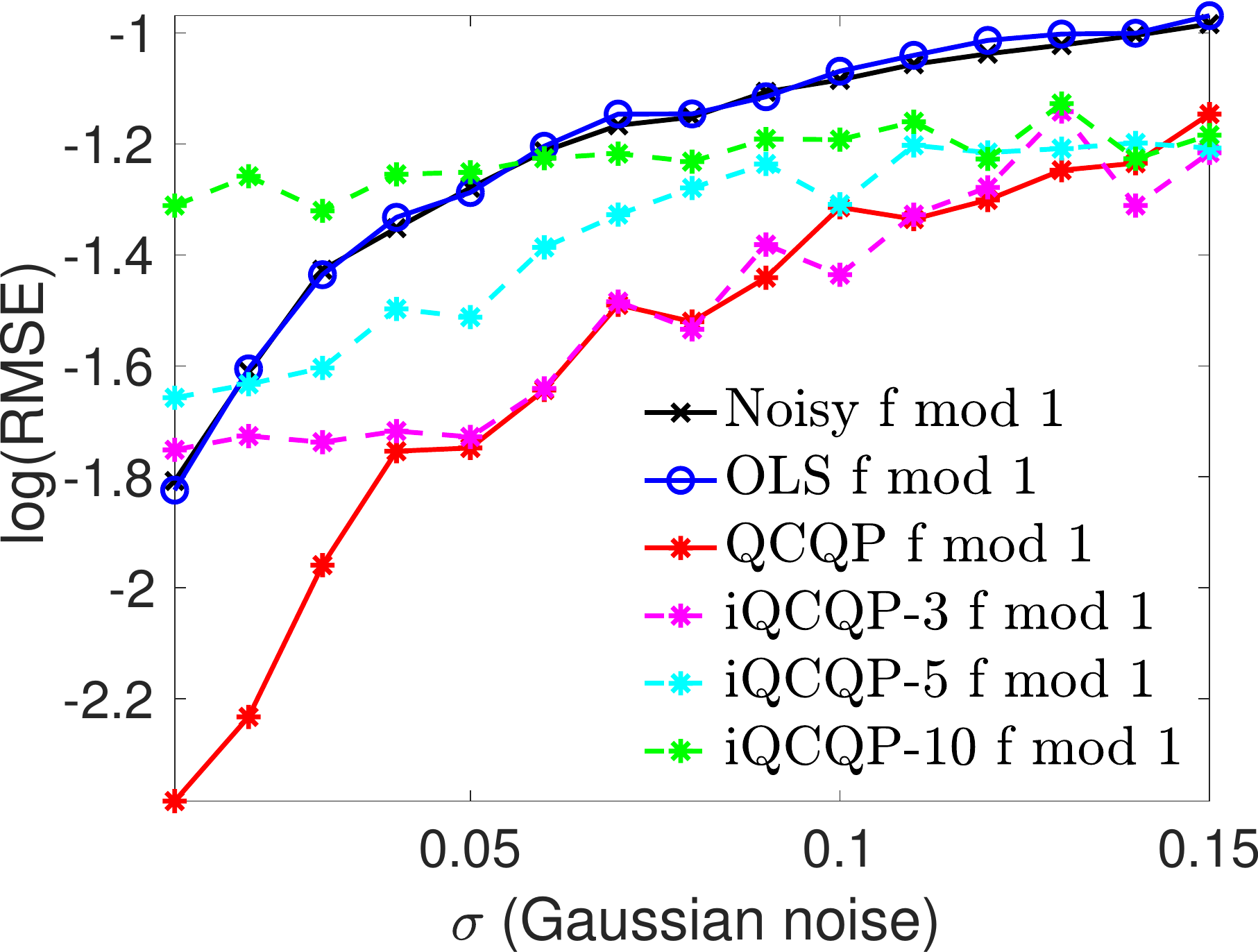} }
%
\subcaptionbox[]{ $k=3$, $\lambda= 1$}[ 0.19\textwidth ]
{\includegraphics[width=0.19\textwidth] {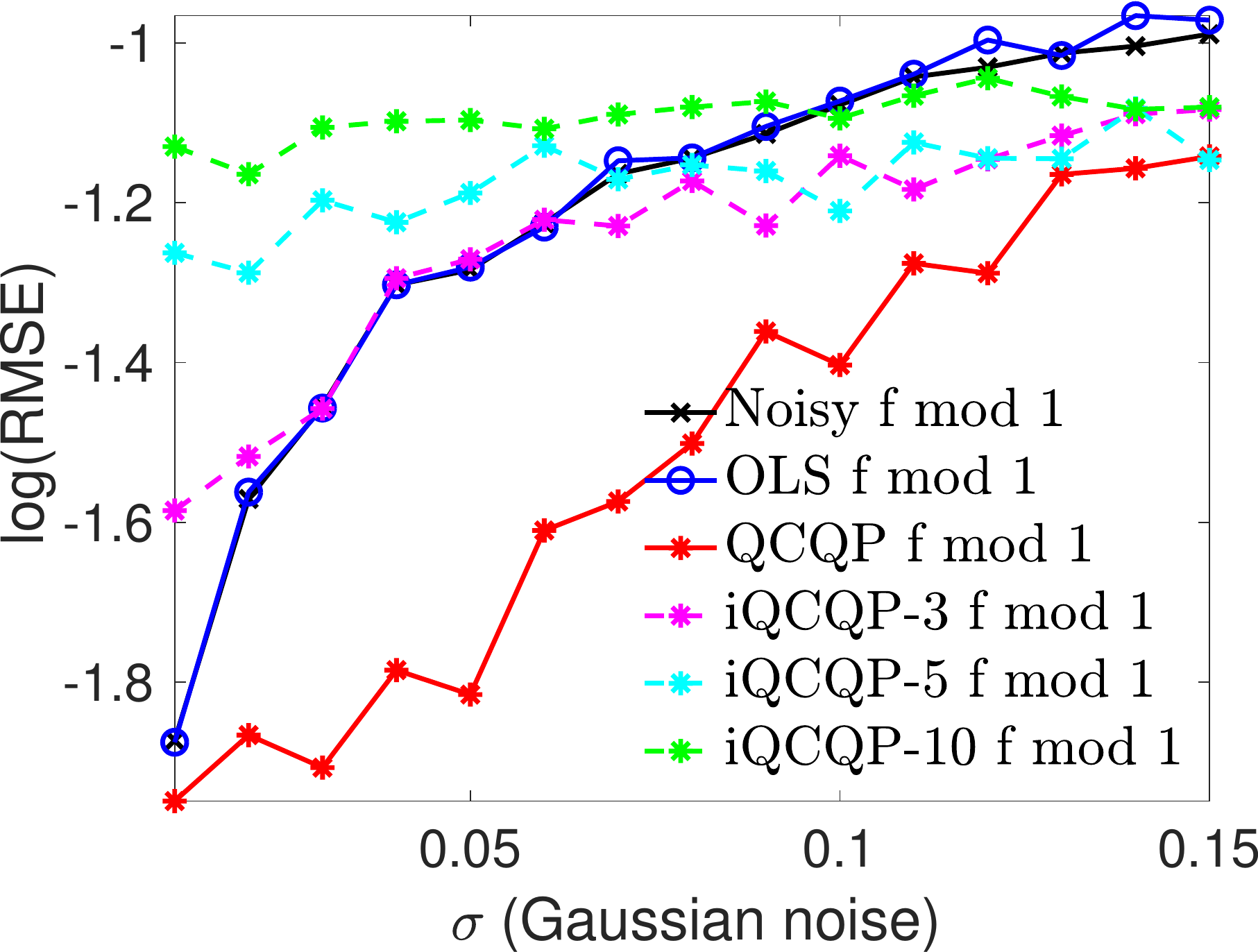} }
\hspace{0.1\textwidth} 
\subcaptionbox[]{ $k=5$, $\lambda= 0.03$}[ 0.19\textwidth ]
{\includegraphics[width=0.19\textwidth] {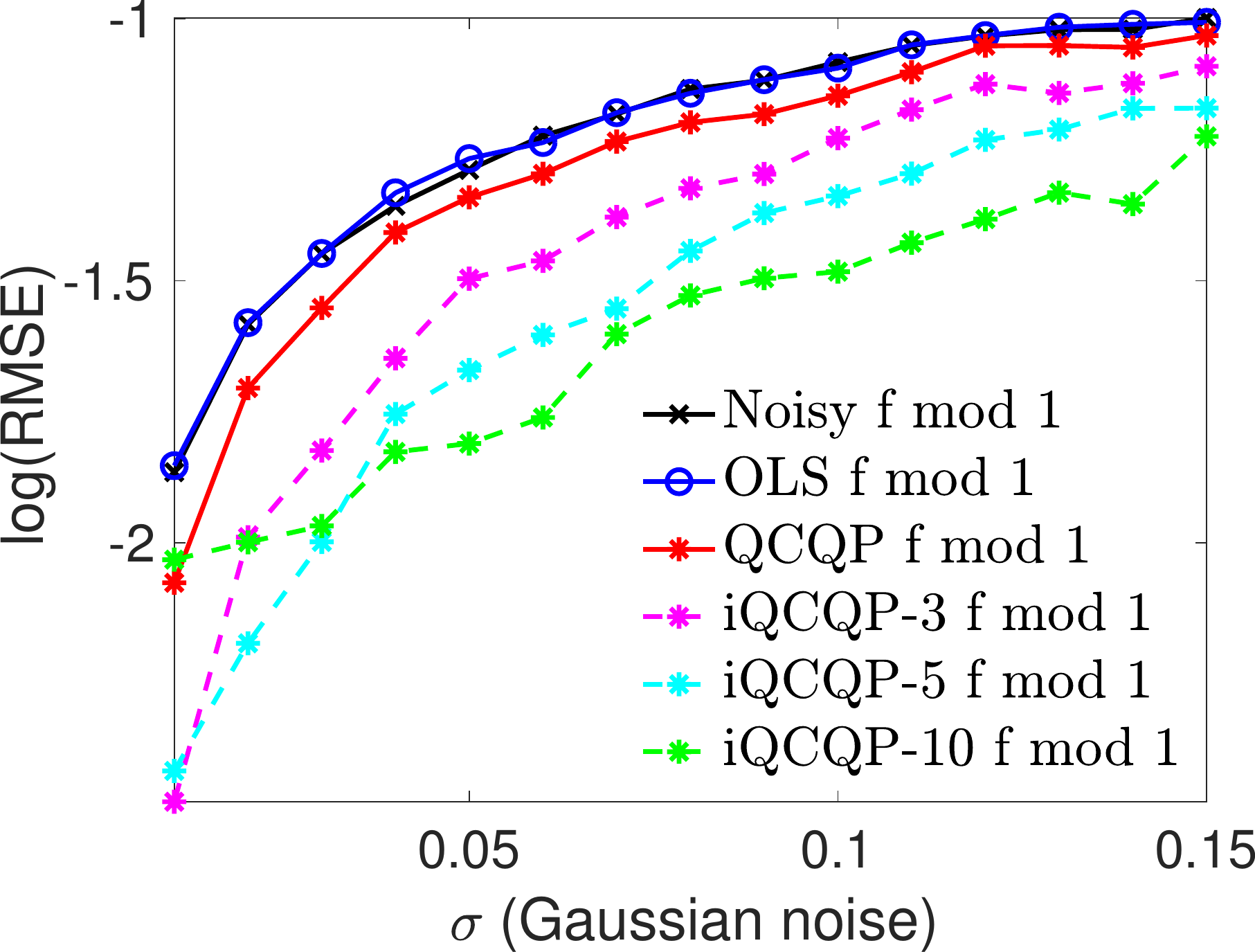} }
\subcaptionbox[]{ $k=5$, $\lambda= 0.1$}[ 0.19\textwidth ]
{\includegraphics[width=0.19\textwidth] {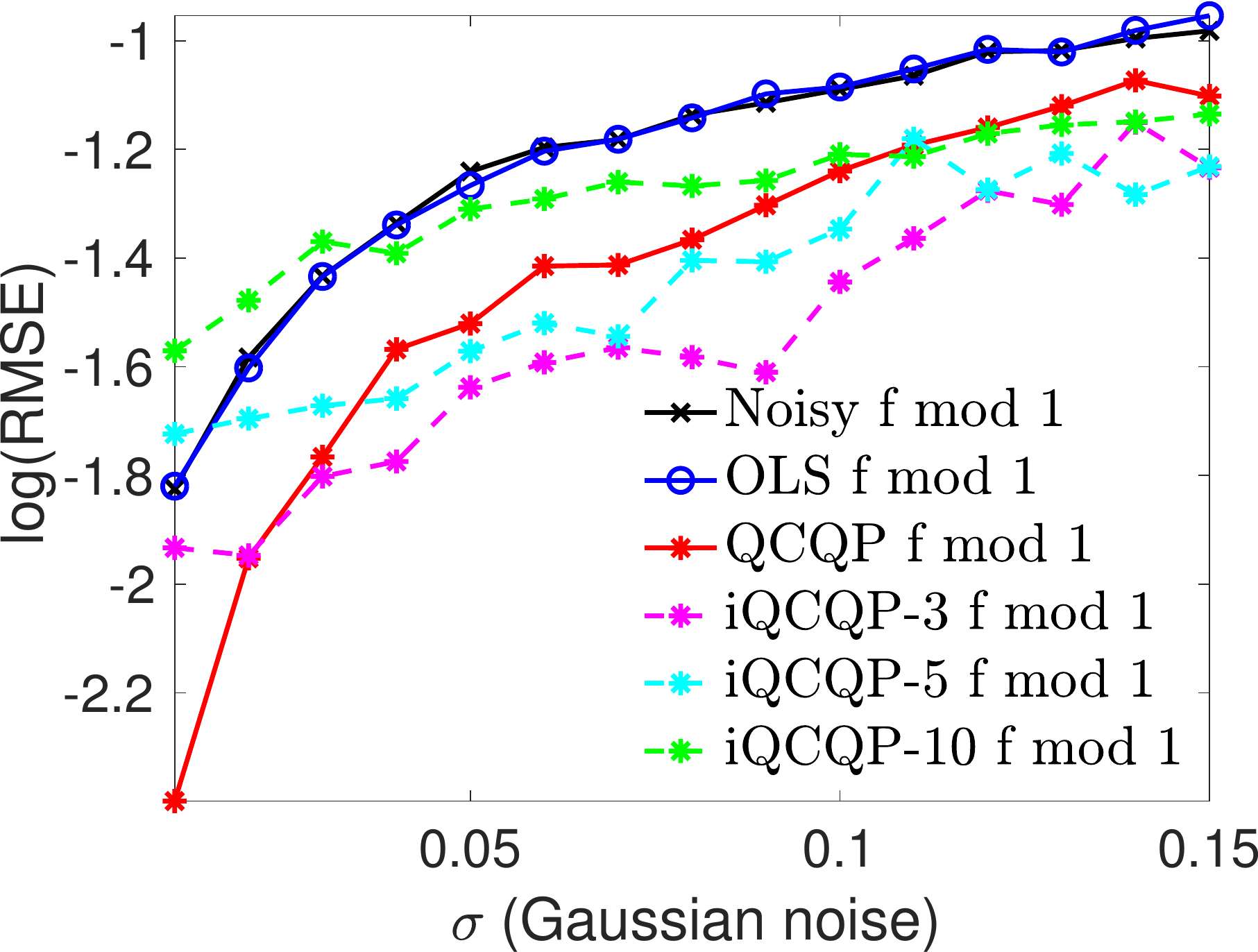} }
%
\subcaptionbox[]{ $k=5$, $\lambda= 0.3$}[ 0.19\textwidth ]
{\includegraphics[width=0.19\textwidth] {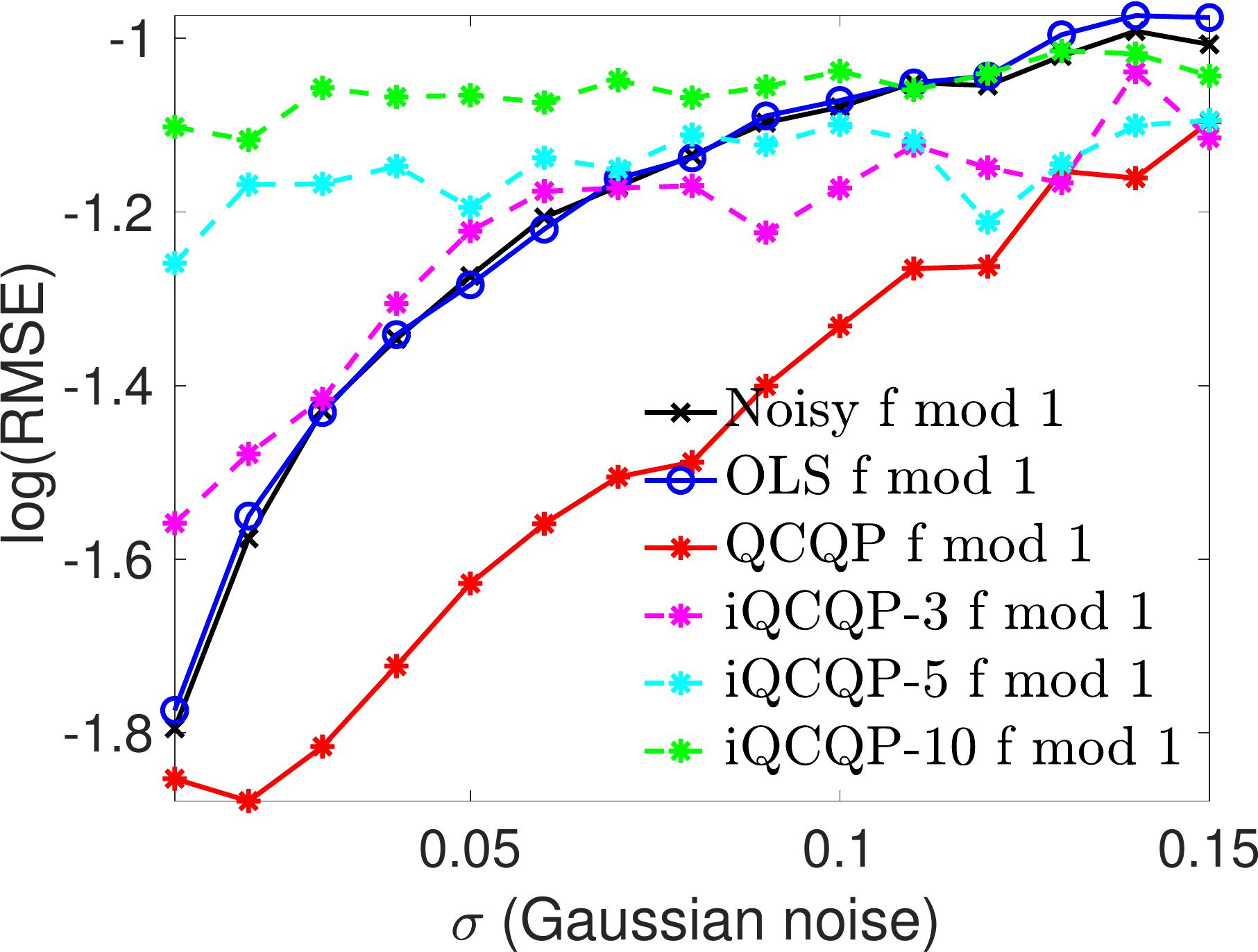} }
%
\subcaptionbox[]{ $k=5$, $\lambda= 0.5$}[ 0.19\textwidth ]
{\includegraphics[width=0.19\textwidth] {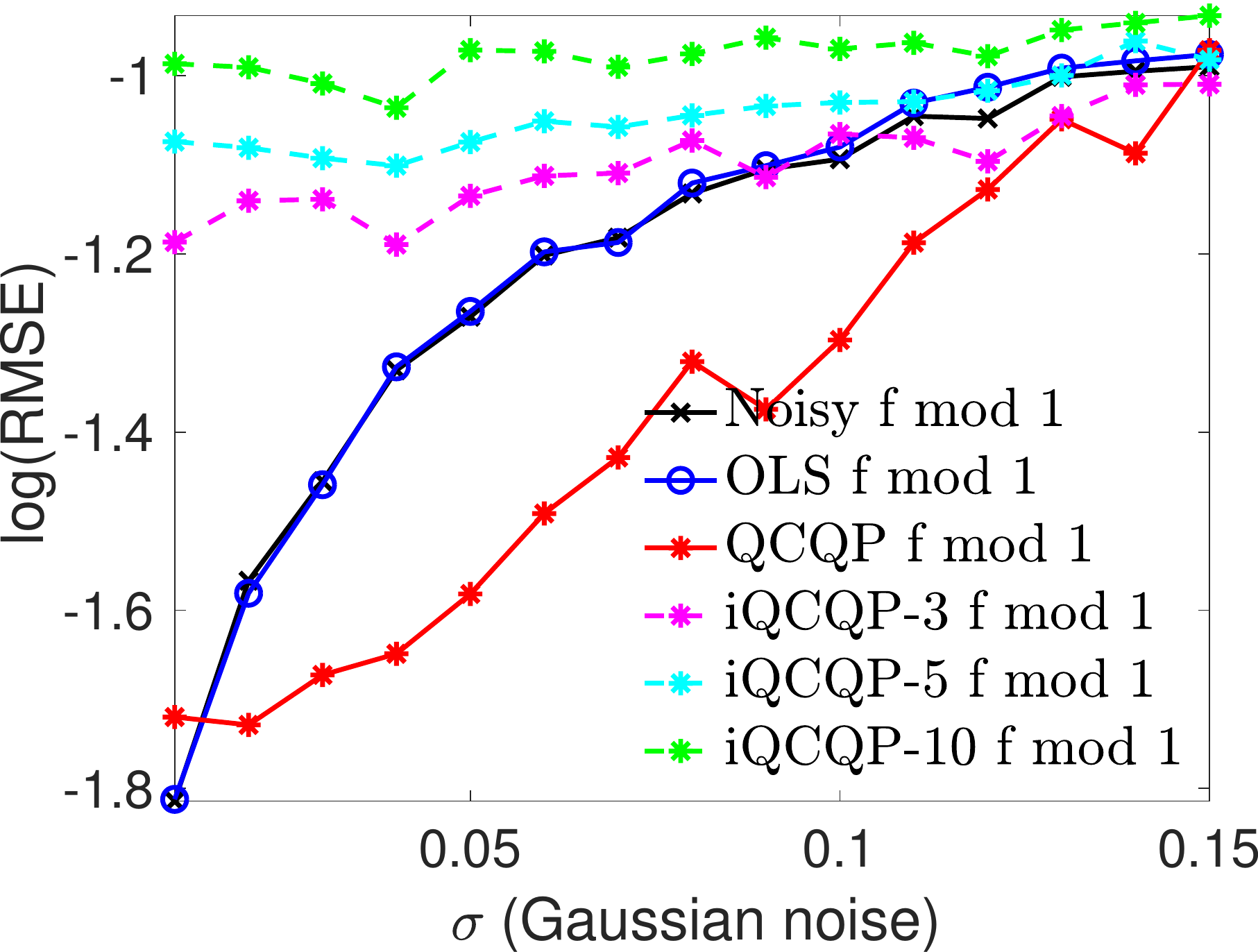} }
%
\subcaptionbox[]{ $k=5$, $\lambda= 1$}[ 0.19\textwidth ]
{\includegraphics[width=0.19\textwidth] {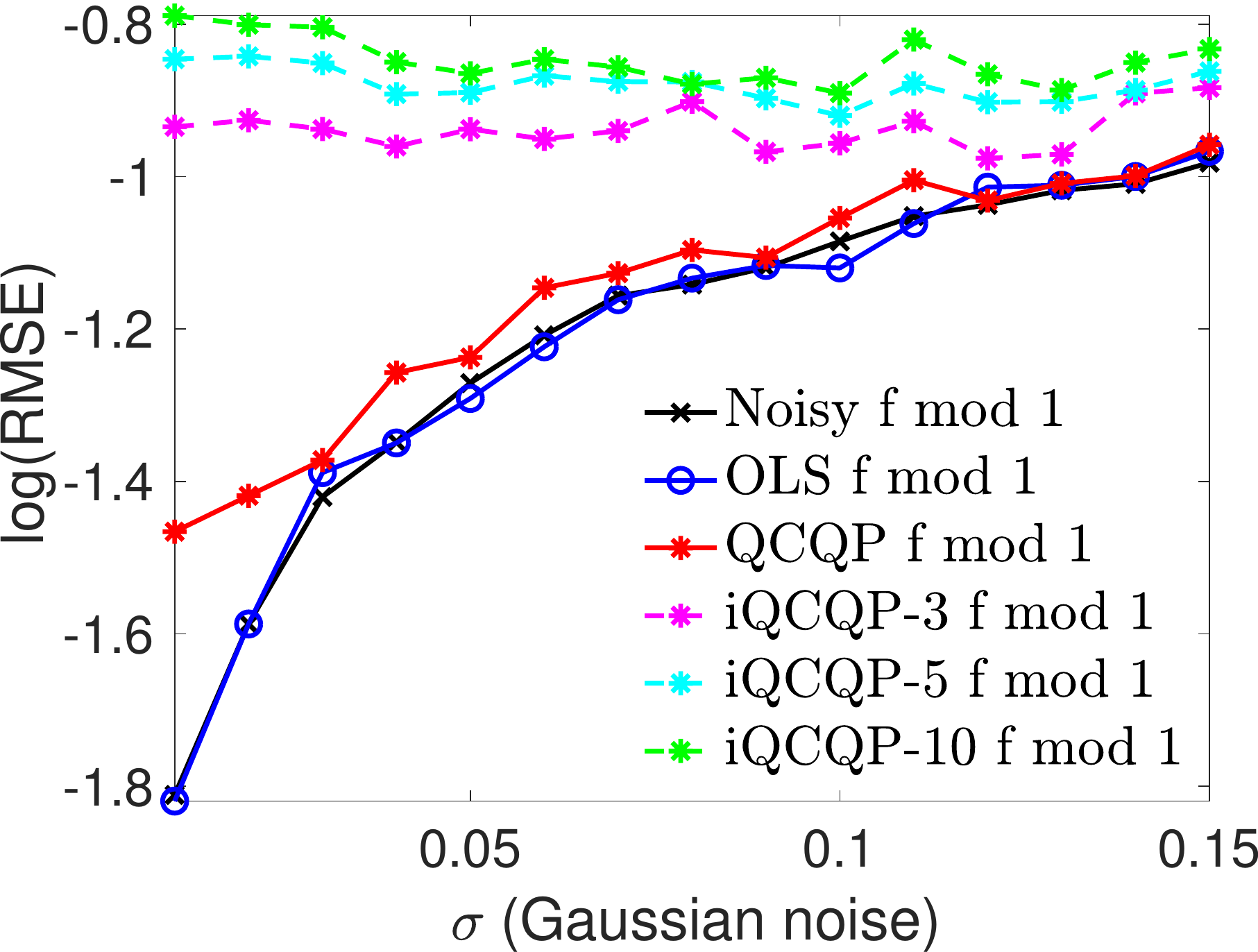} }
\hspace{0.1\textwidth} 
%
%
\vspace{-3mm}
\captionsetup{width=0.98\linewidth}
\caption[Short Caption]{Recovery errors for the denoised $f$ \hspace{-1mm} mod 1 samples, for $n=500$ under the Gaussian noise model (20 trials).  \textbf{QCQP} denotes Algorithm \ref{algo:two_stage_denoise} without  the unwrapping stage performed by \textbf{OLS} \eqref{eq:ols_unwrap_lin_system}.
}
\label{fig:Sims_f1_Gaussian_fmod1}
\end{figure}
 
\vspace{-3mm}
 
\begin{figure}[!ht]
\centering
\subcaptionbox[]{ $k=2$, $\lambda= 0.03$}[ 0.19\textwidth ]
{\includegraphics[width=0.19\textwidth] {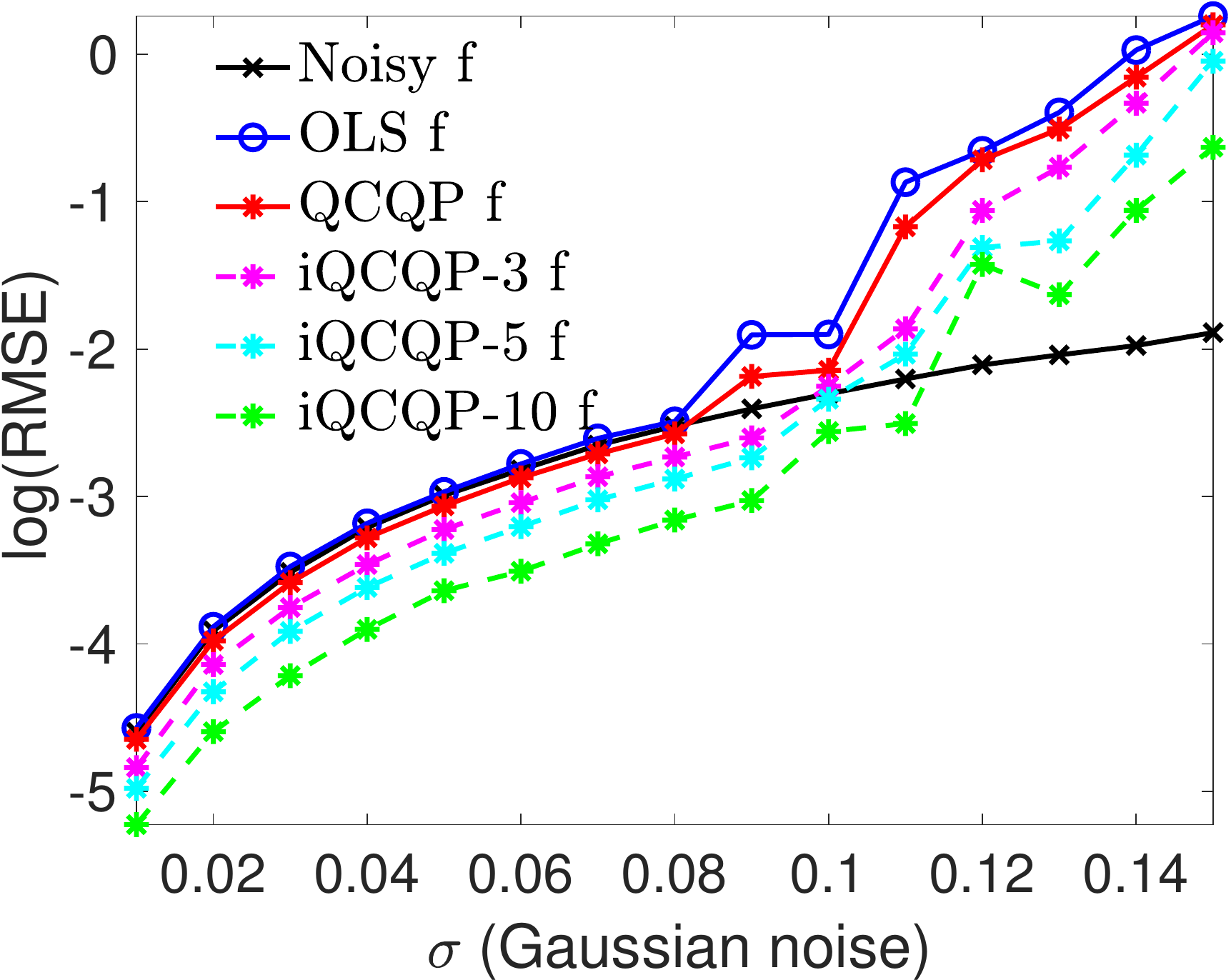} }
\subcaptionbox[]{ $k=2$, $\lambda= 0.1$}[ 0.19\textwidth ]
{\includegraphics[width=0.19\textwidth] {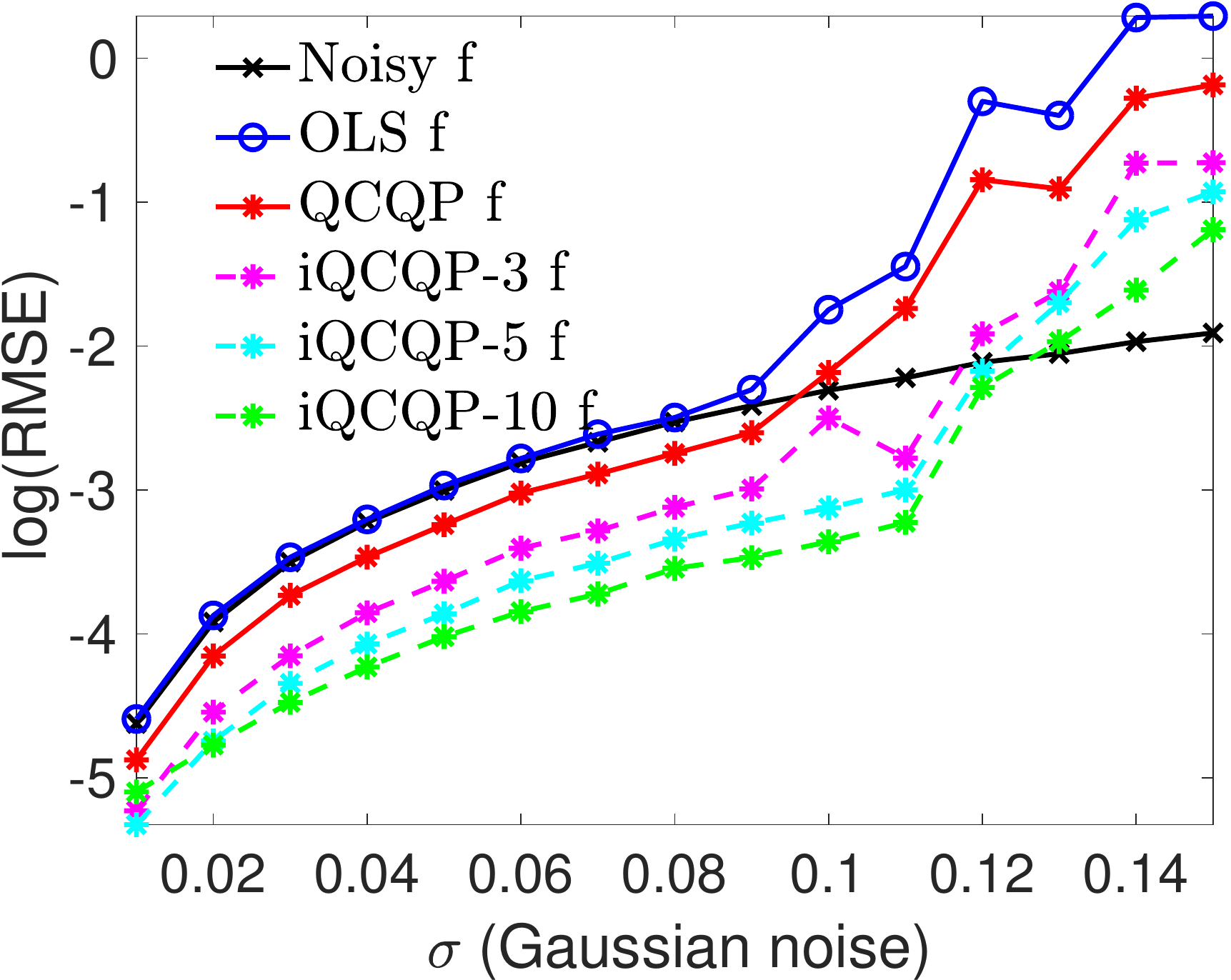} }
%
\subcaptionbox[]{ $k=2$, $\lambda= 0.3$}[ 0.19\textwidth ]
{\includegraphics[width=0.19\textwidth] {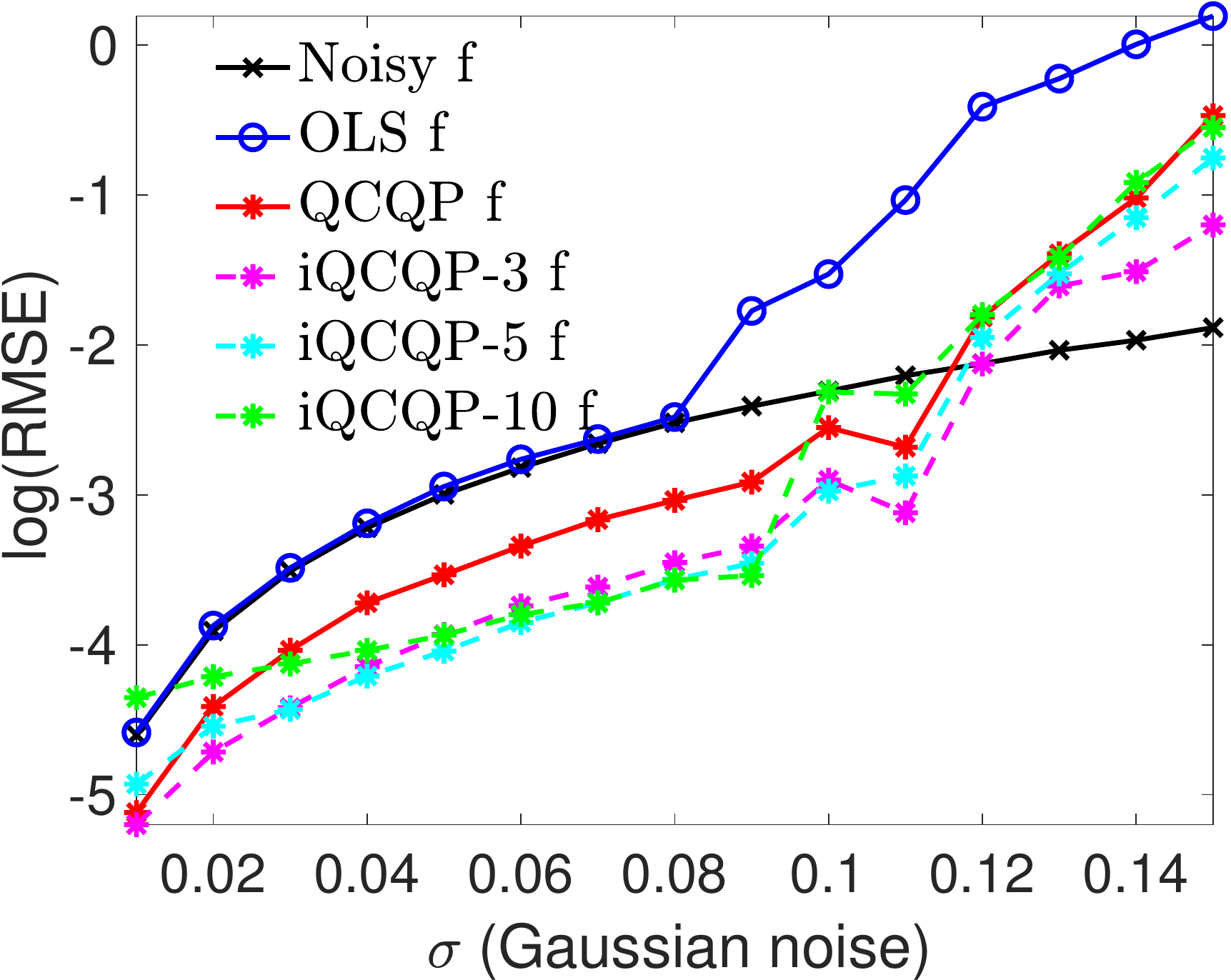} }
%
\subcaptionbox[]{ $k=2$, $\lambda= 0.5$}[ 0.19\textwidth ]
{\includegraphics[width=0.19\textwidth] {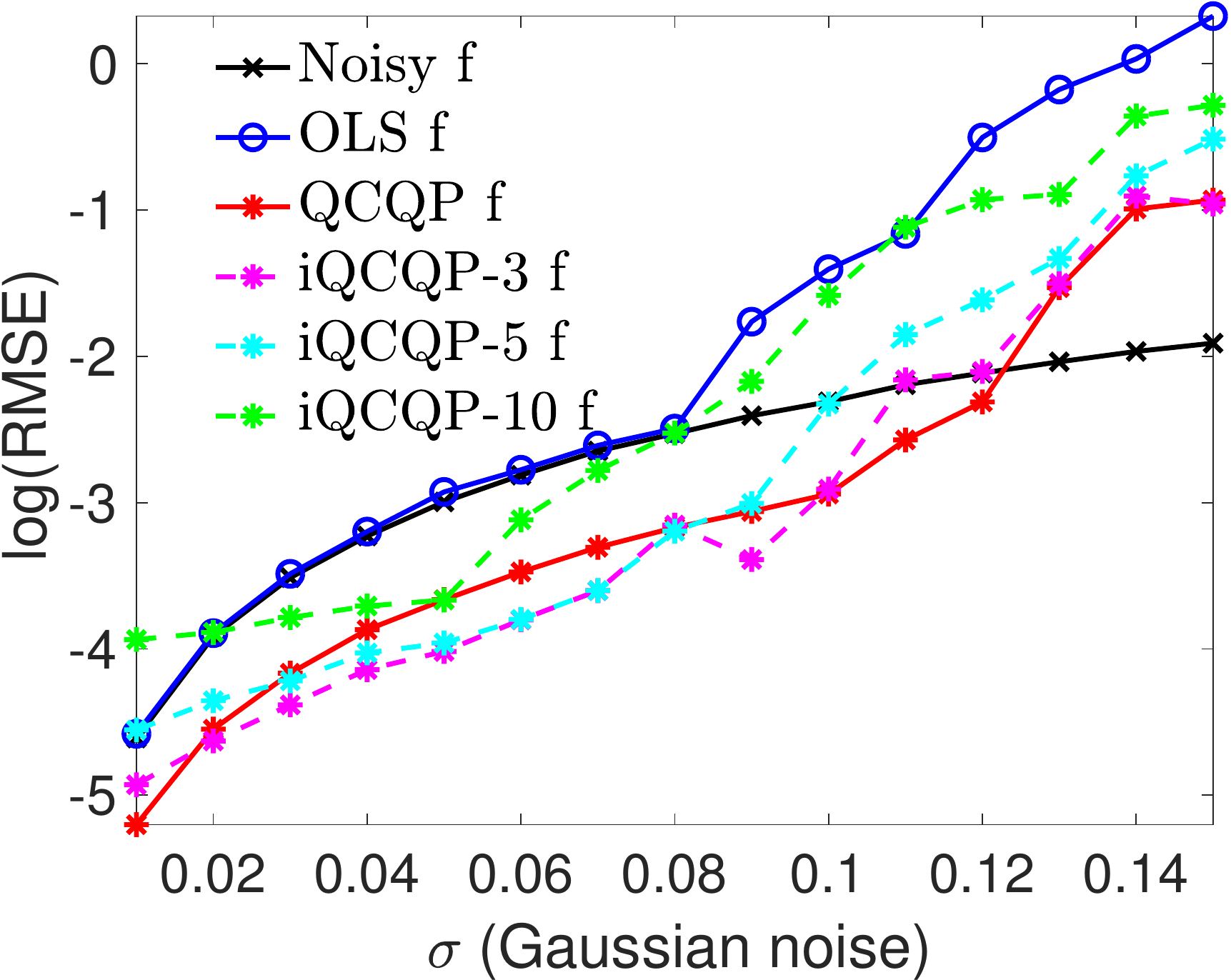} }
%
\subcaptionbox[]{ $k=2$, $\lambda= 1$}[ 0.19\textwidth ]
{\includegraphics[width=0.19\textwidth] {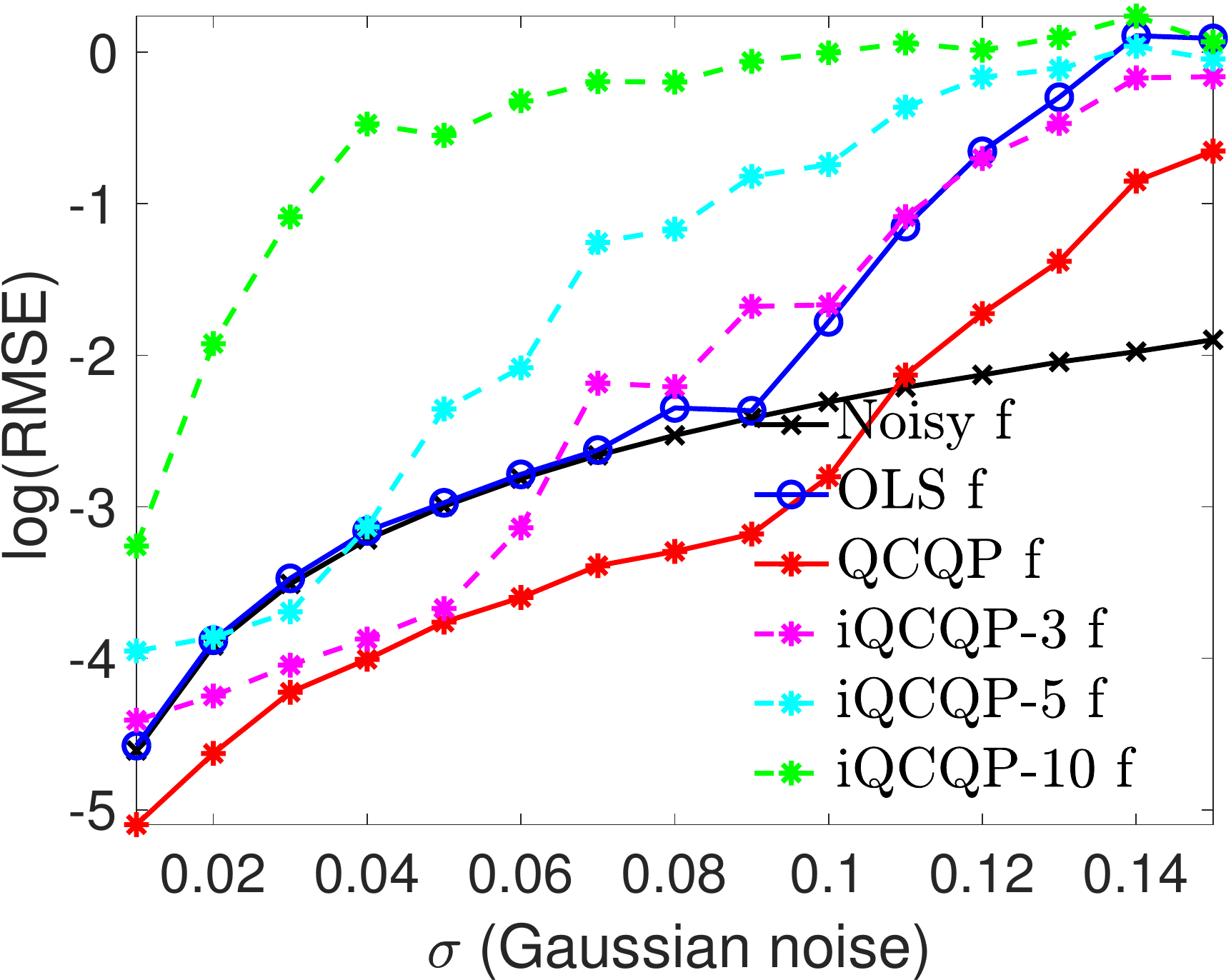} }
%
%
%
\subcaptionbox[]{ $k=3$, $\lambda= 0.03$}[ 0.19\textwidth ]
{\includegraphics[width=0.19\textwidth] {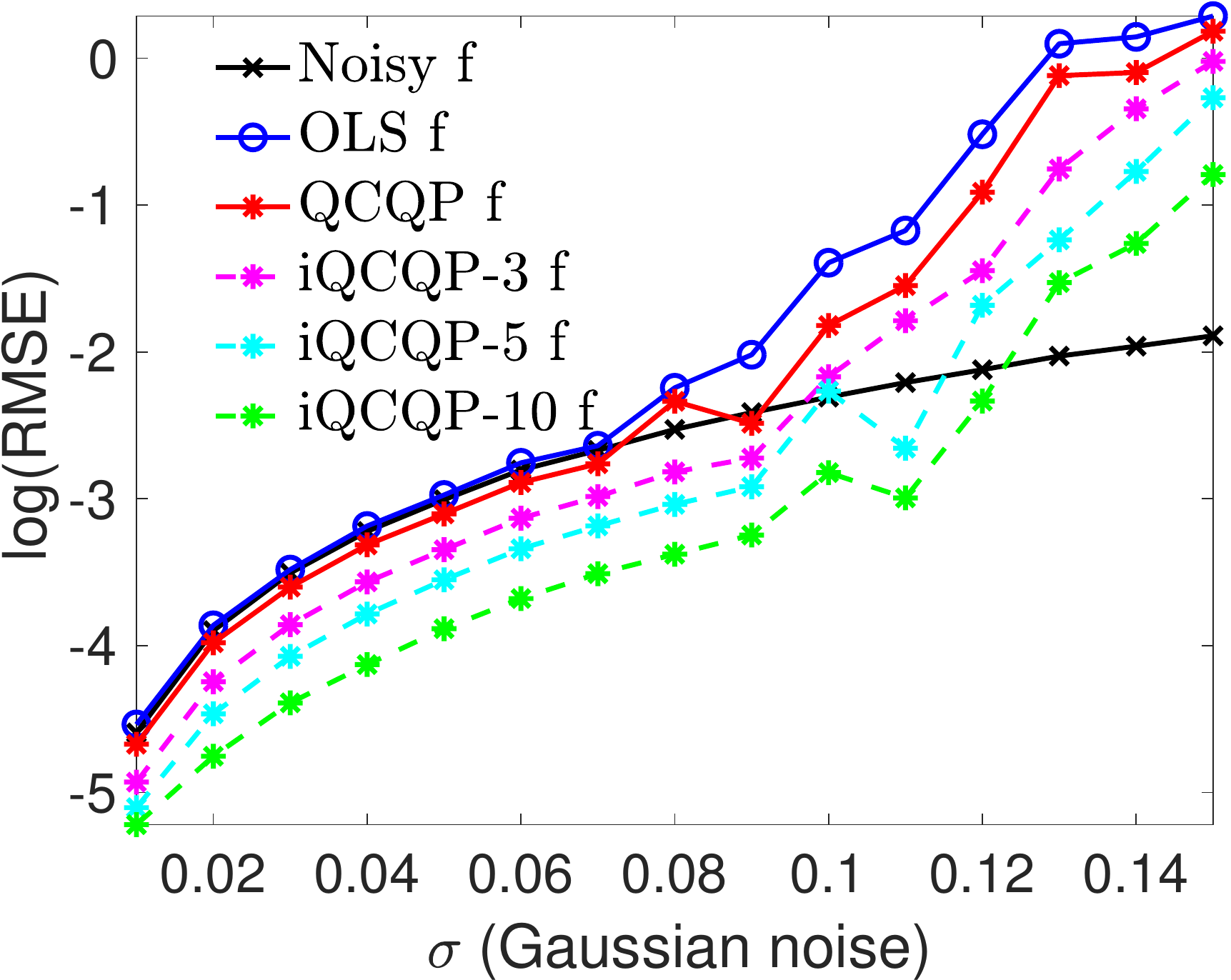} }
\subcaptionbox[]{ $k=3$, $\lambda= 0.1$}[ 0.19\textwidth ]
{\includegraphics[width=0.19\textwidth] {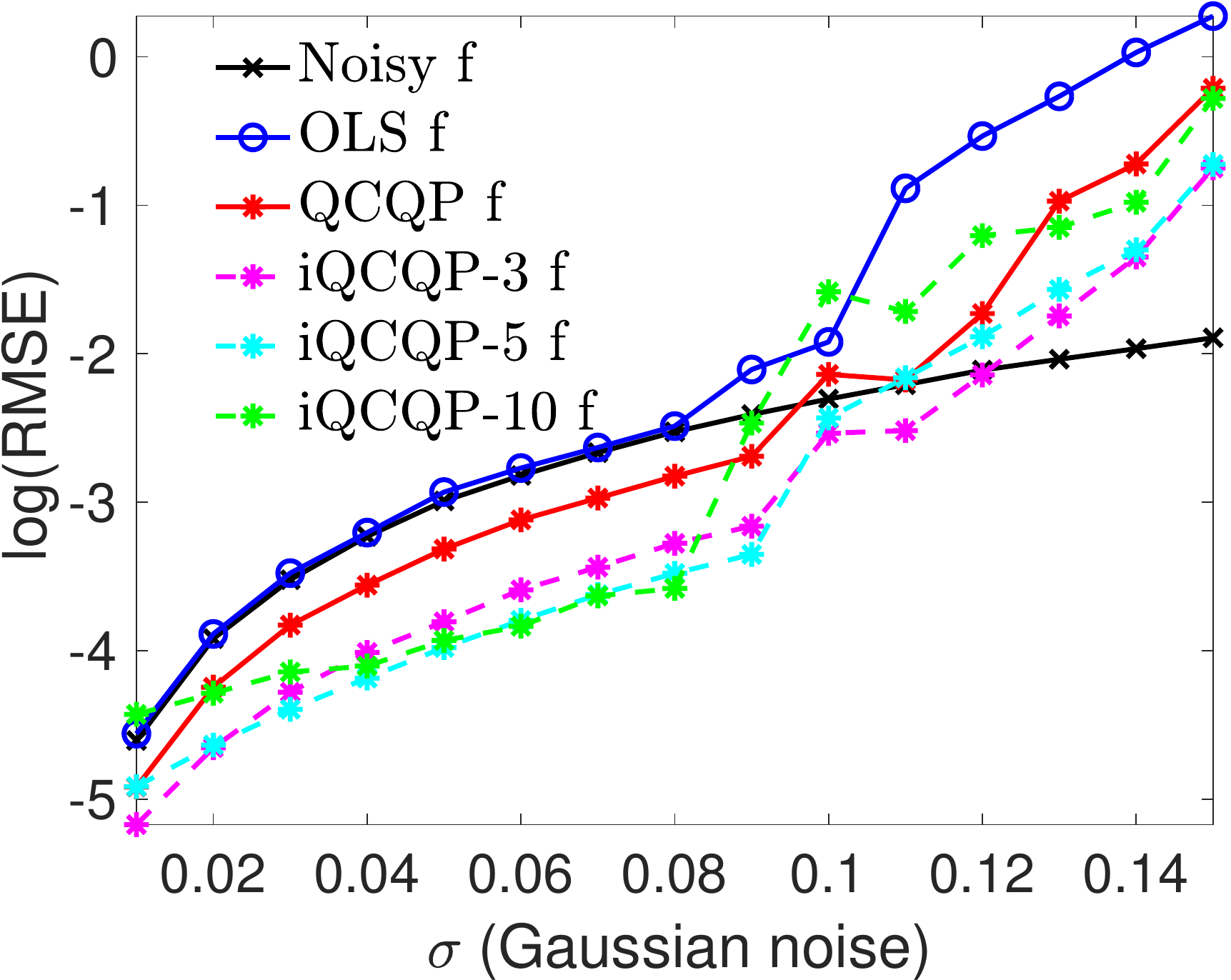} }
%
\subcaptionbox[]{ $k=3$, $\lambda= 0.3$}[ 0.19\textwidth ]
{\includegraphics[width=0.19\textwidth] {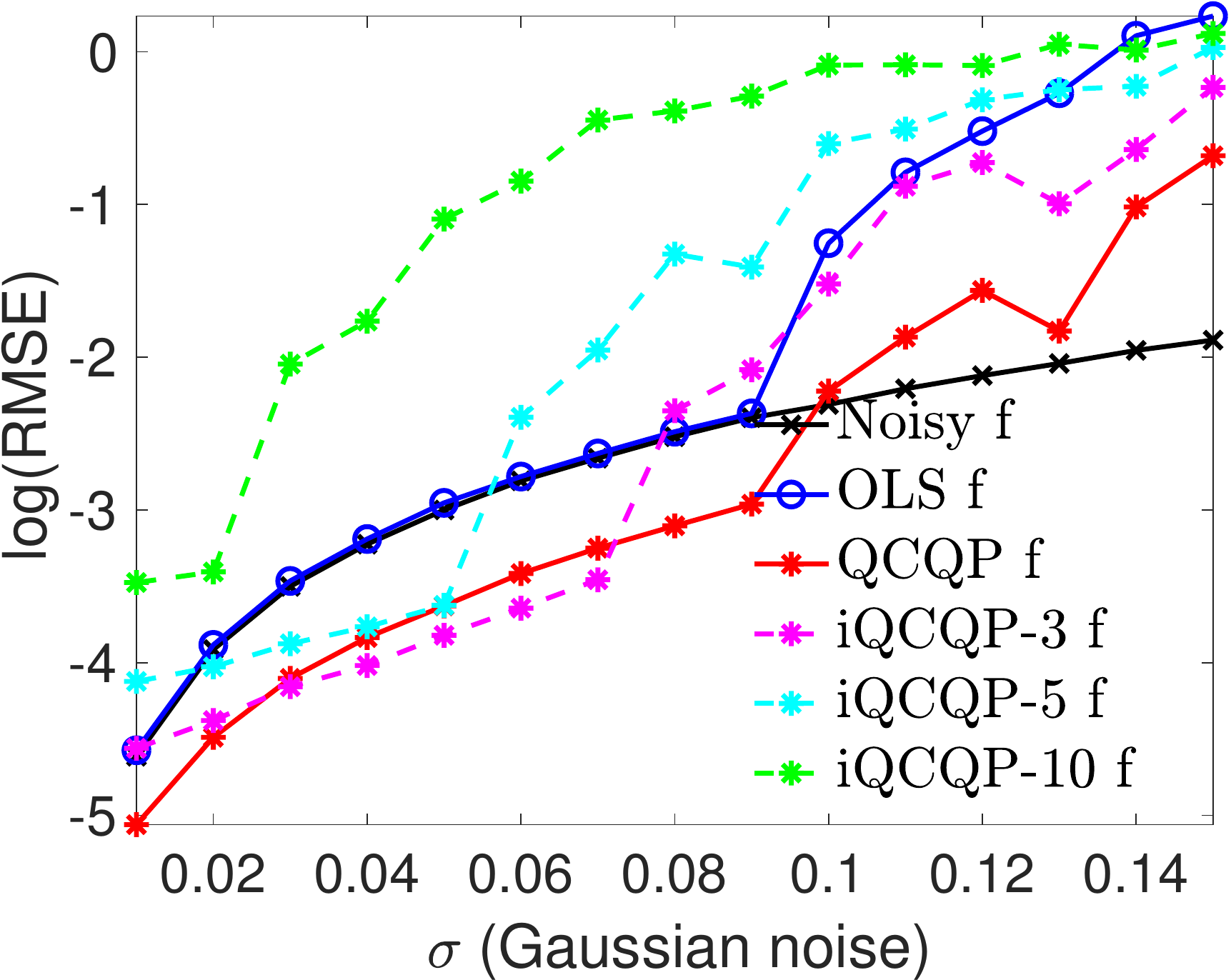} }
%
\subcaptionbox[]{ $k=3$, $\lambda= 0.5$}[ 0.19\textwidth ]
{\includegraphics[width=0.19\textwidth] {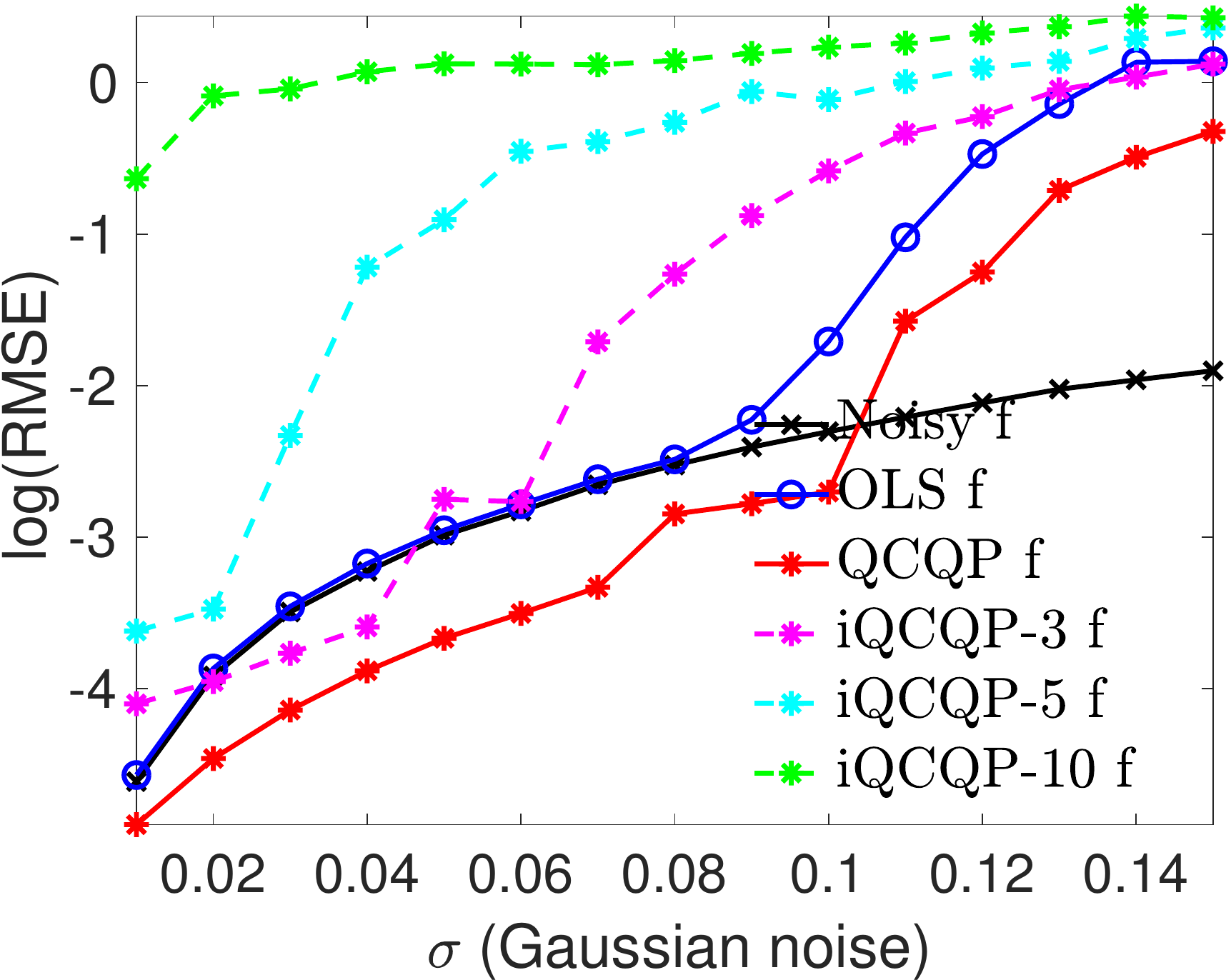} }
%
\subcaptionbox[]{ $k=3$, $\lambda= 1$}[ 0.19\textwidth ]
{\includegraphics[width=0.19\textwidth] {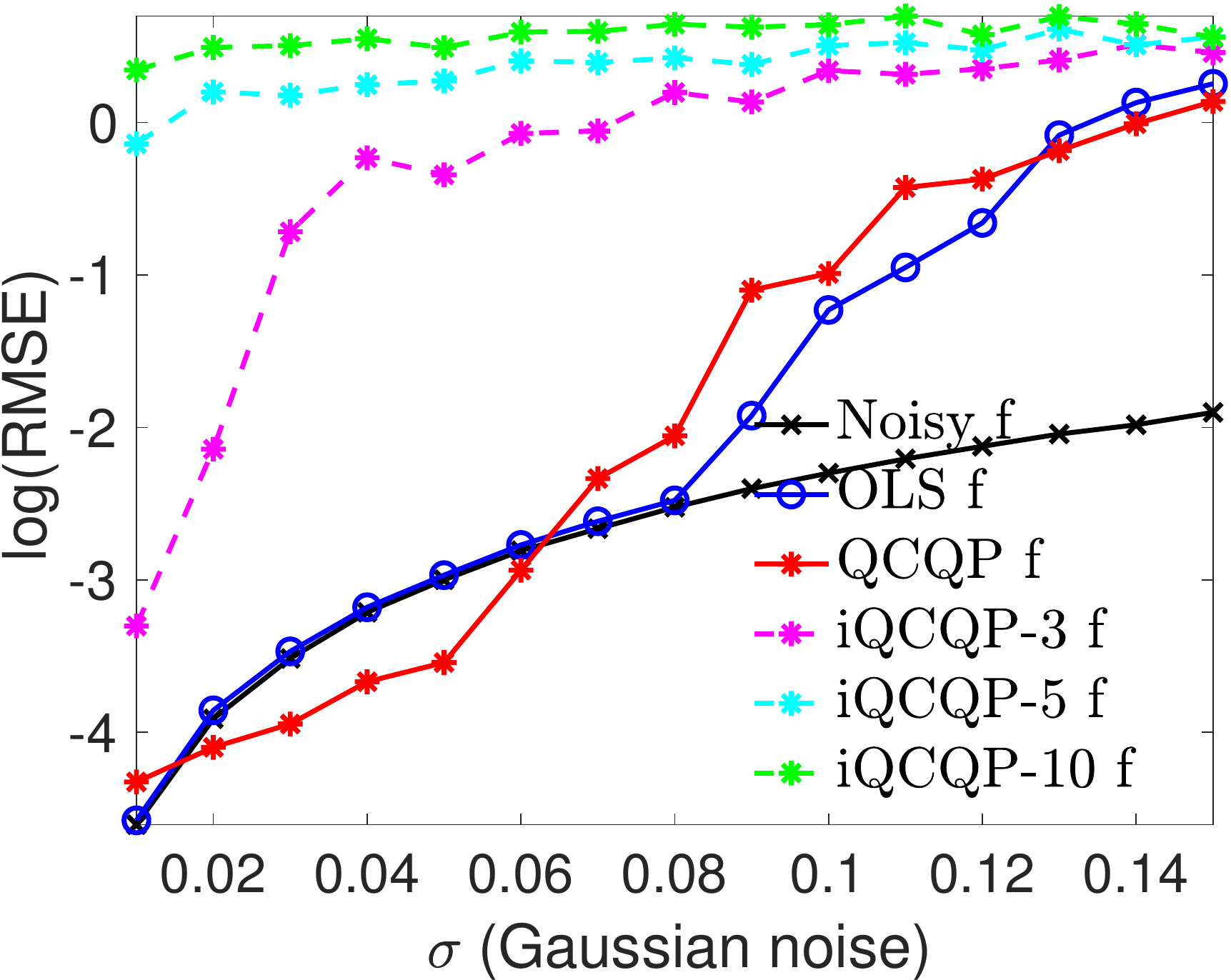} }
%
%
%
\subcaptionbox[]{ $k=5$, $\lambda= 0.03$}[ 0.19\textwidth ]
{\includegraphics[width=0.19\textwidth] {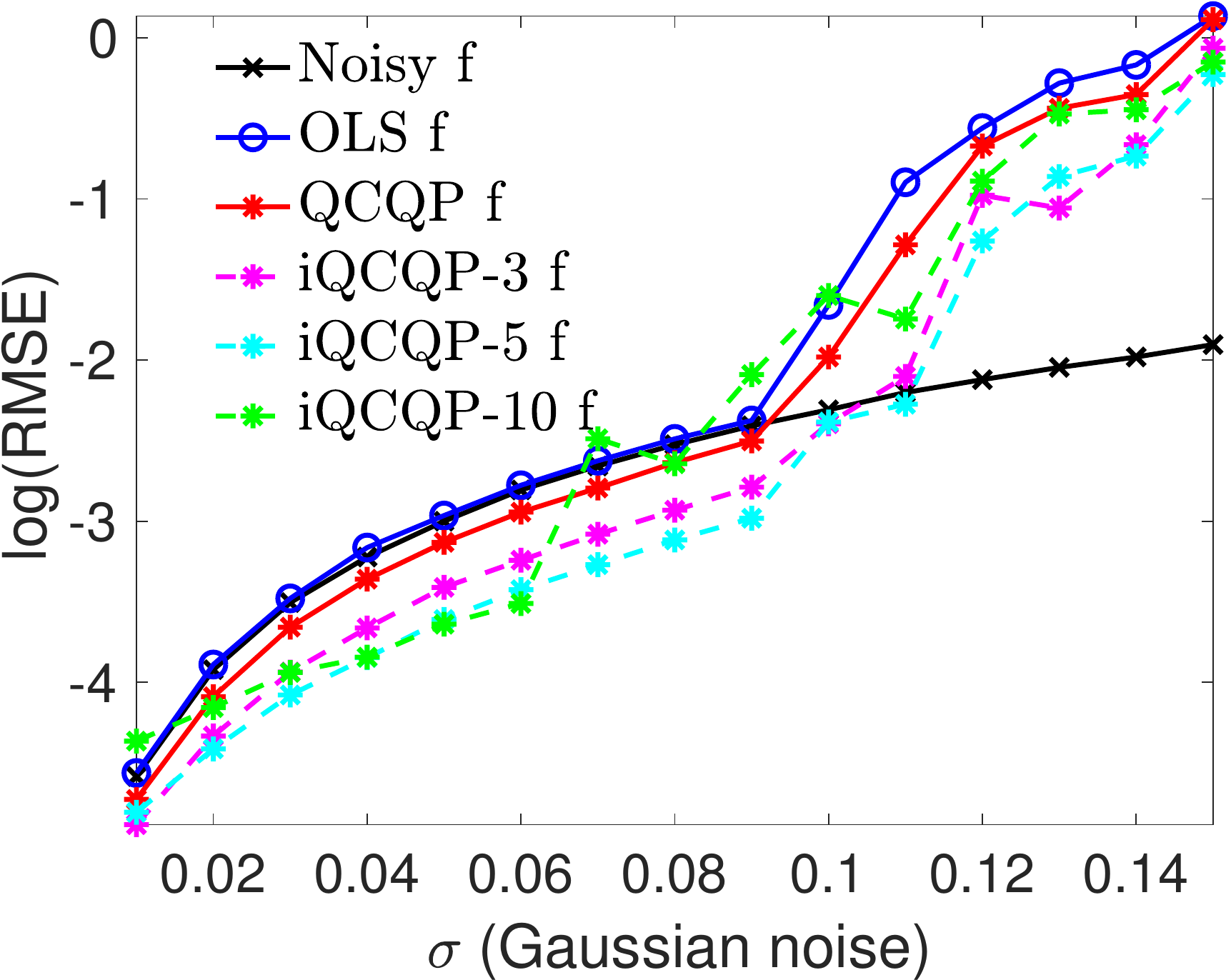} }
\subcaptionbox[]{ $k=5$, $\lambda= 0.1$}[ 0.19\textwidth ]
{\includegraphics[width=0.19\textwidth] {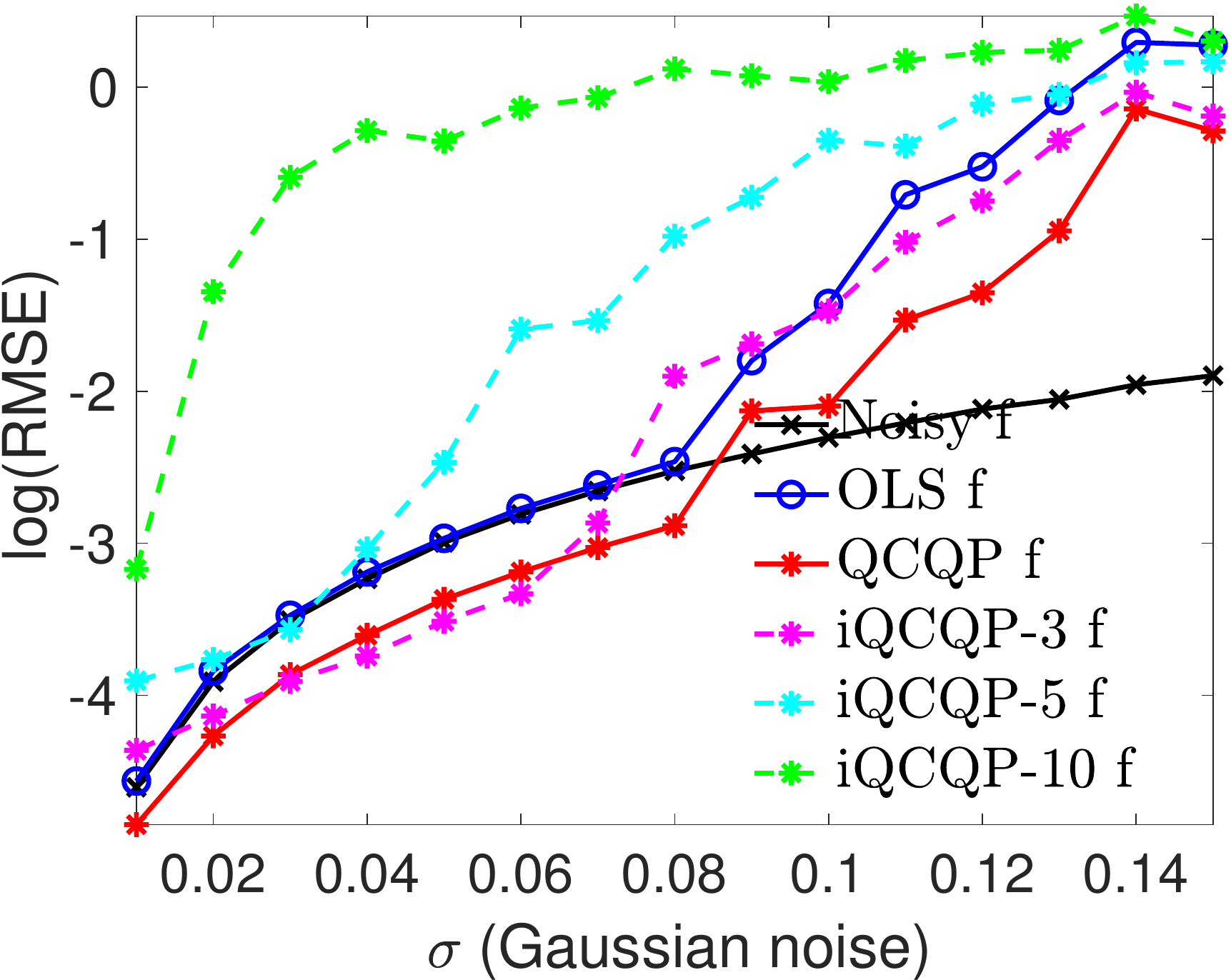} }
%
\subcaptionbox[]{ $k=5$, $\lambda= 0.3$}[ 0.19\textwidth ]
{\includegraphics[width=0.19\textwidth] {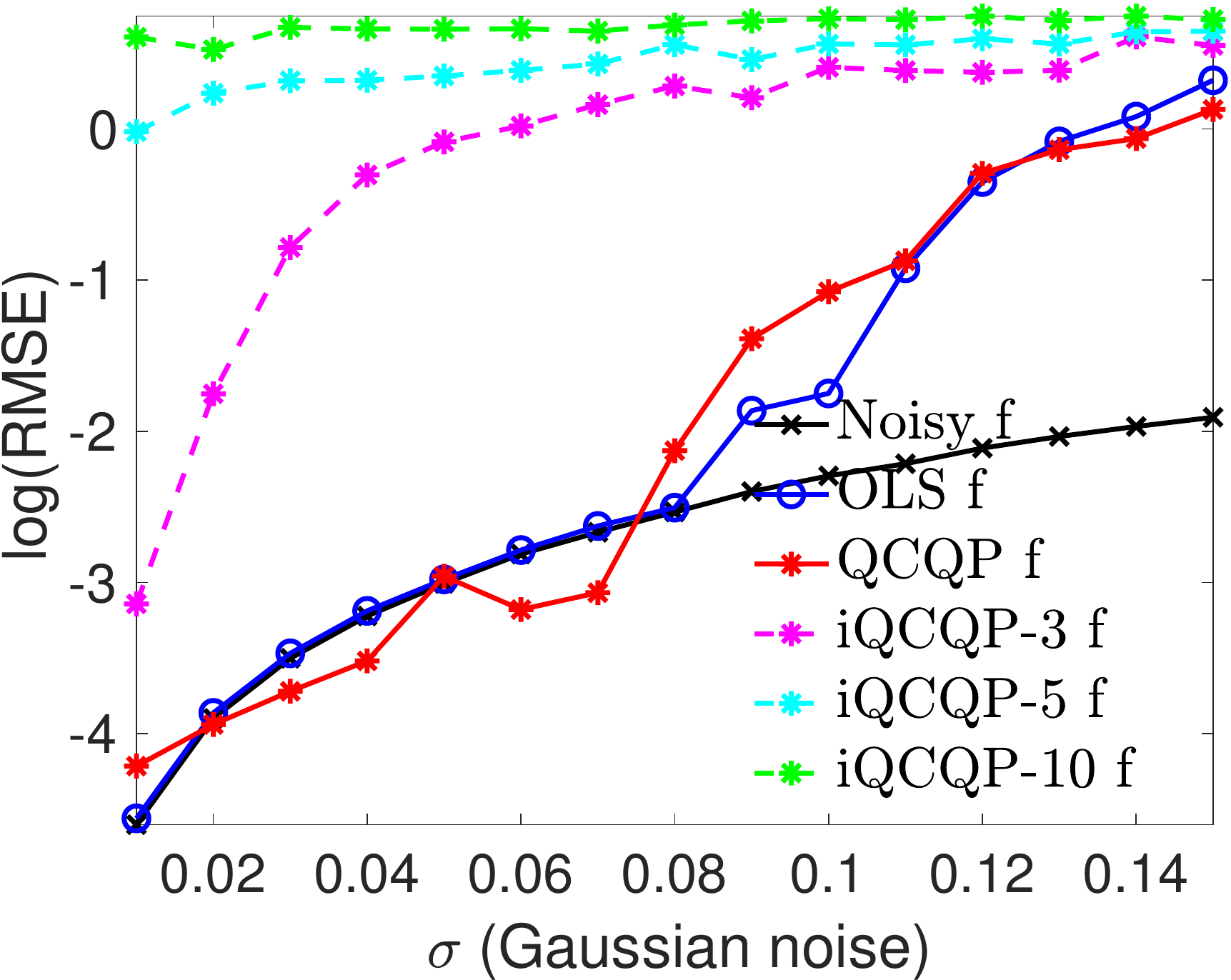} }
%
\subcaptionbox[]{ $k=5$, $\lambda= 0.5$}[ 0.19\textwidth ]
{\includegraphics[width=0.19\textwidth] {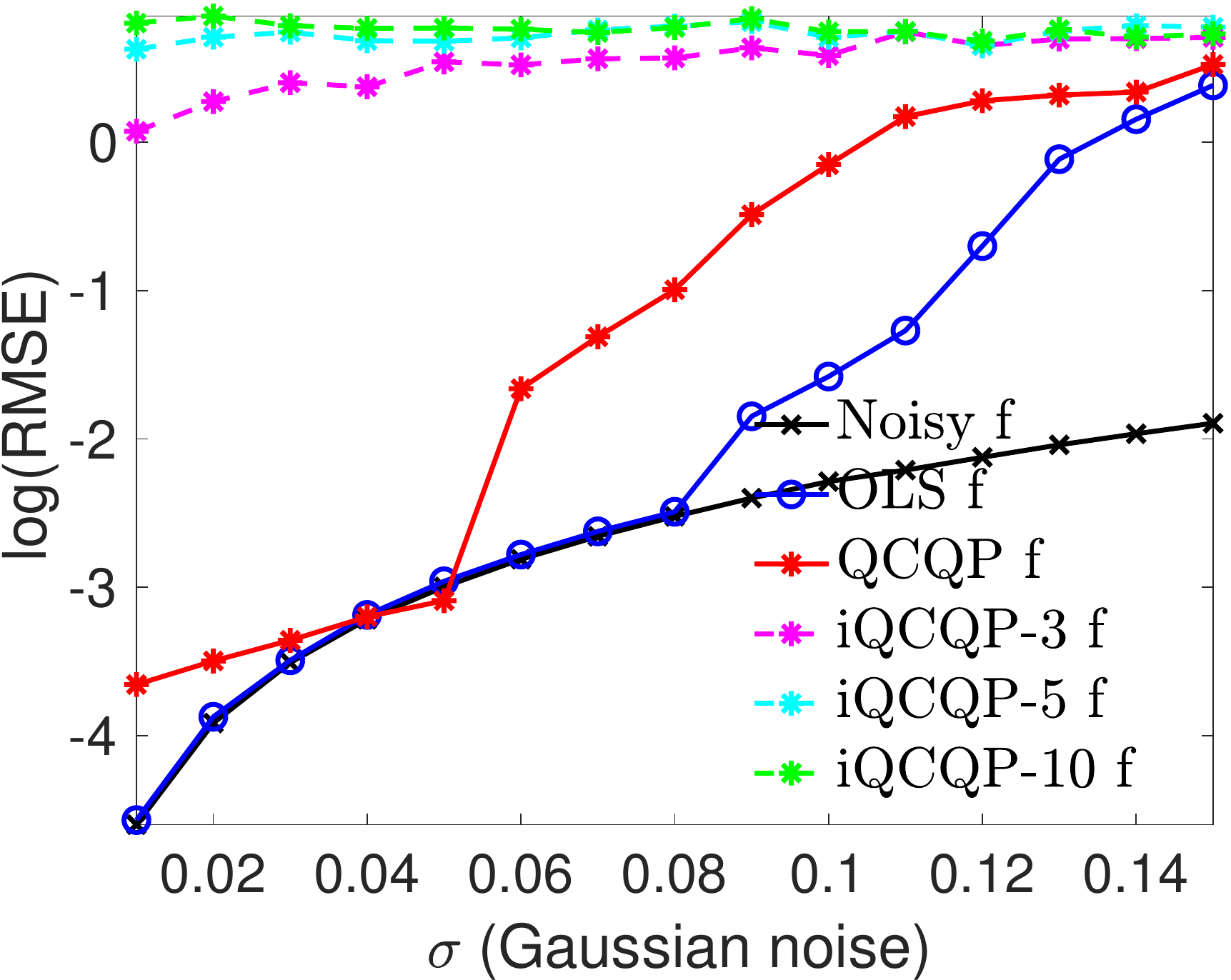} }
%
\subcaptionbox[]{ $k=5$, $\lambda= 1$}[ 0.19\textwidth ]
{\includegraphics[width=0.19\textwidth] {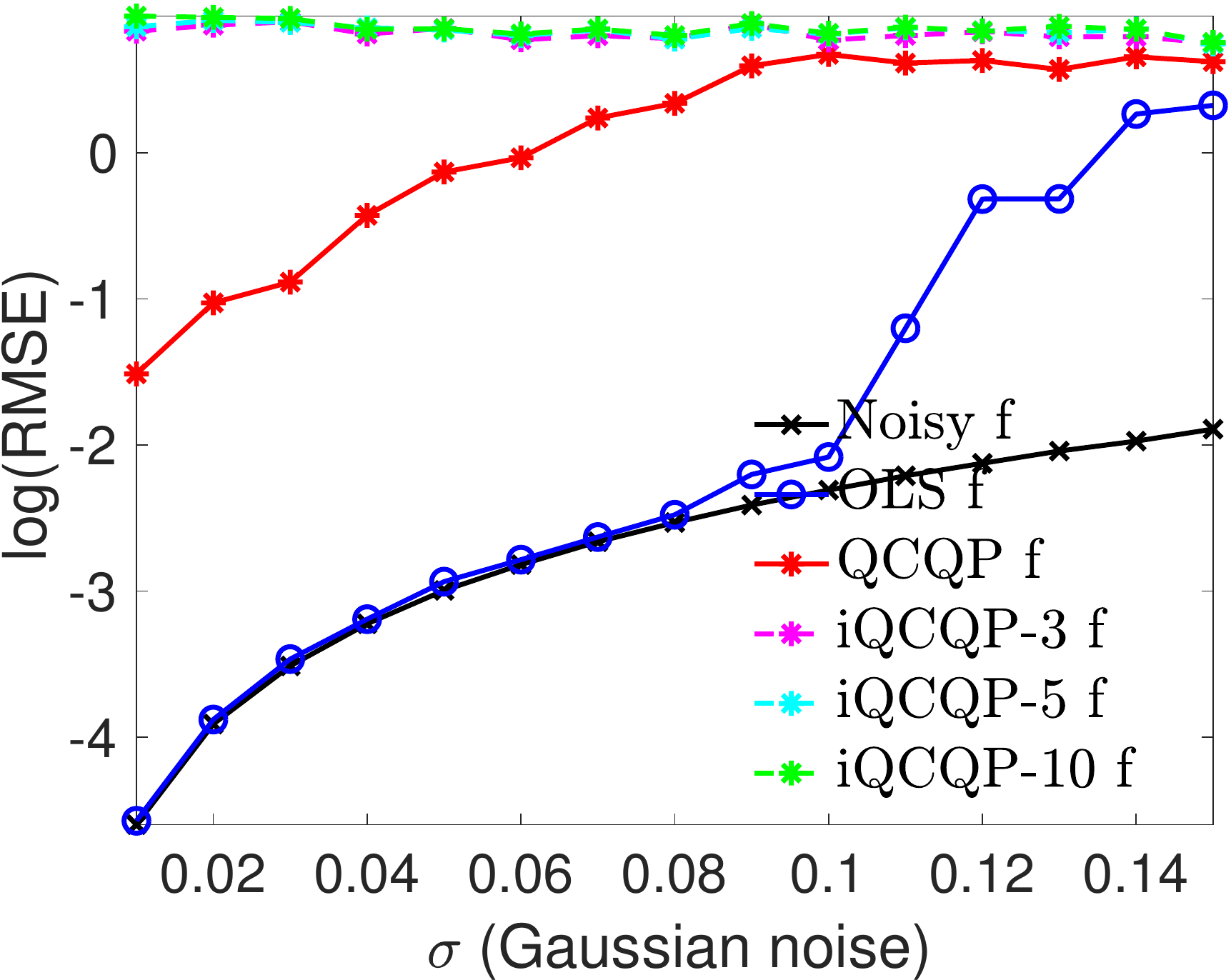} }
%
\vspace{-3mm}
\captionsetup{width=0.98\linewidth}
\caption[Short Caption]{Recovery errors for the final estimated $f$  samples, for $n=500$ under the Gaussian noise model (20 trials).  \textbf{QCQP} denotes Algorithm \ref{algo:two_stage_denoise} where the unwrapping stage is  performed via \textbf{OLS} \eqref{eq:ols_unwrap_lin_system}. 
}
\label{fig:Sims_f1_Gaussian_f}
\end{figure}
 

%
\subsection{Numerical experiments: Bernoulli Model} \label{sec:num_exps_Bernoulli}
%
Figure \ref{fig:instances_f1_Bernoulli_delta_cors} shows noisy instances of the Bernoulli-Uniform  noise model, for $n=500$ samples. 
As shown in the scatter plots (top row), most $\delta$ increments  take value 0 (since only a small fraction of the samples are corrupted by noise). For higher noise levels, the function \eqref{eq:qta_recovery_rule} will produce an increase numbers of errors in \eqref{eq:ols_unwrap_lin_system}.  The  remaining plots show the corresponding f mod 1 signal (clean, noisy, denoised via \textbf{QCQP}) for three levels of noise. 

Figure  \ref{fig:instances_f1_Bernoulli} shows instances of the recovery process, highlighting the noise level at which each method shows a significant decrease in performance. Note that at $p=10\%$ noise level, the simple \textbf{OLS} approach exhibits poor performance, while \textbf{QCQP} is able to denoise most of the  erroneous spikes. 
Interestingly, \textbf{iQCQP} performs worse than \textbf{QCQP} across all levels of noise. 

Finally, Figures \ref{fig:Sims_f1_Bernoulli_fmod1} and \ref{fig:Sims_f1_Bernoulli_f}  show recovery errors (averaged over 20 runs) for the estimated values of $f$ mod 1 and $f$, as we scan across the input parameters $k  \in \{2,3,5\}$ and  $ \lambda   \in  \{ 0.03, 0.1, 0.3, 0.5, 1\}$, at varying levels of noise $p \in [0, 0.15]$. We remark that $k=5$ clearly returns the worst performance, and higher values of $\lambda$ yield better results for \textbf{QCQP}. Interestingly, the behavior of  \textbf{iQCQP}  oscillates, and its performance with respect to \textbf{QCQP}  depends on $k$ and $\lambda$. The best recovery errors are obtained for $k=2$ and $\lambda=1$.

\begin{figure}[!ht]
\centering
\subcaptionbox[]{  $p=0.1$
}[ 0.32\textwidth ]
{\includegraphics[width=0.31\textwidth] {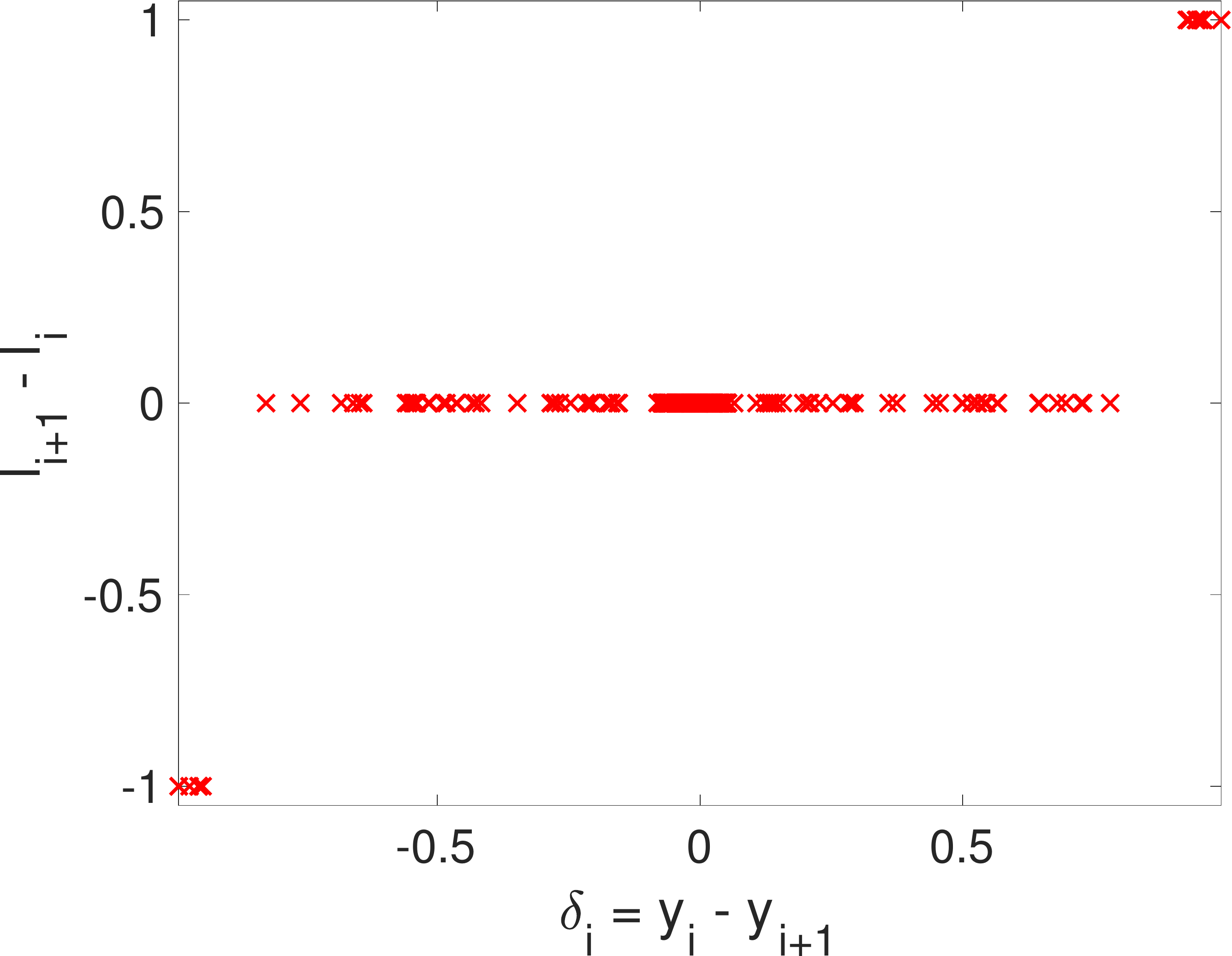} }
%
\subcaptionbox[]{  $p=0.15$
}[ 0.32\textwidth ]
{\includegraphics[width=0.31\textwidth] {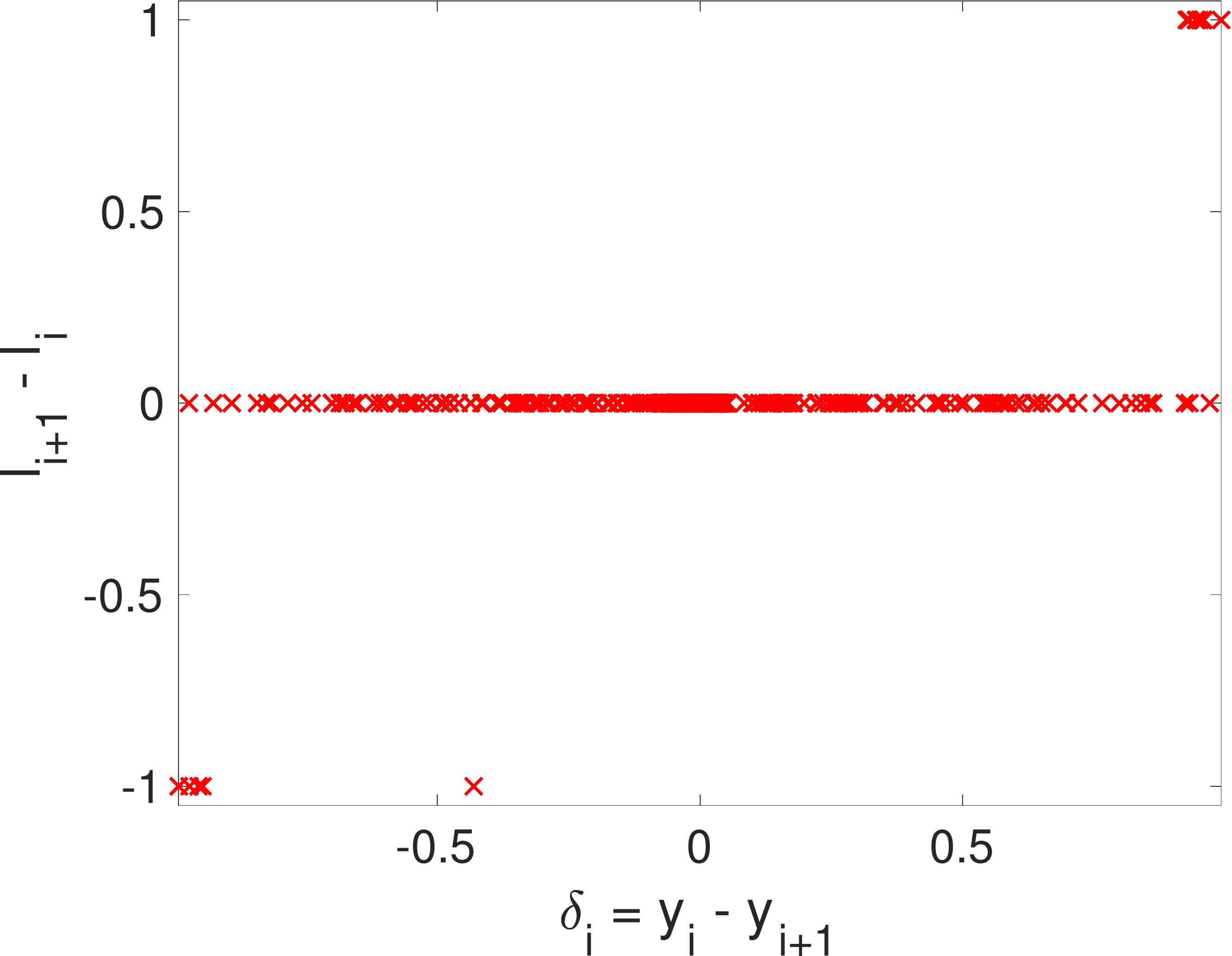} }
%
\subcaptionbox[]{  $p=0.2$
}[ 0.32\textwidth ]
{\includegraphics[width=0.31\textwidth] {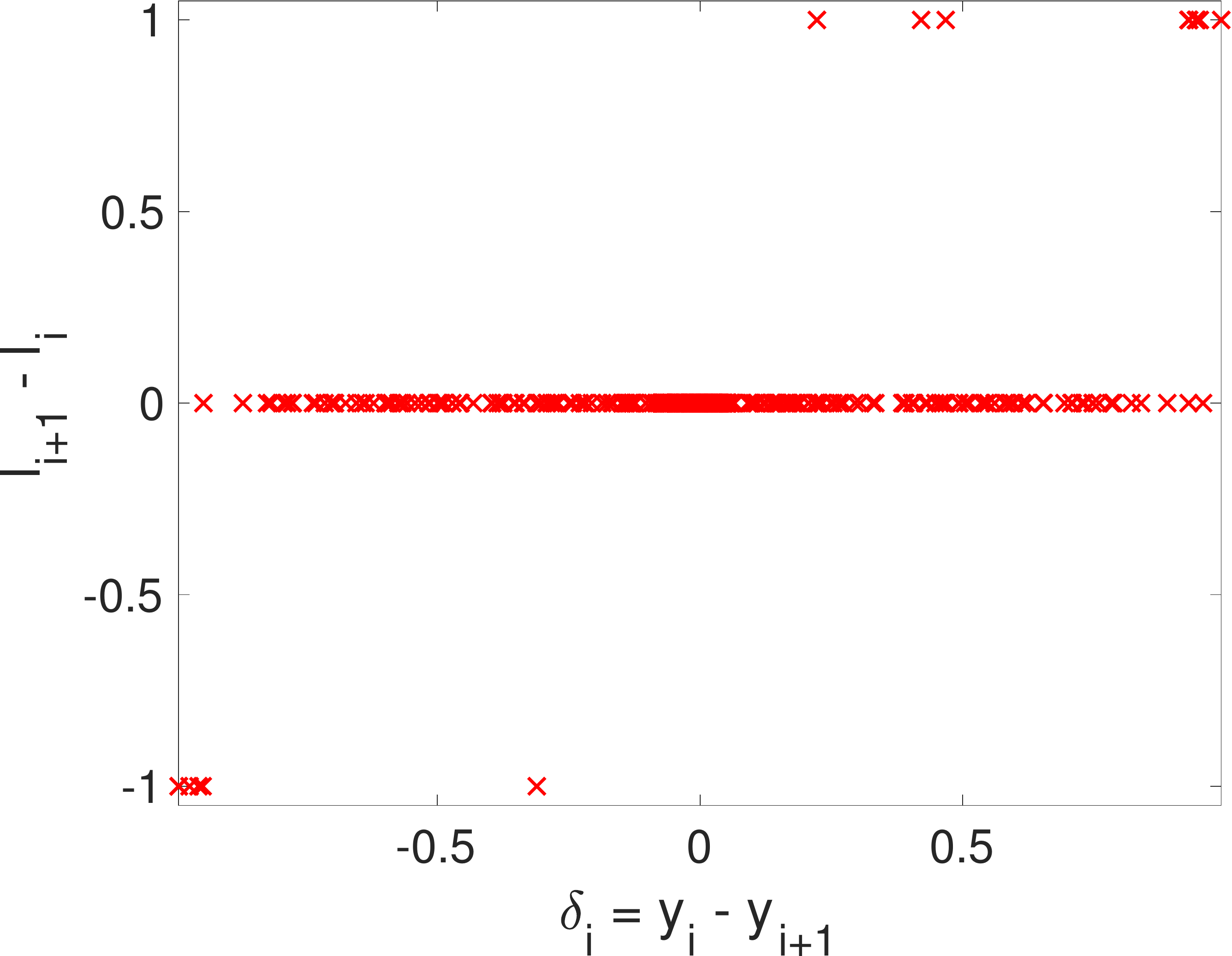} }
\vspace{2mm}

\subcaptionbox[]{  $p=0.1$
}[ 0.96\textwidth ]
{\includegraphics[width=0.95\textwidth] {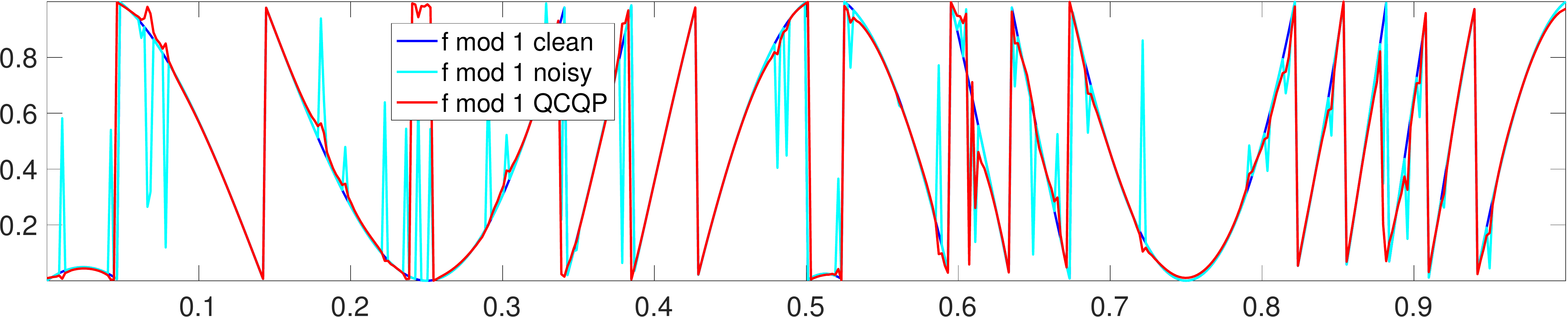} }
%

\vspace{2mm}
\subcaptionbox[]{  $p=0.15$
}[ 0.96\textwidth ]
{\includegraphics[width=0.95\textwidth] {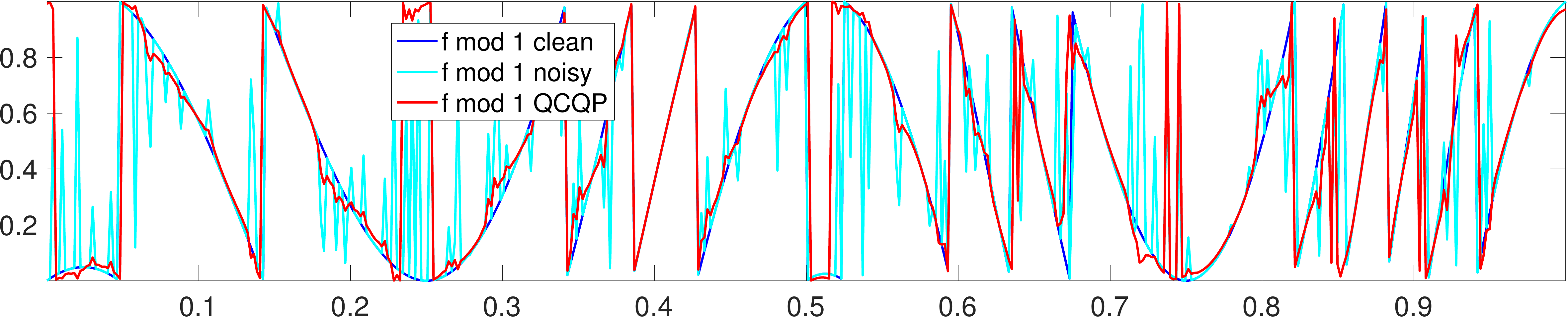} }
%

\vspace{2mm}
\subcaptionbox[]{  $p=0.2$
}[ 0.96\textwidth ]
{\includegraphics[width=0.95\textwidth] {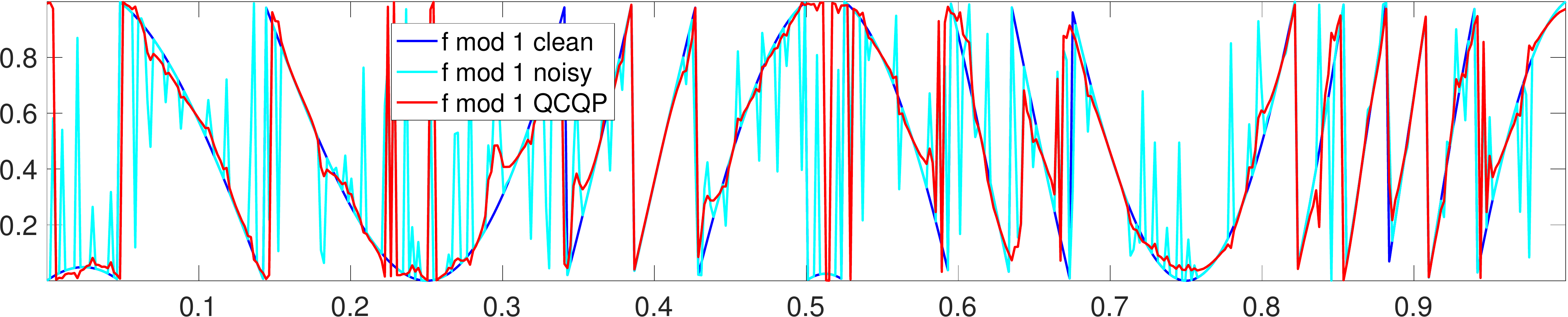} }
%

\captionsetup{width=0.95\linewidth}
\caption[Short Caption]{ Noisy instances of the Bernoulli noise model ($n=500$). Top row (a)-(c) shows scatter plots of change in $y$ (the observed noisy f mod 1 values) versus change in $l$ (the noisy quotient). Plots (d)-(f) show  the clean f mod 1 values (blue), the noisy f mod 1 values (cyan) and the denoised (via \textbf{QCQP}) f mod 1 values (red).  \textbf{QCQP} denotes Algorithm \ref{algo:two_stage_denoise} without  the unwrapping stage performed by \textbf{OLS} \eqref{eq:ols_unwrap_lin_system}.  
}
\label{fig:instances_f1_Bernoulli_delta_cors}
\end{figure}

\begin{figure}[!ht]
\centering
\subcaptionbox[]{  $p=0.10$, \textbf{OLS}
}[ 0.32\textwidth ]
{\includegraphics[width=0.27\textwidth] {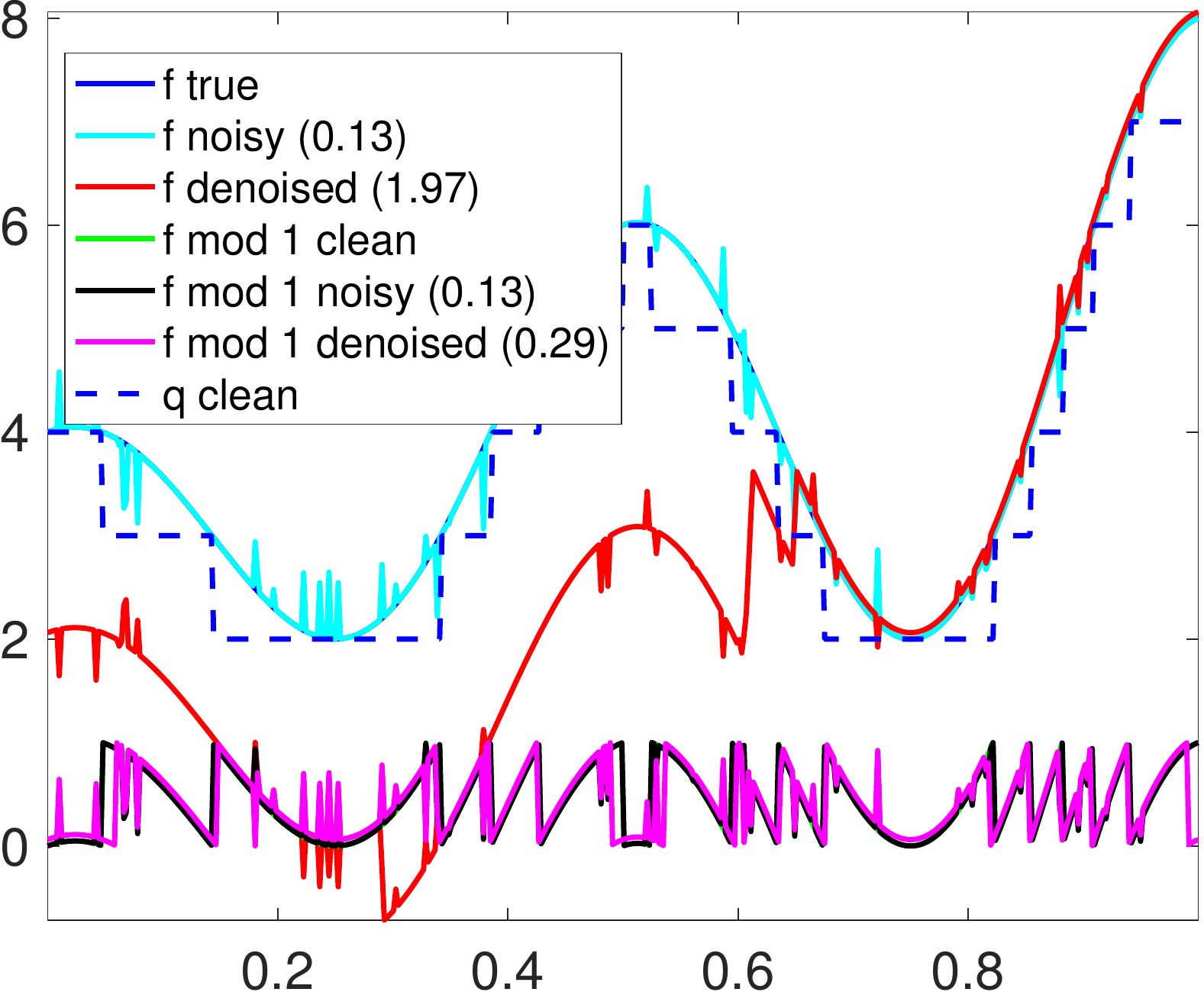} }
%
\subcaptionbox[]{  $p=0.10$, \textbf{QCQP}
}[ 0.32\textwidth ]
{\includegraphics[width=0.27\textwidth] {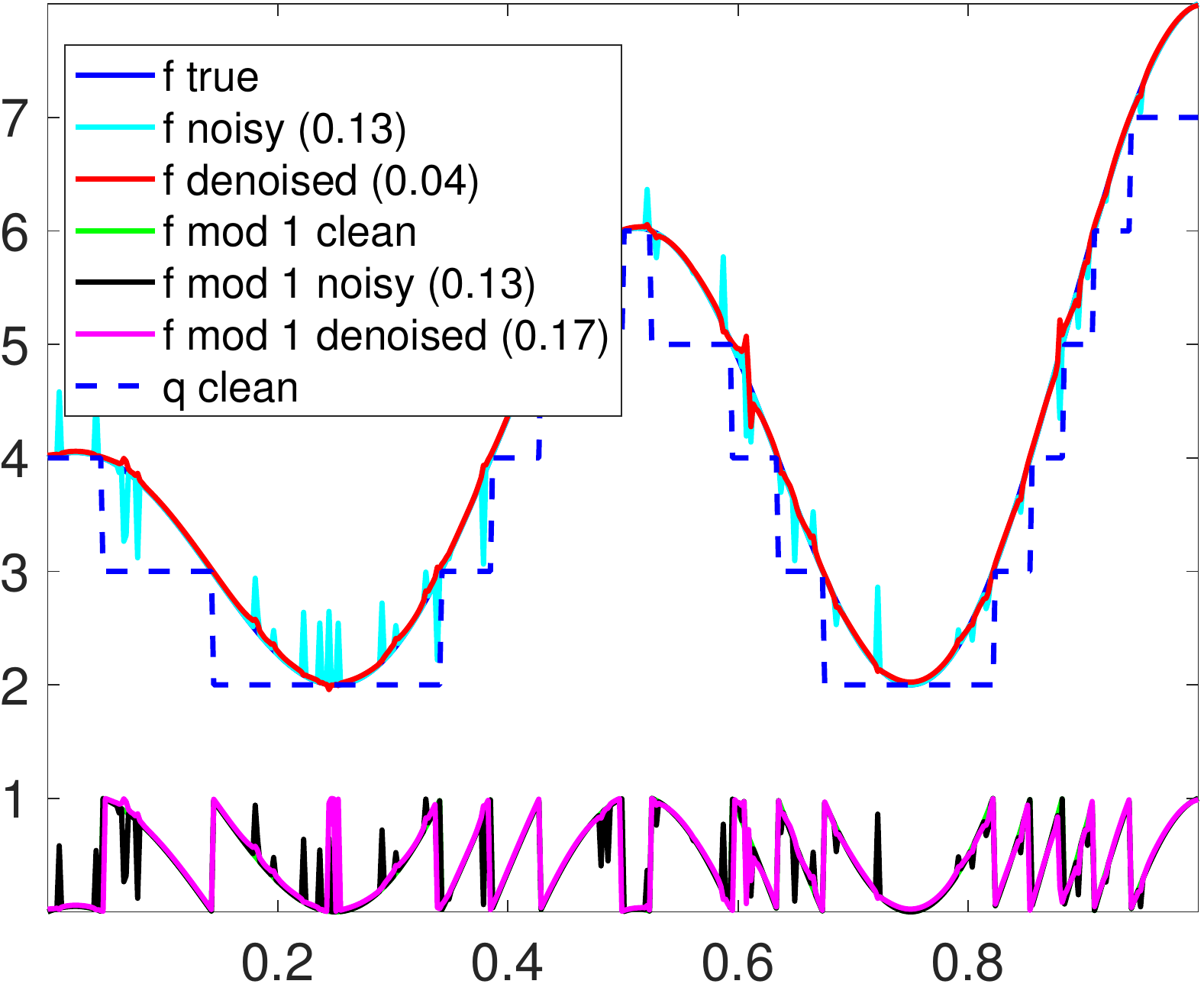} }
%
\subcaptionbox[]{  $p=0.10$, \textbf{iQCQP}  (10  iters.)
}[ 0.32\textwidth ]
{\includegraphics[width=0.27\textwidth] {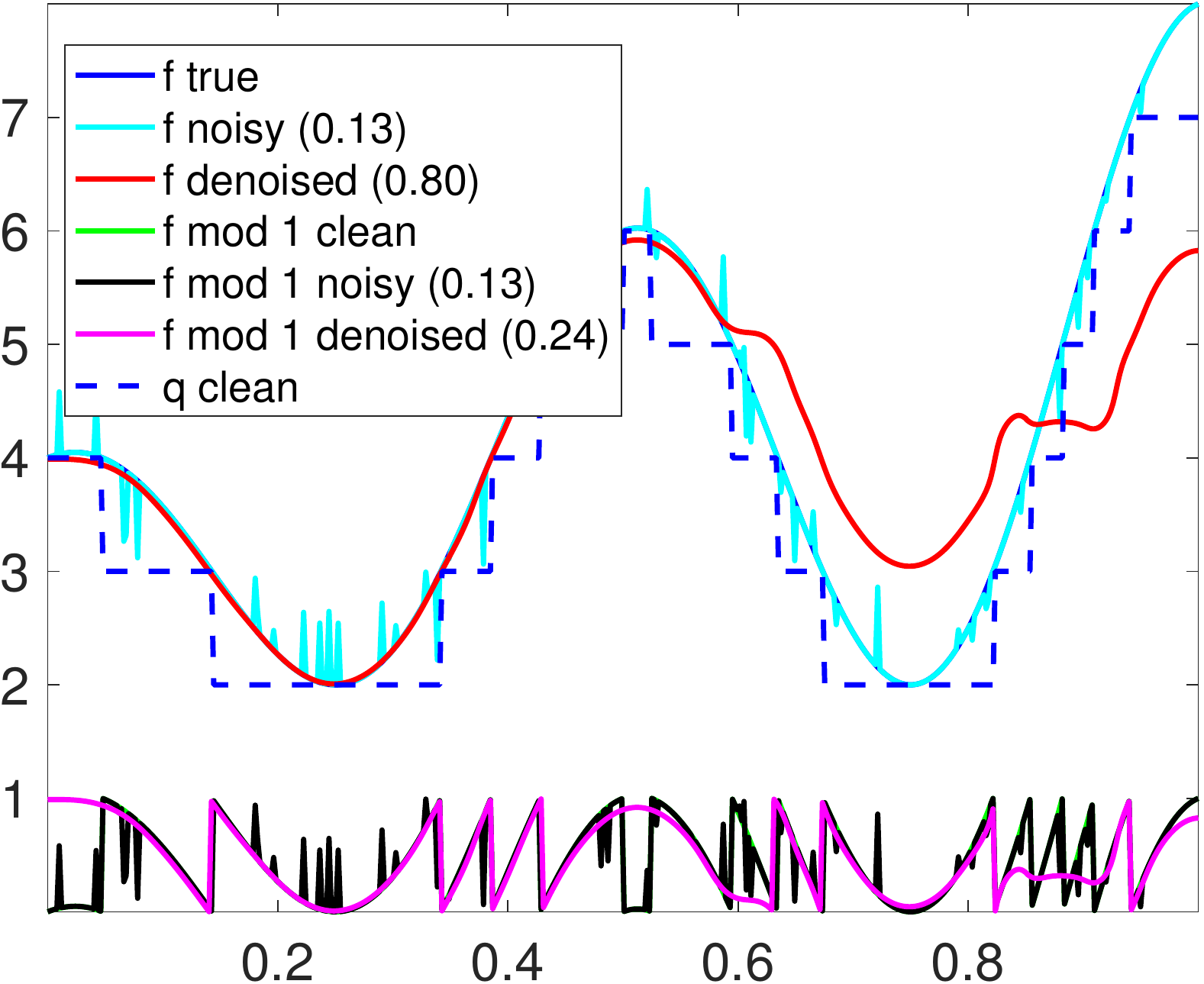} }
%
%
%
\vspace{4mm}
\subcaptionbox[]{  $p=0.2$, \textbf{OLS}
}[ 0.32\textwidth ]
{\includegraphics[width=0.27\textwidth] {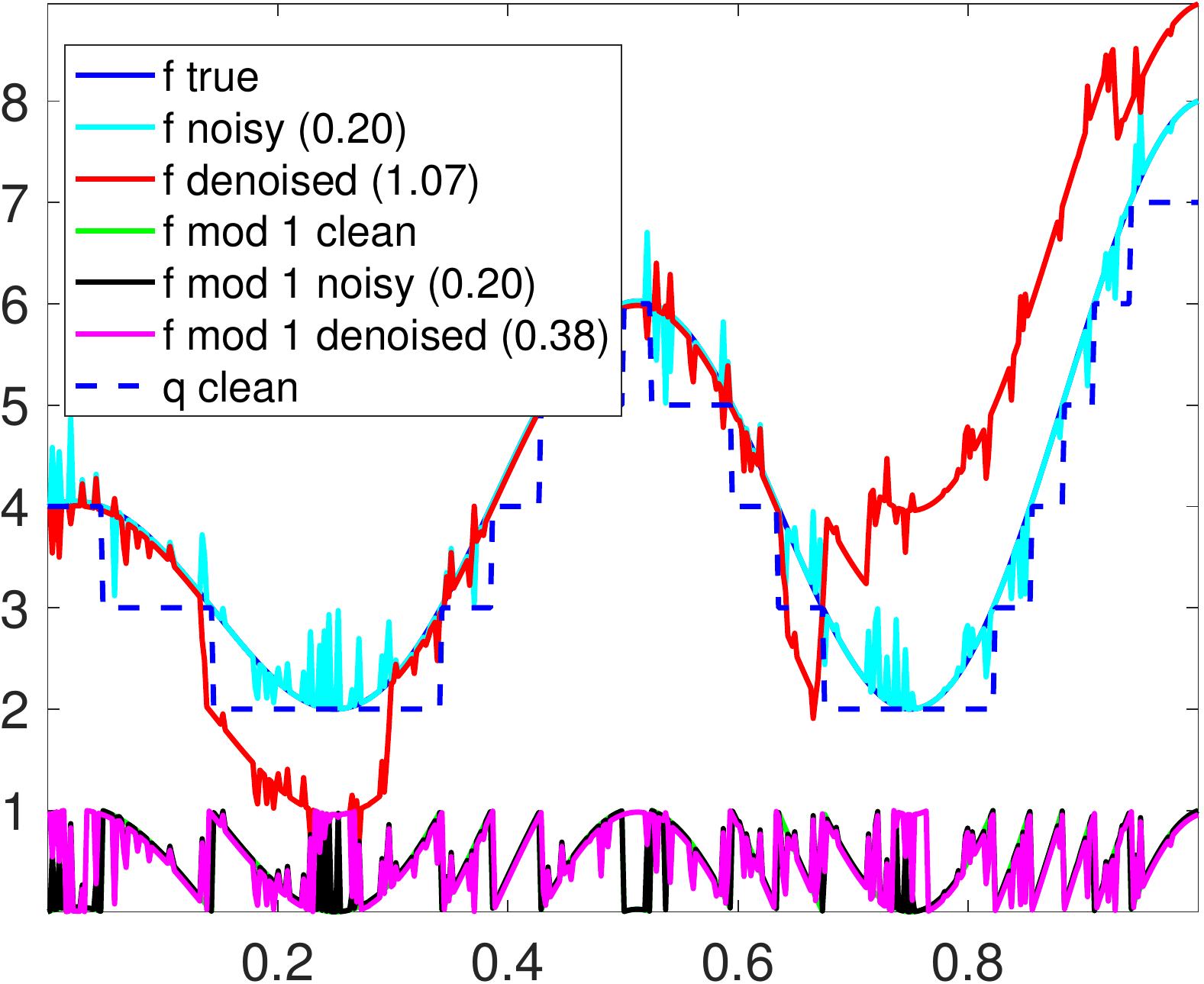} }
%
\subcaptionbox[]{  $p=0.2$, \textbf{QCQP}
}[ 0.32\textwidth ]
{\includegraphics[width=0.27\textwidth] {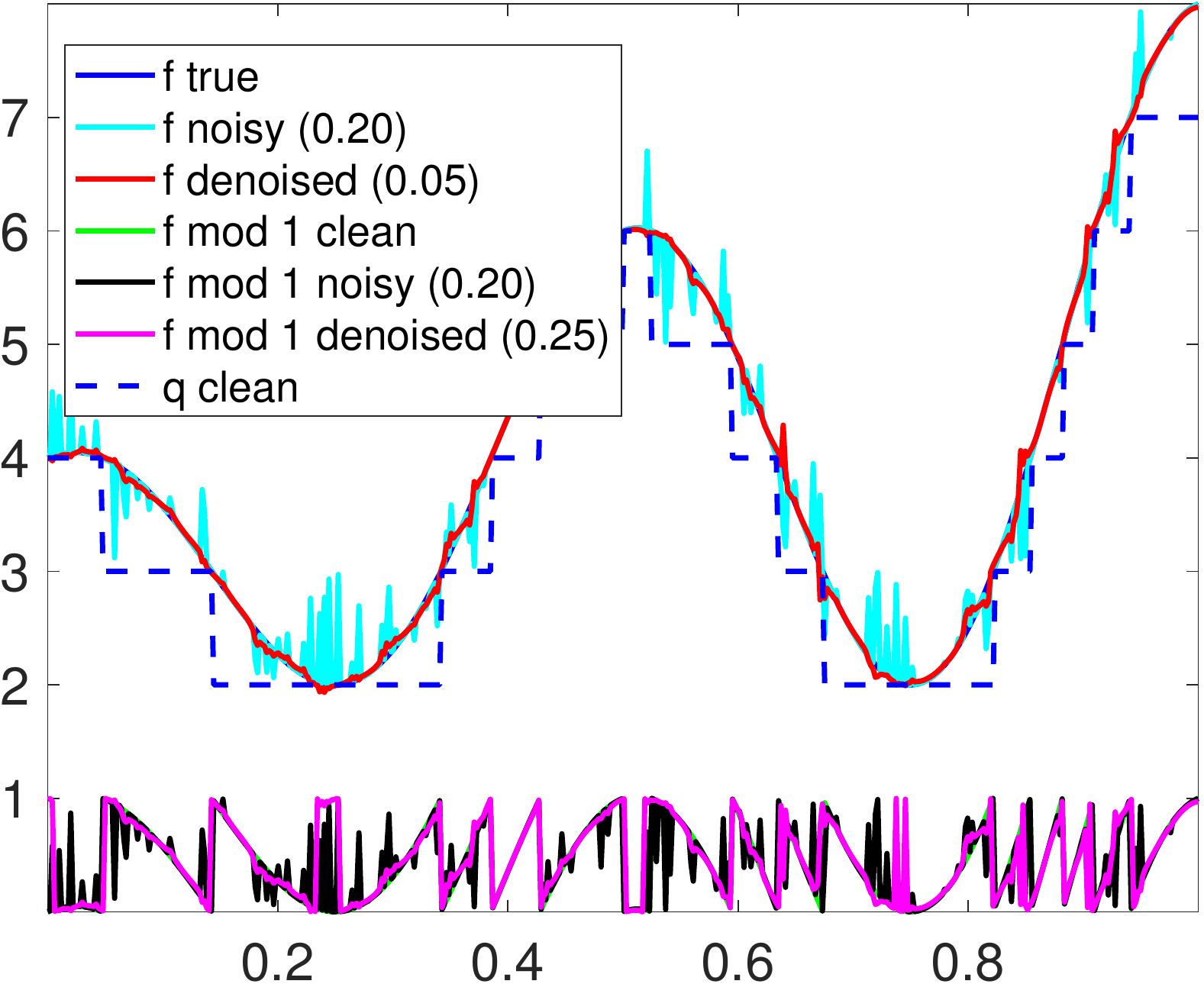} }
%
\subcaptionbox[]{  $p=0.2$, \textbf{iQCQP}  (10  iters.)
}[ 0.32\textwidth ]
{\includegraphics[width=0.27\textwidth] {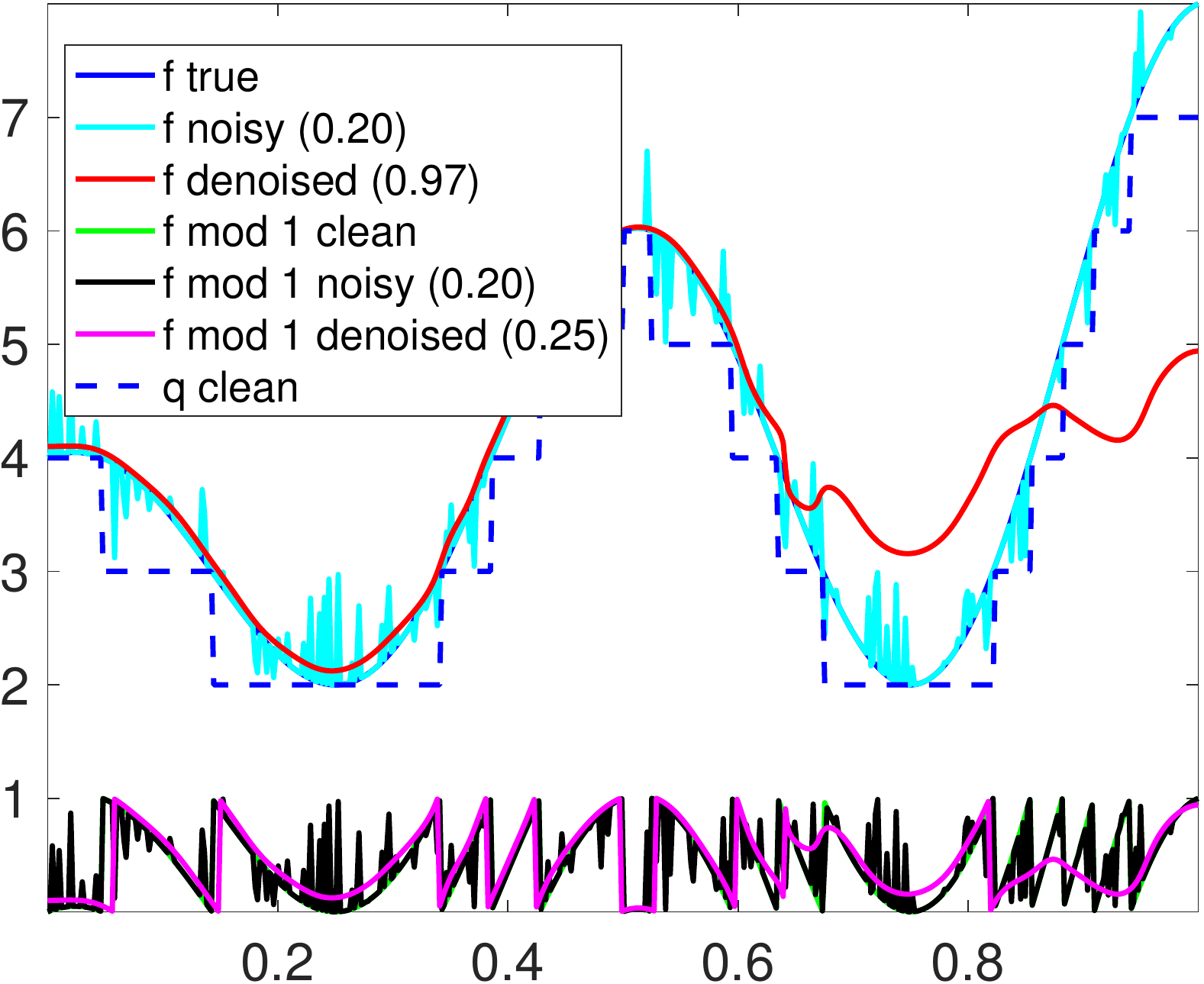} }
%
%
\subcaptionbox[]{  $p=0.25$, \textbf{OLS}
}[ 0.32\textwidth ]
{\includegraphics[width=0.27\textwidth] {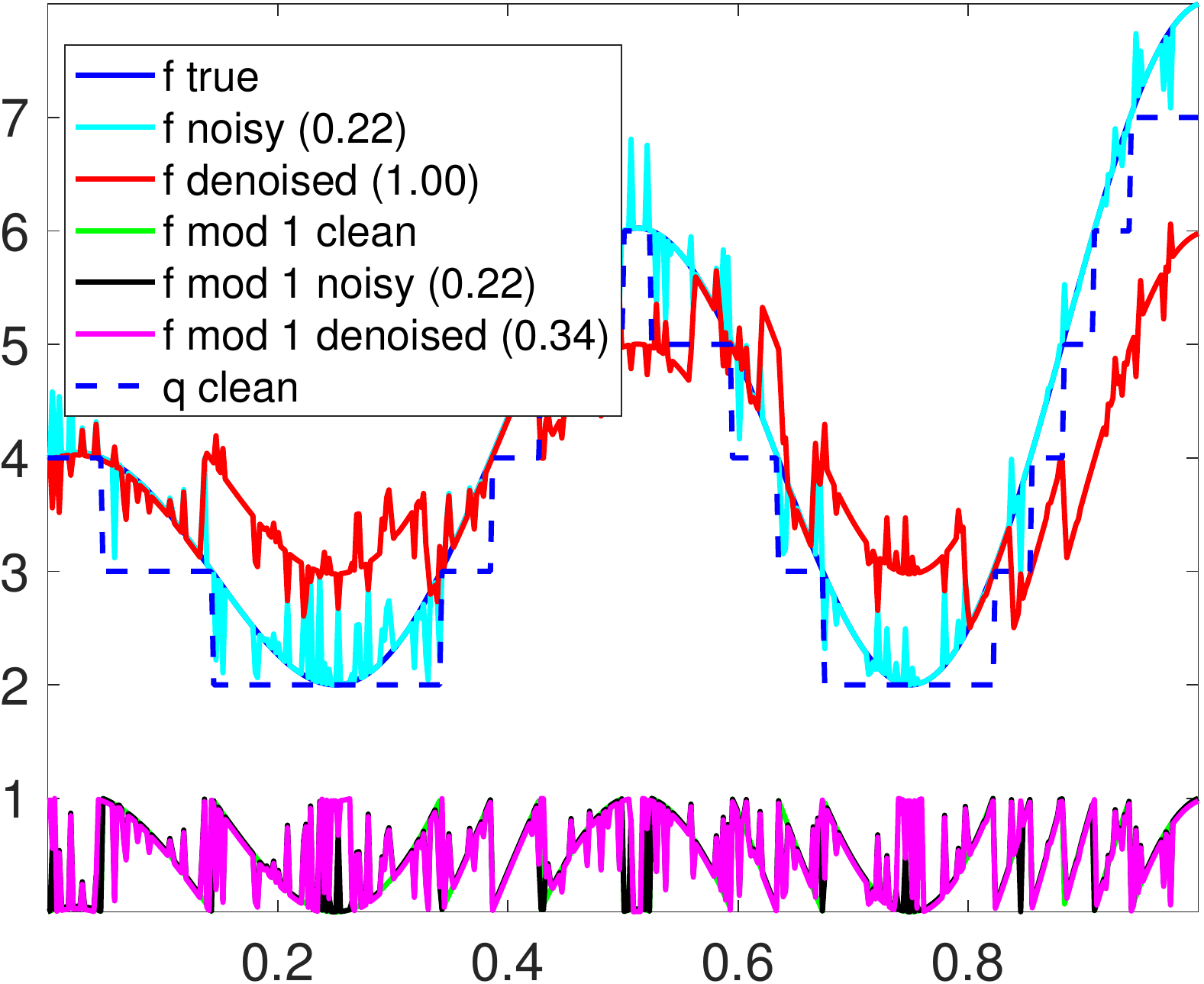} }
%
\subcaptionbox[]{  $p=0.25$, \textbf{QCQP}
}[ 0.32\textwidth ]
{\includegraphics[width=0.27\textwidth] {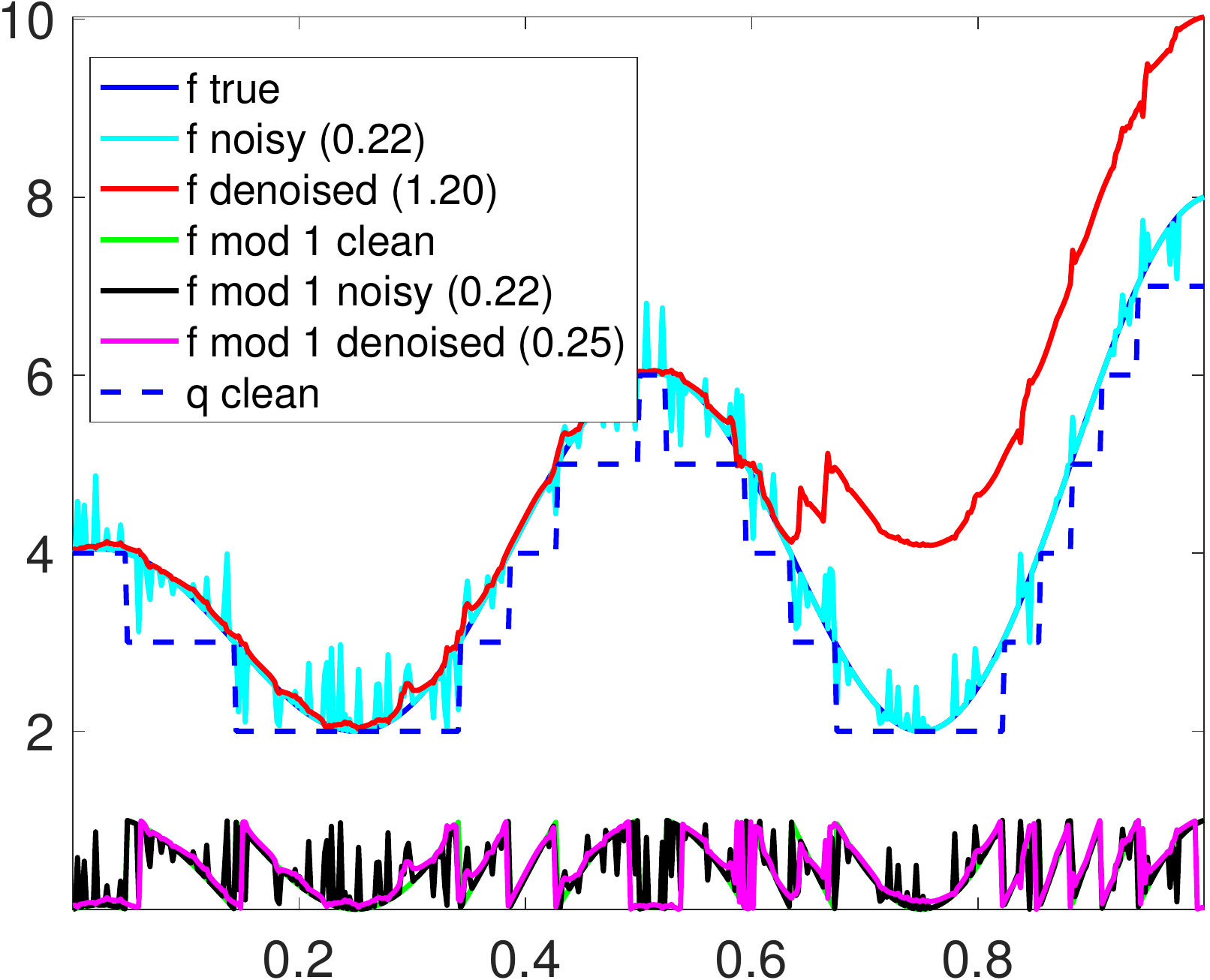} }
%
\subcaptionbox[]{  $p=0.25$, \textbf{iQCQP}  (10  iters.)
}[ 0.32\textwidth ]
{\includegraphics[width=0.27\textwidth] {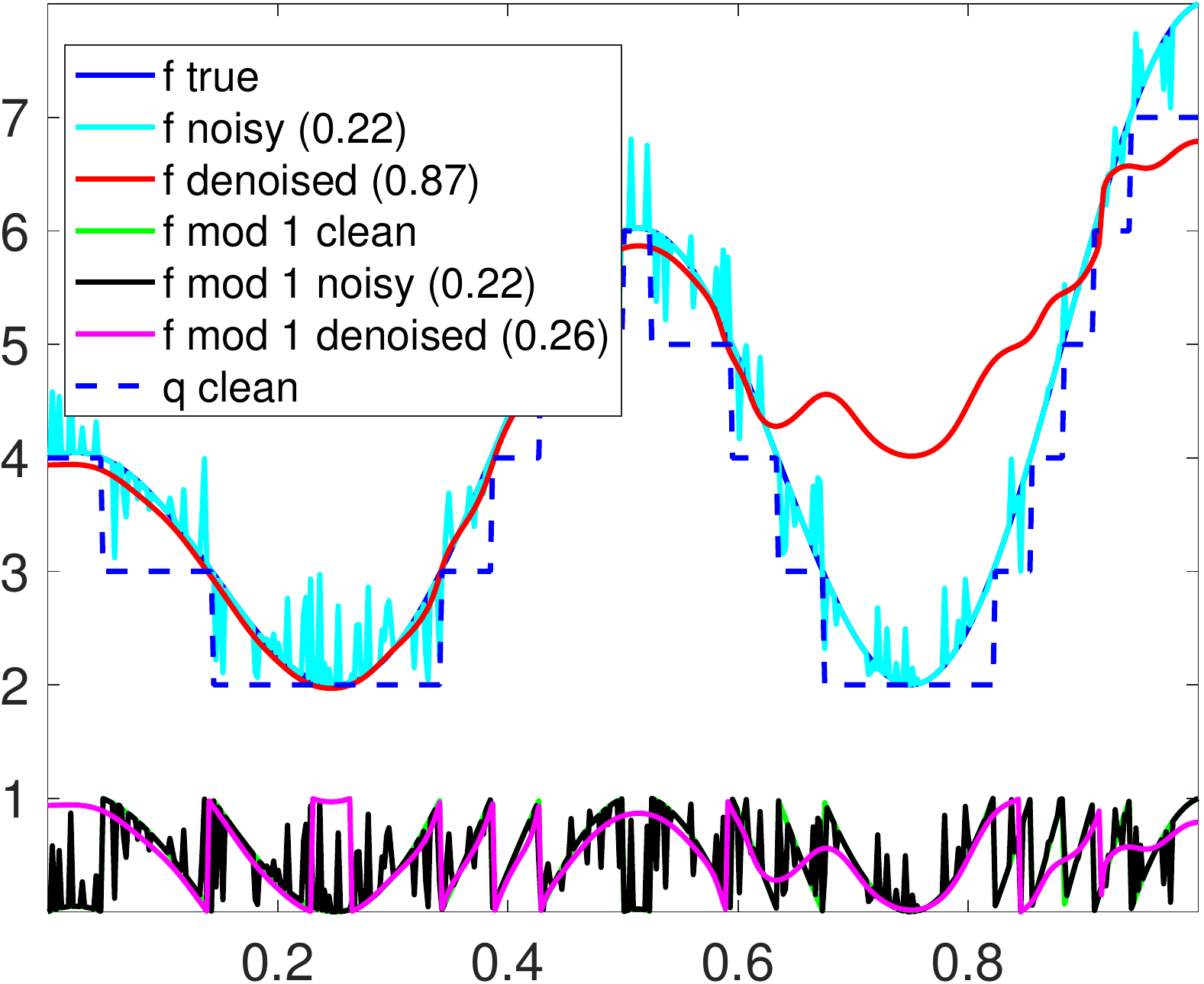} }
\vspace{-2mm}
\captionsetup{width=0.95\linewidth}
\caption[Short Caption]{Denoised instances under the Bernoulli noise model, for \textbf{OLS}, \textbf{QCQP} (Algorithm \ref{algo:two_stage_denoise}) and \textbf{iQCQP},  as we increase the noise level $\sigma$.   \textbf{QCQP} denotes Algorithm \ref{algo:two_stage_denoise}, for which  the unwrapping stage is  performed via \textbf{OLS} \eqref{eq:ols_unwrap_lin_system}.    We keep fixed the parameters $n=500$, $k=2$, $\lambda= 1$. 
}
\label{fig:instances_f1_Bernoulli}
\end{figure}


\begin{figure}[!ht]
\centering
\subcaptionbox[]{ $k=2$, $\lambda= 0.03$}[ 0.19\textwidth ]
{\includegraphics[width=0.19\textwidth] {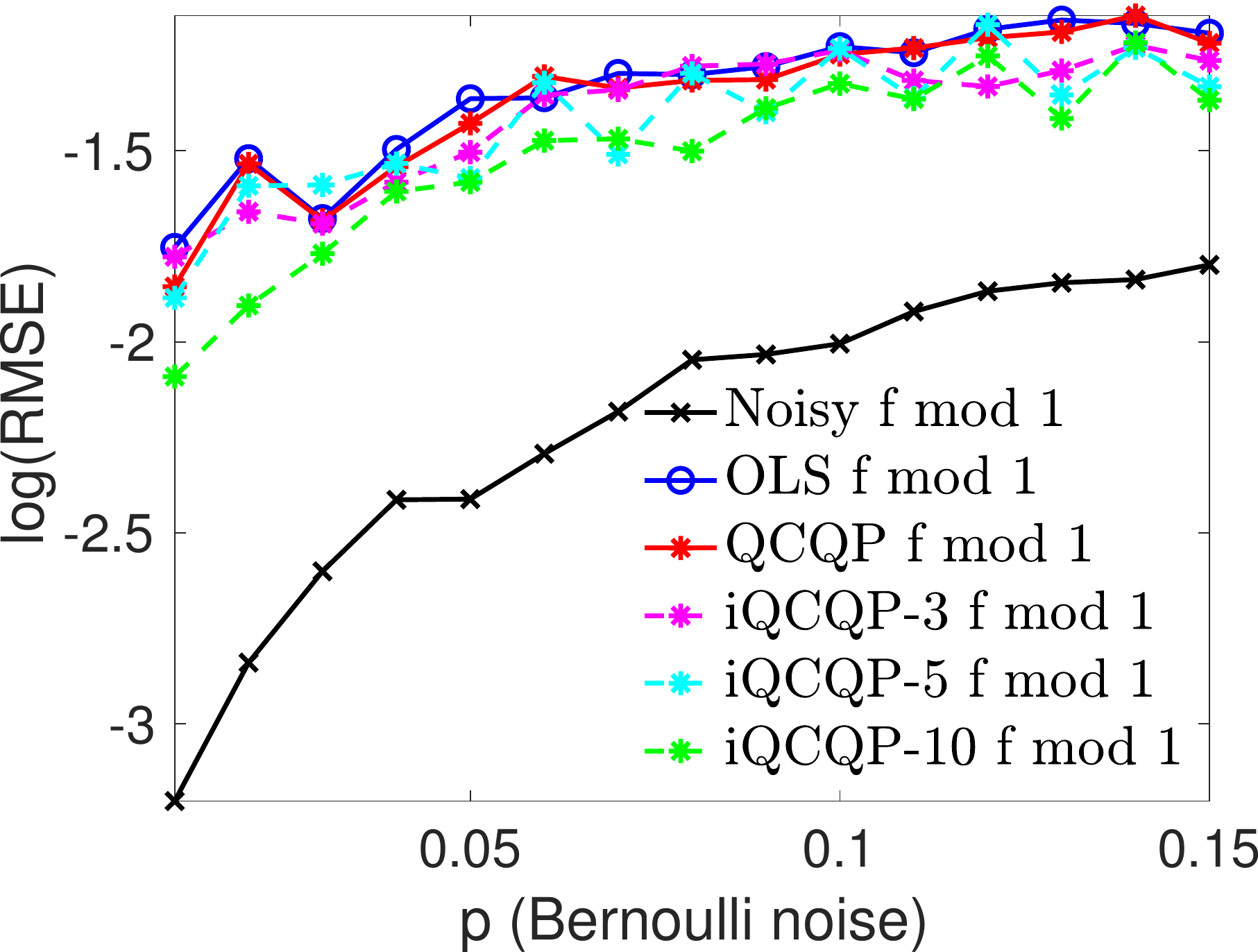} }
\subcaptionbox[]{ $k=2$, $\lambda= 0.1$}[ 0.19\textwidth ]
{\includegraphics[width=0.19\textwidth] {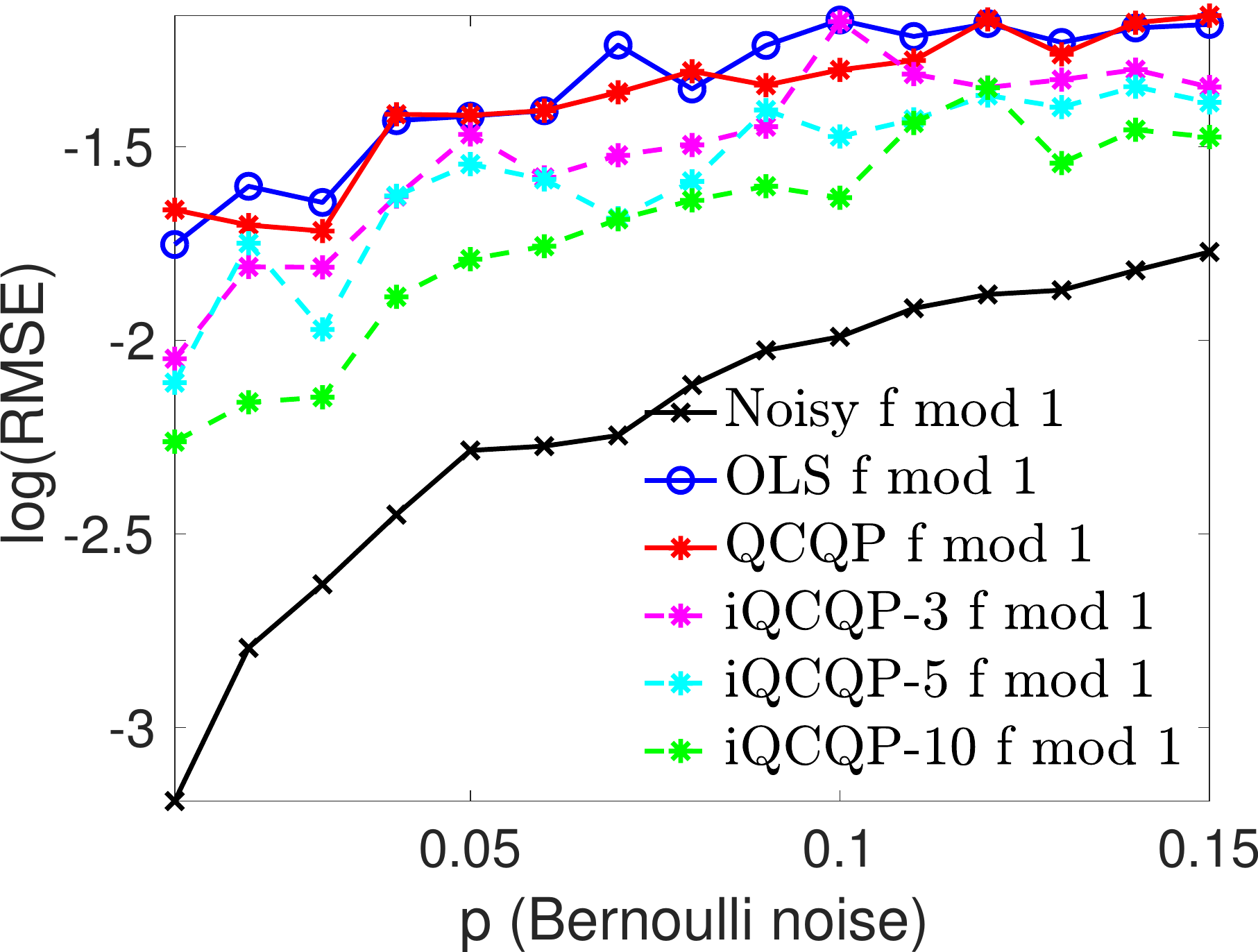} }
%
\subcaptionbox[]{ $k=2$, $\lambda= 0.3$}[ 0.19\textwidth ]
{\includegraphics[width=0.19\textwidth] {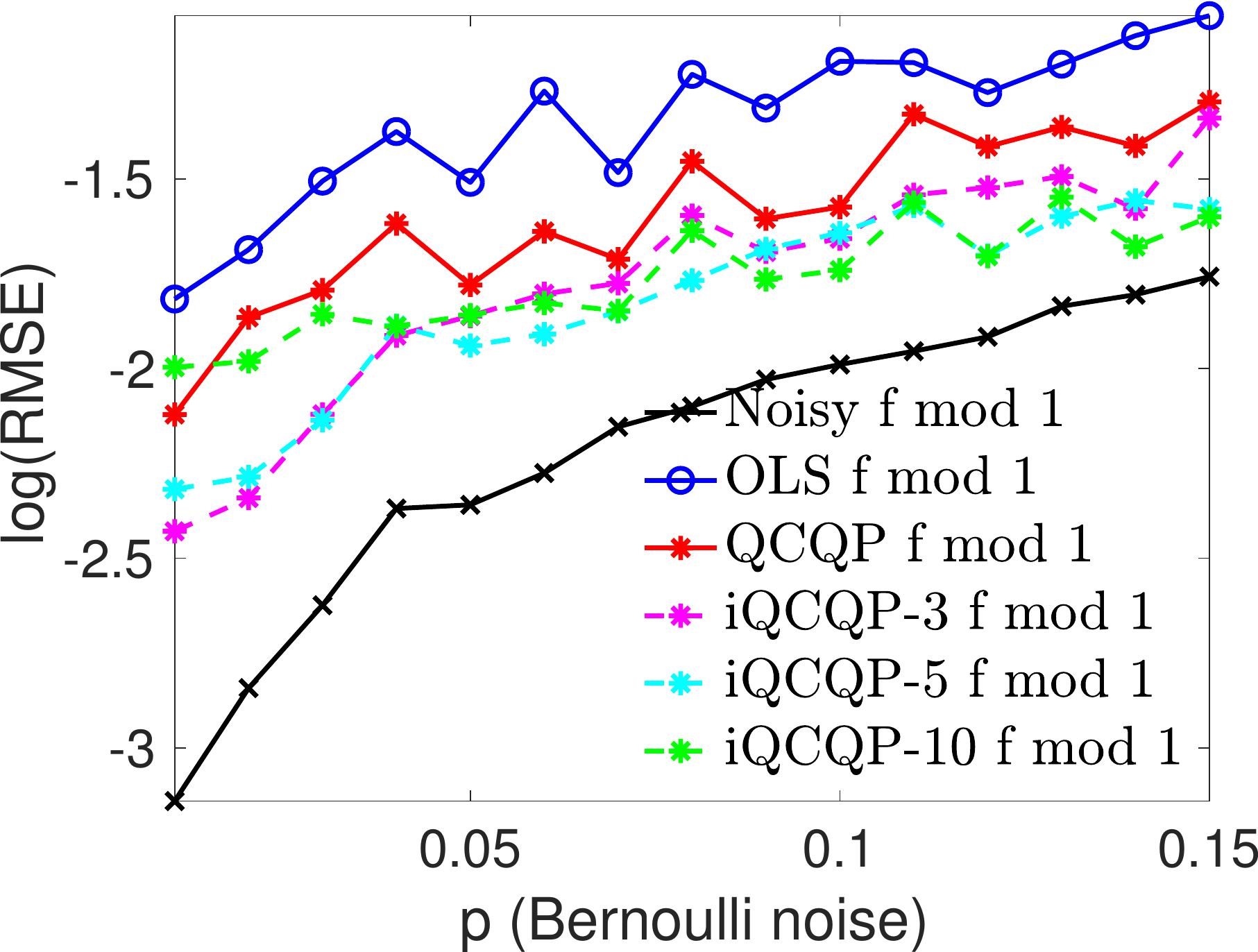} }
%
\subcaptionbox[]{ $k=2$, $\lambda= 0.5$}[ 0.19\textwidth ]
{\includegraphics[width=0.19\textwidth] {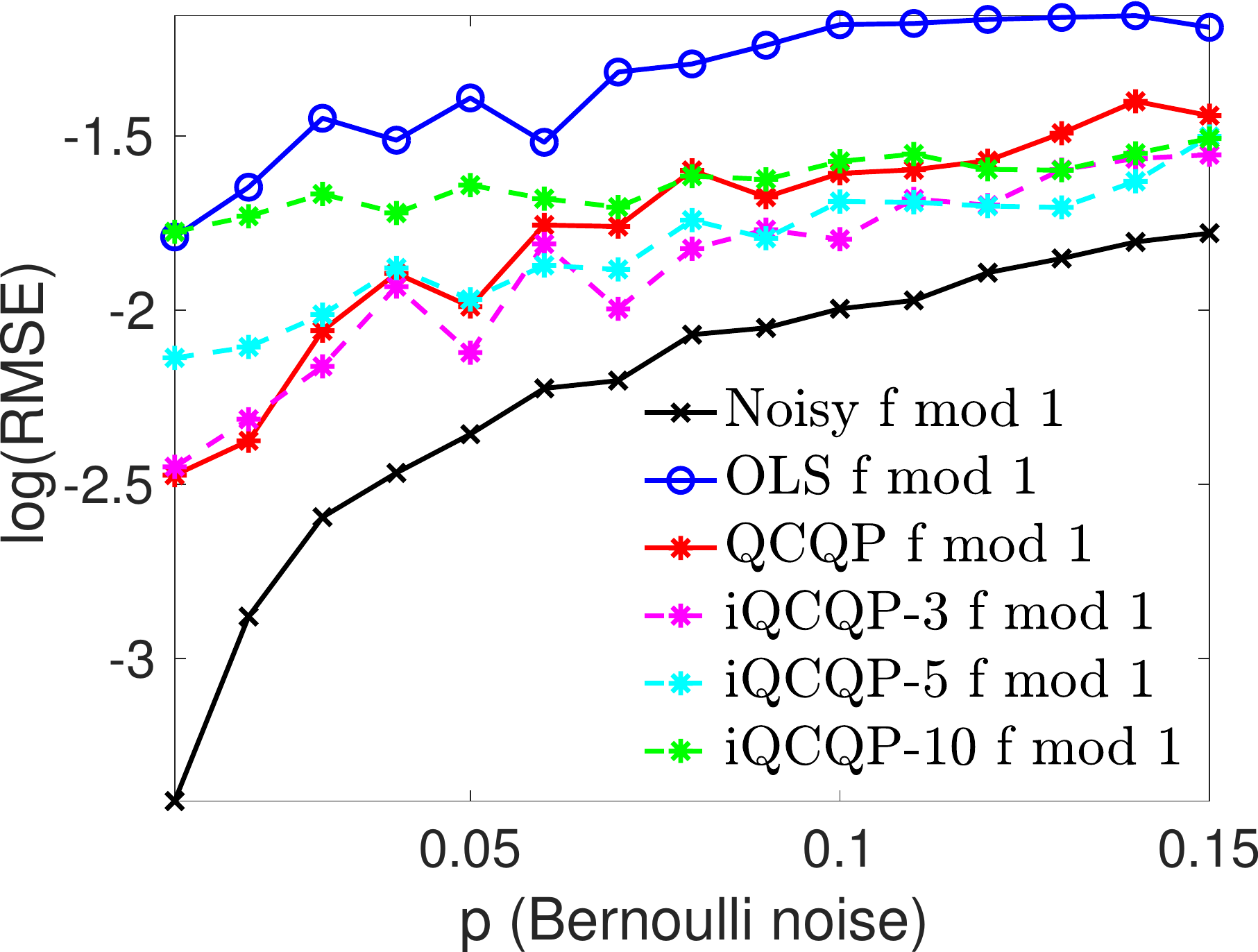} }
%
\subcaptionbox[]{ $k=2$, $\lambda= 1$}[ 0.19\textwidth ]
{\includegraphics[width=0.19\textwidth] {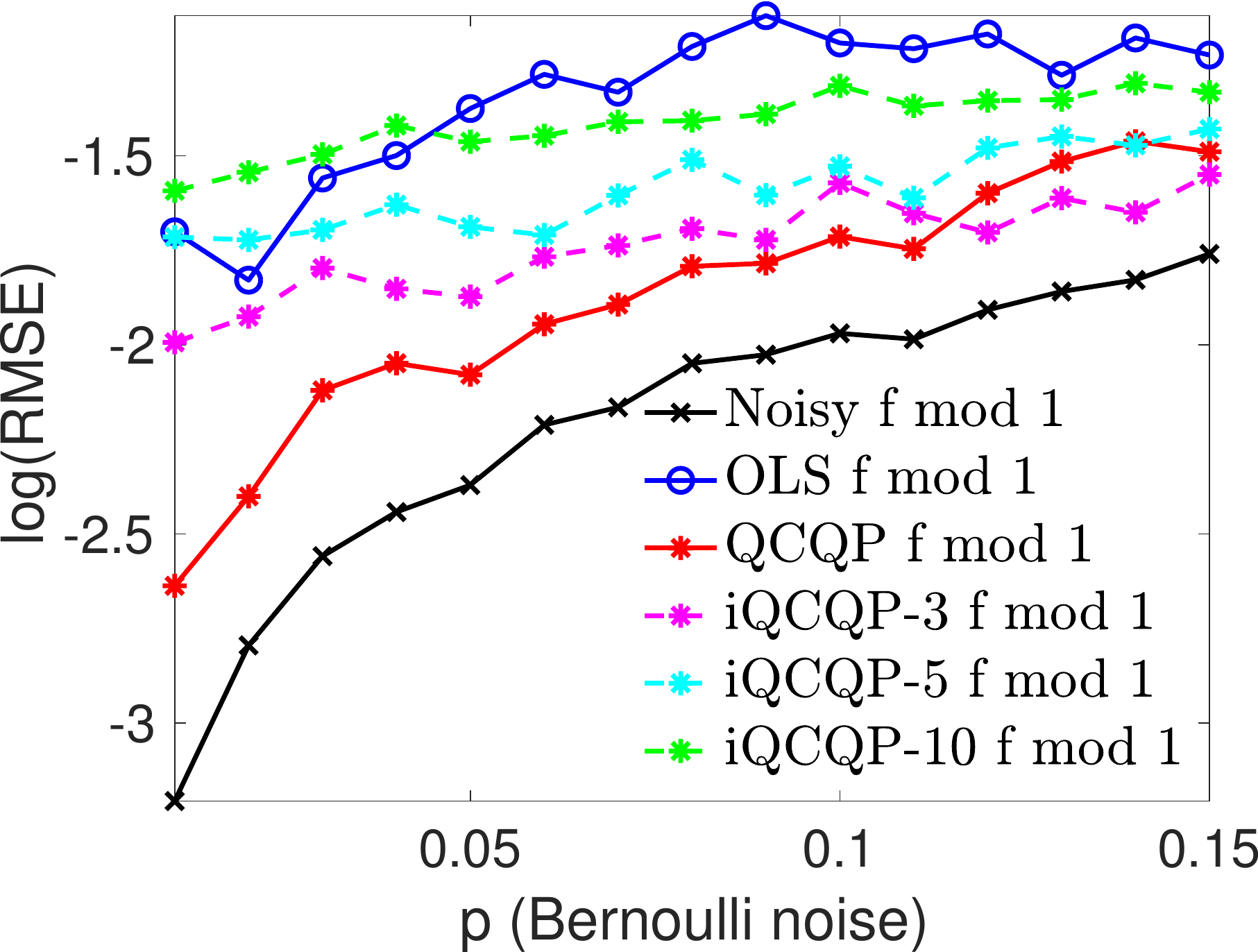} }
\hspace{0.1\textwidth} 
\subcaptionbox[]{ $k=3$, $\lambda= 0.03$}[ 0.19\textwidth ]
{\includegraphics[width=0.19\textwidth] {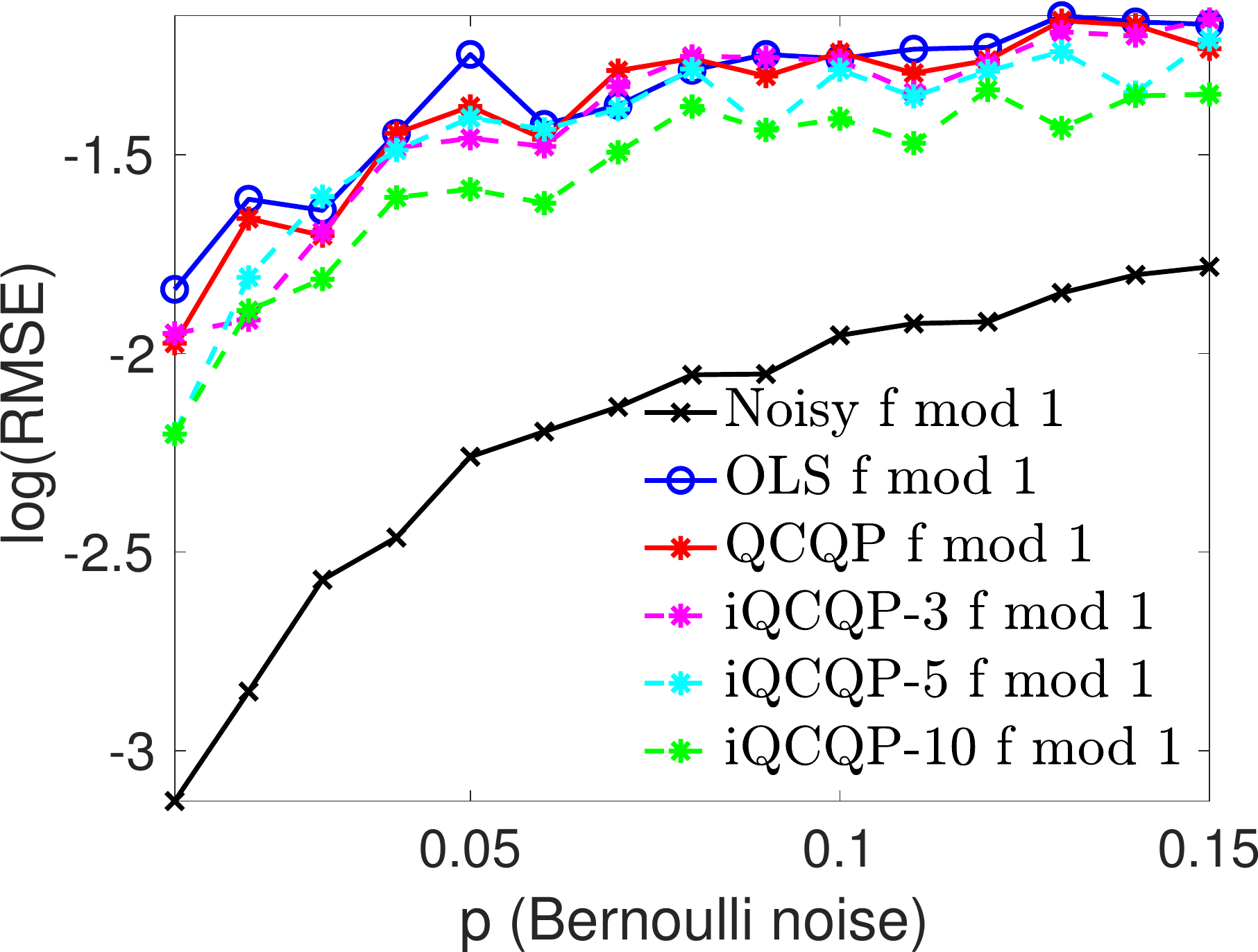} }
\subcaptionbox[]{ $k=3$, $\lambda= 0.1$}[ 0.19\textwidth ]
{\includegraphics[width=0.19\textwidth] {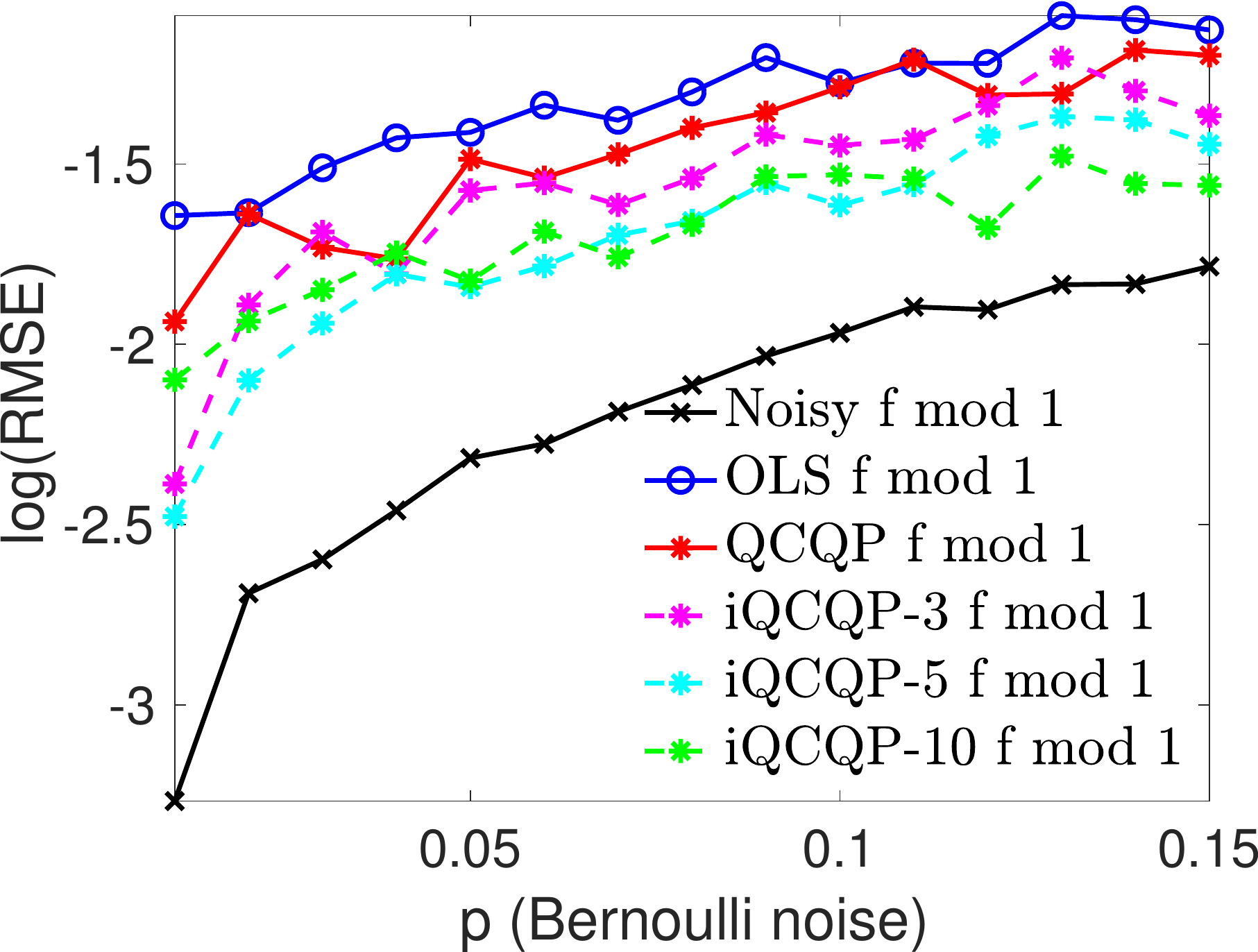} }
%
\subcaptionbox[]{ $k=3$, $\lambda= 0.3$}[ 0.19\textwidth ]
{\includegraphics[width=0.19\textwidth] {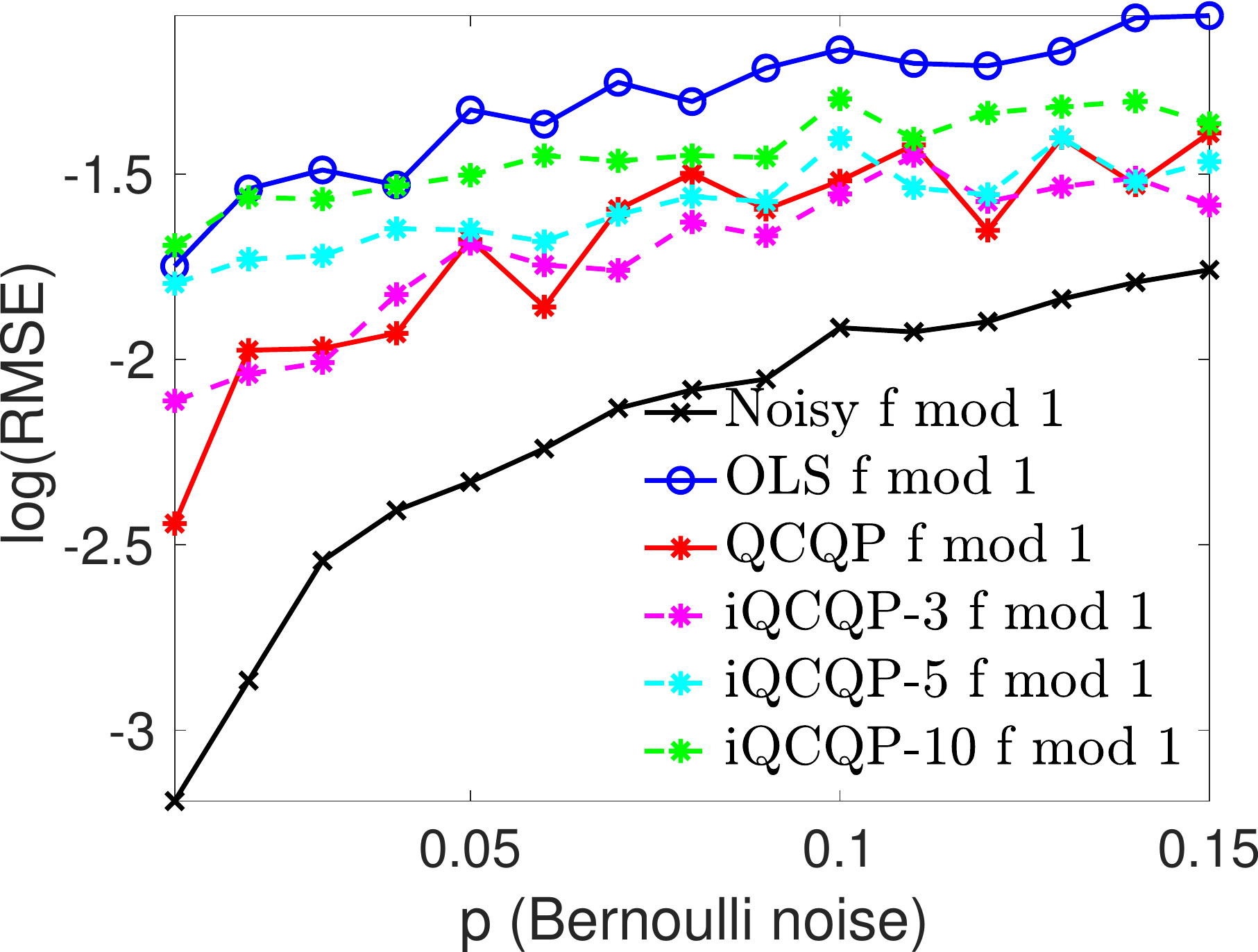} }
%
\subcaptionbox[]{ $k=3$, $\lambda= 0.5$}[ 0.19\textwidth ]
{\includegraphics[width=0.19\textwidth] {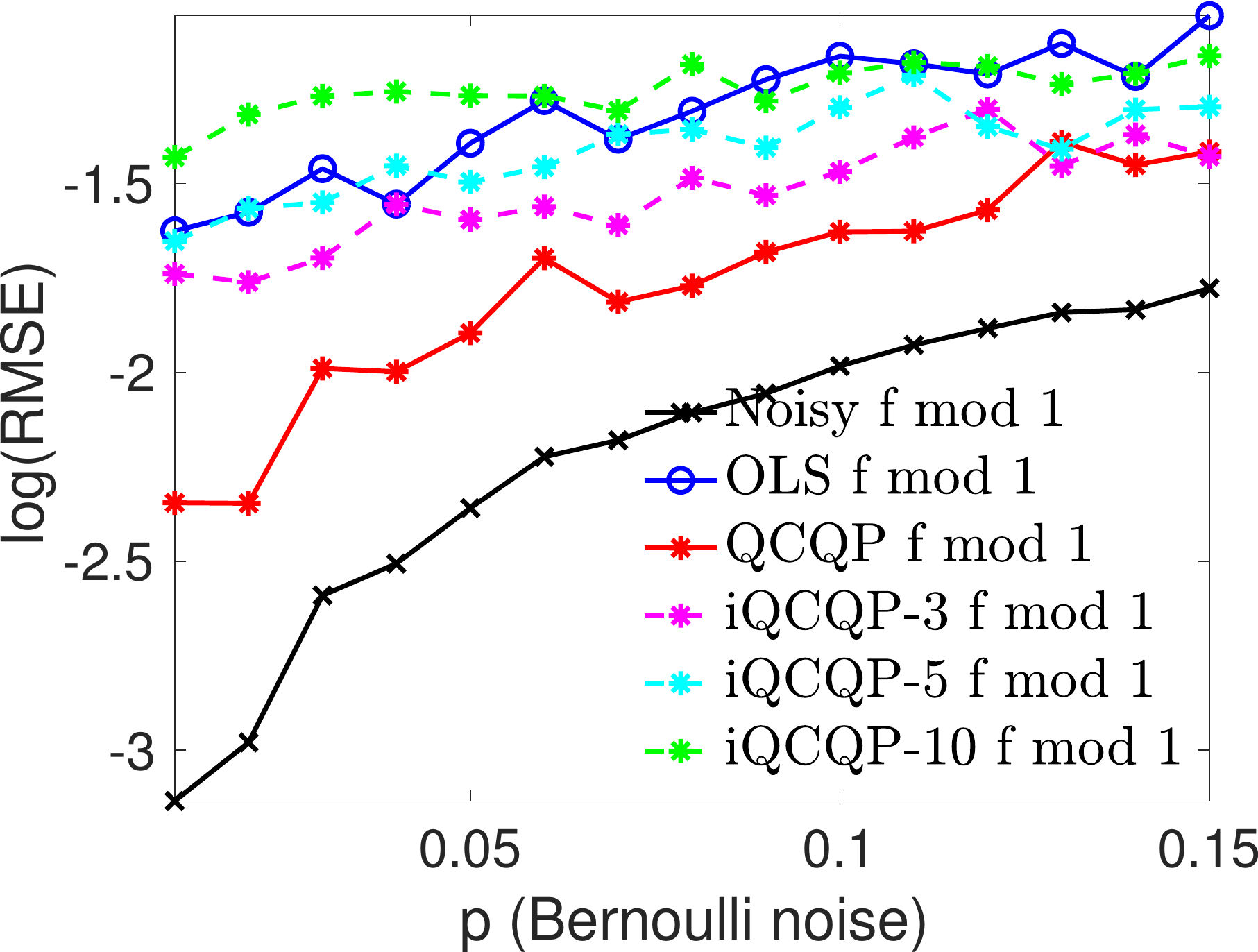} }
%
\subcaptionbox[]{ $k=3$, $\lambda= 1$}[ 0.19\textwidth ]
{\includegraphics[width=0.19\textwidth] {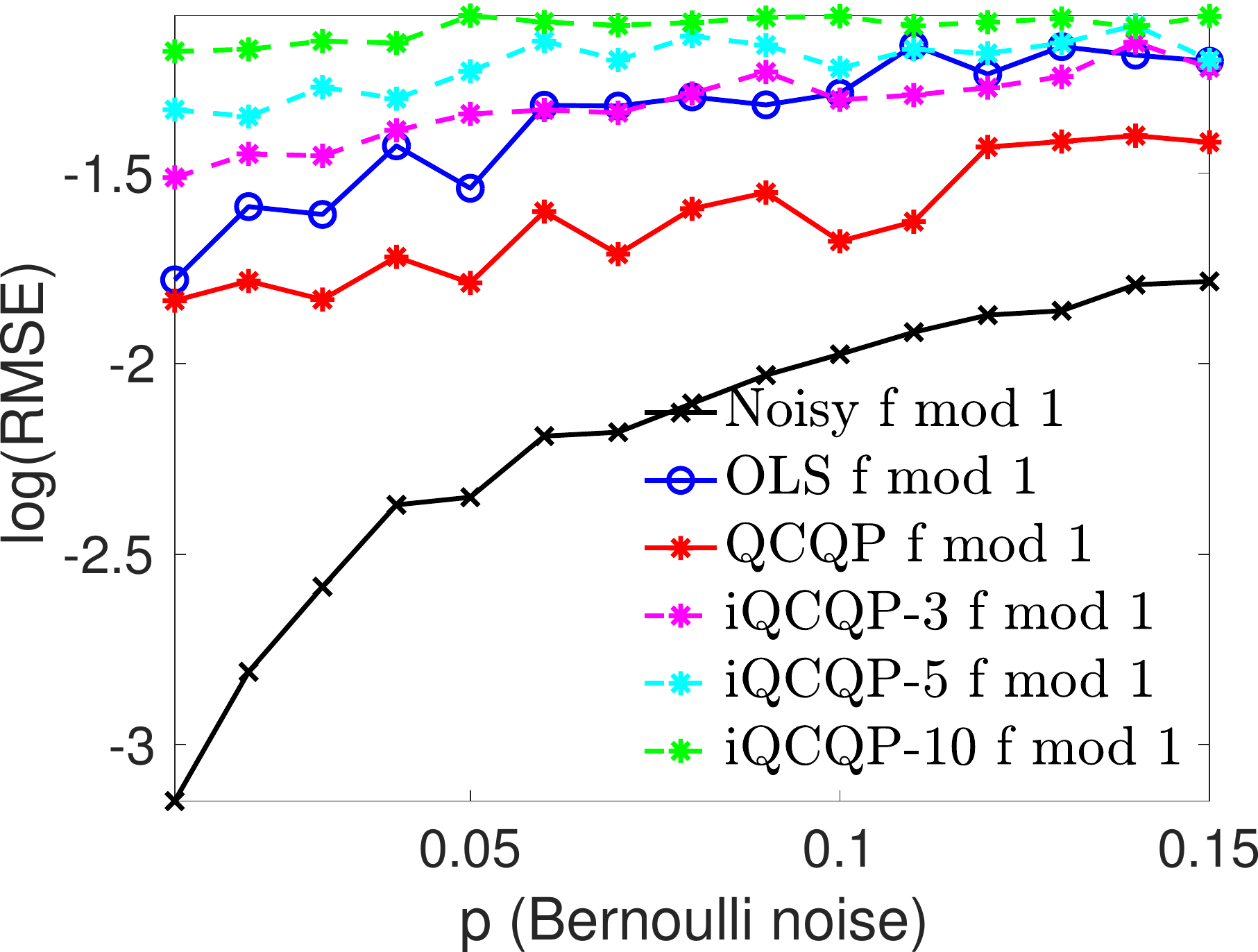} }
\hspace{0.1\textwidth} 
\subcaptionbox[]{ $k=5$, $\lambda= 0.03$}[ 0.19\textwidth ]
{\includegraphics[width=0.19\textwidth] {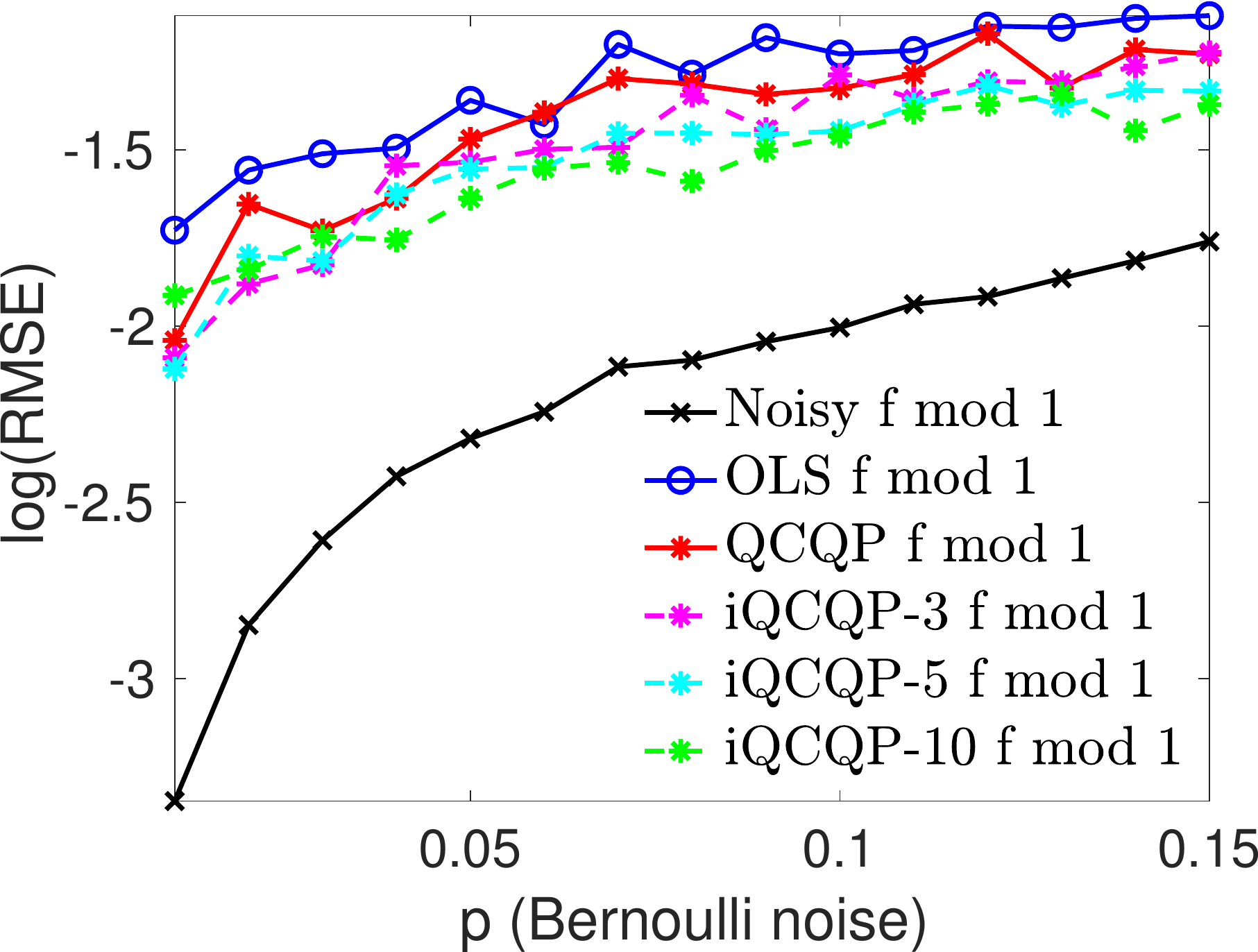} }
\subcaptionbox[]{ $k=5$, $\lambda= 0.1$}[ 0.19\textwidth ]
{\includegraphics[width=0.19\textwidth] {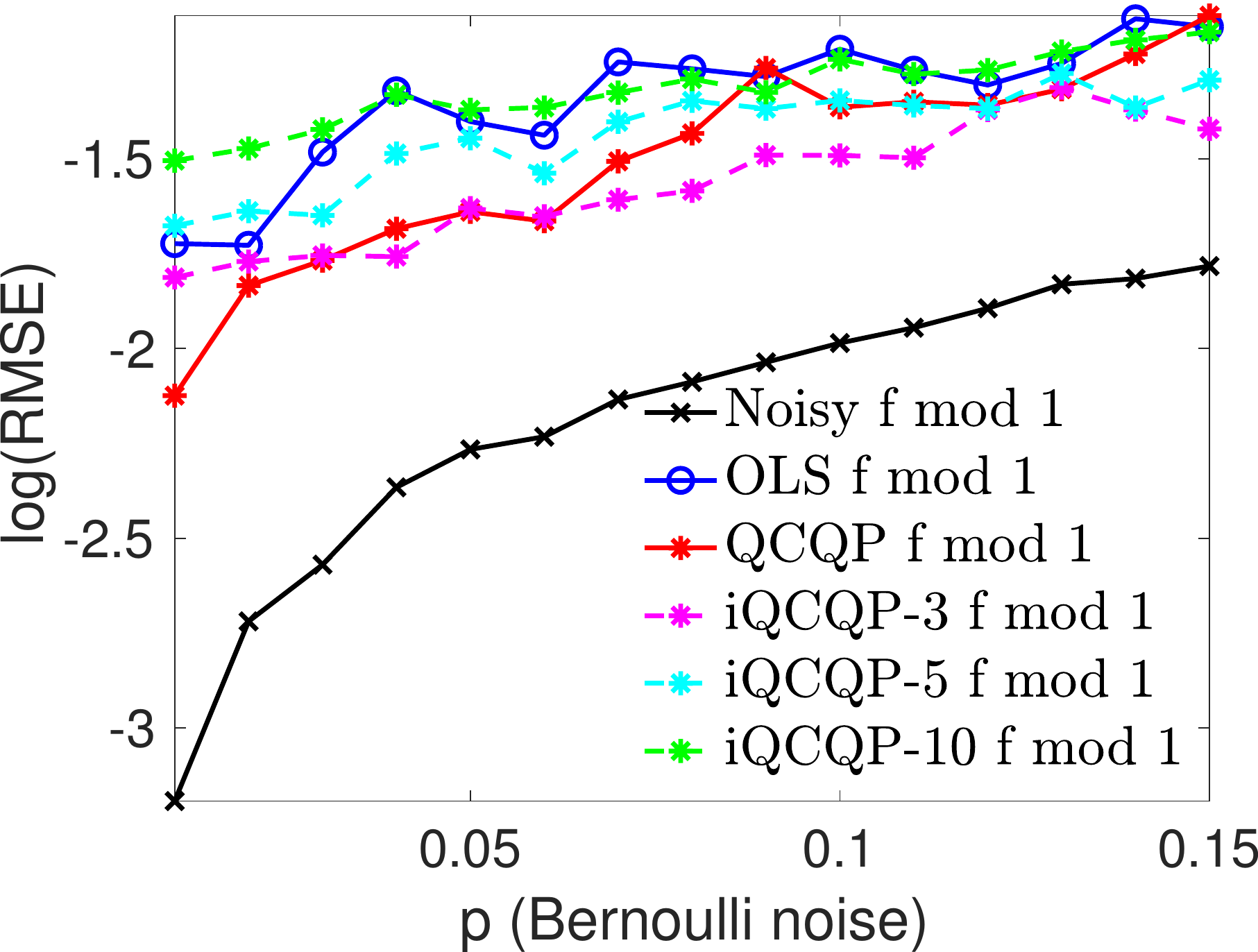} }
%
\subcaptionbox[]{ $k=5$, $\lambda= 0.3$}[ 0.19\textwidth ]
{\includegraphics[width=0.19\textwidth] {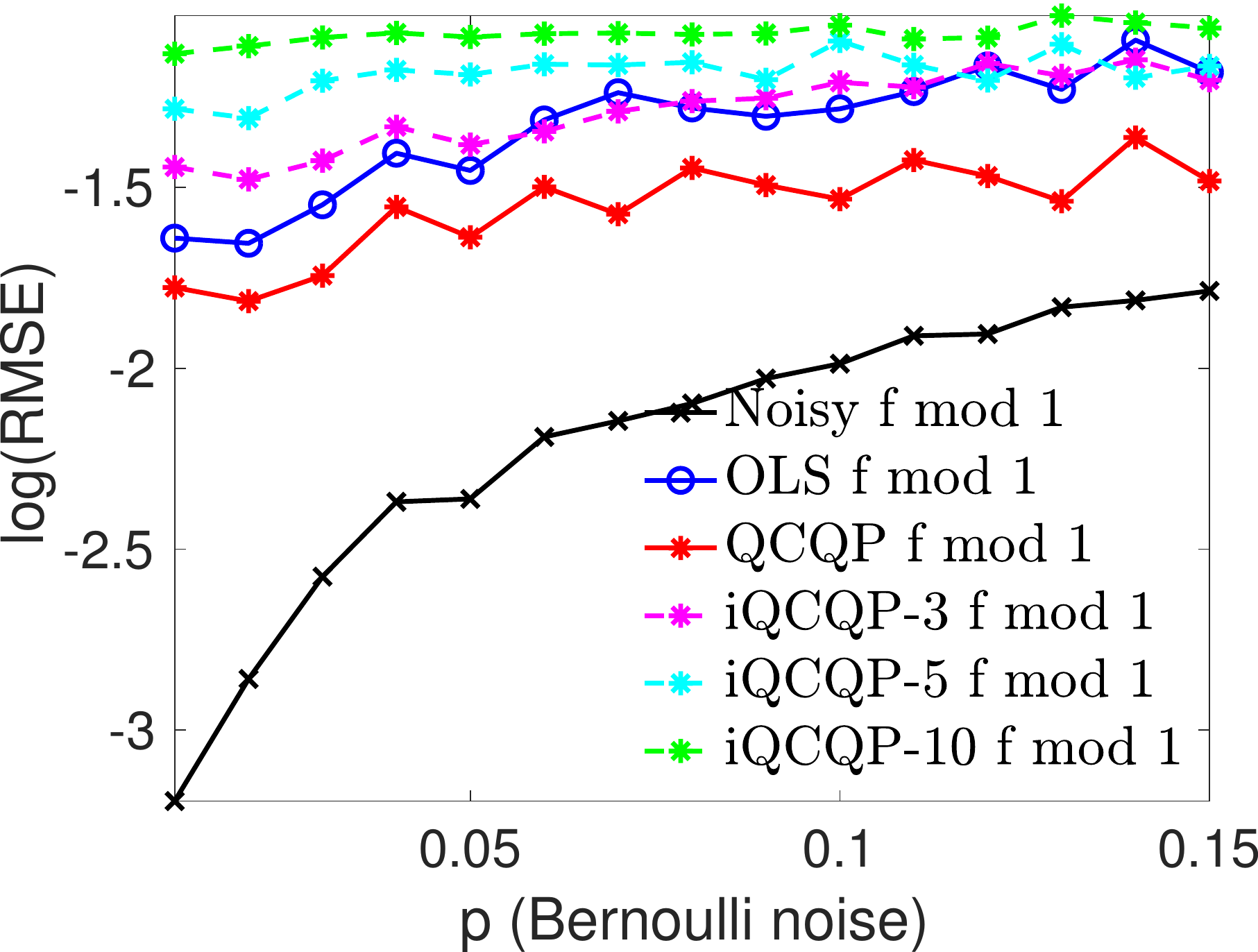} }
%
\subcaptionbox[]{ $k=5$, $\lambda= 0.5$}[ 0.19\textwidth ]
{\includegraphics[width=0.19\textwidth] {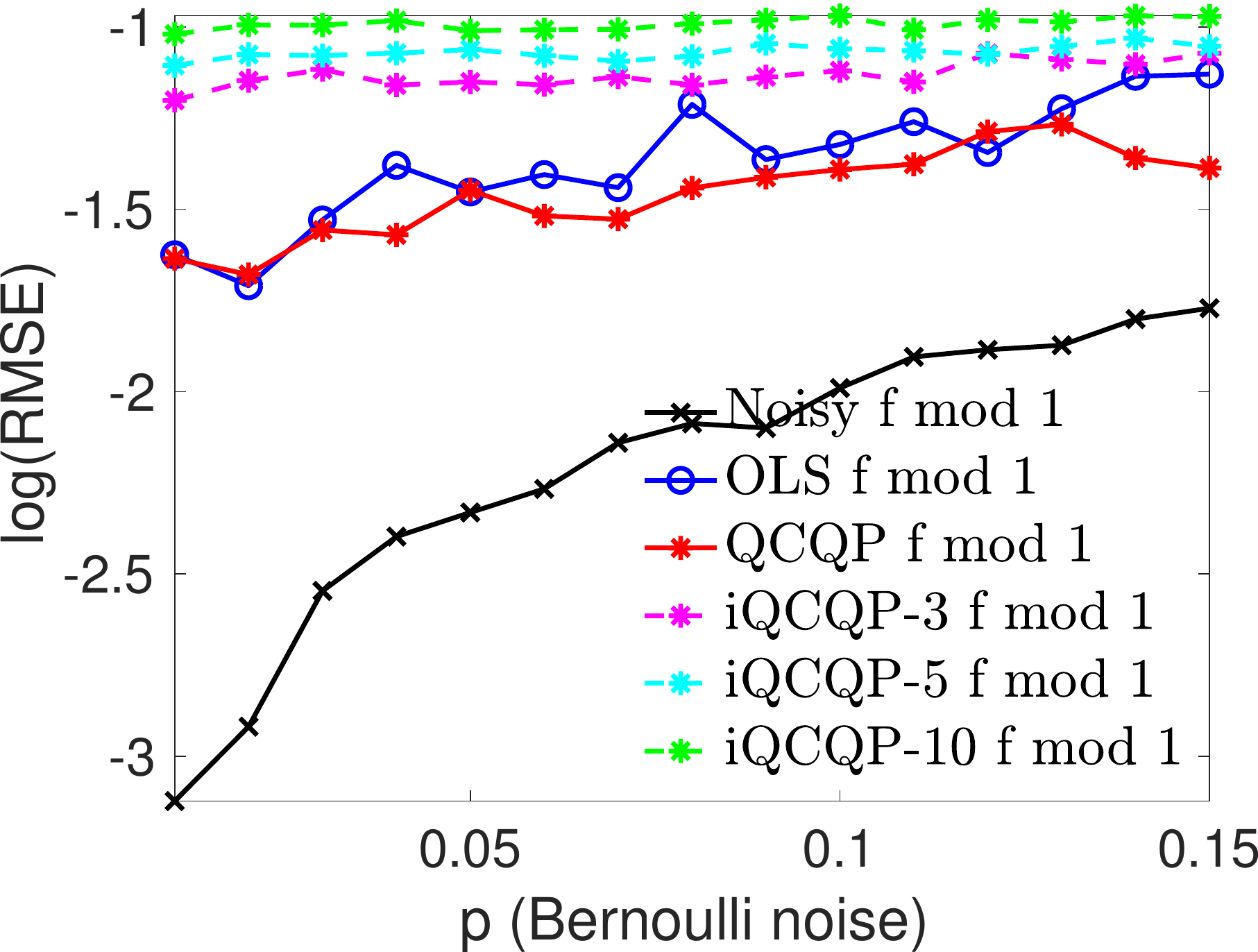} }
%
\subcaptionbox[]{ $k=5$, $\lambda= 1$}[ 0.19\textwidth ]
{\includegraphics[width=0.19\textwidth] {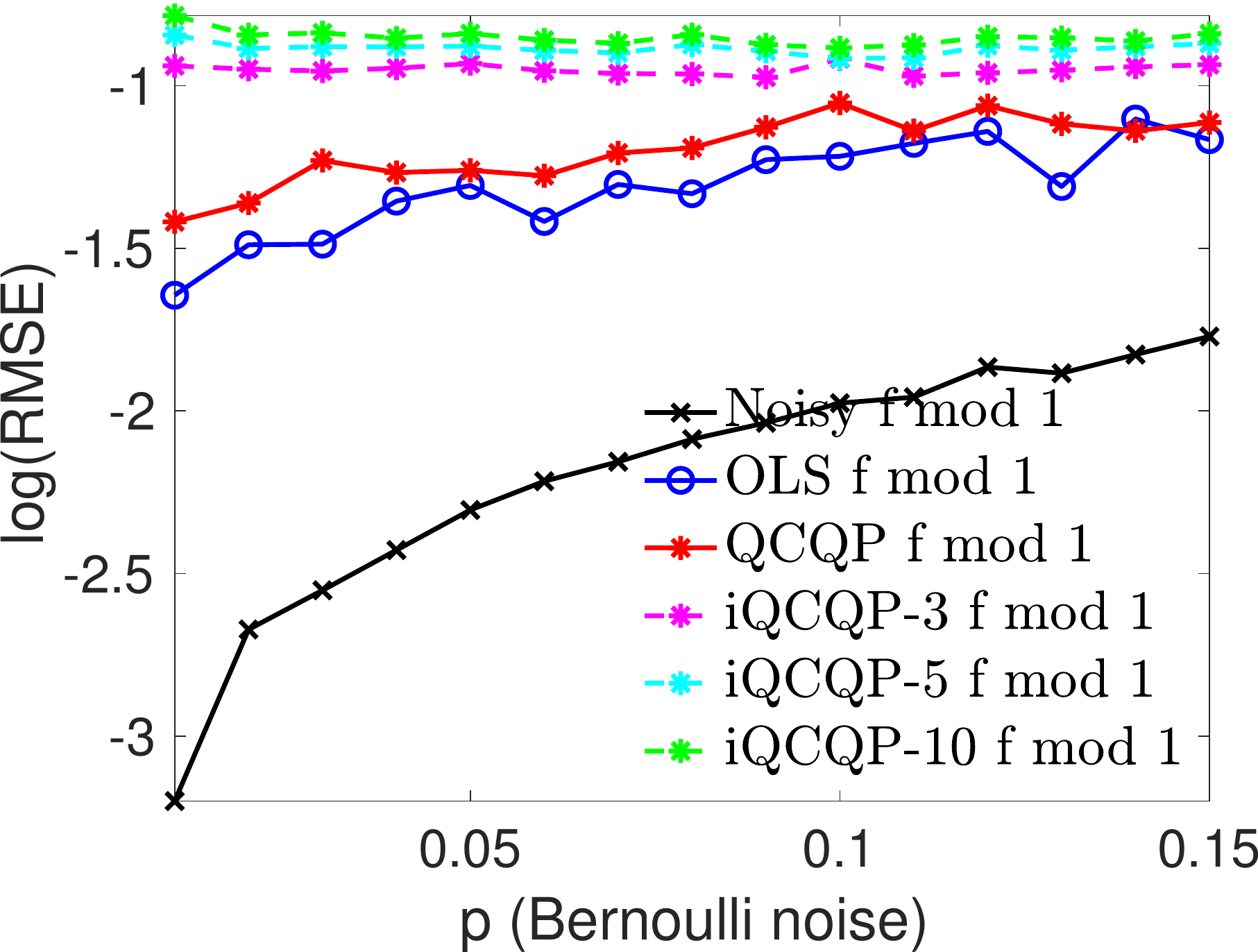} }
\hspace{0.1\textwidth} 
%
%
\vspace{-3mm}
\captionsetup{width=0.98\linewidth}
\caption[Short Caption]{Recovery errors for the denoised $f$ \hspace{-1mm} mod 1 samples, for $n=500$ under the Bernoulli noise model (20 trials).    \textbf{QCQP} denotes Algorithm \ref{algo:two_stage_denoise} without  the unwrapping stage performed by \textbf{OLS} \eqref{eq:ols_unwrap_lin_system}.
}
\label{fig:Sims_f1_Bernoulli_fmod1}
\end{figure}
 
\vspace{-3mm}

\begin{figure}[!ht]
\centering
\subcaptionbox[]{ $k=2$, $\lambda= 0.03$}[ 0.19\textwidth ]
{\includegraphics[width=0.19\textwidth] {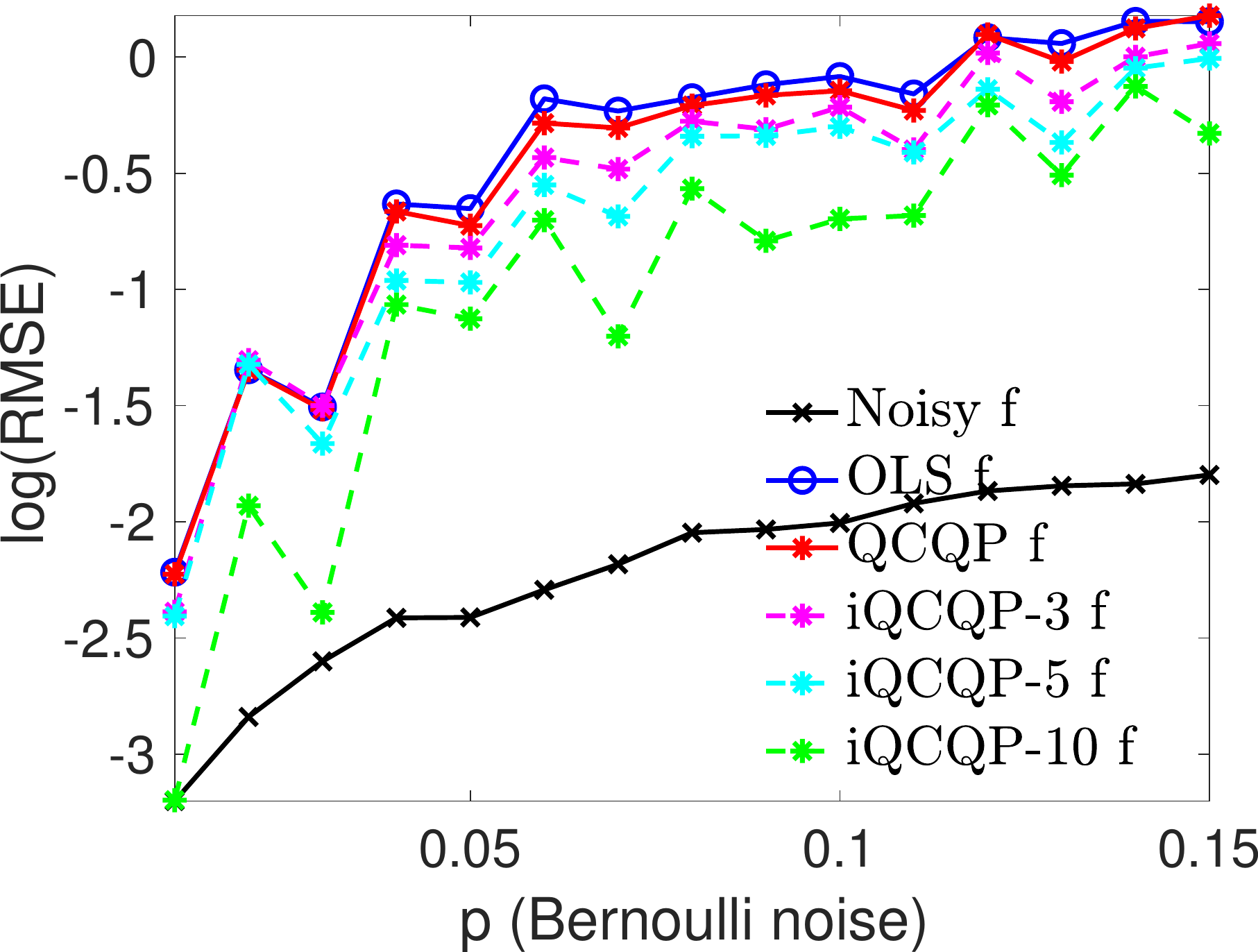} }
\subcaptionbox[]{ $k=2$, $\lambda= 0.1$}[ 0.19\textwidth ]
{\includegraphics[width=0.19\textwidth] {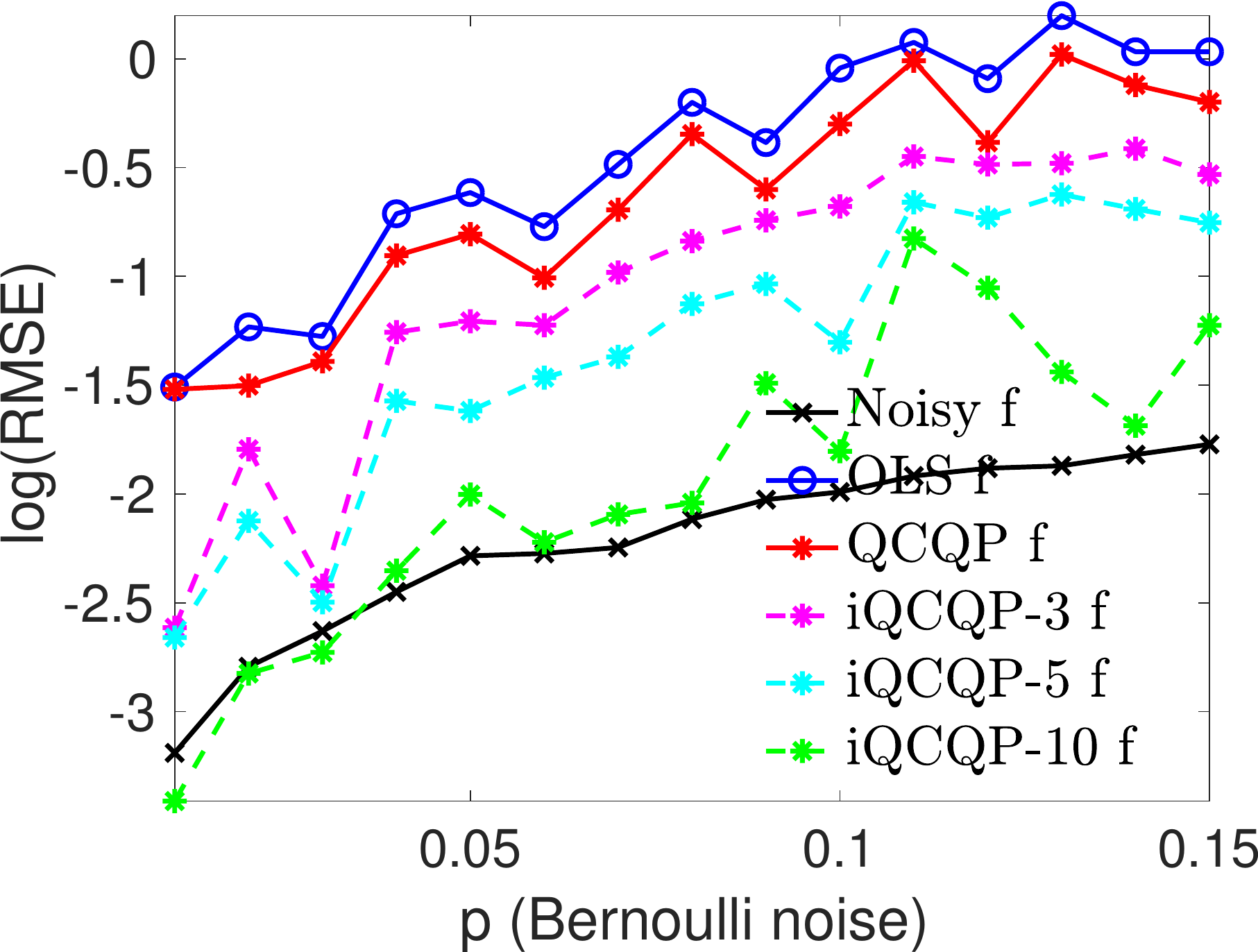} }
%
\subcaptionbox[]{ $k=2$, $\lambda= 0.3$}[ 0.19\textwidth ]
{\includegraphics[width=0.19\textwidth] {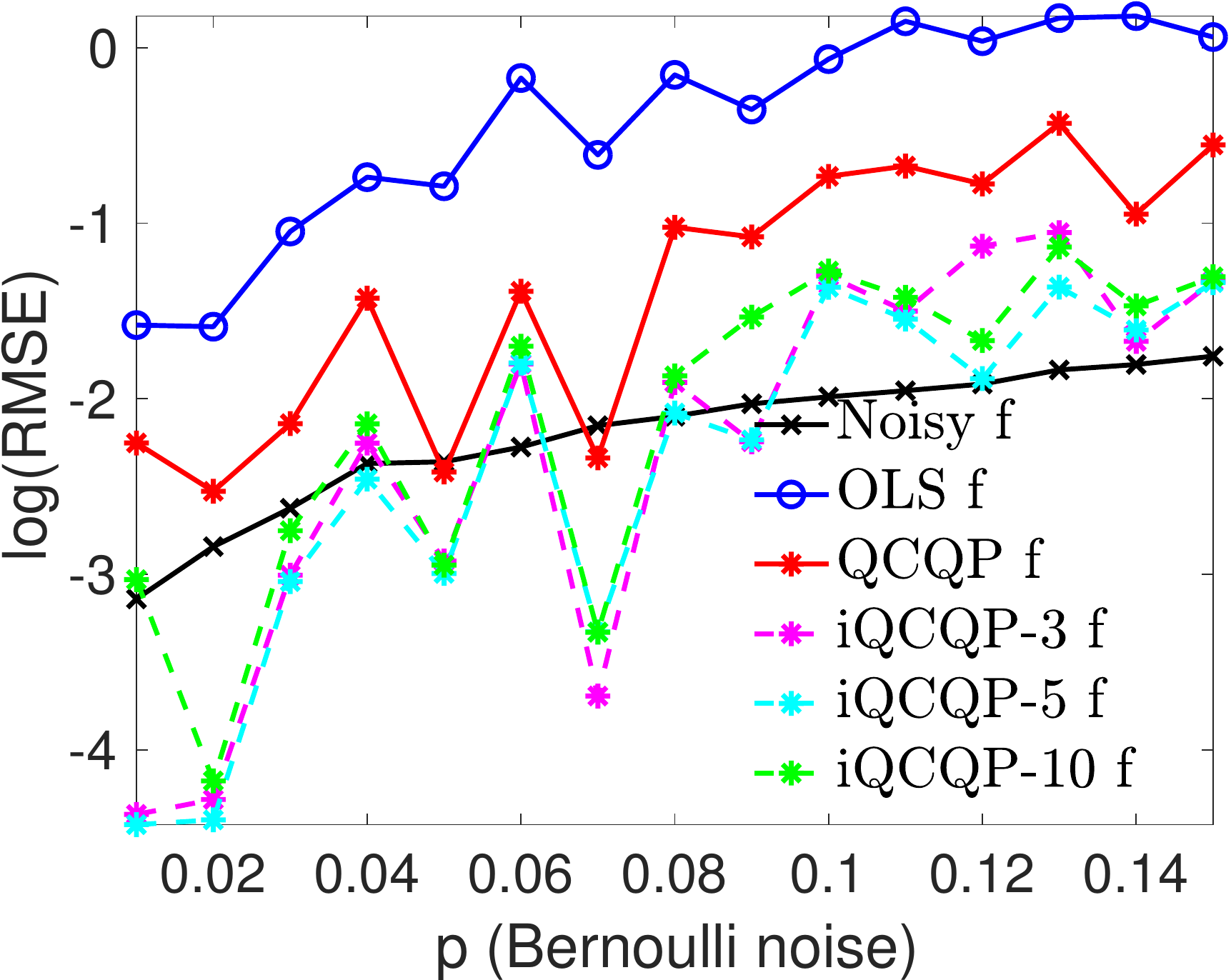} }
%
\subcaptionbox[]{ $k=2$, $\lambda= 0.5$}[ 0.19\textwidth ]
{\includegraphics[width=0.19\textwidth] {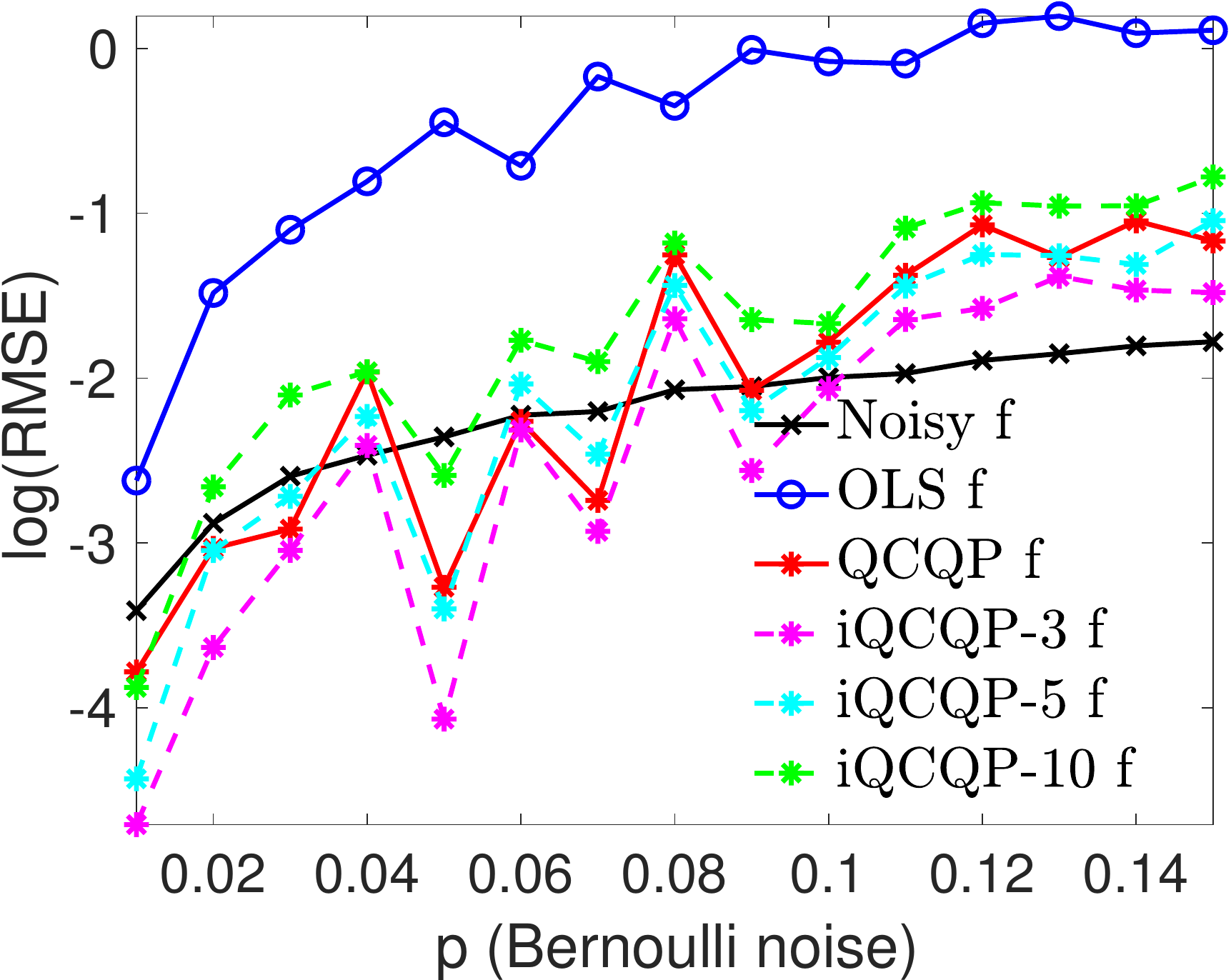} }
%
\subcaptionbox[]{ $k=2$, $\lambda= 1$}[ 0.19\textwidth ]
{\includegraphics[width=0.19\textwidth] {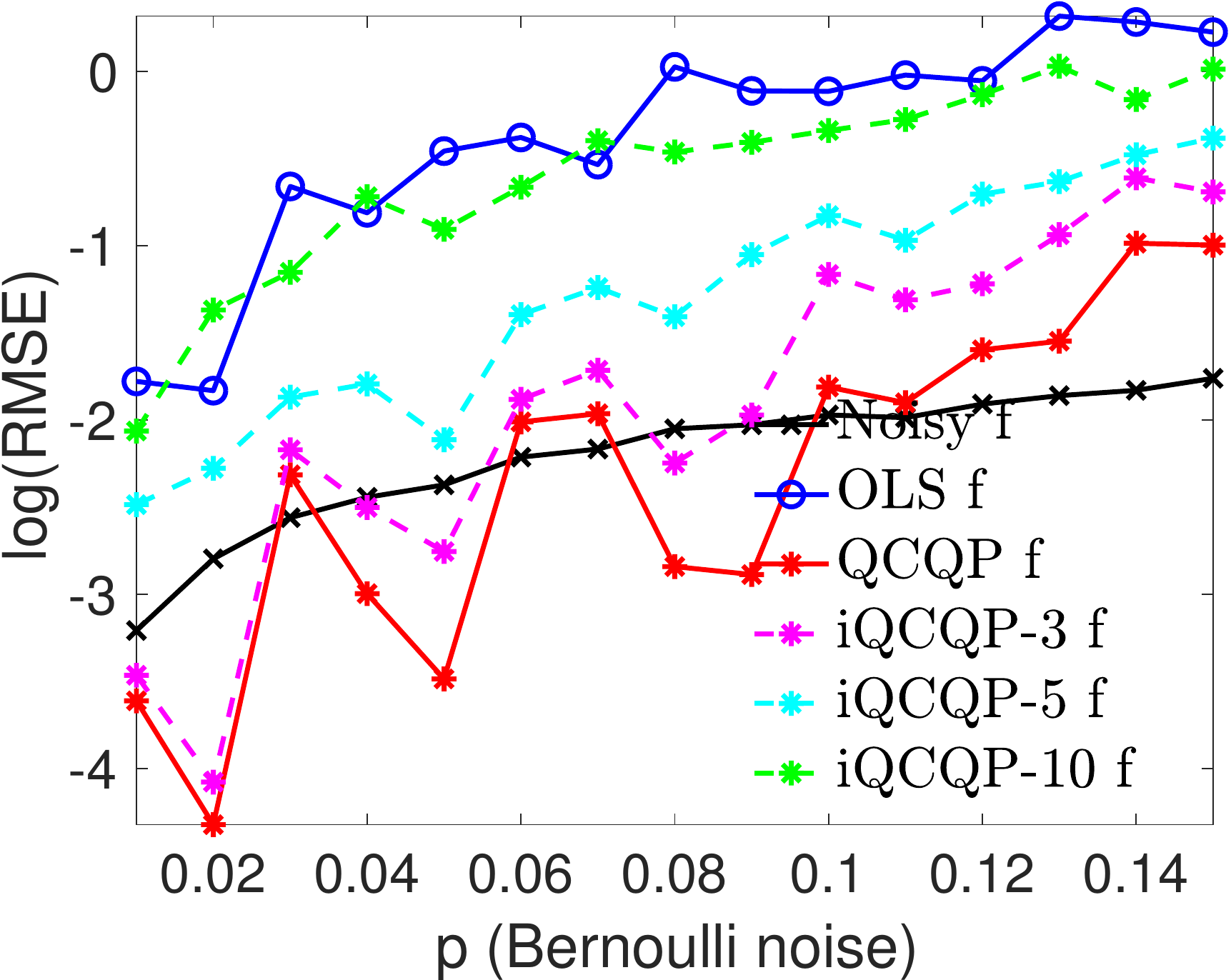} }
%
%
%
\subcaptionbox[]{ $k=3$, $\lambda= 0.03$}[ 0.19\textwidth ]
{\includegraphics[width=0.19\textwidth] {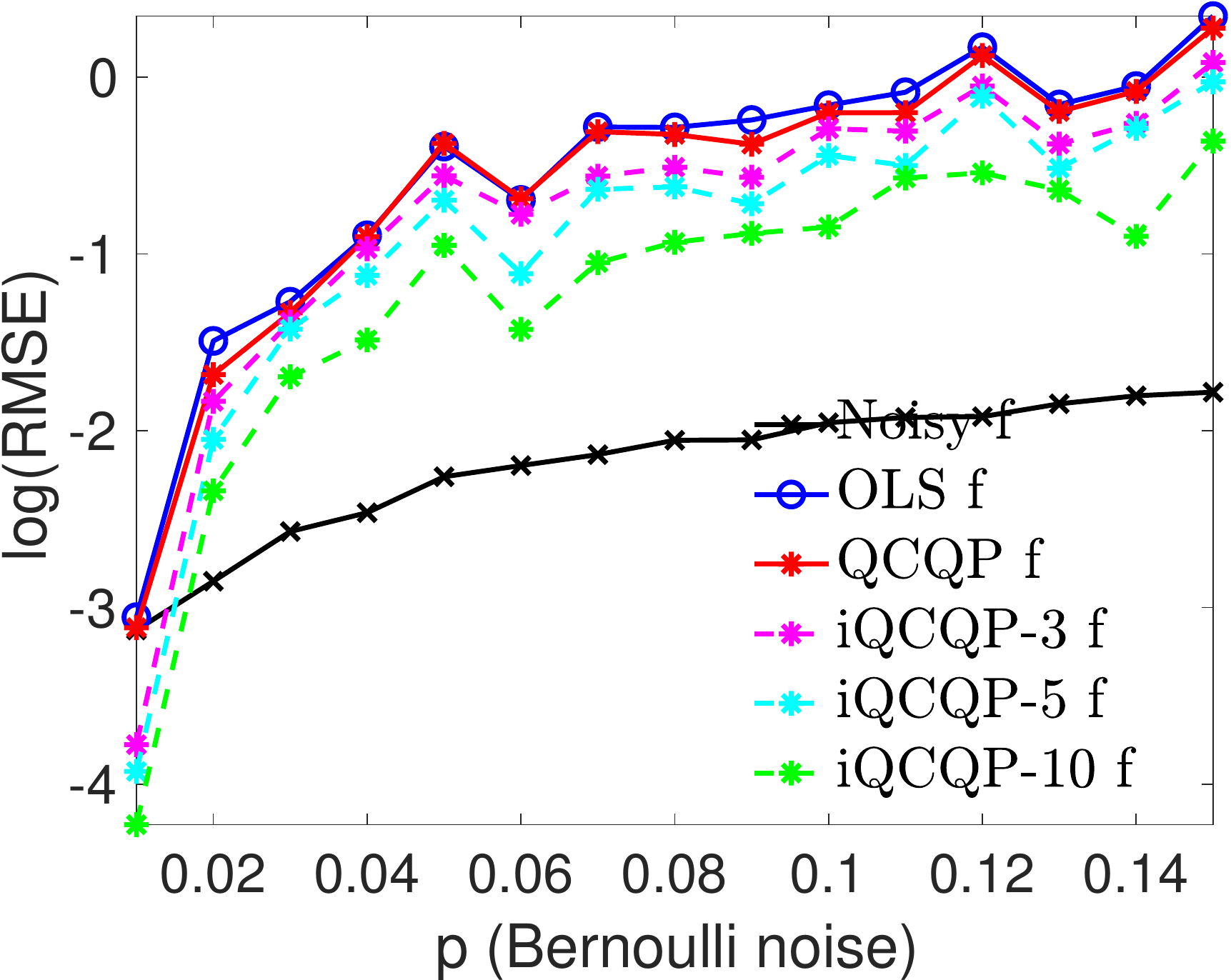} }
\subcaptionbox[]{ $k=3$, $\lambda= 0.1$}[ 0.19\textwidth ]
{\includegraphics[width=0.19\textwidth] {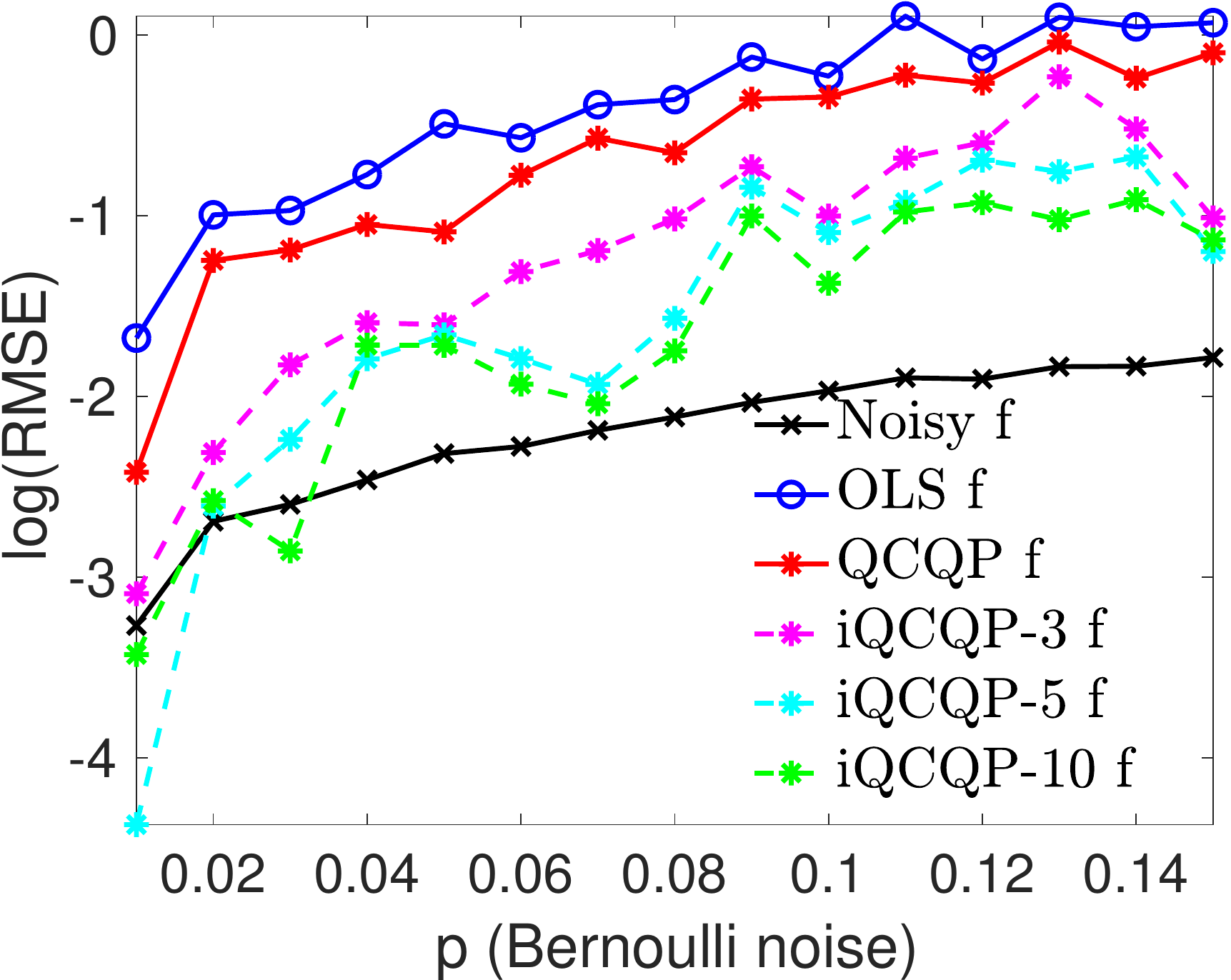} }
%
\subcaptionbox[]{ $k=3$, $\lambda= 0.3$}[ 0.19\textwidth ]
{\includegraphics[width=0.19\textwidth] {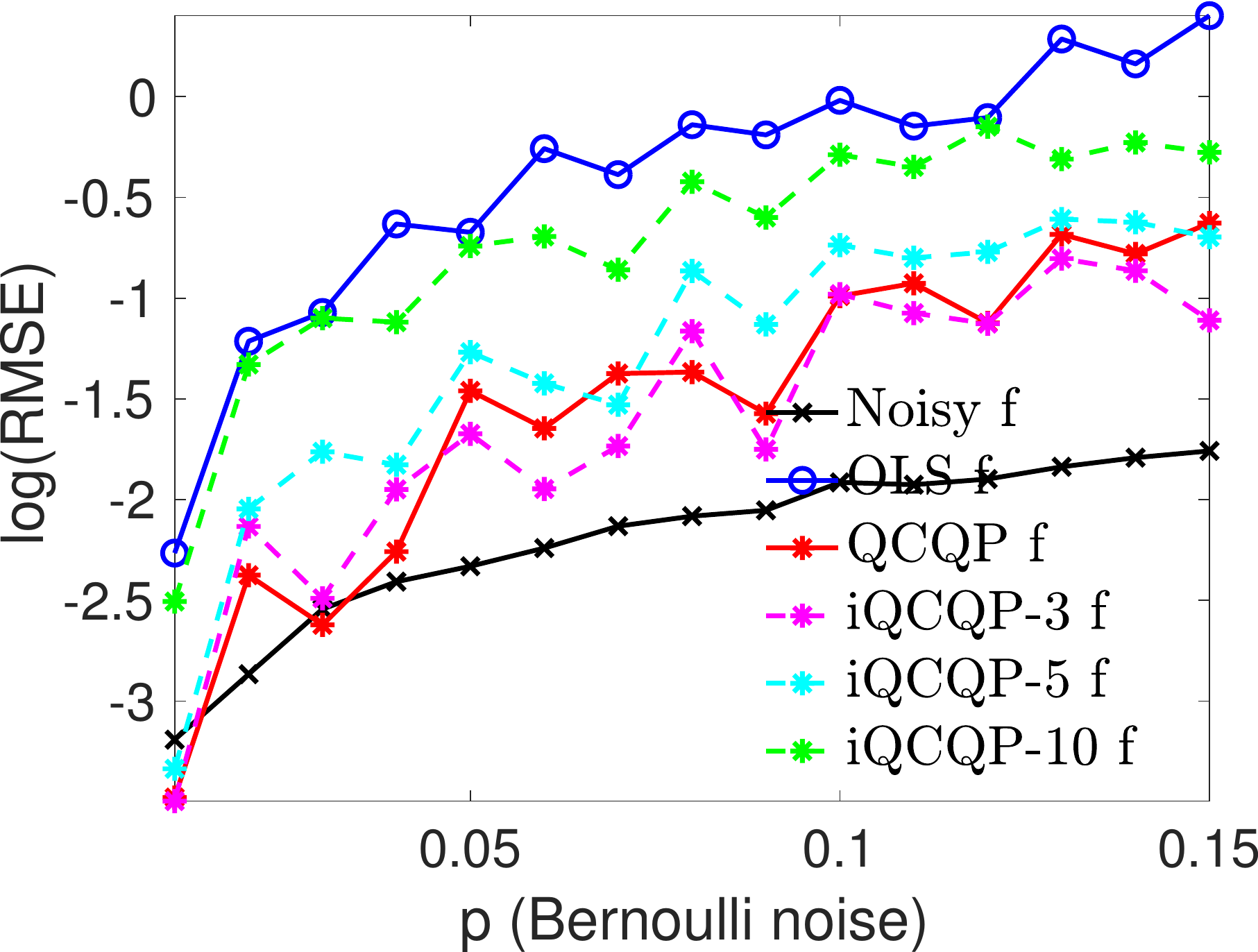} }
%
\subcaptionbox[]{ $k=3$, $\lambda= 0.5$}[ 0.19\textwidth ]
{\includegraphics[width=0.19\textwidth] {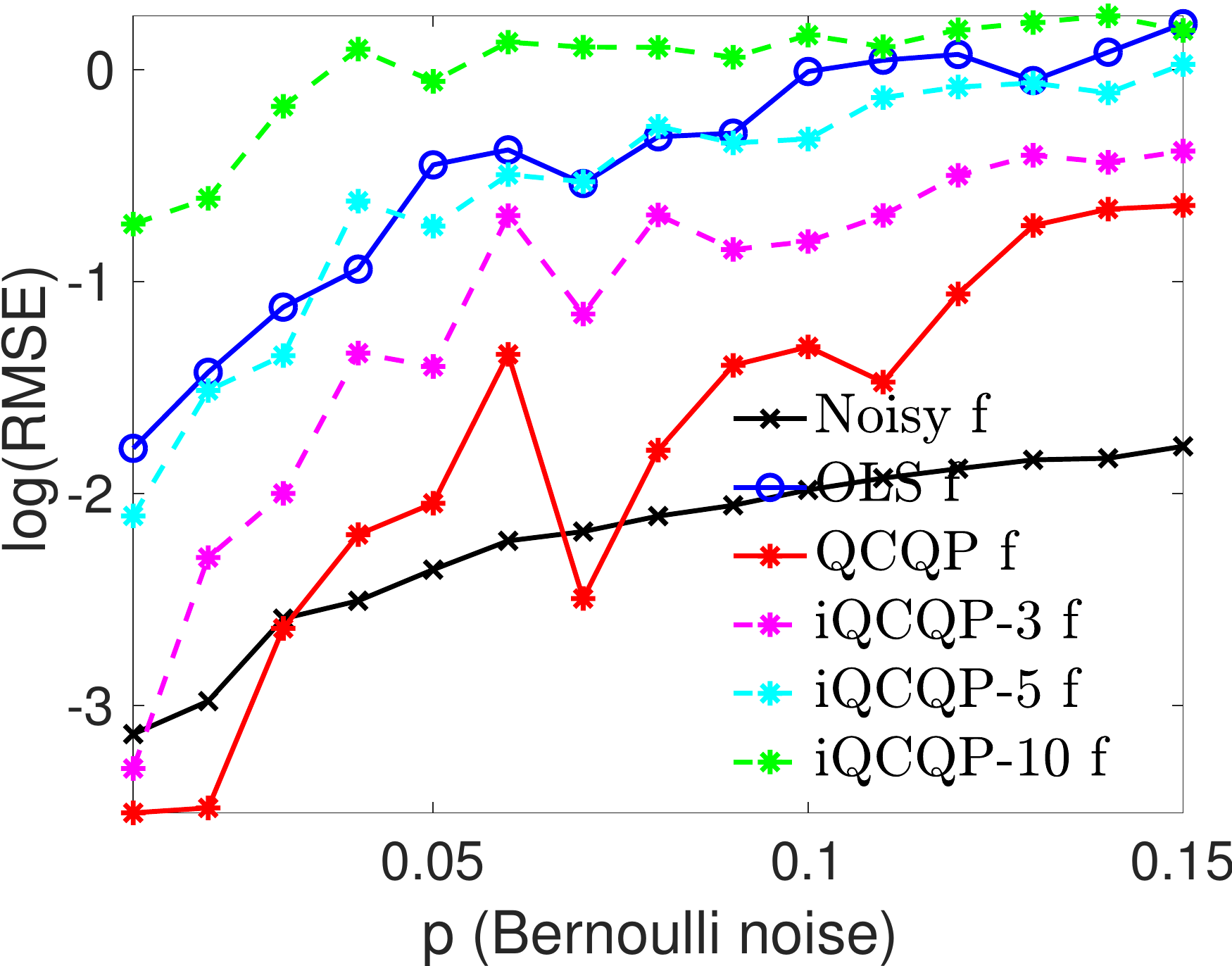} }
%
\subcaptionbox[]{ $k=3$, $\lambda= 1$}[ 0.19\textwidth ]
{\includegraphics[width=0.19\textwidth] {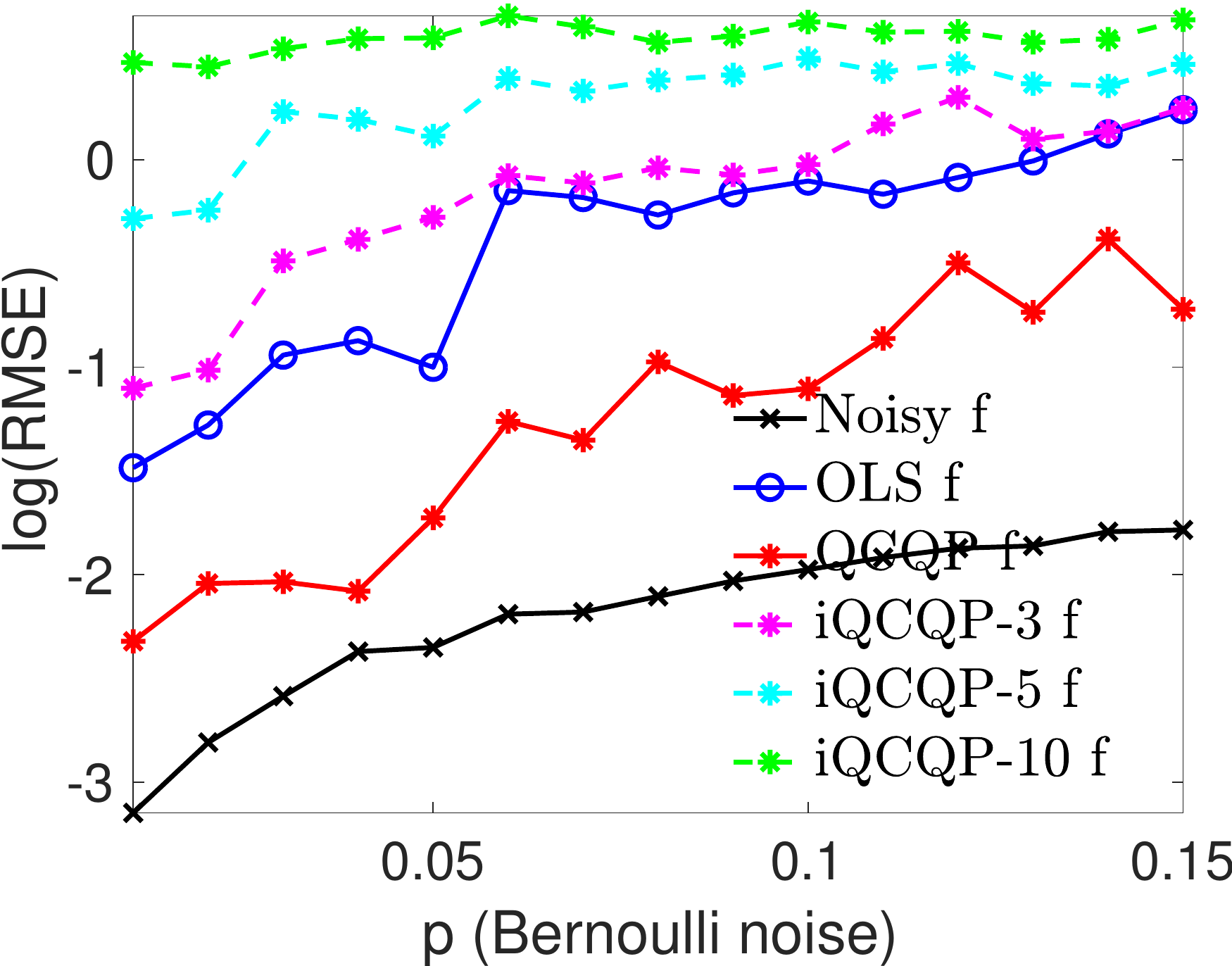} }
%
%
%
\subcaptionbox[]{ $k=5$, $\lambda= 0.03$}[ 0.19\textwidth ]
{\includegraphics[width=0.19\textwidth] {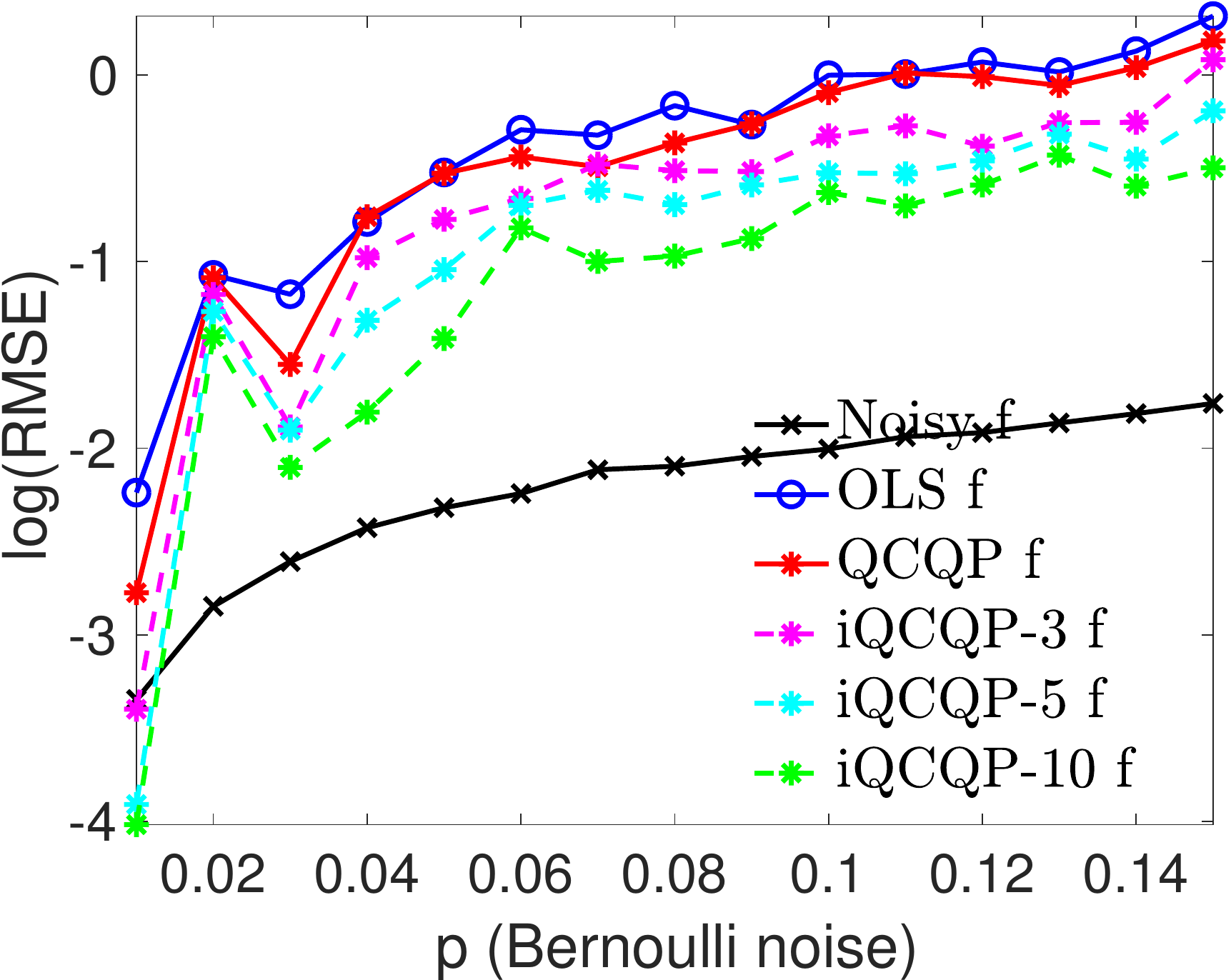} }
\subcaptionbox[]{ $k=5$, $\lambda= 0.1$}[ 0.19\textwidth ]
{\includegraphics[width=0.19\textwidth] {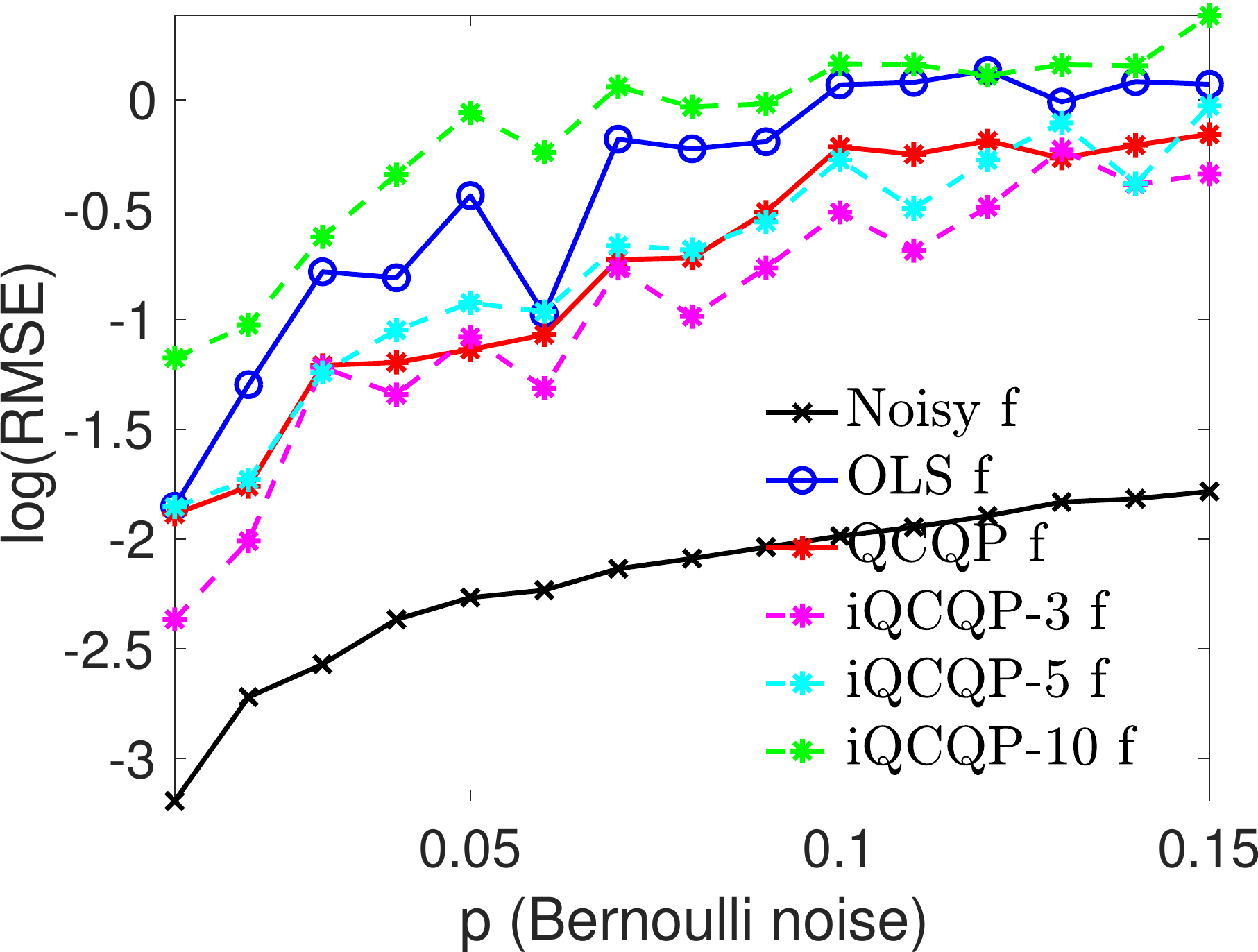} }
%
\subcaptionbox[]{ $k=5$, $\lambda= 0.3$}[ 0.19\textwidth ]
{\includegraphics[width=0.19\textwidth] {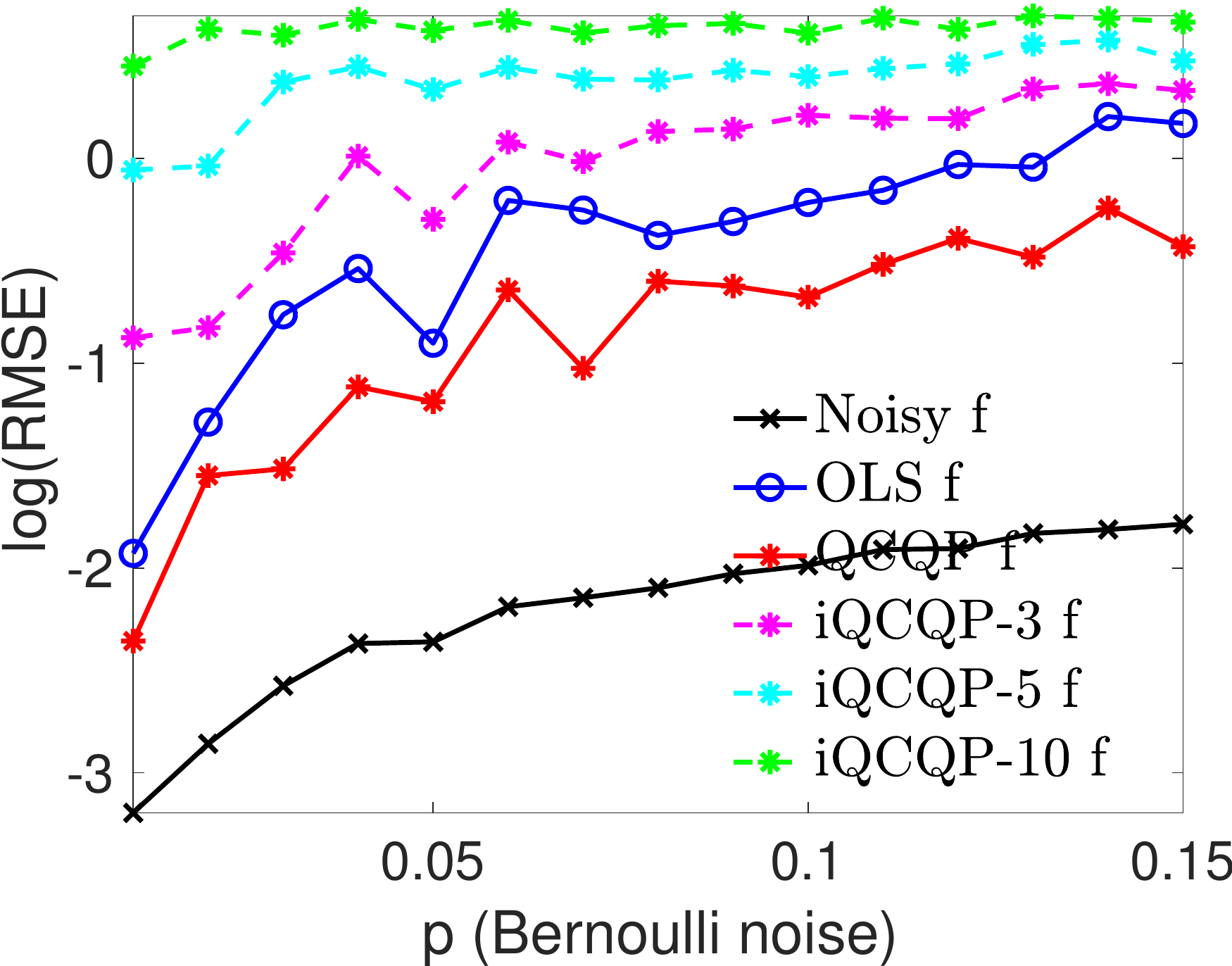} }
%
\subcaptionbox[]{ $k=5$, $\lambda= 0.5$}[ 0.19\textwidth ]
{\includegraphics[width=0.19\textwidth] {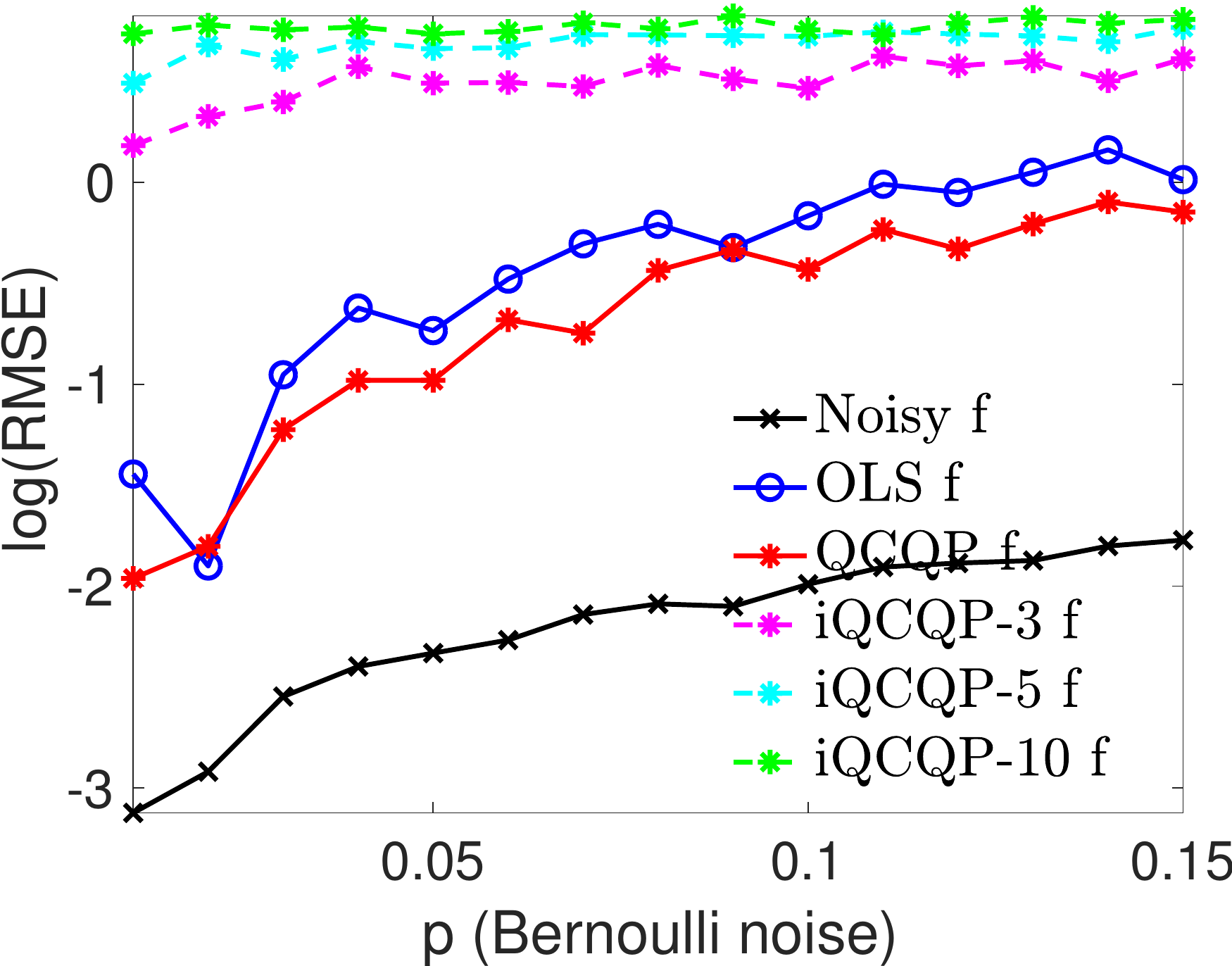} }
%
\subcaptionbox[]{ $k=5$, $\lambda= 1$}[ 0.19\textwidth ]
{\includegraphics[width=0.19\textwidth] {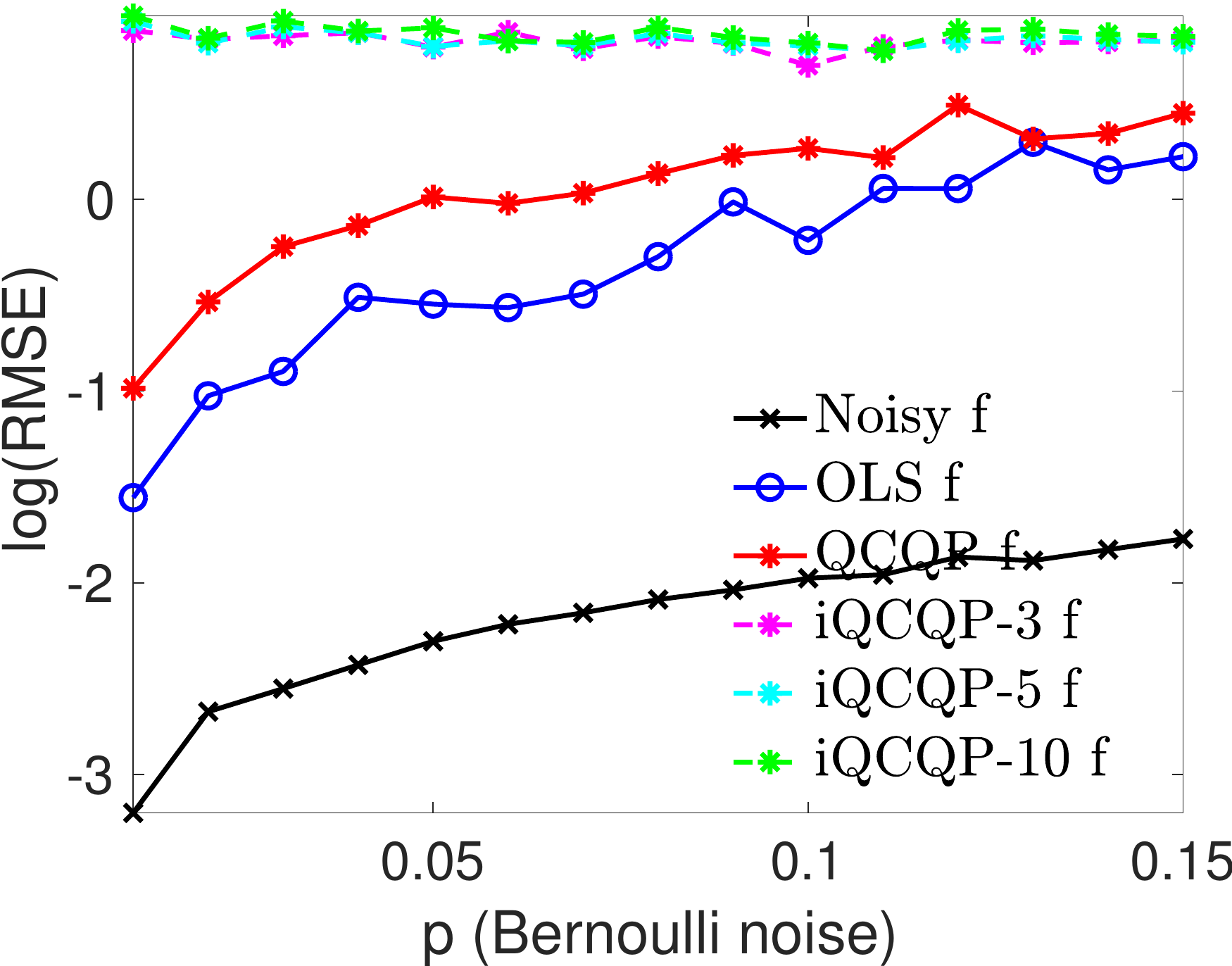} }
%
\captionsetup{width=0.98\linewidth}
\caption[Short Caption]{Recovery errors for the final estimated $f$  samples, for  $n=500$ under  the Bernoulli noise model (20 trials). Here, \textbf{QCQP} denotes Algorithm \ref{algo:two_stage_denoise}, for which  the unwrapping stage is  performed via \textbf{OLS} \eqref{eq:ols_unwrap_lin_system}.   
}
\label{fig:Sims_f1_Bernoulli_f}
\end{figure}




\subsection{Comparison with \cite{bhandari17}}


This section is a comparison of  \textbf{OLS},  \textbf{QCQP}, and \textbf{iQCQP} with the approach of \cite{bhandari17}, whose algorithm we denote by \textbf{BKR} for brevity. We compare all four approaches across two different noise models, Bounded and Gaussian, on two different functions: the function  \eqref{def:f1} used 
in the experiments throughout this paper and a bandlimited  function used by \cite{bhandari17}. 
We defer to the appendix  all the experiments, except for those reported in 
 Figure \ref{fig:instances_f1_Bounded_Sampta}, where we consider the function \eqref{def:f1} under the Bounded noise model.  We  note that at a lower level of noise $\gamma=0.13$, all methods perform  similarly well, with relative performance in the following order: 
\textbf{iQCQP} (RMSE=0.25), 
\textbf{QCQP} (RMSE=0.29), 
\textbf{OLS} (RMSE=0.30),  
\textbf{BKR} (RMSE=0.30).
However, at higher levels of noise,  \textbf{BKR} returns  meaningless results, while \textbf{QCQP}, and especially \textbf{iQCQP}, return more accurate results.

\begin{figure}[!ht]
\centering
\subcaptionbox[]{  $\gamma=0.13$, \textbf{BKR}
}[ 0.24\textwidth ]
{\includegraphics[width=0.24\textwidth] {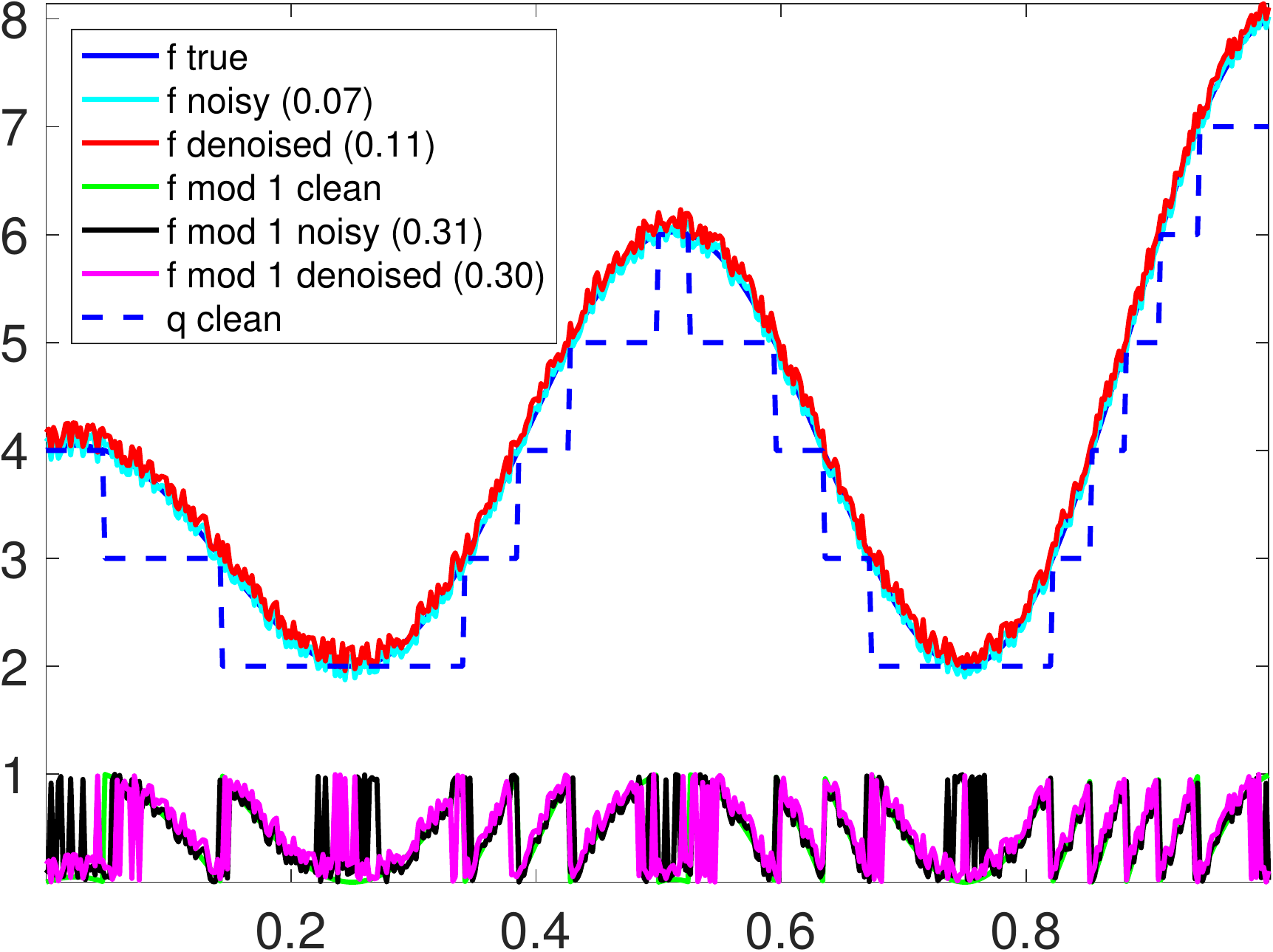} }
%
\subcaptionbox[]{  $\gamma=0.13$, \textbf{OLS}
}[ 0.24\textwidth ]
{\includegraphics[width=0.24\textwidth] {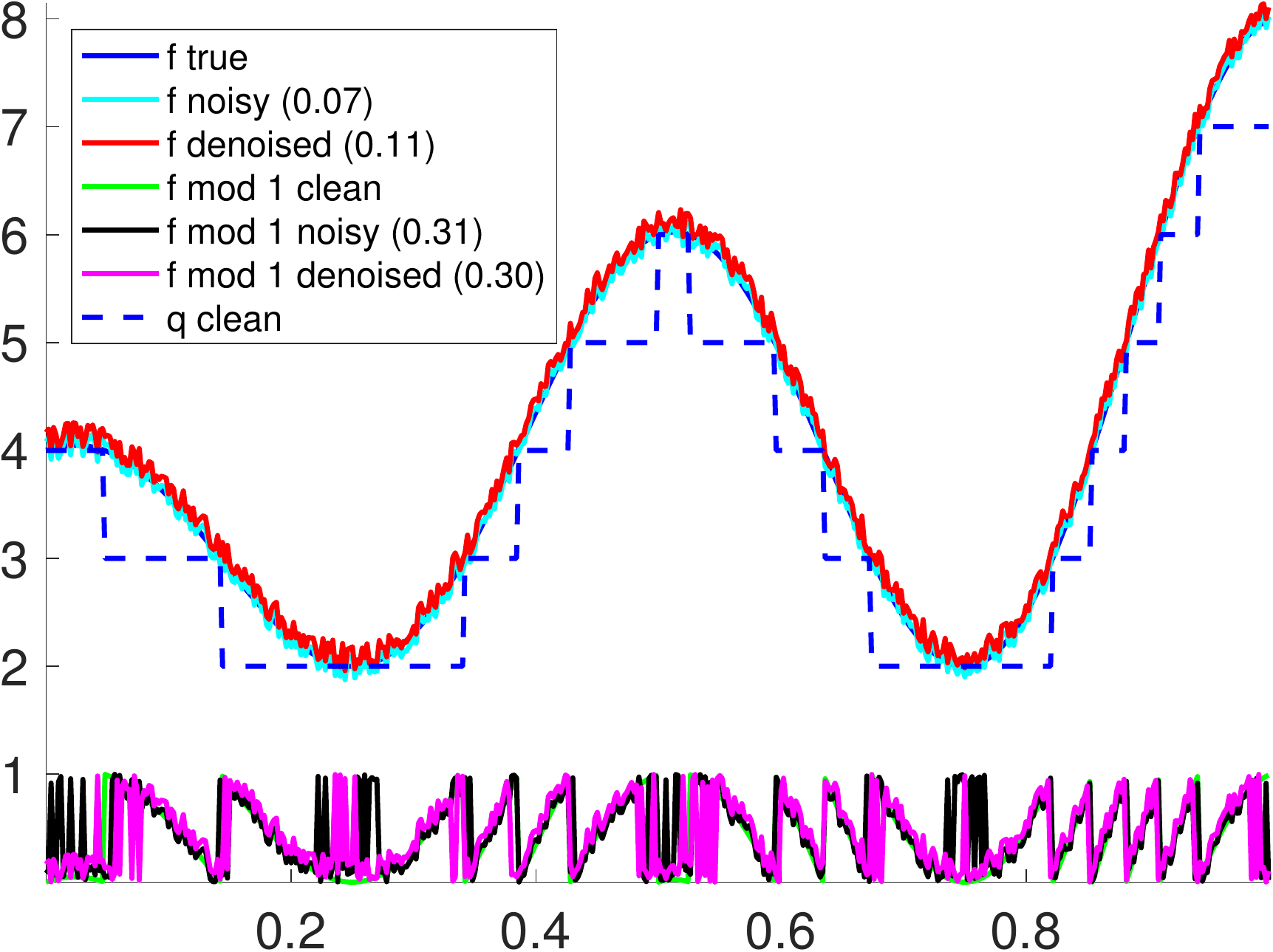} }
%
\subcaptionbox[]{  $\gamma=0.13$, \textbf{QCQP}
}[ 0.24\textwidth ]
{\includegraphics[width=0.24\textwidth] {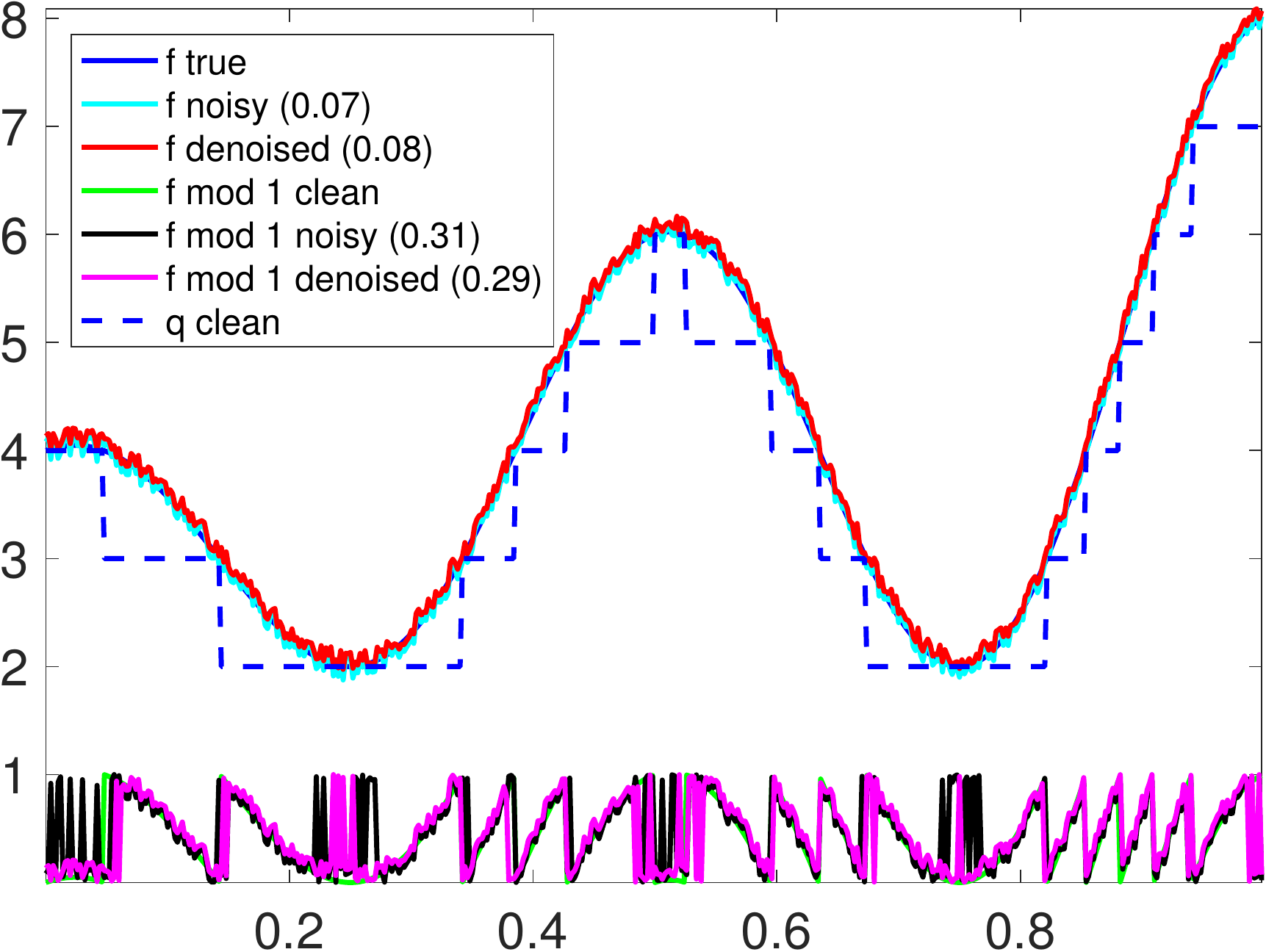} }
%
\subcaptionbox[]{  $\gamma=0.13$, \textbf{iQCQP}
}[ 0.24\textwidth ]
{\includegraphics[width=0.24\textwidth] {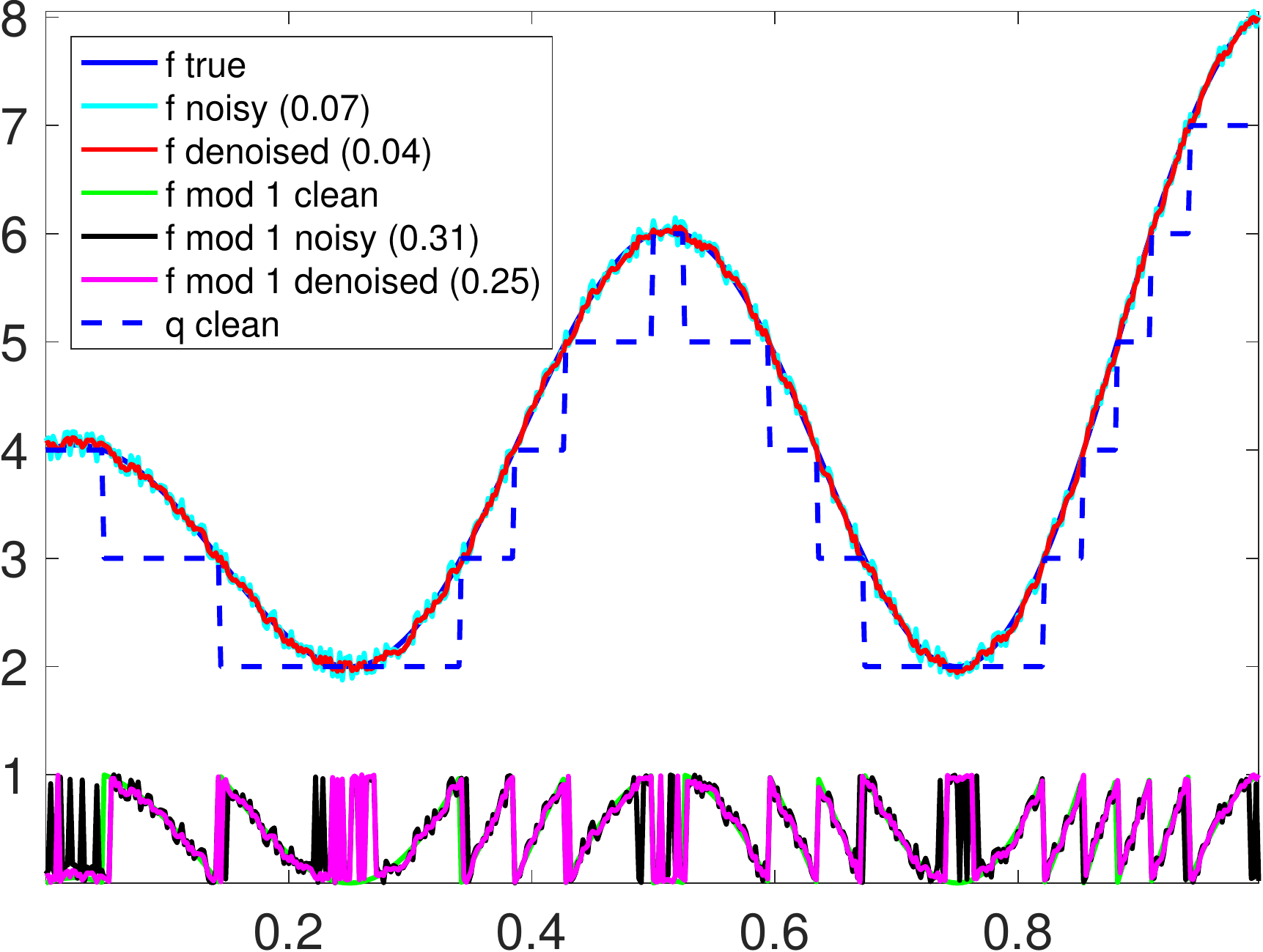} }
%
%
%
\subcaptionbox[]{  $\gamma=0.14$, \textbf{BKR}
}[ 0.24\textwidth ]
{\includegraphics[width=0.24\textwidth] {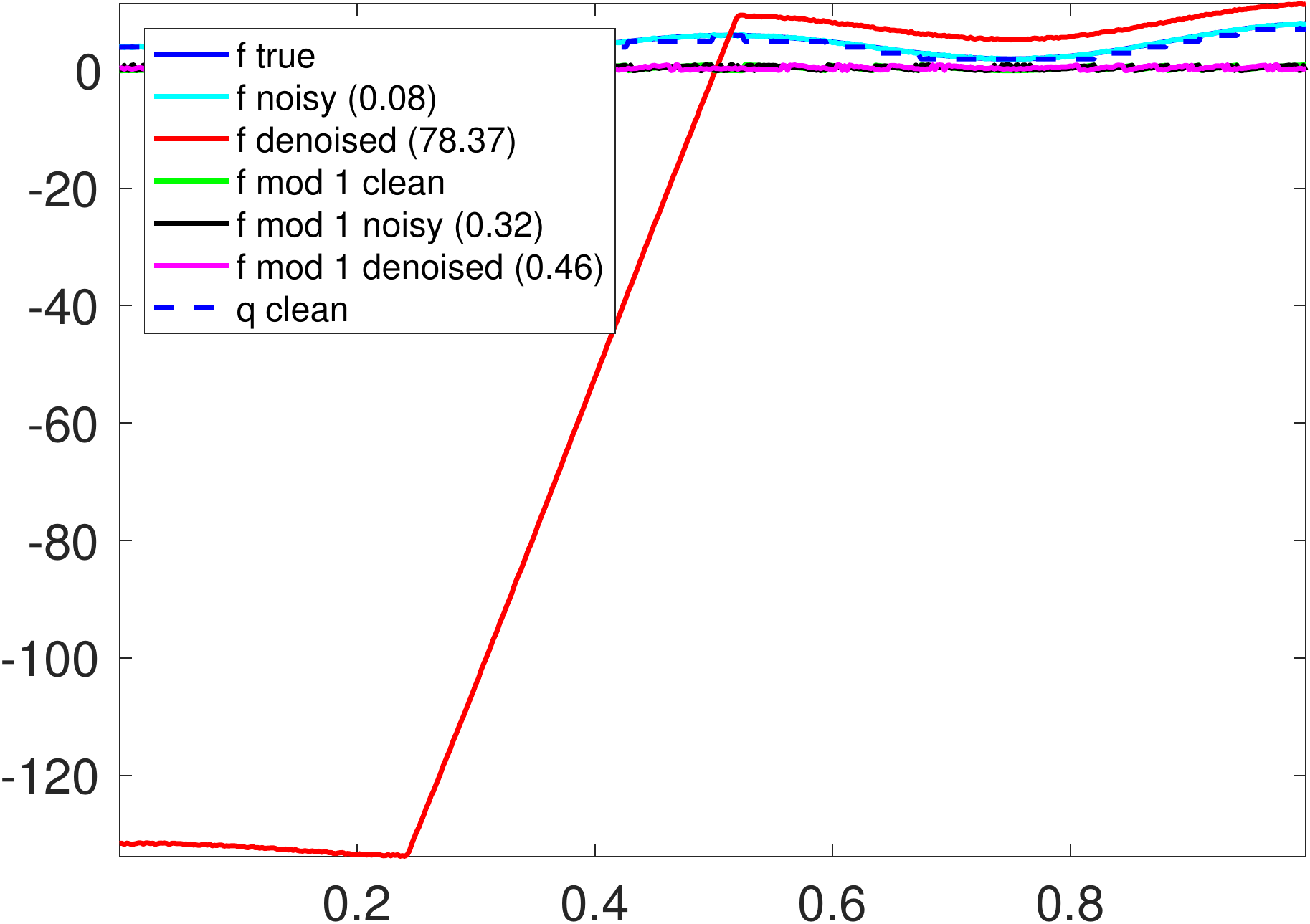} }
%
\subcaptionbox[]{  $\gamma=0.14$, \textbf{OLS}
}[ 0.24\textwidth ]
{\includegraphics[width=0.24\textwidth] {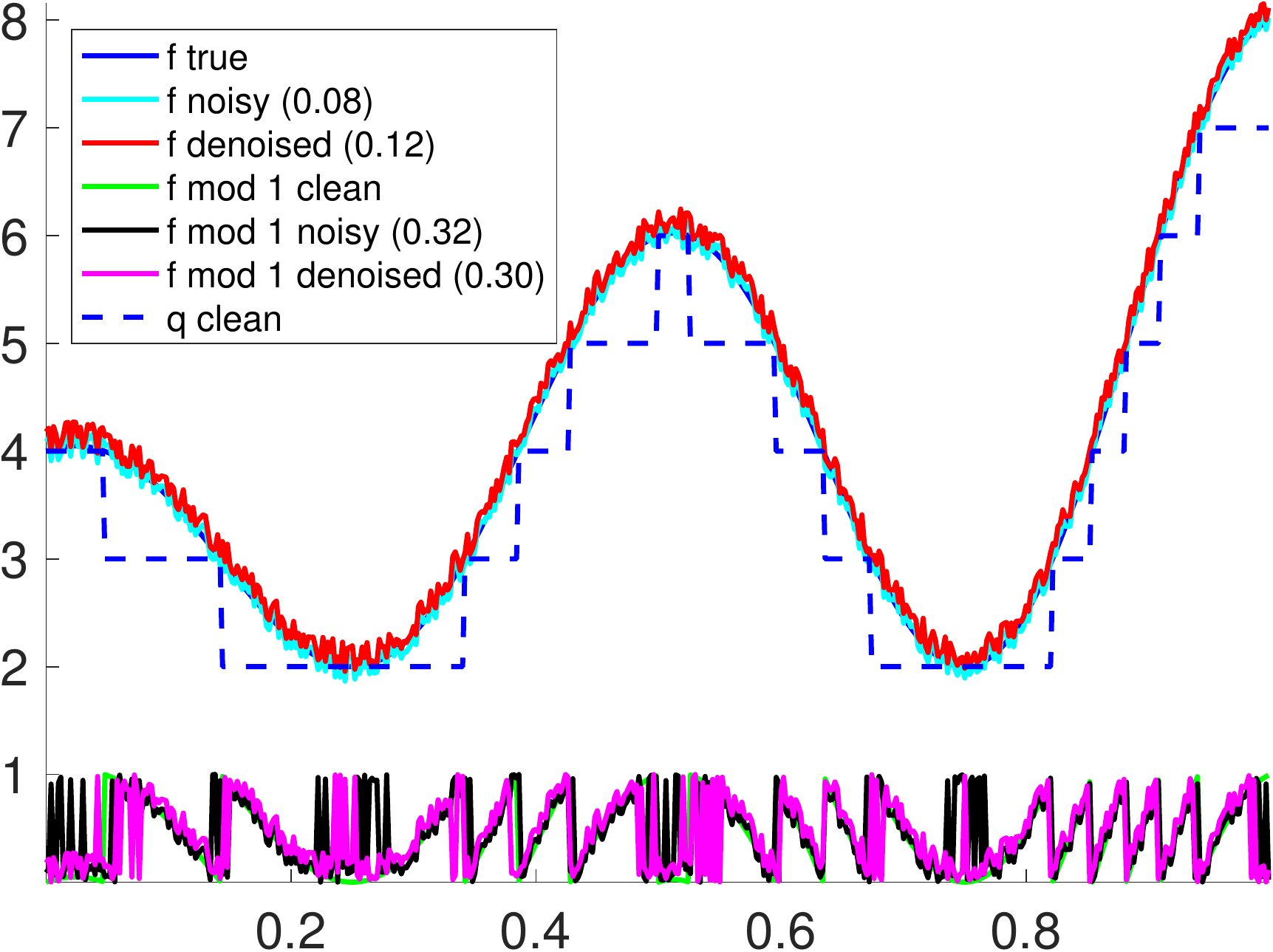} }
%
\subcaptionbox[]{  $\gamma=0.14$, \textbf{QCQP}
}[ 0.24\textwidth ]
{\includegraphics[width=0.24\textwidth] {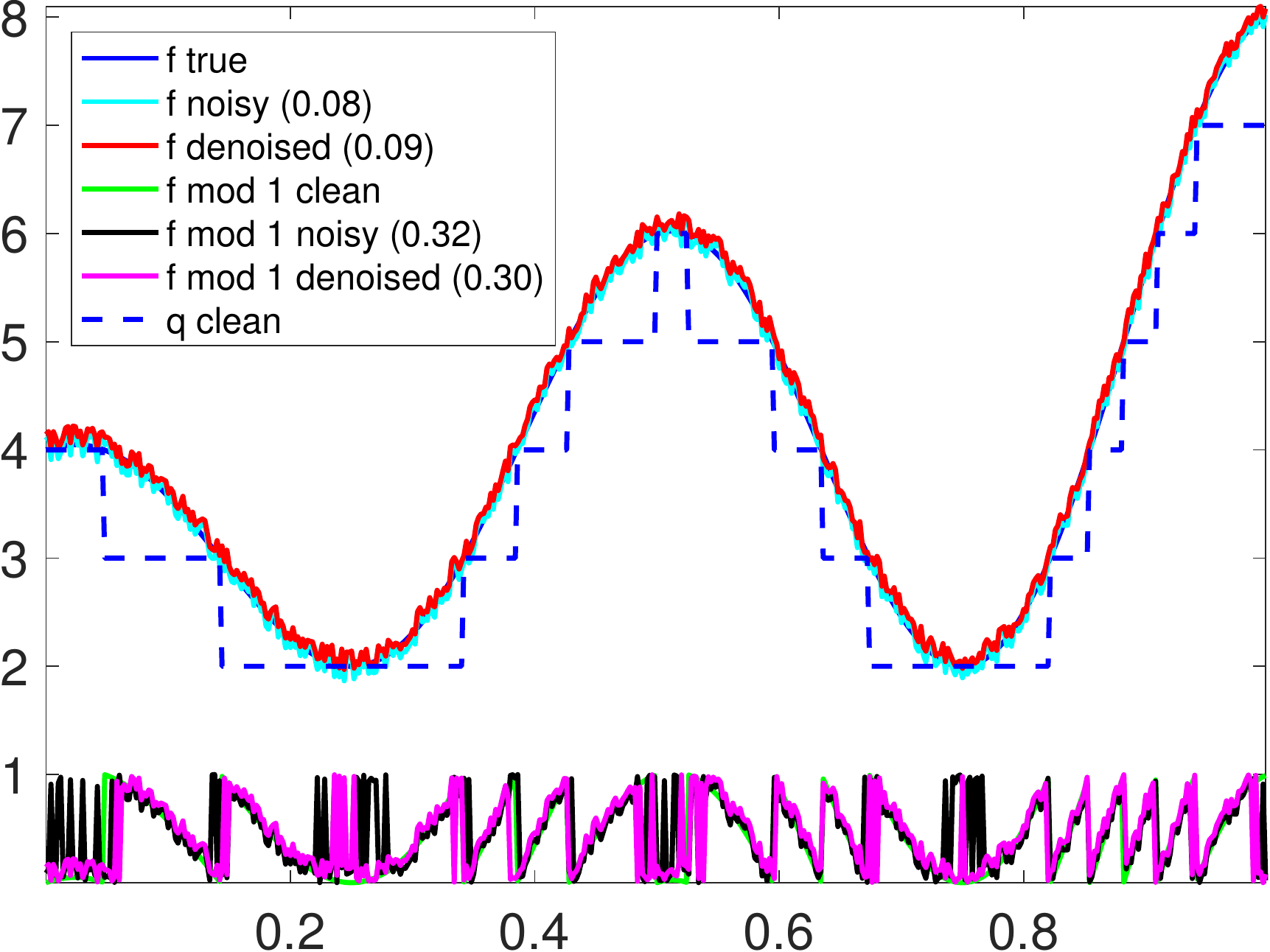} }
%
\subcaptionbox[]{  $\gamma=0.14$, \textbf{iQCQP}
}[ 0.24\textwidth ]
{\includegraphics[width=0.24\textwidth] {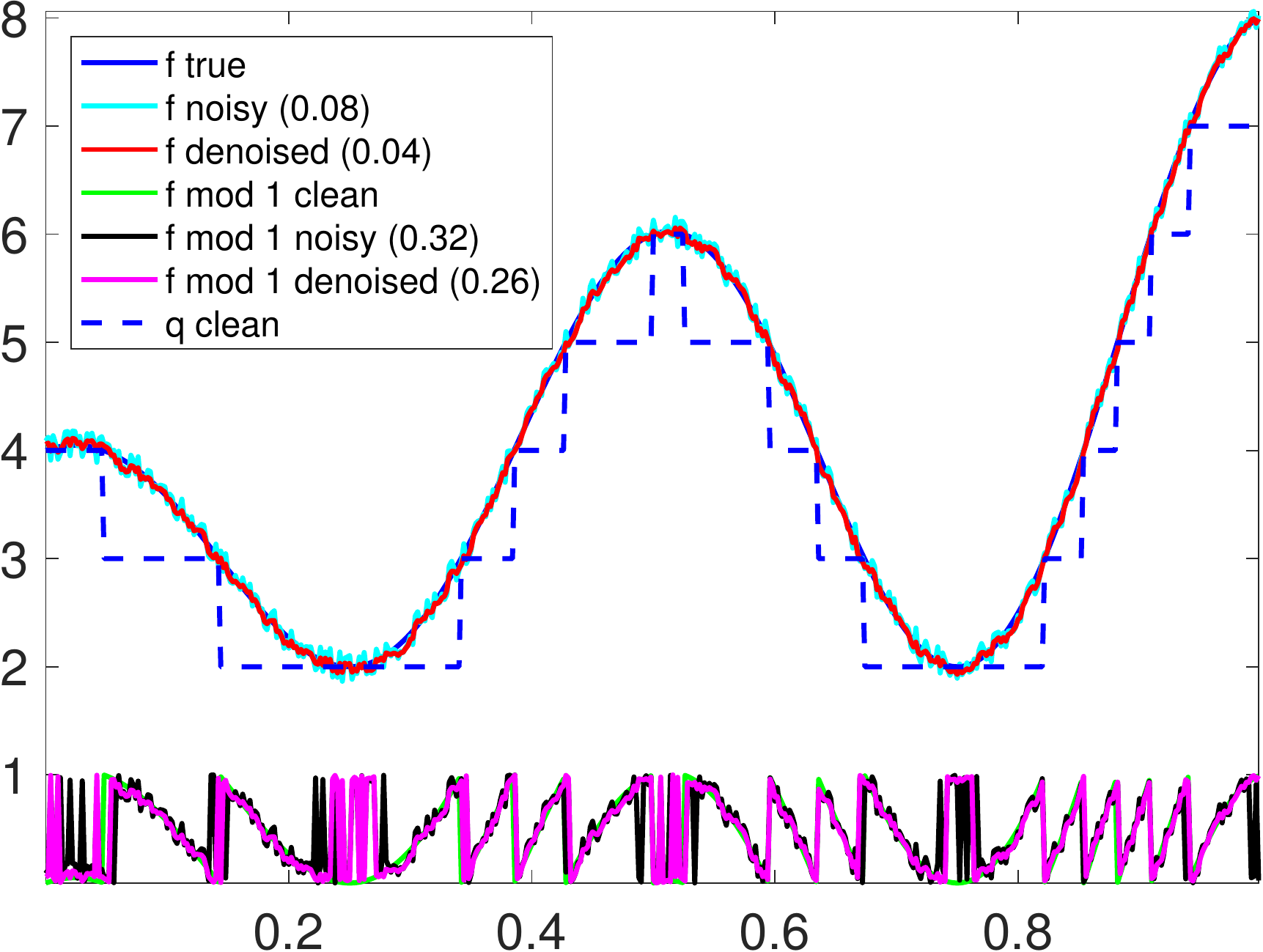} }
%
%
%
%
%
%
\subcaptionbox[]{  $\gamma=0.27$, \textbf{BKR}
}[ 0.24\textwidth ]
{\includegraphics[width=0.24\textwidth] {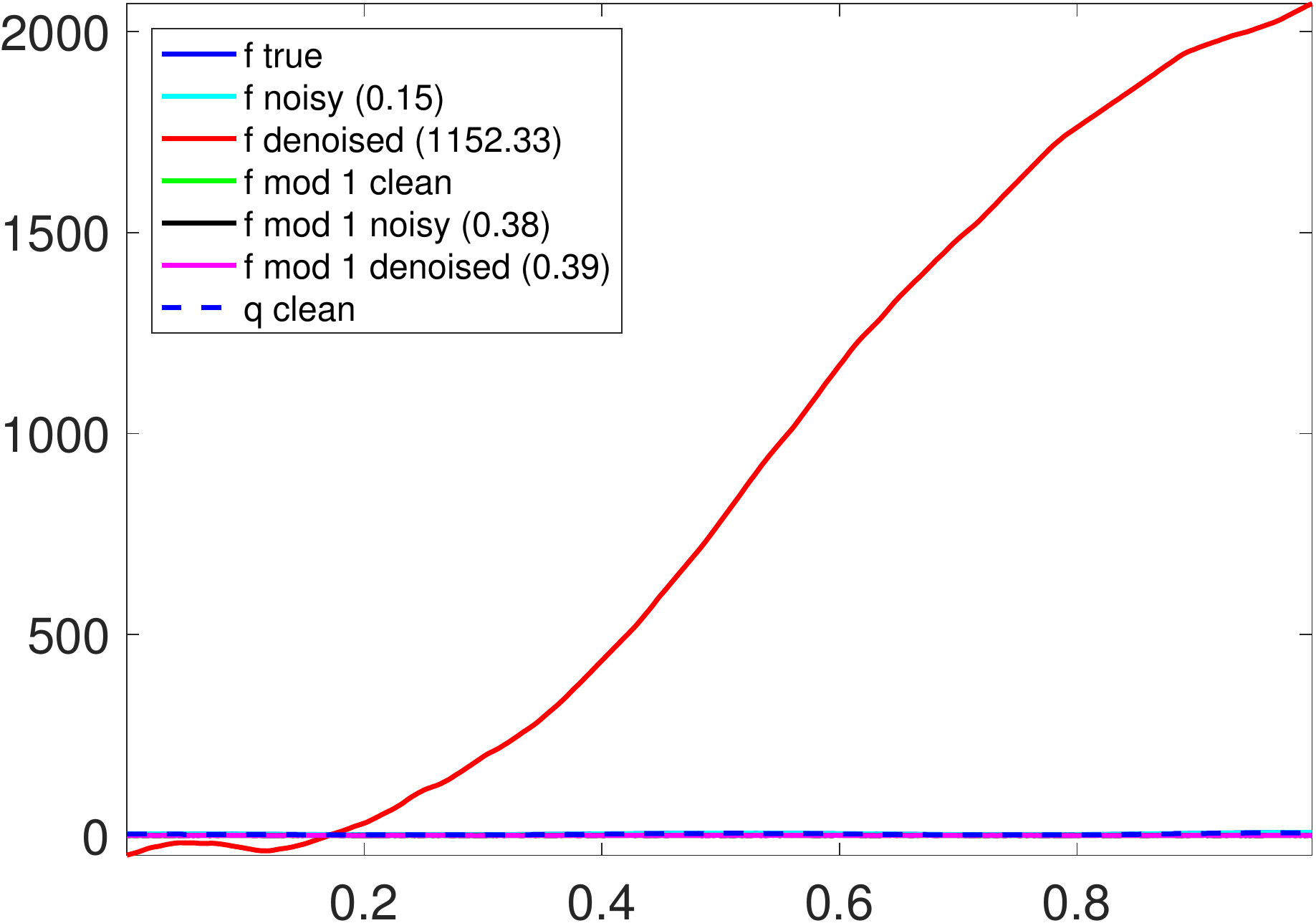} }
%
\subcaptionbox[]{  $\gamma=0.27$, \textbf{OLS}
}[ 0.24\textwidth ]
{\includegraphics[width=0.24\textwidth] {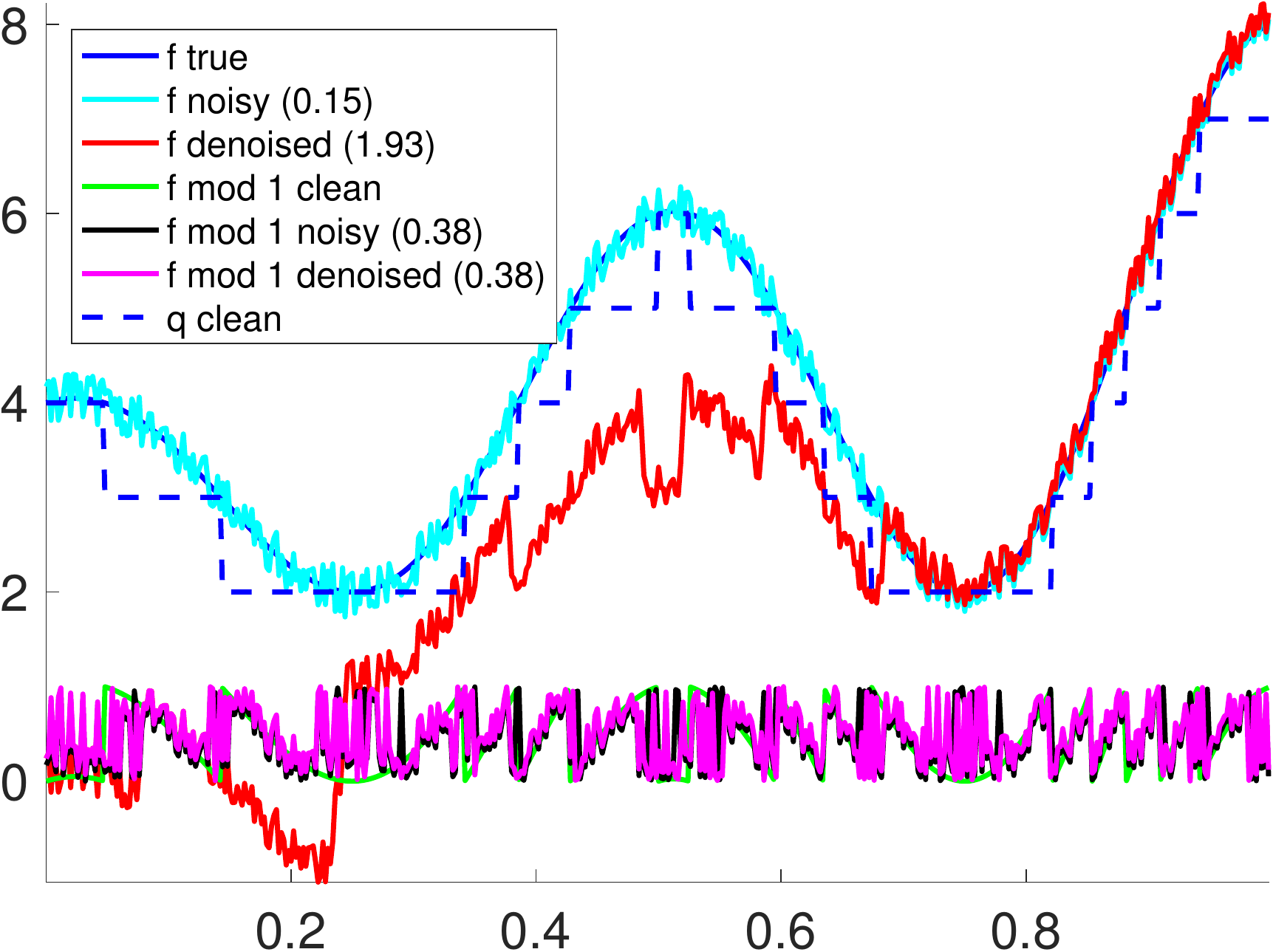} }
%
\subcaptionbox[]{  $\gamma=0.27$, \textbf{QCQP}
}[ 0.24\textwidth ]
{\includegraphics[width=0.24\textwidth] {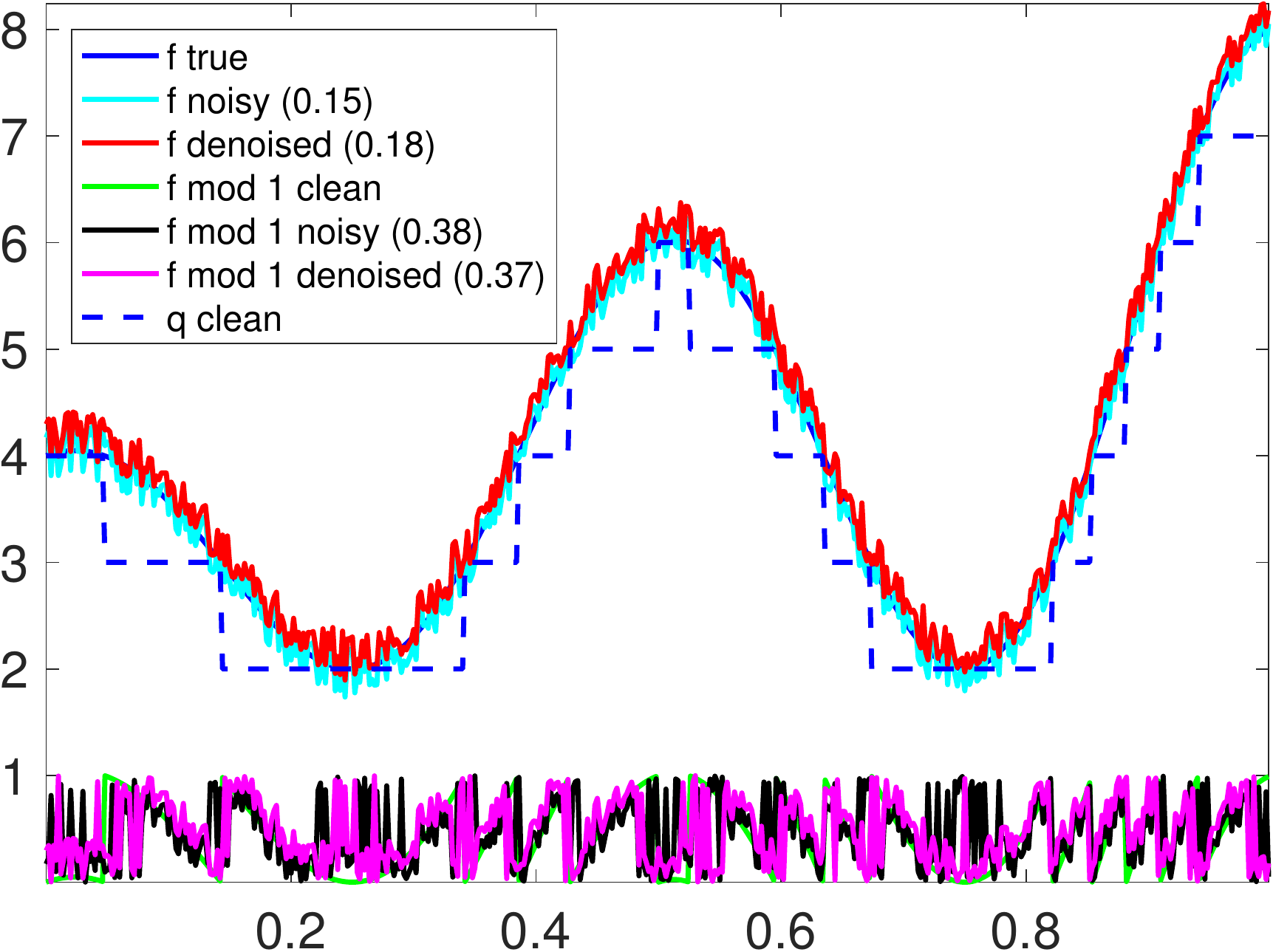} }
%
\subcaptionbox[]{  $\gamma=0.27$, \textbf{iQCQP}
}[ 0.24\textwidth ]
{\includegraphics[width=0.24\textwidth] {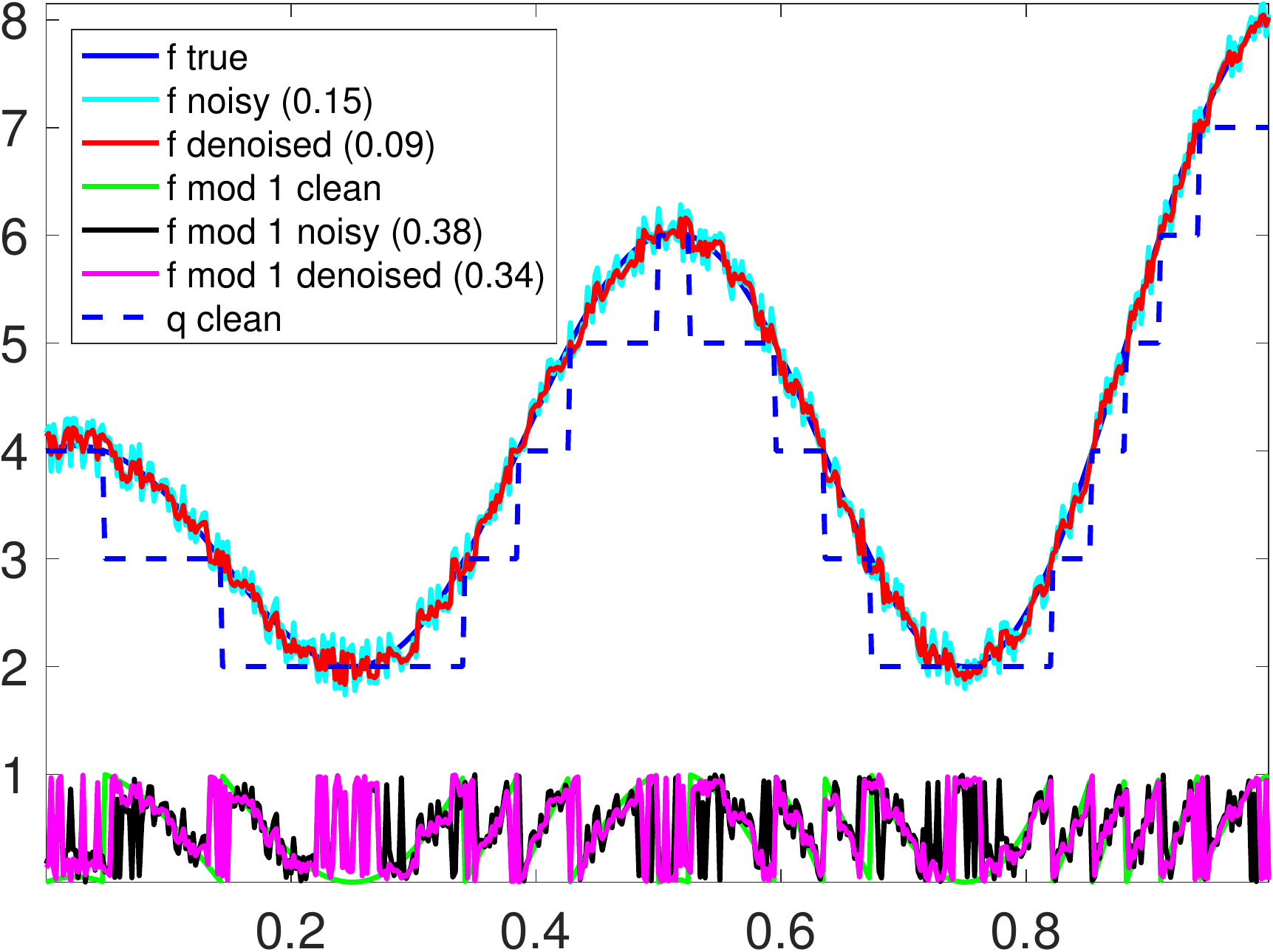} }
%
%
\captionsetup{width=0.98\linewidth}
\caption[Short Caption]{Denoised instances for both the $f$ mod 1 and $f$ values, under the Bounded-Uniform noise model for function \eqref{def:f1}, for  \textbf{BKR},  \textbf{OLS}, \textbf{QCQP} (Algorithm \ref{algo:two_stage_denoise}) and \textbf{iQCQP},  as we increase the noise level $\gamma$.  \textbf{QCQP} denotes Algorithm \ref{algo:two_stage_denoise}, for which  the unwrapping stage is  performed via \textbf{OLS} \eqref{eq:ols_unwrap_lin_system}.  
   We keep fixed the parameters $n=500$, $k=2$, $\lambda= 0.1$. The numerical values in the legend denote the RMSE.
}
\label{fig:instances_f1_Bounded_Sampta}
\end{figure}


 \FloatBarrier

\section{Modulo 1 denoising via optimization on manifolds} \label{sec:opt_manifolds}
Section \ref{subsec:alter_methods_manifolds} outlines two alternative  approaches for denoising the 
modulo 1 samples, both of which involve optimization on a smooth manifold. Section \ref{sec:manifold_num_sims} 
contains numerical simulations involving these approaches for the setting $d =2$, on both synthetic and 
real data.

\subsection{Two formulations   for denoising modulo $1$ samples} \label{subsec:alter_methods_manifolds}
We now describe two approaches for denoising the modulo $1$ samples, which are based on optimization over a smooth manifold. 
Both methods are implemented using Manopt (\cite{manopt}), a free Matlab toolbox for optimization on manifolds. 
Among others, the toolbox features a Riemannian trust-region solver based on the algorithm originally 
developed by \cite{absil2007trust}, who proposed a truncated conjugate-gradient algorithm to solve  trust-region 
subproblems. The  algorithm enjoys global convergence guarantees, i.e., it will converge to stationary points for any initial guess. 
Under certain assumptions on the smoothness of the manifold and cost function, the method enjoys quadratic local convergence if the true Hessian is used, superlinear  if the approximate Hessian is accurate enough, and linear otherwise. We refer the reader to \cite{absil2007trust, AbsMahSep2008} for an in-depth description of the algorithm, along with its theoretical convergence guarantees, numerical results on several problems  popular in the  numerical linear algebra literature, and comparison with other state-of-the art methods.
\cite{boumal2016globalrates} recently provided the first global rates of convergence to approximate first and second-order KKT points on the manifold.


\paragraph{(i) Solving original QCQP via optimization over $\calC_n$.} 
Our first approach involves solving the original QCQP \eqref{eq:orig_denoise_hard_1}, i.e., 
\begin{eqnarray}   \label{eq:orig_denoise_hard_1_bis}
\min_{ \vecg \in \calC_n} \lambda \vecg^{*}  L \vecg - 2Re(\vecg^{*}\vecz).
\end{eqnarray}
The feasible set $\calC_n := \set{\vecg \in \mathbb{C}^n: \abs{g_i} = 1}$ is a manifold (\cite{AbsMahSep2008}). It is in fact a submanifold 
of the embedding space $R^2 \times \ldots  \times R^2$, where the complex circle is identified with the unit circle in the real plane. 
We employ the \textbf{complexcirclefactory} structure of Manopt that returns a manifold structure to optimize over unit-modulus complex numbers.
This constitutes the input to the \textbf{trustregions} function within Manopt, along with our objective function, the gradient and the Hessian operator. In our experiments, we will denote this approach by Manopt-Phases. 

\paragraph{(ii) SDP relaxation via Burer-Monteiro.} An alternative direction we consider is to 
cast \eqref{eq:orig_denoise_hard_1_bis}  as a semidefinite program (SDP), and solve it efficiently via a Burer-Monteiro approach. Denoting 
\begin{equation}
	T  \mydef  \left[
  \begin{array}{cc}
 \lambda L & -\vecz \\ 
       - \vecz^* & 0
\end{array}
 \right]  \in \mathbb{C}^{(n+1) \times (n+1)}
\end{equation}
to account for the linear term, the SDP relaxation of \eqref{eq:orig_denoise_hard_1_bis} is given by 

\noindent\begin{minipage}{.48\linewidth}

\begin{equation}
	\begin{aligned}
& \underset{ \vecg \in  \mathbb{C}^n; \;\;     \Upsilon \in \mathbb{C}^{n \times n}}{\text{minimize}}
& & \lambda \vecg^{*}  L \vecg - 2Re(\vecg^{*}\vecz) \\
& \text{subject to} 
  & & \Upsilon_{ii} =1, i=1,\ldots,n \\
  & & &    
   \underbrace{ \left[
  \begin{array}{cc}
 \Upsilon & \vecg \\ 
       \vecg^* & 1
\end{array}
 \right]}_{W} \succeq 0 
	\end{aligned}
\label{SDP_phase_unwrapping_1}
\end{equation}

\end{minipage}
\noindent\begin{minipage}{.48\linewidth}

\begin{equation}
	\begin{aligned}
& \underset{W \in  \mathbb{C}^{(n+1) \times (n+1)}}{\text{minimize}}
& & \tr(T W) \\
& \text{subject to} 
  & & W_{ii} =1, i=1,\ldots,n+1 \\  
	\vspace{2mm}
  & & &    
    W \succeq 0.
	\end{aligned}
\label{SDP_phase_unwrapping_2}
\end{equation}
\vspace{3mm}
\end{minipage}

Note that, for \eqref{SDP_phase_unwrapping_1}, if  $\Upsilon = \vecg \vecg^* $, and using properties of the trace (invariance  under cyclic permutations), it can be easily verified that $\tr(T W)  = \lambda \vecg^{*}  L \vecg - 2Re(\vecg^{*}\vecz)$.  
Ideally, we would like to enforce the constraint $ \Upsilon = \vecg \vecg^* $, which guarantees that $\Upsilon$ is indeed a rank-1 solution. However, since rank constrains do not lead to convex optimization problems, we relax this constraint to $\Upsilon \succeq \vecg \vecg^{*}$, which via Schur's lemma is equivalent 
to  $W \succeq 0 $ (see for eg., \cite{boyd1994linear}). 

Since semidefinite programs are computationally  expensive to solve, we resort to the Burer-Monteiro approach, which consists of replacing optimization of a linear function $\langle C,X \rangle$ over the convex set $ \mathcal{X} = \{ X \succeq 0: \mathcal{A}(X)=b \}$ with optimization of the quadratic function $\langle CY,Y \rangle $ over the non-convex set $ \mathcal{Y} = \{ Y \in \mathbb{R}^{n \times p}: \mathcal{A}(YY^T)=b \}$.

If the convex set $ \mathcal{X}$  is compact, and $m$ denotes the number of constraints, it holds true that whenever  $p$ satisfies $ \frac{p(p+1)}{2} \geq m$, the two problems share the same global optimum \cite{Barvinok1995, Burer2005}.  Note that over the complex domain, an analogous statement holds true as soon as  $p^2 \geq m$. Building on earlier work of \cite{Burer2005}, \cite{boumal2016bmapproach} show that if the set $  \mathcal{X}  $ is compact and the set $ \mathcal{Y} $ is a smooth manifold,  then $ \frac{p(p+1)}{2} \geq m$ implies that, for almost all cost matrices, global optimality is achieved by any $Y$ satisfying second-order necessary optimality conditions.  

The problem we consider in \eqref{SDP_phase_unwrapping_1} and \eqref{SDP_phase_unwrapping_2} falls in the above setting 
with the set $\mathcal{X} := \{W \succeq 0: \mathcal{A}(W)=1\}$ being compact and involving $m = n+1$ diagonal constraints, 
and $\mathcal{Y}$ (defined below) being a smooth manifold. Note that in theory, choosing $p \geq \lfloor\sqrt{n+1}\rfloor + 1$ 
is sufficient, however in our simulations the choice $p=3$ already returned excellent results. 
In the Burer-Monteiro approach, we replace $\Upsilon$ with $Y Y^{*}$ in \eqref{SDP_phase_unwrapping_1} which amounts  to the following optimization problem over $Y \in  \mathbb{C}^{n \times p}$ and $\vecv \in  \mathbb{C}^{p}$ (the latter of which arises due to the linear term in the objective function)   
%
\begin{equation}
	\begin{aligned}
& \underset{Y \in  \mathbb{C}^{n \times p}; \;\;  \vecv \in  \mathbb{C}^{p} } {\text{minimize}}
& & \lambda \langle L Y, Y  \rangle - 2 \real(\vecz^{*} Y \vecv) \\
& \text{subject to} 
  & & \mbox{diag}(Y Y^*) =1 \\
  & & &   || \vecv ||_2^2 = 1. 
	\end{aligned}
\label{SDP_phase_unweerapping_BM}
\end{equation}
Therefore, we optimize over the product  manifold 
$\mathcal{Y} = \mathcal{M}_1  \times   \mathcal{M}_2$, where $\mathcal{M}_1$ denotes the manifold of $n \times p$ complex matrices with unit $2$-norm rows, and   $\mathcal{M}_2$ the unit sphere in  $\mathbb{C}^p$. 
Within Manopt,  $\mathcal{M}_1$   is  captured  by the \textbf{obliquecomplexfactory(n,p)} structure.  
Its underlying geometry is a product geometry of $n$ unit spheres in $\mathbb{C}^p$, with the real and imaginary  parts being treated separately as $2p$ real coordinates, thus making the complex oblique manifold a Riemannian submanifold of $(\mathbb{R}^2)^{p \times n}$.  
Furthermore,  $\mathcal{M}_2$ is captured within Manopt by the \textbf{spherecomplexfactory} structure.
In our experiments, we will denote this approach by Manopt-Burer-Monteiro.

    
%

\subsection{Numerical simulations} \label{sec:manifold_num_sims}
This section details the application of our approach to the two-dimensional phase unwrapping problem, on a variety of  both synthetic and    real-world examples.  We empirically evaluate the performance of Algorithm \ref{algo:two_stage_denoise} with three different methods for denoising the modulo 1 samples (Stage $1$): 
Trust-region subproblem (denoted by \textbf{QCQP} for consistency of notation with previous sections), Manopt-Phases, and  Manopt-Burer-Monteiro. 
Throughout all our experiments, we fix the neighborhood radius $k=1$ so that each node has on average 
$(2k+1)^2 - 1 = 8$ neighbors. The noise model throughout this section is fixed to be the Gaussian noise model.  Note that 
the running times reported throughout this section are exclusively for the optimization problem itself, and exclude the time needed to 
build the measurement graph and its Laplacian.


\paragraph{Two-dimensional synthetic example.}
We consider a synthetic example given by the multivariate function 
\begin{equation}
	  f(x,y) = 6 x e^{- x^2 - y^2}, 
\label{def:fxy}
\end{equation}	
with $n \approx 15,000$ (i.e., a square grid of size $\sqrt{n} \times \sqrt{n}$), which we almost perfectly recover at $\sigma = 10\%$ Gaussian noise level, as depicted in Figure \ref{fig:MV_Synt_sigma0p1}.  
Figure \ref{fig:MV_Manopts_Comp_Times} is a comparison of the computational running times for the Manopt-Phases and Manopt-Burer-Monteiro approaches, as well \textbf{OLS}  and \textbf{QCQP}, for varying parameters $\lambda \in \{0.01, 0.1,1\}$, noise level $\sigma \in \{0,0.1\}$, and $n \in \{1,2,4,8,\ldots,512,1024\} \times  1000$. 
For the same set of parameters, Figure \ref{fig:MV_Manopts_Comp_fmod1} shows the $\log(\mbox{RMSE})$ errors  for the recovered $f$ mod 1 values, while Figure  \ref{fig:MV_Manopts_Comp_f} shows similar  recovery errors for the final estimate of $f$. For large values of $n$, Manopt-Phases is significantly faster than Manopt-Burer-Monteiro and \textbf{QCQP}, with the recovery errors being almost identical for all three approaches. For the noisy setup, \textbf{OLS}  consistently returns much worse results compared to the other three methods, both in terms of denoising the $f$ mod 1 values and estimating the final $f$ samples.

In the noiseless setting, for mod 1 recovery, \textbf{OLS}  consistently returns an RMSE close to machine precision as shown in the top row of  Figure \ref{fig:MV_Manopts_Comp_fmod1}    (in other words, \textbf{OLS} preserves the mod 1 values when recovering the function $f$).  	
\textbf{OLS} has a mixed performance in terms of recovering the actual function $f$, shown in the top row of Figure \ref{fig:MV_Manopts_Comp_f}, where it can be seen that it either has a performance similar to that of the other three methods, or achieves an RMSE that is close to machine precision.

We remark that for the largest problem we consider, with over one million sample points, Manopt-Phases returns an almost perfect solution ($\log(\mbox{RMSE}) \approx -11$  for the mod 1 estimate with $\lambda=0.01$) for the noiseless case in under 2 seconds, for a fraction of the  cost of running \textbf{OLS}  (Figure  \ref{fig:MV_Manopts_Comp_Times} (a)). For the noisy example ($\sigma=0.10$, $\lambda = 1$), the same algorithm achieves an error of $\log(\mbox{RMSE}) \approx -4$ in under 10 seconds (Figure  \ref{fig:MV_Manopts_Comp_Times} (f)).

\paragraph{Two-dimensional real elevation maps.}
We then consider a variety of  elevation maps\footnote{Data acquired via the Digital Elevation Model (DEM) of the Earth using radar interferometry. We relied on a Matlab script  developed by  Fran\c{c}ois Beauduce (Institut de Physique du Globe de Paris, \url{https://dds.cr.usgs.gov/srtm/version2_1}), which downloads SRTM data and plots a map corresponding to the geographic area defined by latitude and longitude limits (in decimal degrees).}, in particular those surrounding the  Vesuvius and Etna volcanoes and Mont Blanc, at various resolution parameters, all of which we solve by Manopt-Phases.  
Figure \ref{fig:Vesuvius_LOW_Noisy} (shown in the Appendix) pertains to the elevation map of  Vesuvius, using  $n = 3600$ sample points, and noise level  $\sigma=0.05$. 
Figures  \ref{fig:Vesuvius_HIGH_Noisy}, respectively \ref{fig:Vesuvius_HIGH_VeryNoisy},  also concern Mount Vesuvius, but at a higher resolution, using $n = 32400$ sample points with noise level  $\sigma=0.10$, respectively     $\sigma=0.25$.
Figure  \ref{fig:Etna_Low_Noisy}, respectively  \ref{fig:ETNA_HIGH_Noisy} (deferred to the Appendix), pertain to noisy measurements of Mount Etna at $\sigma=0.10$ Gaussian noise added, where the number of sample points is  $n = 4100$, respectively  $n = 19000$. 
Finally, Figure  \ref{fig:MontBlanc_clean}, respectively Figure \ref{fig:MontBlanc_noisy}, show the recovered elevation map of Mont Blanc from over 1 million sample points in under 18 seconds, for clean, respectively noisy ($\gamma=10 \%$ Bounded uniform noise), measurements.


\ifthenelse{\boolean{ShowNoiselessMultivar}}{  
\begin{figure*}
\centering

\subcaptionbox[]{  Noiseless function
}[ 0.24\textwidth ]
{\includegraphics[width=0.24\textwidth] {figs/PLOTS_MULTIVARIATE/MultivarSynthetic_1/Gaussian_sigma_0_n_14884_kDist_1_lambda_2_scale_6_fClean_3d.png} }
%
\subcaptionbox[]{  Noiseless function
}[ 0.24\textwidth ]
{\includegraphics[width=0.24\textwidth] {figs/PLOTS_MULTIVARIATE/MultivarSynthetic_1/Gaussian_sigma_0_n_14884_kDist_1_lambda_2_scale_6_fClean_TOP.png} }
%
\subcaptionbox[]{  Clean $f$ mod 1
}[ 0.24\textwidth ]
{\includegraphics[width=0.24\textwidth] {figs/PLOTS_MULTIVARIATE/MultivarSynthetic_1/Gaussian_sigma_0_n_14884_kDist_1_lambda_2_scale_6_f_mod1_clean_TOP.png} }
\subcaptionbox[]{    Denoised $f$ mod 1
}[ 0.24\textwidth ]
{\includegraphics[width=0.24\textwidth] {figs/PLOTS_MULTIVARIATE/MultivarSynthetic_1/Gaussian_sigma_0_n_14884_kDist_1_lambda_2_scale_6_f_mod1_denoised_TOP.png} }

\subcaptionbox[]{  Recovered $f$
}[ 0.24\textwidth ]
{\includegraphics[width=0.24\textwidth] {figs/PLOTS_MULTIVARIATE/MultivarSynthetic_1/Gaussian_sigma_0_n_14884_kDist_1_lambda_2_scale_6_f_denoised_3d.png} }
%
\subcaptionbox[]{  Error  $|f - \hat{f}|$ 
}[ 0.24\textwidth ]
{\includegraphics[width=0.24\textwidth] {figs/PLOTS_MULTIVARIATE/MultivarSynthetic_1/Gaussian_sigma_0_n_14884_kDist_1_lambda_2_scale_6_f_delta_TOP.png} }
%
\captionsetup{width=0.95\linewidth}
\caption[Short Caption]{  Synthetic example  $f(x,y) = 6 x e^{- x^2 - y^2}$, with $n = 15000$,     $k=1$ (Chebychev distance), $\lambda = 2$,  and noise level $\sigma=0$.
}
\label{fig:MV_Synt_sigma0}
\end{figure*}
}{}

\begin{figure*}
\centering
  
\subcaptionbox[]{  Noiseless function
}[ 0.24\textwidth ]
{\includegraphics[width=0.24\textwidth] {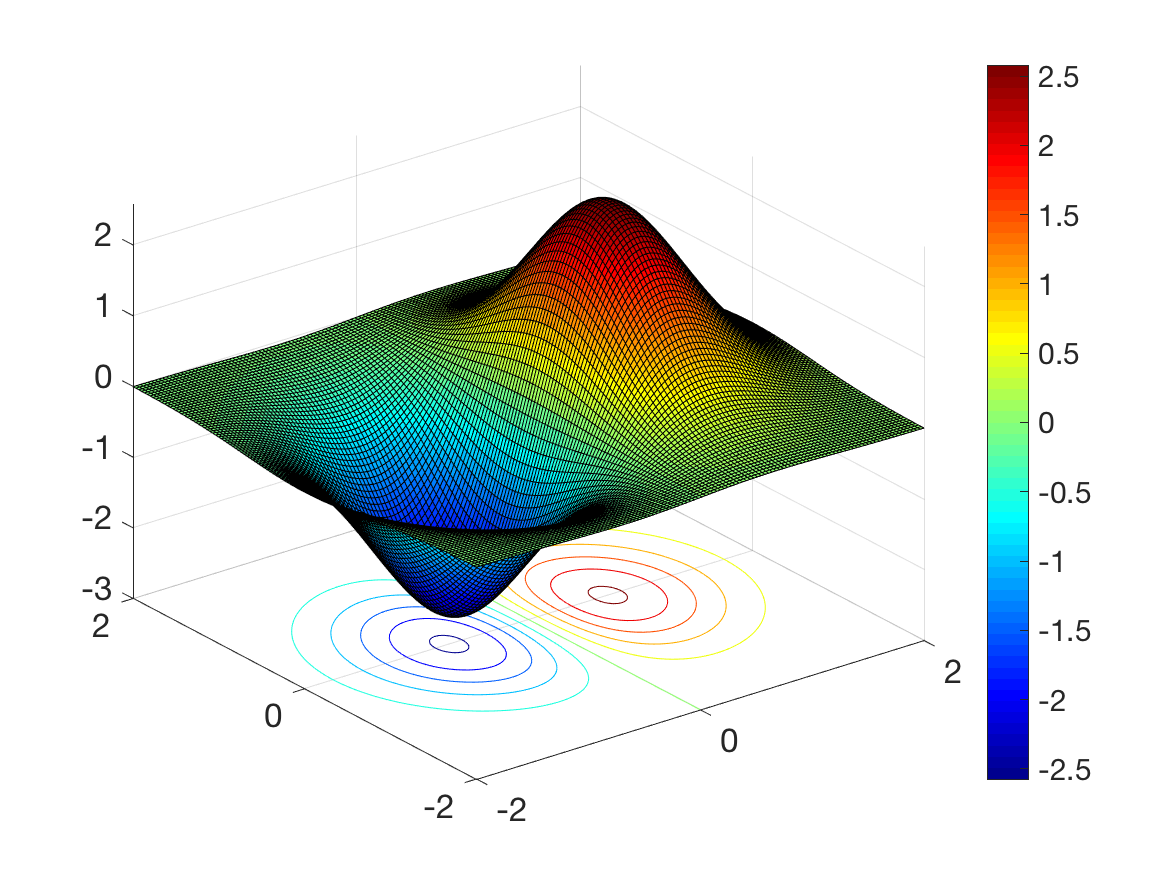} }
%
\subcaptionbox[]{  Noiseless function (depth)
}[ 0.24\textwidth ]
{\includegraphics[width=0.24\textwidth] {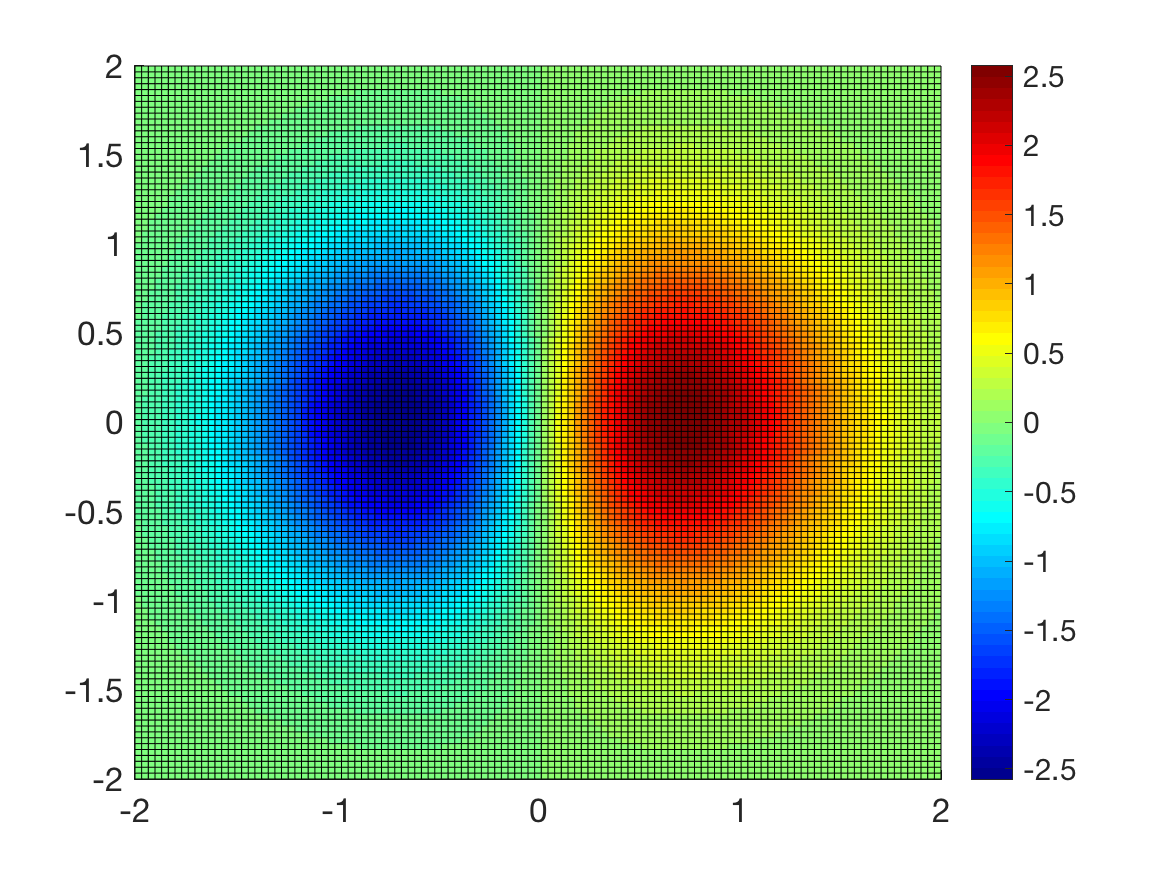} }
%
\subcaptionbox[]{   Clean $f$ mod 1
}[ 0.24\textwidth ]
{\includegraphics[width=0.24\textwidth] {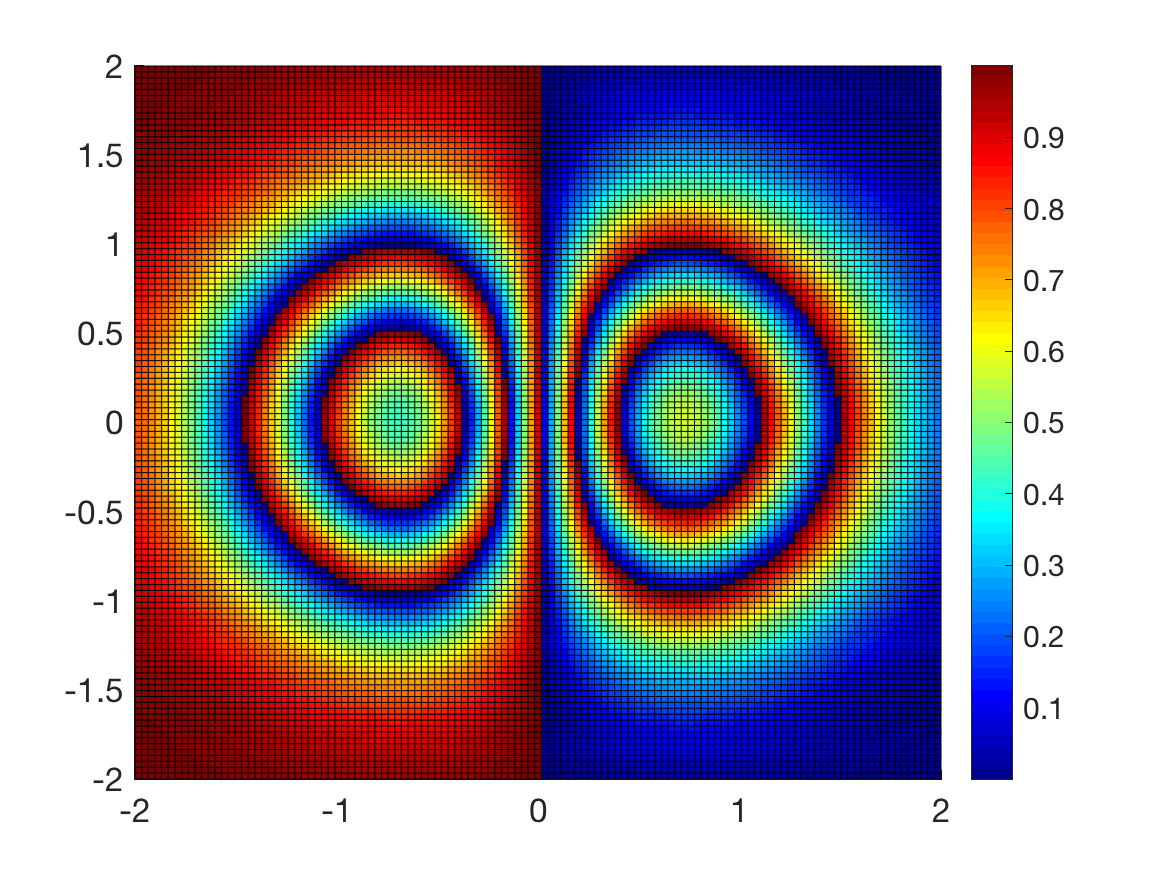} }
\subcaptionbox[]{    Noisy $f$ mod 1
}[ 0.24\textwidth ]
{\includegraphics[width=0.24\textwidth] {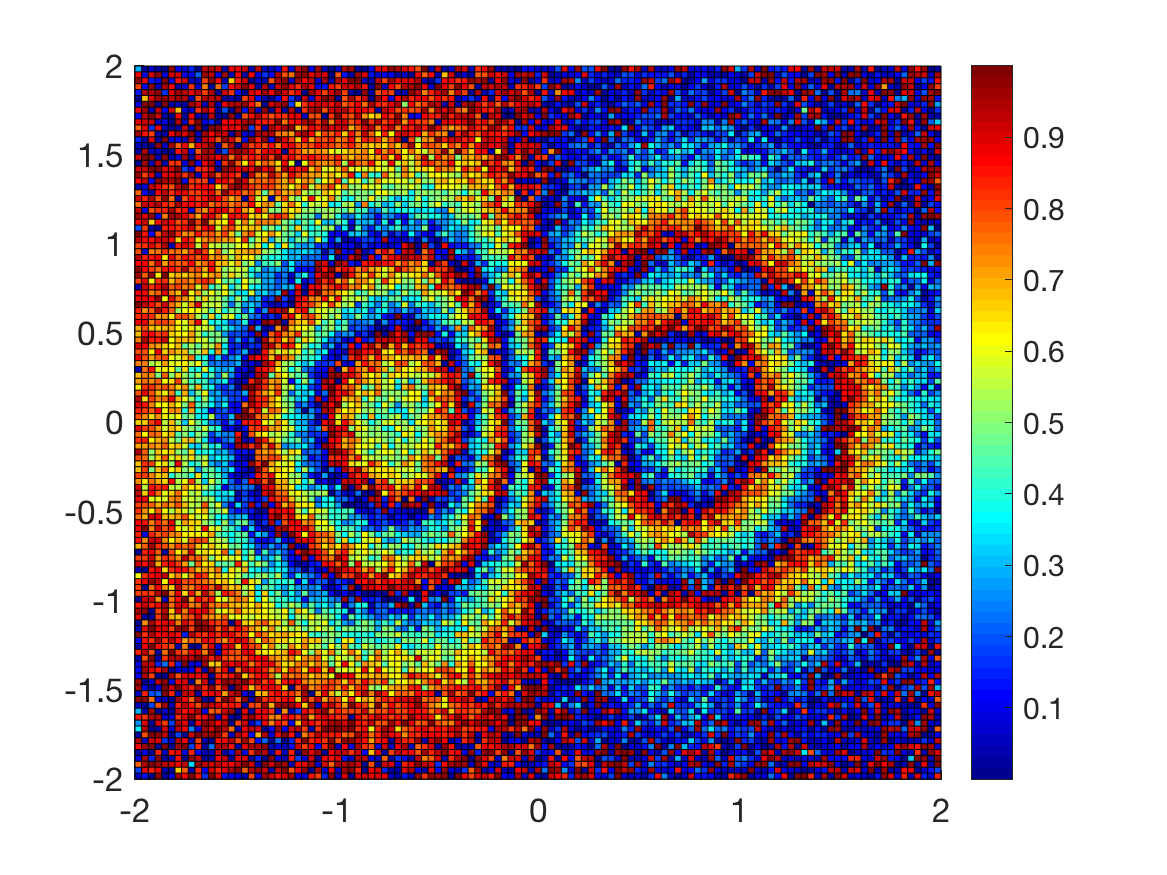} }
 
\subcaptionbox[]{      Denoised $f$ mod 1 \\  (RMSE=0.135)
}[ 0.24\textwidth ]
{\includegraphics[width=0.24\textwidth] {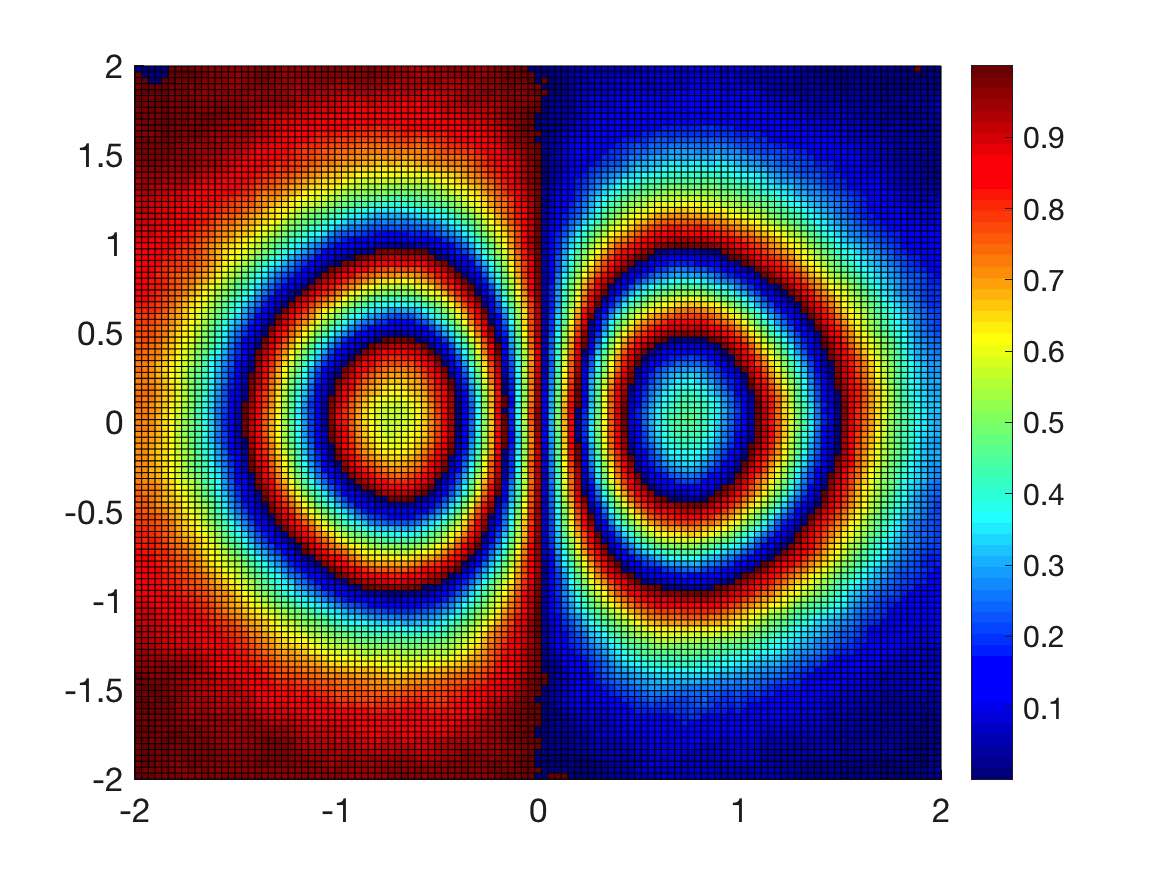} }
%
\subcaptionbox[]{   Denoised $f$  \\   (RMSE=0.035)
}[ 0.24\textwidth ]
{\includegraphics[width=0.24\textwidth] {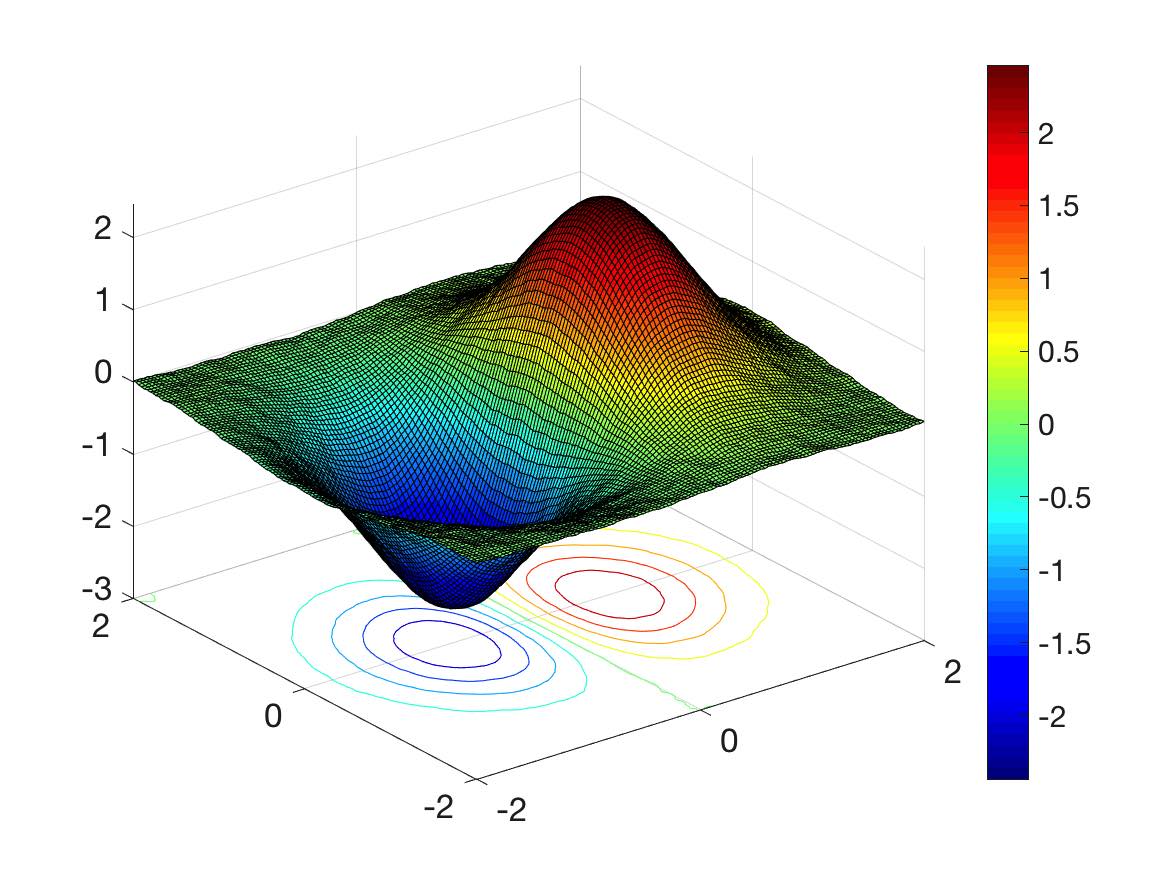} }
%
%
\subcaptionbox[]{   Denoised $f$ (depth)
}[ 0.24\textwidth ]
{\includegraphics[width=0.24\textwidth] {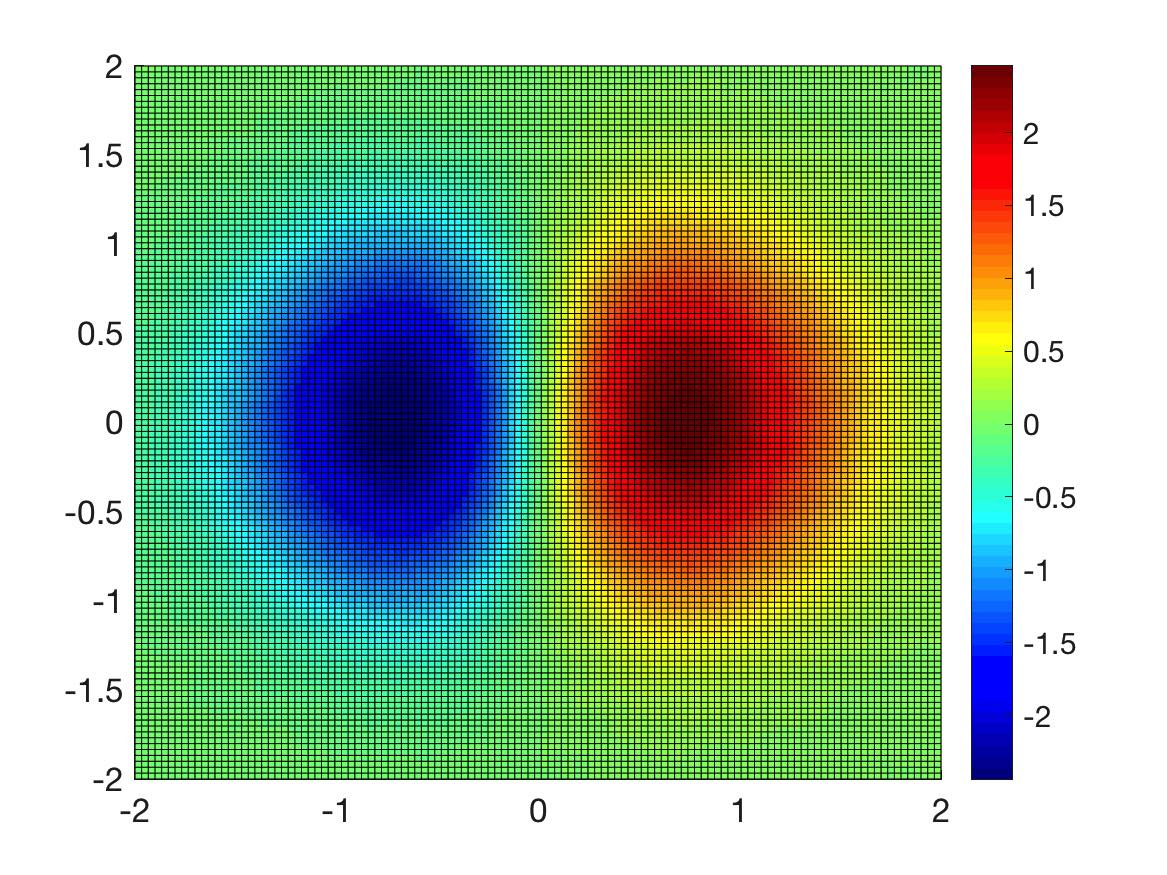} }
%
%
\subcaptionbox[]{  Error  $|f - \hat{f}|$ 
}[ 0.24\textwidth ]
{\includegraphics[width=0.24\textwidth] {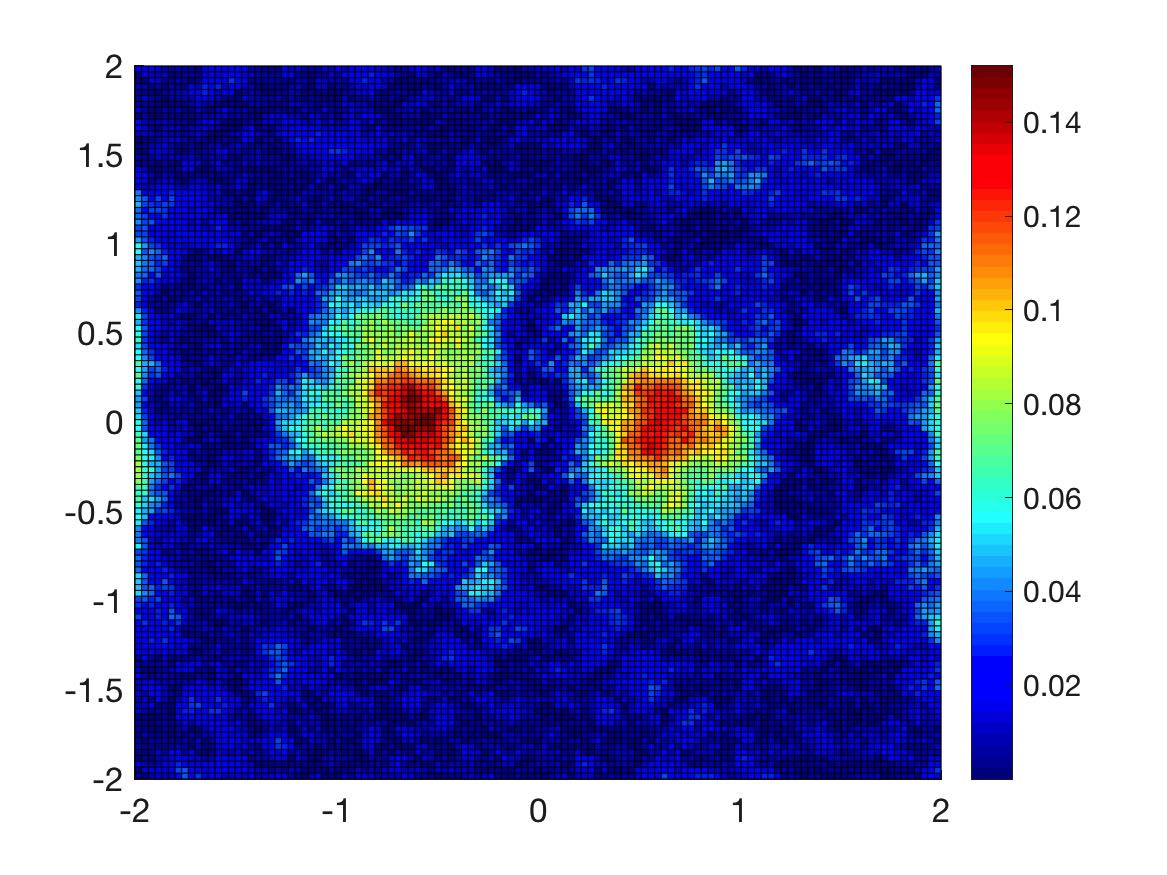}}   
\captionsetup{width=0.95\linewidth}
\caption[Short Caption]{ Synthetic example  $f(x,y) = 6 x e^{- x^2 - y^2}$, with $n = 15000$,     $k=1$ (Chebychev distance), $\lambda = 2$,  and noise level $\sigma=0.10$ under the Gaussian model, as recovered by Manopt-Phases.}
\label{fig:MV_Synt_sigma0p1}
\end{figure*}

\begin{figure*}
\centering

\subcaptionbox[]{   $ \sigma=0, \lambda = 0.01$
}[ 0.30\textwidth ]
{\includegraphics[width=0.30\textwidth] {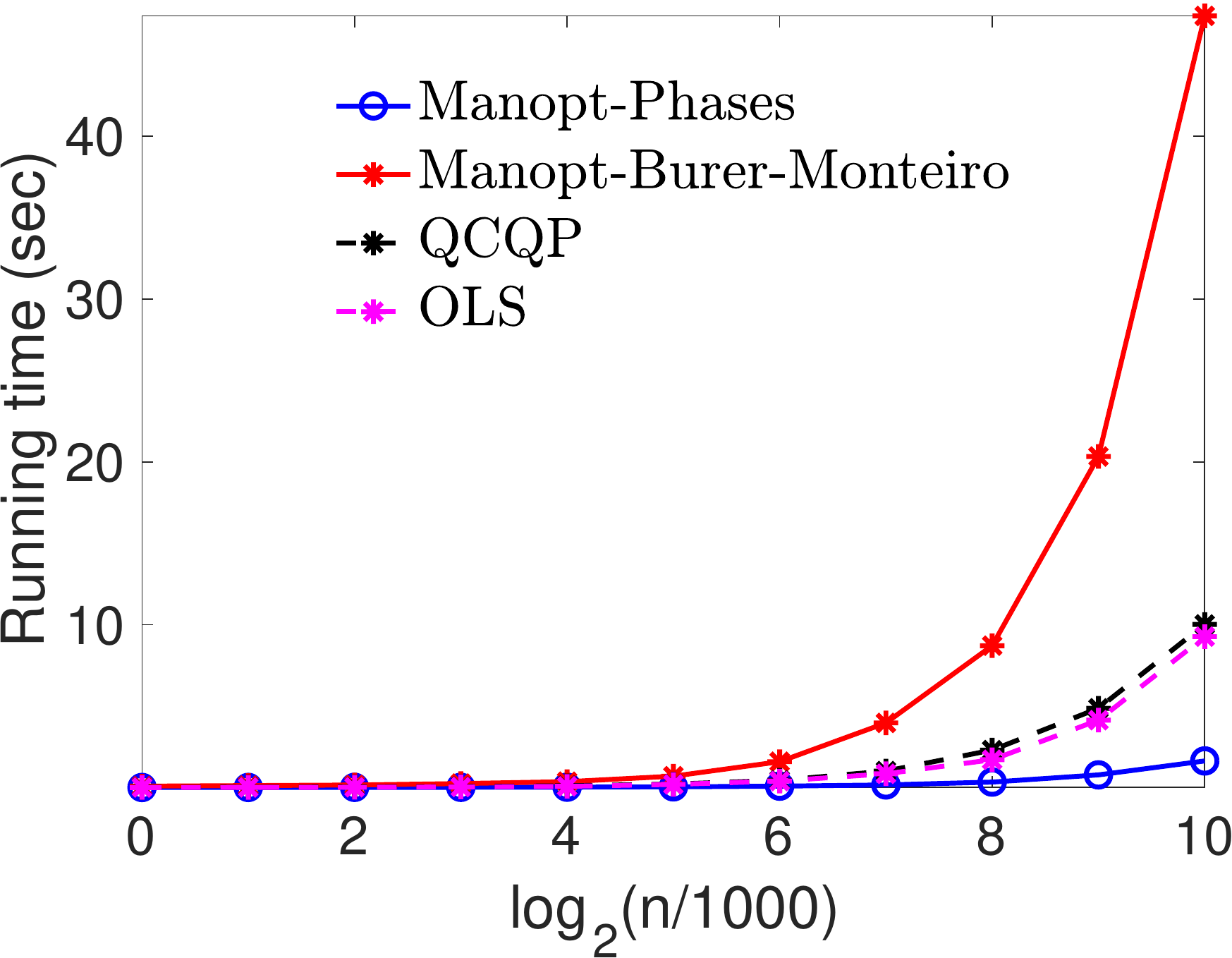} }
%
\subcaptionbox[]{   $ \sigma=0, \lambda = 0.1$
}[ 0.30\textwidth ]
{\includegraphics[width=0.30\textwidth] {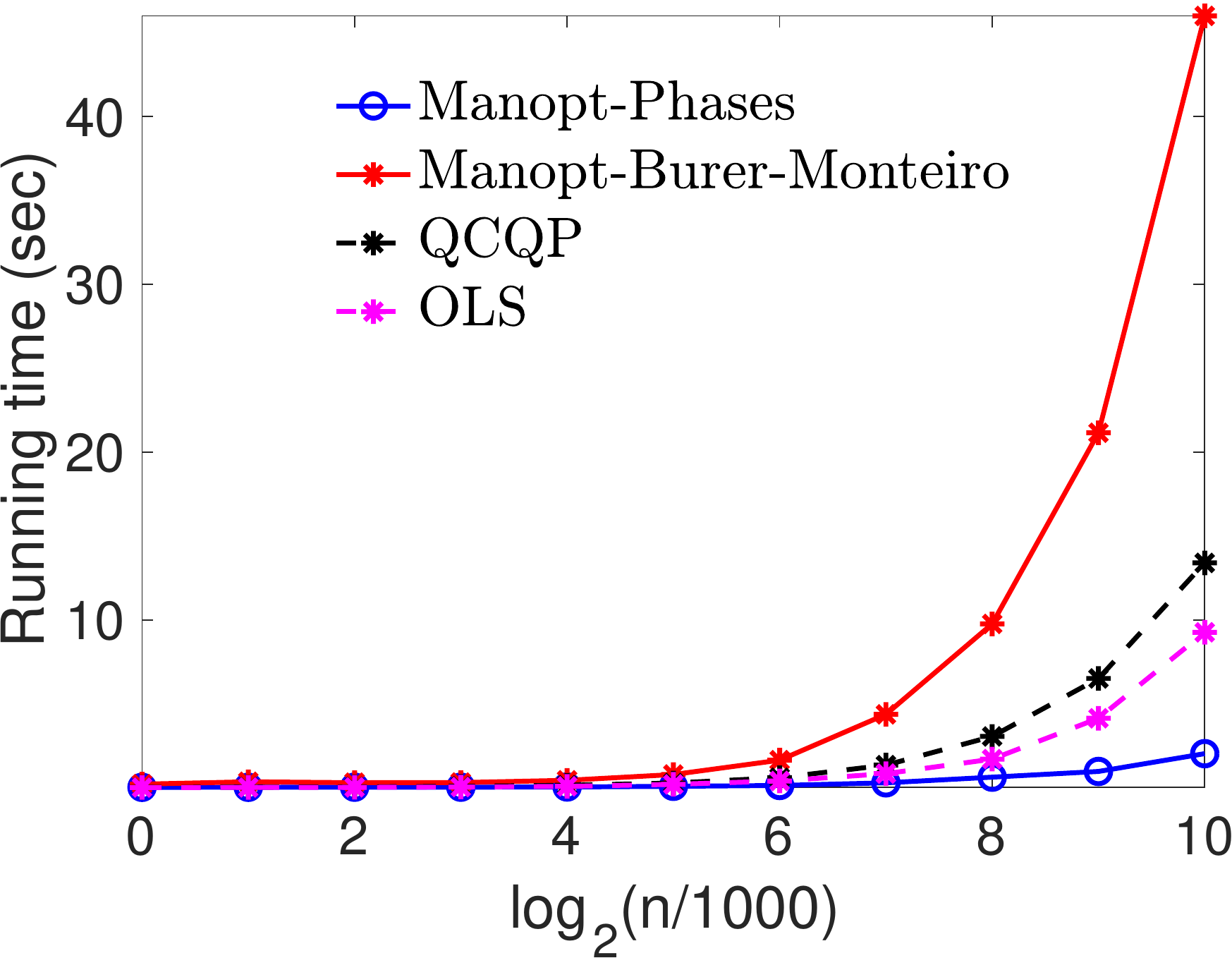} }
%
\subcaptionbox[]{   $ \sigma=0, \lambda = 1$
}[ 0.30\textwidth ]
{\includegraphics[width=0.30\textwidth] {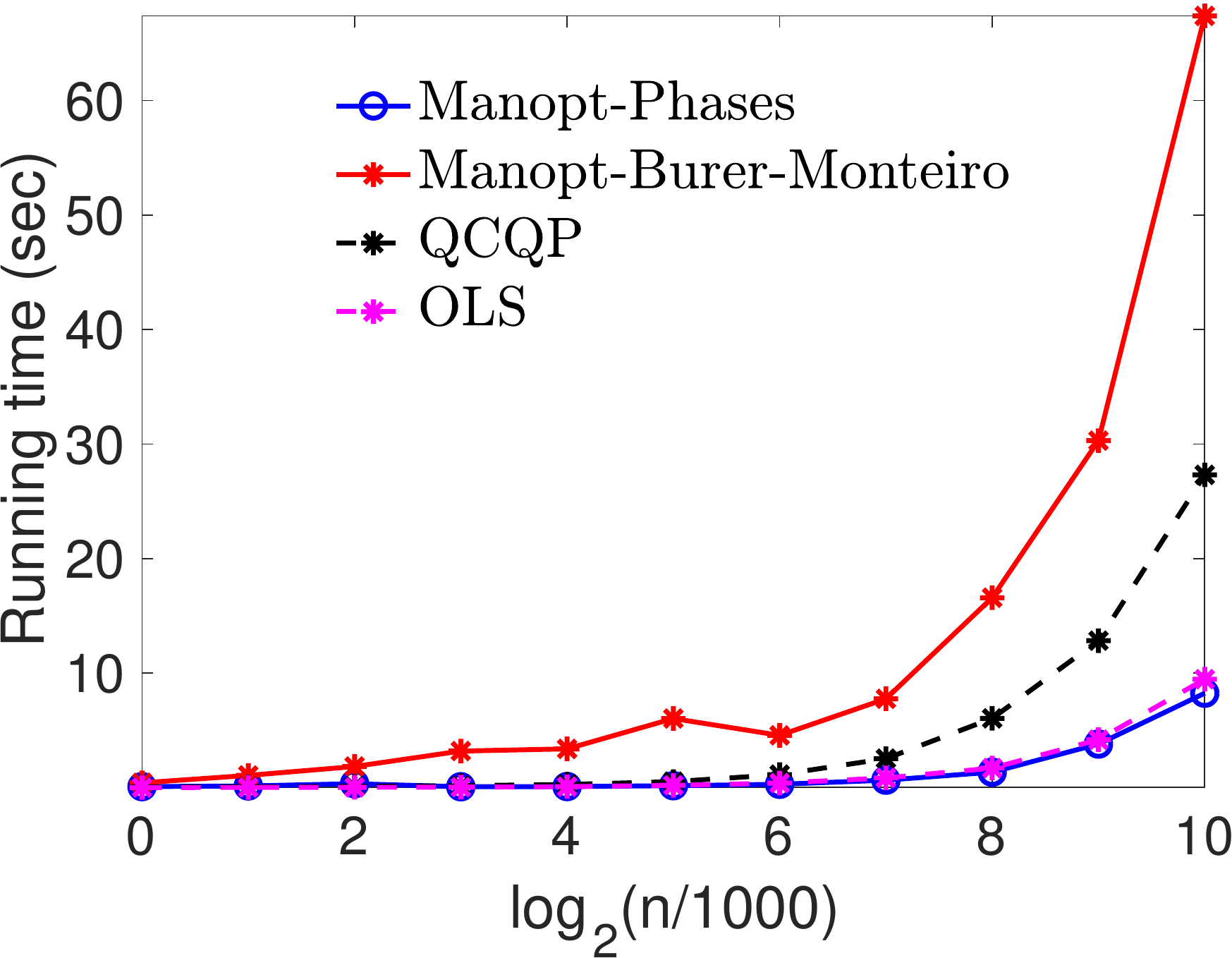} }

\subcaptionbox[]{    $ \sigma=0.1, \lambda = 0.01$
}[ 0.30\textwidth ]
{\includegraphics[width=0.30\textwidth] {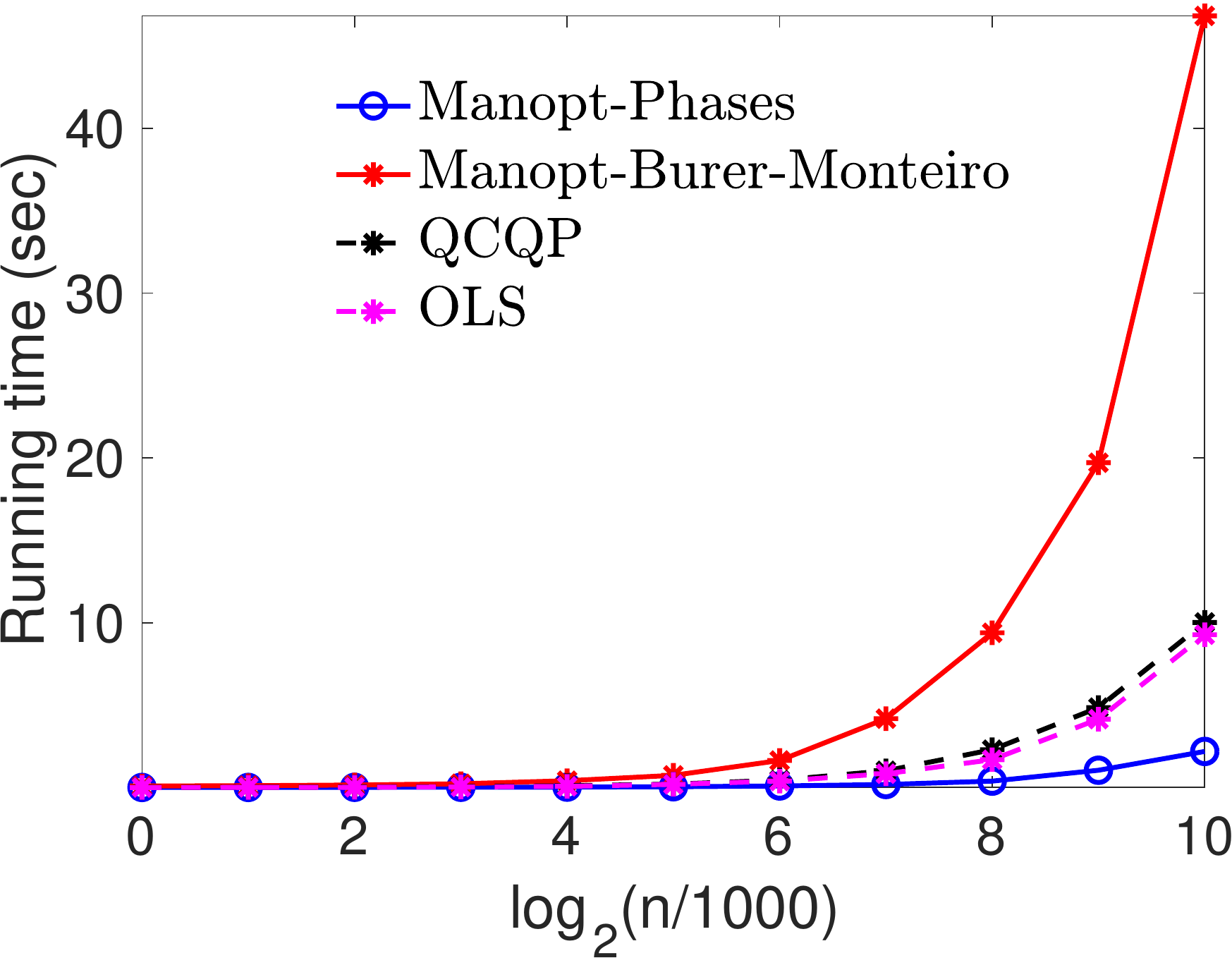} }
\subcaptionbox[]{   $ \sigma=0.1, \lambda = 0.1$
}[ 0.30\textwidth ]
{\includegraphics[width=0.30\textwidth] {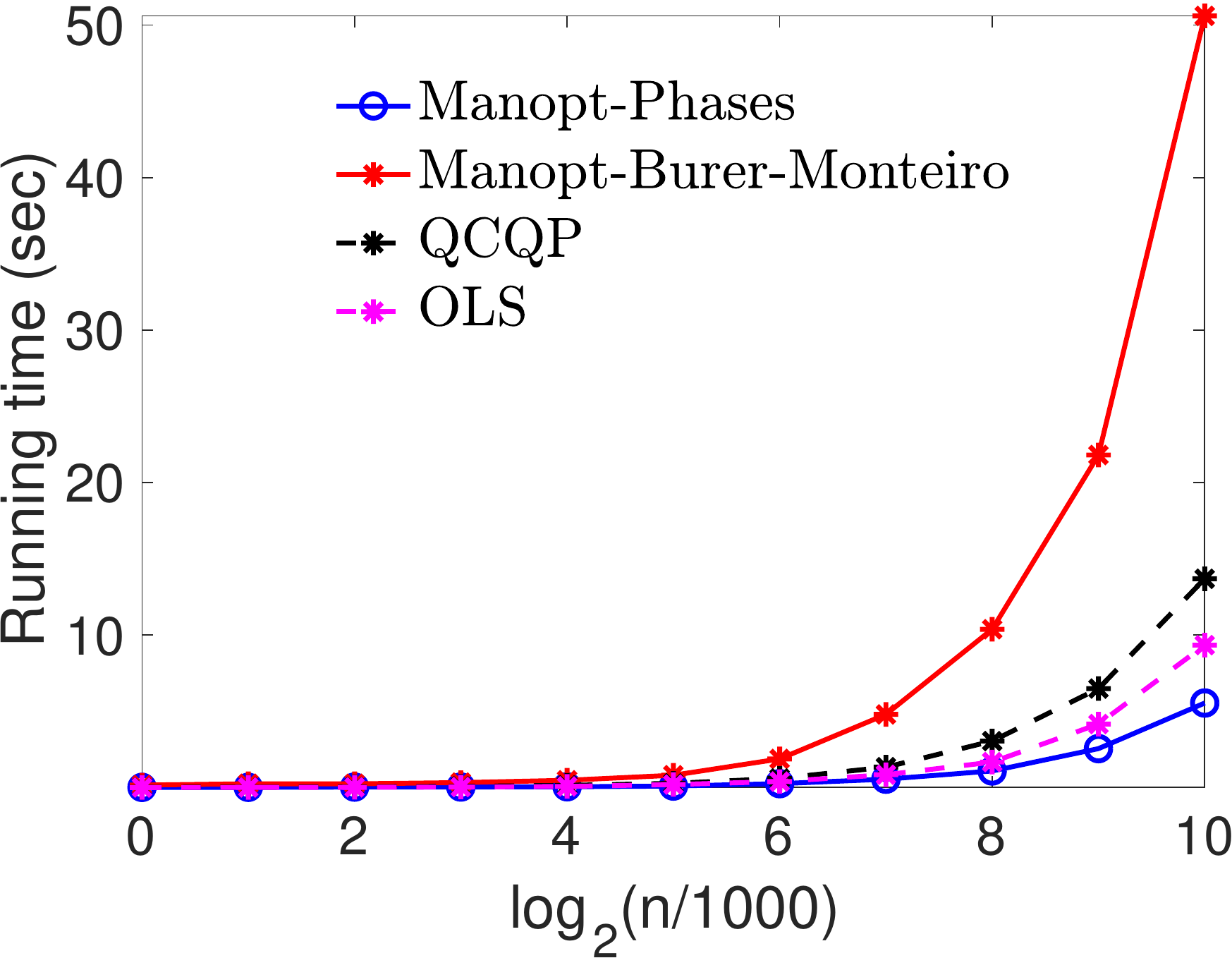} }
%
\subcaptionbox[]{  $ \sigma=0.1, \lambda = 1$
}[ 0.30\textwidth ]
{\includegraphics[width=0.30\textwidth] {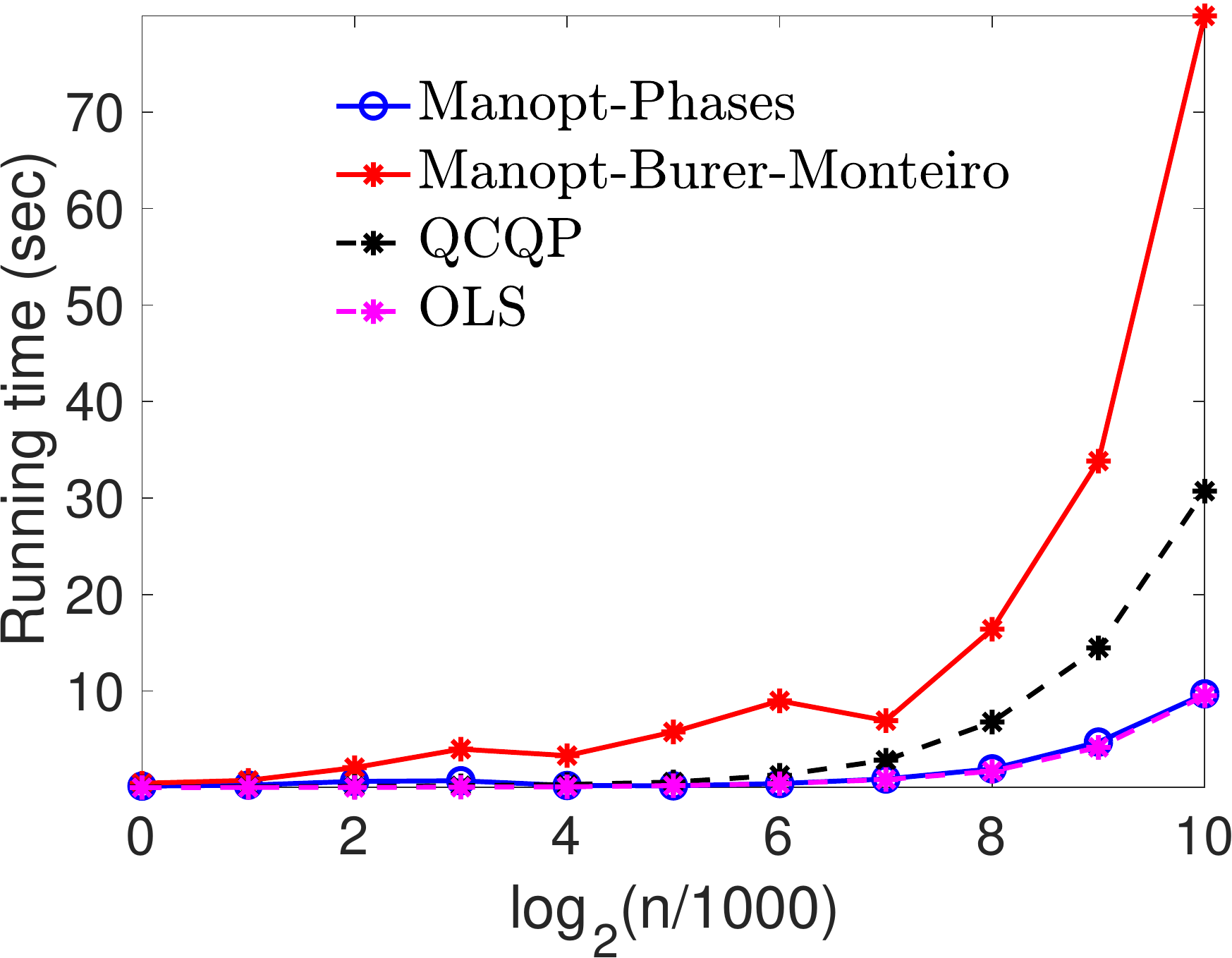} }
%
\captionsetup{width=0.95\linewidth}
\caption[Short Caption]{Comparison of running times  
for the synthetic example  $f(x,y) = 6 x e^{- x^2 - y^2}$,  $k=1$ (Chebychev distance), for several noise levels $\sigma$ and values of $\lambda$. 
\textbf{QCQP} denotes Algorithm \ref{algo:two_stage_denoise}, where  the unwrapping stage is  performed by \textbf{OLS} \eqref{eq:ols_unwrap_lin_system}.  
}         
\label{fig:MV_Manopts_Comp_Times}
\end{figure*}
 
\vspace{-2mm}

\begin{figure*}
\centering
\vspace{-2mm}
\subcaptionbox[]{   $ \sigma=0, \lambda = 0.01$
}[ 0.30\textwidth ]
{\includegraphics[width=0.30\textwidth] {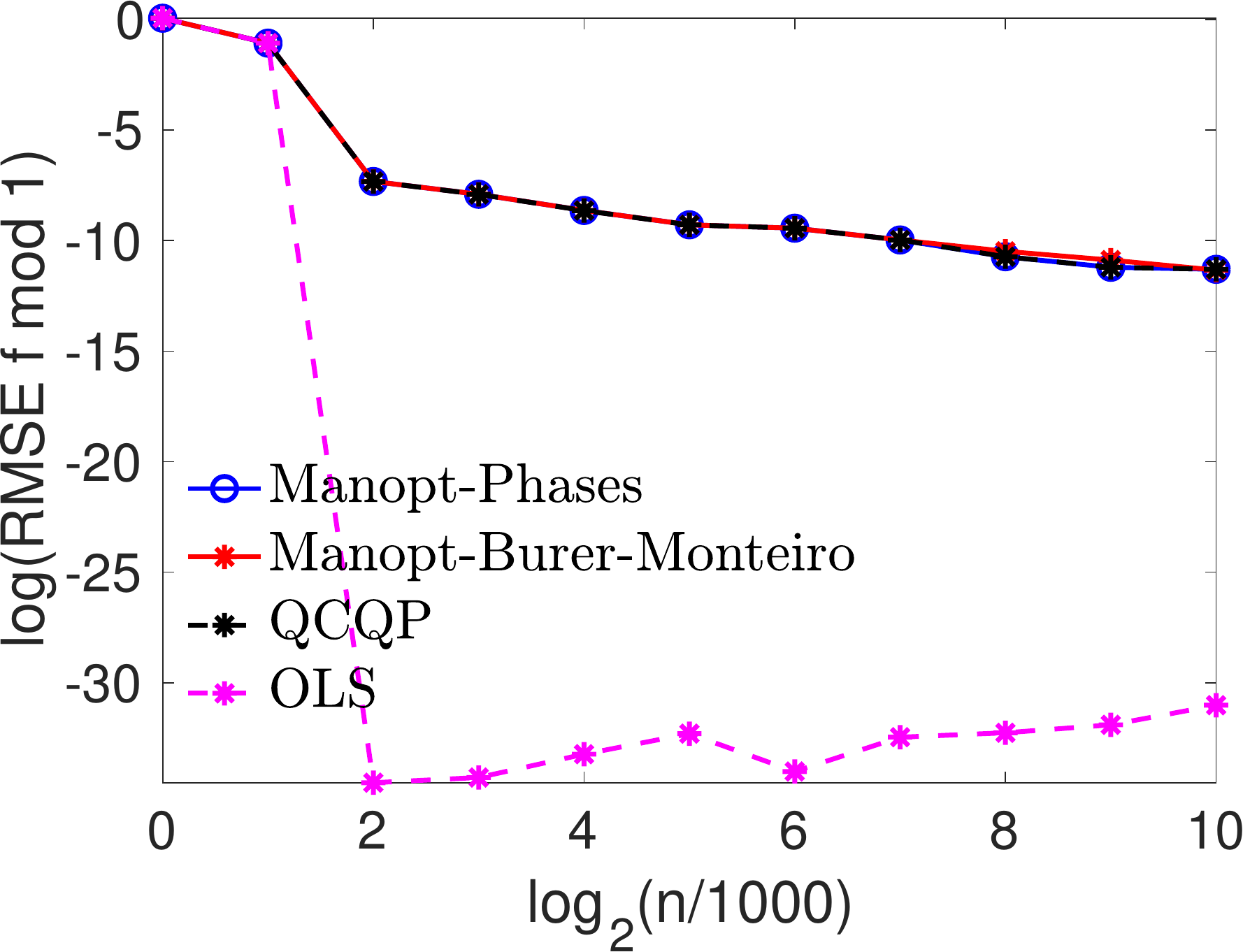} }
%
\subcaptionbox[]{   $ \sigma=0, \lambda = 0.1$
}[ 0.30\textwidth ]
{\includegraphics[width=0.30\textwidth] {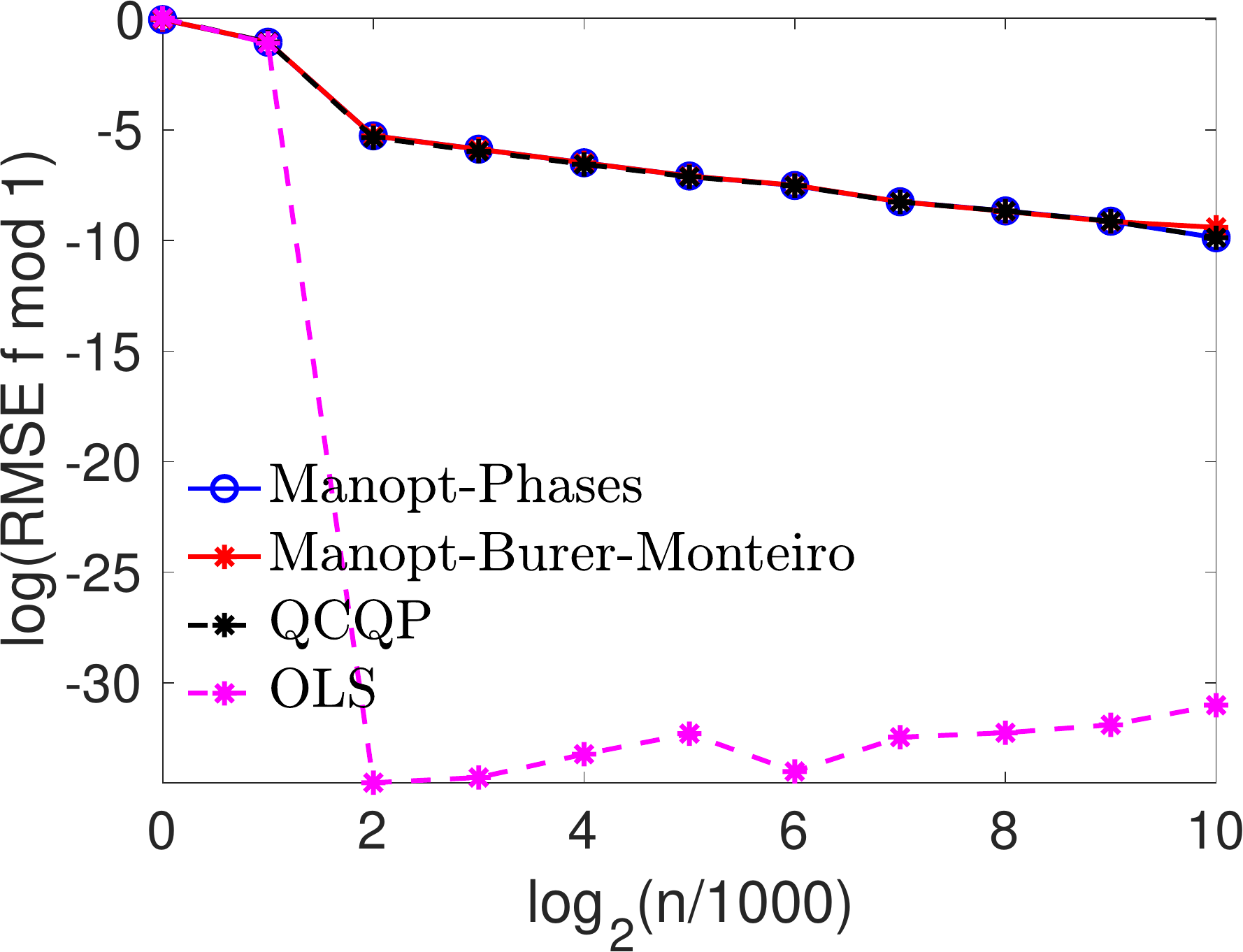} }
%
\subcaptionbox[]{   $ \sigma=0, \lambda = 1$
}[ 0.30\textwidth ]
{\includegraphics[width=0.30\textwidth] {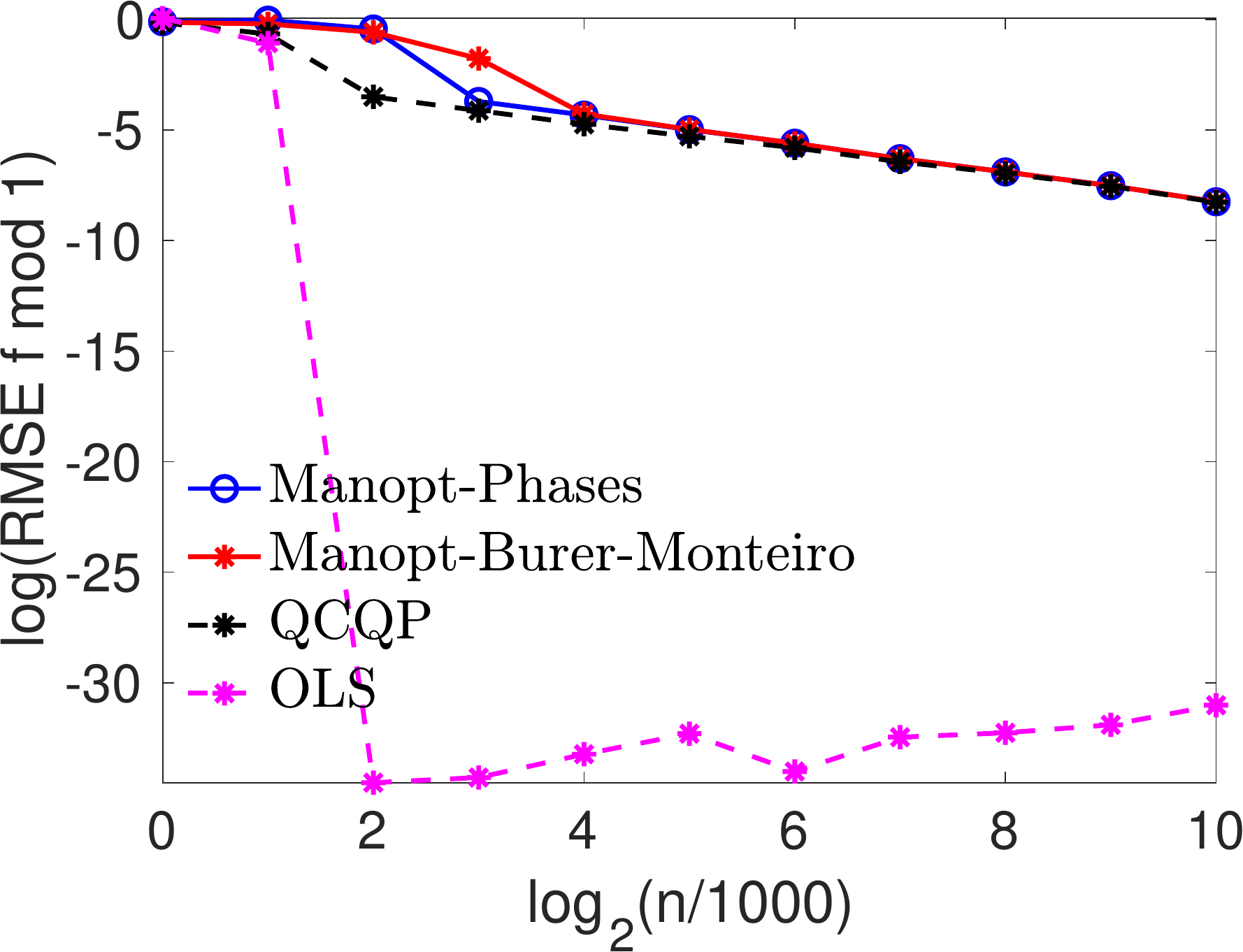} }

\subcaptionbox[]{    $ \sigma=0.1, \lambda = 0.01$
}[ 0.30\textwidth ]
{\includegraphics[width=0.30\textwidth] {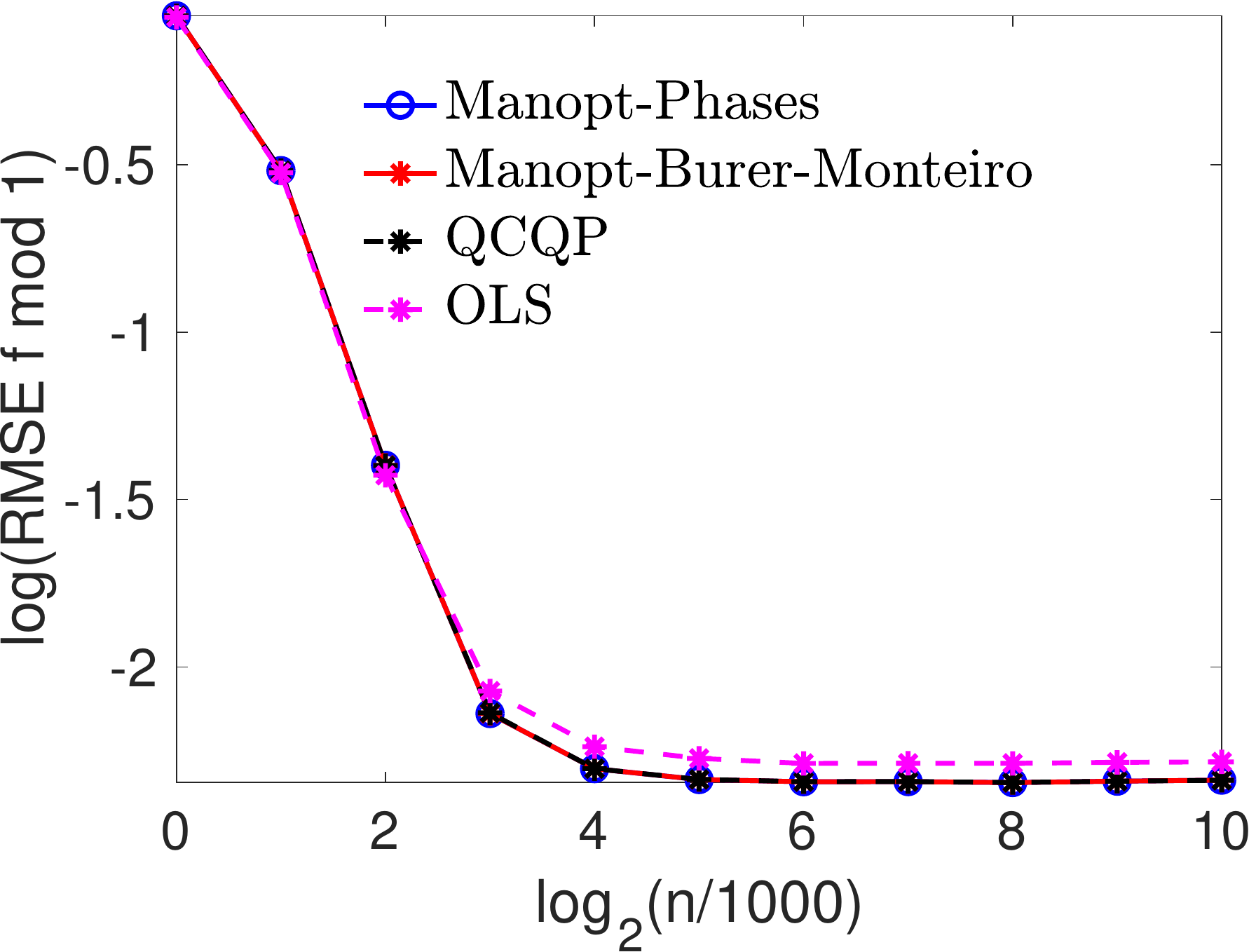} }
\subcaptionbox[]{   $ \sigma=0.1, \lambda = 0.1$
}[ 0.30\textwidth ]
{\includegraphics[width=0.30\textwidth] {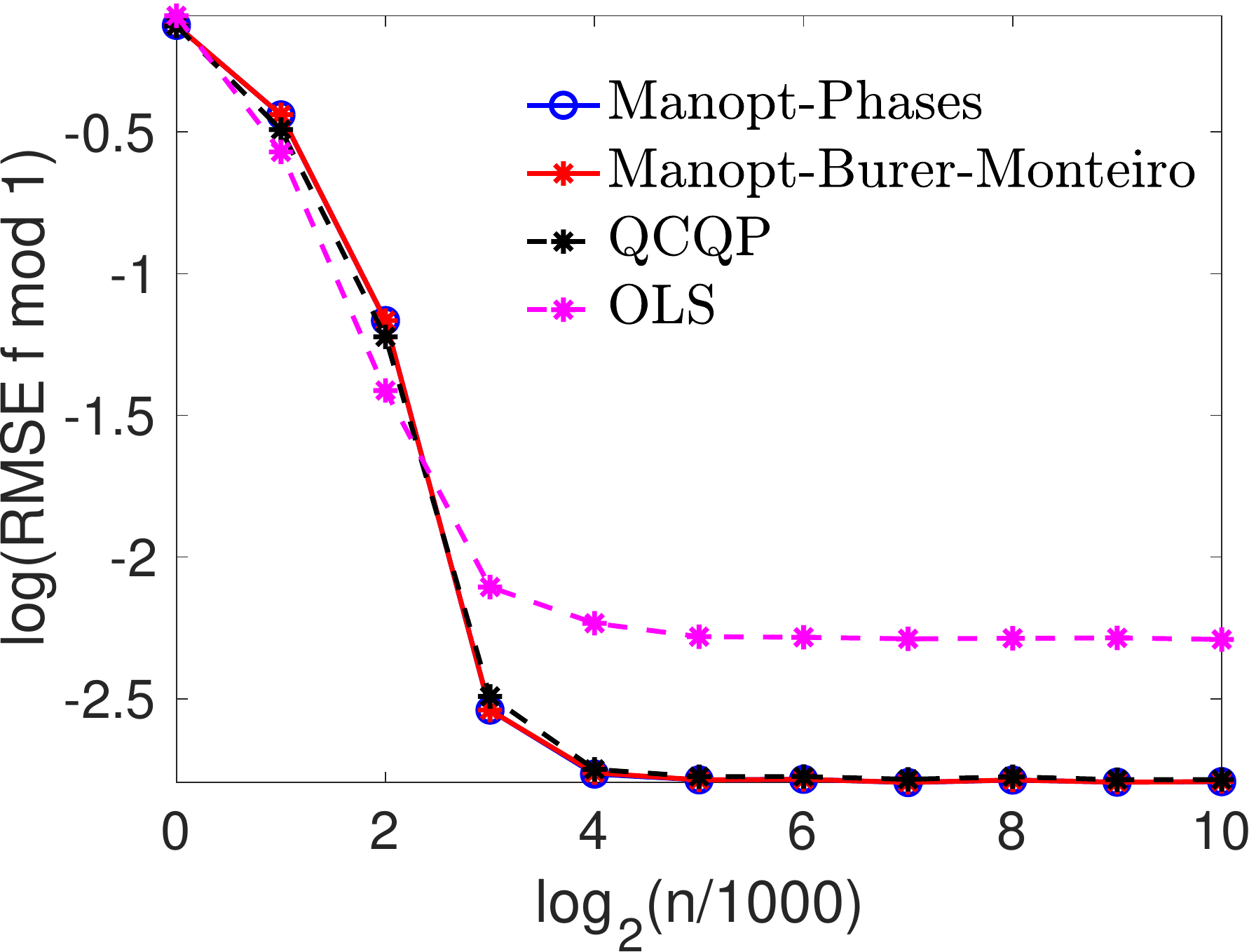} }
%
\subcaptionbox[]{  $ \sigma=0.1, \lambda = 1$
}[ 0.30\textwidth ]
{\includegraphics[width=0.30\textwidth] {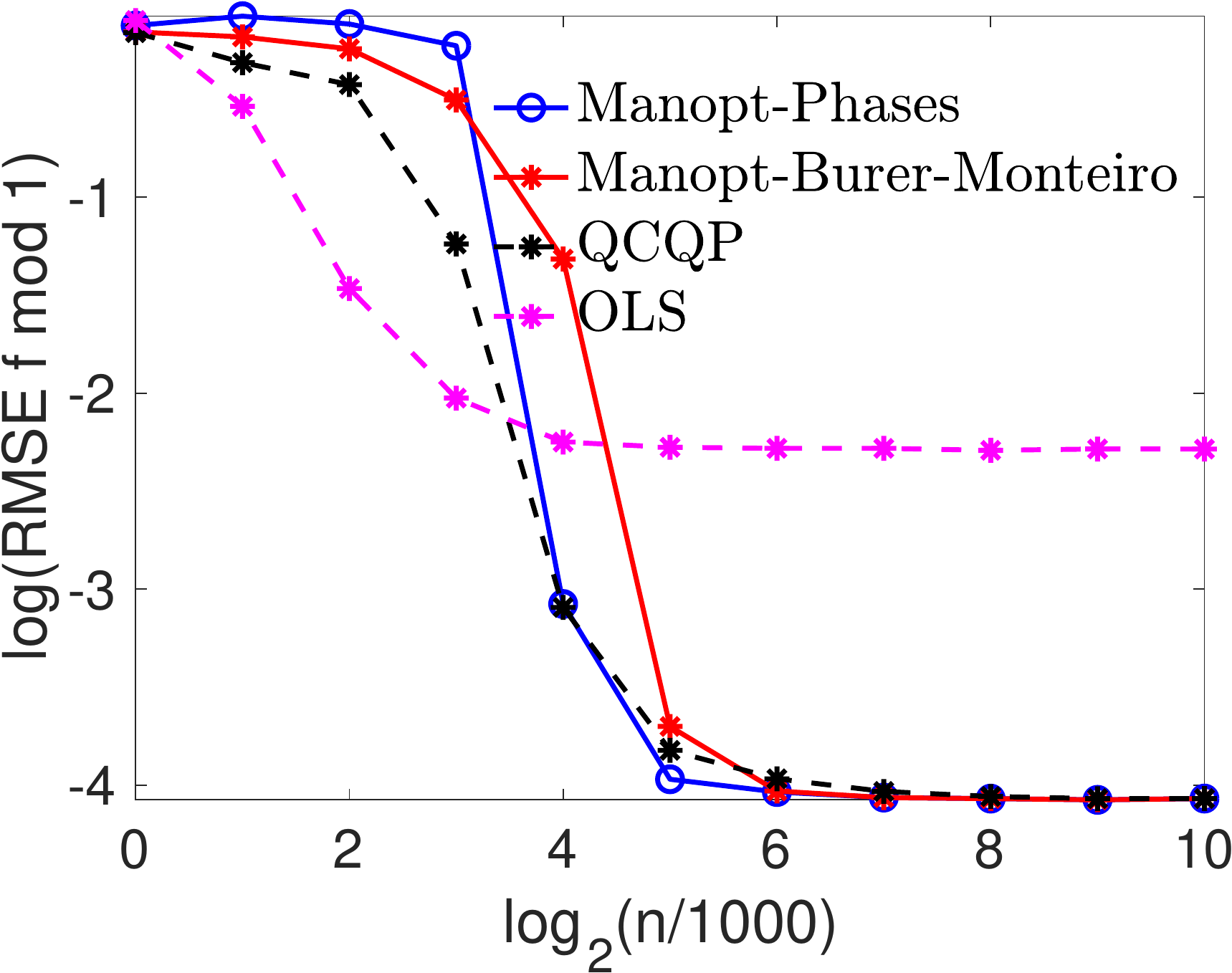} }
%
\captionsetup{width=0.95\linewidth}
\caption[Short Caption]{RMSE comparison for  $f$ mod 1 denoising, 
for the synthetic example  $f(x,y) = 6 x e^{- x^2 - y^2}$, with  $k=1$ (Chebychev distance), while varying $\sigma$ and $\lambda$. \textbf{QCQP} denotes Algorithm \ref{algo:two_stage_denoise} where the unwrapping stage is  performed via \textbf{OLS} \eqref{eq:ols_unwrap_lin_system}.
} 
\label{fig:MV_Manopts_Comp_fmod1}
\end{figure*}

 \begin{figure*}
\centering

\subcaptionbox[]{   $ \sigma=0, \lambda = 0.01$
}[ 0.30\textwidth ]
{\includegraphics[width=0.30\textwidth] {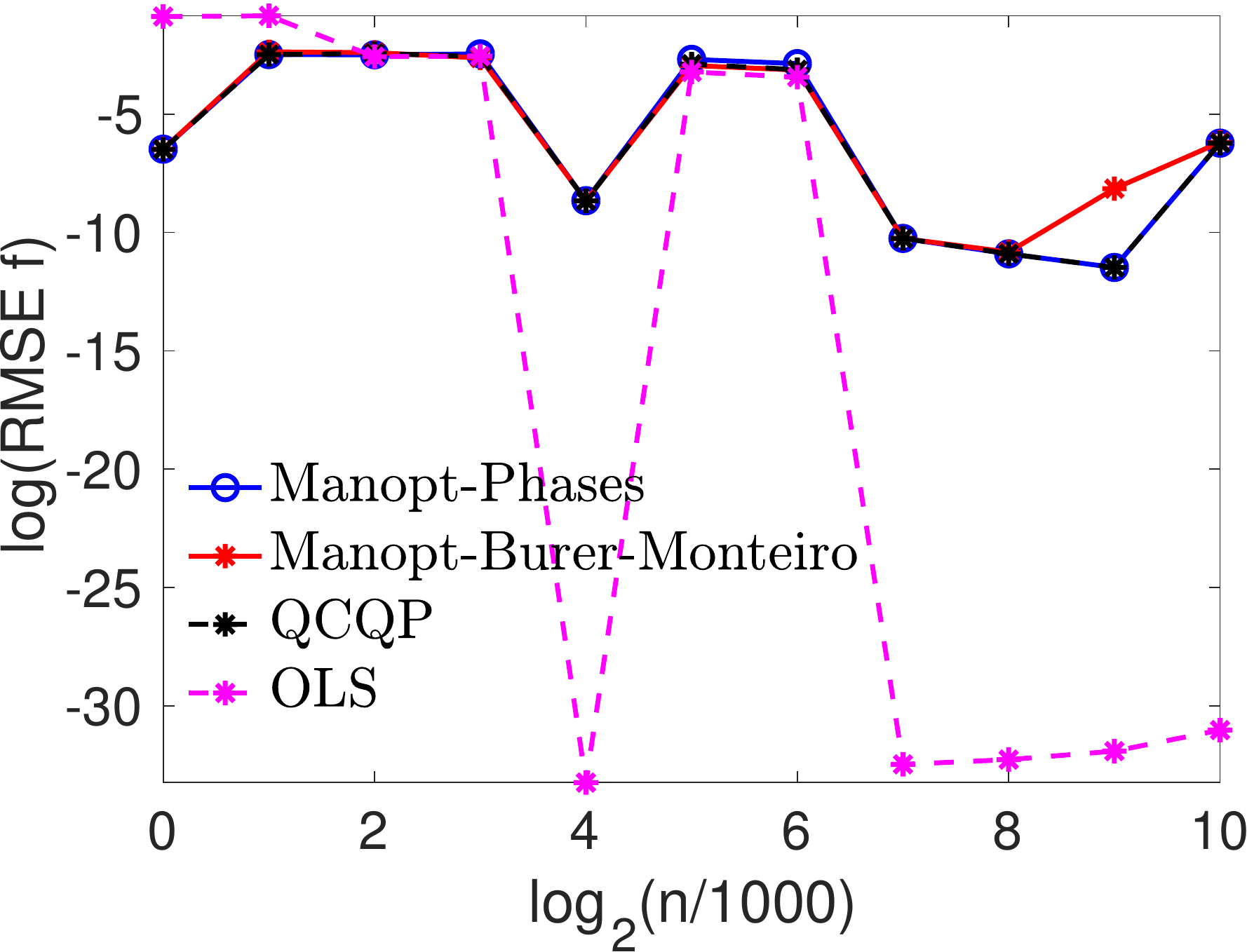} }
%
\subcaptionbox[]{   $ \sigma=0, \lambda = 0.1$
}[ 0.30\textwidth ]
{\includegraphics[width=0.30\textwidth] {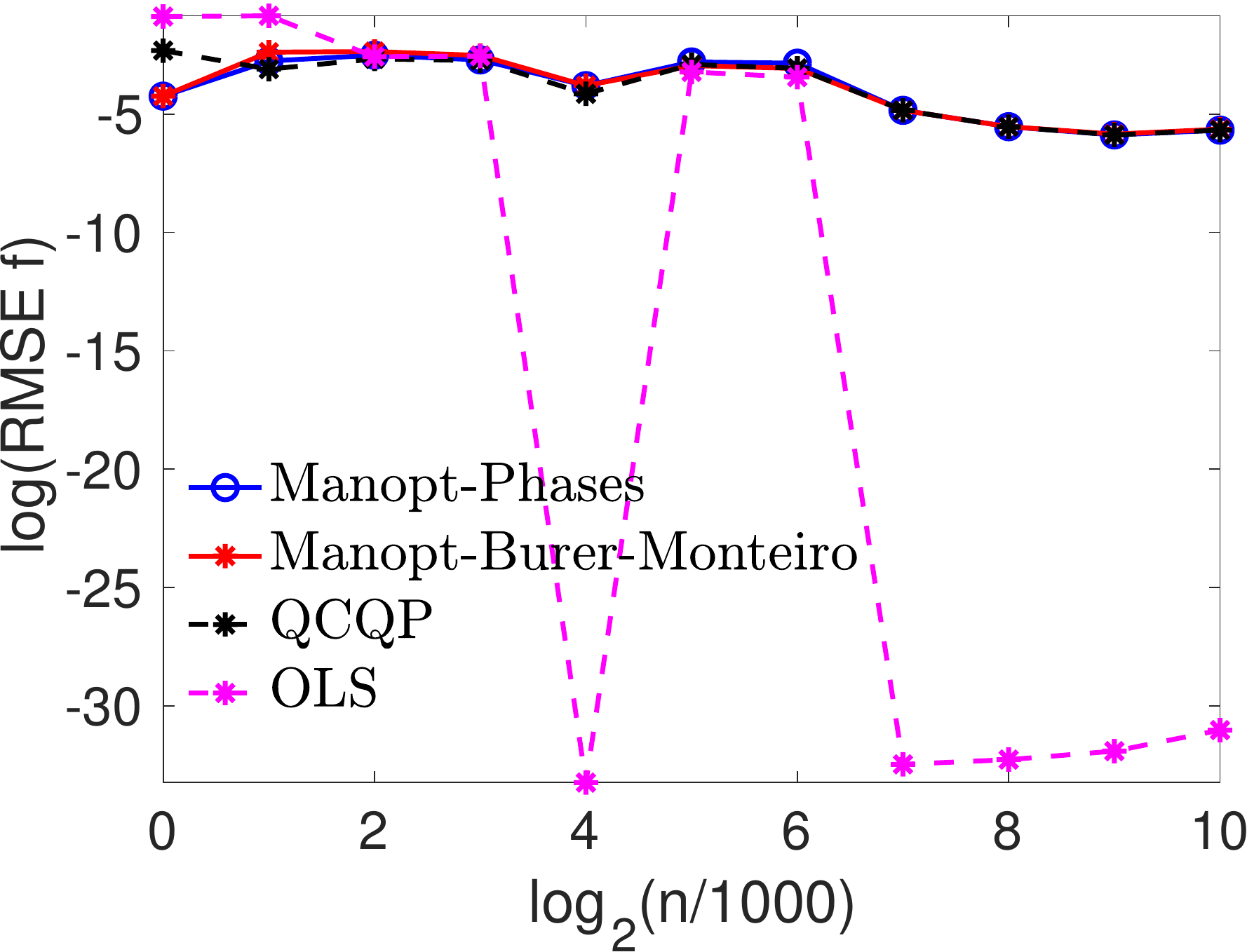} }
%
\subcaptionbox[]{   $ \sigma=0, \lambda = 1$
}[ 0.30\textwidth ]
{\includegraphics[width=0.30\textwidth] {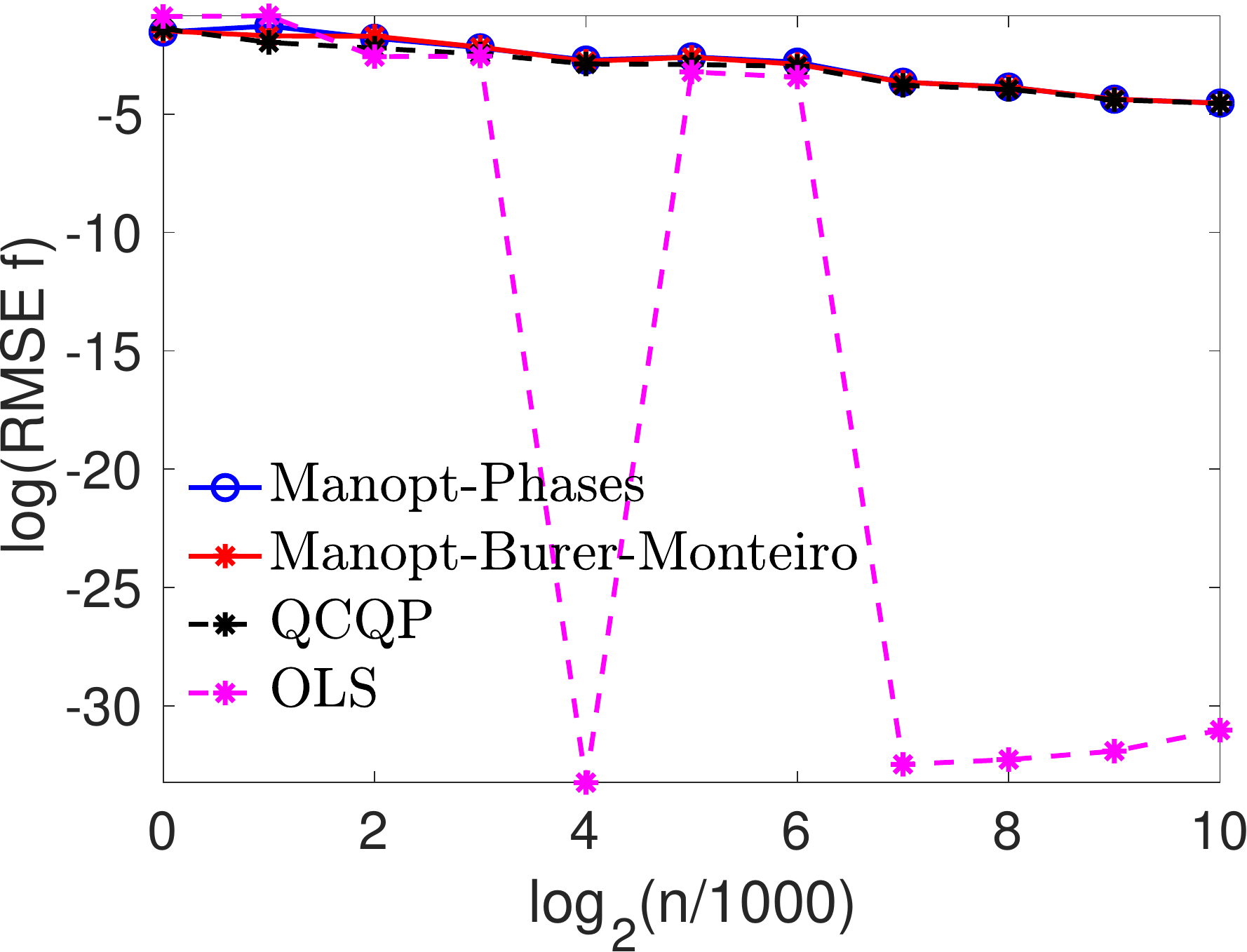} }

\subcaptionbox[]{    $ \sigma=0.1, \lambda = 0.01$
}[ 0.30\textwidth ]
{\includegraphics[width=0.30\textwidth] {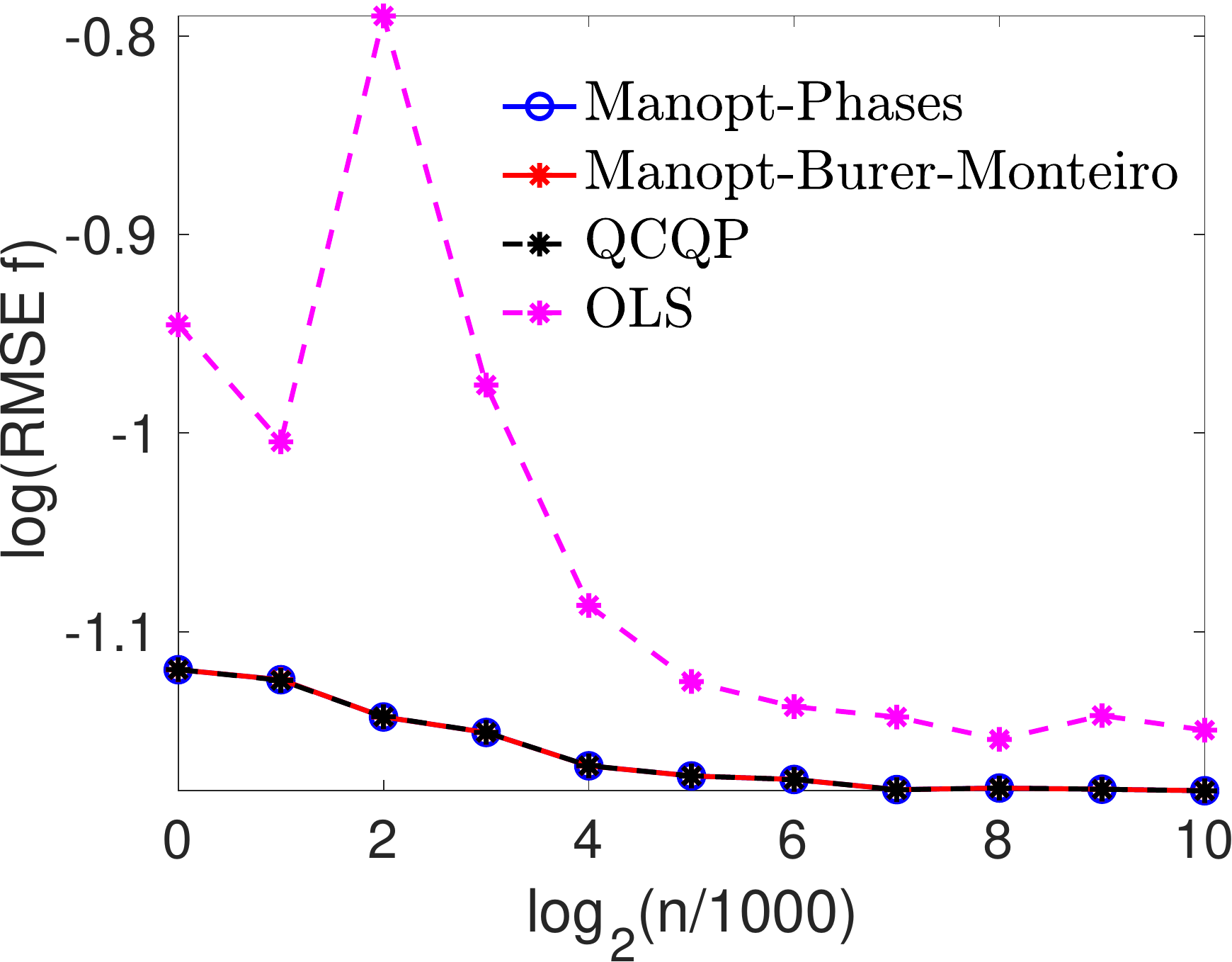} }
\subcaptionbox[]{   $ \sigma=0.1, \lambda = 0.1$
}[ 0.30\textwidth ]
{\includegraphics[width=0.30\textwidth] {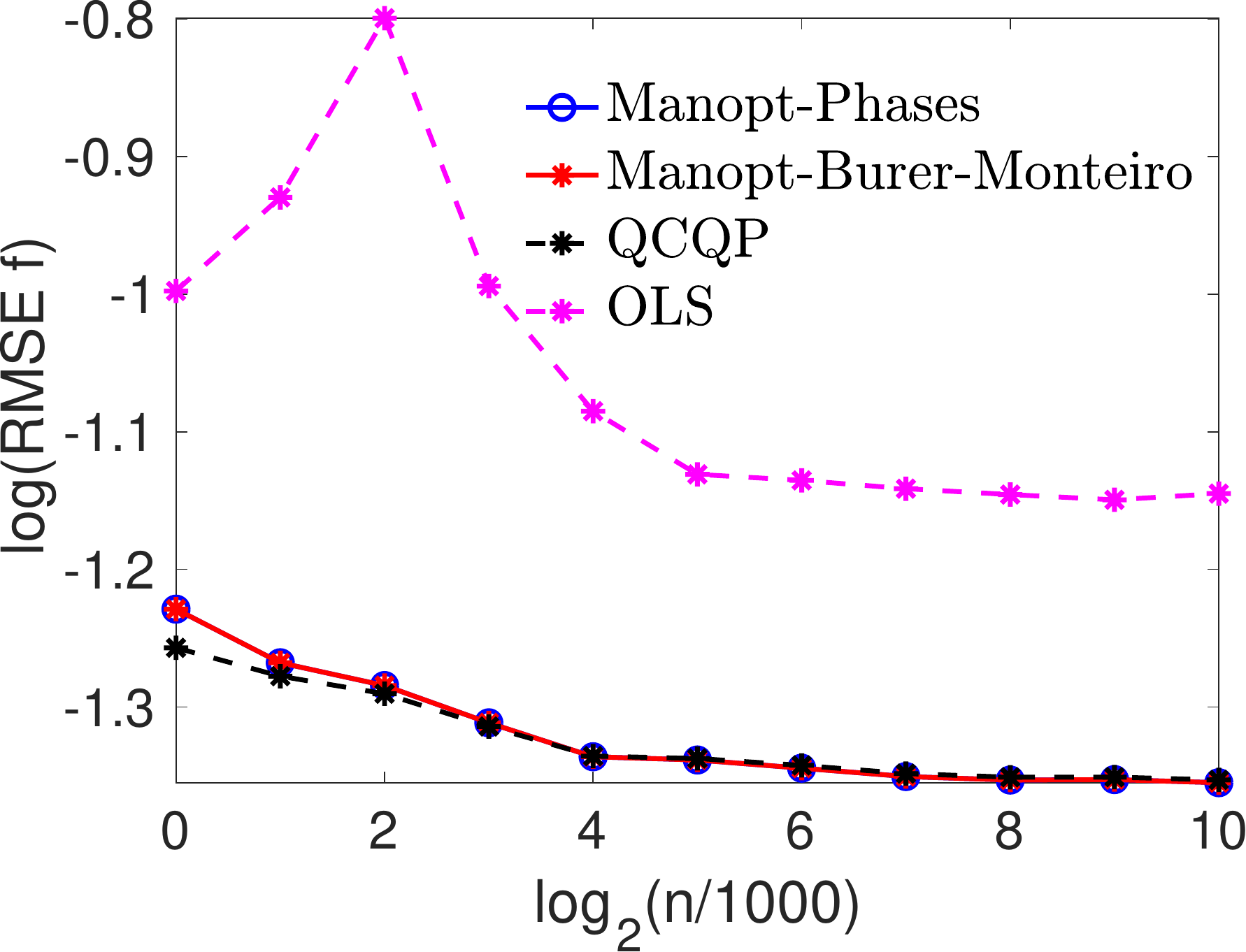} }
%
\subcaptionbox[]{  $ \sigma=0.1, \lambda = 1$
}[ 0.30\textwidth ]
{\includegraphics[width=0.30\textwidth] {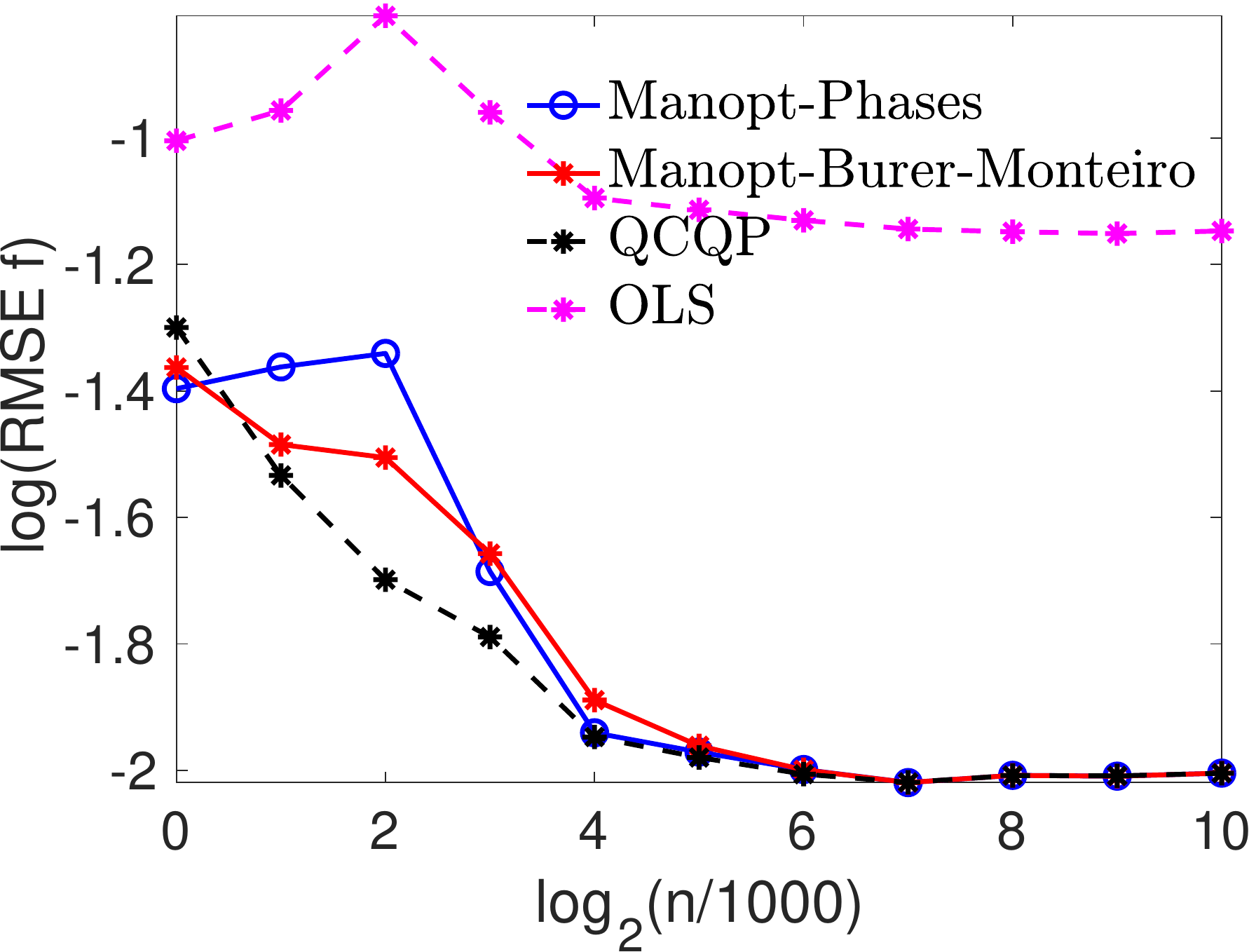} }
%
\captionsetup{width=0.95\linewidth}
\caption[Short Caption]{RMSE comparison 
for the final estimate $f$, for the synthetic example  $f(x,y) = 6 x e^{- x^2 - y^2}$, with  $k=1$ (Chebychev distance), across several noise levels $\sigma$ and values of $\lambda$. Here, \textbf{QCQP} denotes Algorithm \ref{algo:two_stage_denoise}, where  the unwrapping stage is  performed by  \textbf{OLS} \eqref{eq:ols_unwrap_lin_system}.
}
\label{fig:MV_Manopts_Comp_f}
\end{figure*}


\begin{figure*}
\centering
 
\subcaptionbox[]{  Noiseless function
}[ 0.24\textwidth ]
{\includegraphics[width=0.24\textwidth] {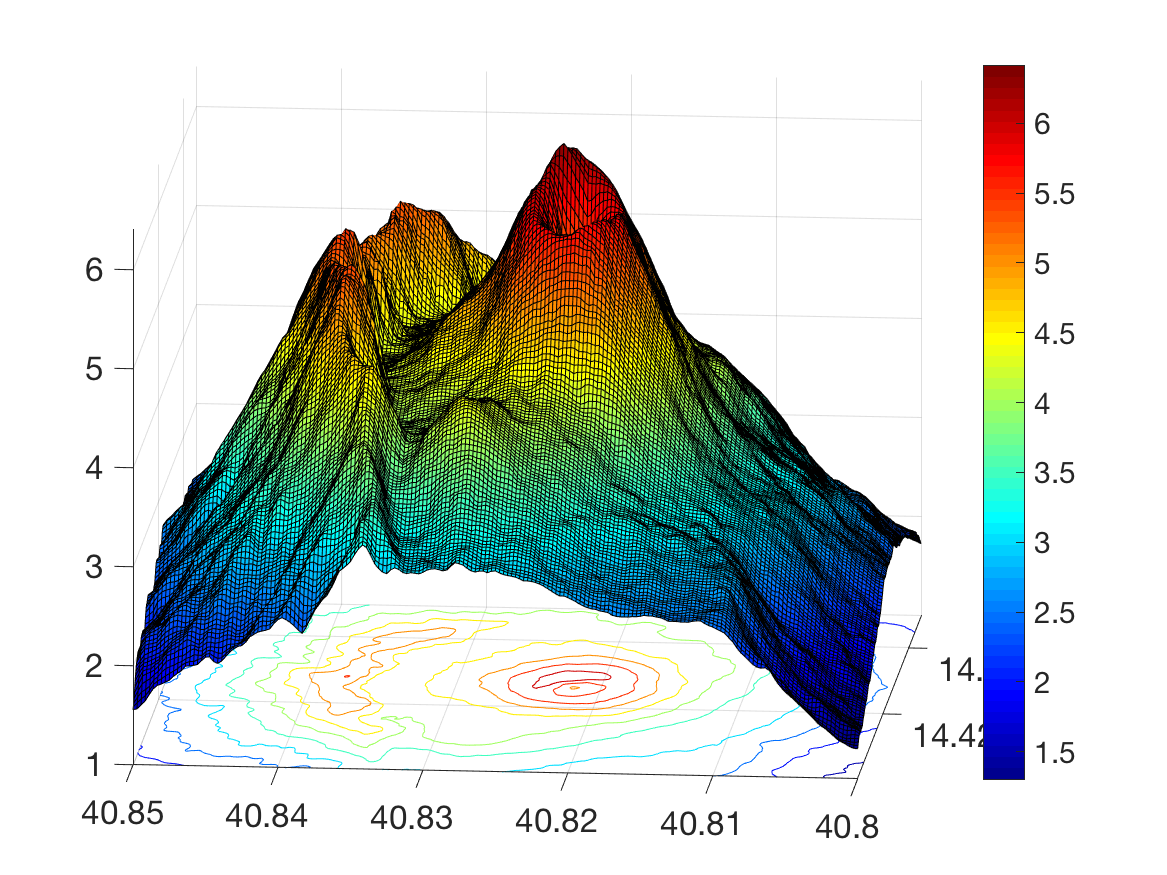} }
%
\subcaptionbox[]{  Noiseless function (depth)
}[ 0.24\textwidth ]
{\includegraphics[width=0.24\textwidth] {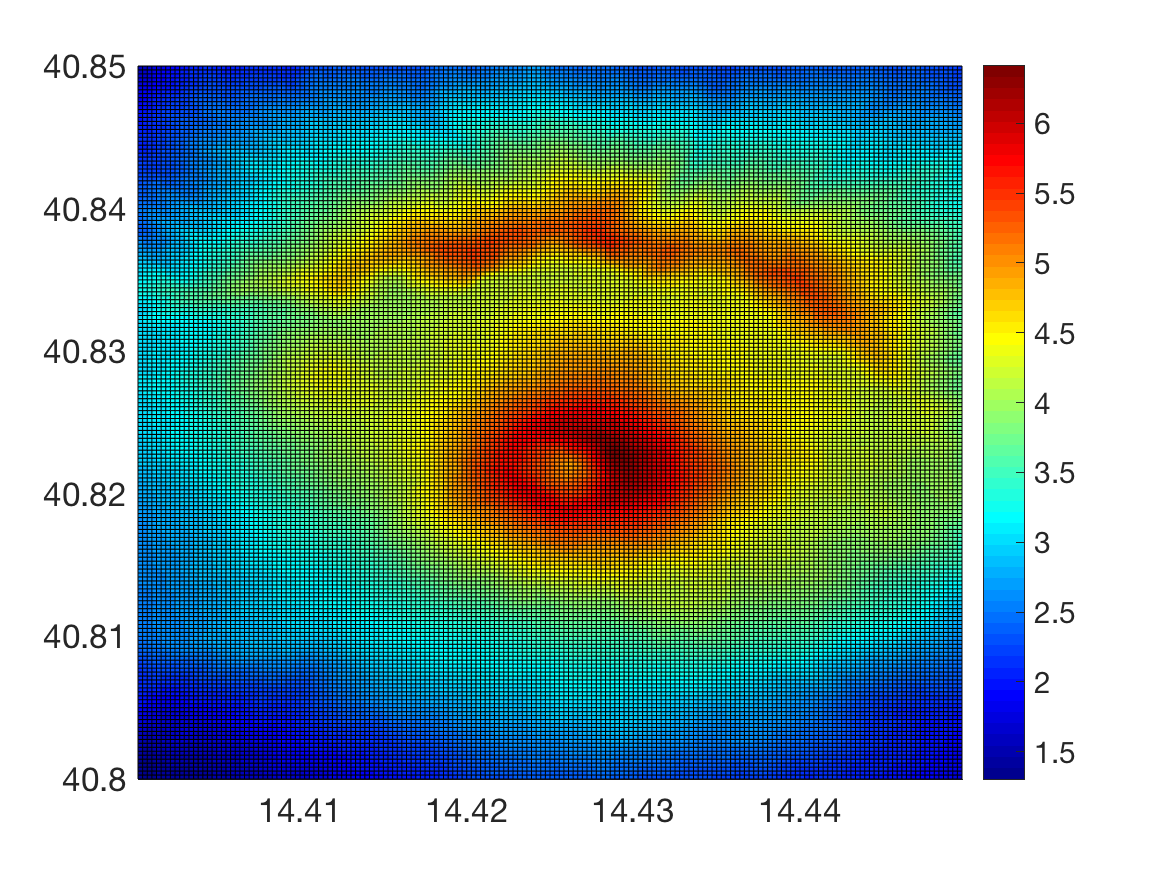} }
%
\subcaptionbox[]{   Clean $f$ mod 1
}[ 0.24\textwidth ]
{\includegraphics[width=0.24\textwidth] {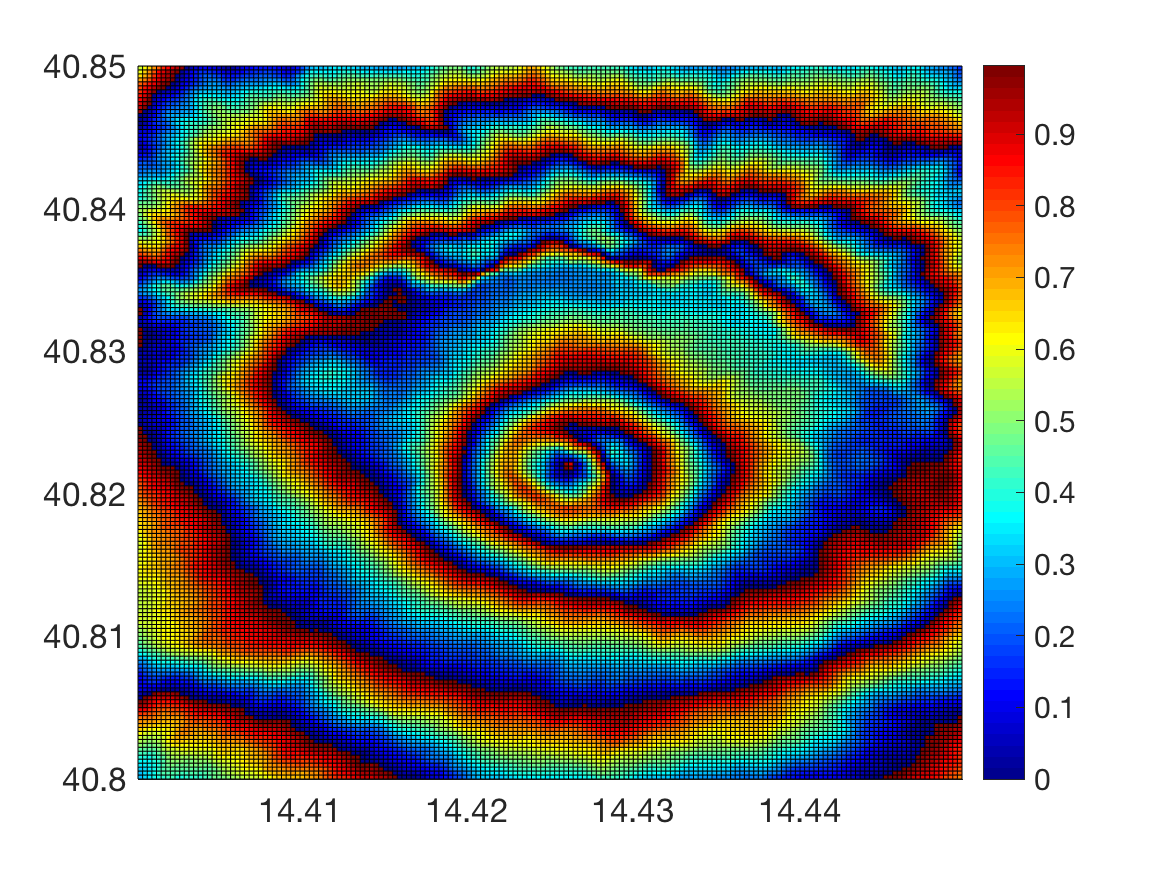} }
\subcaptionbox[]{    Noisy $f$ mod 1
}[ 0.24\textwidth ]
{\includegraphics[width=0.24\textwidth] {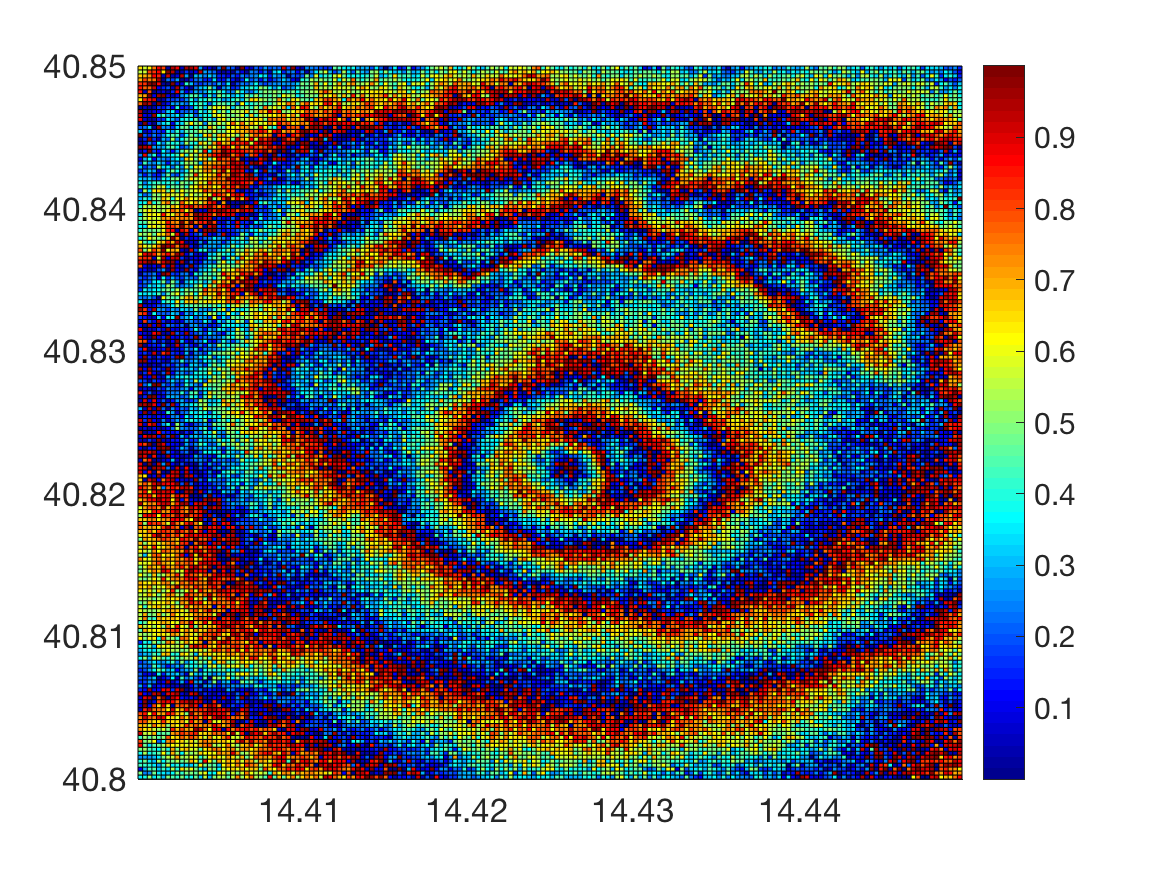} }

\subcaptionbox[]{      Denoised $f$ mod 1   \\ (RMSE=0.155)
}[ 0.24\textwidth ]
{\includegraphics[width=0.24\textwidth] {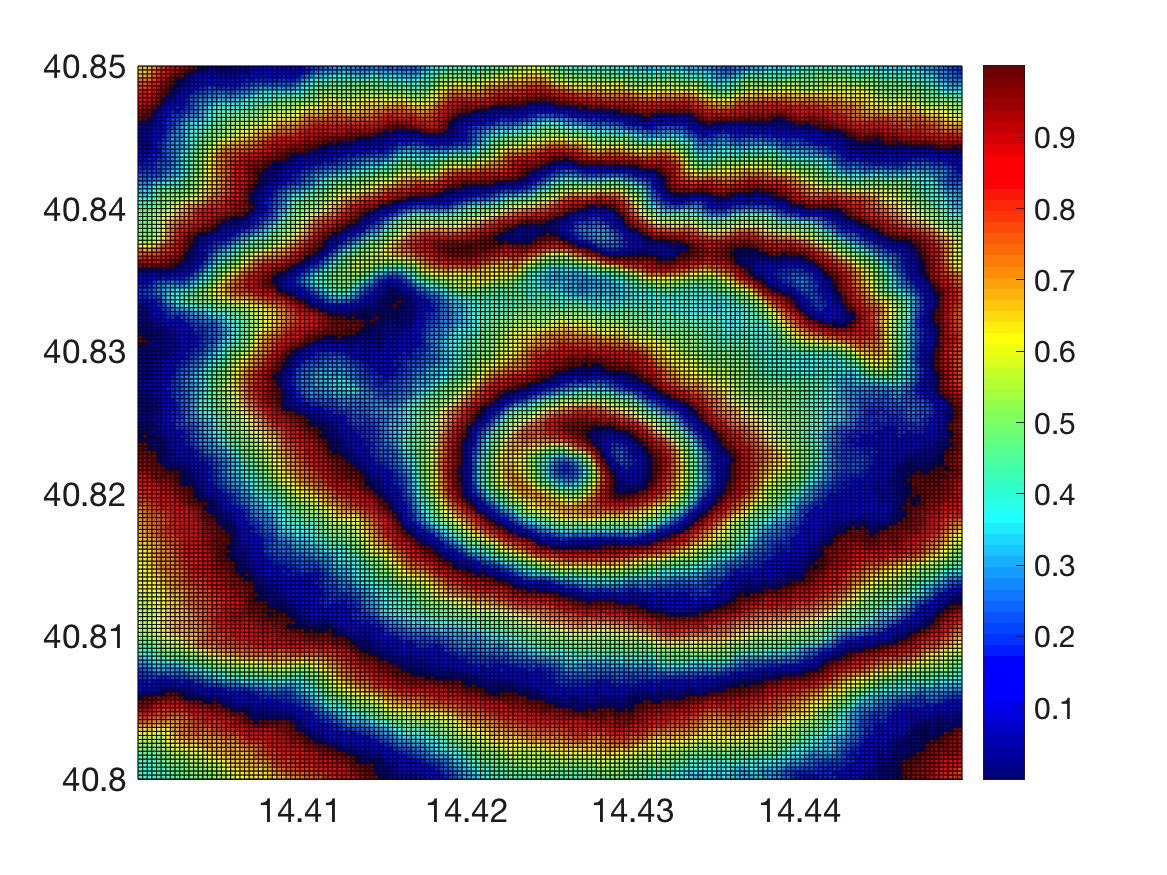} }
%
\subcaptionbox[]{   Denoised $f$   \\ (RMSE=0.062)
}[ 0.24\textwidth ]
{\includegraphics[width=0.24\textwidth] {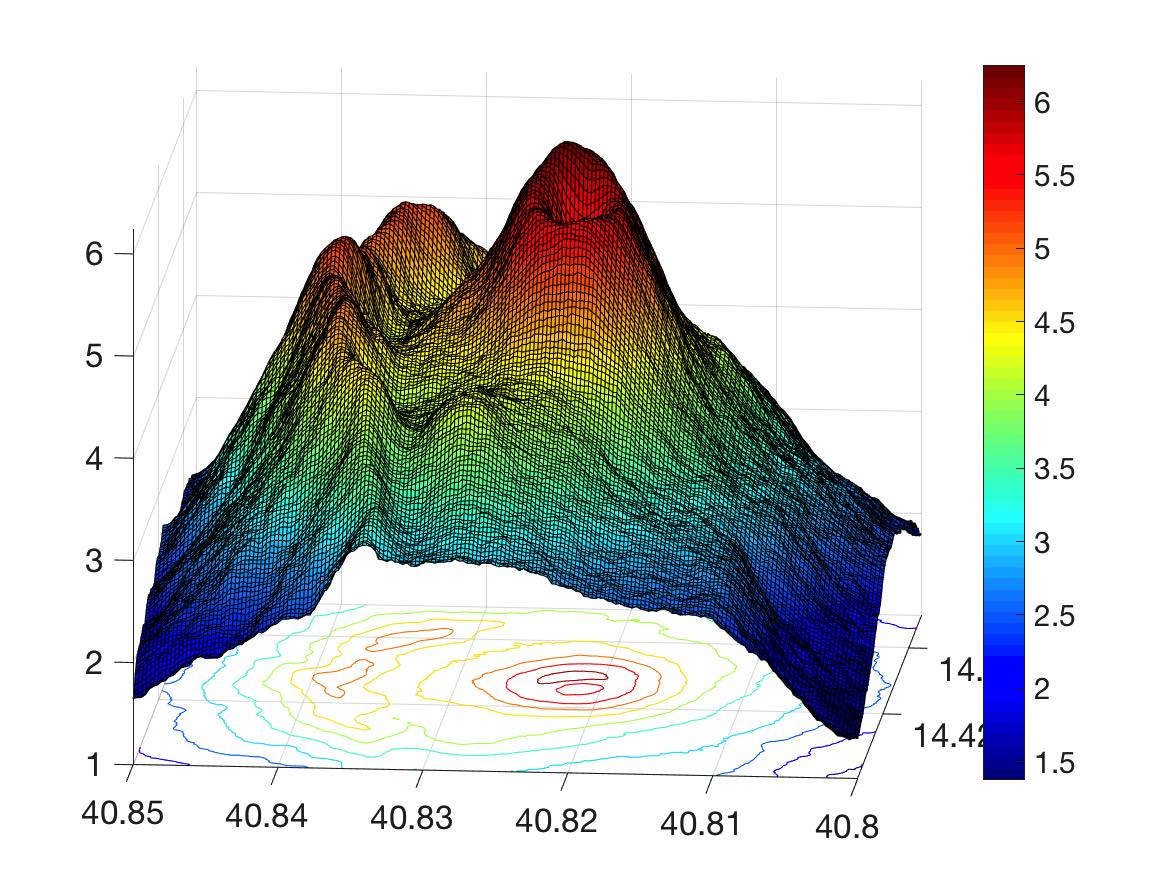} }
%
%
\subcaptionbox[]{   Denoised $f$ (depth)
}[ 0.24\textwidth ]
{\includegraphics[width=0.24\textwidth] {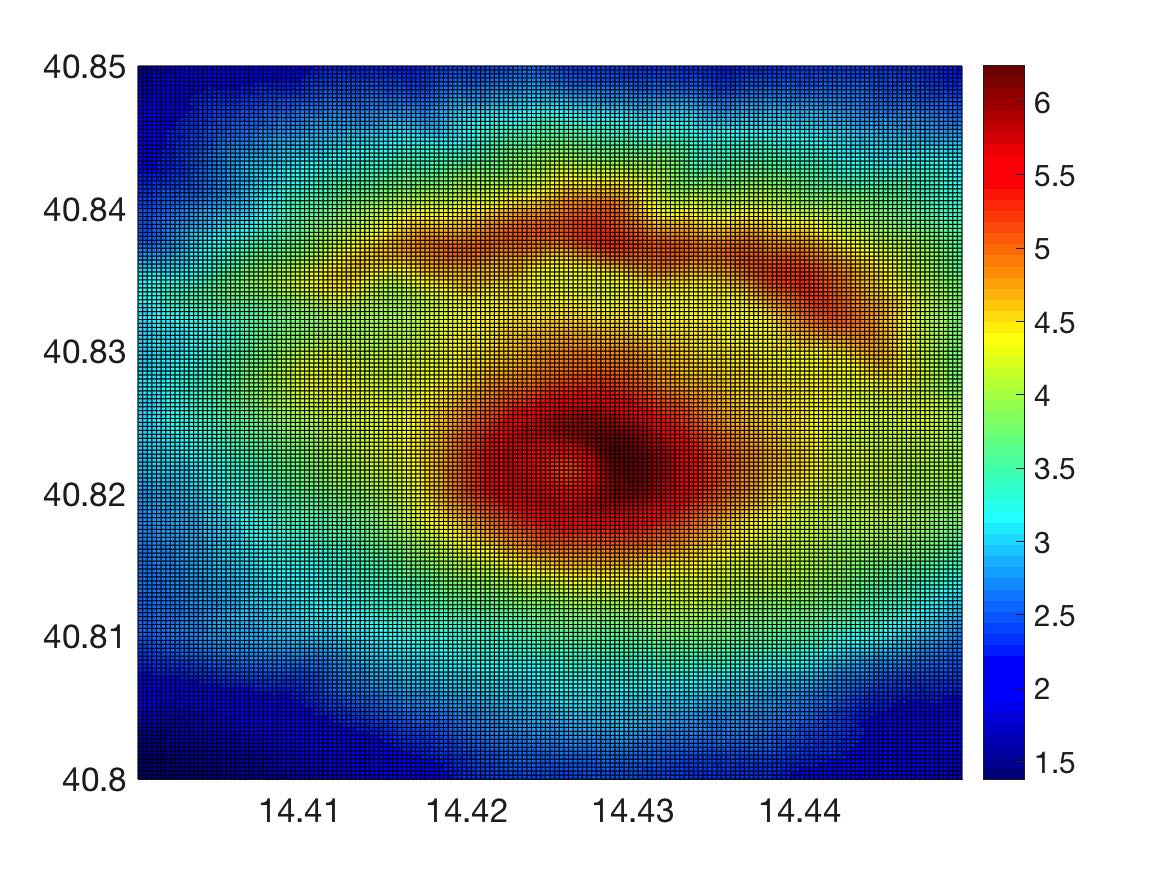} }
%
%
\subcaptionbox[]{  Error  $|f - \hat{f}|$ 
}[ 0.24\textwidth ]
{\includegraphics[width=0.24\textwidth] {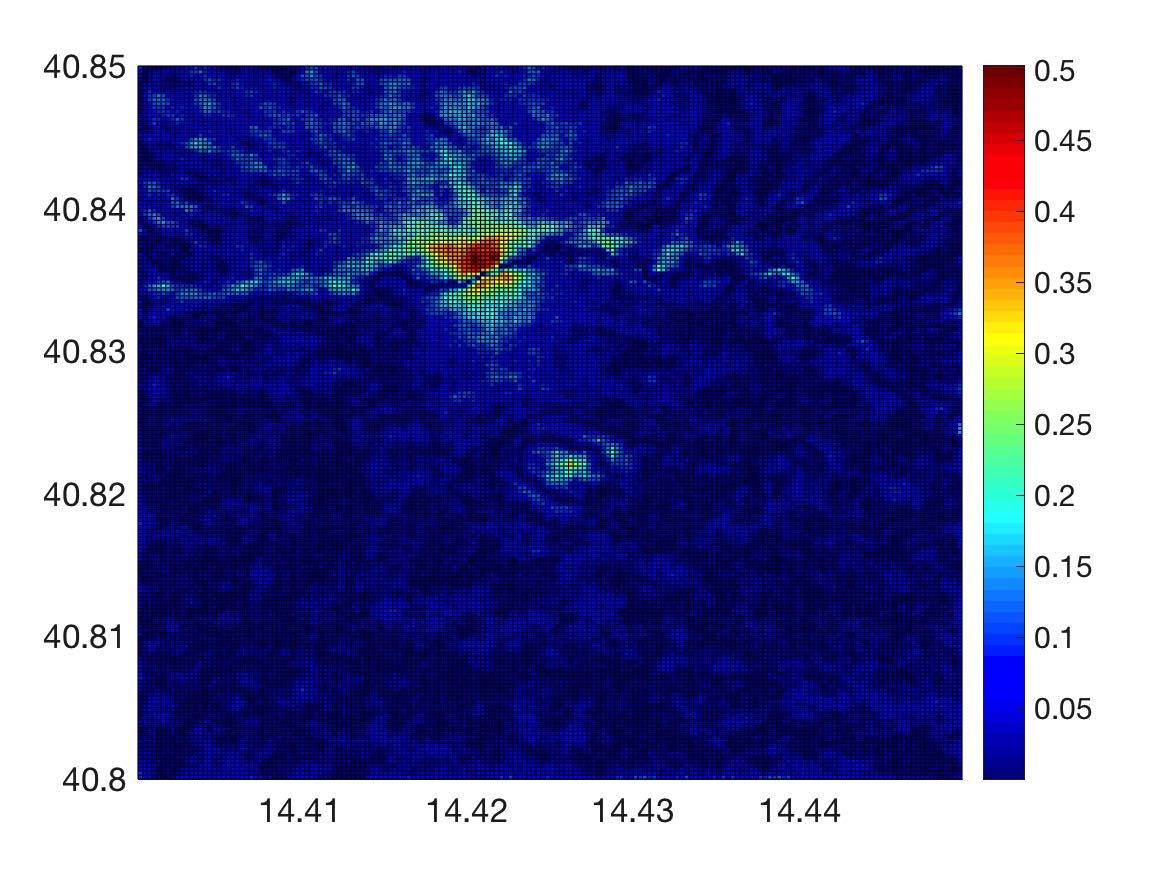} }
\captionsetup{width=0.95\linewidth}
\caption[Short Caption]{Elevation map of Mount Vesuvius with  $n = 32400$,  recovered by Manopt-Phases with  $k=1$ (Chebychev distance), $\lambda = 1$,  and noise level $\sigma=0.10$ under the Gaussian model.}
\label{fig:Vesuvius_HIGH_Noisy}
\end{figure*}

\begin{figure*}
\centering
 
\subcaptionbox[]{  Noiseless function
}[ 0.24\textwidth ]
{\includegraphics[width=0.24\textwidth] {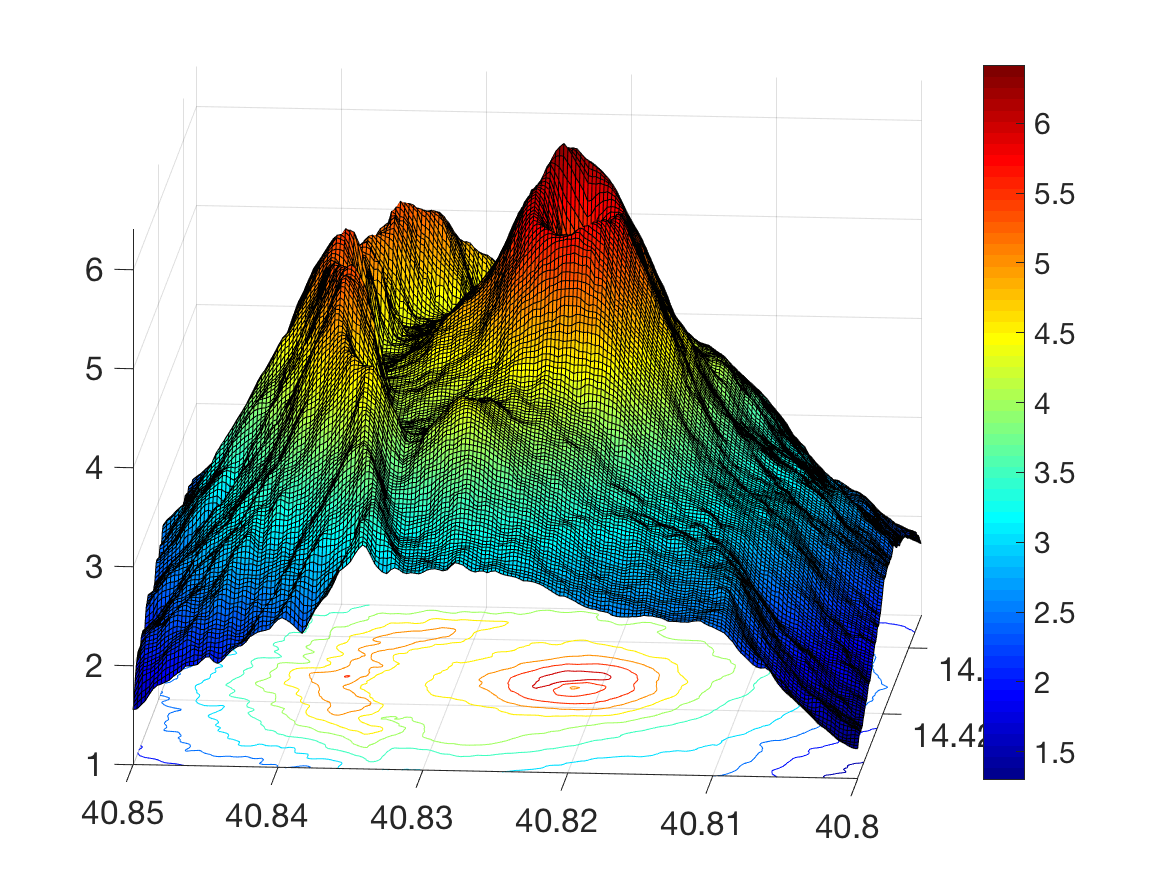} }
%
\subcaptionbox[]{  Noiseless function (depth)
}[ 0.24\textwidth ]
{\includegraphics[width=0.24\textwidth] {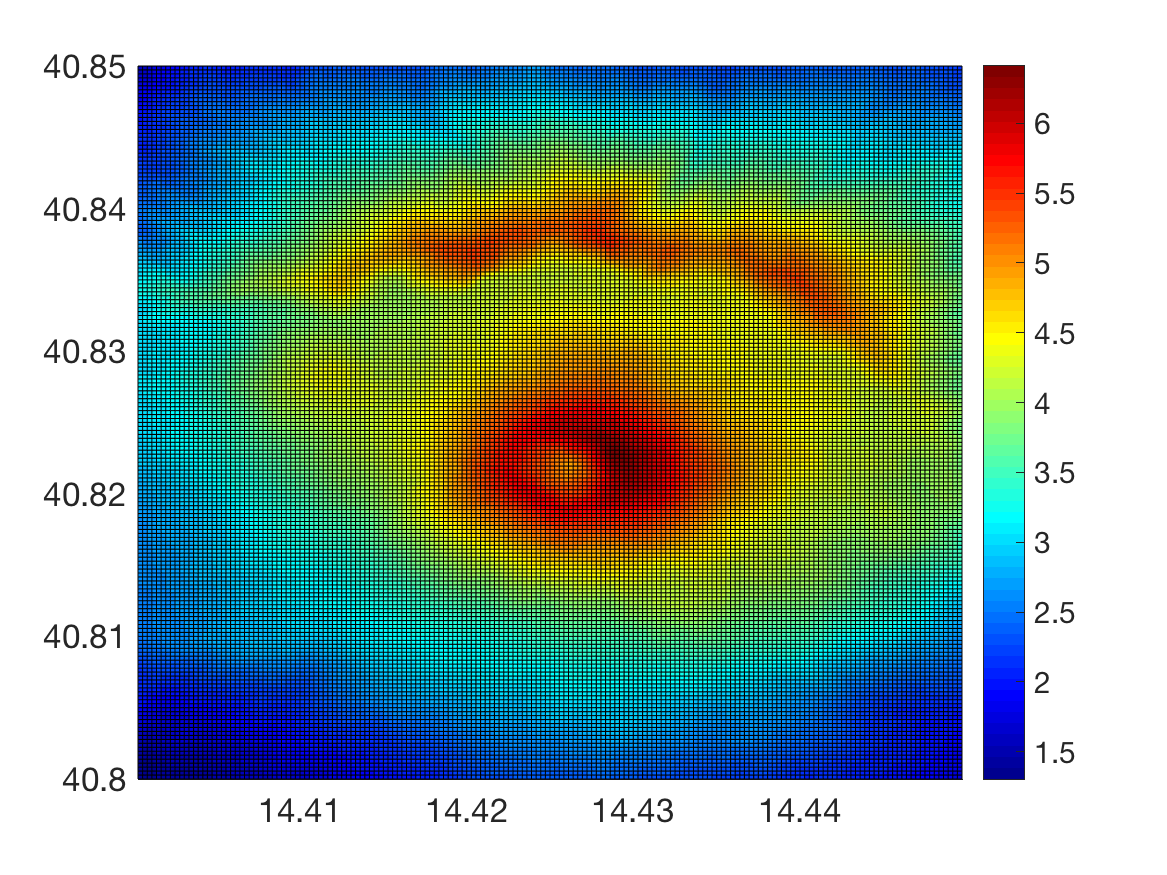} }
%
\subcaptionbox[]{   Clean $f$ mod 1
}[ 0.24\textwidth ]
{\includegraphics[width=0.24\textwidth] {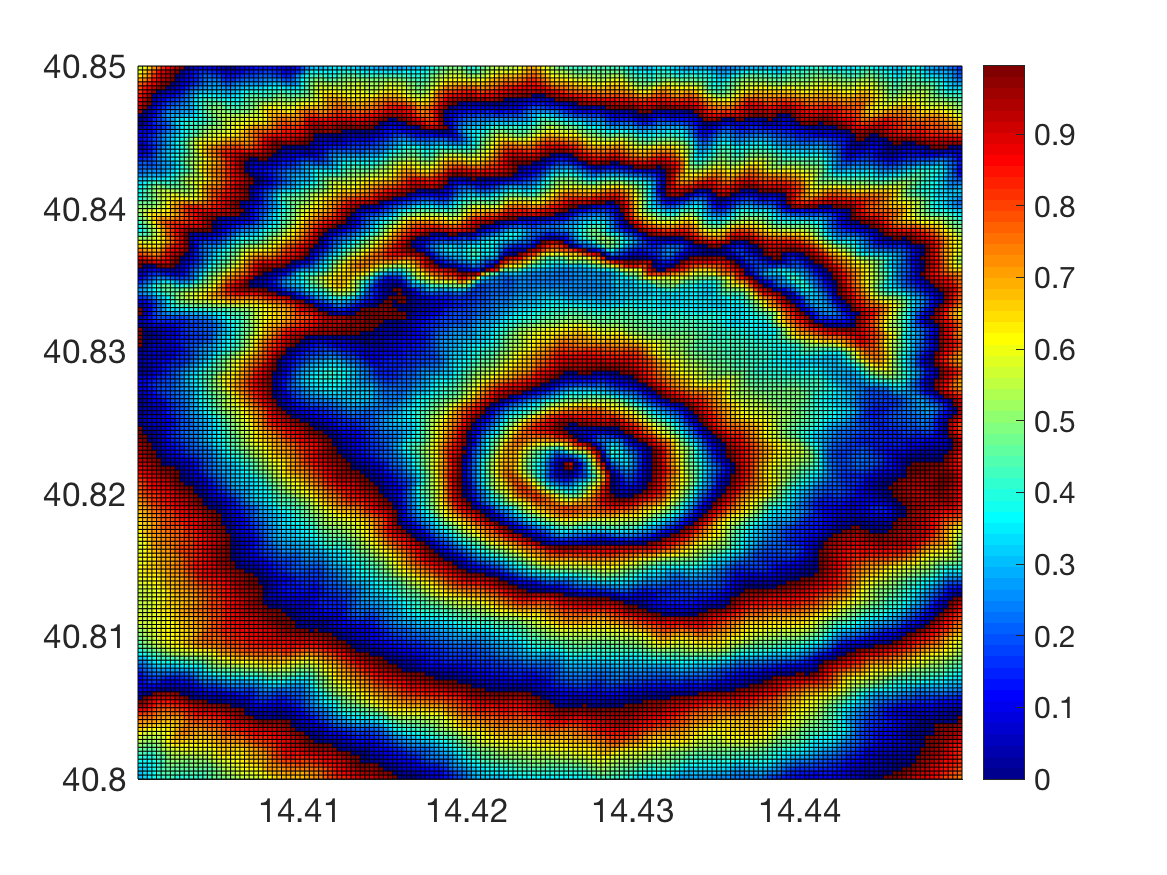} }
\subcaptionbox[]{    Noisy $f$ mod 1
}[ 0.24\textwidth ]
{\includegraphics[width=0.24\textwidth] {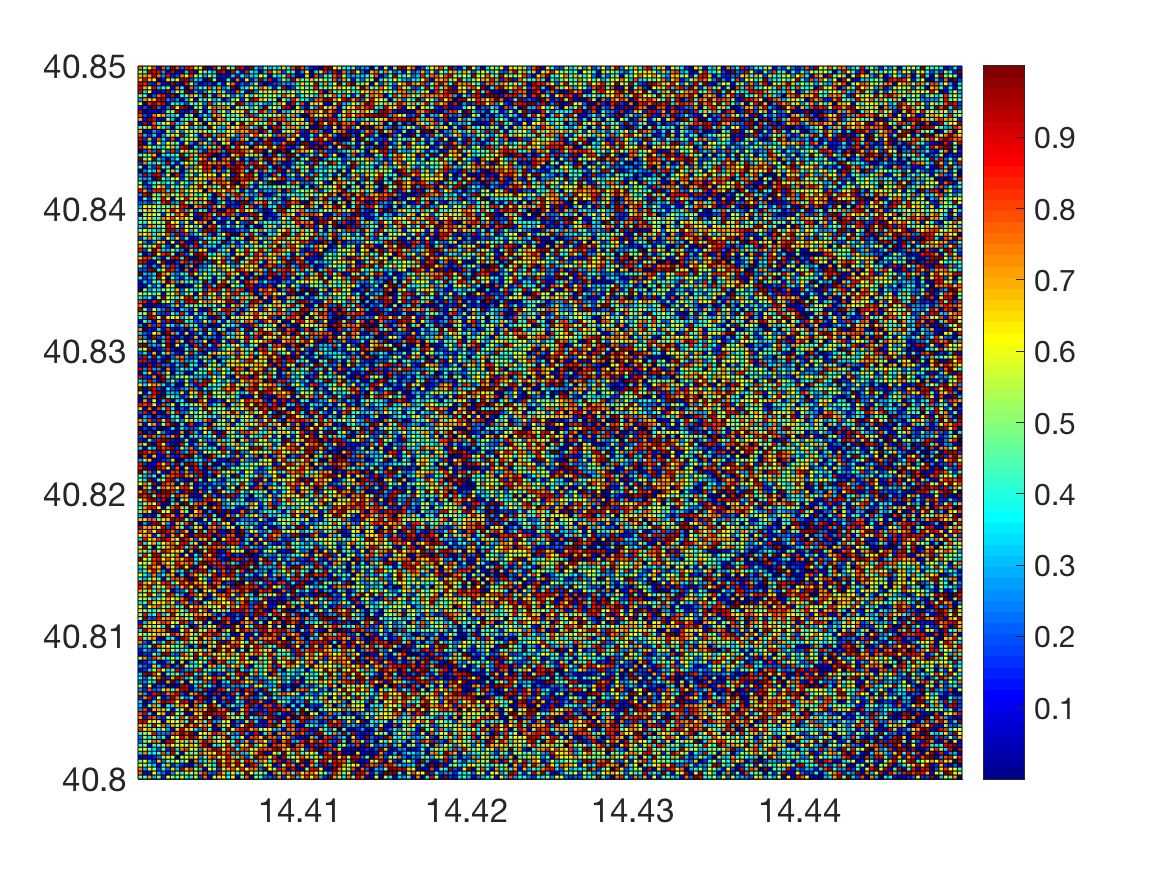} }

\subcaptionbox[]{      Denoised $f$ mod 1  \\    (RMSE = 0.23)     
}[ 0.24\textwidth ]
{\includegraphics[width=0.24\textwidth] {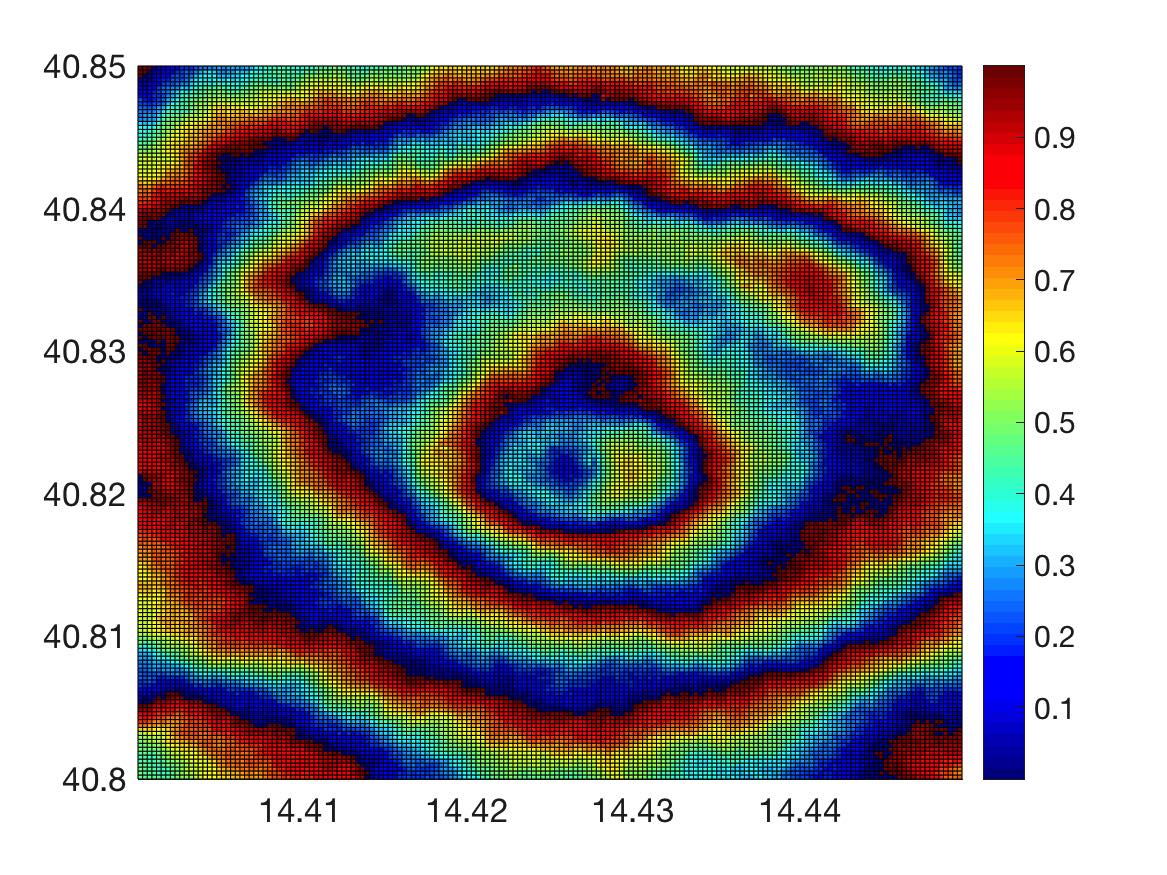} }
%
\subcaptionbox[]{   Denoised $f$   \\  (RMSE = 0.186)
}[ 0.24\textwidth ]
{\includegraphics[width=0.24\textwidth] {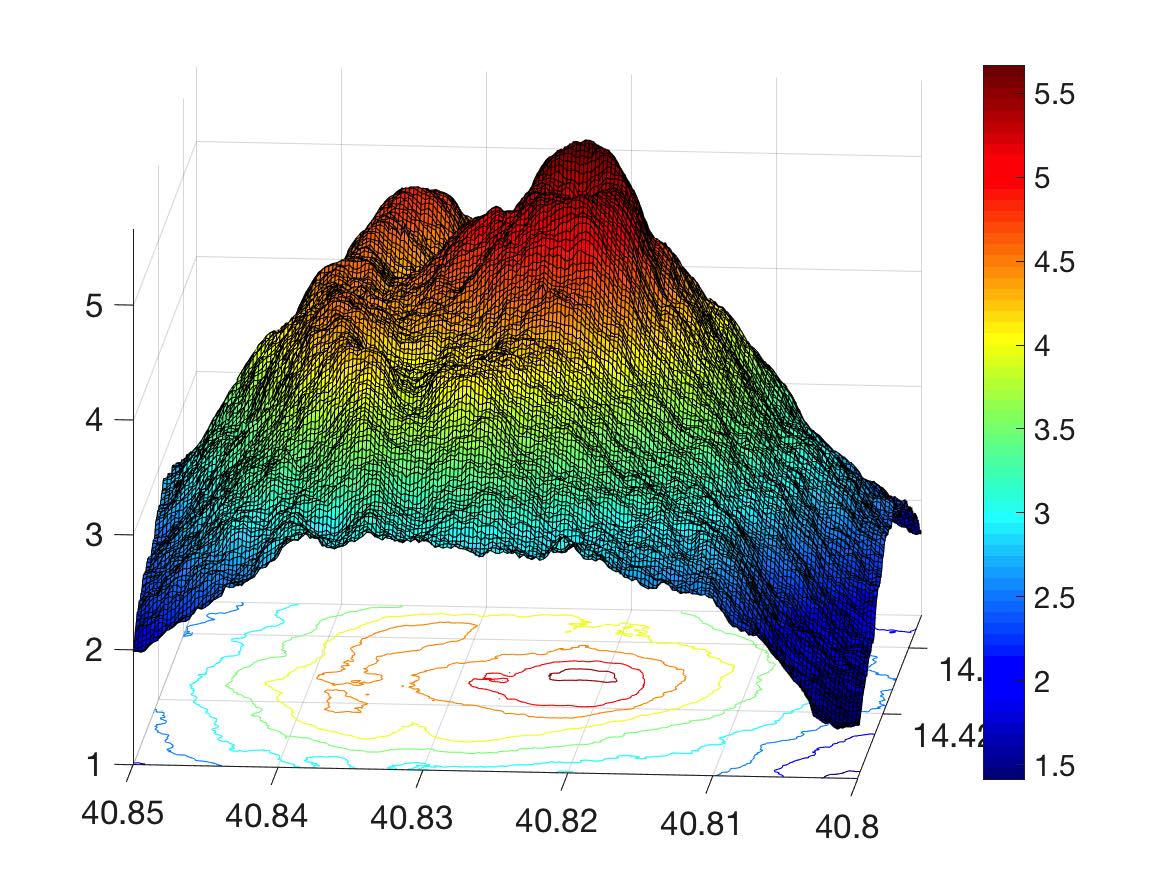} }
%
%
\subcaptionbox[]{   Denoised $f$ (depth)
}[ 0.24\textwidth ]
{\includegraphics[width=0.24\textwidth] {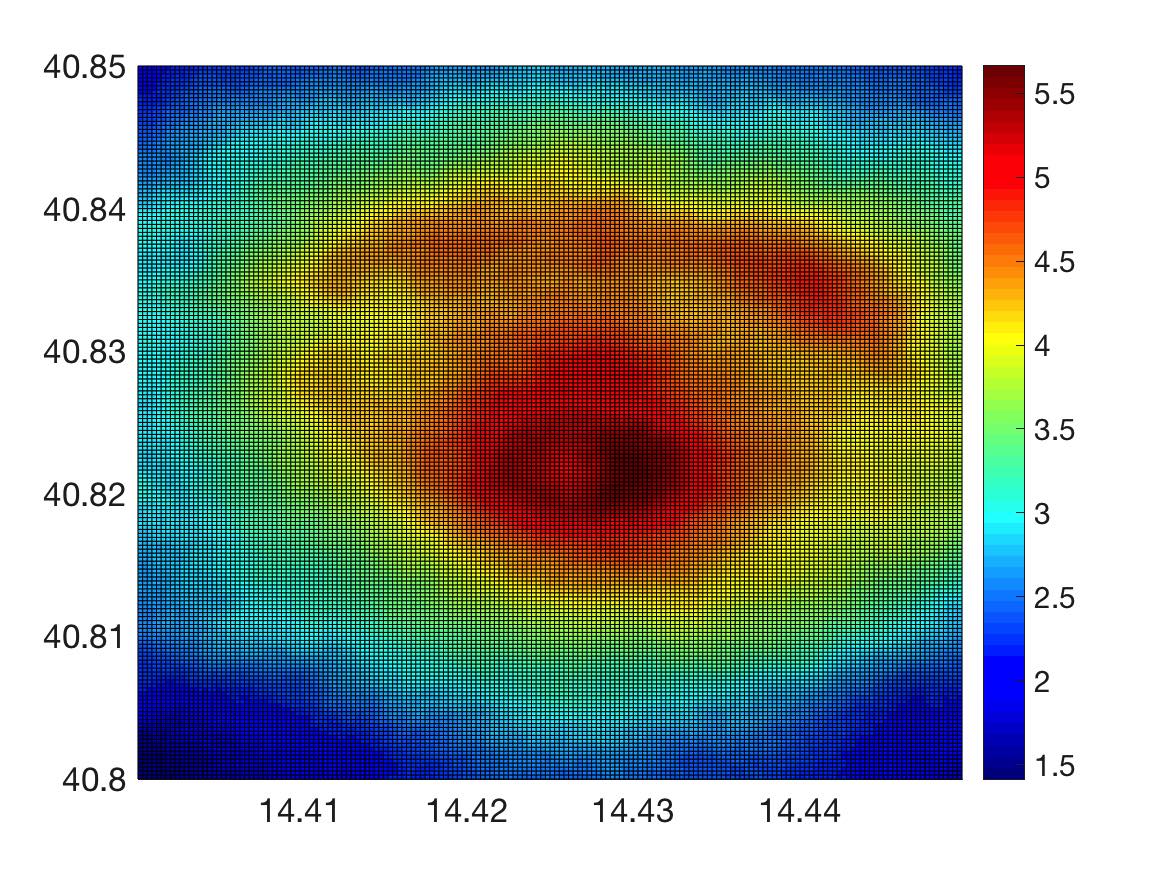} }
%
%
\subcaptionbox[]{  Error  $|f - \hat{f}|$ 
}[ 0.24\textwidth ]
{\includegraphics[width=0.24\textwidth] {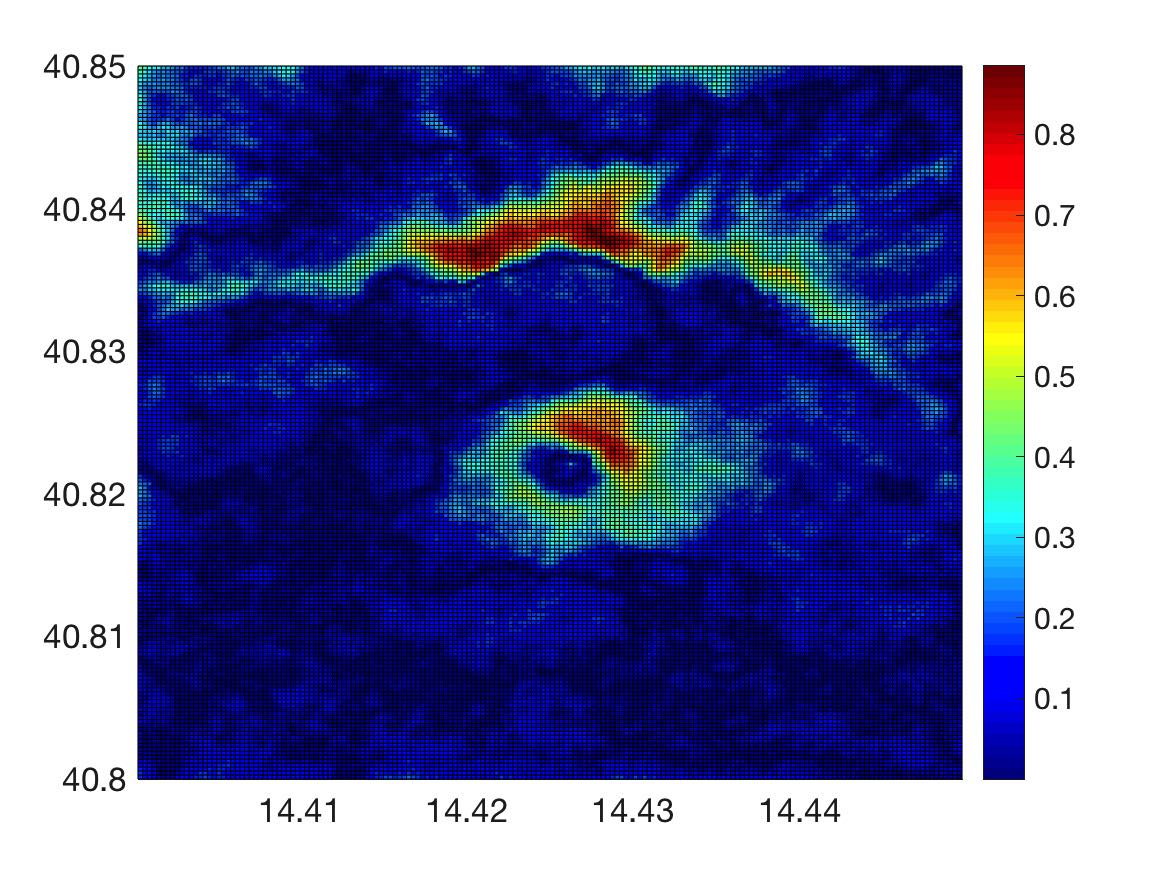} }
\captionsetup{width=0.95\linewidth}
\caption[Short Caption]{Elevation map of Mount Vesuvius with  $n = 32400$, $k=1$ (Chebychev distance), $\lambda = 1$,  and noise level $\sigma=0.25$ under the Gaussian model, as recovered by Manopt-Phases.} 
\label{fig:Vesuvius_HIGH_VeryNoisy}
\end{figure*}

\begin{figure*}
\centering
 
\subcaptionbox[]{  Noiseless function
}[ 0.24\textwidth ]
{\includegraphics[width=0.24\textwidth] {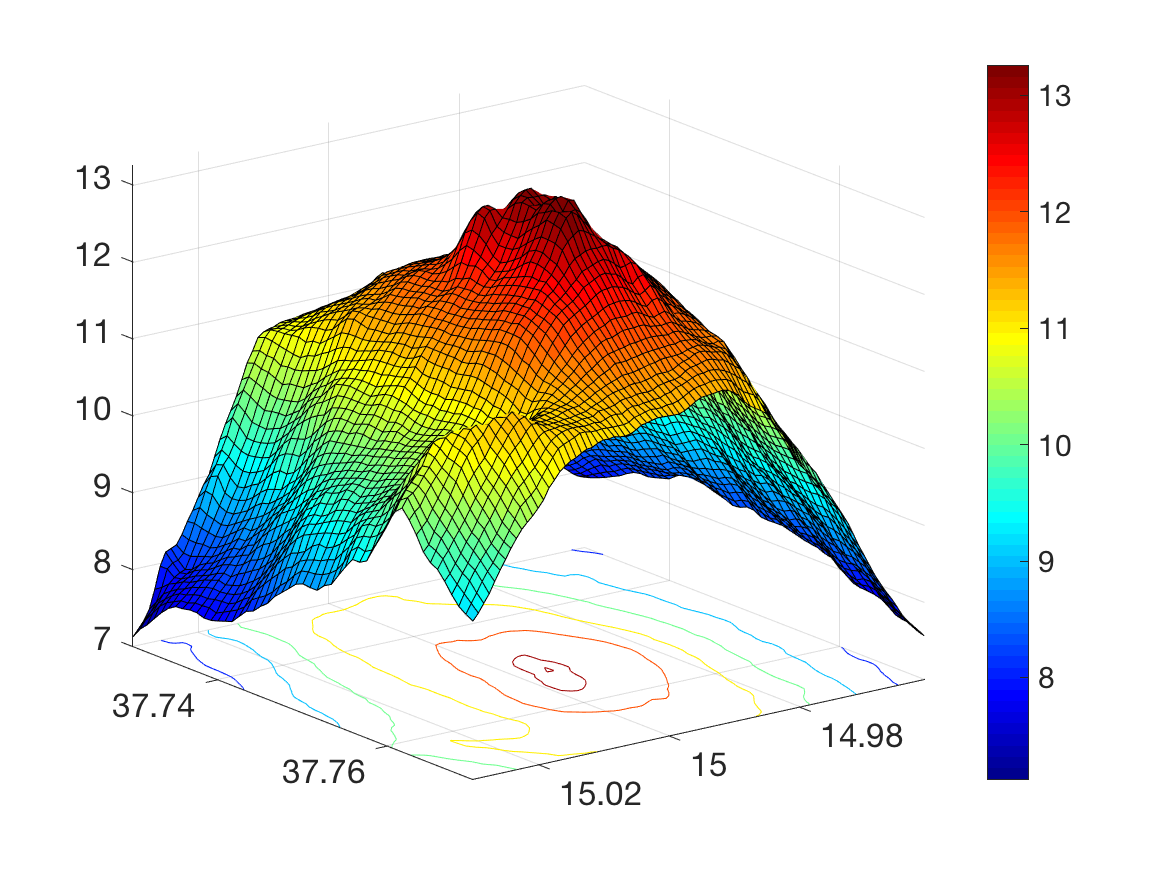} }
%
\subcaptionbox[]{  Noiseless function (depth)
}[ 0.24\textwidth ]
{\includegraphics[width=0.24\textwidth] {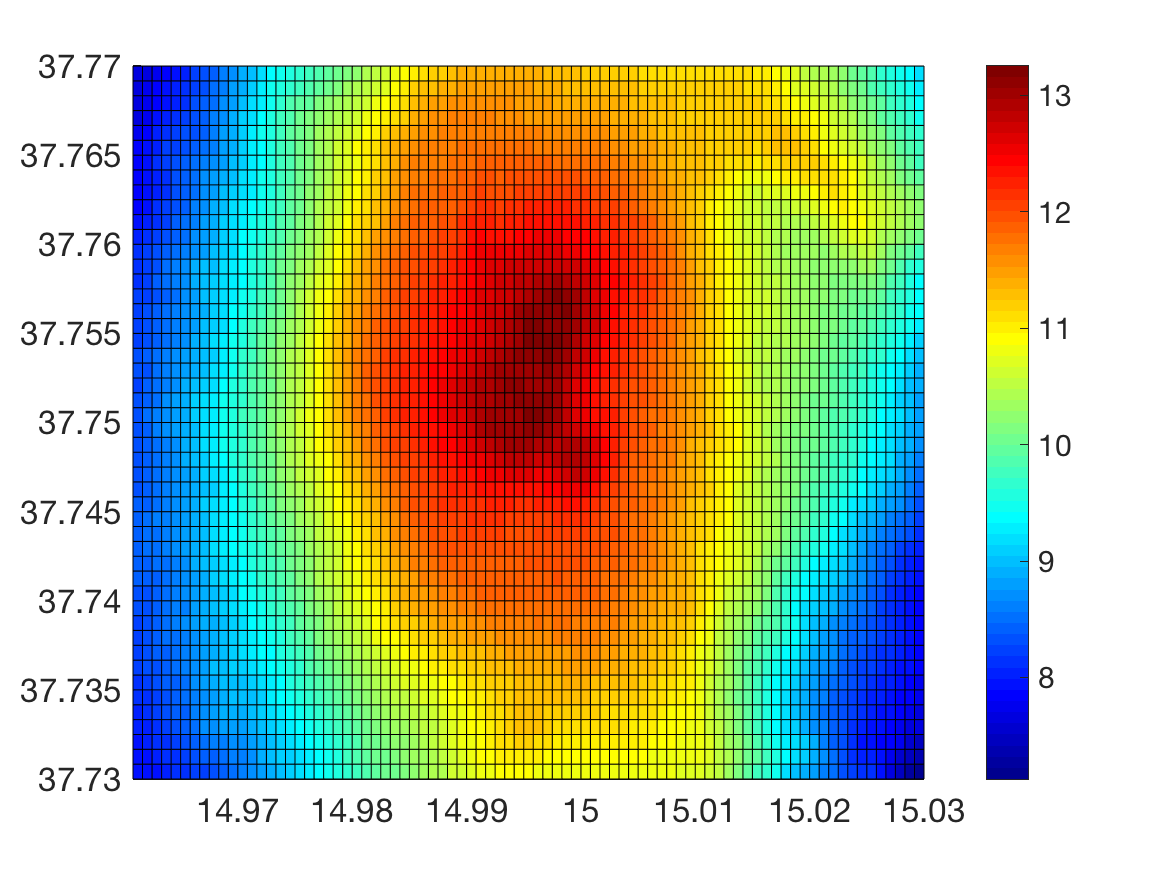} }
%
\subcaptionbox[]{   Clean $f$ mod 1
}[ 0.24\textwidth ]
{\includegraphics[width=0.24\textwidth] {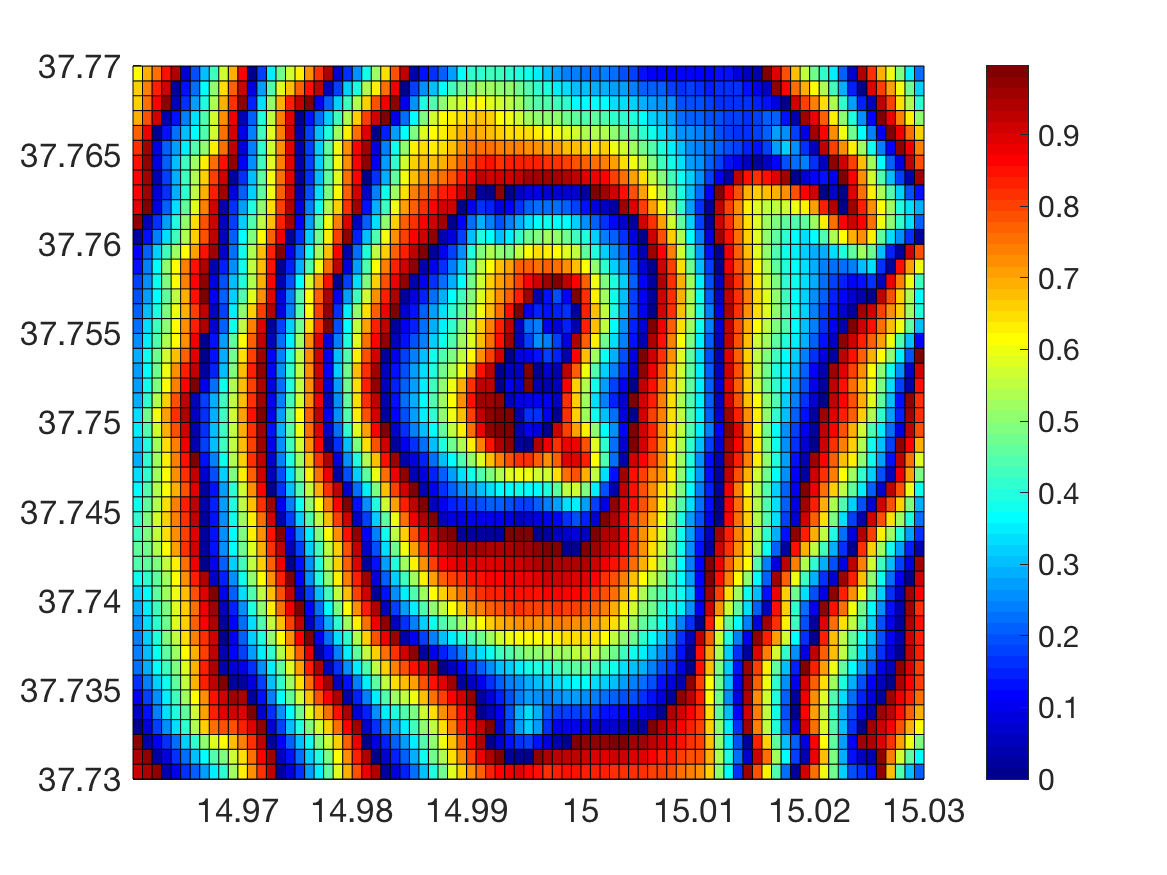} }
\subcaptionbox[]{    Noisy $f$ mod 1
}[ 0.24\textwidth ]
{\includegraphics[width=0.24\textwidth] {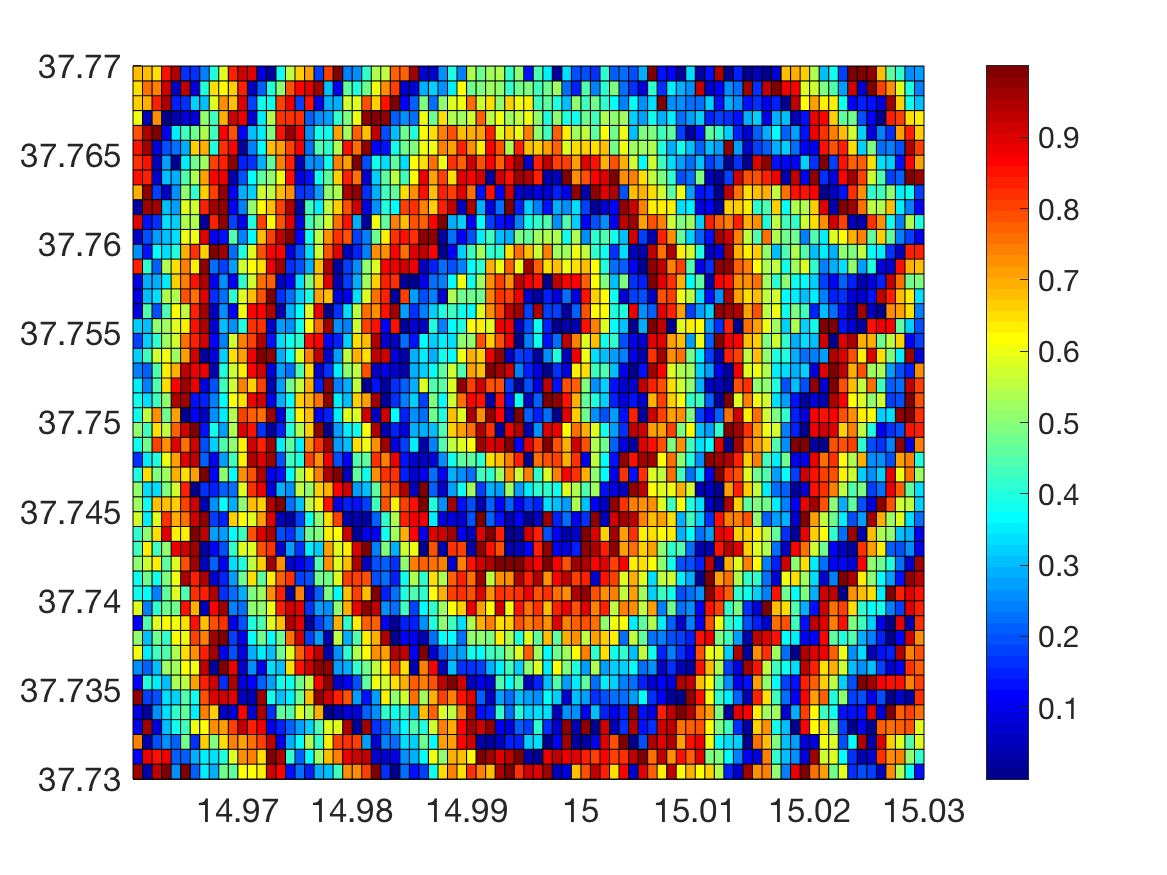} }

\subcaptionbox[]{      Denoised $f$ mod 1  \\(RMSE = 0.194)  
}[ 0.24\textwidth ]
{\includegraphics[width=0.24\textwidth] {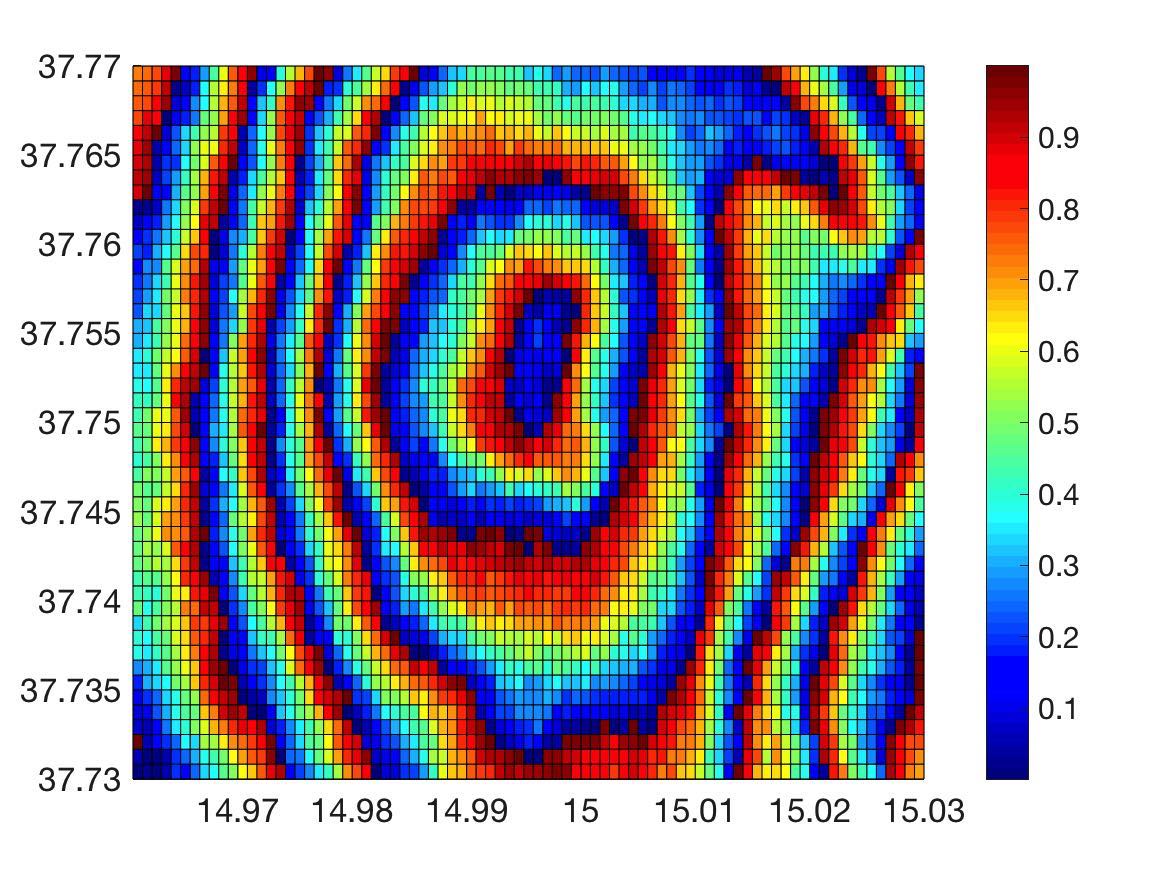} }
%
\subcaptionbox[]{   Denoised $f$    \\(RMSE = 0.08)
}[ 0.24\textwidth ]
{\includegraphics[width=0.24\textwidth] {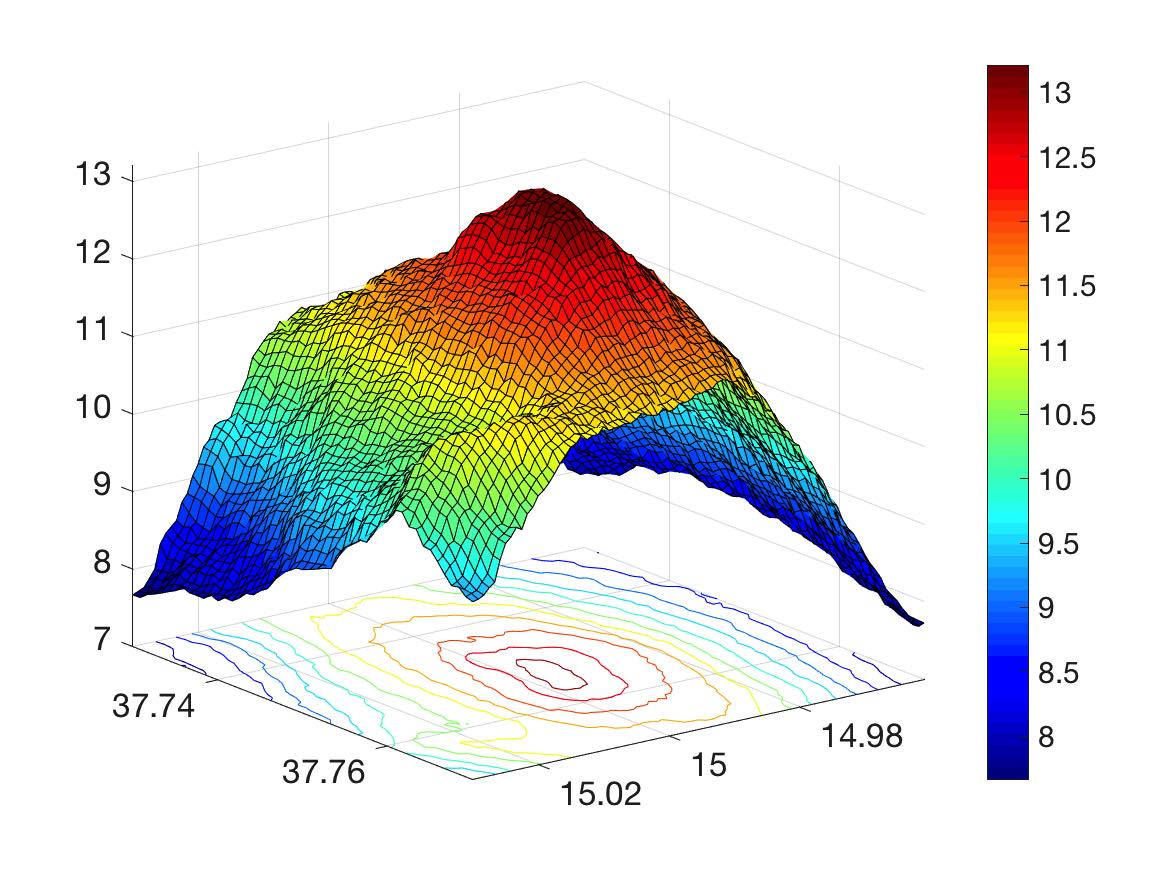} }
%
%
\subcaptionbox[]{   Denoised $f$ (depth)
}[ 0.24\textwidth ]
{\includegraphics[width=0.24\textwidth] {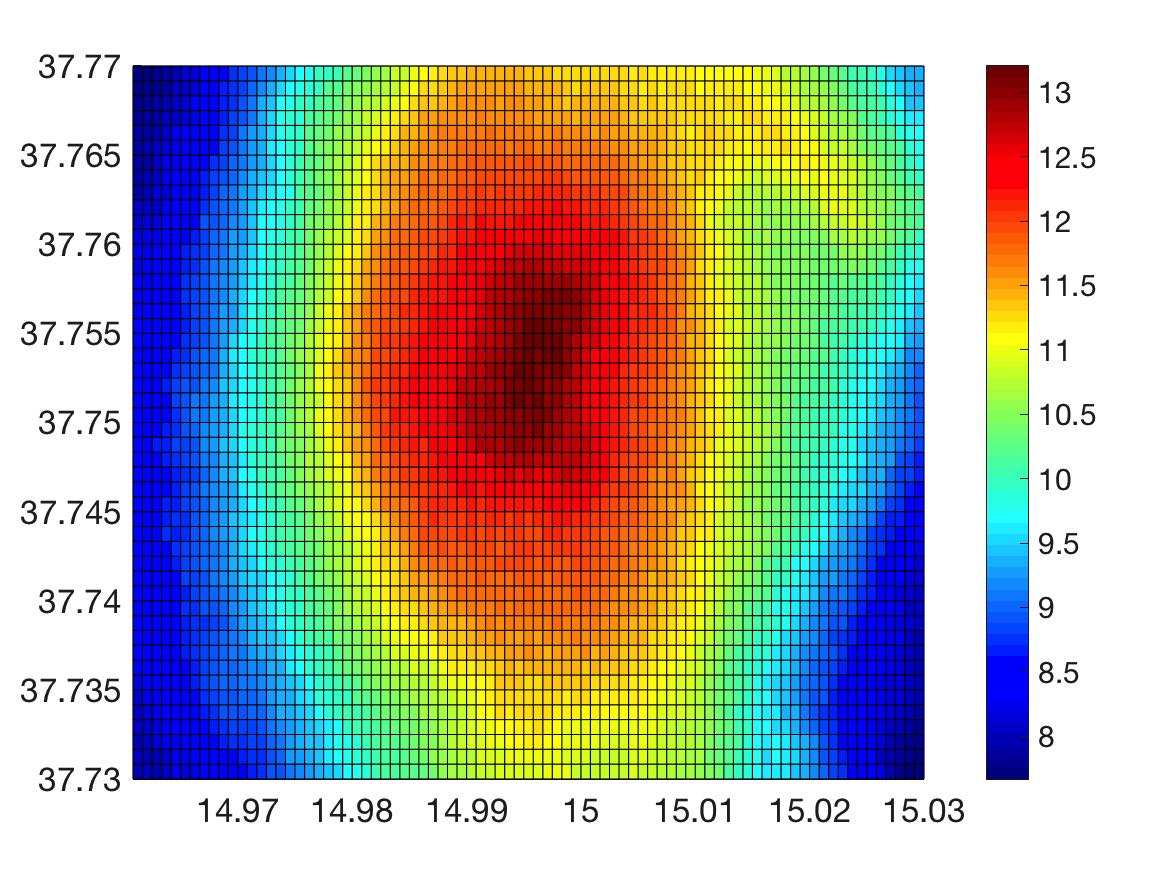} }
%
%
\subcaptionbox[]{  Error  $|f - \hat{f}|$ 
}[ 0.24\textwidth ]
{\includegraphics[width=0.24\textwidth] {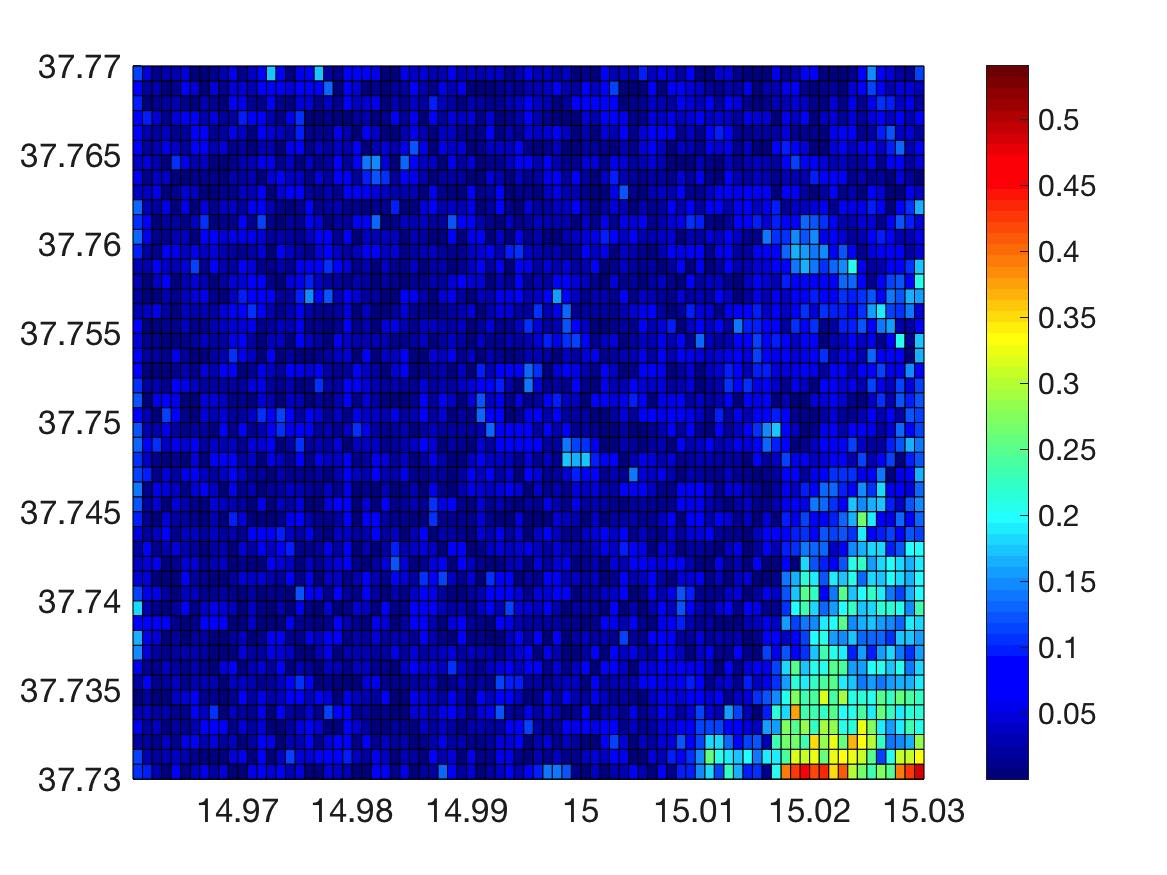} }
\captionsetup{width=0.95\linewidth}
\caption[Short Caption]{ Elevation map of Mount  Etna  with   $n = 4100$, recovered by Manopt-Phases with $k=1$ (Chebychev distance), $\lambda = 0.3$,  and noise level $\sigma=0.10$ under the Gaussian model.} 
\label{fig:Etna_Low_Noisy} 
\end{figure*}

\begin{figure}[!ht]
\centering
\subcaptionbox[]{   Clean $f$ mod 1.
}[ 0.43\textwidth ]
{\includegraphics[width=0.43\textwidth] {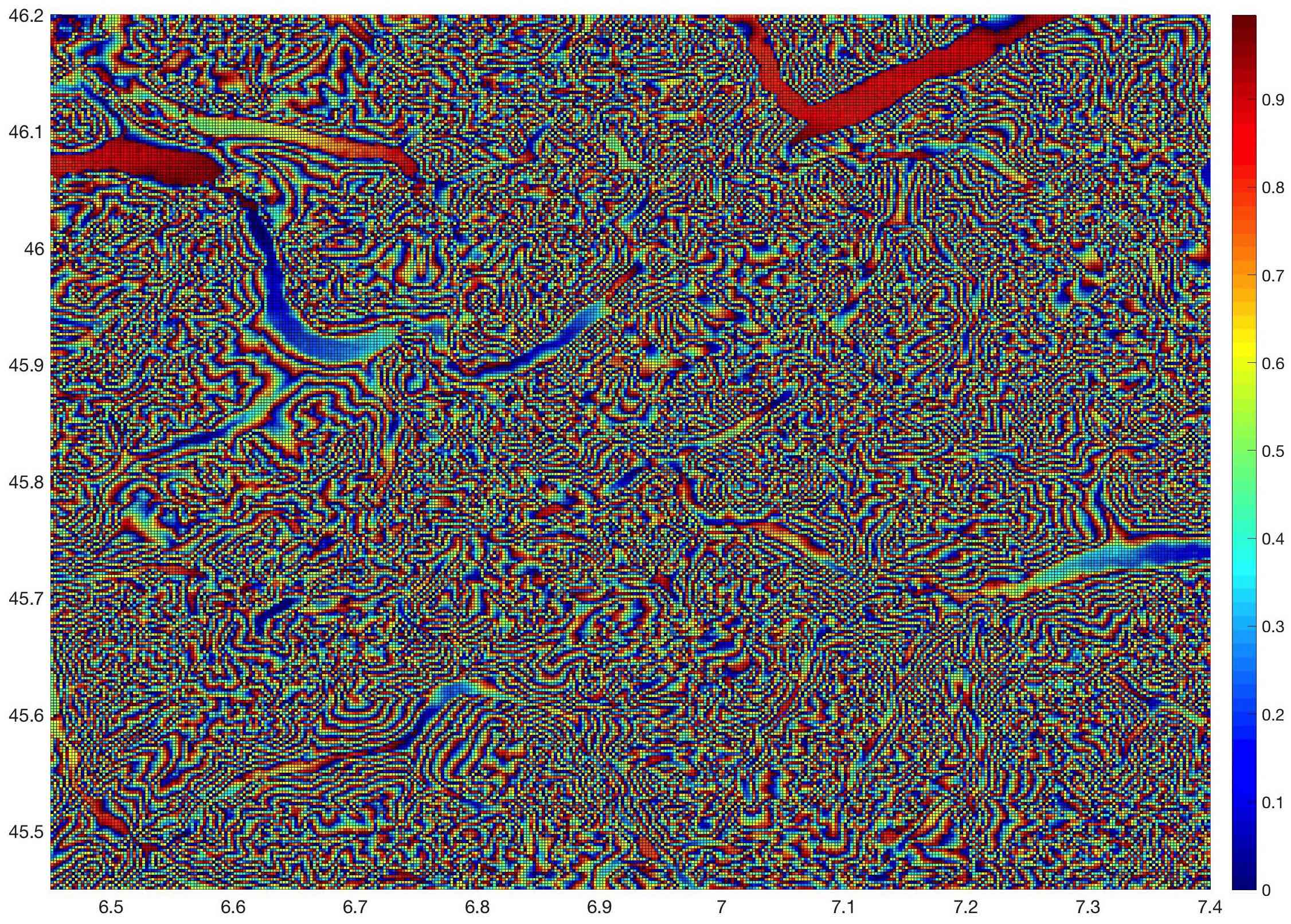} }
 \hspace{-0.08\textwidth} 
\subcaptionbox[]{ Recovery   $ \lambda = 0.01$.
}[ 0.6\textwidth ]
{\includegraphics[width=0.6\textwidth] {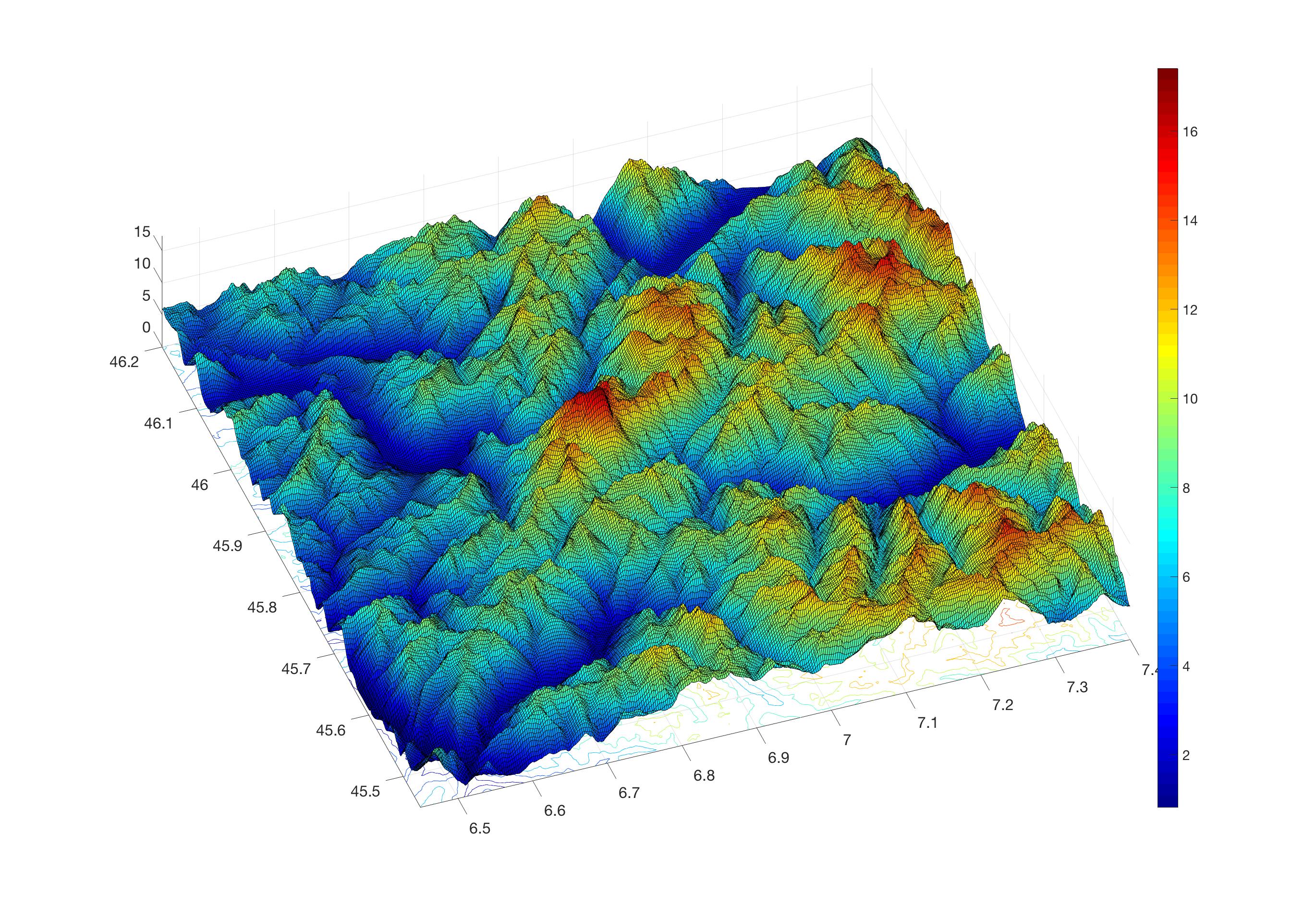} }
\captionsetup{width=0.95\linewidth}
\caption[Short Caption]{Recovery of the elevation map of Mont Blanc from  over 1 million  clean sample points, as recovered by Manopt-Phases. Despite the challenging topology, we are able to almost perfectly recover (unwrap) the shape of the mountain, in just under 18 seconds.   RMSE for the mod 1 recovery is $0.0457 $, while the RMSE for the final $f$ estimates was $0.69$.
}
\label{fig:MontBlanc_clean}
\end{figure}

\begin{figure}[!ht]
\centering
\hspace{-2mm}
\subcaptionbox[]{   Noisy $f$ mod 1, $ \sigma=0.10$.
}[ 0.43\textwidth ]
{\includegraphics[width=0.43\textwidth] {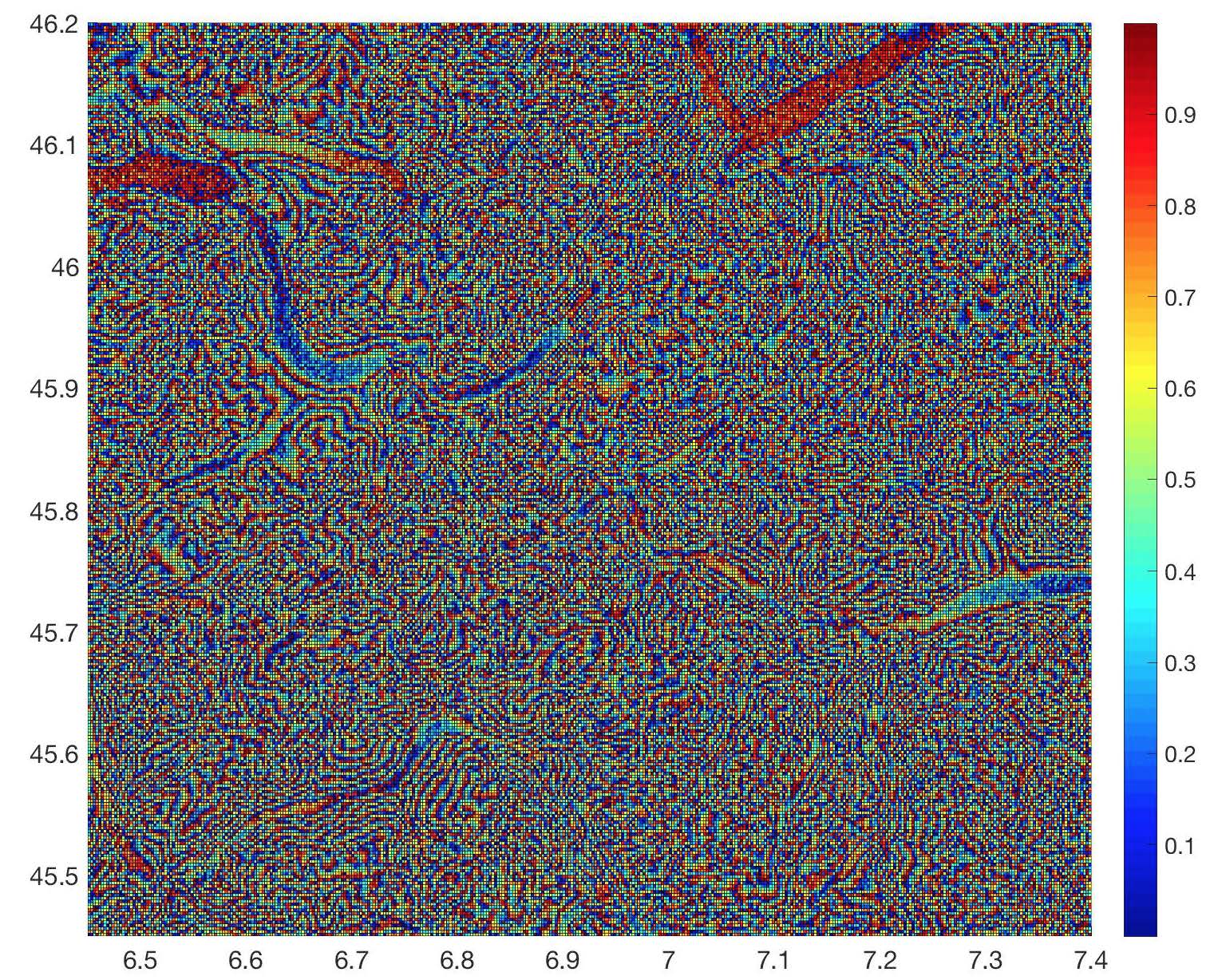} }
\hspace{-0.08\textwidth} 
\subcaptionbox[]{ Recovery   $\lambda = 0.01$.
}[ 0.6\textwidth ]
{\includegraphics[width=0.56\textwidth] {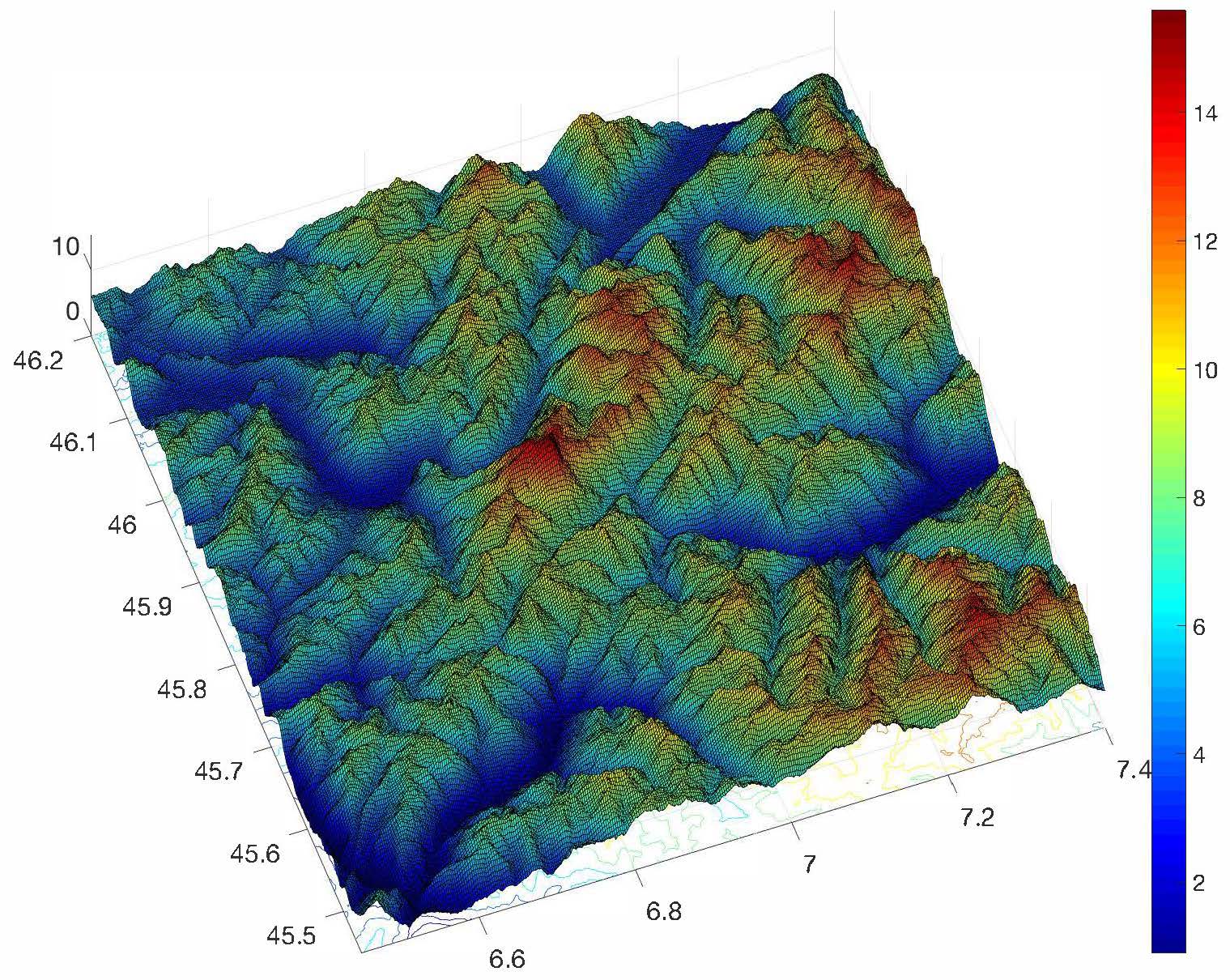} }
\captionsetup{width=0.95\linewidth}
\caption[Short Caption]{Recovery of the elevation map of Mont Blanc from  over 1 million sample points under the Gaussian noise model with $\sigma = 10 \%$, as recovered by Manopt-Phases. Despite the noise level and the topology, our approach  recovers the details of the topographical relief, in under 20 seconds.   RMSE for the mod 1 recovery is $0.26 $, while the RMSE for the final $f$ estimates equals $1.06$.
}
\label{fig:MontBlanc_noisy}
\end{figure}



\section{Related work}  \label{sec:related_work}
This section provides a discussion of the recent closely related work of \cite{bhandari17}, 
details the connection with the problem of  group synchronization with anchors, and finally surveys some 
of the approaches in the phase unwrapping literature. 

\subsection{Function recovery from modulo samples} 
The recent work of  \cite{bhandari17} 
considers the problem of recovering a bandlimited function $g$ (the spectrum is limited to $[-\pi,\pi]$) from its centered modulo samples  
defined using the non-linear map $M_{\lambda} : \matR \rightarrow [-\lambda,\lambda]$
\begin{equation} \label{eq:centered_mod_fn}
 M_{\lambda}(g(t)) := 2\lambda\left(\left[\frac{g(t)}{2\lambda} + \frac{1}{2} \right] - \frac{1}{2} \right), 
\end{equation}
where $[x]$ denotes the fractional part of $x$, and $t \in \matR$. By considering a regular sampling 
on $\matR$ with step length $T$, it is shown \cite[Theorem 1]{bhandari17} that if $T \leq 1/(2\pi e)$, then their 
Algorithm recovers the samples of $g$ exactly (assuming no noise) from $N^{th}$ order finite differences of the modulo samples (with 
$N$ suitably large). Consequently, as $g$ is bandlimited, one can recover the function $g$ itself via a low pass filter. The analysis for unwrapping the  samples mainly relies on the additional assumption that $g \in C^{\infty}(\matR) \cap L^{\infty}(\matR)$. This  assumption  is required  in order to be able  to control the $N^{th}$ order finite differences of the modulo samples for appropriately large $N$.
Moreover, Theorem $1$ only holds in the case of \emph{noiseless} modulo samples, while the conditions under which the method stably unwraps the samples  are unclear. As shown in the experiments, our finding is that their  algorithm is  stable only for very low levels of noise. Finally, note that 
for $\lambda = 0.5$, \eqref{eq:centered_mod_fn} corresponds to a centered modulo $1$ function, with range in $[-0.5,0.5)$. 
It is easy to verify that 
\begin{equation} \label{eq:rel_cenmod_noncenmod}
M_{0.5}(g(t)) 
= \left(\left[g(t) + \frac{1}{2} \right] - \frac{1}{2} \right) 
= \left\{
\begin{array}{rl}
g(t) \bmod 1 \quad ; & \text{if} \ g(t) \bmod 1 < 1/2  \\
g(t) \bmod 1 - 1 \quad ; & \text{if} \ g(t) \bmod 1 \geq 1/2
\end{array} \right..
\end{equation}
Therefore, \eqref{eq:rel_cenmod_noncenmod}  implies  that there is a one-to-one correspondence between $M_{0.5}(g(t))$ 
and $g(t) \bmod 1$, the latter of which we adopt throughout our paper.
  

\subsection{Relation to group  synchronization} 

The problem we consider is closely related to the well-known  \emph{group synchronization} problem introduced  by  \cite{sync}. 
The synchronization of clocks in a distributed network from noisy pairwise measurements of their time offsets is one such example of synchronization, where the underlying group is the real line  $\mathbb{R}$.
The eigenvector and semidefinite programming methods for solving an instance of the synchronization problem were initially  introduced by  \cite{sync} in the context of angular synchronization (over the group SO(2) of planar rotations)  where one is asked to estimate $n$ unknown angles $\theta_1,\ldots,\theta_n \in [0,2\pi)$ given $m$ noisy measurements $ \delta_{ij} $ of their offsets $\theta_i - \theta_j \mod 2\pi$.  The difficulty of the problem is amplified on one hand by the amount of noise in the pairwise  offset measurements, and on the other hand by the fact that $m \ll {n \choose 2}$, i.e., only a very small subset of all possible pairwise offsets are measured.
Given the noisy measurements, \cite{sync}  introduced an  approach that first embeds the noisy angle differences  
on the unit complex circle via $Z_{ij} = \exp(\iota 2\pi  \delta_{ij})$, leading to the Hermitian $n \times n$ matrix $\mathbf{Z}$. 
Thereafter, one recovers the angle estimates by maximizing a quadratic  form  $\vecx^{*} \mathbf{Z} \vecx$, subject to $\vecx$ 
lying on a sphere. This is similar to the formulation in \eqref{eq:qcqp_denoise_complex}, but without the linear term.  
However, there is a \emph{fundamental difference} between the two problems. The matrix in the quadratic term in \eqref{eq:qcqp_denoise_complex} 
is formed using the Laplacian of the smoothness regularization graph, and thus is independent of the data (given noisy mod 1 samples). 
On the other hand, as described above, the matrix $\mathbf{Z}$ in synchronization is formed using the given noisy pairwise angle offsets embedded on the unit circle, 
and hence depends on the data. Additionally, the angular representations of the modulo samples bear a smoothness property arising due to the function $f$, while   the angle offsets in synchronization  are typically arbitrary. 

We point out that a similar objective function encompassing both a quadratic and a linear term 
arises in the setting of synchronization with \textit{anchors}, which are nodes whose corresponding group element is known a priori. 
The goal is to combine information given by the anchors with the matrix of noisy pairwise group ratios, in order to estimate the 
corresponding group elements of the non-anchor nodes.
Such a variation of the synchronization problem has been explored in prior work by a subset of the authors (see \cite{asap3d, sync_congress}), who  introduced several methods for incorporating anchor information in the  synchronization problem over $\mathbb{Z}_2$, 
in the context of the molecule problem from structural biology. The first approach relies on casting the problem as a quadratically 
constrained quadratic program (QCQP) similar to the approach we pursue for denoising the modulo 1 samples, while a second one relies 
on an SDP formulation. 

\subsection{Phase unwrapping}  
We now discuss methods from the phase unwrapping literature to put our methodology in context. 
Phase unwrapping is a field in its own with a vast body of work, therefore  this section is by no means a comprehensive 
overview of the literature.  

Phase unwrapping is a classical and challenging problem in signal processing with a long line of work wherein 
one recovers the original phase values (in radians) from their wrapped (modulo $2\pi$) versions. Formally, 
for an unknown sequence of phase values $(\phi_i)_{i=1}^n$ (in radians), one is given the wrapped 
values $\psi_i = w(\phi_i)$,  where $w:\matR \rightarrow [-\pi,\pi)$ is the wrap function that outputs 
centered modulo $2\pi$ values. We consider $\phi_i = \phi(\vecx_i)$ for some unknown function $\phi : \matR^d \rightarrow \matR$, 
with $\vecx_i \in \matR^d$. The goal is to recover the original $\phi_i$'s from the $\psi_i$'s.   
The problem is, of course, ill posed in general so one typically needs to make some additional smoothness assumptions on $\phi$, 
although some methods are developed with an eye towards dealing with \emph{abrupt} phase changes.  
For $d = 1$, \cite{Itoh82} showed in $1982$ that if the condition 
\begin{equation} \label{eq:itoh_cond}
\abs{\phi_i-\phi_{i-1}} \leq \pi  
\end{equation}
holds for each $i$, then this implies
$\phi_i - \phi_{i-1} = w(\psi_i - \psi_{i-1})$ for all $i = 1,\dots,n$. Thereafter, 
one can easily recover the $\phi_i$'s in a sequential manner by integration, up to a global 
shift by an integer multiple of $2\pi$. We remark that this approach is essentially the Quotient Tracker Algorithm (QTA), discussed 
in Section \ref{subsec:unwrap_stage_and_algo}. A similar argument extends to the 
case $d = 2$ (cf., \cite[Lemma 1.2]{Ying06}). Of course, \eqref{eq:itoh_cond} will typically not hold in the presence of noise. 
Therefore, several robust alternatives have been proposed to  date, and we briefly review some of them for the 
$d=2$ case (which is also one of the most studied cases due to the many applications). The methods can be broadly classified 
into the following categories. We refer the reader to \cite[Section 2.1]{Ying06} for a more detailed overview of these methods. 
\begin{itemize}
\item \emph{Least squares methods.} This  class of methods seeks to find the phase 
function $\phi : \matR^2 \rightarrow \matR$ that minimizes a cost function of the form
\begin{equation} \label{eq:lsm_phase_unwrap}
 \norm{\partial_x \phi - w(\partial_x \psi)}_p + \norm{\partial_y \phi - w(\partial_y \psi)}_p.
\end{equation}
Here, $\norm{\cdot}_p$ is the $L_p$ norm and $\partial_x, \partial_y$ denote partial derivatives w.r.t $x,y$ respectively. 
Specifically, one solves a discrete version of \eqref{eq:lsm_phase_unwrap} with discrete partial derivatives.
For $p=2$, the solution actually has an analytical form
given by the discrete form of Poisson's equation with Neumann boundary conditions. This is known to be efficiently 
implementable,  using for instance methods based on fast Fourier transform (FFT) (\cite{Pritt94}), discrete cosine transform (DCT) (\cite{Kerr96}), 
multigrid techniques (\cite{pritt1996}) etc. More broadly, the least-squares formulation of phase unwrapping (both the weighted and unweighted versions) 
lead to discretized partial differentiable equations, which are solvable efficiently by standard methods from the numerical 
PDE literature (see for eg., \cite{Takajo88,Takajo88_non,Hunt79,Ghiglia89,Ghiglia94,buzbee1970direct}). 
We remark that smoothness constraints have been incorporated by introducing a Tikhonov regularization term, which increases 
robustness to noise and missing data (\cite{Marroquin95,NumericalReceipesArt}). 
\cite{huang2012path} proposed a TV (Total variation) minimization based method wherein the unwrapped phase gradients are estimated from the 
wrapped counterparts via a TV norm regularized least-squares method. \cite{Rivera04half} proposed solving a ``half quadratic'' 
regularized-least squares problem, and proposed convex as well as non-convex formulations of the problem. 
Finally, we note that \cite{Ghiglia96_lp} proposed an algorithm to minimize a more general $\ell_p$ norm based objective function for $2$D phase unwrapping; 
they showed that this framework is equivalent to weighted least squares phase unwrapping where the weights are data dependent. 

\item \emph{Branch cut methods.} A common problem that arises in 2D phase unwrapping is the so-called \textit{path dependent problem},  wherein 
the integration result depends on the path chosen between two points. In particular, it is known (\cite{Kreysig66}) that if the unwrapping is 
path dependent then it implies that there exists a closed path $C$ such that $\sum_{C} [w(\triangle_x \psi_{m,n}) + w(\triangle_y \psi_{m,n})]$ 
is not zero; here $\triangle_x, \triangle_y$ denote discrete partial derivatives. This non-zero value is referred to as the \emph{residue}, and arises 
for instance due to noise or (near) discontinuities in the original phase signal (see for eg., \cite[Section 1.1.2]{Ying06}).   
Branch cut methods rely on the assumption that paths from positive residues to negative residues (referred to as branch cuts) 
cross the regions corresponding to the discontinuities. To this end, many algorithms have been proposed to find the 
optimal branch cuts in the phase image (\cite{Huntley89,Bone_91,Prati90,Chavez02}).

\item \emph{Network flow methods.} These methods are similar to branch-cut methods and rely on the same assumptions. The difference is 
that they explicitly try to quantify the discontinuities at each sample and then find the optimal unwrapped phases that 
minimize the overall discontinuities. Broadly speaking, the idea is to construct a network with the nodes defined by the residues, 
and with a directed arc from the positive residues to the negative ones. With a suitable cost per unit flow assigned to each arc, the 
algorithms then seek to find a flow that minimizes the cumulative cost over all the arcs. (\cite{Towers91,ching92,takeda96}). 
\end{itemize}
%
%
%
 

More recently, \cite{dias07} proposed a method that involves the minimization of an energy functional 
based on a generalization of the classical $\ell_p$ norm, using graph cut algorithms. 
\cite{Dias08,Vala09} proposed more noise tolerant versions 
of PUMA, which together with PUMA constitute the current state of art in phase unwrapping. \cite{dias07} proposed a two-stage algorithm 
similar in spirit to ours. Specifically, the first stage involves obtaining the denoised 
modulo $2\pi$ samples via a local adaptive denoising scheme; the second stage then takes as input 
these denoised samples and unwraps them using PUMA. Assuming the original phase to be piecewise smooth  
that is well approximated by a polynomial in the neighborhood of the estimation point, their denoising scheme is based on 
local polynomial approximations in sliding windows of varying sizes. \cite{Vala09} adopted a Bayesian framework assuming 
a first order Markov random field prior. The proposed algorithm first unwraps the phase image using PUMA, and then in the second 
stage, it denoises the unwrapped phase to form the final estimate.

\cite{Gon14} considered placing a sparsity prior on the phase signal 
in the wavelet domain, to model discontinuities. They proposed an unwrapping algorithm based on solving a $\ell_1$ minimization problem. 
\cite{Kamilov15} proposed a method based on the minimization of an energy functional that includes
a weighted $\ell_1$ norm based data fidelity term, along with a regularizer term based on the 
higher order total variation (TV) norm.
 

Extensions to higher dimensions have also attracted significant interest from the community, in light of numerous applications. In SAR interferometry, the data typically consists of a sequence of time-dependent consecutive interferograms.  Existing approaches utilize the time dimension to perform 1-D phase unwrapping along the time axis component (as opposed to performing standard 2-D phrase unwrapping for each time stamped interferogram), an approach which increases robustness of the recovery process for scenarios of steep gradients  in  the phase maps (\cite{Huntley_93_temporal}). This approach has been subsequently extended to the case of 3-D phrase unwrapping in a 3-D  space-time domain  by \cite{Costantini_3D_2002}. They formulate the phase unwrapping problem as a linear integer minimization, and report increased robustness when compared to standard 2-D phase unwrapping on simulated and real SAR images.
More recently, \cite{Osmanoglu_3D_2014} introduced a novel 3-D phase unwrapping approach for generating digital elevation models, 
by combining multiple SAR acquisitions using an extended Kalman filter.
%
%

Very recently, \cite{Dardikman_4D_2018}  introduced a new 4-D phase unwrapping approach for time-lapse quantitative phase microscopy, which allows for the reconstruction of optically thick objects that are optically thin in a certain temporal point and angular view. The authors leverage both the angular dimension and the temporal dimension, in addition to the usual $x-y$ dimensions, to enhance the reconstruction process. They report improved numerical results over other state-of-the-art methods.  
Finally, we remark that \cite{jenkinson03}  introduced a fast algorithm for the $N$-dimensional phase unwrapping problem, built around a cost function that leads to an integer programming problem, solved via a \textit{greedy} optimization approach, and tested on 3D MRI medical data for venogram studies.


\section{Concluding remarks  and future work}  \label{sec:conclusion}
We considered the problem of denoising noisy modulo 1 samples of a smooth  function, arising in the context of the popular phase unwrapping problem.  We proposed a method centered around solving a trust-region subproblem with a sphere constraint, and provided extensive empirical evidence as well as theoretical analysis, that altogether highlight its robustness to noise. We mostly focused on the one-dimensional case, but also provided extensions to the two-dimensional setting, whose applications include the phase unwrapping problem.  In addition, we have also explored  two formulations that leverage tools from Riemannian optimization on manifolds, including a relaxation based on semidefinite programing, solvable fast via a Burer-Monteiro approach.
 
%

There are several possible research directions worth exploring in future work, that we detail below. 
\begin{itemize}
\item An interesting direction would be to better understand the unwrapping stage of our approach, either via the 
simple OLS (potentially with an additional smoothness regularization term), or by proposing a new method based on group synchronization (\cite{sync}). 
For the latter direction, finding the right global shift of each sample point (i.e., the quotient $q_i$) can be cast a synchronization 
problem over the real line $ \mathbb{R} $, given pairwise (potentially noisy) offset measurements $q_i - q_j$ between nearby points. 
The non-compactness of $\mathbb{R}$ renders the eigenvector or semidefinite programming relaxations no longer applicable directly, 
thus motivating  an approach that first compactifies the real line by wrapping it over the unit circle, as explored recently 
by \cite{syncRank} in the context of ranking from noise incomplete pairwise information. 

\item Another potentially interesting direction to explore would be a patch-based divide-and-conquer method that first decomposes the graph 
into many (either overlapping or non-overlapping) subgraphs (potentially with an eye towards the \textit{jumps} in the mod 1 measurements), 
solves the problem locally and then integrates the local solutions into a globally consistent framework, in the spirit of existing methods 
from the group synchronization literature (see for eg., \cite{asap2d, asap3d}). 

\item As discussed briefly at the start of Section \ref{sec:trust_reg_relax},  an interesting approach for solving \eqref{eq:orig_denoise_hard_1} would be to first discretize the angular domain, and then solve \eqref{eq:orig_denoise_hard_1} approximately via dynamic programming. In general, the graph $G$ has tree-width $\Omega(k^d)$, rendering the computational cost of naive dynamic programming to be exponential in $k^d$.  

\item  Another potential direction would be to explore a variation of \eqref{eq:orig_denoise_hard}, with the last term replaced by a TV norm
\begin{eqnarray}
\min_{ g_1,\ldots,g_n \in \mathbb{C}; |g_i|=1 } \sum_{i=1}^{n} |g_i - z_i|^2 + \lambda  \sum_{\set{i,j} \in E} |g_i - g_{j}|. 
\label{eq:orig_denoise_hard_TV}
\end{eqnarray}
At first sight, the TV norm may not seem beneficial in light of the smoothness assumptions in our setting; however, in certain imaging applications (\cite{fang06}), it is desirable to preserve any sharp discontinuities that may arise in the boundaries between classes within the image.
\cite{Rudin_Osher_Fatemi} introduced \textbf{ROF}, a total variation (TV)-based approach as a regularizing functional for image denoising. The intuition is that the TV term in the minimization discourages the solution from having an oscilatory behavior, while allowing it to have discontinuities.
%
Inspired by the above seminal variational framework, and building on their prior work, such as  \cite{Ronny_BLSW14},   \cite{Ronny_BW16}  introduced  variational models for various nonlinear data spaces. In particular, they developed proximal point algorithms for the solution of the second order TV-based problems for denoising, inpainting, and a combination of both, for combined  \textit{cyclic} (i.e., modulo measurements) and linear space valued images, along with a convergence analysis and a number of applications to real-world problems.

\item Another  natural direction to explore is the development and theoretical analysis  of  a ``single-stage method''  that directly outputs denoised estimates of the original real-valued samples. 
%
\item Finally, an interesting question would be to consider the regression problem, where one attempts to learn a smooth, robust estimate to the underlying 
mod 1 function itself (not just the samples)  and/or the original $f$, from noisy  mod 1 samples of $f$.


\end{itemize}

%
%
%
%
%
%
%
%


\section{Acknowledgements}

We thank Joel Tropp for bringing this problem to our attention. We are grateful to Afonso Bandeira and Joel Tropp for  insightful discussions on the topic and pointing out the relevant literature;  Nicolas Boumal for helping us in using the Manopt optimization toolbox and providing useful comments; Yuji Nakatsukasa  for many helpful discussions on the trust-region subproblem, and for kindly providing us the code for the algorithm in his paper \cite{Naka17}; Ayush Bhandari for sharing the code from his paper \cite{bhandari17}. This work was done while HT was affiliated to the Alan Turing Institute, London, and the School of Mathematics, University of Edinburgh, UK. Both MC and HT were supported by EPSRC grant EP/N510129/1 at the Alan Turing Institute.  

\bibliographystyle{apalike}

\bibliography{funcmod}

\newpage
\onecolumn

\appendix 

\begin{centering}
\large \bf Supplementary Material: Provably robust estimation of modulo $1$ samples of a smooth function with applications to phase unwrapping
\end{centering}

%
%

 \renewcommand\appendix{\par
        \renewcommand\thesection{A}
       \renewcommand\thesubsection{A\arabic{subsection}}
        \renewcommand\thetable{A\arabic{table} } }        

\appendix

\section{Rewriting the QCQP in real domain} \label{sec:qcqp_compl_to_real}
We first show that $\lambda \vecg^{*} L \vecg = \vecgbar^T \Hbar \vecgbar$. Indeed, it holds true that 
\begin{eqnarray}
\vecgbar^T \Hbar \vecgbar
&=& (\real(\vecg)^T \imag(\vecg)^T) 
\begin{pmatrix}
  \lambda L \quad & 0 \\ 0 \quad & \lambda L  
 \end{pmatrix}  
\begin{pmatrix}
  \real(\vecg) \\ \imag(\vecg) 
 \end{pmatrix} \\
&=& (\real(\vecg)^T \imag(\vecg)^T) 
\begin{pmatrix}
  \lambda L \real(\vecg) \\ \lambda L \imag(\vecg) 
 \end{pmatrix} \\
&=& \real(\vecg)^T (\lambda L) \real(\vecg) + \imag(\vecg)^T (\lambda L) \imag(\vecg) \\
&=& \lambda (\real(\vecg) - \iota \imag(\vecg))^T L (\real(\vecg) + \iota \imag(\vecg)) \\
&=& \lambda \vecg^{*} L \vecg.
\end{eqnarray}
Next, we can verify that 
\begin{eqnarray}
\real(\vecg^{*}\vecz) 
&=& \real((\real(\vecg) - \iota \imag(\vecg))^T (\real(\vecz) + \iota \imag(\vecz))) \\ 
&=& \real(\vecg)^T \real(\vecz) + \imag(\vecg)^T \imag(\vecz) \\
&=& \vecgbar^T \veczbar.
\end{eqnarray}
Lastly, it is trivially seen that $\norm{\vecgbar}_2^2 = \norm{\vecg}_2^2 = n$. 
Hence \eqref{eq:qcqp_denoise_complex} and \eqref{eq:qcqp_denoise_real} are equivalent. 


 \renewcommand\appendix{\par
        \renewcommand\thesection{B}
       \renewcommand\thesubsection{B\arabic{subsection}}
        \renewcommand\thetable{B\arabic{table} } }        

\appendix

\section{Trust-region sub-problem with $\ell_2$ ball/sphere constraint} \label{sec:trust_region_discuss}
Consider the following two optimization problems
\begin{equation}
\begin{split}
\begin{rcases}
\min_{\vecx} \quad \vecb^T \vecx + \frac{1}{2} \vecx^T \matP \vecx \\
\text{s.t} \norm{\vecx}_2 \leq r
\end{rcases} \text{(P1)}
\end{split}
\qquad \vline \qquad
\begin{split}
\begin{rcases}
\min_{\vecx} \quad \vecb^T \vecx + \frac{1}{2} \vecx^T \matP \vecx \\
\text{s.t} \norm{\vecx}_2 = r
\end{rcases} \text{(P2)}
\end{split}
\label{eq:trust_region_opt} 
\end{equation}
with $\matP \in \matR^{n \times n}$ being a symmetric matrix. 
(P1) is known as the trust-region sub-problem in the optimization literature and has been 
studied extensively with a rich body of work. (P2) is closely related to (P1), and has a non-convex equality constraint. 
There exist algorithms that efficiently find the global solution of (P1) and (P2), to arbitrary accuracy. 
In this section, we provide a discussion on the characterization of the solution of these two problems. 

To begin with, it is useful to note for (P1) that
\begin{itemize}
\item If the solution lies in the interior of the feasible domain, then it implies 
$\matP \succeq 0$. This follows from the second order necessary condition for a local minimizer. 


\item In the other direction, if $\matP \not\succeq 0$ then the solution will always lie on the boundary.
\end{itemize}
Surprisingly, we can characterize the solution of (P1), as shown in the following lemma. 
\begin{lemma}[\cite{Sorensen82}] \label{lemma:trs_ball}
$\vecx^{*}$ is a solution to (P1) iff $\norm{\vecx^{*}}_2 \leq r$ and $\exists \mu^{*} \geq 0$ such that 
(a) $\mu^{*}(\norm{\vecx^{*}}_2 - r) = 0$, (b) $(\matP + \mu^{*}\matI)\vecx^{*} = -\vecb$ 
and (c) $\matP + \mu^{*}\matI \succeq 0$. Moreover, if $\matP + \mu^{*}\matI \succ 0$, then the solution is unique.
\end{lemma}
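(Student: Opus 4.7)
The plan is to prove the three parts of the lemma separately, namely sufficiency, necessity, and uniqueness. The main workhorse will be an algebraic identity obtained by expanding the objective around a candidate $\vecx^{*}$. Writing $\vecy = \vecx - \vecx^{*}$ and $f(\vecx) = \vecb^T\vecx + \frac12 \vecx^T\matP\vecx$, a direct expansion combined with condition (b), i.e.\ $\vecb = -(\matP + \mu^{*}\matI)\vecx^{*}$, yields after a few lines of calculation the key identity
\begin{equation}
f(\vecx) - f(\vecx^{*}) \;=\; \tfrac{1}{2} \vecy^T (\matP + \mu^{*}\matI) \vecy \;-\; \tfrac{\mu^{*}}{2} \bigl( \norm{\vecx}_2^2 - \norm{\vecx^{*}}_2^2 \bigr).
\end{equation}

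For sufficiency (the ``if'' direction), I would take any feasible $\vecx$ (so $\norm{\vecx}_2 \leq r$) and argue that the right-hand side above is non-negative. The first term is $\geq 0$ by the semidefiniteness condition (c). For the second term, complementary slackness (a) forces either $\mu^{*}=0$, in which case the term vanishes, or else $\norm{\vecx^{*}}_2 = r$, in which case $\norm{\vecx}_2^2 - \norm{\vecx^{*}}_2^2 \leq 0$ and the factor $-\mu^{*}/2 \leq 0$ makes the product $\geq 0$. Uniqueness when $\matP + \mu^{*}\matI \succ 0$ is then immediate: the quadratic term is strictly positive whenever $\vecy \neq \veco$.

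For necessity, I would set up the Lagrangian $L(\vecx,\mu) = f(\vecx) + \frac{\mu}{2}(\norm{\vecx}_2^2 - r^2)$, for which the KKT conditions at a minimizer $\vecx^{*}$ directly deliver (a), (b), and $\mu^{*} \geq 0$. The delicate part is establishing the \emph{global} semidefiniteness condition (c), which is stronger than the usual second-order necessary condition over the tangent cone. The case $\mu^{*}=0$ splits into the interior case (standard unconstrained second-order necessary condition gives $\matP \succeq 0$) and the boundary case; in the latter, $\vecx^{*}$ remains an unconstrained minimizer because the Lagrange multiplier vanishes, so again $\matP \succeq 0$. When $\mu^{*} > 0$ and $\norm{\vecx^{*}}_2 = r$, I would argue by contradiction: suppose $\matP + \mu^{*}\matI$ has a negative eigenpair $(\lambda,\vecv)$ with $\norm{\vecv}_2 = 1$. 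Using the identity above with a perturbation of the form $\vecx = (1 - s)\vecx^{*} + t\vecv$ (and, when $(\vecx^{*})^T\vecv \neq 0$, more simply $\vecx = \vecx^{*} + t\vecv$), one chooses $s, t$ small enough to keep $\vecx$ feasible while making $\vecy^T(\matP + \mu^{*}\matI)\vecy$ strictly negative and the boundary term non-positive, contradicting optimality.

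The main obstacle, as often with the trust-region subproblem, is precisely establishing the global PSD condition (c) in the boundary case $\mu^{*}>0$. The difficulty is that the constraint is a single sphere, so one needs to leverage the rank-one structure of the Hessian of the constraint in combination with the identity above, rather than just the standard tangent-cone second-order condition. Once (c) is in hand, everything else reduces to bookkeeping around the identity. I would assume the Sorensen 1982 references cited in the lemma statement for the full technical details of this step, and present only the identity-based argument for sufficiency and uniqueness in the main text.
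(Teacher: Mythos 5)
The paper does not prove this lemma; it is stated verbatim as a citation to Sorensen (1982) (see also Gay, and Mor\'e--Sorensen), so there is no in-paper argument to compare against. Your expansion identity is correct (with $\vecb = -(\matP+\mu^{*}\matI)\vecx^{*}$ one gets $f(\vecx)-f(\vecx^{*}) = \tfrac12\vecy^{T}(\matP+\mu^{*}\matI)\vecy - \tfrac{\mu^{*}}{2}(\norm{\vecx}_2^2-\norm{\vecx^{*}}_2^2)$), and your sufficiency and uniqueness arguments from it are complete and are exactly the classical ones.

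One point in your necessity sketch needs repair. For any \emph{feasible} perturbation the term $-\tfrac{\mu^{*}}{2}(\norm{\vecx}_2^2-\norm{\vecx^{*}}_2^2)$ is automatically \emph{non-negative} (since $\norm{\vecx^{*}}_2=r$ when $\mu^{*}>0$), not something you can make non-positive; and if you take $\vecx=\vecx^{*}+t\vecv$ with $\vecx$ strictly interior, that term is positive of order $O(t)$ while the negative quadratic term is only $O(t^2)$, so no contradiction results for small $t$. The standard fix is to keep the perturbed point \emph{on} the sphere so the term vanishes exactly: when $\vecv^{T}\vecx^{*}\neq 0$ take $t=-2\vecv^{T}\vecx^{*}/\norm{\vecv}_2^2$, which gives $\norm{\vecx^{*}+t\vecv}_2=r$ and $f(\vecx)-f(\vecx^{*})=\tfrac{t^2}{2}\vecv^{T}(\matP+\mu^{*}\matI)\vecv<0$, a contradiction; the case $\vecv^{T}\vecx^{*}=0$ then follows by perturbing $\vecv$ slightly and passing to the limit (or by a two-parameter point on the sphere as you suggest, but again constrained to $\norm{\vecx}_2=r$, not merely $\leq r$). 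With that adjustment your outline matches Sorensen's proof; deferring the remaining bookkeeping to the cited reference is consistent with how the paper itself treats the lemma.
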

Note that if the solution lies in the interior, and if $\matP$ is p.s.d and singular, then 
there will also be a pair of solutions on the boundary of the ball. This is easily verified. 
The solution to (P2) is characterized by the following lemma. 
\begin{lemma}[\cite{Hager01,Sorensen82}] \label{lemma:trs_sphere}
$\vecx^{*}$ is a solution to (P2) iff $\norm{\vecx^{*}}_2 = r$ and $\exists \mu^{*}$ such that
(a) $\matP + \mu^{*}\matI \succeq 0$ and (b) $(\matP + \mu^{*}\matI) \vecx^{*} = -\vecb$. 
Moreover, if $\matP + \mu^{*}\matI \succ 0$, then the solution is unique.
\end{lemma}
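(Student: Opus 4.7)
The plan is to prove both directions by first establishing a single key identity: for any $\vecy$ with $\norm{\vecy}_2 = r = \norm{\vecx^{*}}_2$ and any pair $(\vecx^{*},\mu^{*})$ satisfying $(\matP + \mu^{*}\matI)\vecx^{*} = -\vecb$, it holds that
\begin{equation}
f(\vecy) - f(\vecx^{*}) = \tfrac{1}{2}(\vecy - \vecx^{*})^T (\matP + \mu^{*}\matI)(\vecy - \vecx^{*}),
\label{eq:key_identity}
\end{equation}
where $f(\vecx) := \vecb^T\vecx + \tfrac{1}{2}\vecx^T\matP\vecx$. This identity will be derived by substituting $\vecb = -(\matP+\mu^{*}\matI)\vecx^{*}$ into $f(\vecy)-f(\vecx^{*})$, expanding $\vecy^T\matP\vecy$ around $\vecx^{*}$, and then using the sphere constraint $\norm{\vecy}_2^2 = \norm{\vecx^{*}}_2^2$ to rewrite $(\vecx^{*})^T(\vecy-\vecx^{*}) = -\tfrac{1}{2}\norm{\vecy-\vecx^{*}}_2^2$, so that the $\mu^{*}$-dependent linear term combines with the quadratic to yield \eqref{eq:key_identity}. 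Once this identity is in hand, both directions of the lemma fall out cleanly.

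For the sufficiency direction, assume $\norm{\vecx^{*}}_2 = r$, $(\matP+\mu^{*}\matI)\vecx^{*} = -\vecb$, and $\matP + \mu^{*}\matI \succeq 0$. Then \eqref{eq:key_identity} immediately gives $f(\vecy) \geq f(\vecx^{*})$ for every feasible $\vecy$, so $\vecx^{*}$ solves (P2). If moreover $\matP + \mu^{*}\matI \succ 0$, the right-hand side of \eqref{eq:key_identity} is strictly positive whenever $\vecy \neq \vecx^{*}$, establishing uniqueness.

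For the necessity direction, suppose $\vecx^{*}$ is a global minimizer of (P2); then $\norm{\vecx^{*}}_2 = r$ by feasibility, and since $\vecx^{*} \neq 0$, the gradient $\vecx^{*}$ of the constraint function is nonzero so the LICQ holds. Standard first-order necessary conditions then yield a Lagrange multiplier $\mu^{*} \in \matR$ with $\matP\vecx^{*} + \vecb + \mu^{*}\vecx^{*} = 0$, which is condition (b). The nontrivial step is promoting the second-order information to the global statement $\matP + \mu^{*}\matI \succeq 0$. Here I would use \eqref{eq:key_identity} together with the fact that, for any direction $\vecw \in \matR^n$ with $(\vecx^{*})^T\vecw \neq 0$, the point $\vecy := \vecx^{*} + s\vecw$ with $s = -2(\vecx^{*})^T\vecw/\norm{\vecw}_2^2$ lies on the sphere of radius $r$, and so
\begin{equation}
0 \leq f(\vecy) - f(\vecx^{*}) = \tfrac{s^2}{2}\,\vecw^T(\matP + \mu^{*}\matI)\vecw,
\end{equation}
which forces $\vecw^T(\matP + \mu^{*}\matI)\vecw \geq 0$. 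A density/continuity argument extends this to all $\vecw$ (including those orthogonal to $\vecx^{*}$), giving (a).

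The main obstacle I anticipate is the last step: naively, constrained second-order optimality only ensures $\matP + \mu^{*}\matI$ is PSD on the tangent space $\set{\vecw : (\vecx^{*})^T\vecw = 0}$, which is strictly weaker than the global PSD conclusion the lemma asserts. The way around this, as sketched above, is to leverage the \emph{global} optimality (not just local) of $\vecx^{*}$ via the finite-chord construction $\vecy = \vecx^{*} + s\vecw$ on the sphere, which is special to the sphere constraint and not available for generic equality-constrained problems. This chord trick is what makes trust-region subproblems admit such a strong global characterization, and it is the conceptual heart of the proof.
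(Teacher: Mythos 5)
Your proof is correct, and it is essentially the classical argument from the cited references (\cite{Sorensen82,Hager01}), which the paper itself does not reproduce: the identity $f(\vecy)-f(\vecx^{*})=\tfrac{1}{2}(\vecy-\vecx^{*})^T(\matP+\mu^{*}\matI)(\vecy-\vecx^{*})$ for feasible $\vecy$ is exactly the standard lemma, and the chord construction $\vecy=\vecx^{*}+s\vecw$ with $s=-2(\vecx^{*})^T\vecw/\norm{\vecw}_2^2$ followed by a density argument is precisely how global optimality is used to upgrade tangent-space positive semidefiniteness of $\matP+\mu^{*}\matI$ to the full-space statement. The only caveat worth recording is that the necessity direction implicitly assumes $r>0$ (so that $\vecx^{*}\neq 0$, the constraint gradient is nonzero, and the hyperplane complement is dense), which is the standing assumption in the trust-region literature and holds in the paper's application where $r=\sqrt{n}$.
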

The solution to (P1) and  (P2) is closely linked to solving the nonlinear equation $\norm{(\matP + \mu\matI)^{-1} \vecb}_2 = r$. Let
\begin{equation}
\vecx(\mu) = -(\matP + \mu\matI)^{-1} \vecb = -\sum_{j=1}^{n} \frac{\dotprod{\vecb}{\vecq_j}}{\mu + \mu_j} \vecq_j, 
\end{equation}
where $\mu_1 \leq \mu_2 \leq \dots \leq \mu_n$ and $\set{\vecq_j}$ are the eigenvalues and eigenvectors of 
$\matP$ respectively. Let us define $\phi(\mu)$ as  
\begin{equation}
\phi(\mu) := \norm{\vecx(\mu)}_2^2 = \sum_{j=1}^{n} \frac{\dotprod{\vecb}{\vecq_j}^2}{(\mu + \mu_j)^2}.
\end{equation}
Denoting $\calS = \set{\vecq \in \matR^n : \matP\vecq = \mu_1 \vecq}$, there are two cases to consider.
\begin{enumerate}
%
%
\item \underline{\textit{$\dotprod{\vecb}{\vecq} \neq 0$ for some $\vecq \in \calS$}}

This is the easy case. $\phi(\mu)$ has a pole at $-\mu_1$ and is monotonically decreasing 
in ($-\mu_1,\infty$) with $\lim_{\mu \rightarrow \infty} \phi(\mu) = 0$ and 
$\lim_{\mu \rightarrow -\mu_1} \phi(\mu) = \infty$. Hence there is a unique 
$\mu^{*} \in (-\mu_1,\infty)$ such that $\phi(\mu) = r^2$, 
and $\vecx(\mu^{*}) = -\sum_{j=1}^{n} \frac{\dotprod{\vecb}{\vecq_j}}{\mu^{*} + \mu_j} \vecq_j$ 
will be the unique solution to (P2). Some remarks are in order. 

\begin{itemize}
\item If $\matP$ was p.s.d and singular, then there is no solution to $\matP\vecx = -\vecb$, 
since $\vecb \not\in$ colspan($\matP$). Also, since $\mu_1 = 0$, we would 
have $\mu^{*} \in (0,\infty)$. Hence the corresponding solution $\vecx(\mu^{*})$ would be the same for 
(P1), (P2) and would lie on the boundary. Moreover, the solution will be unique due to Lemma \ref{lemma:trs_sphere}
since $\matP + \mu^{*}\matI \succ 0$. 

\item If $\matP$ was p.d and $\phi(0) < r^2$, then 
this would mean that the global solution to the unconstrained problem is a feasible point for (P1). 
In other words, $\vecx(0) = -\matP^{-1}\vecb$ would be the unique solution to (P1) with $\mu^{*} = 0$. 
Moreover, $\vecx(\mu^{*})$ with $\mu^{*} \in (-\mu_1,\infty)$ satisfying $\phi(\mu^{*}) = r^2$,  
would be the unique solution to (P2) (with $\mu^{*} < 0$); the uniqueness follows from Lemma \ref{lemma:trs_sphere} since 
$\matP + \mu^{*}\matI \succ 0$.
\end{itemize}
\item \underline{\textit{$\dotprod{\vecb}{\vecq} = 0$ for all $\vecq \in \calS$}}

This is referred to as the ``hard'' case in the literature - $\phi(\mu)$ does not  have a pole at $-\mu_1$, so $\phi(-\mu_1)$ is well defined.
There are two possibilities.
\begin{enumerate}
\item If $\phi(-\mu_1) \geq r^2$ then the solution is straightforward -- simply 
find the unique $\mu^{*} \in [-\mu_1,\infty)$ such that $\phi(\mu^{*}) = r^2$. 
This is possible since $\phi(\mu)$ is monotonically decreasing in $[-\mu_1,\infty)$. 
Hence,  
$$\vecx(\mu^{*}) = -\sum_{j: \mu_j \neq \mu_1} \frac{\dotprod{\vecb}{\vecq_j}}{\mu^{*} + \mu_j} \vecq_j$$
is the unique solution to (P2). If $\matP$ was p.s.d and singular, then $\mu_1 = 0$, and 
so $\mu^{*} \geq 0$. Hence, $\vecx(\mu^{*})$ would be the solution to both (P1) and (P2) and 
would lie on the boundary.

\item If $\phi(-\mu_1) < r^2$, then slightly more work is needed. For $\theta \in \matR$ and 
any $\vecz \in \calS$ with $\norm{\vecz}_2 = 1$, define 
$$\vecx(\theta) := \underbrace{-\sum_{j: \mu_j \neq \mu_1} \frac{\dotprod{\vecb}{\vecq_j}}{\mu_j - \mu_1} \vecq_j}_{\vecx(-\mu_1)} + \theta \vecz.$$
Then, it holds true that 
\begin{align}
\norm{\vecx(\theta)}_2^2 
&= \sum_{j: \mu_j \neq \mu_1} \frac{\dotprod{\vecb}{\vecq_j}^2}{(\mu_j - \mu_1)^2} + \theta^2 \\
&= \phi(-\mu_1) + \theta^2.
\end{align}
Solving $\norm{\vecx(\theta)}_2^2 = r^2$ for $\theta$, we see that for any solution $\theta^{*}$, we will also have 
$-\theta^{*}$ as a solution. Hence, $\vecx(\theta^{*}), \vecx(-\theta^{*})$ will be solutions to (P2) with $\mu^{*} = -\mu_1$. 
If $\matP$ was p.s.d and singular, then $\mu^{*} = 0$, and $\vecx(\pm \theta^{*}) = \vecx(0) \pm \theta^{*}\vecz$ would 
be solutions to both (P1) and (P2). Note that $\vecx(-\mu_1)$ is a solution to (P1). In fact, any 
point in the interior of the form $\vecx(-\mu_1) + \theta\vecz$ is a solution to (P1).
\end{enumerate}
\end{enumerate}


%

 \renewcommand\appendix{\par
        \renewcommand\thesection{C}
       \renewcommand\thesubsection{C\arabic{subsection}}
        \renewcommand\thetable{C\arabic{table} } }        

\appendix

\section{Useful concentration inequalities} \label{sec:app_conc_ineq}
We present in this section some useful concentration inequalities, that will be employed 
as a tool to prove our other results. Recall that for a random variable $X$, its sub-Gaussian norm 
$\norm{X}_{\psi_2}$ is defined as 
\begin{equation}
\norm{X}_{\psi_2} := \sup_{p \geq 1} \frac{(\expec \abs{X}^p)^{1/p}}{\sqrt{p}}.
\end{equation}
Moreover, $X$ is a sub-Gaussian random variable if $\norm{X}_{\psi_2}$ is finite. 
For instance, consider a bounded random variable $X$ with $\abs{X} \leq M$. Then, $X$ 
is a sub-Gaussian random variable with $\norm{X}_{\psi_2} \leq M$ \cite[Example 5.8]{vershynin2012}.

We begin with the well-known Hanson-Wright inequality (\cite{Hanson71}) for concentration of random quadratic forms. 
The following version is taken from \cite{rudelson2013}.
%
\begin{theorem}[\cite{rudelson2013}] \label{thm:hanson_wright} 
Let $(X_1 \ X_2 \ \cdots \ X_n) \in \matR^{n}$ be a random vector with independent components $X_i$ 
satisfying $\expec[X_i] = 0$ and $\norm{X_i}_{\psi_2} \leq K$. Let $\matA$ be a $n \times n$ matrix. 
Then for every $t \geq 0$
\begin{equation}
\mathbb{P}(\abs{X^T \matA X - \expec[X^T \matA X]} \geq t) \leq 
2\exp\left( -c \min \left(\frac{t^2}{K^4\norm{\matA}_F^2} , \frac{t}{K^2\norm{\matA}}\right)  \right)
\end{equation}
for an absolute constant $c > 0$.
\end{theorem}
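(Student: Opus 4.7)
The plan is to follow the Rudelson--Vershynin strategy, which reduces the Hanson--Wright bound to two separate tail estimates corresponding to the diagonal and off-diagonal parts of the quadratic form. After rescaling we may assume $K=1$, so that each $X_i$ is centered with $\norm{X_i}_{\psi_2}\leq 1$. Then I would write
\begin{equation*}
 X^T \matA X - \expec[X^T \matA X]
 \;=\; \underbrace{\sum_{i=1}^{n} A_{ii}\bigl(X_i^2 - \expec[X_i^2]\bigr)}_{=:D}
 \;+\; \underbrace{\sum_{i\neq j} A_{ij}\, X_i X_j}_{=:O},
\end{equation*}
and bound each term separately, finally applying a union bound with $t/2$ in place of $t$.

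For the diagonal part $D$, the idea is that each summand is a centered sub-exponential random variable. Since $\norm{X_i}_{\psi_2}\leq 1$, a standard calculation gives $\norm{X_i^2 - \expec[X_i^2]}_{\psi_1} \leq C$ for an absolute constant. The classical Bernstein inequality for independent sub-exponentials then yields a tail of the form $2\exp(-c\min(t^2/\sum_i A_{ii}^2,\,t/\max_i|A_{ii}|))$. The routine observations that $\sum_i A_{ii}^2 \leq \norm{\matA}_F^2$ and $\max_i |A_{ii}| = \max_i |e_i^T \matA e_i| \leq \norm{\matA}$ give exactly the target form.

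The substantial work lies in $O$, because the products $X_i X_j$ are only sub-exponential, and more importantly they are not independent across pairs. I would handle this via an MGF estimate, controlling $\expec\exp(\lambda O)$ for $|\lambda|\le c/\norm{\matA}$ by the following three-step chain. (i) \emph{Decoupling}: by a standard Bourgain--Tzafriri decoupling lemma applied to the off-diagonal matrix $\matA' = \matA - \text{diag}(\matA)$, one gets $\expec\exp(\lambda O) \leq \expec \exp(4\lambda X^T \matA' X')$, where $X'$ is an independent copy of $X$. (ii) \emph{Gaussian comparison}: conditionally on $X'$, the random variable $X^T \matA' X' = \sum_i X_i\, (\matA' X')_i$ is a linear combination of independent sub-Gaussians, so its conditional MGF is dominated by that of $g^T \matA' X'$ with $g \sim \mathcal{N}(0, C I_n)$, yielding $\expec\exp(4\lambda X^T \matA' X') \leq \expec \exp(C'\lambda^2\, (X')^T (\matA')^T \matA' X')$. (iii) \emph{Iteration/Gaussian chaos}: replace $X'$ in the same way by another Gaussian $g' \sim \mathcal N(0, C I_n)$ to reduce to a pure Gaussian quadratic form, whose MGF can be diagonalised in the eigenbasis of $(\matA')^T \matA'$ and computed explicitly; for $C''\lambda^2 \norm{\matA}^2 \leq 1/2$ it is controlled by $\exp(C'''\lambda^2 \norm{\matA}_F^2)$.

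Putting the MGF bound into the Chernoff inequality $\prob(O \geq t/2) \leq \inf_{\lambda > 0} \exp(-\lambda t/2)\,\expec \exp(\lambda O)$ and optimizing over $\lambda$ in the sub-Gaussian regime $\lambda \asymp t/\norm{\matA}_F^2$ (valid when $t \lesssim \norm{\matA}_F^2/\norm{\matA}$) and the sub-exponential regime $\lambda \asymp 1/\norm{\matA}$ (for larger $t$) produces a tail of the form $2\exp(-c\min(t^2/\norm{\matA}_F^2,\, t/\norm{\matA}))$. Symmetry handles the lower tail by replacing $\matA$ with $-\matA$. Reinstating $K$ by the inverse rescaling yields the $K^4$ and $K^2$ factors in the statement, and combining with the diagonal bound completes the proof. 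The main obstacle I anticipate is step (ii): the Gaussian comparison requires a careful moment-by-moment estimate (or equivalently, a symmetrization and contraction argument) to ensure the implicit constant does not degrade with $n$; this is where the sub-Gaussian hypothesis is used crucially and where the argument is most technical.
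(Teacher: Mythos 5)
The paper does not prove this statement at all: Theorem \ref{thm:hanson_wright} is quoted verbatim from Rudelson and Vershynin (2013) and used as an off-the-shelf tool (in the proofs of Propositions \ref{prop:bern_unif_conc} and \ref{prop:gauss_conc}), so there is no in-paper argument to compare yours against. What you have written is, in substance, a correct outline of the original Rudelson--Vershynin proof itself: the reduction to $K=1$, the diagonal/off-diagonal split with Bernstein's inequality handling the diagonal (using $\sum_i A_{ii}^2 \leq \norm{\matA}_F^2$ and $\max_i\abs{A_{ii}} \leq \norm{\matA}$), and the MGF chain of decoupling, conditional Gaussian comparison, and explicit diagonalization of the resulting Gaussian chaos, followed by the two-regime Chernoff optimization. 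Two small points worth tightening if you were to write this out in full: (i) the decoupling inequality you need is the convex-function decoupling for bilinear forms usually attributed to de la Pe\~na and Montgomery-Smith (provable directly by the Bernoulli-subset averaging trick), rather than Bourgain--Tzafriri, though this is an attribution issue and not a mathematical one; and (ii) after bounding the off-diagonal part in terms of $\norm{\matA'}_F$ and $\norm{\matA'}$ for $\matA' = \matA - \mathrm{diag}(\matA)$, you should record that $\norm{\matA'}_F \leq \norm{\matA}_F$ and $\norm{\matA'} \leq 2\norm{\matA}$ so the final bound can be stated in terms of $\matA$. Neither is a gap in the idea; the sketch is sound.
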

Next, we recall the following Hoeffding type inequality for sums of independent sub-Gaussian random variables.
%
\begin{proposition} \cite[Proposition 5.10]{vershynin2012} \label{prop:hoeff_subgauss_conc}
Let $X_1,\dots,X_n$ be independent  centered sub-Gaussian random variables and let $K = \max_i \norm{X_i}_{\psi_2}$. 
Then for every $\veca \in \matR^n$, and every $t \geq 0$, we have
\begin{equation}
\prob(\abs{\sum_{i=1}^n a_i X_i} \geq t) \leq e \cdot \exp\left(-\frac{c^{\prime} t^2}{K^2 \norm{\veca}_2^2}\right), 
\end{equation}
where $c^{\prime} > 0$ is an absolute constant.
\end{proposition}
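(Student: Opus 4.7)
The plan is to prove this via the standard Chernoff--Cram\'er approach, leveraging the fact that the $\psi_2$ norm controls the moment generating function (MGF) of each summand. The essence is that $S := \sum_{i=1}^n a_i X_i$ is itself sub-Gaussian (with a parameter scaling like $\norm{\veca}_2$), and a sub-Gaussian tail bound then follows from exponential Markov combined with MGF factorization under independence.

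First, I would establish the MGF bound for each $X_i$. From the definition $\norm{X_i}_{\psi_2} \leq K$ we have $(\expec\abs{X_i}^p)^{1/p} \leq K\sqrt{p}$ for all $p \geq 1$. Taylor-expanding $\expec[\exp(\lambda X_i)] = 1 + \sum_{p \geq 2} \lambda^p \expec[X_i^p]/p!$ (the linear term vanishes since $\expec[X_i] = 0$), applying Stirling's approximation $p! \geq (p/e)^p$, and bounding the resulting geometric-type series, one obtains that for some absolute constant $C > 0$,
\begin{equation}
\expec[\exp(\lambda X_i)] \leq \exp(C \lambda^2 K^2), \quad \forall \lambda \in \matR.
\end{equation}
This is a standard step in the equivalence of sub-Gaussian definitions (see Lemma 5.5 of Vershynin), and is essentially the only place where the $\psi_2$ norm is used.

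Second, by independence of the $X_i$, the MGF of $S$ factorizes, yielding
\begin{equation}
\expec[\exp(\lambda S)] = \prod_{i=1}^n \expec[\exp(\lambda a_i X_i)] \leq \prod_{i=1}^n \exp(C \lambda^2 a_i^2 K^2) = \exp(C \lambda^2 K^2 \norm{\veca}_2^2).
\end{equation}
Applying exponential Markov for any $\lambda > 0$ gives $\prob(S \geq t) \leq \exp(-\lambda t + C \lambda^2 K^2 \norm{\veca}_2^2)$, and optimizing via the choice $\lambda^* = t/(2 C K^2 \norm{\veca}_2^2)$ produces $\prob(S \geq t) \leq \exp(-t^2/(4 C K^2 \norm{\veca}_2^2))$. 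The same argument applied to $-S$, combined with a union bound, upgrades this to the two-sided tail bound with an absolute constant $c'$ (up to absorbing the factor $2$ into the leading $e$).

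The only real subtlety is the bookkeeping in the first step: turning the moment bound $\norm{X_i}_{\psi_2} \leq K$ into the quadratic MGF bound $\expec[\exp(\lambda X_i)] \leq \exp(C \lambda^2 K^2)$ requires carefully splitting the Taylor series (small $\lambda$ versus large $\lambda$, or equivalently bounding even and odd moments separately and exploiting $\expec[X_i] = 0$), and tracking constants. This is the main technical work, but it is completely routine, and after it the remainder is a one-line Chernoff computation. The proof does not need to be sharp in $c'$; the statement only asserts existence of such an absolute constant.
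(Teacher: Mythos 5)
Your proof is correct and is precisely the standard Chernoff--Cram\'er argument used in the cited reference \cite[Proposition 5.10]{vershynin2012}: establish the MGF bound $\expec[\exp(\lambda X_i)] \leq \exp(C\lambda^2 K^2)$ from the $\psi_2$ moment condition (Vershynin's Lemma 5.5, where the centering kills the linear term), factorize over the independent summands, optimize the exponential Markov bound in $\lambda$, and absorb the union-bound factor of $2$ into the leading constant $e$. The paper itself gives no proof of this proposition --- it is quoted as a black-box tool --- so there is nothing to contrast your argument with beyond noting that it matches the source.
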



 \renewcommand\appendix{\par
        \renewcommand\thesection{D}
       \renewcommand\thesubsection{D\arabic{subsection}}
        \renewcommand\thetable{D\arabic{table} } }        

\appendix

\section{Proof of Proposition \ref{prop:bern_unif_conc}} \label{sec:proof_prop_bern_unif}
Let us first recall the definition of $\veczbar$ from \eqref{eq:real_notat}. For clarity of notation, 
we will denote $\veczbar_R = \real(\vecz) \in \matR^n$ and $\veczbar_I = \imag(\vecz) \in \matR^n$, and thus 
$\veczbar = [\veczbar_R^T \quad \veczbar_I^T]^T \in \matR^{2n}$. Clearly, $(z_i)_{i=1}^n$ are 
independent, complex-valued random variables. Note that 
\begin{equation}
(\veczbar_R)_i = \cos(2\pi y_i) = \left\{
\begin{array}{rl}
\cos(2\pi (f_i\bmod 1)) \quad ; & \text{if} \ \beta_i = 0 \\
\cos(2\pi u_i) \quad ; & \text{if} \ \beta_i = 1
\end{array} \right. \quad ; \quad i=1,\dots,n.
\end{equation}
and 
\begin{equation}
(\veczbar_I)_i = \sin(2\pi y_i) = \left\{
\begin{array}{rl}
\sin(2\pi (f_i\bmod 1)) \quad ; & \text{if} \ \beta_i = 0 \\
\sin(2\pi u_i) \quad ; & \text{if} \ \beta_i = 1
\end{array} \right. \quad ; \quad i=1,\dots,n.
\end{equation}
Since $(\beta_i)_{i=1}^{n}$ and $(u_i)_{i=1}^{n}$ are i.i.d random variables, hence the components of $\veczbar_R$ are independent 
real-valued random variables. The same is true for the components of $\veczbar_I$.
\begin{enumerate}
\item \underline{\textbf{Lower bounding $\frac{1}{2n} \veczbar^T \Hbar \veczbar$}} 

To begin with, note that $\veczbar^T \Hbar \veczbar = \veczbar_R^T (\lambda L) \veczbar_R + \veczbar_I^T (\lambda L) \veczbar_I$. 
Denote $\mean_R = \expec[\veczbar_R] \in \matR^n$, and $\mean_I = \expec[\veczbar_I] \in \matR^n$. 
We see that 
\begin{align}
(\mean_R)_i &= \expec[(\veczbar_R)_i] \\
&= (1-p)\cos(2\pi (f_i\bmod 1)) + p\expec[\cos(2\pi u_i)] \\
&= (1-p)\cos(2\pi (f_i\bmod 1)) ,
\end{align}
since $\expec[\cos(2\pi u_i)] = 0$ for $i=1,\dots,n$. 
Similarly, we have that  $(\mean_I)_i = (1-p)\sin(2\pi (f_i\bmod 1))$. Hence, we may write 
\begin{equation} \label{eq:bern_unif_temp0}
\mean_R = (1-p) \real(\vechtil) \quad \text{and} \quad \mean_I = (1-p) \imag(\vechtil). 
\end{equation}
We now focus on lower bounding the term $\veczbar_R^T L \veczbar_R$. We observe that 
\begin{align}
\veczbar_R^T L \veczbar_R
&= (\veczbar_R - \mean_R + \mean_R)^T L (\veczbar_R - \mean_R + \mean_R) \\
&= (\veczbar_R - \mean_R)^T L (\veczbar_R - \mean_R) + 2(\veczbar_R - \mean_R)^T L \mean_R + \mean_R^T L \mean_R. \label{eq:bern_unif_temp1}
\end{align}
%
The first two terms in \eqref{eq:bern_unif_temp1} are random, and we proceed to lower bound them w.h.p starting with the first term.  Let us note that 
\begin{align}
\expec[(\veczbar_R - \mean_R)^T L (\veczbar_R - \mean_R)] 
&= \sum_{i=1}^n \expec[(\veczbar_R)_i - (\mean_R)_i]^2 L_{ii} \nonumber \\
		&+  \sum_{i \neq j} \underbrace{\expec[((\veczbar_R)_i - (\mean_R)_i)((\veczbar_R)_j - (\mean_R)_j)]}_{ = 0} L_{ij} \label{eq:bern_unif_tempA} \\
&= \sum_{i=1}^n (\expec[(\veczbar_R)_i]^2 - (\mean_R)_i^2) \deg(i),  \label{eq:bern_unif_temp2}
\end{align}
since the cross terms in \eqref{eq:bern_unif_tempA} are zero. We now obtain
%
%
\begin{align}
\expec[(\veczbar_R)_i]^2 &= \expec\left[\frac{1 + \cos(4\pi y_i)}{2}\right] \nonumber \\
&=  \frac{1}{2} + \frac{1}{2}\left[(1-p)\cos(4\pi (f_i \bmod 1)) + p\expec[\cos (4\pi u_i)] \right] \nonumber \\
&= \frac{1}{2} + \frac{1}{2}(1-p) \cos(4\pi(f_i\bmod 1)). \label{eq:bern_unif_temp3}
\end{align}
Concerning the second term in \eqref{eq:bern_unif_temp2}, we note that 
\begin{align} 
 (\mean_R)_i^2 &= (1-p)^2\cos^2(2\pi(f_i \bmod 1)) = \frac{(1-p)^2}{2} + \frac{(1-p)^2}{2} \cos(4\pi(f_i \bmod 1)), \label{eq:bern_unif_temp4}
\end{align}
%
%
for $i=1,\dots,n$. Plugging \eqref{eq:bern_unif_temp3}, \eqref{eq:bern_unif_temp4} in \eqref{eq:bern_unif_temp2}, and 
observing that $k \leq \deg(i) \leq 2k$, one can easily verify that 
\begin{align}
\frac{pnk}{2} \leq \expec[(\veczbar_R - \mean_R)^T L (\veczbar_R - \mean_R)] \leq 3pnk.  \label{eq:bern_unif_temp5}
\end{align}
%
%
Next, for each $i = 1,\dots,n$, the random variables $(\veczbar_R)_i - (\mean_R)_i$ are zero mean, and  also uniformly bounded since 
\begin{align}
\abs{(\veczbar)_i - (\mean_R)_i} = \abs{\cos(2\pi y_i) - (1-p)\cos(2\pi(f_i \bmod 1))} \leq 2.
\end{align}
Hence $\norm{(\veczbar)_i - (\mean_R)_i}_{\psi_2} \leq 2$ for each $i$. Therefore, applying Hanson-Wright inequality to 
$(\veczbar_R - \mean_R)^T L (\veczbar_R - \mean_R)$ yields
%
\begin{align}
\prob(\abs{(\veczbar_R - \mean_R)^T L (\veczbar_R - \mean_R) - \expec[(\veczbar_R - \mean_R)^T L (\veczbar_R - \mean_R)]} \geq t) \nonumber \\
\leq 2\exp\left( -c \min \left(\frac{t^2}{16\norm{L}_F^2}, \frac{t}{4\norm{L}}\right) \right). \label{eq:bern_unif_temp6}
\end{align}
Since $\deg(i) \leq 2k$ for each $i=1,\dots,n$, therefore Gershgorins disk theorem yields the estimate $\norm{L} \leq 4k$. 
Moreover, $\norm{L}_F^2 \leq \sum_i ((2k)^2 + 2k) \leq 8k^2n$. Plugging $t = \frac{\varepsilon p n k}{2}$ for $\varepsilon \in (0,1)$, 
we observe that
\begin{align} 
\frac{t^2}{16\norm{L}_F^2} &\geq \frac{(\varepsilon^2 p^2 n^2 k^2)/4}{128 k^2 n} = \frac{\varepsilon^2 p^2 n}{512}, \\
\text{and} \quad \frac{t}{4\norm{L}} &\geq \frac{(\varepsilon p n k)/2}{16 k} = \frac{\varepsilon p n}{32}. 
\end{align}
Thus, $\min\set{\frac{t^2}{16\norm{L}_F^2}, \frac{t}{4\norm{L}}} \geq \frac{\varepsilon^2 p^2 n}{512}$. Plugging this estimate 
in \eqref{eq:bern_unif_temp6} for the aforementioned choice of $t$, and using the bounds in \eqref{eq:bern_unif_temp5}, 
we have with probability at least $1 - 2\exp\left(-\frac{c \varepsilon^2 p^2 n}{512}\right)$ that
%
\begin{align}
\frac{pnk}{2}(1-\varepsilon) \leq (\veczbar_R - \mean_R)^T L (\veczbar_R - \mean_R) \leq pnk\left(3 + \frac{\varepsilon}{2}\right). \label{eq:bern_unif_temp7}
\end{align}
We now turn our attention to the second term in \eqref{eq:bern_unif_temp1} namely $2(\veczbar_R - \mean_R)^T L \mean_R$.
Recall that $(\veczbar_R)_i - (\mean_R)_i$ are independent, zero-mean sub-Gaussian random variables with $\norm{(\veczbar_R)_i - (\mean_R)_i}_{\psi_2} \leq 2$, 
for each $i$. Hence,  invoking Proposition \ref{prop:hoeff_subgauss_conc} yields 
\begin{align}
\prob(\abs{\dotprod{\veczbar_R - \mean_R}{2 L \mean_R}} \geq t) 
&\leq e\cdot \exp\left(-\frac{c^{\prime} t^2}{4 \norm{2 L \mean_R}_2^2}\right) \\ 
&\leq e\cdot \exp\left(-\frac{c^{\prime} t^2}{16 \norm{L}^2 \norm{\mean_R}_2^2}\right) \\ 
&\leq e\cdot \exp\left(-\frac{c^{\prime} t^2}{256 k^2 n}\right), \label{eq:bern_unif_temp8}
\end{align}
where in \eqref{eq:bern_unif_temp8}, we used the bounds $\norm{L} \leq 4k$ (as shown earlier) and $\norm{\mean_R}_2^2 \leq n$ 
(using \eqref{eq:bern_unif_temp0}). 
Plugging $t = \frac{pnk(1-\varepsilon)}{3}$ in \eqref{eq:bern_unif_temp8}, we have that the following 
holds with probability at least $1 - e \cdot \exp\left(-\frac{c^{\prime} p^2 n (1-\varepsilon)^2}{2304}\right)$
%
\begin{align}
\abs{\dotprod{\veczbar_R - \mean_R}{2 L \mean_R}} \leq \frac{pnk(1-\varepsilon)}{3}. \label{eq:bern_unif_temp9}
\end{align}
Combining \eqref{eq:bern_unif_temp7}, \eqref{eq:bern_unif_temp9} and applying the union bound, we have with 
probability at least $1 - e \cdot\exp\left(-\frac{c^{\prime} p^2 n (1-\varepsilon)^2}{2304}\right) - 2\exp\left(-\frac{c \varepsilon^2 p^2 n}{512}\right)$
that the following bound holds
\begin{equation} 
\veczbar_R^T L \veczbar_R \geq \frac{pnk}{6}(1-\varepsilon) + \mean_R^T L \mean_R. \label{eq:bern_unif_temp10}
\end{equation}
By proceeding as above, one can verify that with 
probability at least $1 - e \cdot \exp\left(-\frac{c^{\prime} p^2 n (1-\varepsilon)^2}{2304}\right)$ $- 2\exp\left(-\frac{c \varepsilon^2 p^2 n}{512}\right)$, 
the following bound holds
\begin{equation} 
\veczbar_I^T L \veczbar_I \geq \frac{pnk}{6}(1-\varepsilon) + \mean_I^T L \mean_I. \label{eq:bern_unif_temp11}
\end{equation}
Combining \eqref{eq:bern_unif_temp10}, \eqref{eq:bern_unif_temp11} and applying the union bound, we have 
with probability at least $1 - 2e\cdot\exp\left(-\frac{c^{\prime} p^2 n (1-\varepsilon)^2}{2304}\right) - 4\exp\left(-\frac{c \varepsilon^2 p^2 n}{512}\right)$ 
that 
%
\begin{align}
\frac{1}{2n} \veczbar^T \Hbar \veczbar 
&= \frac{1}{2n} \lambda(\veczbar_R^T L \veczbar_R + \veczbar_I^T L \veczbar_I) \\
&\geq \frac{\lambda p n k}{6n}(1-\varepsilon) + \frac{1}{2n}\left(\mean_R^T (\lambda L) \mean_R + \mean_I^T (\lambda L) \mean_I \right) \label{eq:bern_unif_temp12}  \\
&= \frac{\lambda p k}{6}(1-\varepsilon) + (1-p)^2 \frac{1}{2n} \vechtilbar \Hbar \vechtilbar \label{eq:bern_unif_temp13}
\end{align}
holds true. To go from \eqref{eq:bern_unif_temp12} to \eqref{eq:bern_unif_temp13}, we used \eqref{eq:bern_unif_temp0} 
along with the definition of $\Hbar$ (see also Appendix \ref{sec:qcqp_compl_to_real}). 
This completes the derivation of the lower bound on $\frac{1}{2n} \veczbar^T \Hbar \veczbar$.

%
\item \underline{\textbf{Upper bounding $\norm{\veczbar - \vechtilbar}_2$}}
By recalling the definition of $\veczbar,\vechtilbar \in \matR^{2n}$ from \eqref{eq:real_notat}, we note that
\begin{align}
\norm{\veczbar - \vechtilbar}_2^2 
&= \left|\left|\begin{pmatrix}
  \real(\vecz) \\ \imag(\vecz) 
 \end{pmatrix} 
- \begin{pmatrix}
  \real(\vechtil) \\ \imag(\vechtil) 
 \end{pmatrix}\right|\right|_2^2 \\
&= \norm{\vecz - \vechtil}_2^2 \\
&= 2n - (\vecz^{*}\vechtil + \vechtil^{*}\vecz) \qquad (\text{Since } \abs{z_i}, \abs{h_i} = 1, \text{for each } i) \\
&= 2n - \sum_{i=1}^{n}(\exp(\iota 2\pi(f_i\bmod 1 - y_i)) + \exp(-\iota 2\pi(f_i\bmod 1 - y_i))) \\
&= 2n - \sum_{i=1}^{n}\underbrace{(2\cos (2\pi(f_i \bmod 1 - y_i)))}_{M_i}. \label{eq:nois_conc_temp0}
\end{align}
Clearly, $(M_i)_{i=1}^{n}$ are independent sub-Gaussian random variables with $\abs{M_i} \leq 2$;  
hence $\norm{M_i}_{\psi_2} \leq 2$ for each $i$. Moreover,
\begin{align}
\expec[M_i] 
&= 2(1-p) + p\expec[(2\cos (2\pi(f_i \bmod 1 - u_i)))] \\
&= 2(1-p) + p\int_{0}^{1} 2\cos (2\pi(f_i \bmod 1 - u_i)) du_i \\
&= 2(1-p) + p \left[\frac{2\sin (2\pi u_i - 2\pi (f_i \bmod 1))}{2\pi} \right]_{0}^{1} \\
&= 2(1-p).
\end{align}
Therefore,  $(M_i - \expec[M_i])_{i=1}^n$ are centered, independent sub-Gaussian random variables with $\norm{M_i - \expec[M_i]}_{\psi_2} \leq 4$. 
Thus,  by applying Proposition \ref{prop:hoeff_subgauss_conc} to $\sum_{i=1}^n (M_i - \expec[M_i])$, we obtain
\begin{equation} \label{eq:nois_conc_temp1}
\prob(\abs{\sum_i M_i - 2n(1-p)} \geq t) \leq e\cdot\exp\left(-\frac{c^{\prime} t^2}{16n}\right).
\end{equation}
Plugging $t = 2(1-p)n\varepsilon$ in \eqref{eq:nois_conc_temp1} for $\varepsilon \in (0,1)$, we have with probability 
at least $1 - e\cdot\exp\left(-\frac{c^{\prime} (1-p)^2 \varepsilon^2}{4n}\right)$ that the following bound holds
\begin{equation} \label{eq:nois_conc_temp2}
2(1-p)n(1-\varepsilon) \leq \sum_{i=1}^n M_i \leq 2(1-p)n(1+\varepsilon). 
\end{equation}
Conditioning on the event in \eqref{eq:nois_conc_temp2}, we finally obtain 
from \eqref{eq:nois_conc_temp0} the bound
\begin{align}
\norm{\veczbar - \vechtilbar}_2^2 \ 
\leq 2n - 2(1-p)n(1-\varepsilon) \ 
\leq 2n(p + \varepsilon).
\end{align}
\end{enumerate}

%

 \renewcommand\appendix{\par
        \renewcommand\thesection{E}
       \renewcommand\thesubsection{E\arabic{subsection}}
        \renewcommand\thetable{E\arabic{table} } }        

\appendix

\section{Proof of Proposition \ref{prop:gauss_conc}} \label{sec:proof_prop_gauss}
We consider the noise model  $y_i = (f_i + \eta_i) \mod 1$, where $\eta_i \sim \calN(0,\sigma^2)$ are i.i.d. 
Recall the definition of $\veczbar$ from \eqref{eq:real_notat}, where upon denoting 
$\veczbar_R = \real(\vecz) \in \matR^n$, $\veczbar_I = \imag(\vecz) \in \matR^n$, we have  
$\veczbar = [\veczbar_R^T \quad \veczbar_I^T]^T \in \matR^{2n}$. Clearly $(\veczbar_R)_i = \cos(2\pi((f_i + \eta_i) \bmod 1))$, 
$i=1,\dots,n$ are independent random variables, the same being true for $(\veczbar_I)_i = \sin(2\pi((f_i + \eta_i) \bmod 1))$. 

\begin{enumerate}
\item \underline{\textbf{Lower bounding $\frac{1}{2n} \veczbar^T \Hbar \veczbar$.}} 
To begin with, note that $\veczbar^T \Hbar \veczbar = \veczbar_R^T (\lambda L) \veczbar_R + \veczbar_I^T (\lambda L) \veczbar_I$. 
Denote $\mean_R = \expec[\veczbar_R] \in \matR^n$, and $\mean_I = \expec[\veczbar_I] \in \matR^n$. We then have
\begin{align}
(\mean_R)_i &= \expec[\cos(2\pi((f_i + \eta_i) \bmod 1))] \\
&= \expec[\frac{e^{\iota 2\pi((f_i + \eta_i) \bmod 1)} + e^{-\iota 2\pi((f_i + \eta_i) \bmod 1)}}{2}] \\
&= \frac{e^{\iota 2\pi(f_i \bmod 1)}\expec[e^{\iota 2\pi \eta_i}] + e^{-\iota 2\pi(f_i \bmod 1)} \expec[e^{-\iota 2\pi \eta_i}]}{2}.  \label{eq:app_gauss_temp1}
\end{align}
Using the fact\footnote{Indeed, for $X \sim \calN(\mu,\sigma^2)$, we have 
$\expec[e^{\iota t X}] = e^{\iota t \mu - \frac{\sigma^2 t^2}{2}}$ for any $t \in \matR$.} 
$\expec[e^{\iota 2\pi \eta_i}], \expec[e^{-\iota 2\pi \eta_i}] = e^{-2\pi^2 \sigma^2}$ in \eqref{eq:app_gauss_temp1}, we get 
$(\mean_R)_i = e^{-2\pi^2 \sigma^2} \cos(2\pi(f_i \bmod 1))$. In an analogous manner, one can show that 
$(\mean_I)_i = e^{-2\pi^2 \sigma^2} \sin(2\pi(f_i \bmod 1))$. To summarize, we have shown that 
\begin{align} \label{eq:app_gauss_temp2}
\mean_R = e^{-2\pi^2 \sigma^2} \real(\vechtil), \quad \mean_I = e^{-2\pi^2 \sigma^2} \imag(\vechtil). 
\end{align}
Now recall that $\veczbar^T \Hbar \veczbar = \veczbar_R^T (\lambda L) \veczbar_R + \veczbar_I^T (\lambda L) \veczbar_I$. 
We will lower bound $\veczbar_R^T (\lambda L) \veczbar_R$, respectively $\veczbar_I^T (\lambda L) \veczbar_I$ individually, w.h.p., via similar arguments as for the Bernoulli noise, via a Hanson-Wright inequality, respectively, a Hoeffding type inequality arguments.  

Consider $\veczbar_R^T L \veczbar_R$, and recall its decomposition as in \eqref{eq:bern_unif_temp1}. We will 
lower bound $(\veczbar_R - \mean_R)^T L (\veczbar_R - \mean_R)$ and  $2(\veczbar_R - \mean_R)^T L \mean_R$ w.h.p. 

To begin with, recall from \eqref{eq:bern_unif_temp2} that 
\begin{align}
\expec[(\veczbar_R - \mean_R)^T L (\veczbar_R - \mean_R)] = \sum_{i=1}^n (\expec[(\veczbar_R)_i]^2 - (\mean_R)_i^2) \deg(i).
\label{eg:quad_form_in_L_centered}
\end{align}
Next, using the following observation
\begin{align}
	\expec[(\veczbar_R)_i]^2 = \expec[  \cos^2( 2\pi ((f_i + \eta_i) \bmod  1) )  ] = \expec[  \cos^2( 2\pi (f_i + \eta_i) )  ] 
	= \expec \left[  \frac{ 1+ \cos ( 4\pi (f_i + \eta_i) ) }{2} \right]
\end{align}
together with the fact that  $ \expec [ \cos ( 4\pi (f_i + \eta_i) )] = e^{-8 \pi ^2 \sigma^2} \cos ( 4\pi (f_i  \bmod 1) )  $, 
we remark that, for each $i = 1,\dots,n$, it holds true that 
%
%
\begin{align}
	\expec[(\veczbar_R)_i]^2 
	= \frac{1 + e^{-8\pi^2 \sigma^2} \cos(4\pi(f_i \bmod 1))}{2}. 
	\label{eq:expec_veczbar_R}
\end{align}
Note that the last equality follows along similar steps as those in the derivation of $(\mean_R)_i$ earlier. A similar calculation yields that
\begin{align}
(\mean_R)_i^2 = e^{-4 \pi^2 \sigma^2}  \cos^2( 2\pi ( f_i \bmod 1 )  ) = 
e^{-4 \pi^2 \sigma^2} \left[  \frac{1 + \cos ( 4 \pi ( f_i \bmod 1 ) )}{2}  \right].
\label{eq:mean_R_i_sq}
\end{align}
Combining   \eqref{eq:expec_veczbar_R} with  \eqref{eq:mean_R_i_sq} yields
\begin{align}
	\expec[(\veczbar_R)_i]^2 - (\mean_R)_i^2
	&= \frac{(1 - e^{-4\pi^2 \sigma^2})  + (e^{-8\pi^2 \sigma^2} - e^{-4\pi^2 \sigma^2})\cos(4\pi(f_i \bmod 1))}{2} \\
	&= \frac{(1 - e^{-4\pi^2 \sigma^2})(1 - e^{-4\pi^2 \sigma^2}\cos(4\pi(f_i \bmod 1))) }{2}.
\end{align}
Since $k \leq \deg(i) \leq 2k$, we readily conclude that \eqref{eg:quad_form_in_L_centered} amounts to  
\begin{align} \label{eq:app_gauss_temp3}
\frac{(1-e^{-4\pi^2 \sigma^2})^2}{2}kn \leq \expec[(\veczbar_R - \mean_R)^T L (\veczbar_R - \mean_R)] \leq (1-e^{-8\pi^2 \sigma^2})kn.	
\end{align}
Note that for each $i = 1,\dots,n$, the random variables $(\veczbar_R)_i - (\mean_R)_i$ are zero-mean, and are also uniformly bounded  as 
\begin{align}
\abs{(\veczbar)_i - (\mean_R)_i} = \abs{\cos(2\pi ((f_i + \eta_i)\bmod 1)) - e^{-2\pi^2\sigma^2}\cos(2\pi(f_i \bmod 1))} \leq 2.
\end{align}
Hence $\norm{(\veczbar)_i - (\mean_R)_i}_{\psi_2} \leq 2$ for each $i$, therefore allowing us to  apply the Hanson-Wright inequality to 
$(\veczbar_R - \mean_R)^T L (\veczbar_R - \mean_R)$ yields
\begin{align}
\prob(\abs{(\veczbar_R - \mean_R)^T L (\veczbar_R - \mean_R) - \expec[(\veczbar_R - \mean_R)^T L (\veczbar_R - \mean_R)]} \geq t) \nonumber \\
\leq 2\exp\left( -c \min \left(\frac{t^2}{16\norm{L}_F^2}, \frac{t}{4\norm{L}}\right) \right). \label{eq:app_gauss_temp4}
\end{align}
We saw earlier that $\norm{L} \leq 4k$ (see the proof of Lemma \ref{lem:lowbd_sol_quad_form}).  Since $L_{ii} \leq 2k $, and $L_{i,j} \leq 1,   \mbox{ for } i,j \in E$, it holds true that $\norm{L}_F^2 \leq 8k^2n$.
Plugging in  $t = \varepsilon k n \frac{(1-e^{-4\pi^2\sigma^2})^2}{2}$ for $\varepsilon \in (0,1)$
we observe that
\begin{align} 
\frac{t^2}{16\norm{L}_F^2} &\geq \frac{(\varepsilon^2 (1-e^{-4\pi^2\sigma^2})^4  n^2 k^2)/4}{128 k^2 n} = \frac{\varepsilon^2 (1-e^{-4\pi^2\sigma^2})^4 n}{1024}, \\
\text{and} \quad \frac{t}{4\norm{L}} &\geq \frac{(\varepsilon (1-e^{-4\pi^2\sigma^2})^2 n k)/2}{16 k} = \frac{\varepsilon (1-e^{-4\pi^2\sigma^2})^2 n}{32}. 
\end{align}
Thus, $\min\set{\frac{t^2}{16\norm{L}_F^2}, \frac{t}{4\norm{L}}} \geq \frac{\varepsilon^2 (1-e^{-4\pi^2\sigma^2})^4 n}{1024}$. 
Plugging this estimate in \eqref{eq:app_gauss_temp4} for the aforementioned choice of $t$, and using the bounds in \eqref{eq:app_gauss_temp3}, 
we have with probability at least $1 - 2\exp\left(-\frac{c \varepsilon^2 (1-e^{-4\pi^2\sigma^2})^4 n}{1024}\right)$ that
\begin{align} \label{eq:app_gauss_temp5}
	(1-\varepsilon)\frac{(1-e^{-4\pi^2\sigma^2})^2}{2} kn 
	\leq (\veczbar_R - \mean_R)^T L (\veczbar_R - \mean_R) 
	\leq kn\left[(1-e^{-8\pi^2\sigma^2}) + \frac{\varepsilon}{2}(1-e^{-4\pi^2\sigma^2})^2\right]. 
\end{align}
Next, we turn our attention to lower-bounding the second term, namely $2(\veczbar_R - \mean_R)^T L \mean_R$.
Recall that $(\veczbar_R)_i - (\mean_R)_i$ are independent, zero-mean sub-Gaussian random variables with 
$\norm{(\veczbar_R)_i - (\mean_R)_i}_{\psi_2} \leq 2$, 
for each $i$. Hence,  invoking Proposition \ref{prop:hoeff_subgauss_conc} along with the facts $\norm{L} \leq 4k$ and 
$\norm{\mean_R}_2^2 \leq e^{-4\pi^2\sigma^2} n$, we obtain
\begin{align}
\prob(\abs{\dotprod{\veczbar_R - \mean_R}{2 L \mean_R}} \geq t) 
\leq e\cdot \exp\left(-\frac{c^{\prime} t^2}{256 k^2 \norm{\mean_R}_2^2}\right) 
\leq  e\cdot \exp\left(-\frac{c^{\prime} t^2 e^{4\pi^2\sigma^2}}{256 k^2 n}\right).
\end{align}
Plugging in $t = \frac{(1-\varepsilon)(1-e^{-4\pi^2\sigma^2})^2}{3}kn$, we have with probability at least 
$$1-e\exp\left(\frac{-c'(1-\varepsilon)^2(1-e^{-4\pi^2\sigma^2})^4 e^{4\pi^2\sigma^2}}{2304} n \right)$$ that 
\begin{align} \label{eq:app_gauss_temp6}
	\abs{\dotprod{\veczbar_R - \mean_R}{2 L \mean_R}} \leq \frac{(1-\varepsilon)(1-e^{-4\pi^2\sigma^2})^2}{3} kn.
\end{align}
Recalling the decomposition of $\veczbar_R^T L \veczbar_R$ as in \eqref{eq:bern_unif_temp1}, 
combining the lower bound from \eqref{eq:app_gauss_temp5} with the lower bound implied by \eqref{eq:app_gauss_temp6}, and applying the union bound, we have that, with probability at least 
\begin{align}
	1-e\exp\left(\frac{-c'(1-\varepsilon)^2(1-e^{-4\pi^2\sigma^2})^4 e^{4\pi^2\sigma^2}}{2304} n \right) 
	- 2\exp\left(-\frac{c \varepsilon^2 (1-e^{-4\pi^2\sigma^2})^4 n}{1024}\right)
\end{align}
the following bound holds
\begin{align} \label{eq:gauss_1d_real_quad_bd}
\veczbar_R^T L \veczbar_R \geq \frac{(1-\varepsilon)(1-e^{-4\pi^2\sigma^2})^2}{6} kn + \mean_R^T L \mean_R.
\end{align}
By proceeding as above, we obtain the same lower bound on $\veczbar_I^T L \veczbar_I$ with the same lower bound on the 
success probability. Hence, by applying the union bound to \eqref{eq:gauss_1d_real_quad_bd} and its imaginary counterpart, we 
finally have with probability at least 
\begin{align} \label{eq:app_gauss_temp7}
		1-2e\exp\left(\frac{-c'(1-\varepsilon)^2(1-e^{-4\pi^2\sigma^2})^4 e^{4\pi^2\sigma^2}}{2304} n \right) 
	- 4\exp\left(-\frac{c \varepsilon^2 (1-e^{-4\pi^2\sigma^2})^4 n}{1024}\right)
\end{align}

that the following bound holds 
\begin{align}
	\frac{1}{2n} \veczbar^T \Hbar \veczbar 
	&= \frac{1}{2n} \lambda(\veczbar_R^T L \veczbar_R + \veczbar_I^T L \veczbar_I) \\
	&\geq \frac{\lambda}{2n} \left[\frac{(1-\varepsilon)(1-e^{-4\pi^2\sigma^2})^2}{3} kn \right] 
	+ \frac{1}{2n} (\mean_R^T (\lambda L) \mean_R + \mean_I^T (\lambda L) \mean_I) \\
	&=  \frac{\lambda k}{6} (1-\varepsilon)(1-e^{-4\pi^2\sigma^2})^2 + \frac{e^{-4\pi^2\sigma^2}}{2n} \vechtil^T \Hbar \vechtil. \label{eq:app_gauss_temp8}
\end{align}
In the last step, we used \eqref{eq:app_gauss_temp2} along with the definition of $\Hbar$ (see also Appendix \ref{sec:qcqp_compl_to_real}). 
This completes the derivation of the lower bound on $\frac{1}{2n} \veczbar^T \Hbar \veczbar$. 

\item \underline{\textbf{Upper bounding $\norm{\veczbar - \vechtilbar}_2$.}}
By recalling the definition of $\veczbar,\vechtilbar \in \matR^{2n}$ from \eqref{eq:real_notat}, we observe that
\begin{align}
\norm{\veczbar - \vechtilbar}_2^2 
&= \norm{\vecz - \vechtil}_2^2 \\
&= 2n - (\vecz^{*}\vechtil + \vechtil^{*}\vecz) \\
&= 2n - \sum_{i=1}^{n}[e^{-\iota 2\pi((f_i + \eta_i)\bmod 1 - f_i\bmod 1)} + e^{\iota 2\pi((f_i + \eta_i)\bmod 1 - f_i\bmod 1)}] \\
&= 2n - \sum_{i=1}^{n}[e^{\iota 2\pi(\eta_i \bmod 1)} + e^{-\iota 2\pi (\eta_i \bmod 1)}] 
= 2n - \sum_{i=1}^{n}\underbrace{[e^{\iota 2\pi\eta_i} + e^{-\iota 2\pi \eta_i}]}_{M_i}. \label{eq:app_gauss_temp9}
\end{align}
Clearly $(M_i)_{i=1}^{n}$ are i.i.d real-valued sub-Gaussian random variables with $\abs{M_i} \leq 2$;  
hence $\norm{M_i}_{\psi_2} \leq 2$ for each $i$. Moreover, we readily obtain $\expec[M_i] = 2e^{-2\pi^2\sigma^2}$, and consequently, 
$(M_i - \expec[M_i])_{i=1}^n$ are centered, i.i.d sub-Gaussian random variables with $\norm{M_i - \expec[M_i]}_{\psi_2} \leq 4$. 
Therefore, by applying Proposition \ref{prop:hoeff_subgauss_conc} to $\sum_{i=1}^n (M_i - \expec[M_i])$, we obtain
\begin{equation} \label{eq:app_gauss_temp10}
\prob(\abs{\sum_i M_i - 2ne^{-2\pi^2\sigma^2}} \geq t) \leq e\cdot\exp\left(-\frac{c^{\prime} t^2}{16n}\right).
\end{equation}
Plugging in $t = 2 n \varepsilon e^{-2\pi^2\sigma^2}$ for $\varepsilon \in (0,1)$, we have with probability 
at least $1 - e\cdot \exp(-\frac{c' e^{-4\pi^2\sigma^2} n \varepsilon^2}{4})$ that the following bound holds
\begin{equation} \label{eq:app_gauss_temp11}
 2 n \varepsilon e^{-2\pi^2\sigma^2} (1-\varepsilon) \leq \sum_{i=1}^n M_i \leq  2 n \varepsilon e^{-2\pi^2\sigma^2} (1+\varepsilon). 
\end{equation}
Conditioning on the event in \eqref{eq:app_gauss_temp11}, we finally obtain from \eqref{eq:app_gauss_temp9} the bound
\begin{align}
	\norm{\veczbar - \vechtilbar}_2^2 \leq 2n - 2n(1-\varepsilon)e^{-2\pi^2\sigma^2} = 2n(1 - (1-\varepsilon)e^{-2\pi^2\sigma^2}).
\end{align}

\end{enumerate}

 \renewcommand\appendix{\par
        \renewcommand\thesection{F}
       \renewcommand\thesubsection{F\arabic{subsection}}
        \renewcommand\thetable{F\arabic{table} } }        

\appendix

\section{Additional numerical experiments}

\subsection{Numerical experiments: Bounded Model} \label{sec:num_exps_Bounded}

Figure \ref{fig:instances_f1_Bounded} shows several denoising instances as we increase the noise level in the Uniform noise model ($\gamma \in \{0.15, 0.27, 0.30\}$). Note that \textbf{OLS} starts failing at $\gamma = 0.27$, 
while \textbf{QCQP} still estimates the samples of $f$ well. Interestingly, \textbf{iQCQP} performs quite well, 
even for $\gamma = 0.30$ (where \textbf{QCQP} starts failing) and produces highly smooth, and accurate estimates.
It would be interesting to investigate the properties of \textbf{iQCQP} in future work.

Figures \ref{fig:Sims_f1_Bounded_fmod1}, \ref{fig:Sims_f1_Bounded_f} plot RMSE (on a log scale) for denoised 
$f$ mod 1 and $f$ samples versus the noise level, for the Uniform noise model. They illustrate the importance of 
the choice of the regularization parameters $\lambda, k$. If $\lambda$ is too small (eg., $\lambda = 0.03$), 
then \textbf{QCQP} has negligible improvement in performance, and sometimes also has worse RMSE than the raw noisy samples. 
However, for a larger $\lambda$ ($\lambda \in \set{0.3,0.5}$), \textbf{QCQP} has a strictly smaller error than $\textbf{OLS}$ and the raw noisy 
samples. Interestingly, \textbf{iQCQP} typically performs very well, even for $\lambda = 0.03$.


Figure \ref{fig:Sims_f1_Bounded_ScanID2_ffmod1} plots the RMSE (on a log scale) for both the denoised $f$ mod 1 samples, and samples of $f$, 
versus $n$ (for Uniform noise model). Observe that for large enough $n$, \textbf{QCQP} shows strictly smaller RMSE than both the initial input noisy data, and \textbf{OLS}. Furthermore, we remark that \textbf{iQCQP} typically has superior performance to \textbf{QCQP} except for small values of $n$.

\begin{figure}[!ht]
\centering
\subcaptionbox[]{  $\gamma=0.15$, \textbf{OLS}
}[ 0.32\textwidth ]
{\includegraphics[width=0.27\textwidth] {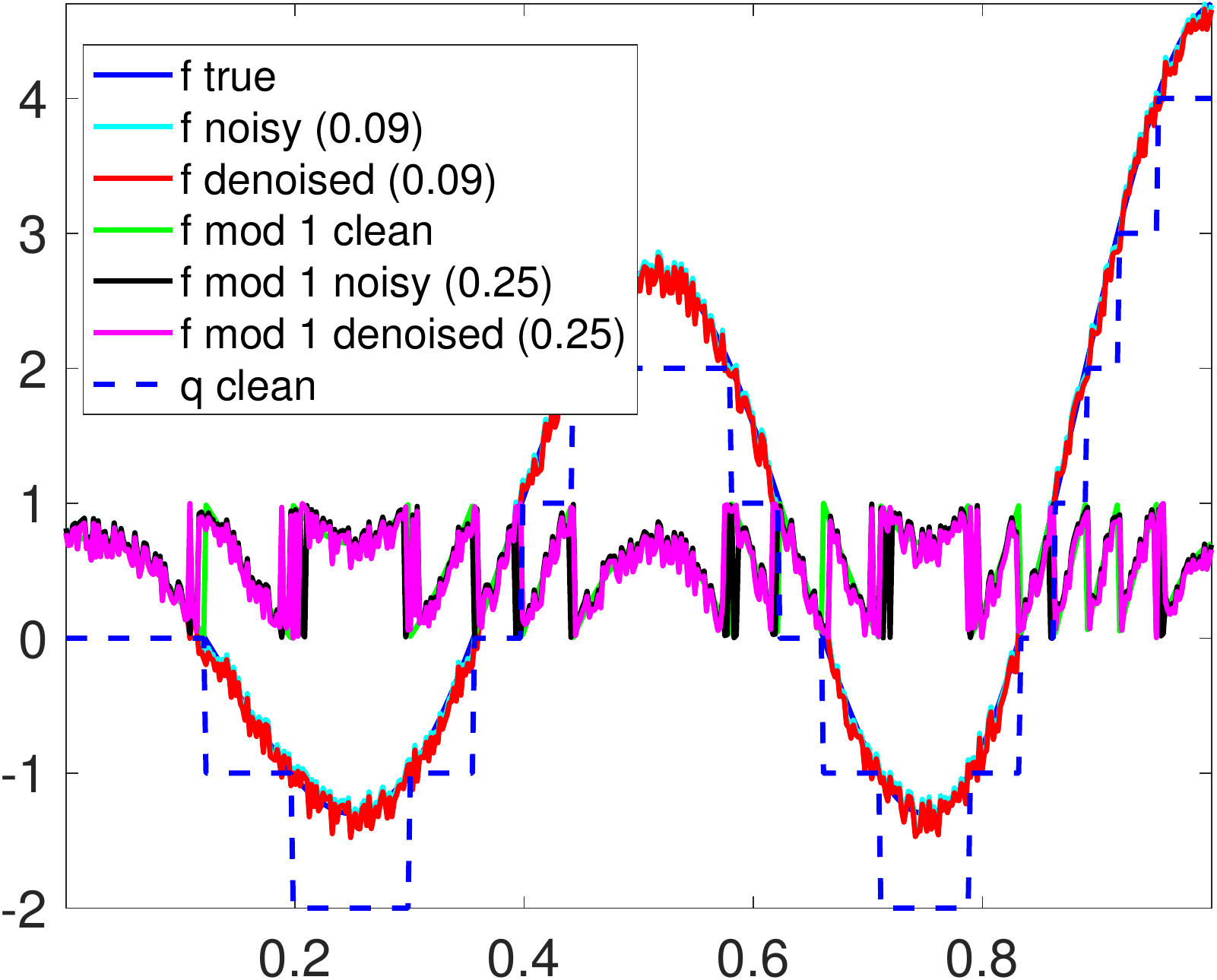} }
%
\subcaptionbox[]{  $\gamma=0.15$, \textbf{QCQP}
}[ 0.32\textwidth ]
{\includegraphics[width=0.27\textwidth] {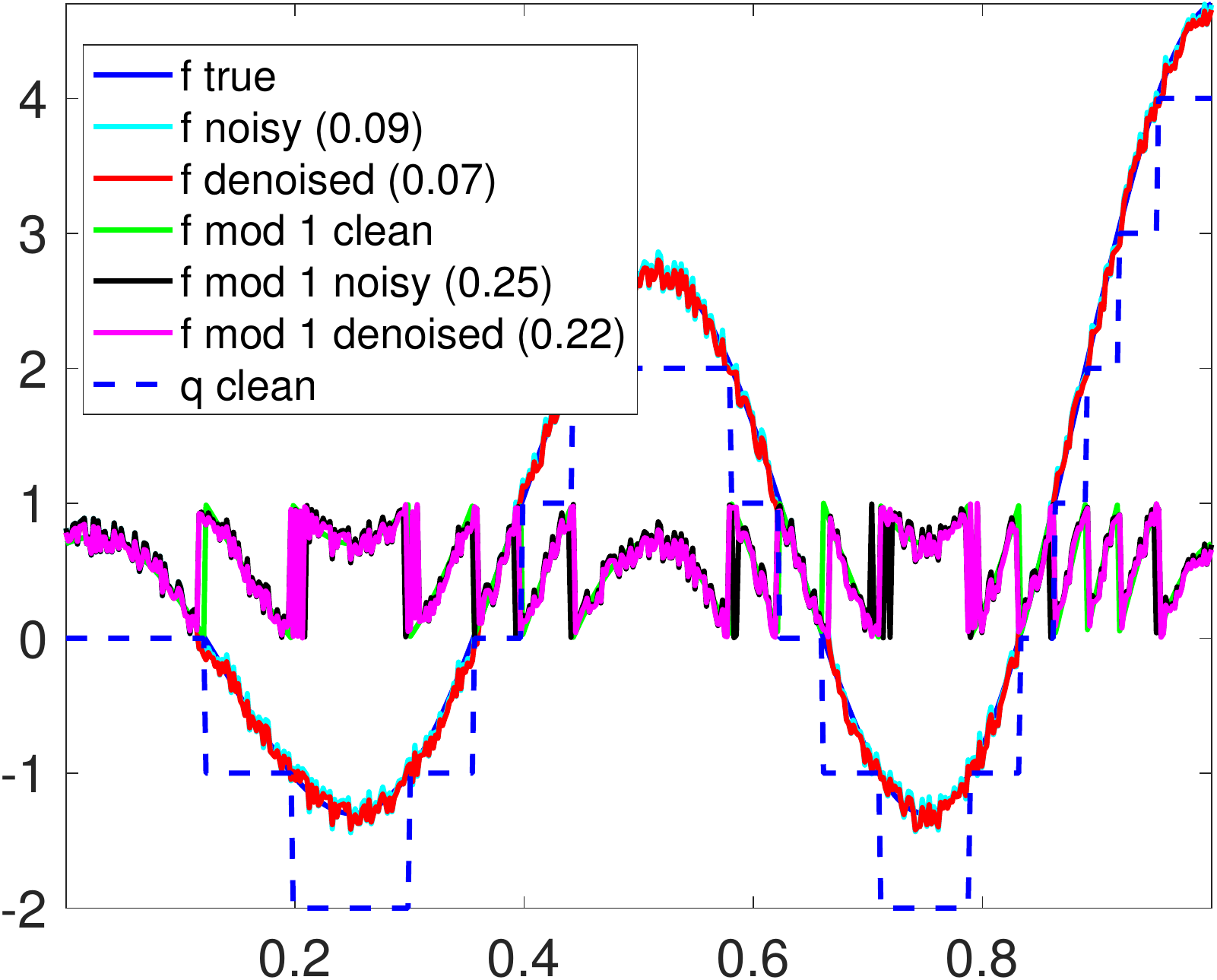} }
%
\subcaptionbox[]{  $\gamma=0.15$, \textbf{iQCQP}  (10  iters.)
}[ 0.32\textwidth ]
{\includegraphics[width=0.27\textwidth] {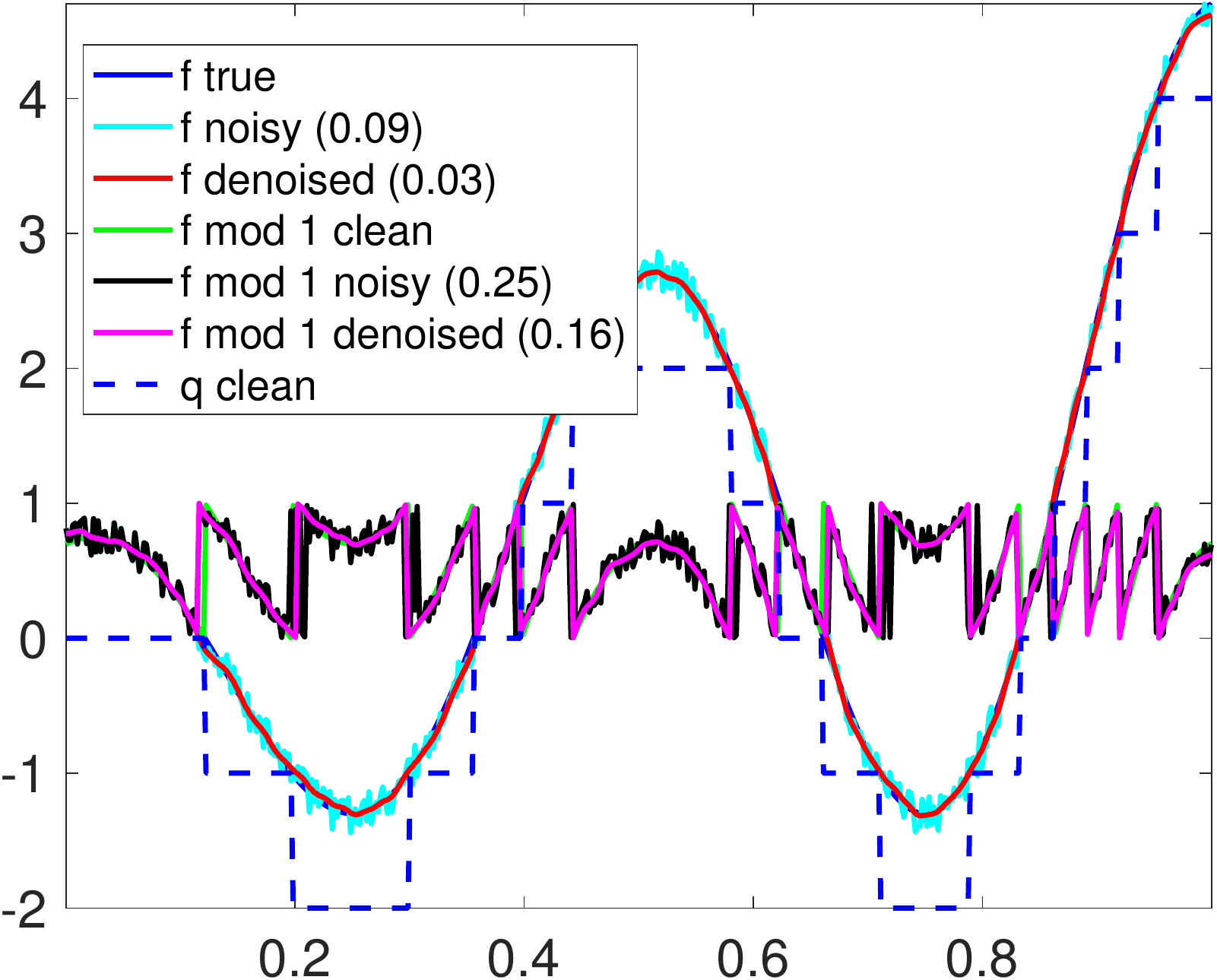} }
%
%

\vspace{4mm}
\subcaptionbox[]{  $\gamma=0.27$, \textbf{OLS}
}[ 0.32\textwidth ]
{\includegraphics[width=0.27\textwidth] {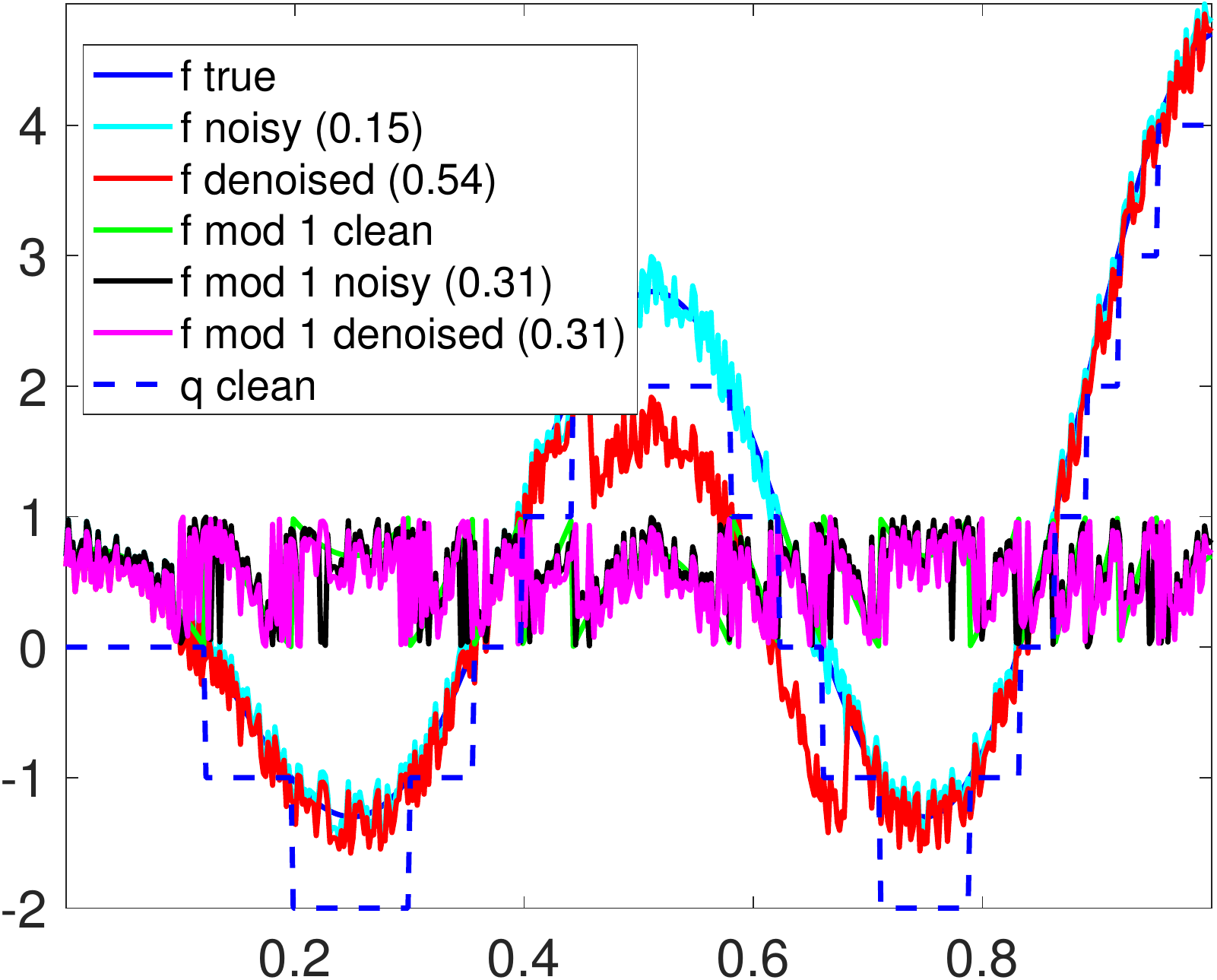} }
%
\subcaptionbox[]{  $\gamma=0.27$, \textbf{QCQP}
}[ 0.32\textwidth ]
{\includegraphics[width=0.27\textwidth] {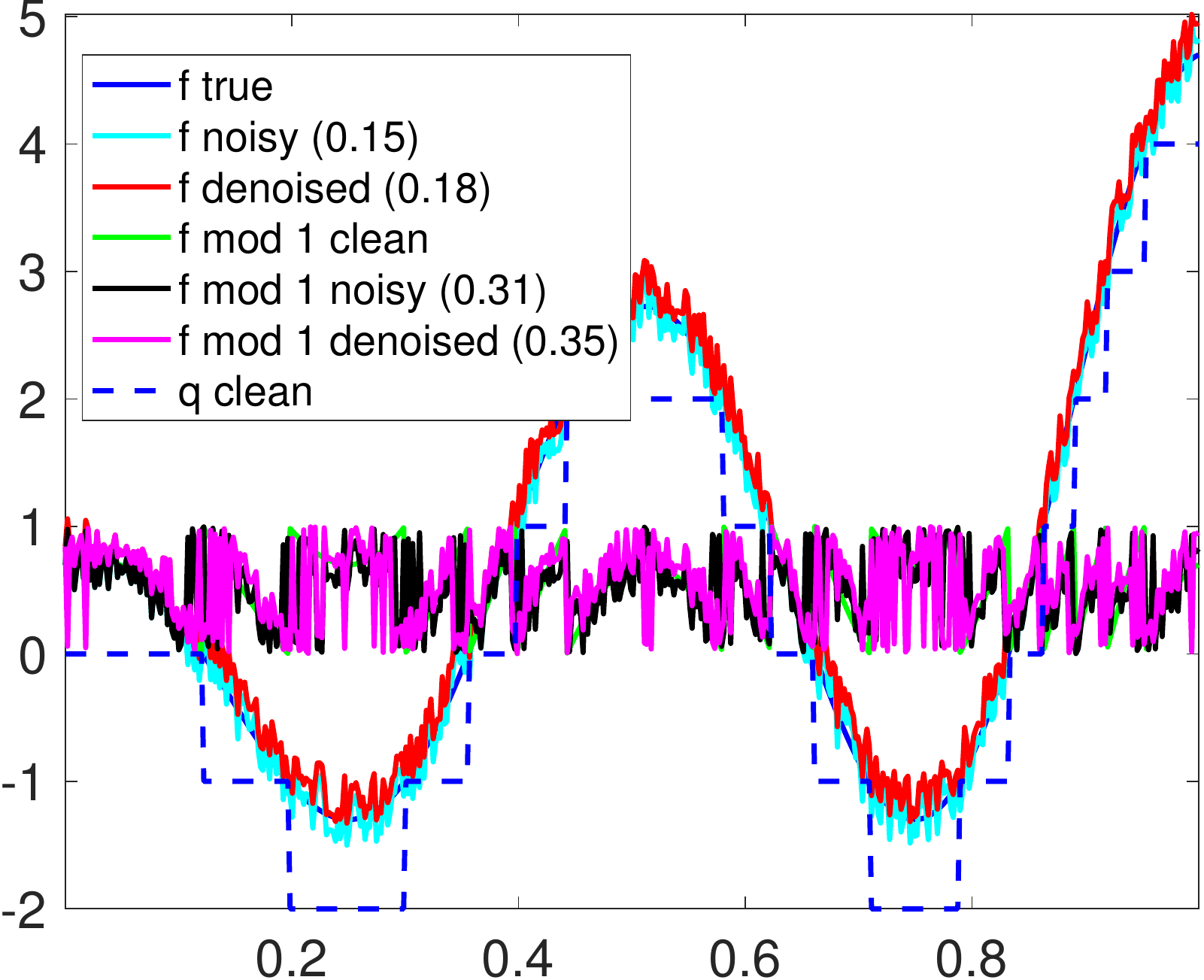} }
%
\subcaptionbox[]{  $\gamma=0.27$, \textbf{iQCQP} (10 iters.)
}[ 0.32\textwidth ]
{\includegraphics[width=0.27\textwidth] {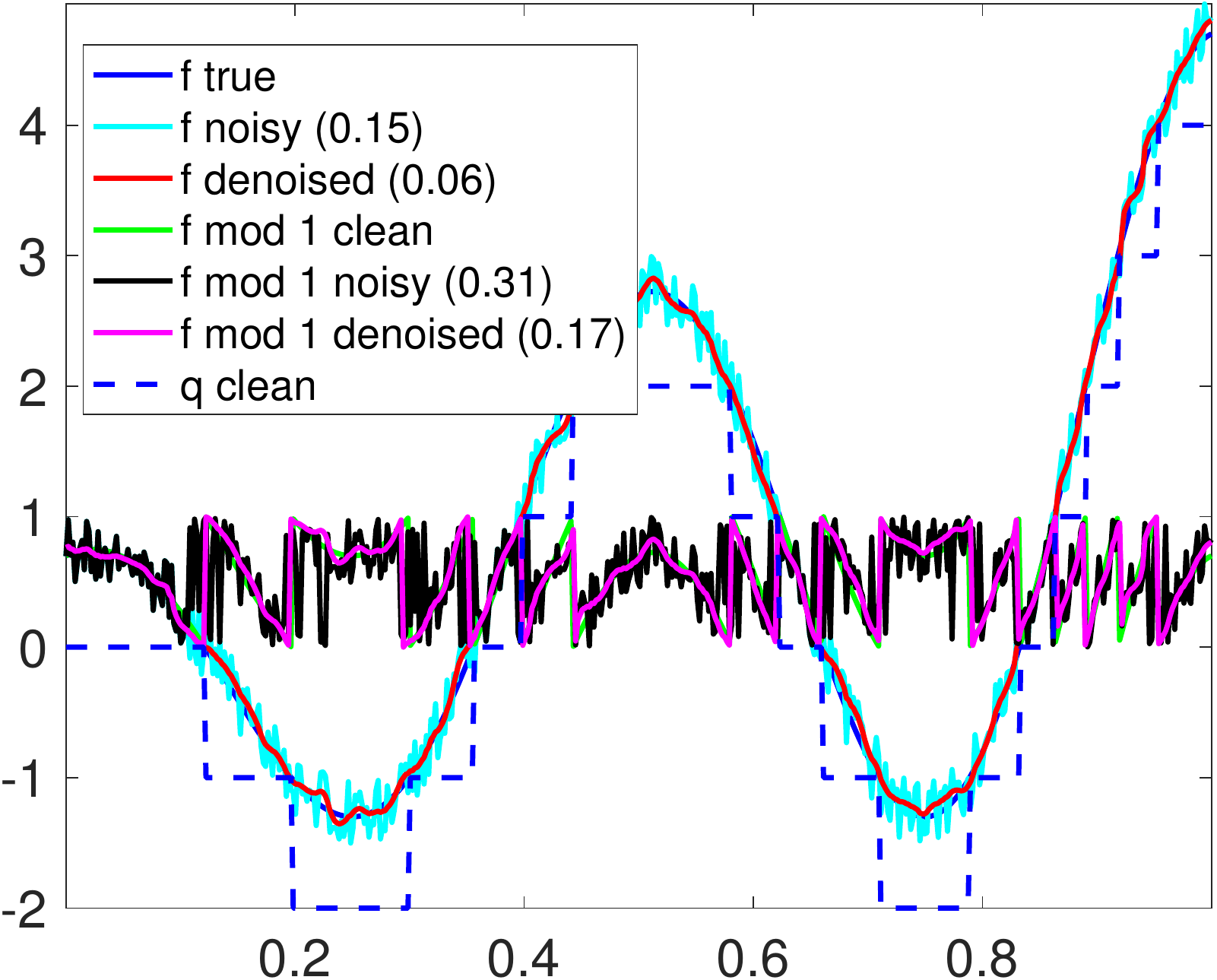} }
%
%

\vspace{4mm}
\subcaptionbox[]{  $\gamma=0.30$, \textbf{OLS}
}[ 0.32\textwidth ]
{\includegraphics[width=0.27\textwidth] {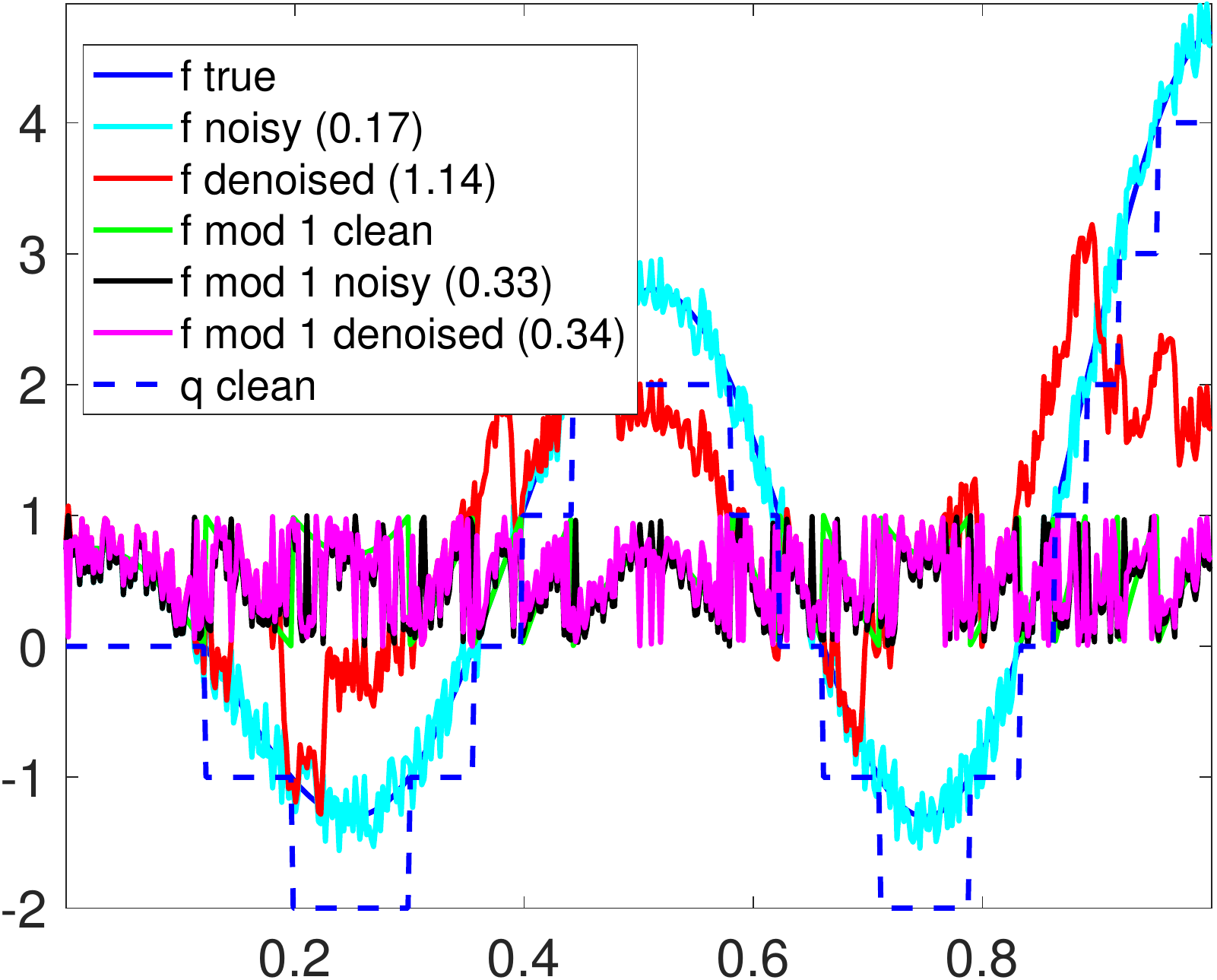} }
%
\subcaptionbox[]{  $\gamma=0.30$, \textbf{QCQP}
}[ 0.32\textwidth ]
{\includegraphics[width=0.27\textwidth] {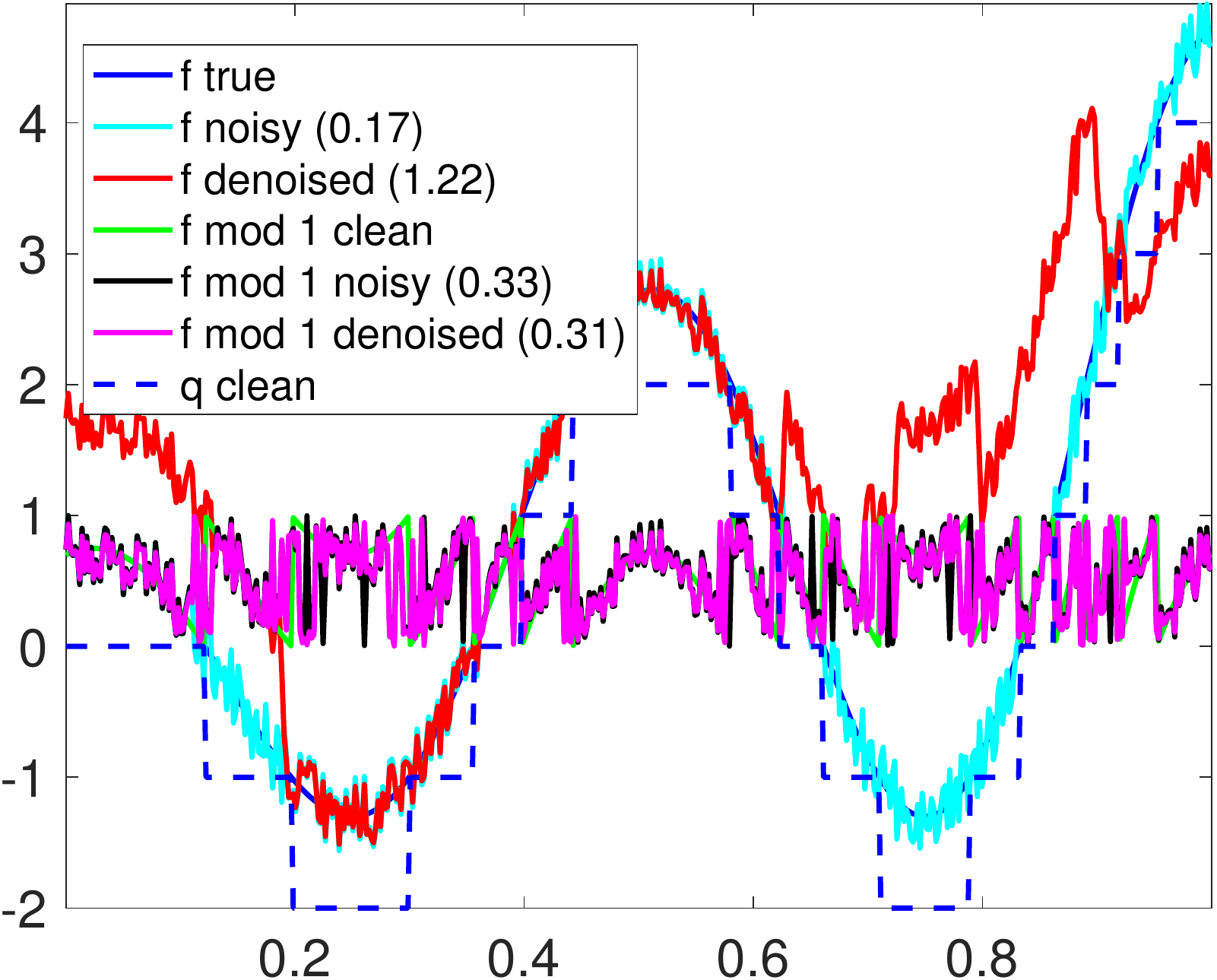} }
%
\subcaptionbox[]{  $\gamma=0.30$, \textbf{iQCQP}  (10 iters.)
}[ 0.32\textwidth ]
{\includegraphics[width=0.27\textwidth] {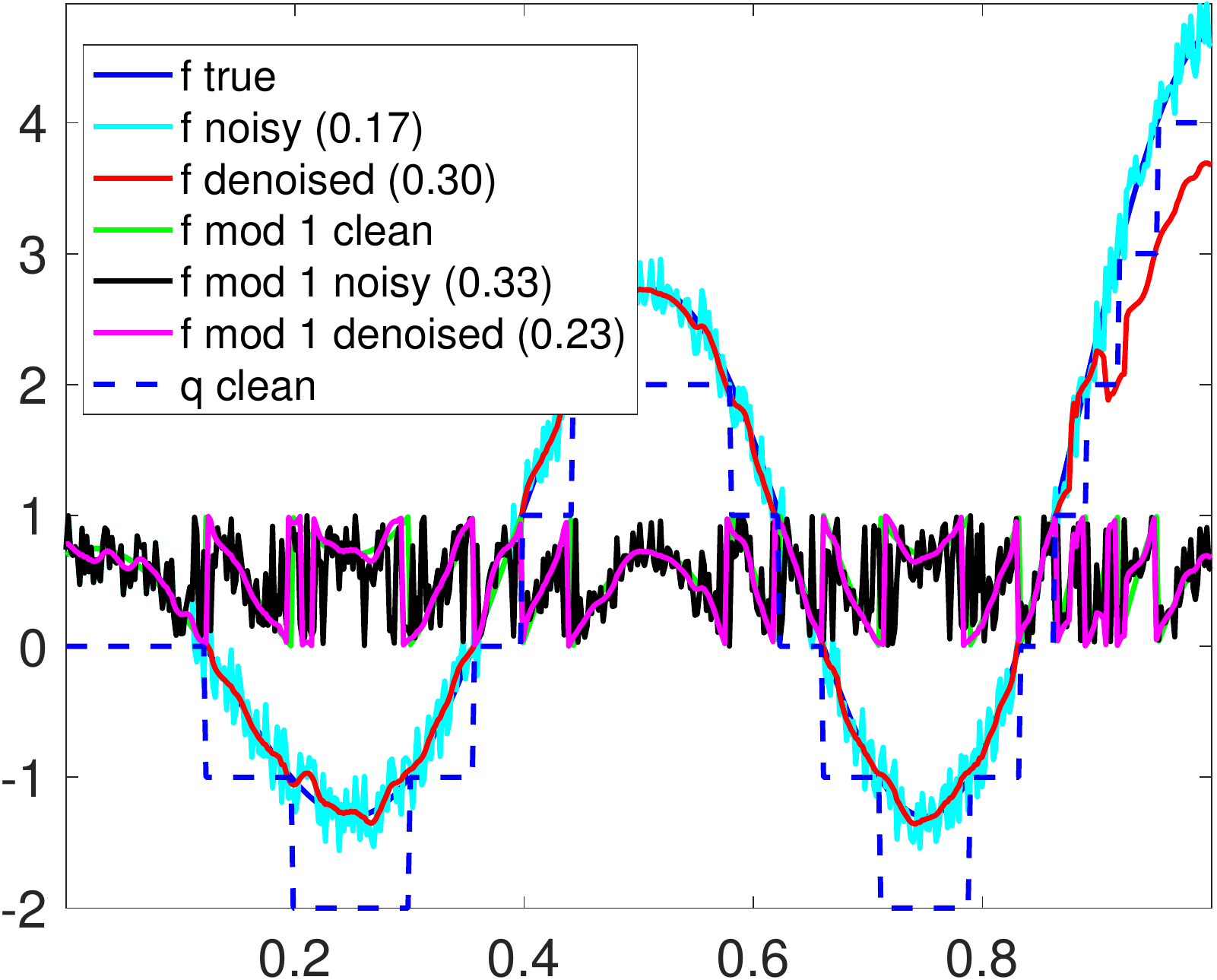} }
%
%
%
%
%
\vspace{-2mm}
\captionsetup{width=0.98\linewidth}
\caption[Short Caption]{Denoised instances under the Uniform noise model, for \textbf{OLS}, \textbf{QCQP} and \textbf{iQCQP},  as we increase the noise level $\gamma$. We keep fixed the parameters $n=500$, $k=2$, $\lambda= 0.1$. The numerical values in the legend denote the RMSE. \textbf{QCQP} denotes Algorithm \ref{algo:two_stage_denoise}, for which  the unwrapping stage is  performed via \textbf{OLS} \eqref{eq:ols_unwrap_lin_system}.
}
\label{fig:instances_f1_Bounded}
\end{figure}

\begin{figure}[!ht]
\centering
\subcaptionbox[]{ $k=2$, $\lambda= 0.03$}[ 0.19\textwidth ]
{\includegraphics[width=0.19\textwidth] {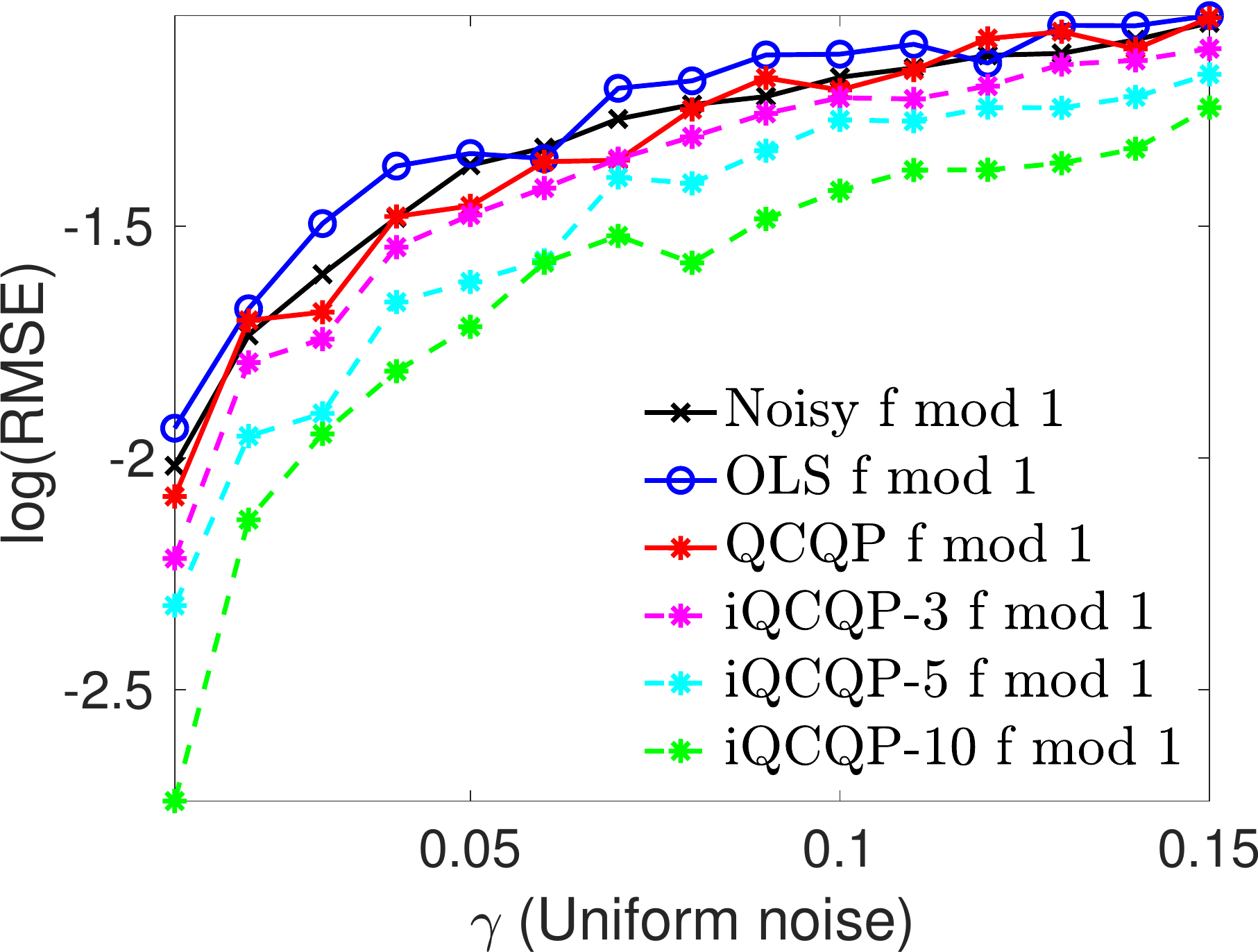} }
\subcaptionbox[]{ $k=2$, $\lambda= 0.1$}[ 0.19\textwidth ]
{\includegraphics[width=0.19\textwidth] {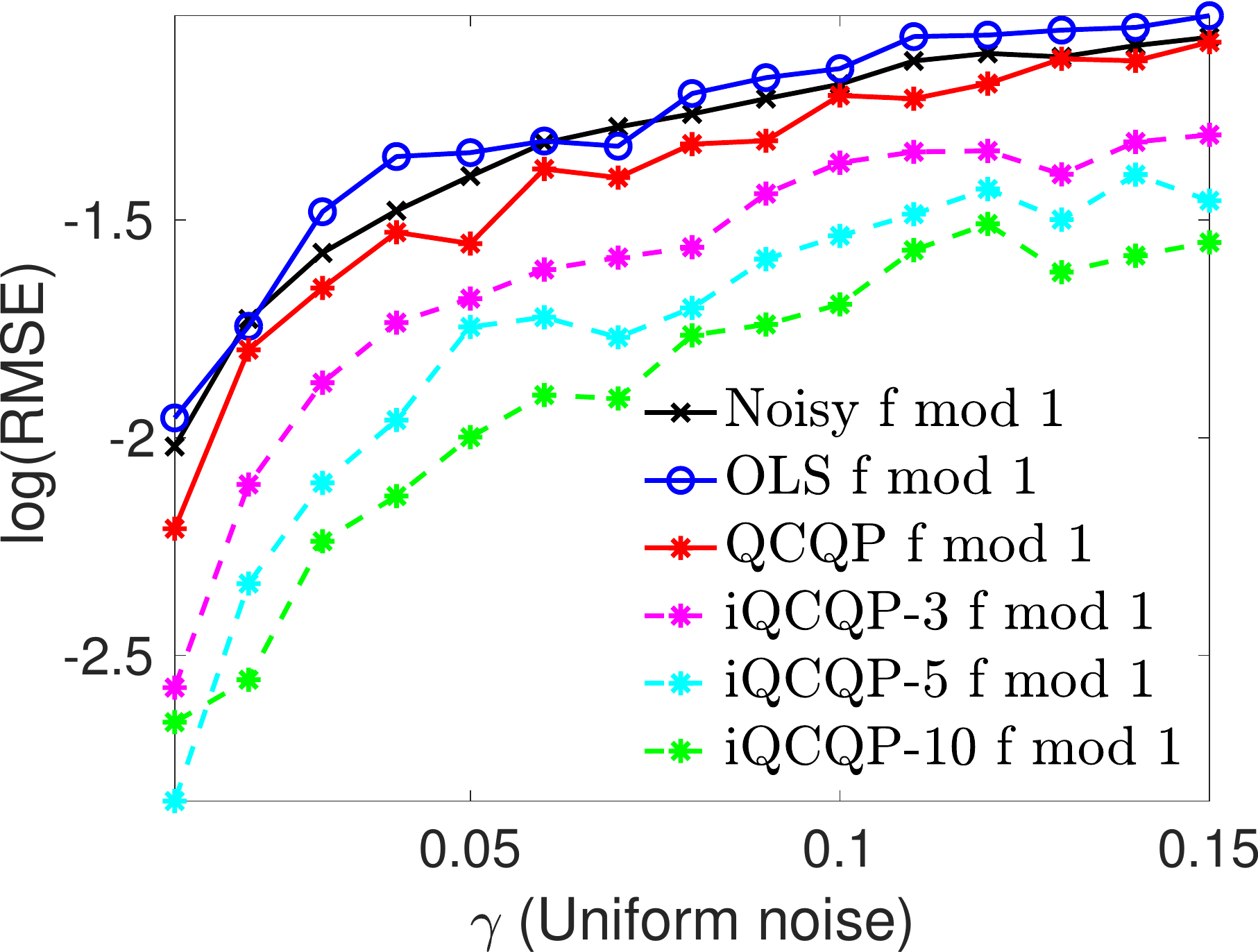} }
%
\subcaptionbox[]{ $k=2$, $\lambda= 0.3$}[ 0.19\textwidth ]
{\includegraphics[width=0.19\textwidth] {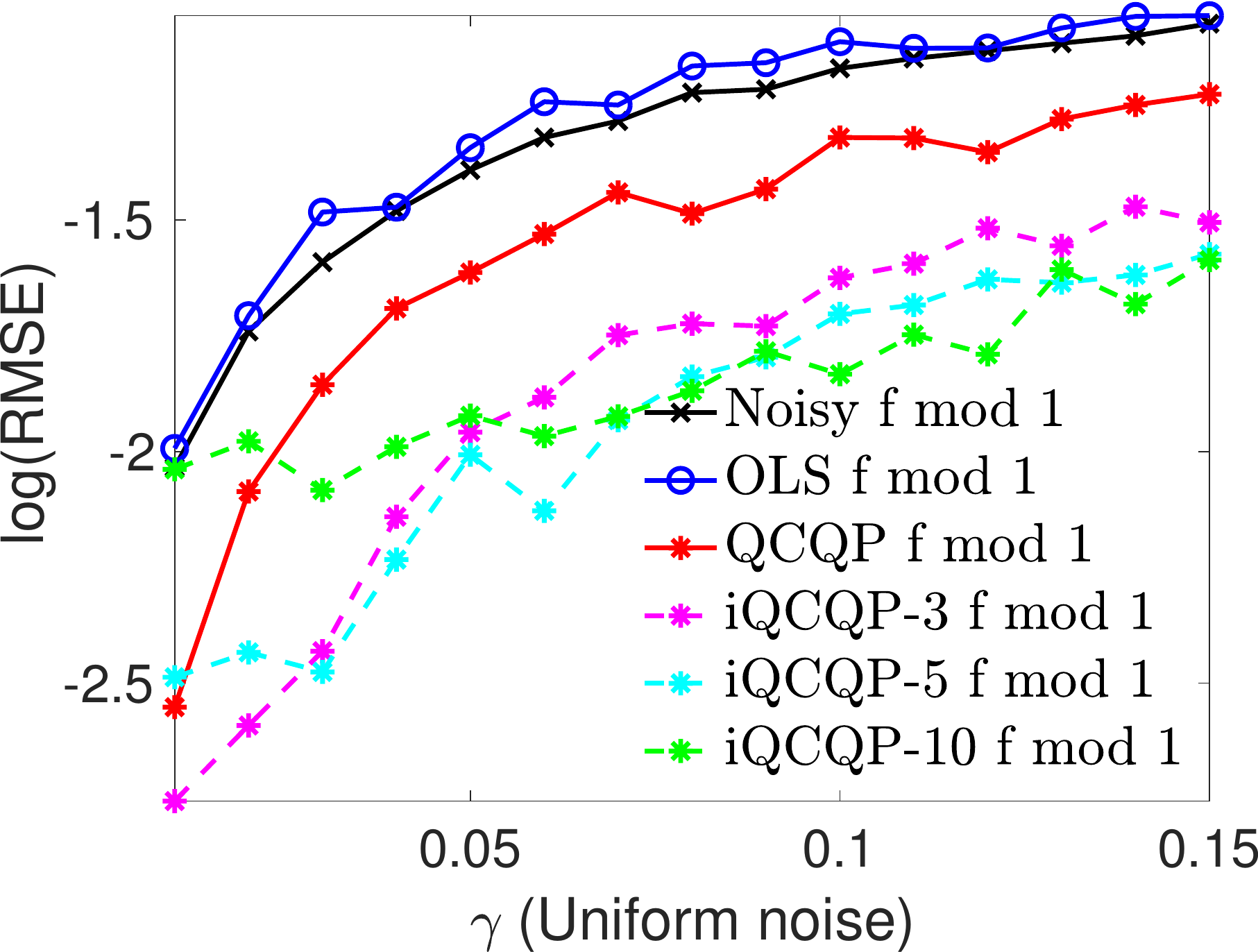} }
%
\subcaptionbox[]{ $k=2$, $\lambda= 0.5$}[ 0.19\textwidth ]
{\includegraphics[width=0.19\textwidth] {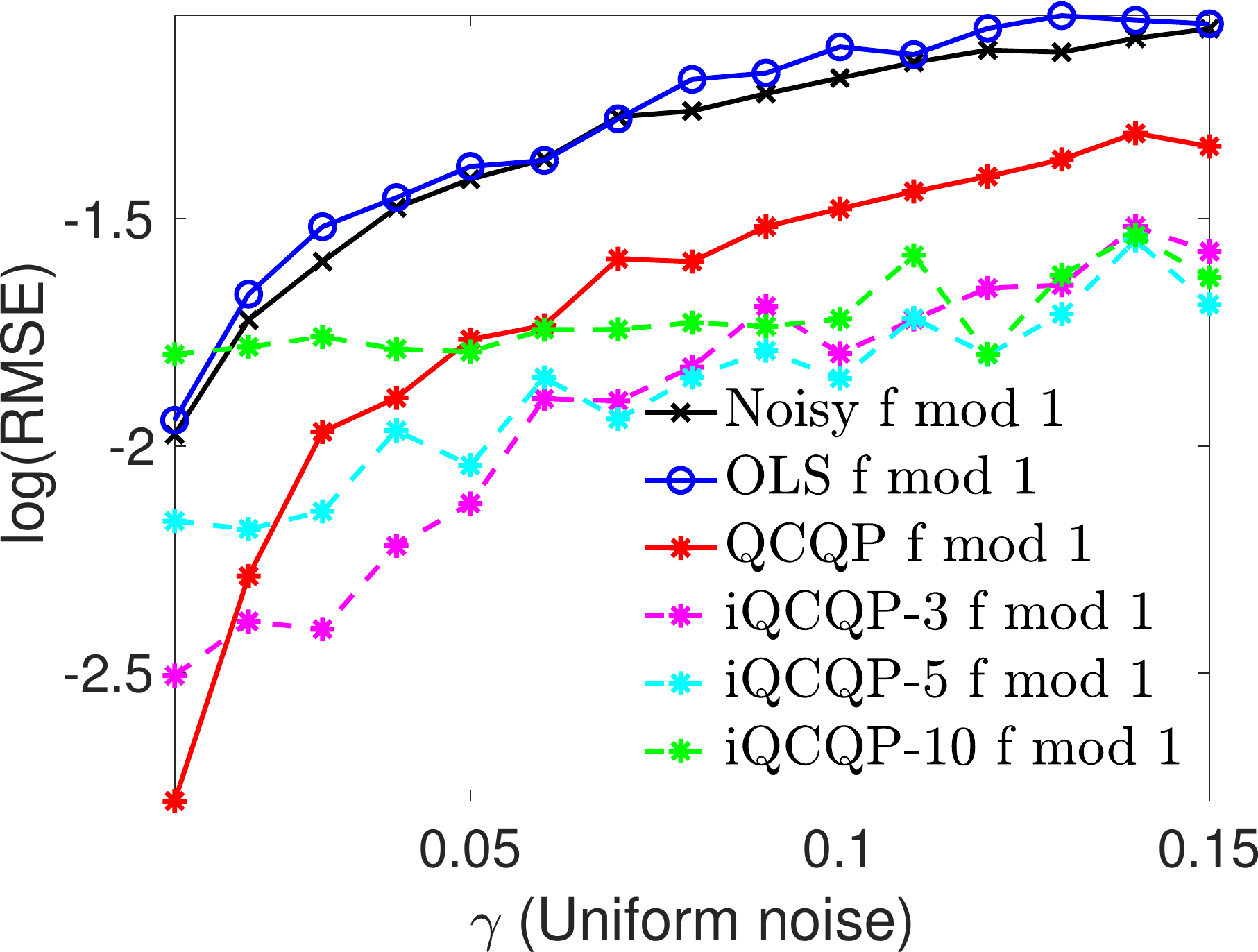} }
%
\subcaptionbox[]{ $k=2$, $\lambda= 1$}[ 0.19\textwidth ]
{\includegraphics[width=0.19\textwidth] {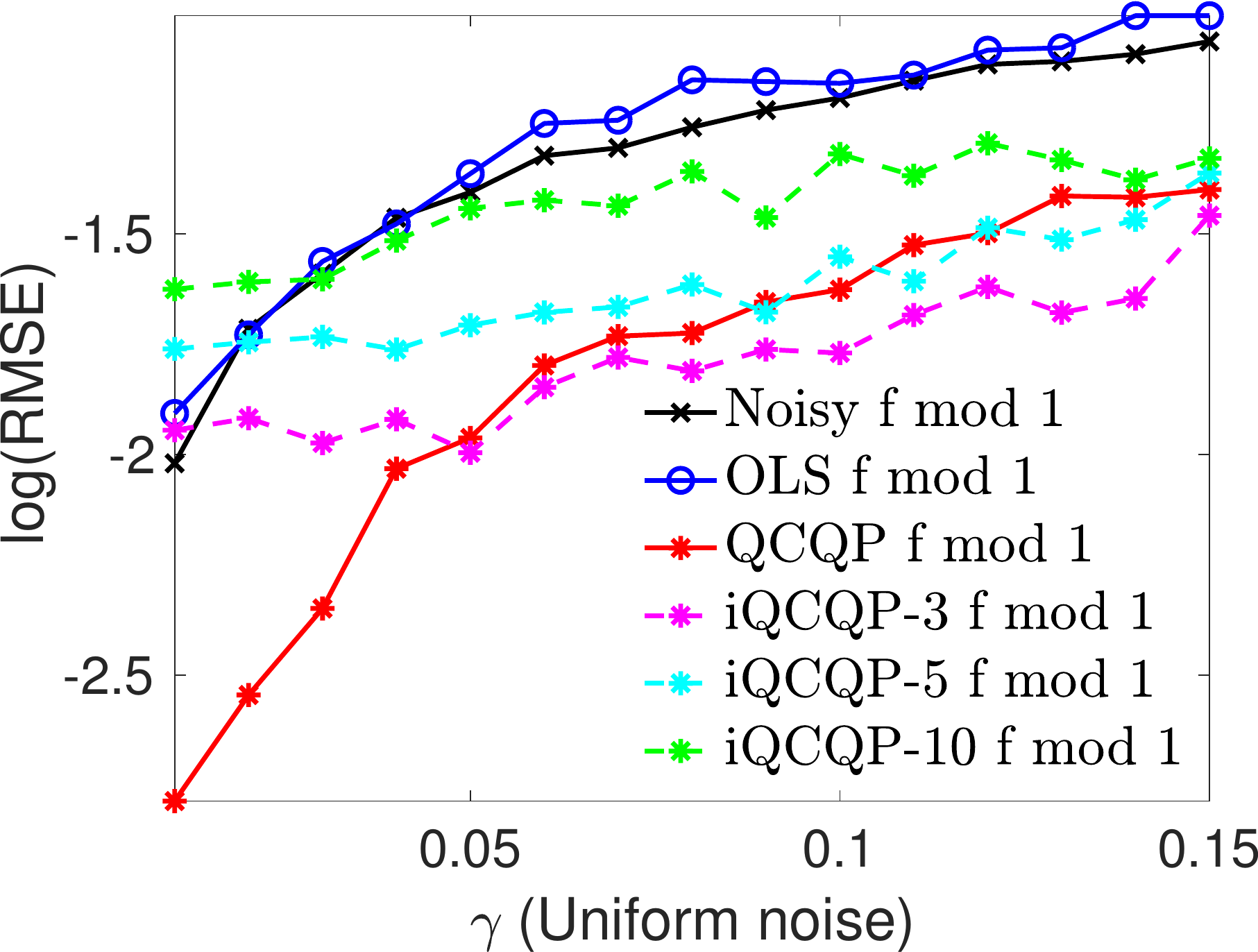} }
\hspace{0.1\textwidth} 
\subcaptionbox[]{ $k=3$, $\lambda= 0.03$}[ 0.19\textwidth ]
{\includegraphics[width=0.19\textwidth] {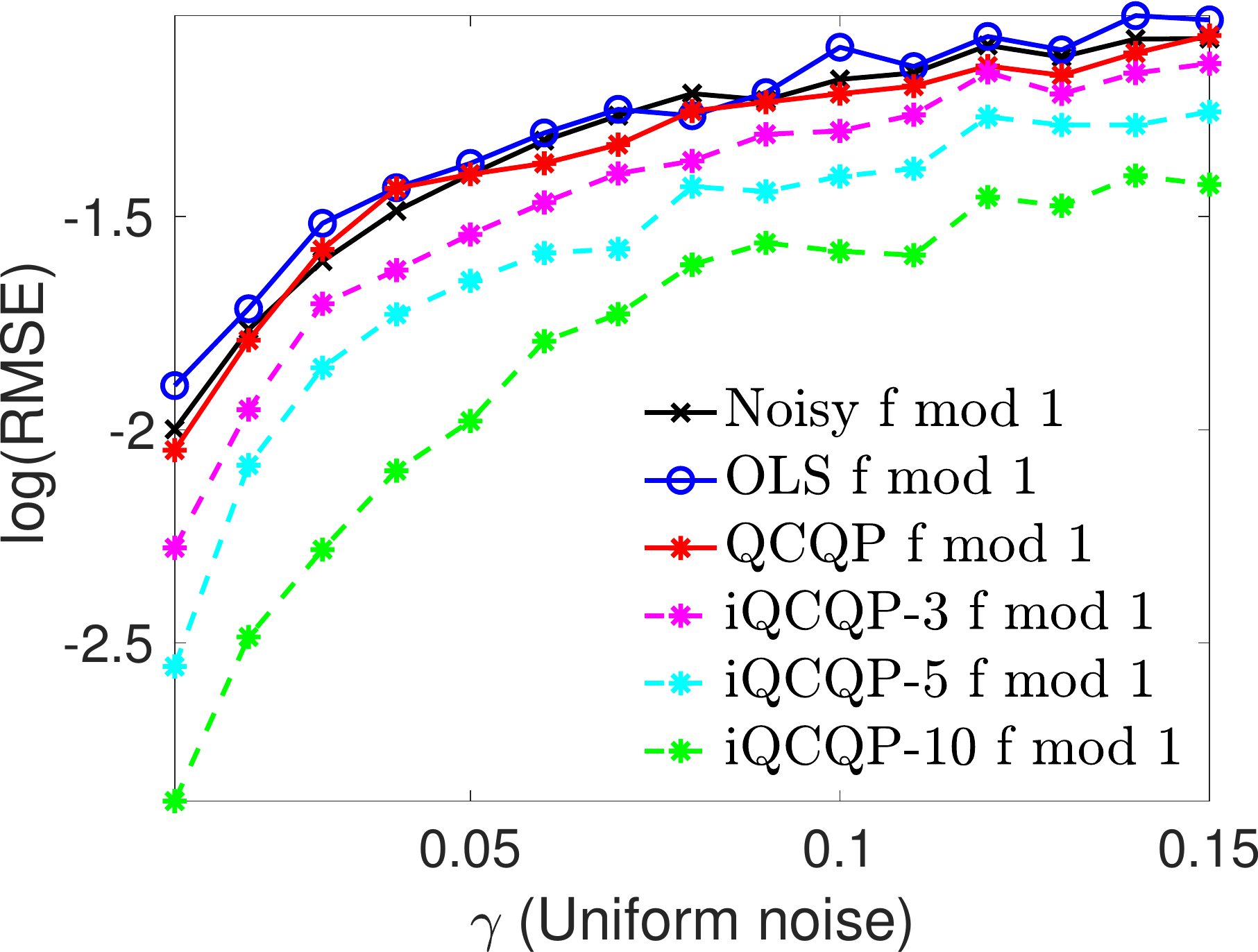} }
\subcaptionbox[]{ $k=3$, $\lambda= 0.1$}[ 0.19\textwidth ]
{\includegraphics[width=0.19\textwidth] {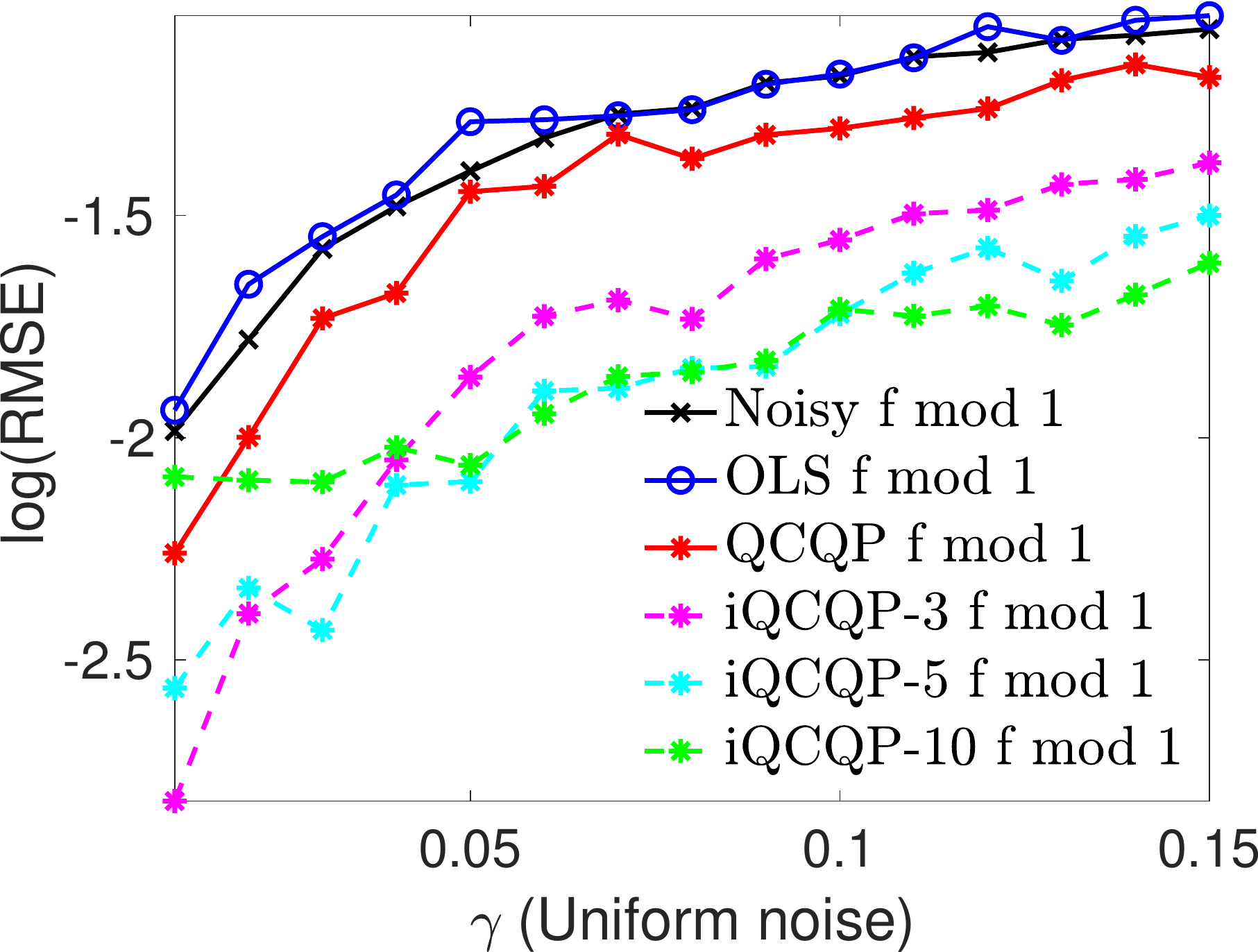} }
%
\subcaptionbox[]{ $k=3$, $\lambda= 0.3$}[ 0.19\textwidth ]
{\includegraphics[width=0.19\textwidth] {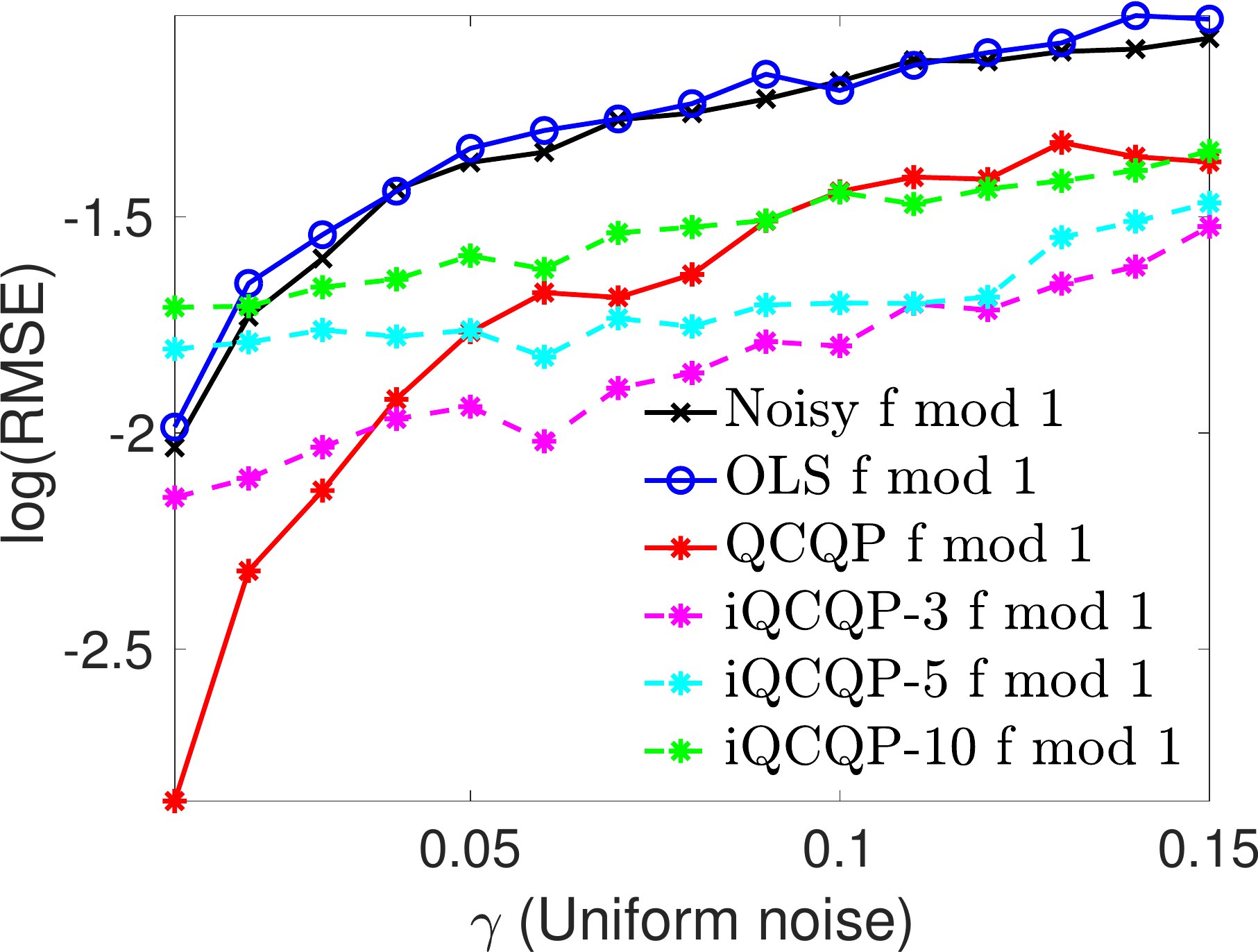} }
%
\subcaptionbox[]{ $k=3$, $\lambda= 0.5$}[ 0.19\textwidth ]
{\includegraphics[width=0.19\textwidth] {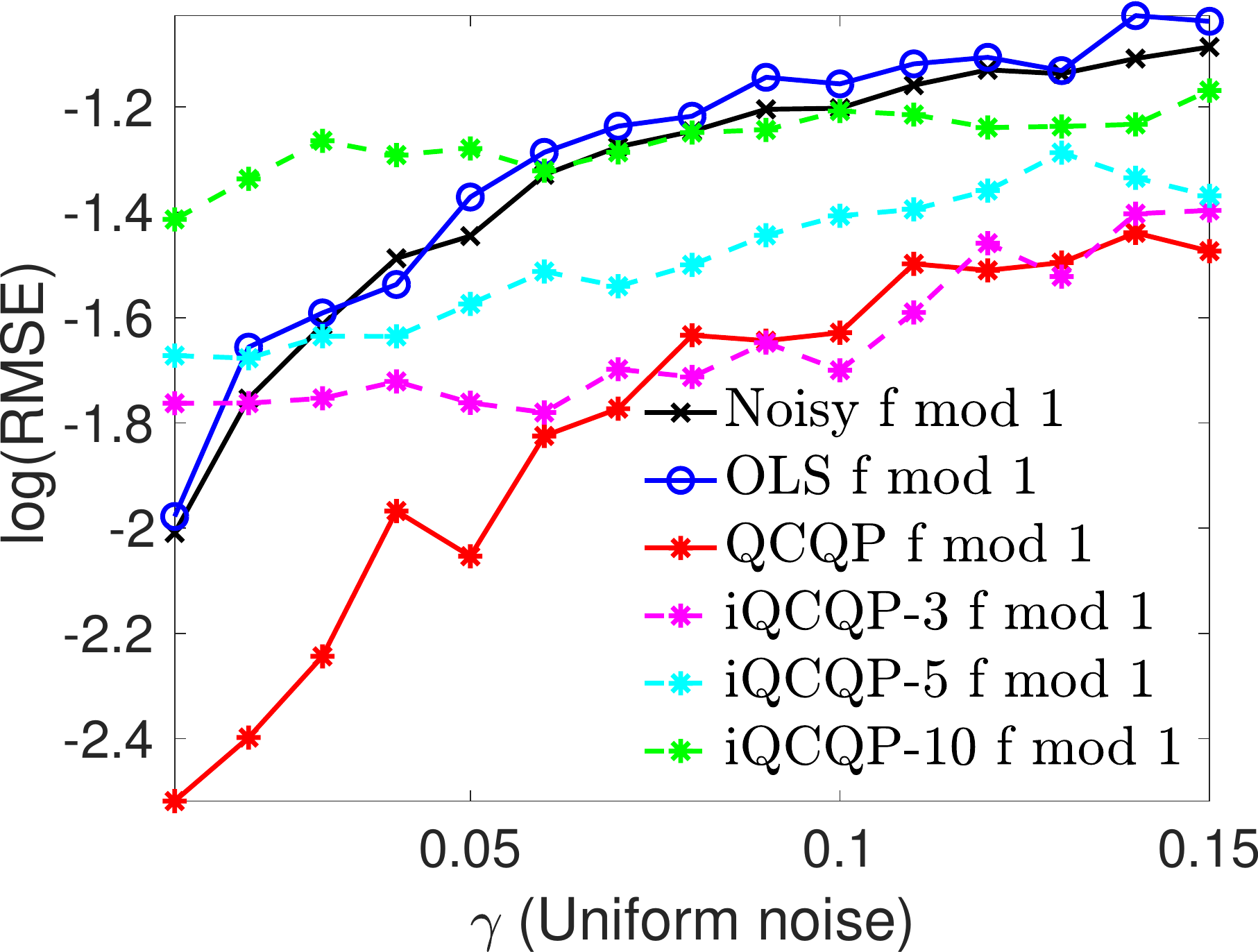} }
%
\subcaptionbox[]{ $k=3$, $\lambda= 1$}[ 0.19\textwidth ]
{\includegraphics[width=0.19\textwidth] {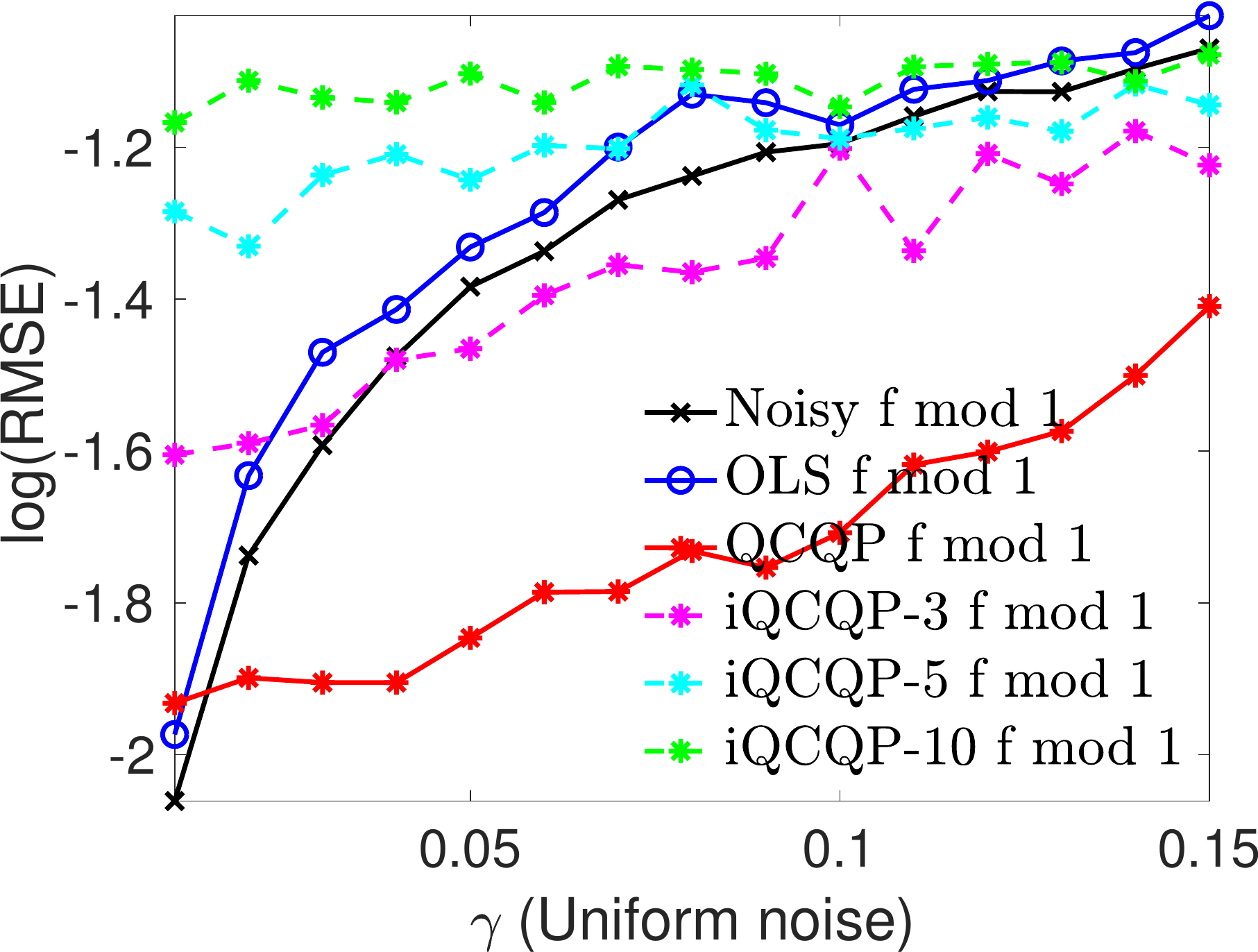} }
\hspace{0.1\textwidth} 
\subcaptionbox[]{ $k=5$, $\lambda= 0.03$}[ 0.19\textwidth ]
{\includegraphics[width=0.19\textwidth] {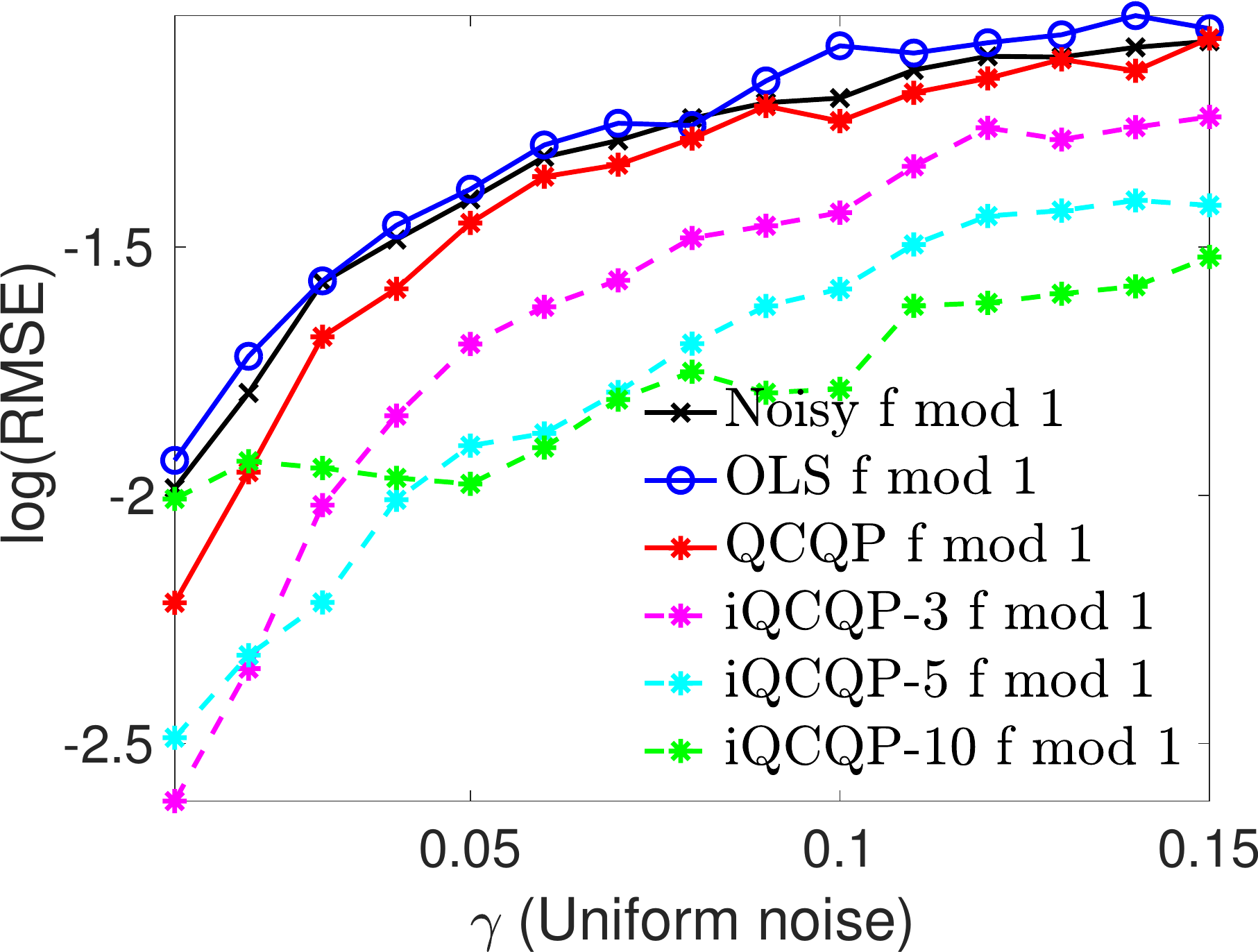} }
\subcaptionbox[]{ $k=5$, $\lambda= 0.1$}[ 0.19\textwidth ]
{\includegraphics[width=0.19\textwidth] {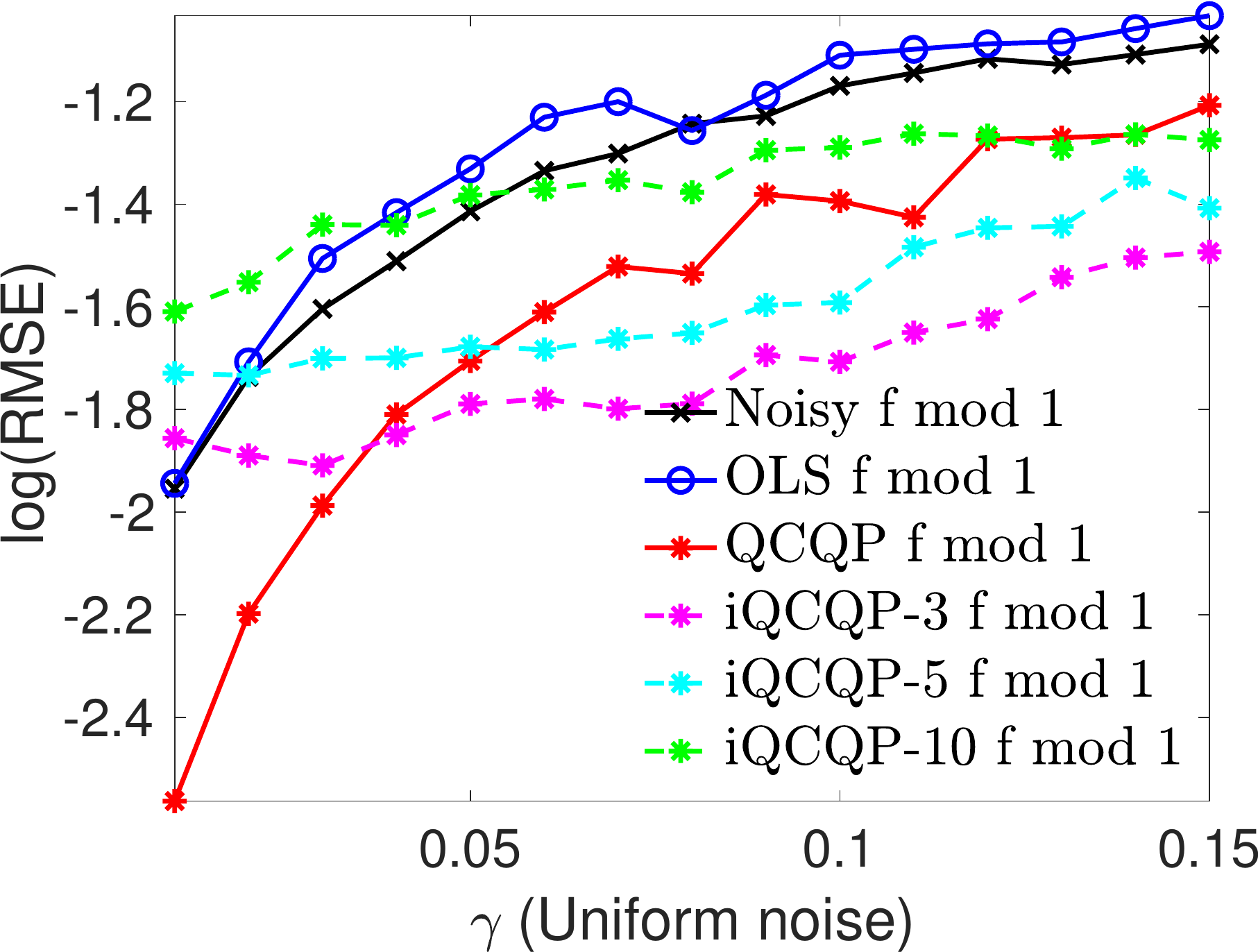} }
%
\subcaptionbox[]{ $k=5$, $\lambda= 0.3$}[ 0.19\textwidth ]
{\includegraphics[width=0.19\textwidth] {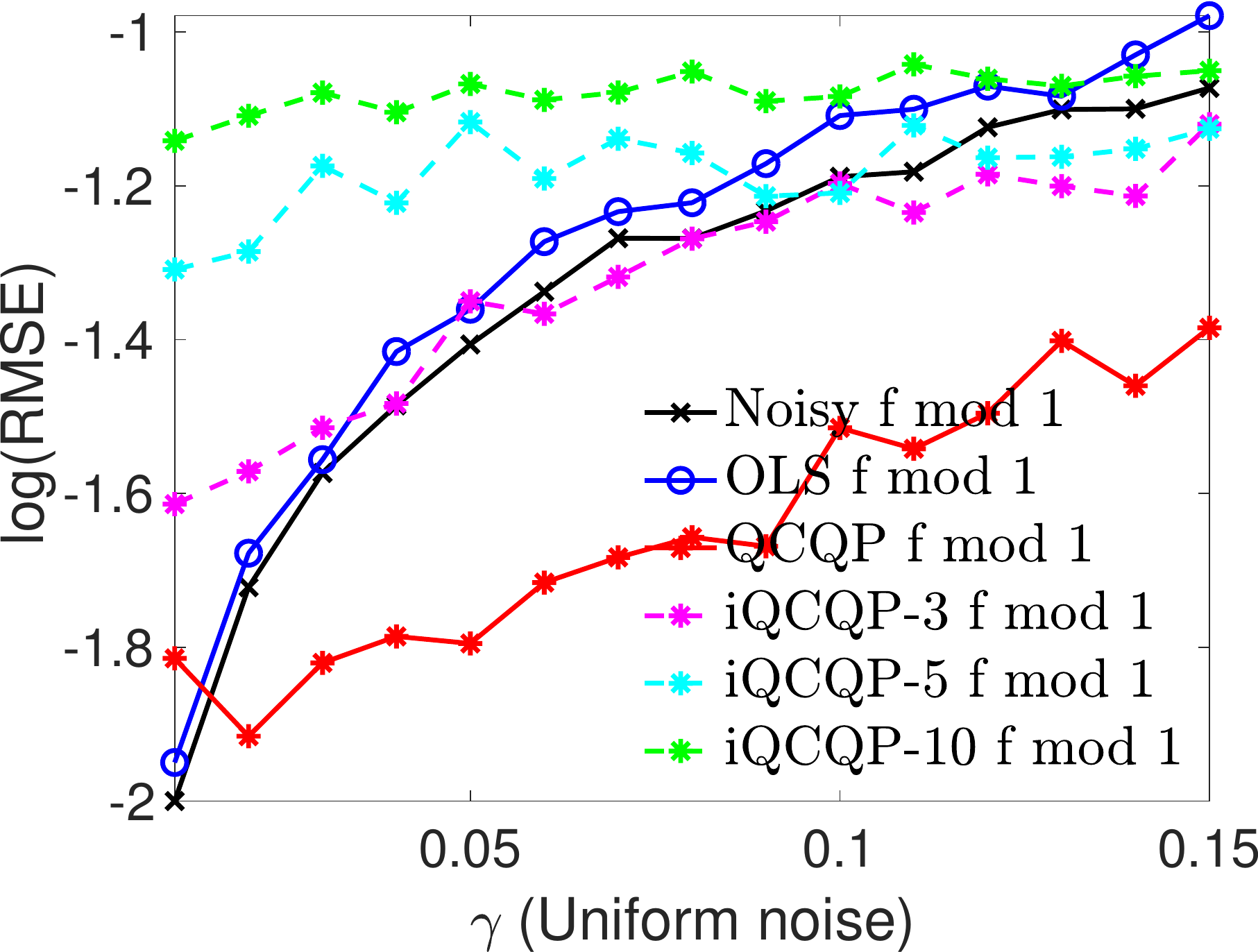} }
%
\subcaptionbox[]{ $k=5$, $\lambda= 0.5$}[ 0.19\textwidth ]
{\includegraphics[width=0.19\textwidth] {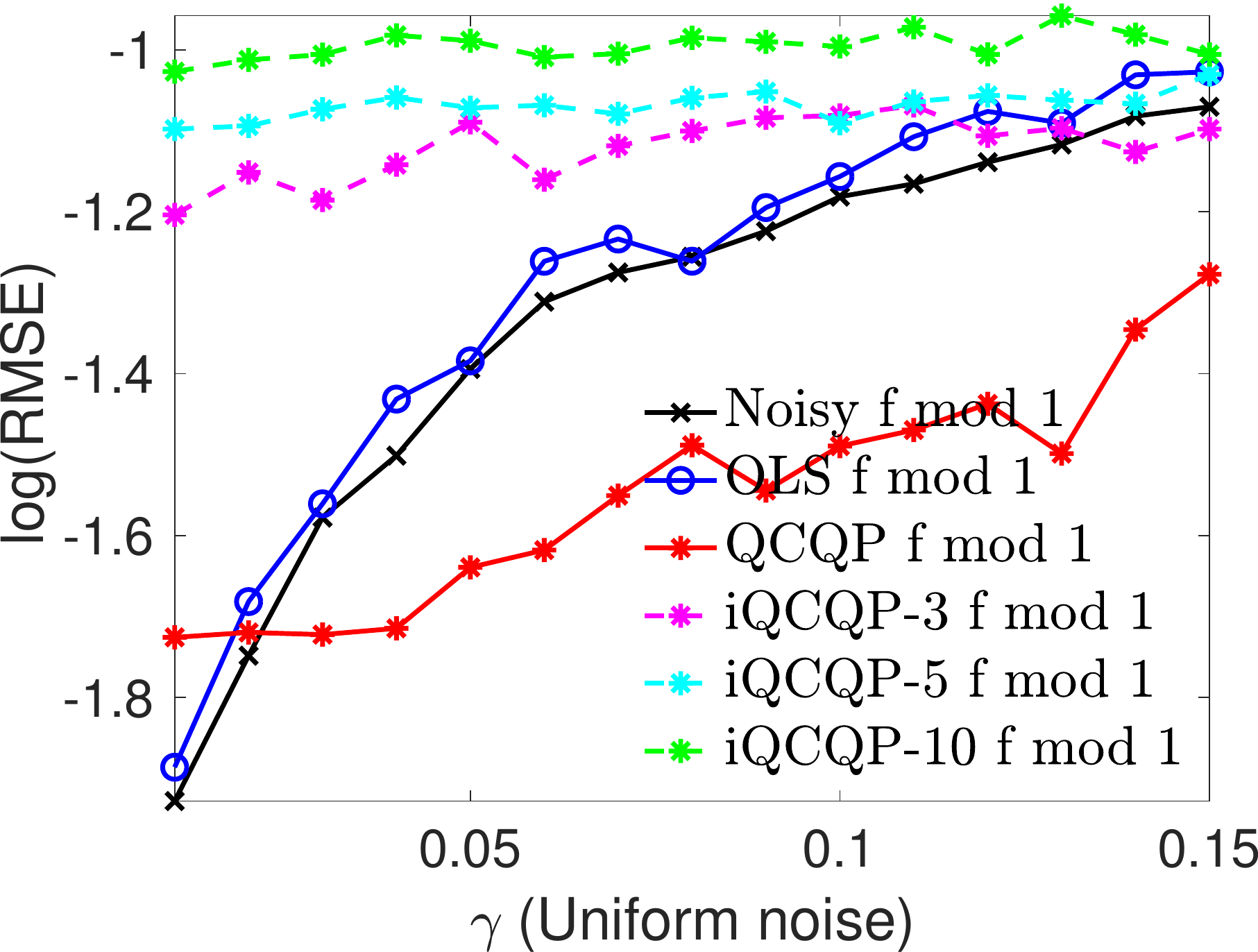} }
%
\subcaptionbox[]{ $k=5$, $\lambda= 1$}[ 0.19\textwidth ]
{\includegraphics[width=0.19\textwidth] {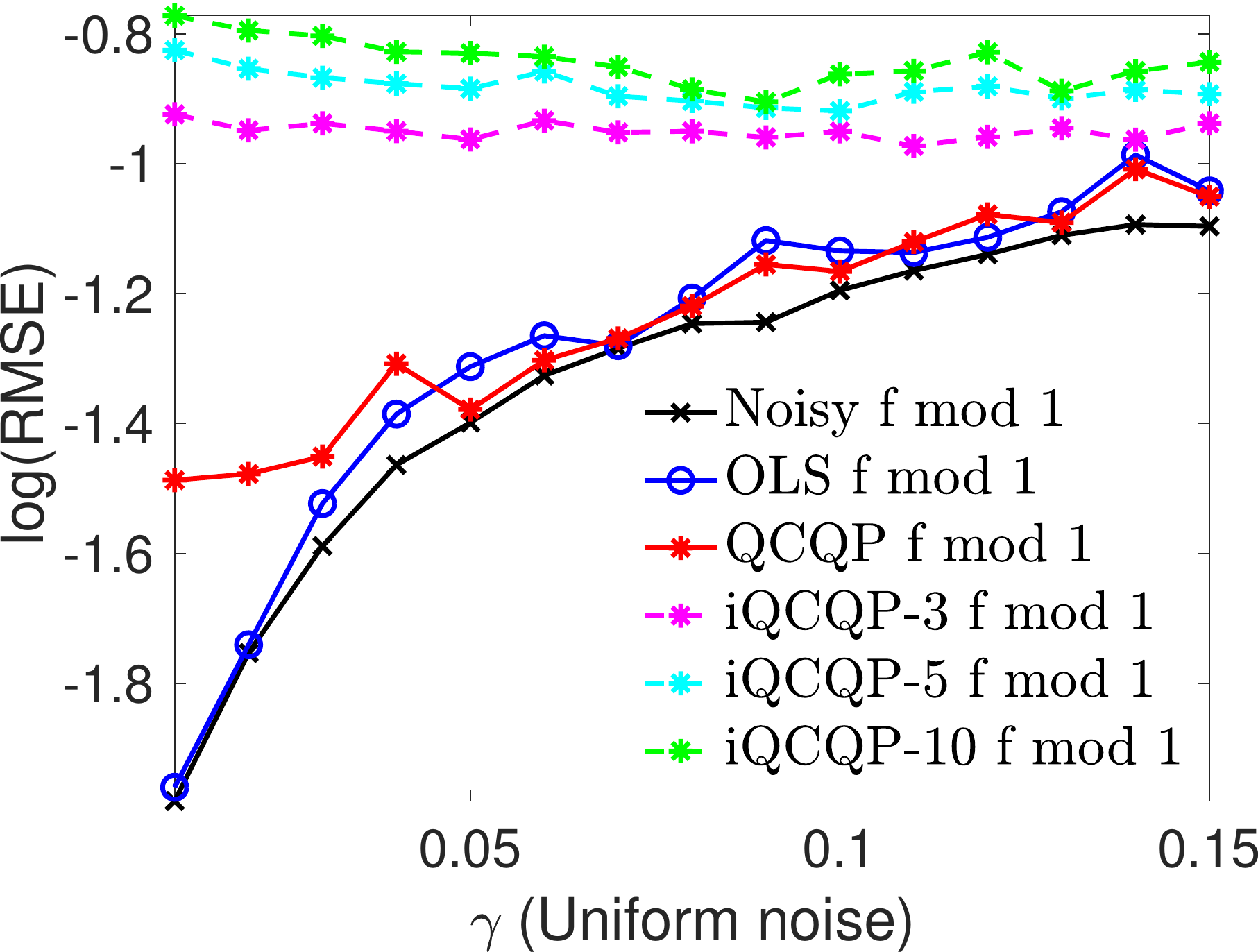} }
\hspace{0.1\textwidth} 
%
%
\vspace{-3mm}
\captionsetup{width=0.98\linewidth}
\caption[Short Caption]{Recovery errors for the denoised $f$ \hspace{-2mm} mod 1 samples, for the Bounded noise model (20 trials). 
\textbf{QCQP} denotes Algorithm \ref{algo:two_stage_denoise} without  the unwrapping stage performed by \textbf{OLS} \eqref{eq:ols_unwrap_lin_system}.
}
\label{fig:Sims_f1_Bounded_fmod1}
\end{figure}
 
\vspace{-3mm}
 
\begin{figure}[!ht]
\centering
\subcaptionbox[]{ $k=2$, $\lambda= 0.03$}[ 0.19\textwidth ]
{\includegraphics[width=0.19\textwidth] {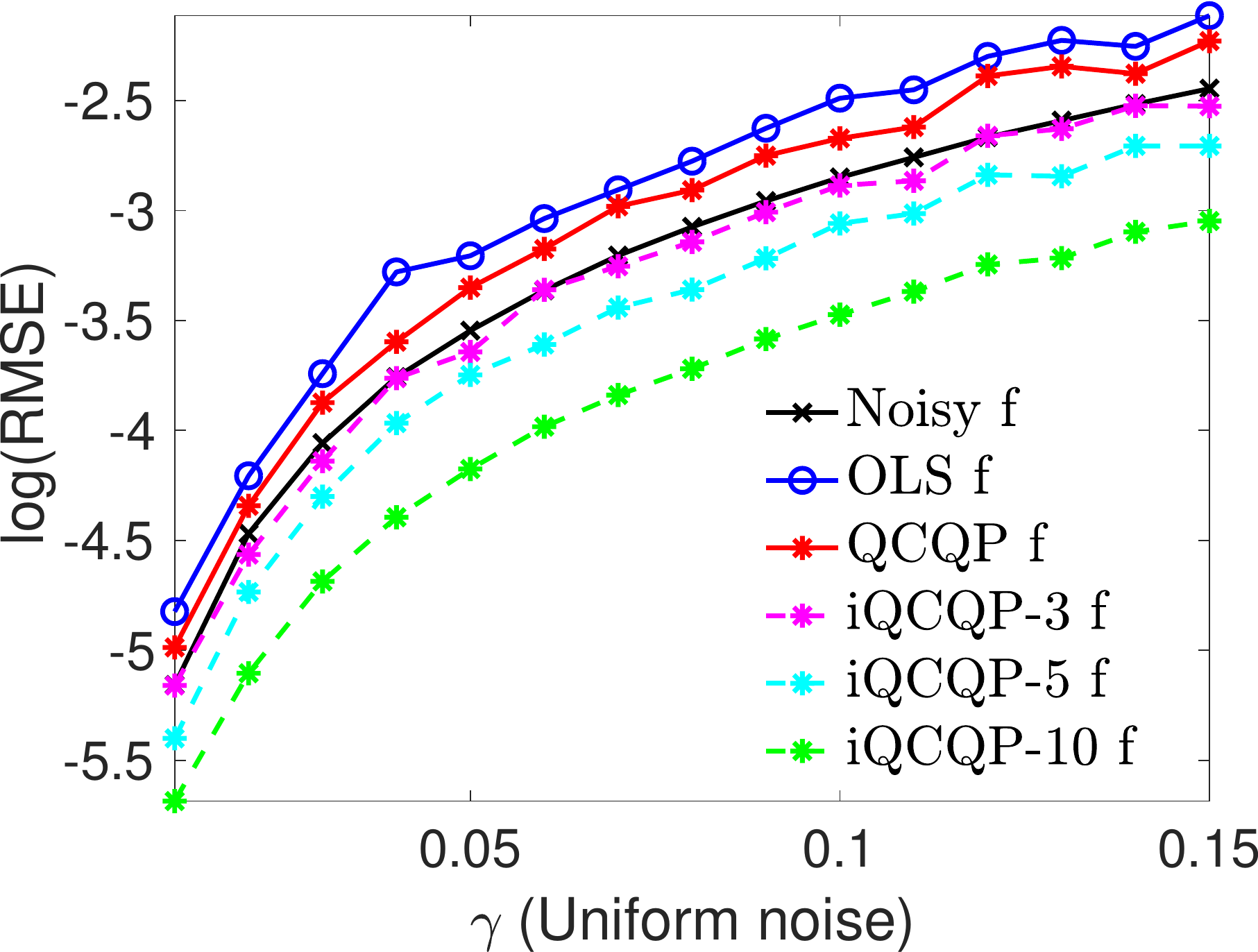} }
\subcaptionbox[]{ $k=2$, $\lambda= 0.1$}[ 0.19\textwidth ]
{\includegraphics[width=0.19\textwidth] {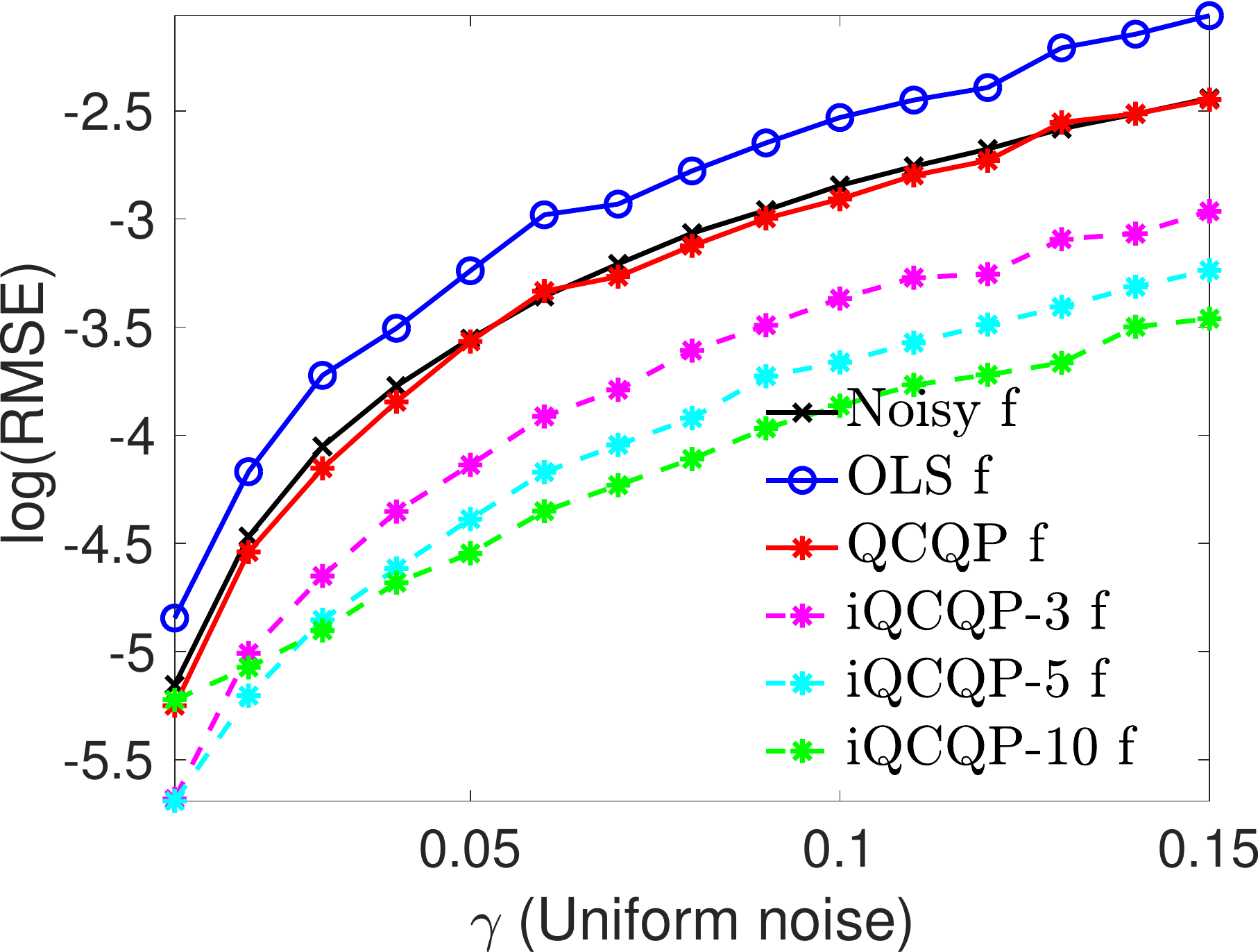} }
%
\subcaptionbox[]{ $k=2$, $\lambda= 0.3$}[ 0.19\textwidth ]
{\includegraphics[width=0.19\textwidth] {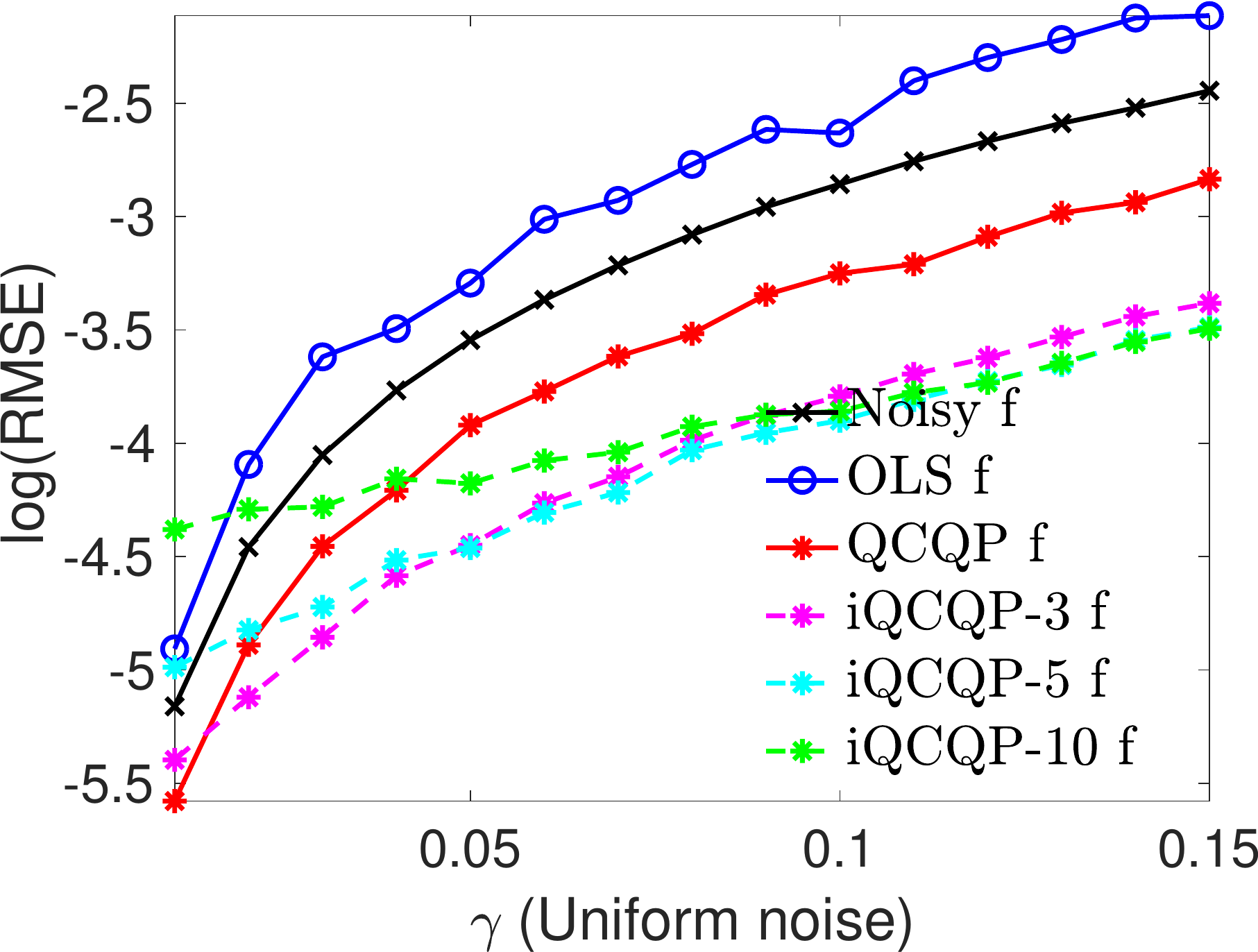} }
%
\subcaptionbox[]{ $k=2$, $\lambda= 0.5$}[ 0.19\textwidth ]
{\includegraphics[width=0.19\textwidth] {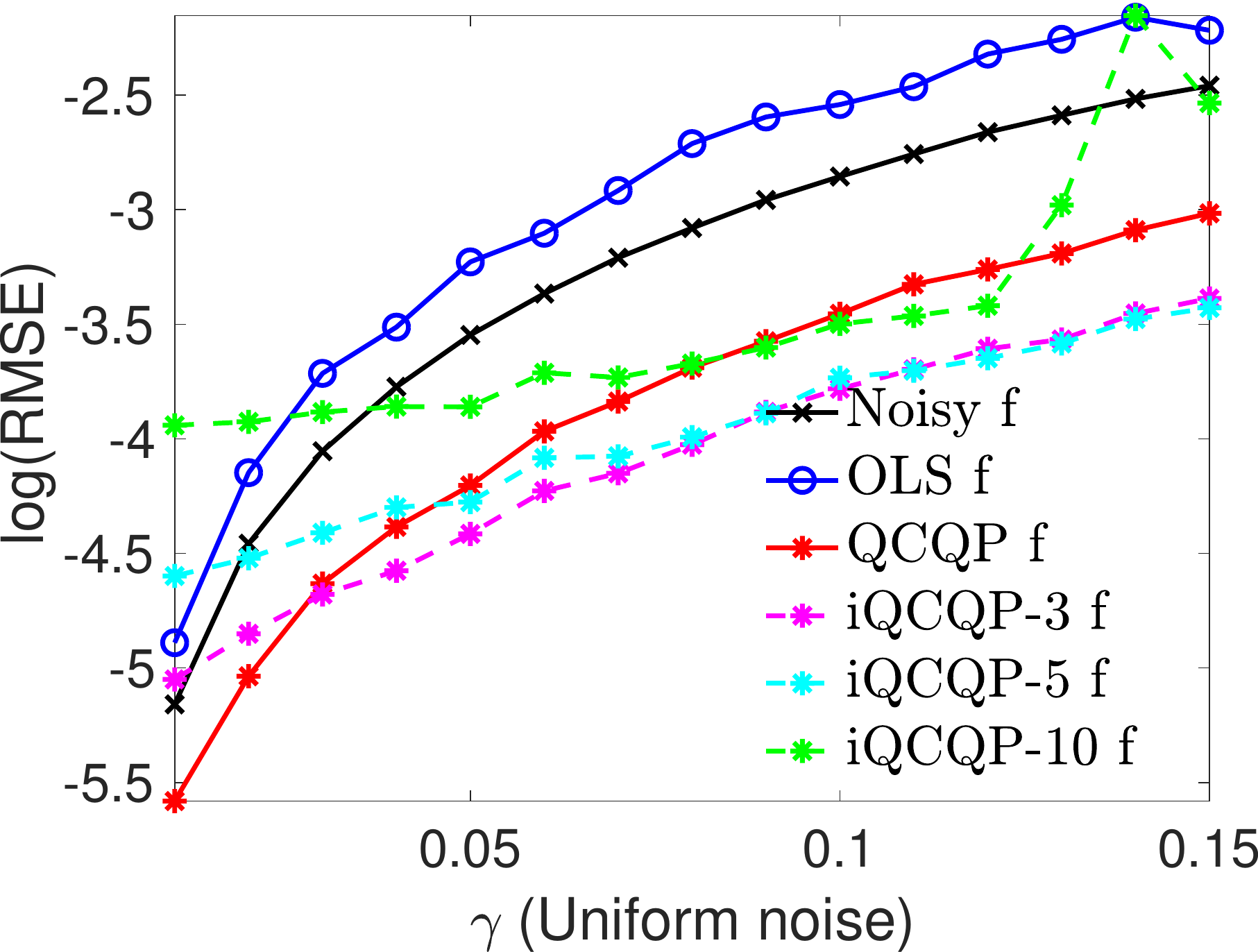} }
%
\subcaptionbox[]{ $k=2$, $\lambda= 1$}[ 0.19\textwidth ]
{\includegraphics[width=0.19\textwidth] {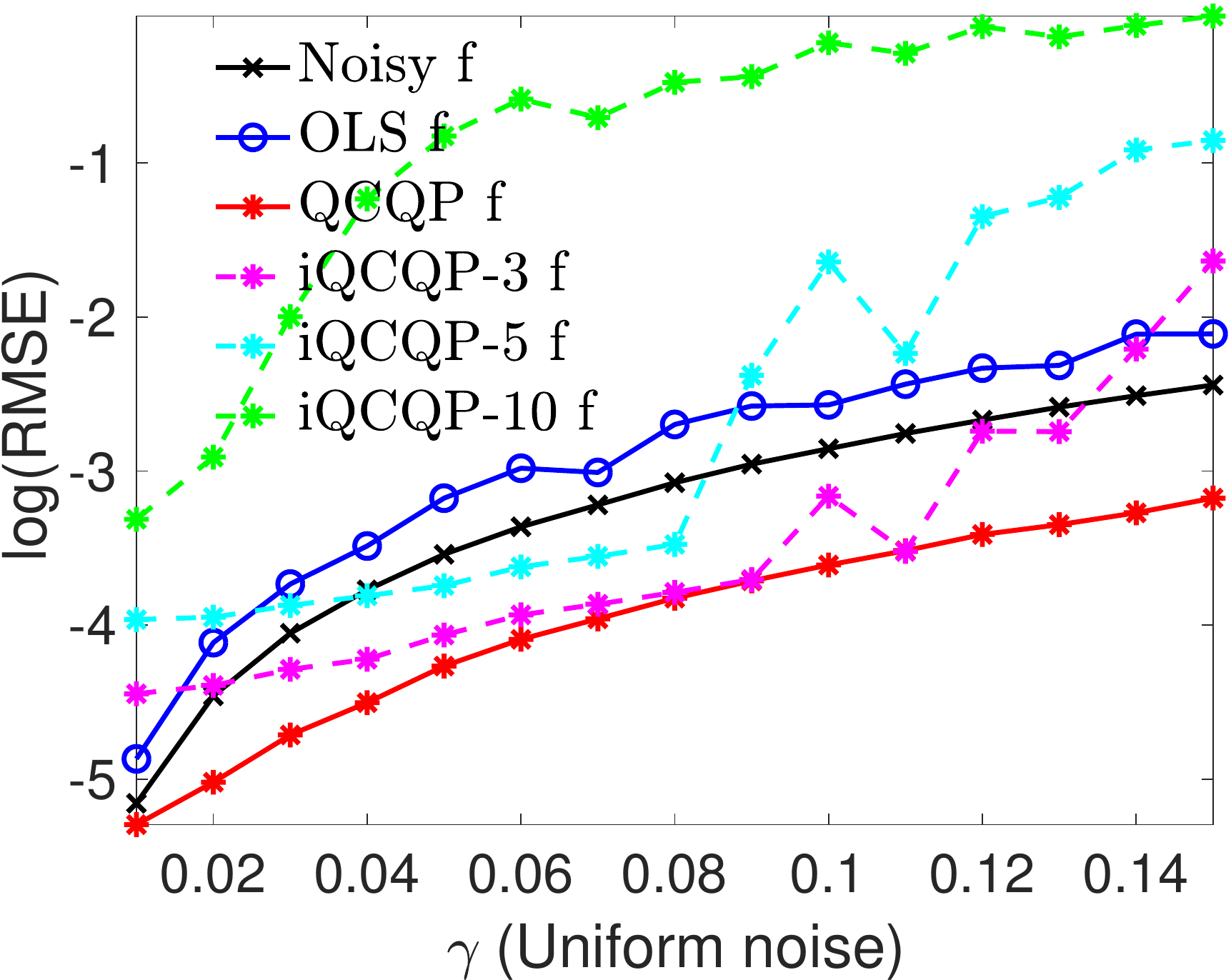} }
%
%
%
\subcaptionbox[]{ $k=3$, $\lambda= 0.03$}[ 0.19\textwidth ]
{\includegraphics[width=0.19\textwidth] {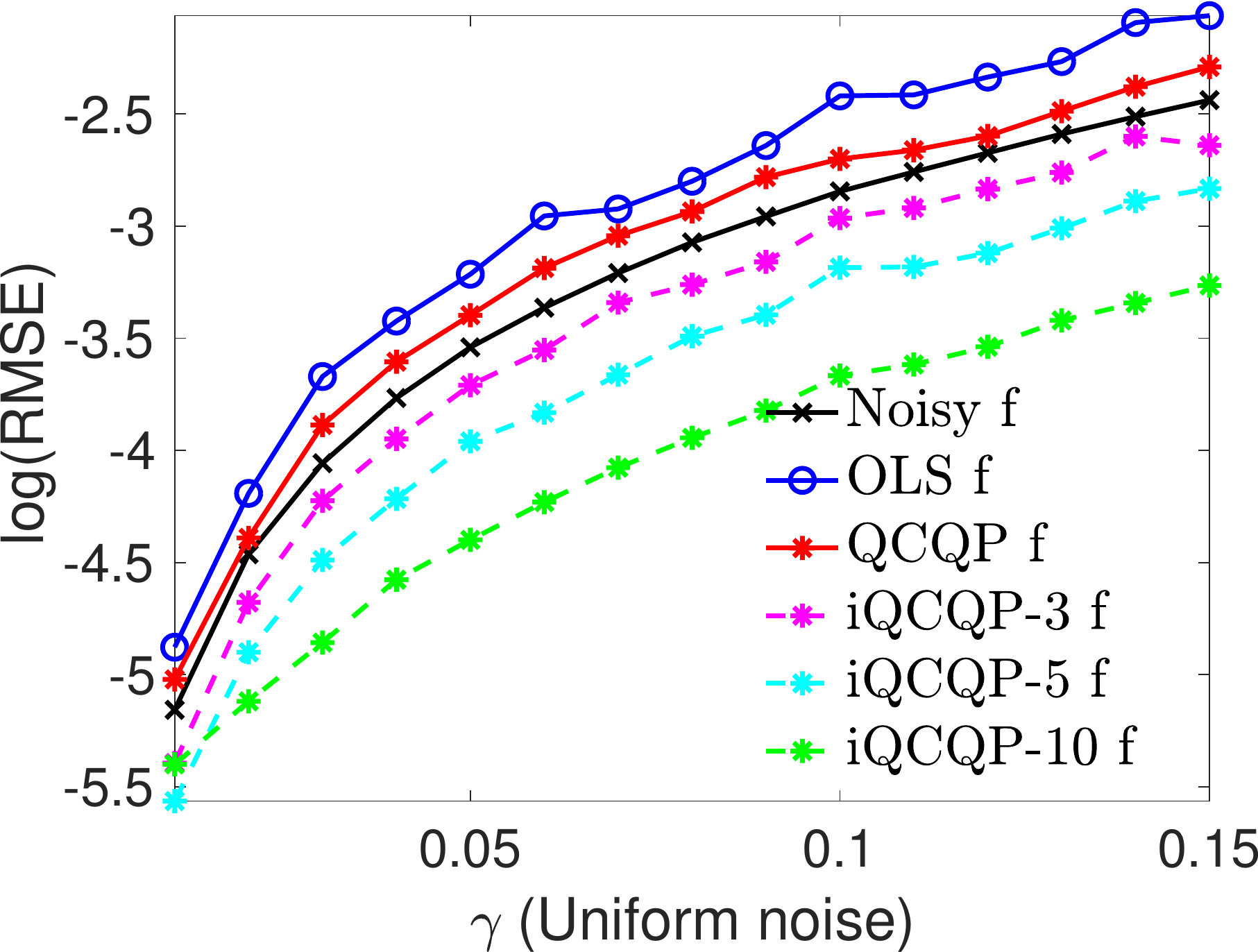} }
\subcaptionbox[]{ $k=3$, $\lambda= 0.1$}[ 0.19\textwidth ]
{\includegraphics[width=0.19\textwidth] {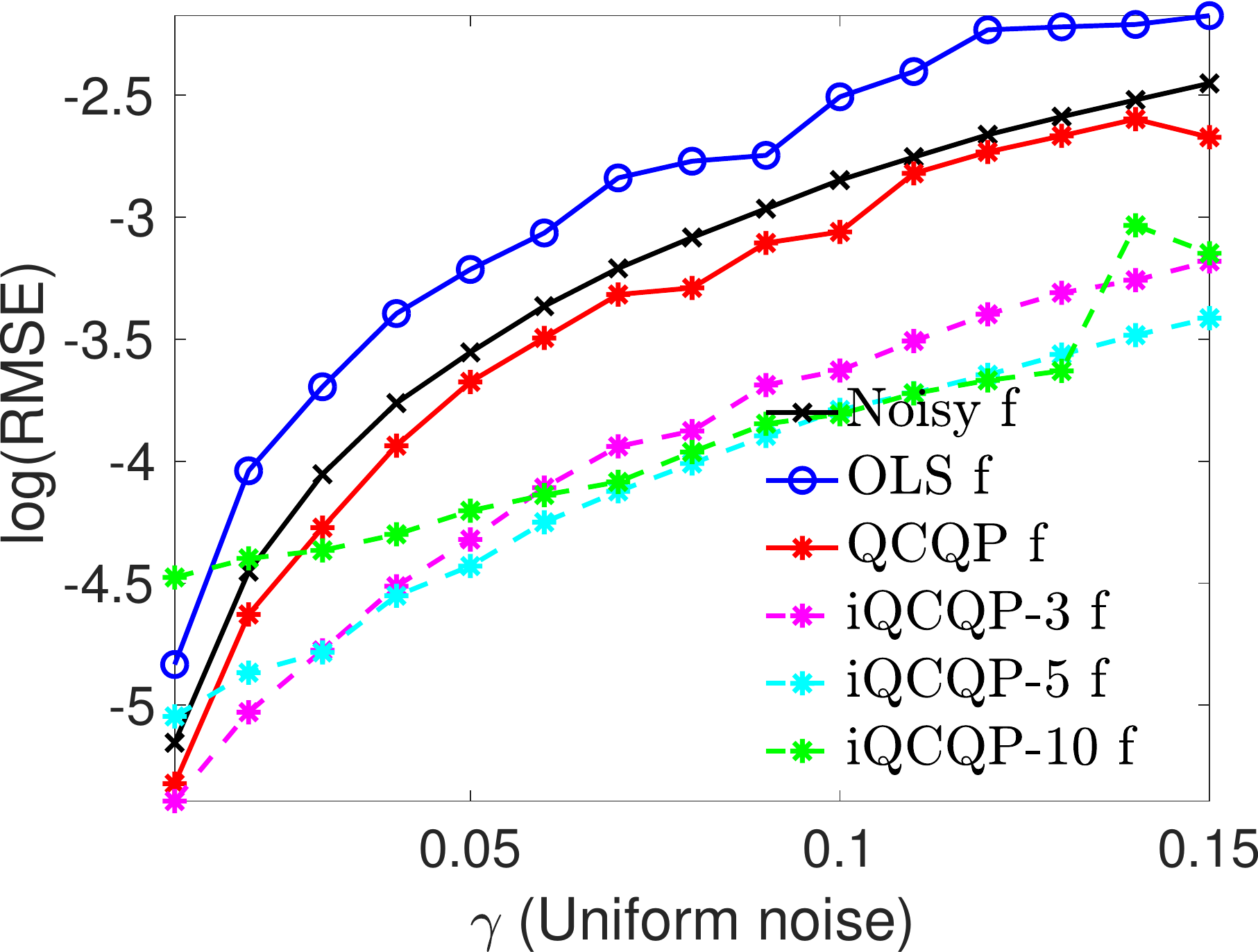} }
%
\subcaptionbox[]{ $k=3$, $\lambda= 0.3$}[ 0.19\textwidth ]
{\includegraphics[width=0.19\textwidth] {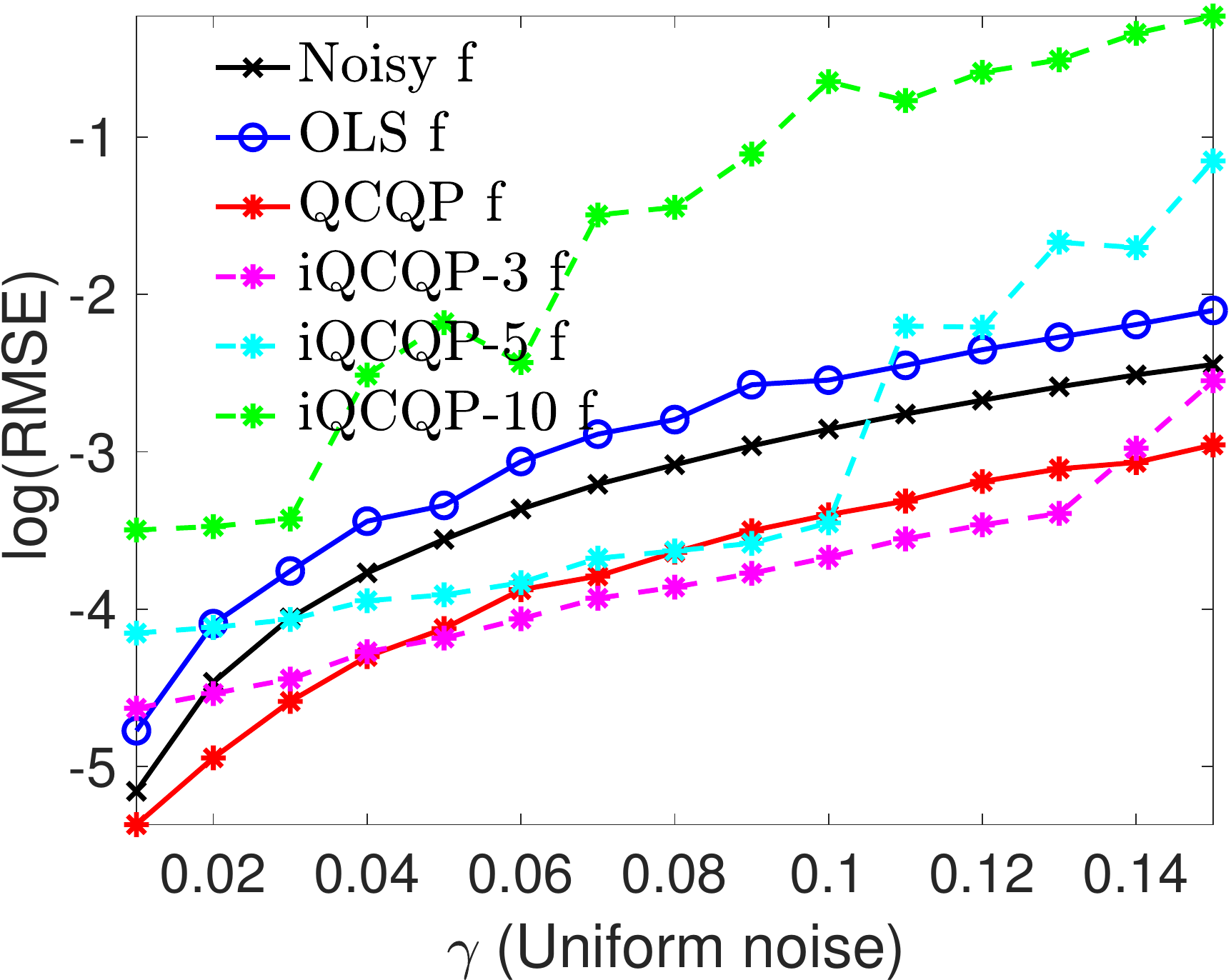} }
%
\subcaptionbox[]{ $k=3$, $\lambda= 0.5$}[ 0.19\textwidth ]
{\includegraphics[width=0.19\textwidth] {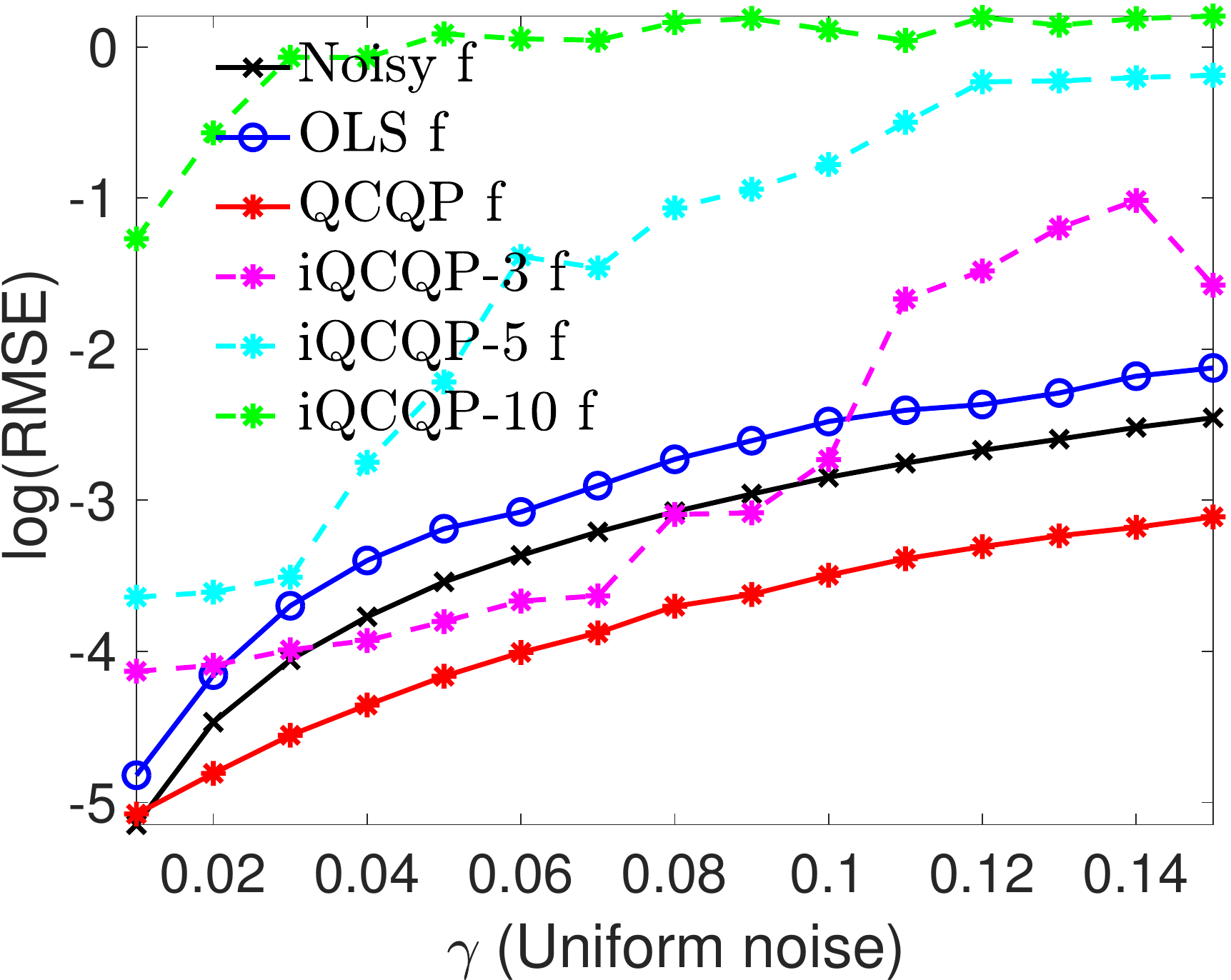} }
%
\subcaptionbox[]{ $k=3$, $\lambda= 1$}[ 0.19\textwidth ]
{\includegraphics[width=0.19\textwidth] {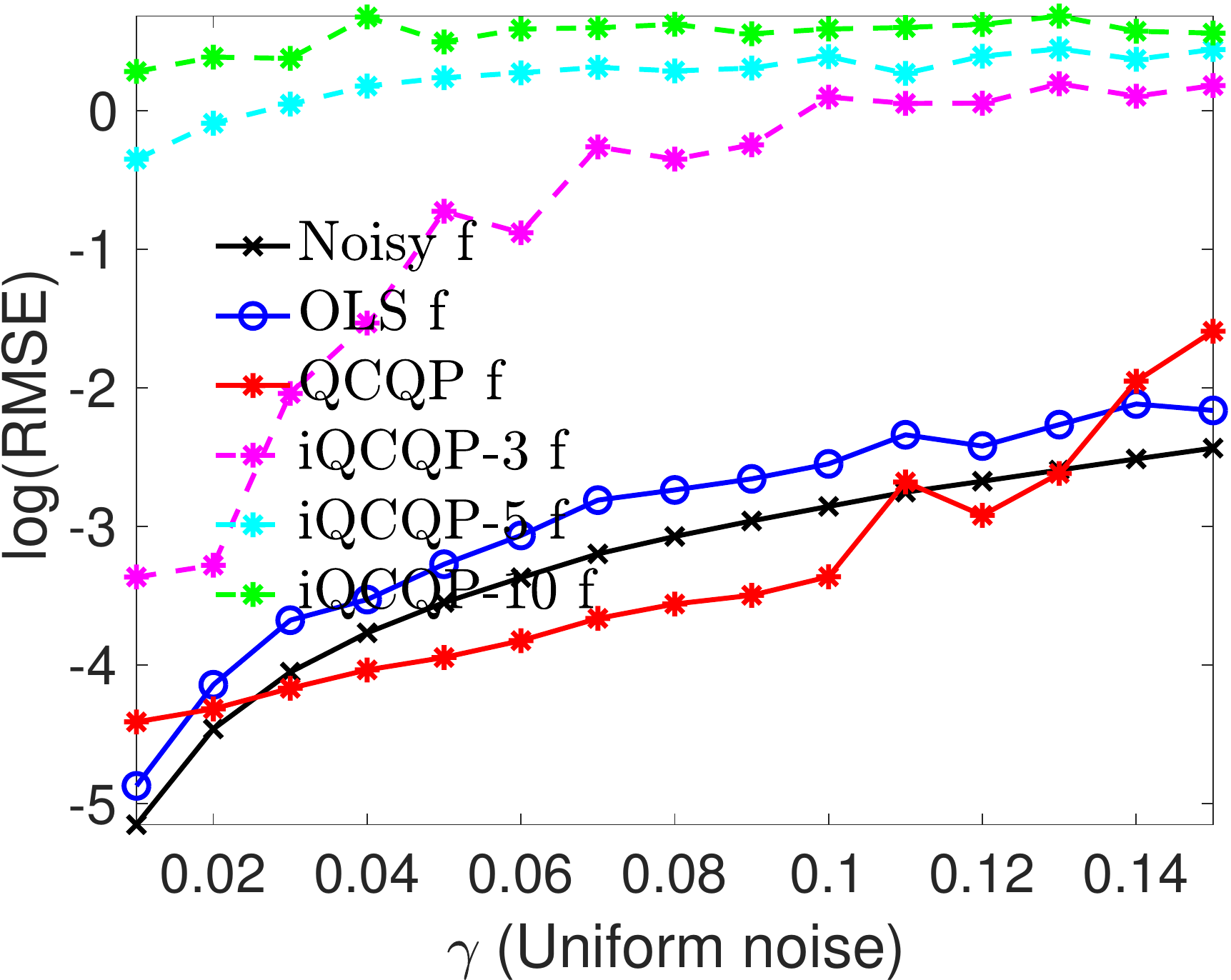} }
%
%
%
\subcaptionbox[]{ $k=5$, $\lambda= 0.03$}[ 0.19\textwidth ]
{\includegraphics[width=0.19\textwidth] {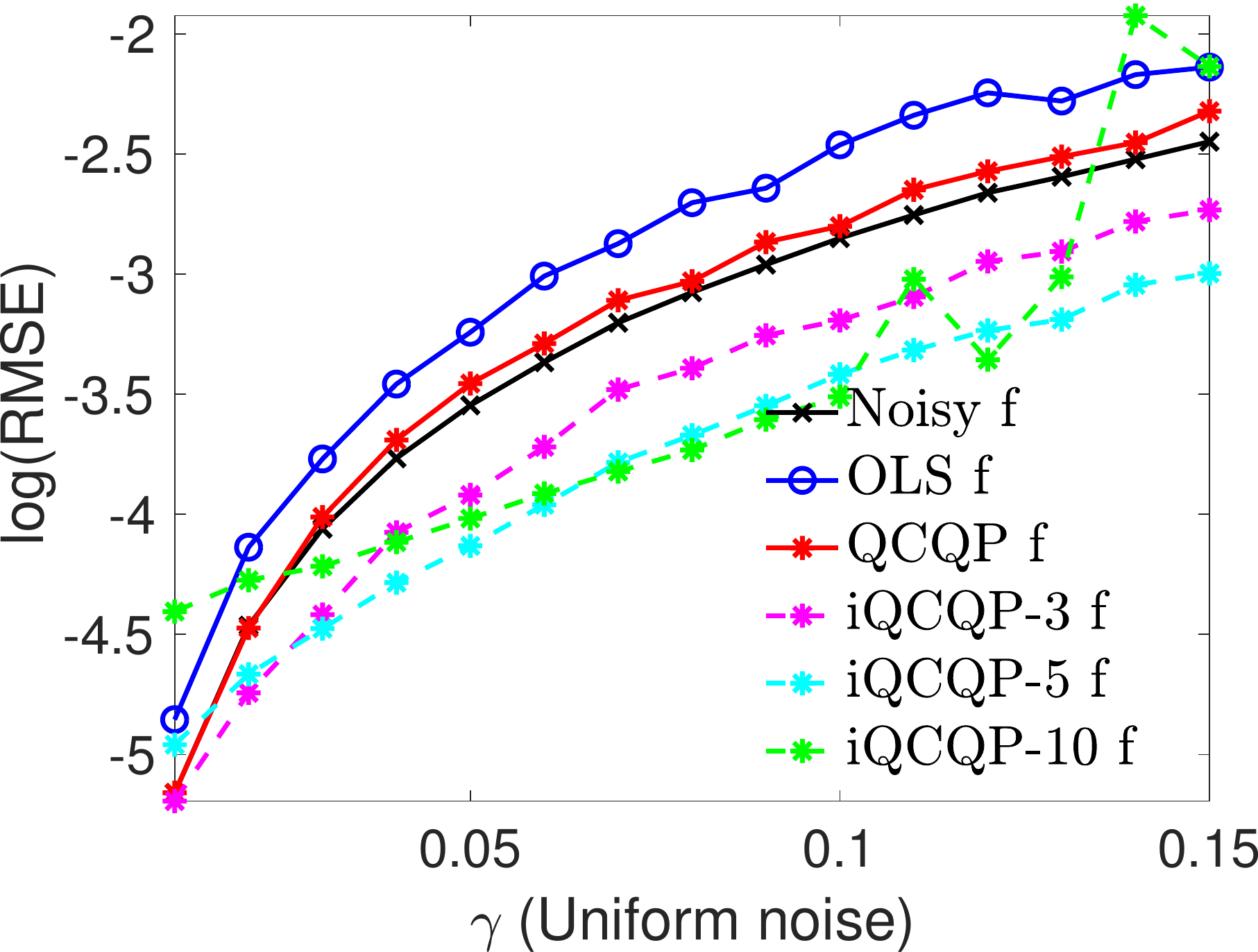} }
\subcaptionbox[]{ $k=5$, $\lambda= 0.1$}[ 0.19\textwidth ]
{\includegraphics[width=0.19\textwidth] {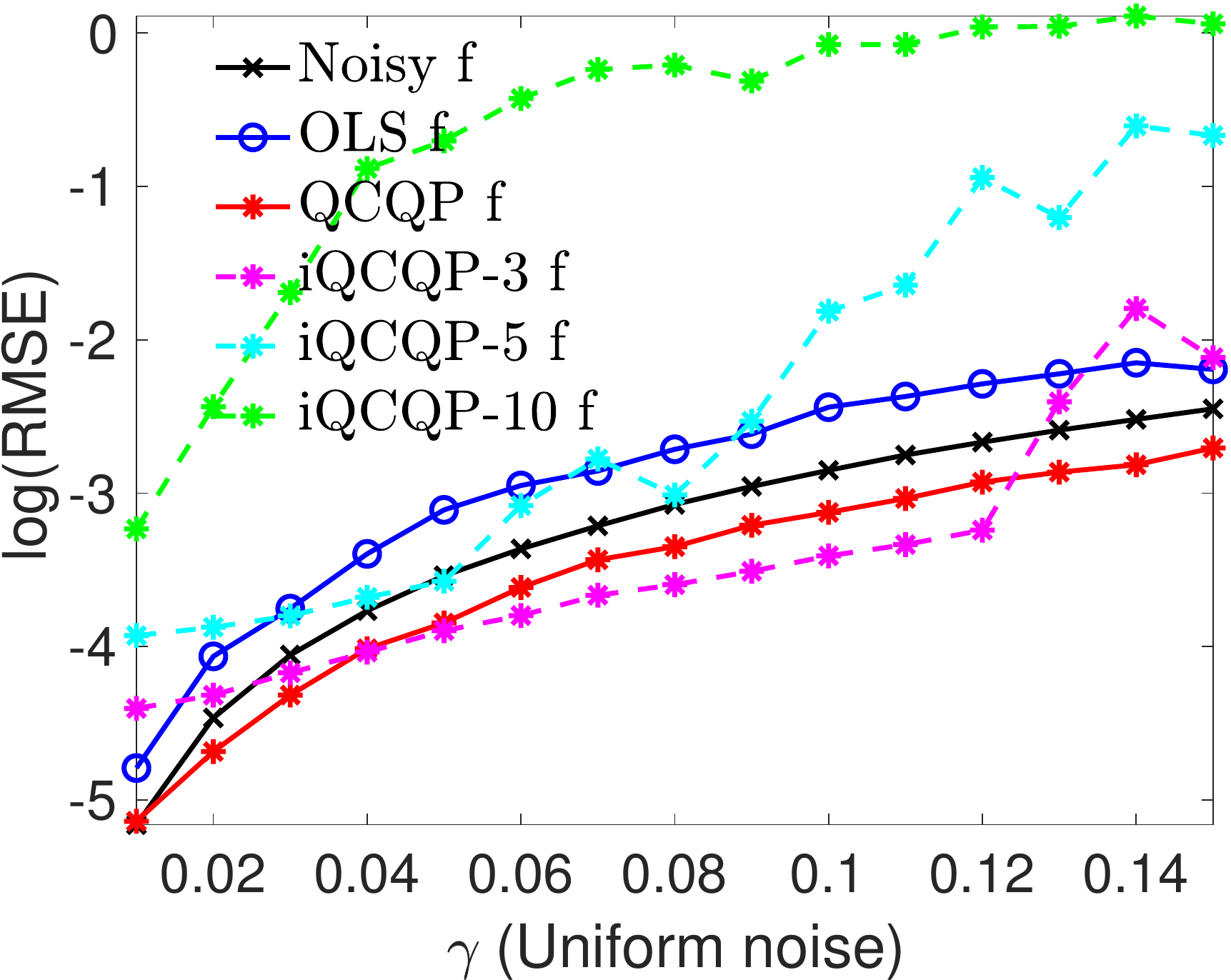} }
%
\subcaptionbox[]{ $k=5$, $\lambda= 0.3$}[ 0.19\textwidth ]
{\includegraphics[width=0.19\textwidth] {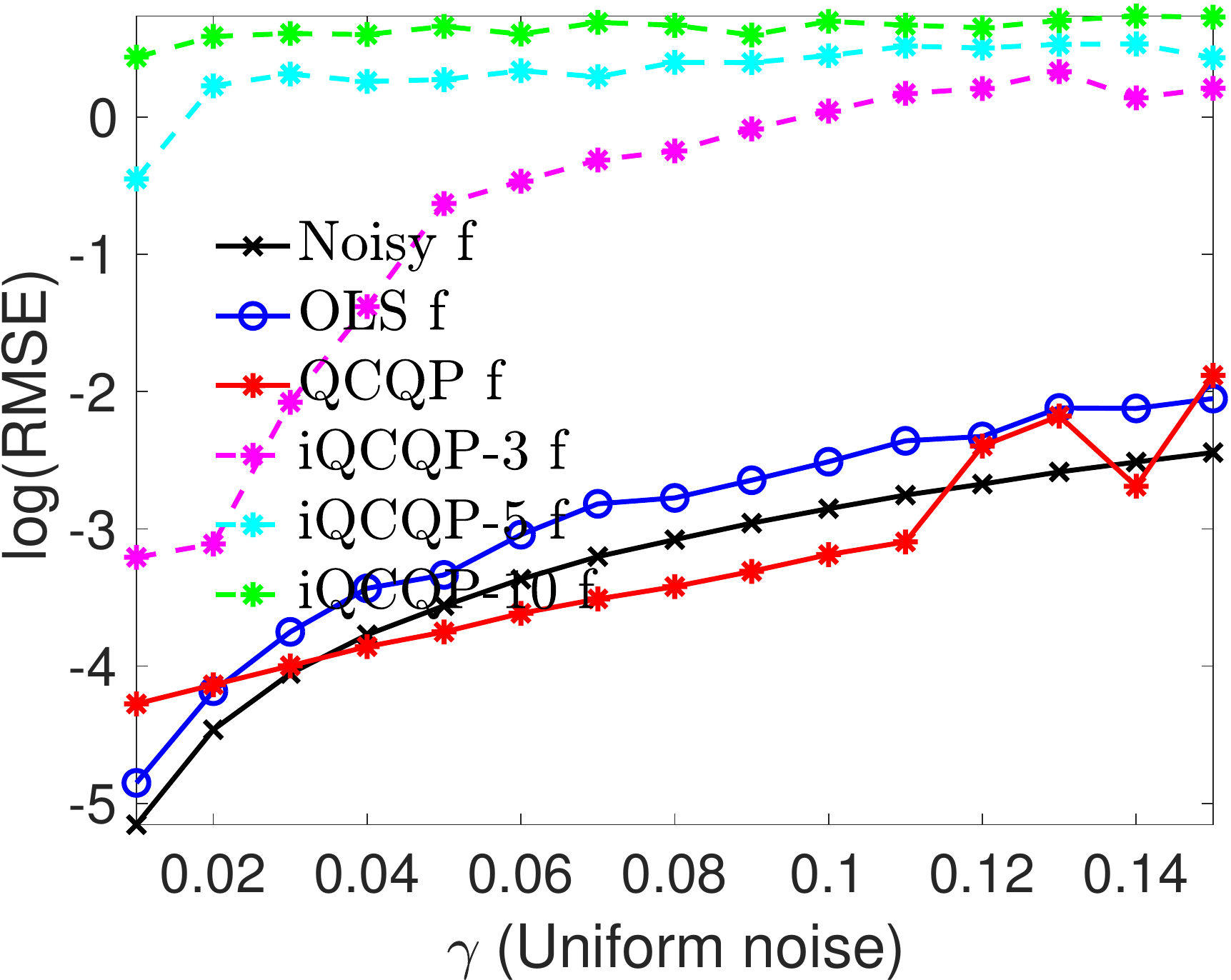} }
%
\subcaptionbox[]{ $k=5$, $\lambda= 0.5$}[ 0.19\textwidth ]
{\includegraphics[width=0.19\textwidth] {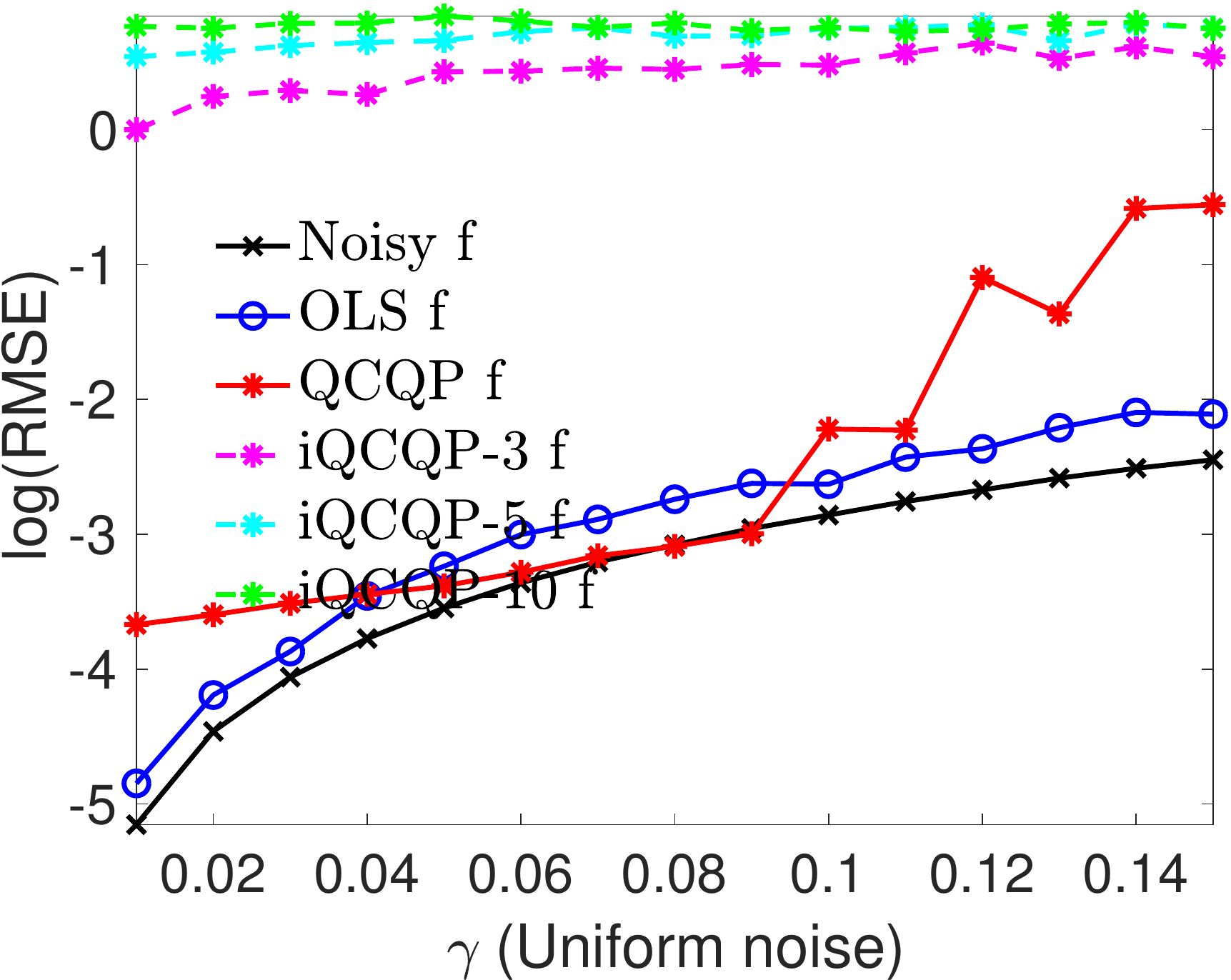} }
%
\subcaptionbox[]{ $k=5$, $\lambda= 1$}[ 0.19\textwidth ]
{\includegraphics[width=0.19\textwidth] {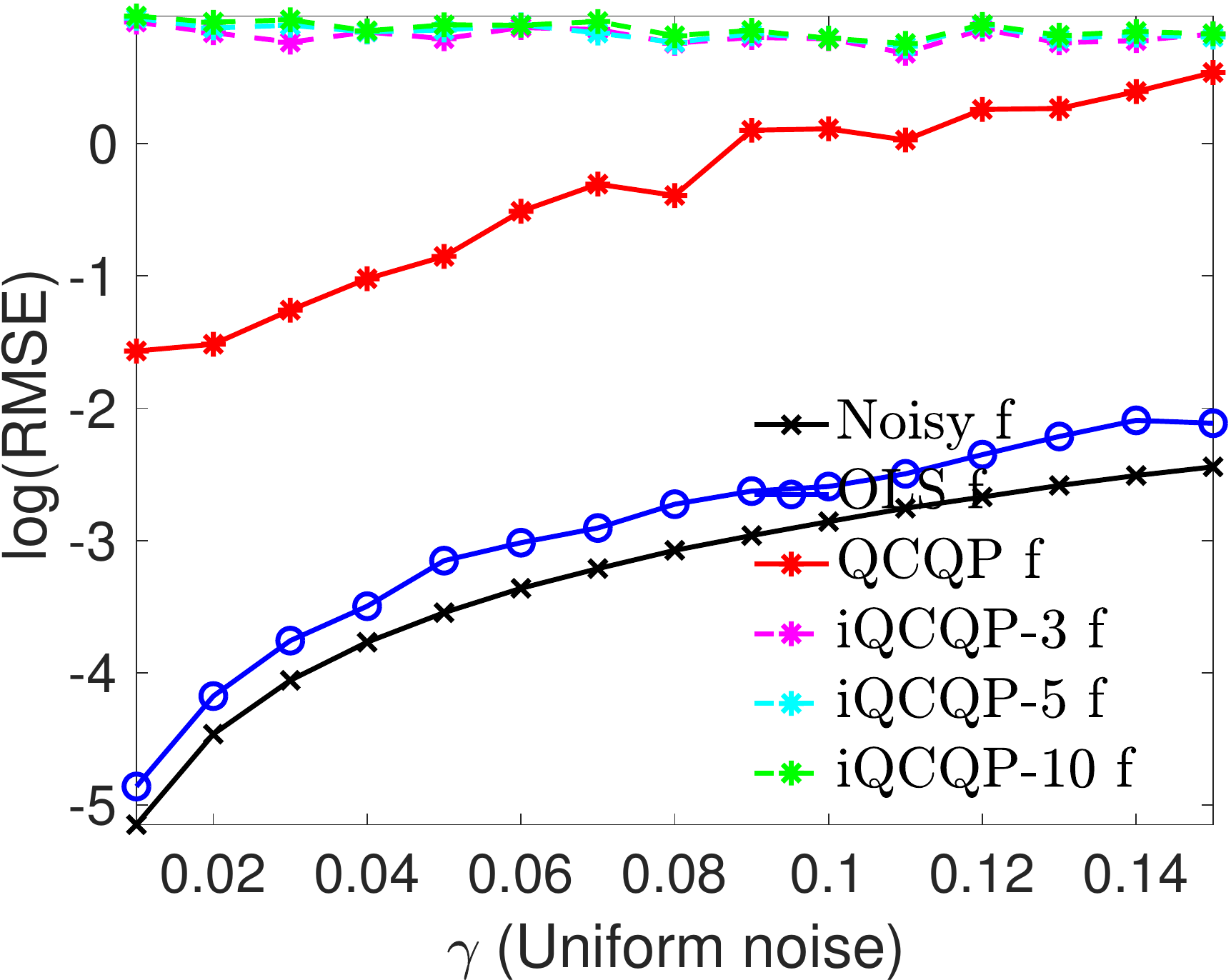} }
%
\vspace{-3mm}
\captionsetup{width=0.98\linewidth}
\caption[Short Caption]{Recovery errors for the final estimated $f$  samples, for the Bounded noise model (20 trials). 
\textbf{QCQP} denotes Algorithm \ref{algo:two_stage_denoise}, for which  the unwrapping stage is  performed via \textbf{OLS} \eqref{eq:ols_unwrap_lin_system}. 
}
\label{fig:Sims_f1_Bounded_f}
\end{figure}

\begin{figure}[!ht]
\centering
\subcaptionbox[]{ $k=2$, $\lambda= 0.3$, $\gamma = 0.01$
}[ 0.24\textwidth ]
{\includegraphics[width=0.24\textwidth] {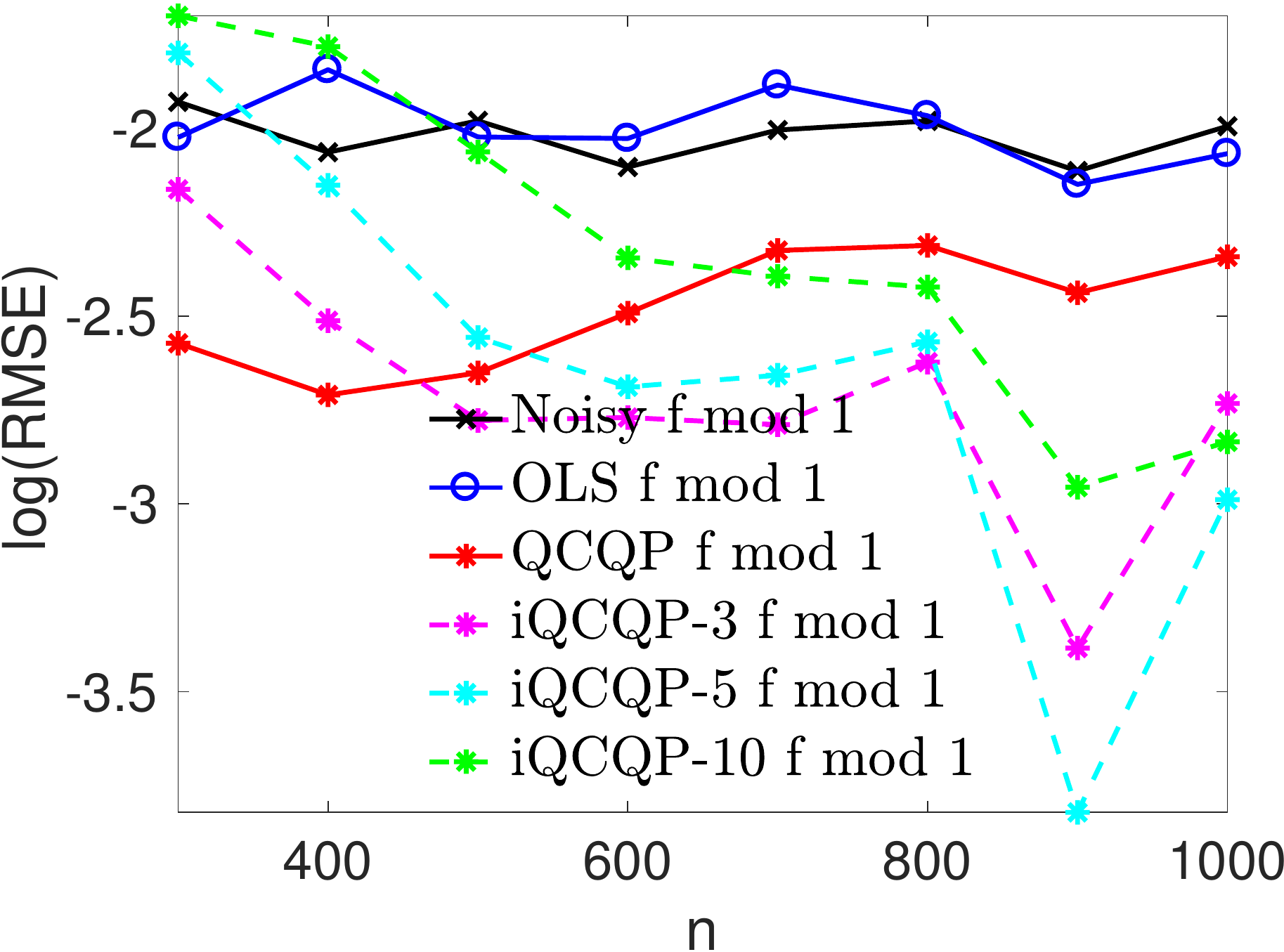} }
%
\subcaptionbox[]{ $k=3$, $\lambda= 0.3$, $\gamma = 0.25$
}[ 0.24\textwidth ]
{\includegraphics[width=0.24\textwidth] {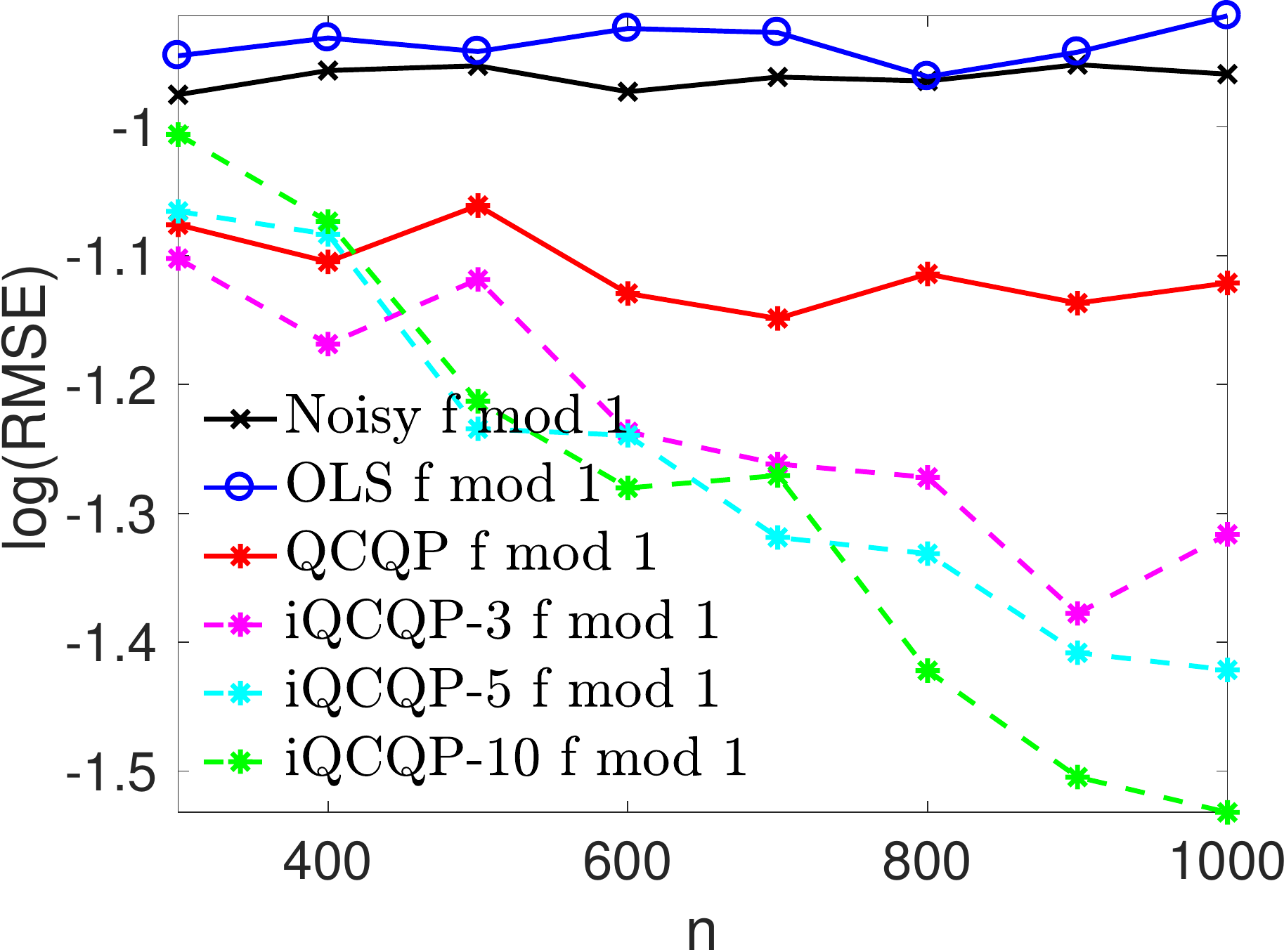} }
%
%
\subcaptionbox[]{ $k=2$, $\lambda= 0.3$, $\gamma = 0.01$
}[ 0.24\textwidth ]
{\includegraphics[width=0.24\textwidth] {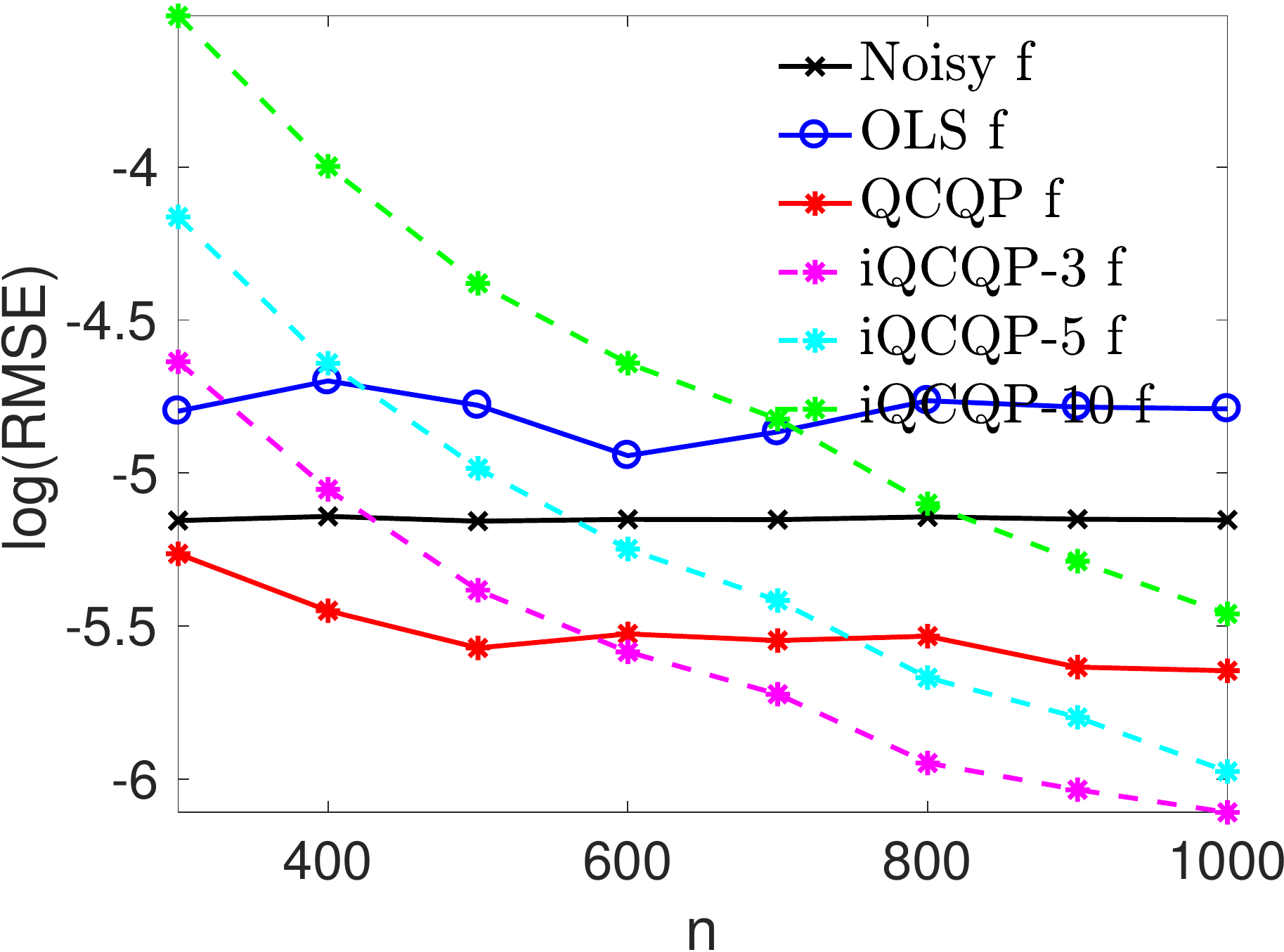} }
%
\subcaptionbox[]{ $k=3$, $\lambda= 0.3$, $\gamma = 0.25$
}[ 0.24\textwidth ]
{\includegraphics[width=0.24\textwidth] {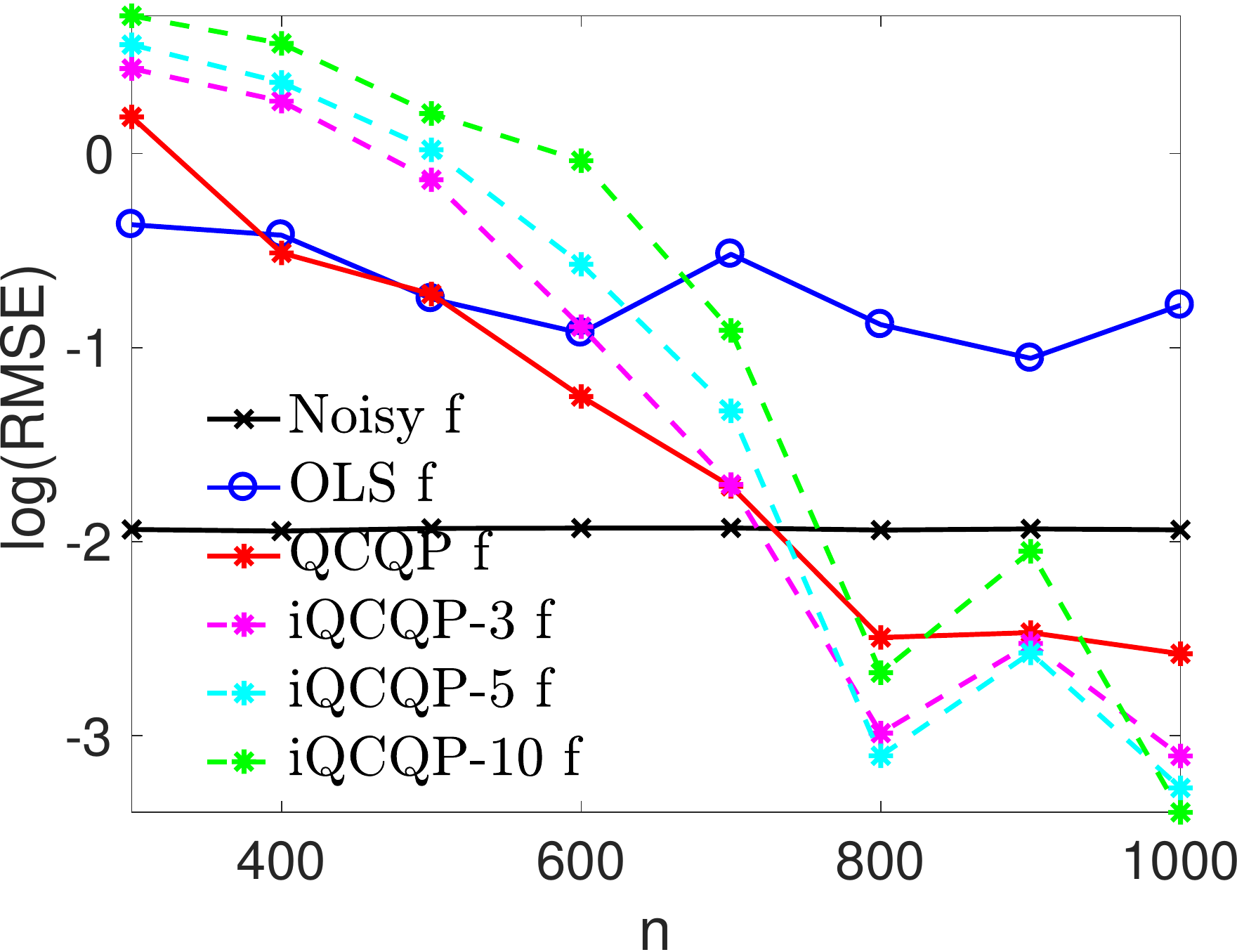} }
\captionsetup{width=0.98\linewidth}
\caption[Short Caption]{Recovery errors for \textbf{OLS}, \textbf{QCQP}, and  \textbf{iQCQP} 
as a function of $n$ (number of samples), for both the $f$ mod 1 samples (leftmost two plots) and the final $f$ estimates (rightmost two plots) under the Uniform noise model, for different values of $k$, $\lambda$ and $\gamma$. Results are averaged over 20 runs. \textbf{QCQP} denotes Algorithm \ref{algo:two_stage_denoise}, for which  the unwrapping stage is  performed via \textbf{OLS} \eqref{eq:ols_unwrap_lin_system}. 
}
\label{fig:Sims_f1_Bounded_ScanID2_ffmod1}
\end{figure}


\subsection{Comparison with Bhandari et al. (2017)} \label{sec:apx_Bhandari}  




This section is a comparison of  \textbf{OLS},  \textbf{QCQP}, and \textbf{iQCQP} with the approach introduced by \cite{bhandari17}, whose 
algorithm we denote by \textbf{BKR} for brevity. We compare all four approaches  across two different noise models, Bounded and Gaussian, 
on two different functions: the function  \eqref{def:f1} used thus  far in the experiments throughout this paper, and a bandlimited  function used 
by \cite{bhandari17}.

\begin{figure}[!ht]
\centering
\subcaptionbox[]{  $\sigma=0.05$, \textbf{BKR}
}[ 0.24\textwidth ]
{\includegraphics[width=0.24\textwidth] {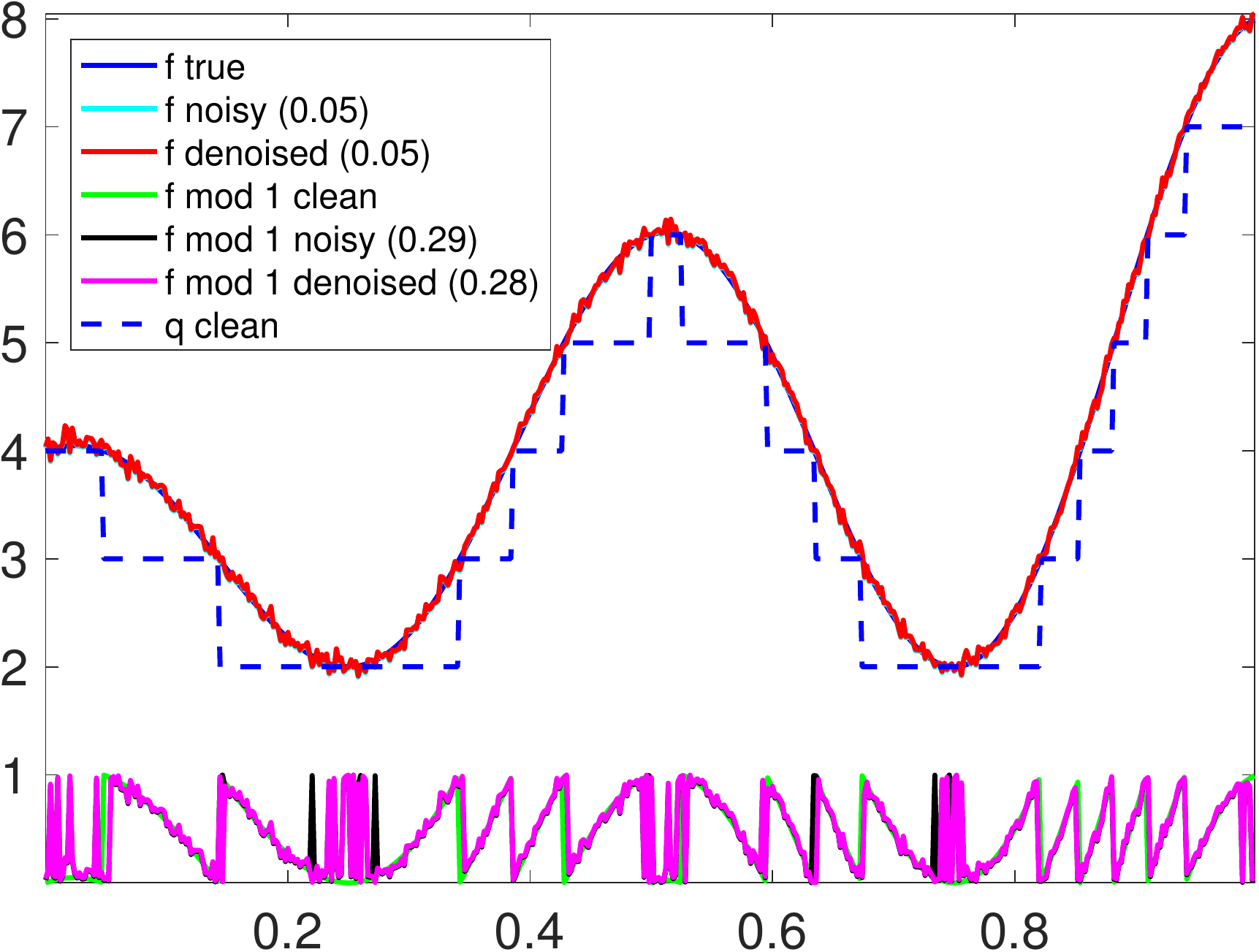} }
%
\subcaptionbox[]{  $\sigma=0.05$, \textbf{OLS}
}[ 0.24\textwidth ]
{\includegraphics[width=0.24\textwidth] {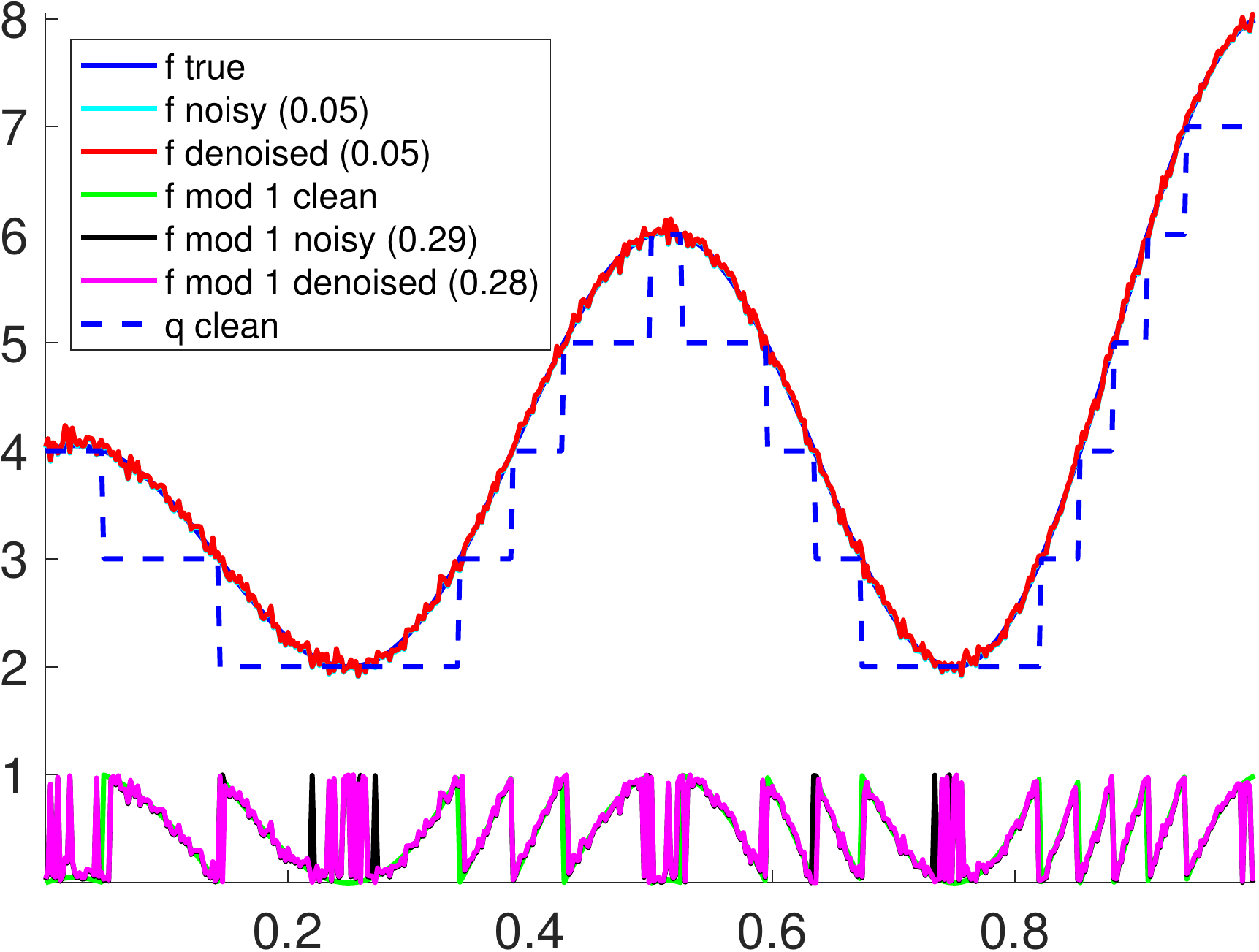} }
%
\subcaptionbox[]{  $\sigma=0.05$, \textbf{QCQP}
}[ 0.24\textwidth ]
{\includegraphics[width=0.24\textwidth] {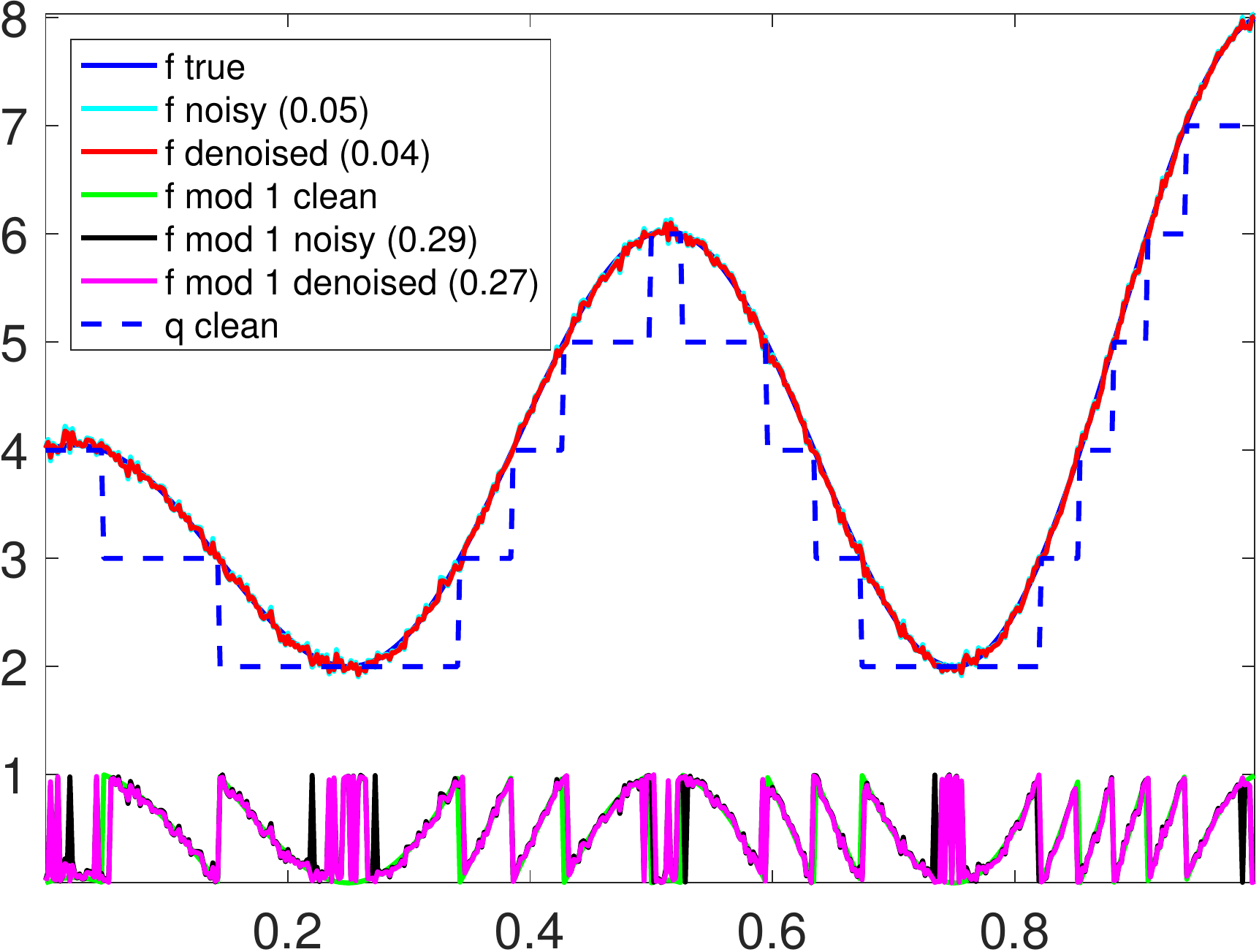} }
%
\subcaptionbox[]{  $\sigma=0.05$, \textbf{iQCQP}
}[ 0.24\textwidth ]
{\includegraphics[width=0.24\textwidth] {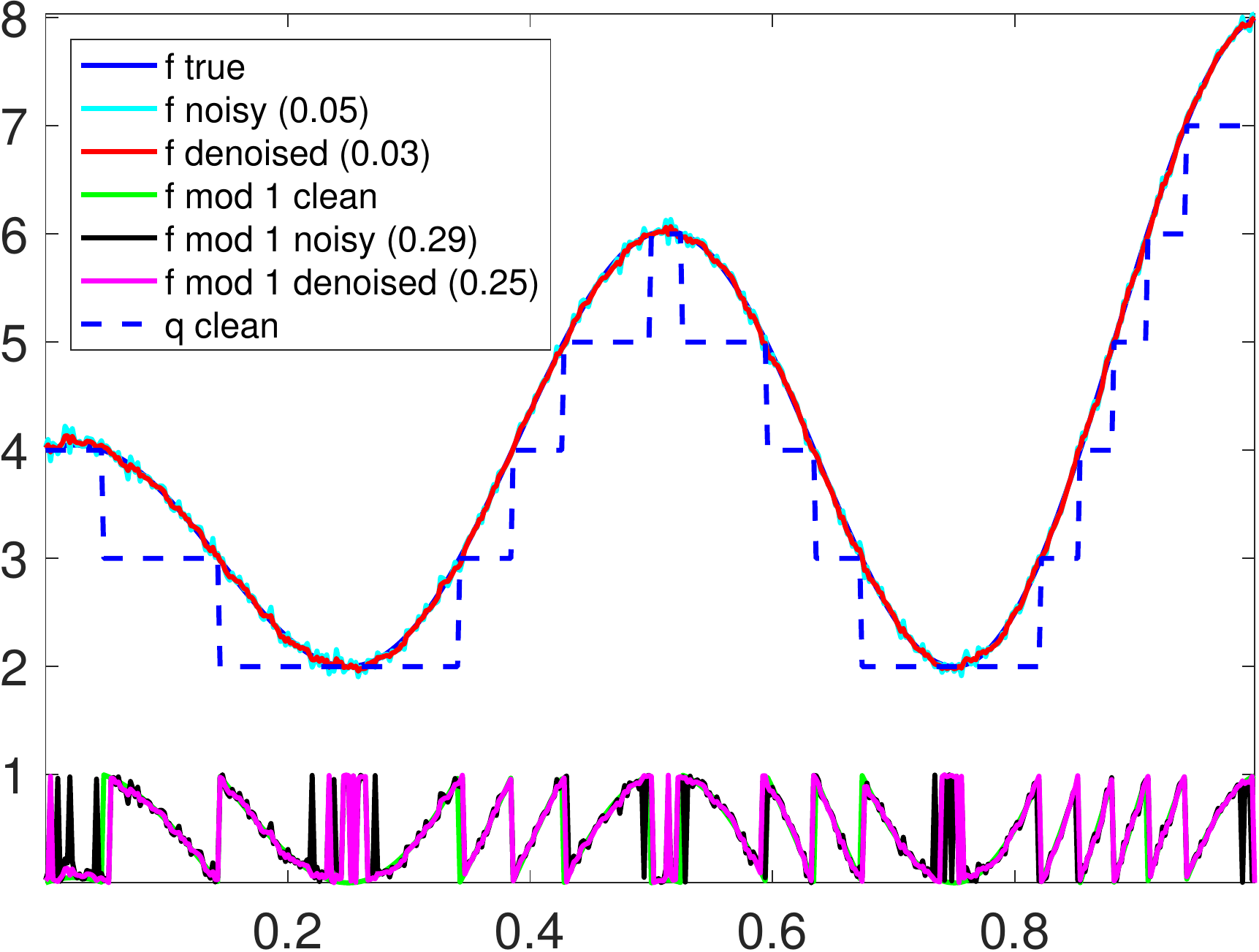} }
%
%
%
\subcaptionbox[]{  $\sigma=0.06$, \textbf{BKR}
}[ 0.24\textwidth ]
{\includegraphics[width=0.24\textwidth] {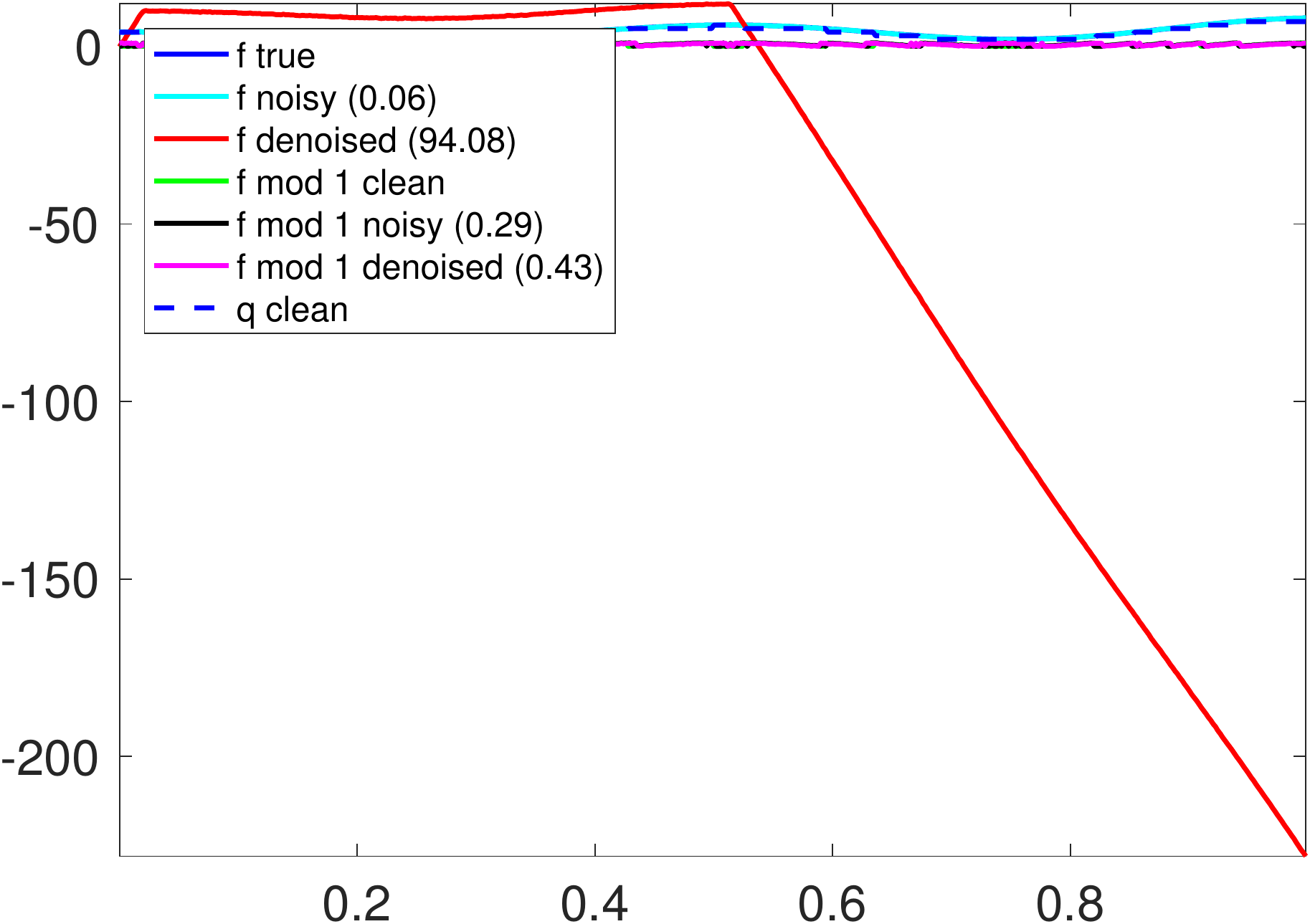} }
%
\subcaptionbox[]{  $\sigma=0.06$, \textbf{OLS}
}[ 0.24\textwidth ]
{\includegraphics[width=0.24\textwidth] {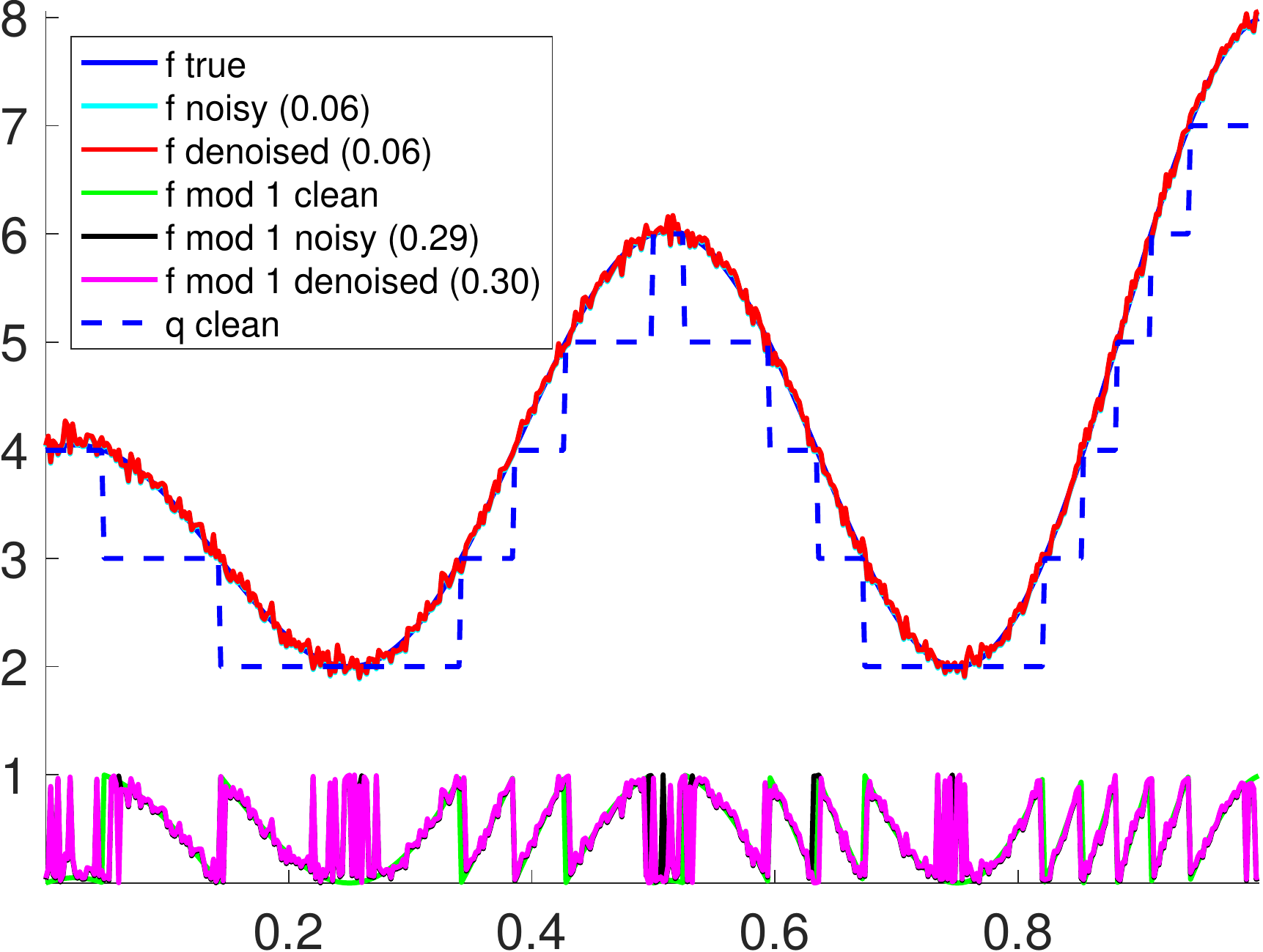} }
%
\subcaptionbox[]{  $\sigma=0.06$, \textbf{QCQP}
}[ 0.24\textwidth ]
{\includegraphics[width=0.24\textwidth] {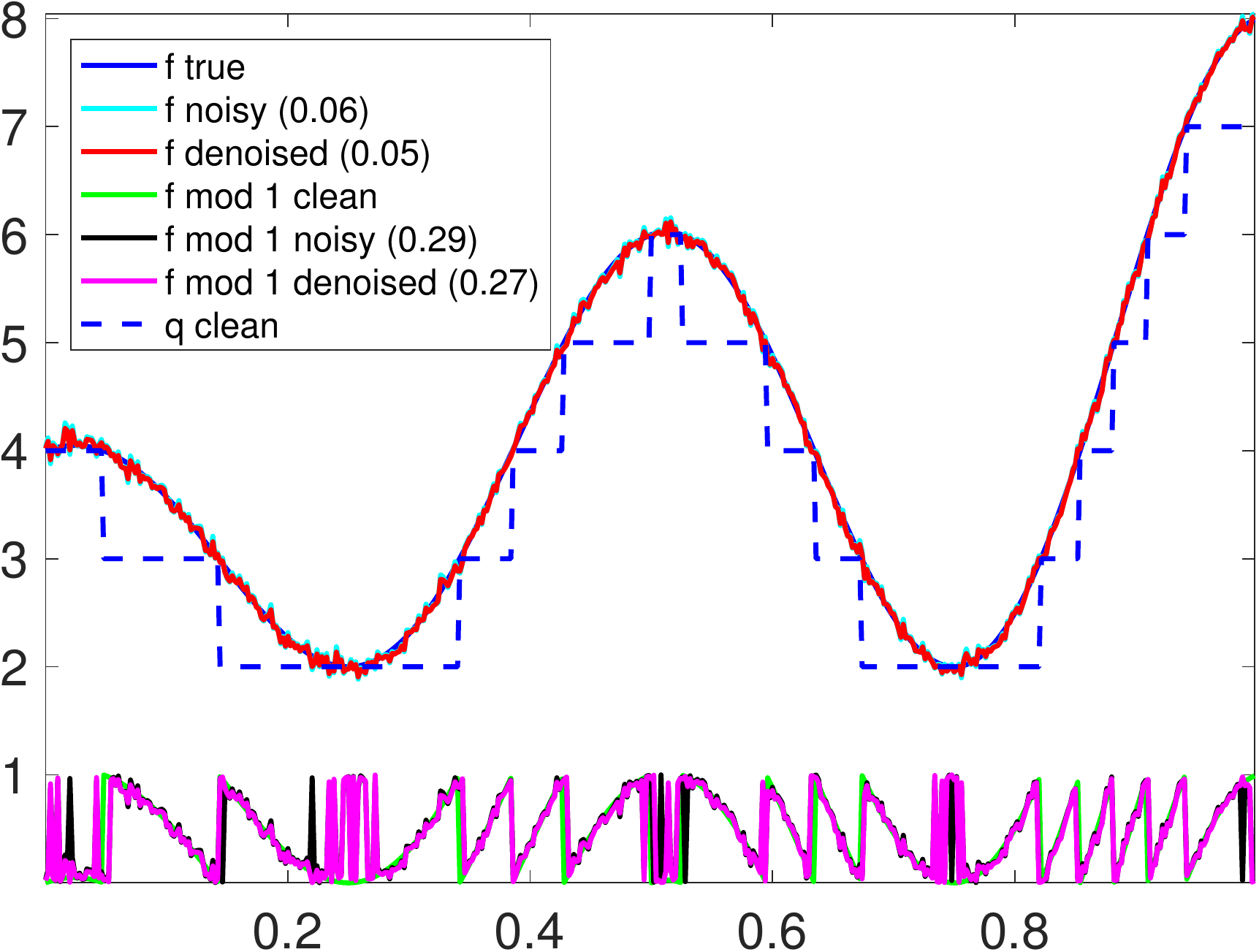} }
%
\subcaptionbox[]{  $\sigma=0.06$, \textbf{iQCQP}
}[ 0.24\textwidth ]
{\includegraphics[width=0.24\textwidth] {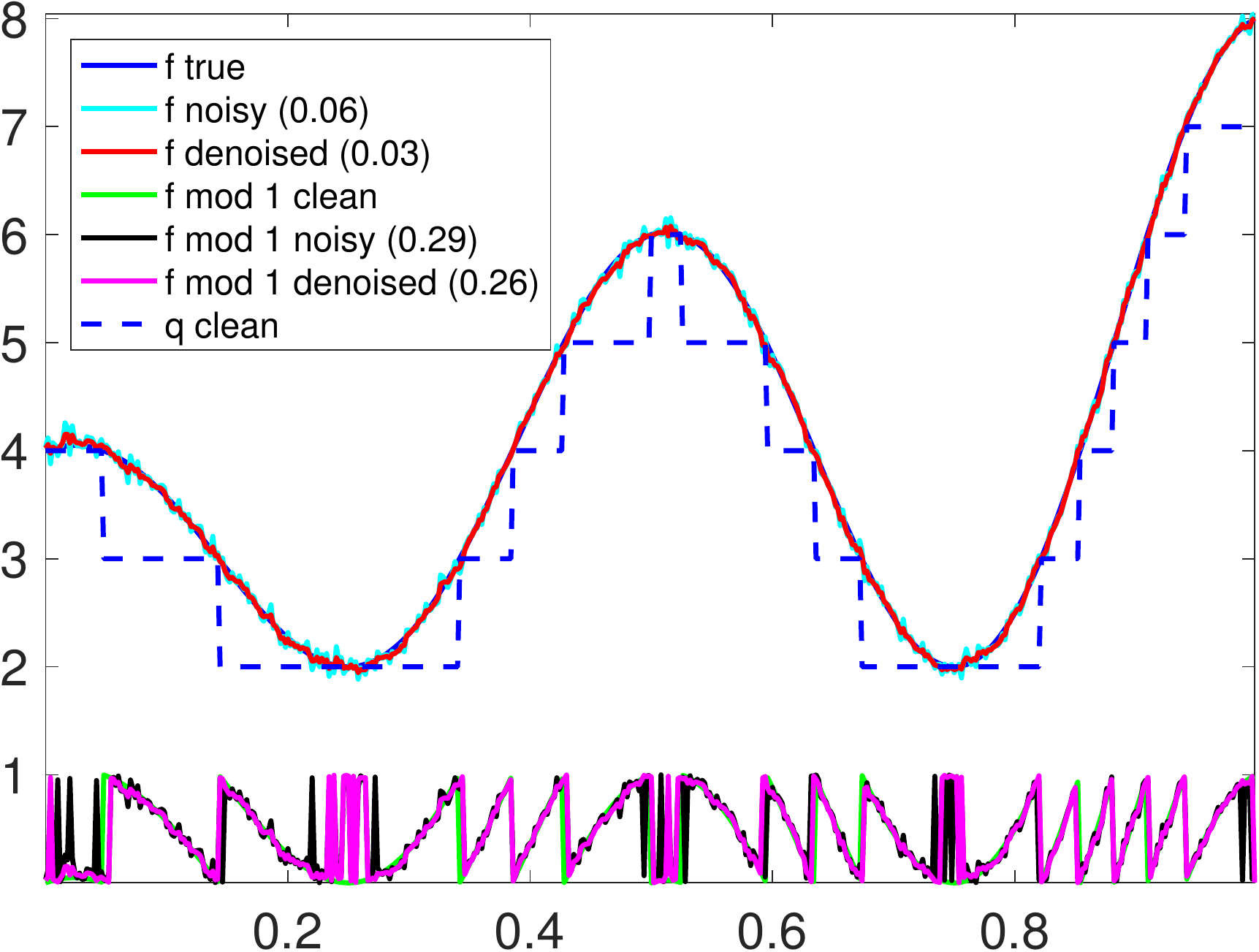} }
%
%
%
%
%
%
\subcaptionbox[]{  $\sigma=0.13$, \textbf{BKR}
}[ 0.24\textwidth ]
{\includegraphics[width=0.24\textwidth] {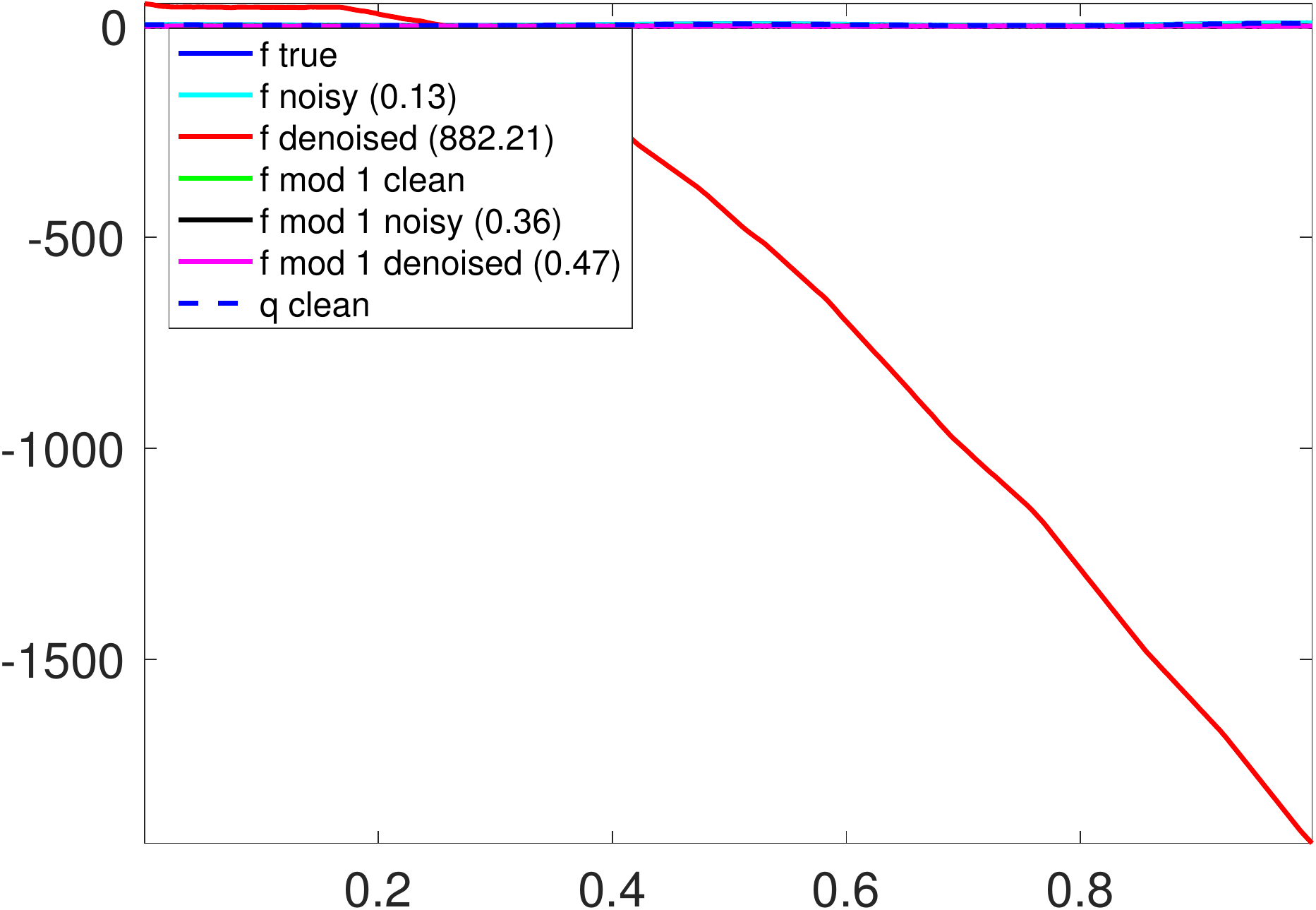} }
%
\subcaptionbox[]{  $\sigma=0.13$, \textbf{OLS}
}[ 0.24\textwidth ]
{\includegraphics[width=0.24\textwidth] {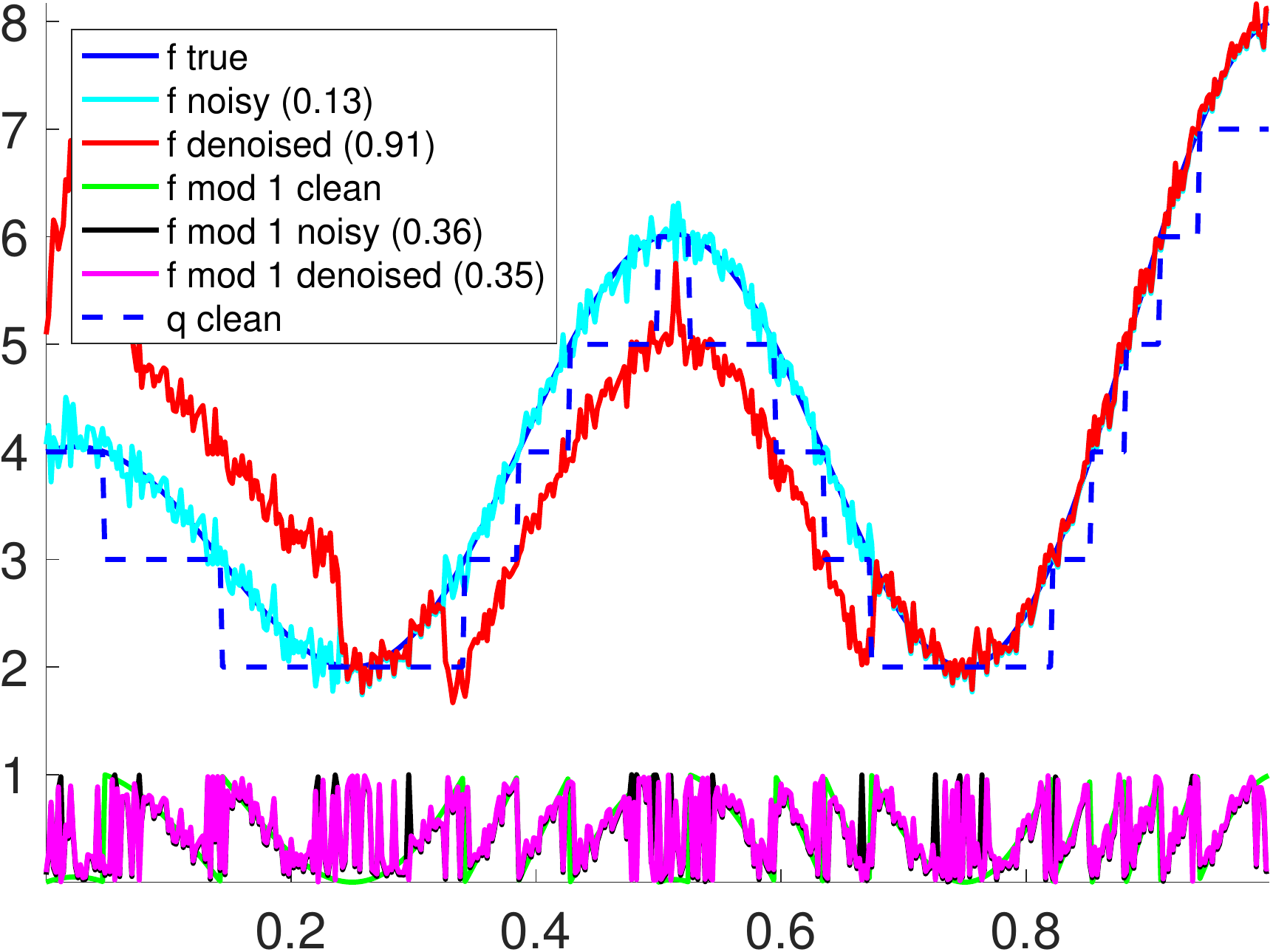} }
%
\subcaptionbox[]{  $\sigma=0.13$, \textbf{QCQP}
}[ 0.24\textwidth ]
{\includegraphics[width=0.24\textwidth] {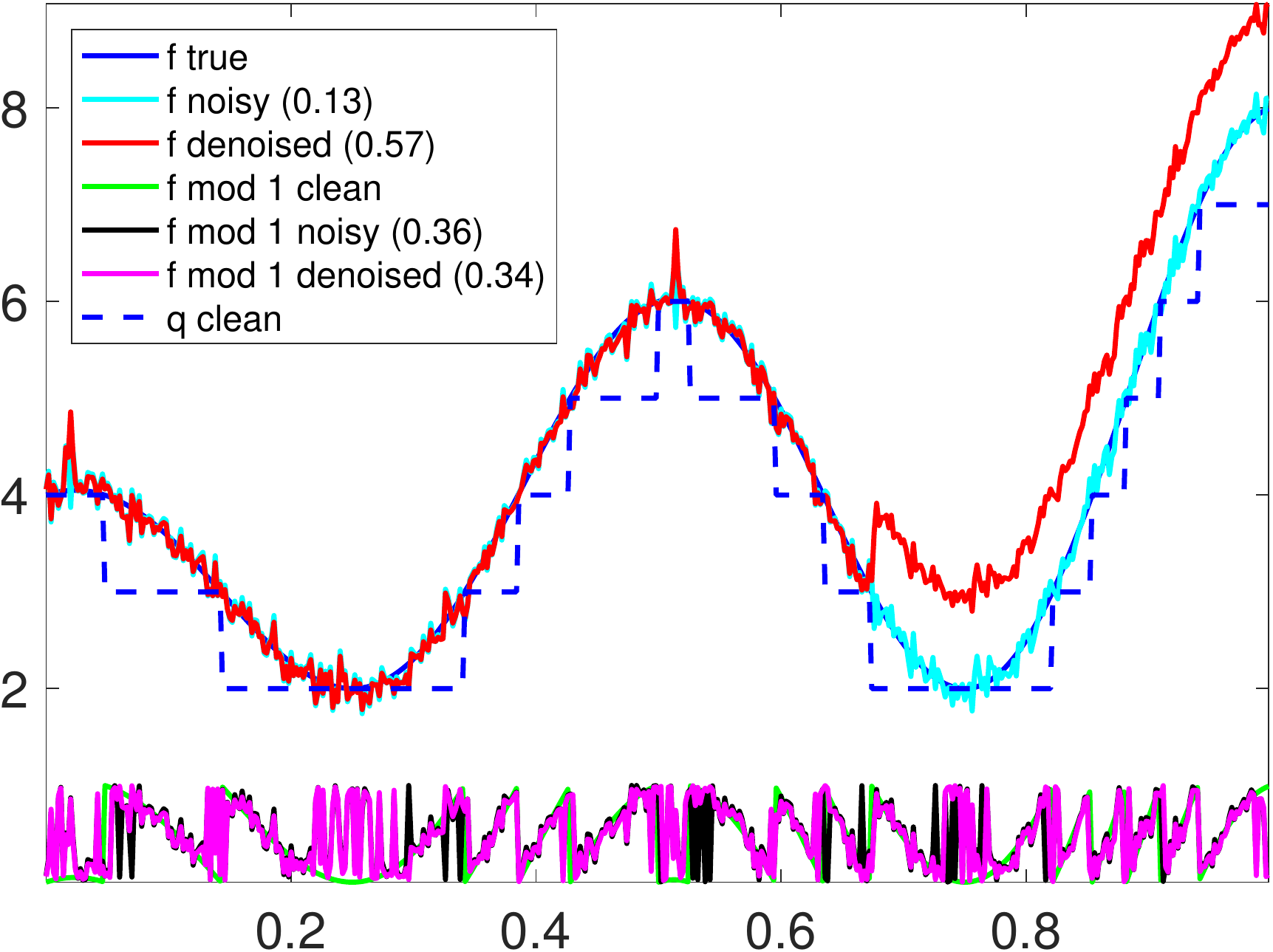} }
%
\subcaptionbox[]{  $\sigma=0.13$, \textbf{iQCQP}
}[ 0.24\textwidth ]
{\includegraphics[width=0.24\textwidth] {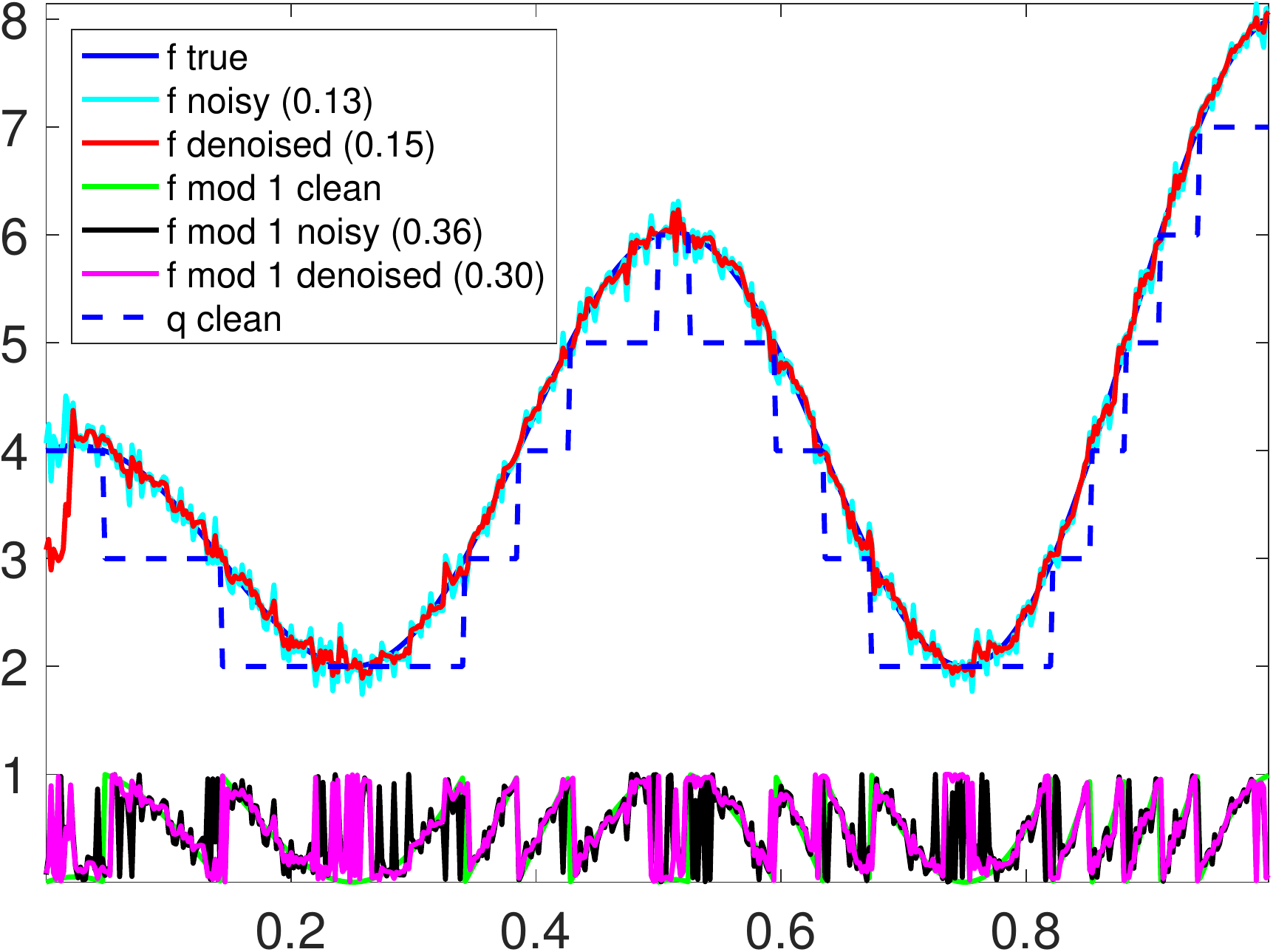} }
%
%
\vspace{-2mm}
\captionsetup{width=0.98\linewidth}
\caption[Short Caption]{Denoised instances for both the $f$ mod 1 and $f$ values, under the Gaussian noise model for function \eqref{def:f1}, for  \textbf{BKR},  \textbf{OLS}, \textbf{QCQP} and \textbf{iQCQP},  as we increase the noise level $\sigma$.  \textbf{QCQP} denotes Algorithm \ref{algo:two_stage_denoise}, for which  the unwrapping stage is  performed via \textbf{OLS} \eqref{eq:ols_unwrap_lin_system}.  
  We keep fixed the parameters $n=500$, $k=2$, $\lambda= 0.1$. The numerical values in the legend denote the RMSE.
}
\label{fig:instances_f1_Gaussian_Sampta}
\end{figure}

Figure  \ref{fig:instances_f1_Gaussian_Sampta}   pertains to function \eqref{def:f1} but for the Gaussian noise model. At a lower level of noise $\sigma=5\%$, all methods are able to recover the function fairly well, with   \textbf{iQCQP}  and \textbf{QCQP} yielding the most accurate reconstructions in terms of the RMSE of both the denoised $f$ mod 1 samples, and the final estimates for $f$. At higher levels of noise $\sigma \in \{ 6\%, 13\%\} $,  \textbf{BKR} breaks down, while the other methods (in order of accuracy) \textbf{iQCQP},  \textbf{QCQP}, \textbf{OLS}  return  meaningful  estimates.

Figures \ref{fig:instances_f_BL_Bounded}, respectively \ref{fig:instances_f_BL_Gaussian}, compare the results of the four methods on the same bandlimited function considered by the authors of \cite{bhandari17}, under the Bounded, respectively Gaussian, noise models with $n=494$ sample points.  
The bandlimited function is generated by multiplying the Fourier spectrum of the sinc-function with weights drawn from the standard distribution, rescaled such that the largest magnitude entry is equal to 1. To make the problem more challenging,  we also scale the function by a factor of $3$, such that the function wraps itself more often when taking modulo 1 values. Finally, to make the resulting figures more visually appealing, we shift the function values by +3 in order to minimize the overlap with the modulo 1 values (clean, noisy and denoised samples). For the Bounded noise model, Figure \ref{fig:instances_f_BL_Bounded} illustrates the fact that at a lower noise level $\gamma = 15 \%$ all methods recover  a good approximation of the function, the winners being \textbf{iQCQP} and  \textbf{QCQP} both in terms of denoising the $f$ mod 1 samples and estimating $f$. 
At  $\gamma = 17\%$ \textbf{BKR} breaks down, while the remaining methods return good approximations. Finally, at $\gamma = 20\% $ the latter three methods have difficulties in the unwrapping stage.
Figure  \ref{fig:instances_f_BL_Gaussian} shows analogous results and conclusions under the more challenging  Gaussian noise model, with $ \sigma \in \{ 5\%,  8\%, 12\%\}$.

Finally,  Figure  \ref{fig:instances_f_BL_200_Bounded}, shows analogous results as in Figures \ref{fig:instances_f_BL_Bounded} under the Bounded noise model, but with a sparser sampling pattern $n=194$, similar to the setup used by the authors of  \cite{bhandari17}.  At a lower noise level,  $\gamma = 10 \%$,  all methods recover  a good approximation of the function, the performance ranking being    
\textbf{iQCQP} (RMSE=0.23), 
\textbf{QCQP} (RMSE=0.23), 
\textbf{OLS} (RMSE=0.31),  
\textbf{BKR} (RMSE=0.31) for the denoised $f$ mod 1 samples,  and  
\textbf{iQCQP} (RMSE=0.06), 
\textbf{QCQP} (RMSE=0.11), 
\textbf{OLS} (RMSE=0.11),  
\textbf{BKR} (RMSE=0.11) for the final $f$ estimates.
At higher levels of noise,  \textbf{iQCQP},  \textbf{QCQP}, \textbf{OLS}  return meaningful  estimates, though all methods experience difficulties in the unwrapping stage, in the interval where the function $f$ has the highest slope.

\begin{figure}[!ht]
\centering
\subcaptionbox[]{  $\gamma=0.15$, \textbf{BKR}
}[ 0.24\textwidth ]
{\includegraphics[width=0.24\textwidth] {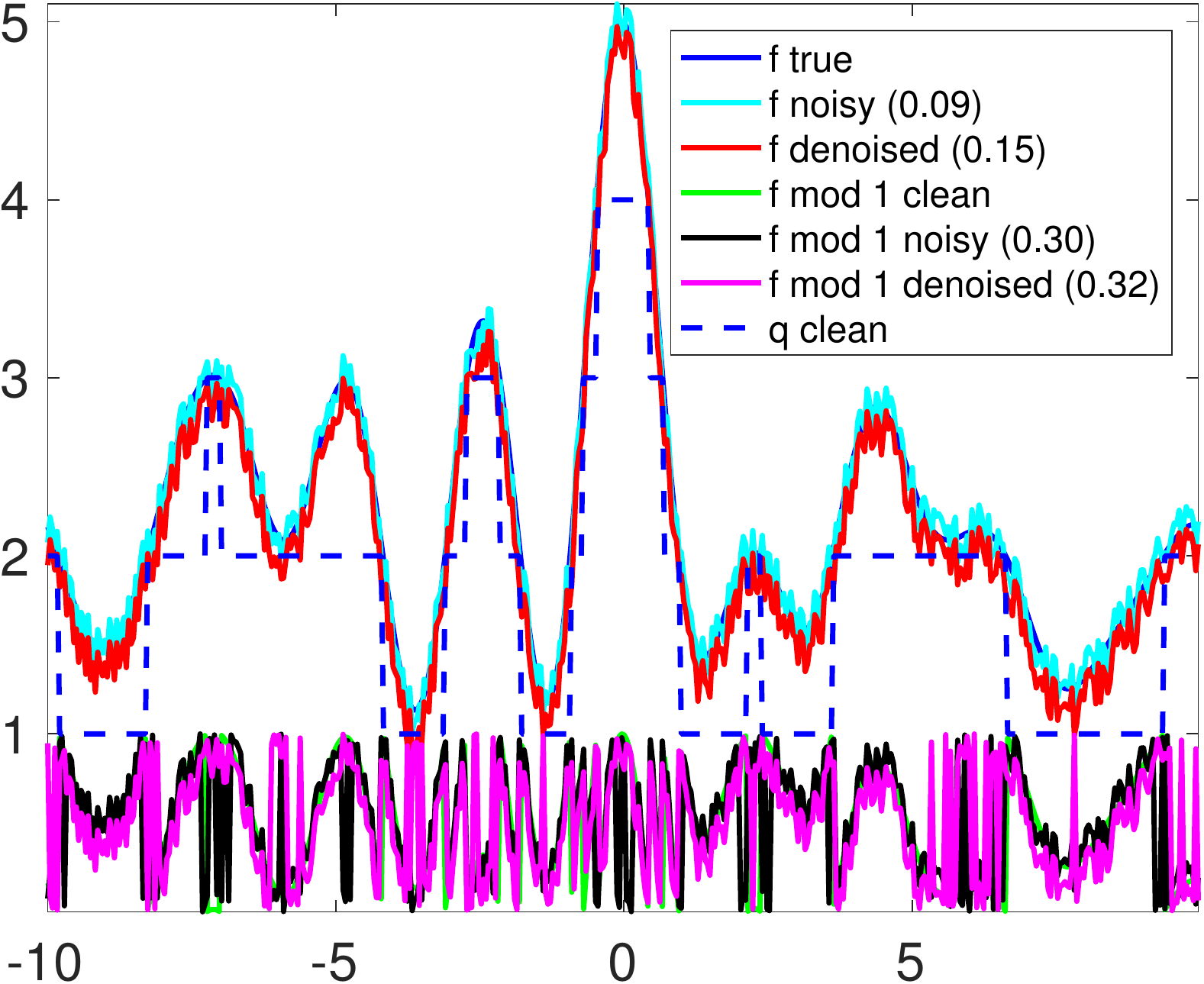} }
%
\subcaptionbox[]{  $\gamma=0.15$, \textbf{OLS}
}[ 0.24\textwidth ]
{\includegraphics[width=0.24\textwidth] {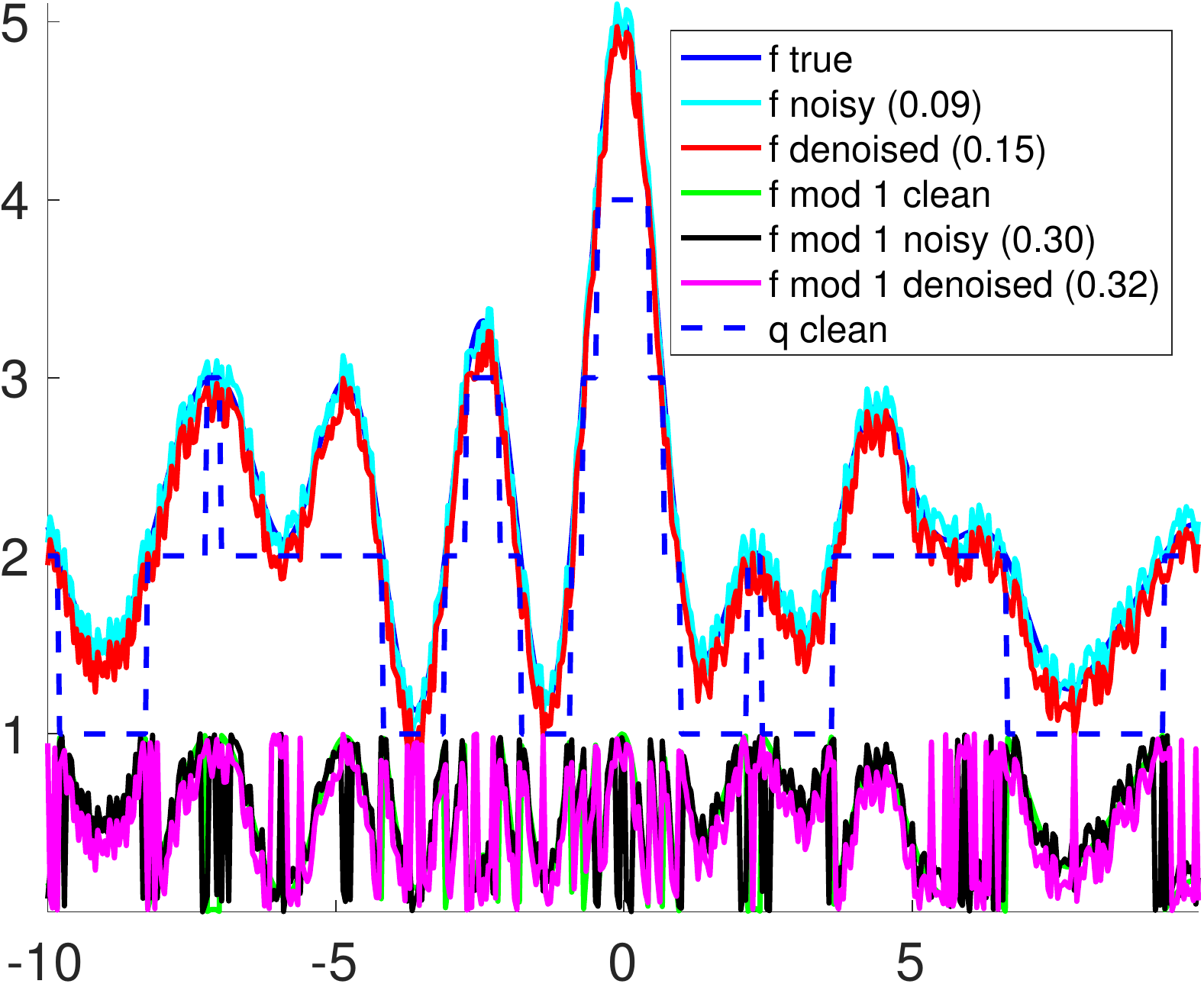} }
%
\subcaptionbox[]{  $\gamma=0.15$, \textbf{QCQP}
}[ 0.24\textwidth ]
{\includegraphics[width=0.24\textwidth] {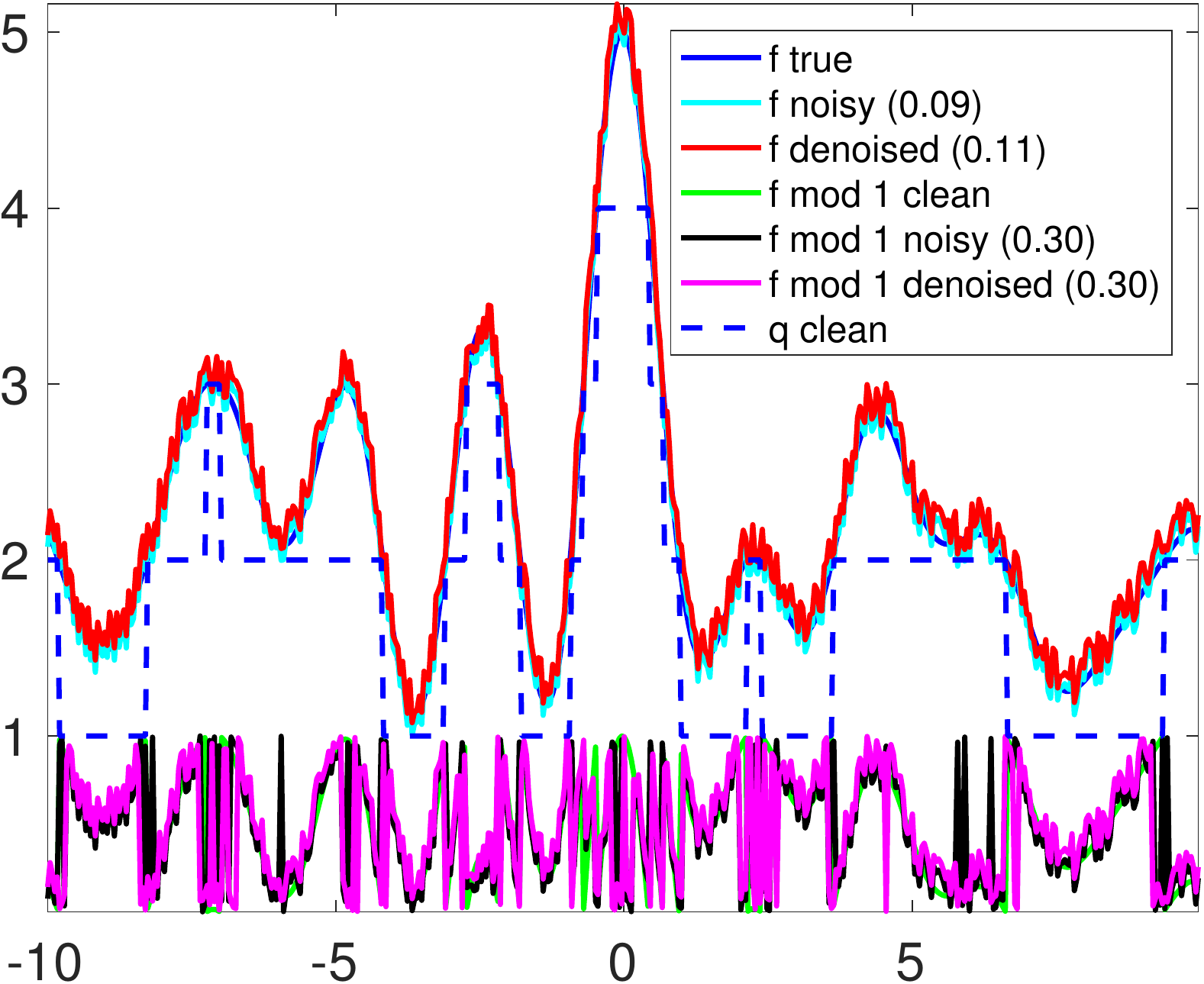} }
%
\subcaptionbox[]{  $\gamma=0.15$, \textbf{iQCQP}
}[ 0.24\textwidth ]
{\includegraphics[width=0.24\textwidth] {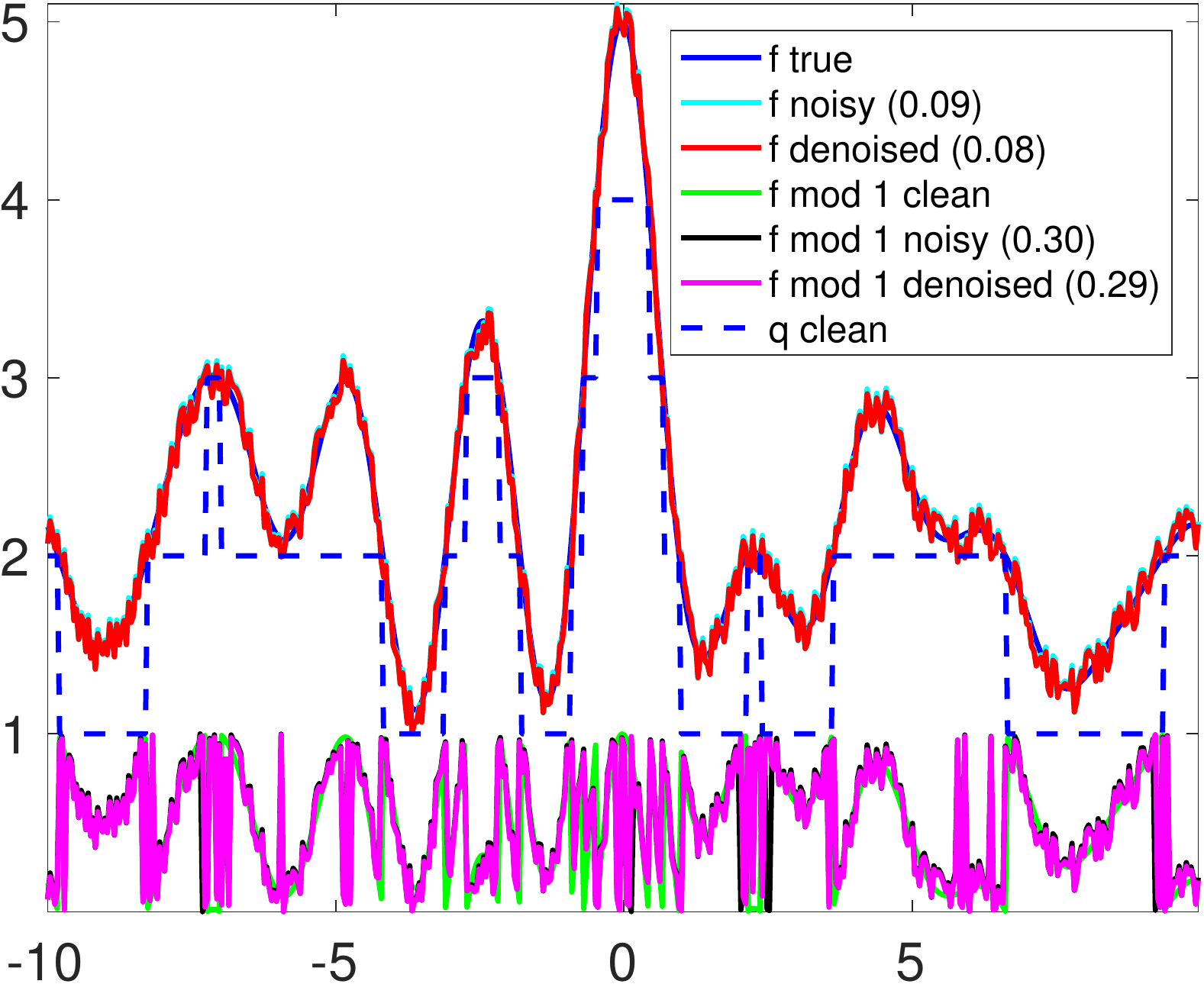} }
%
%
%
\subcaptionbox[]{  $\gamma=0.17$, \textbf{BKR}
}[ 0.24\textwidth ]
{\includegraphics[width=0.24\textwidth] {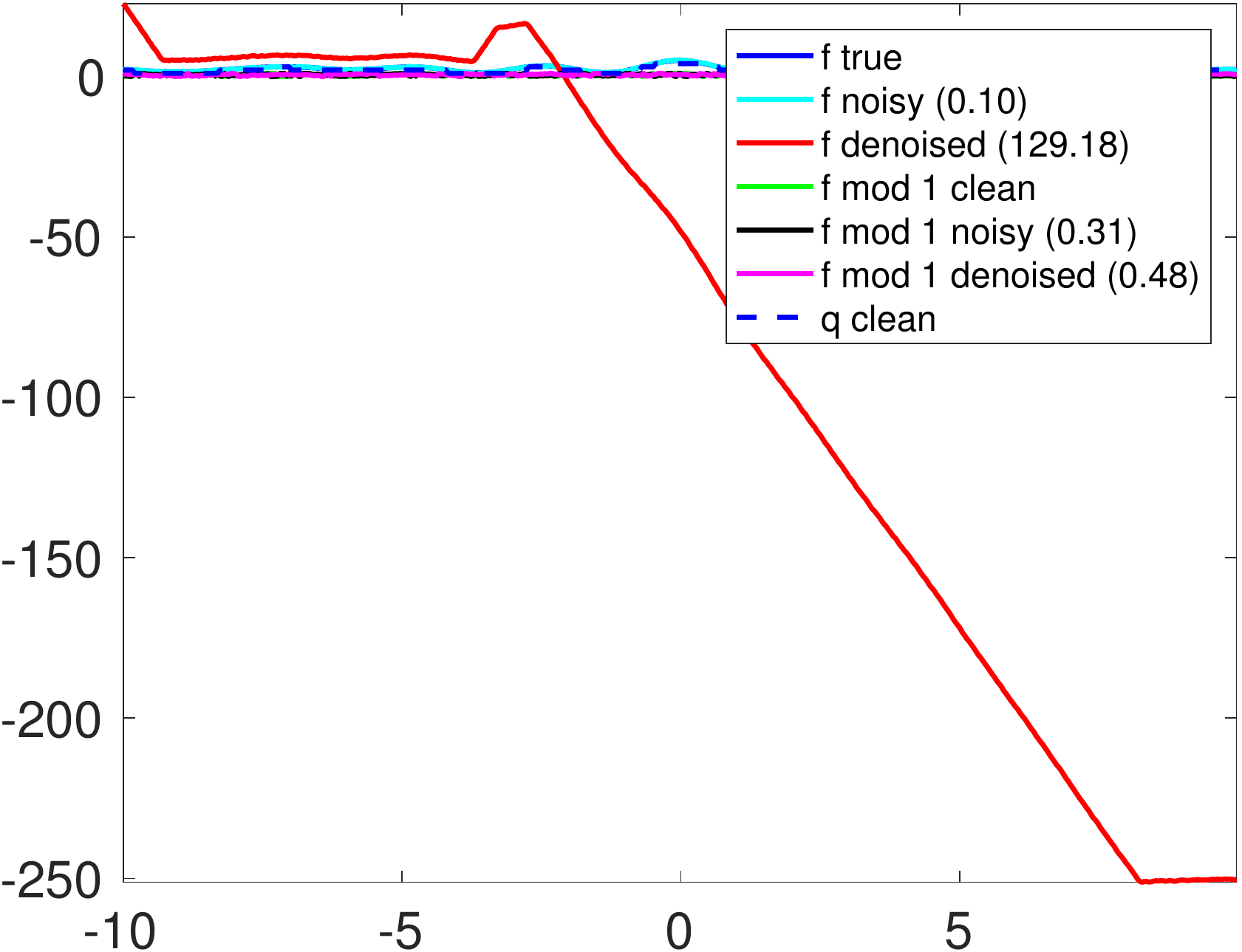} }
%
\subcaptionbox[]{  $\gamma=0.17$, \textbf{OLS}
}[ 0.24\textwidth ]
{\includegraphics[width=0.24\textwidth] {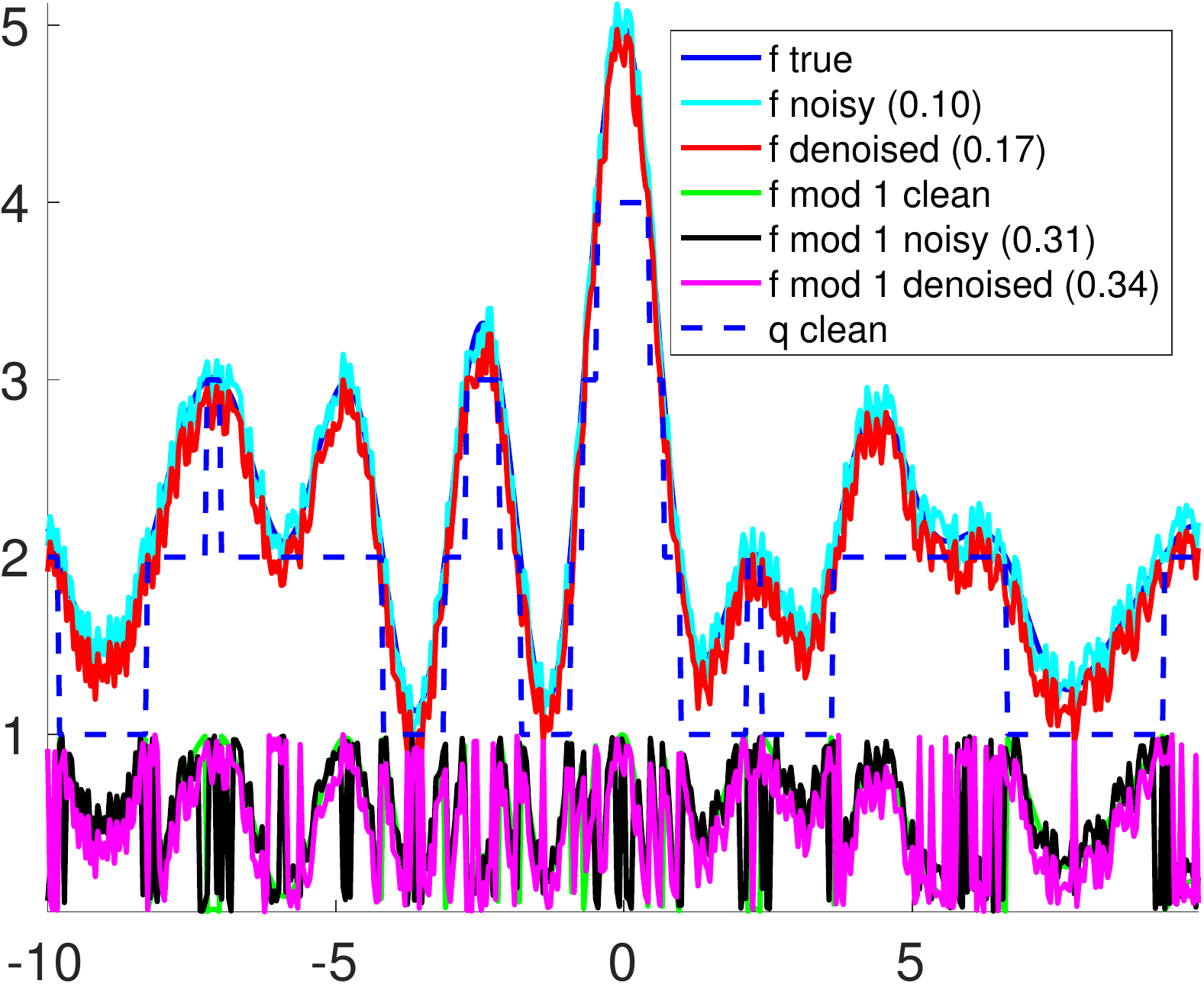} }
%
\subcaptionbox[]{  $\gamma=0.17$, \textbf{QCQP}
}[ 0.24\textwidth ]
{\includegraphics[width=0.24\textwidth] {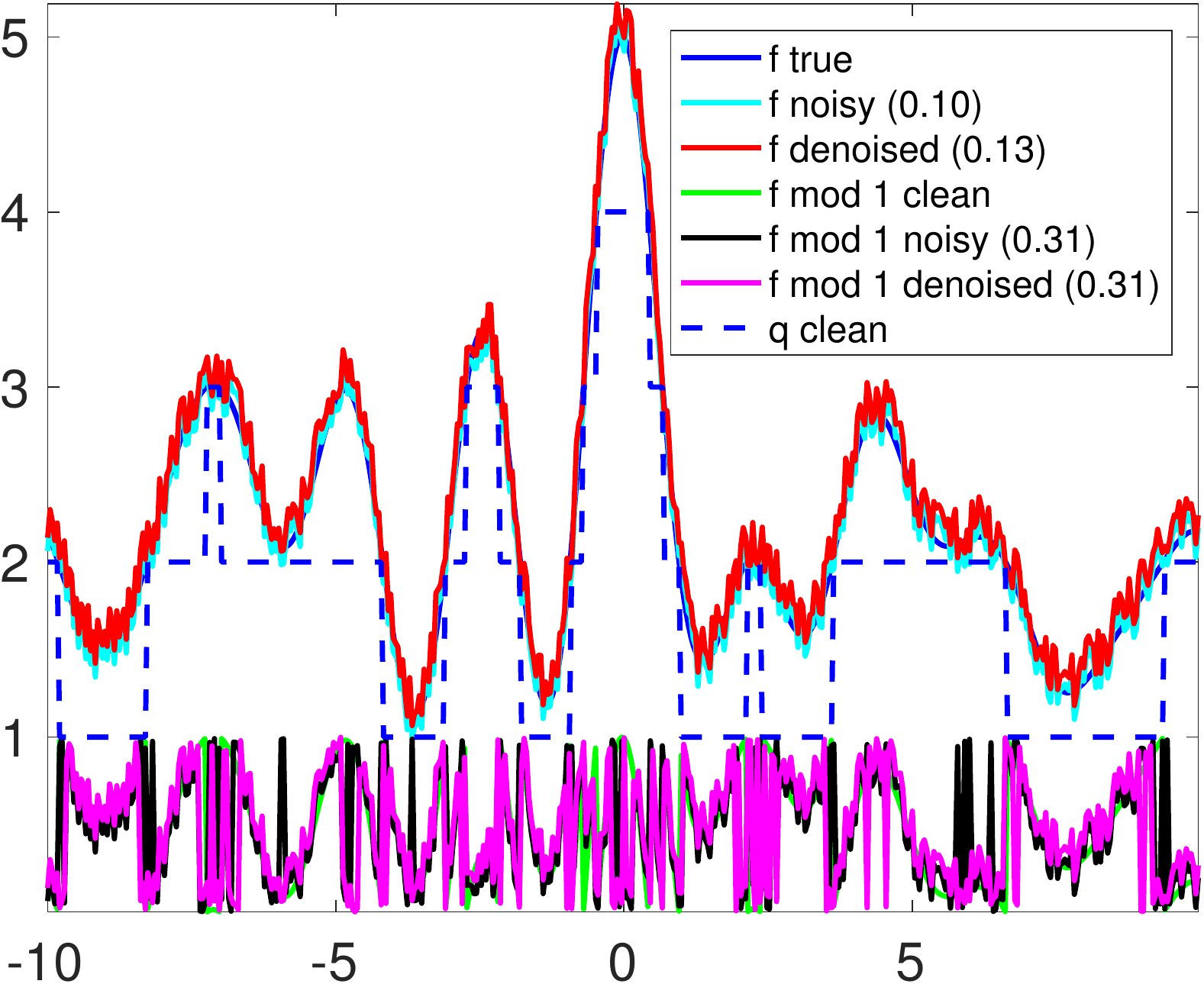} }
%
\subcaptionbox[]{  $\gamma=0.17$, \textbf{iQCQP}
}[ 0.24\textwidth ]
{\includegraphics[width=0.24\textwidth] {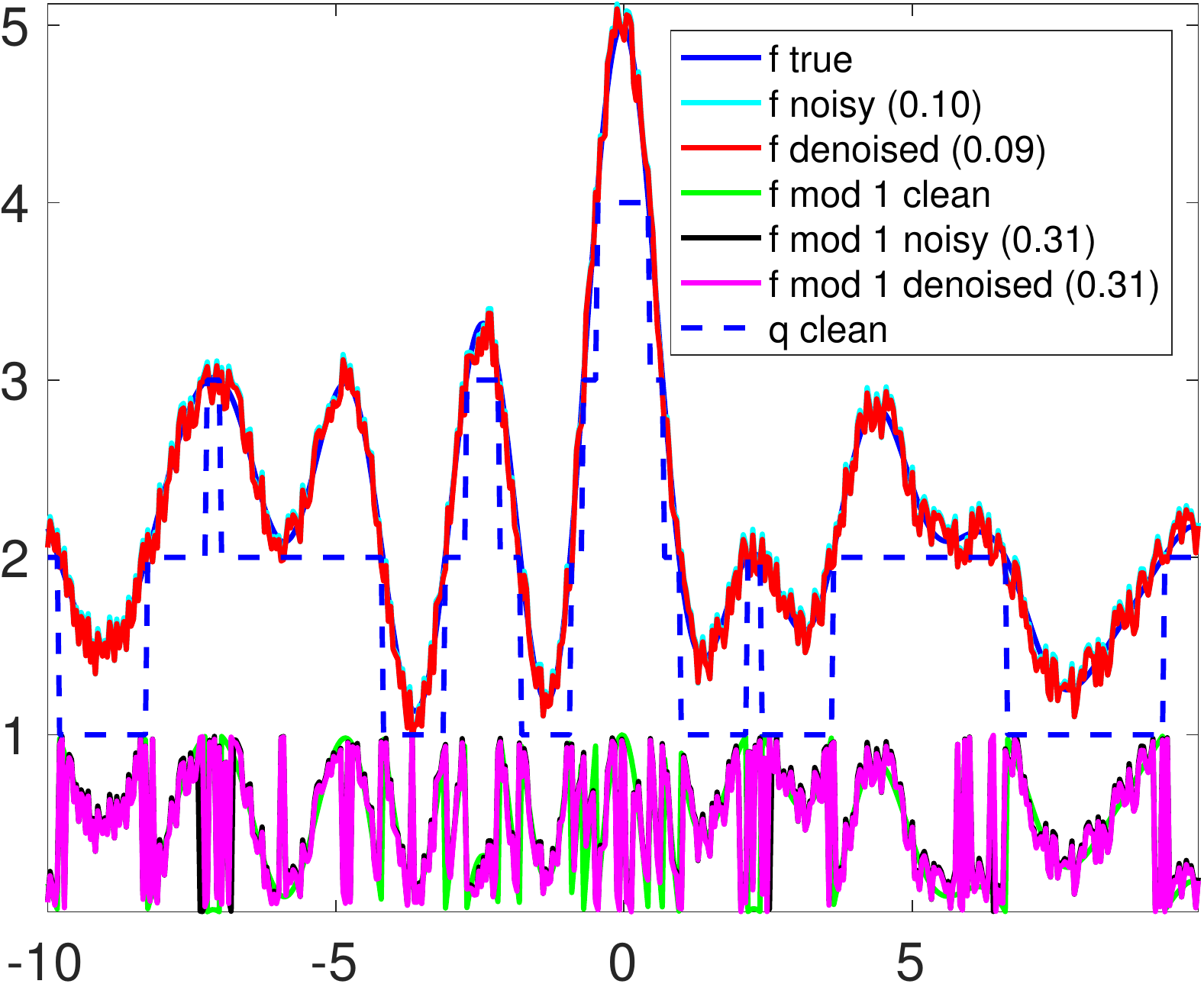} }
%
%
%
%
%
%
\subcaptionbox[]{  $\gamma=0.20$, \textbf{BKR}
}[ 0.24\textwidth ]
{\includegraphics[width=0.24\textwidth] {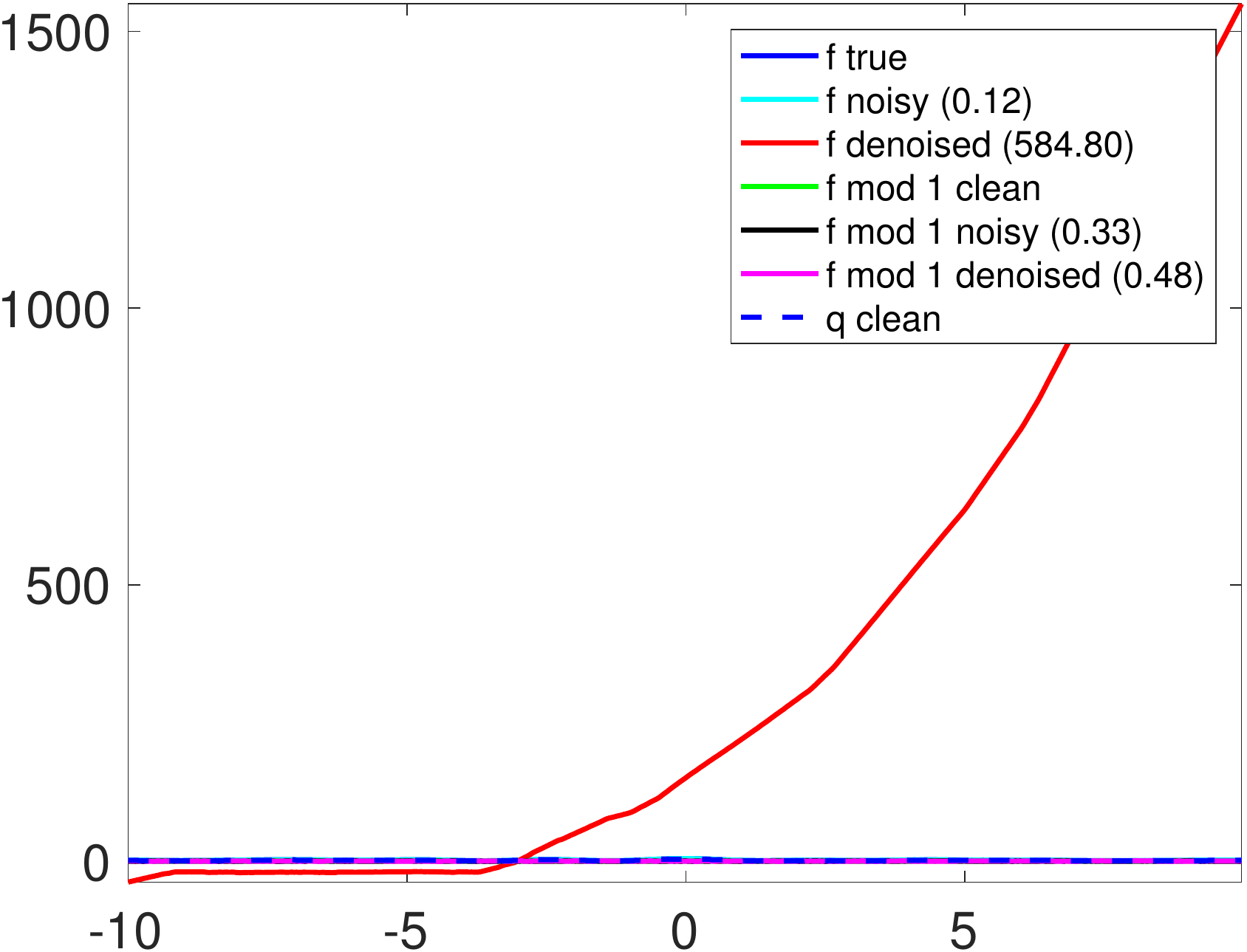} }
%
\subcaptionbox[]{  $\gamma=0.20$, \textbf{OLS}
}[ 0.24\textwidth ]
{\includegraphics[width=0.24\textwidth] {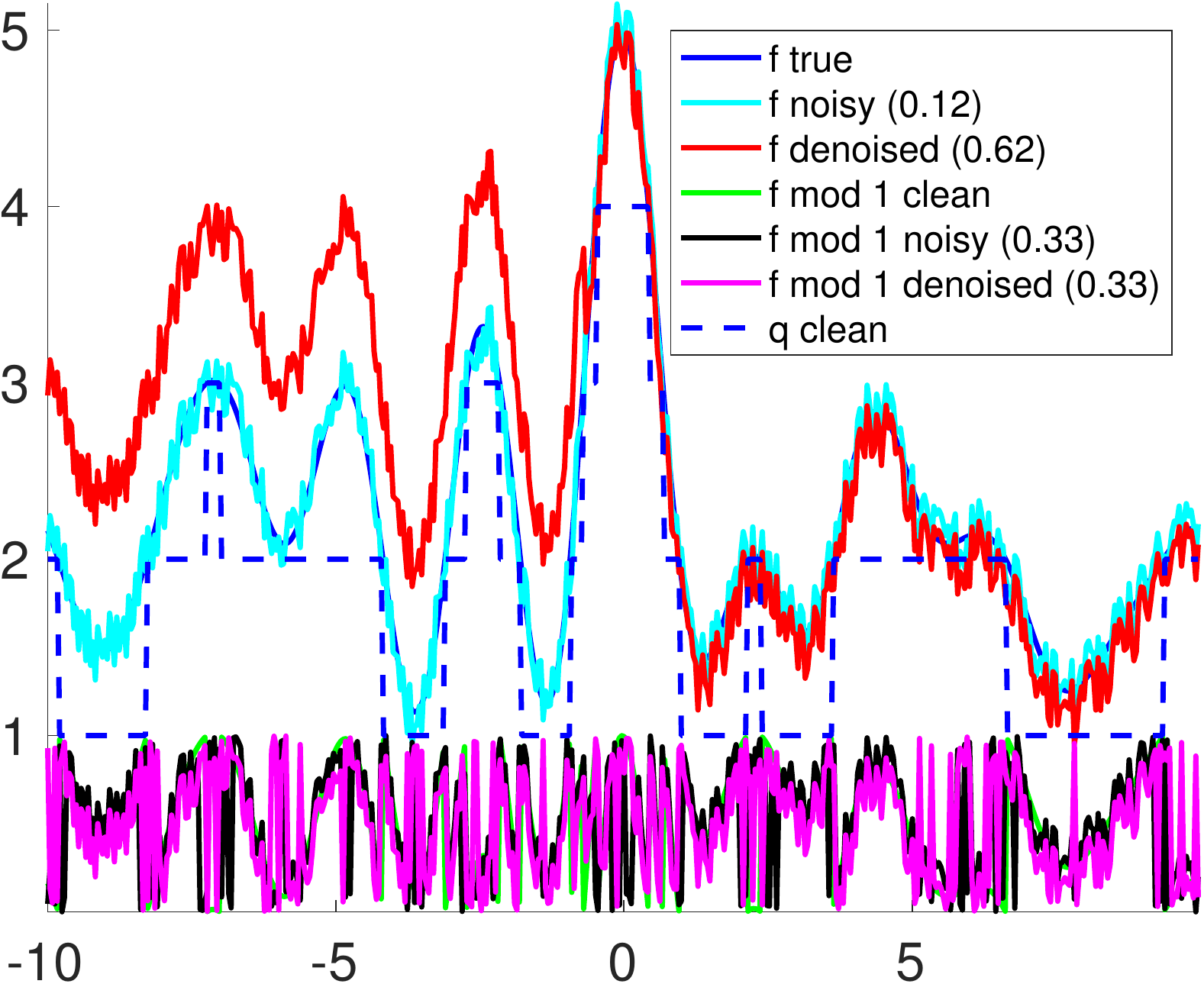} }
%
\subcaptionbox[]{  $\gamma=0.20$, \textbf{QCQP}
}[ 0.24\textwidth ]
{\includegraphics[width=0.24\textwidth] {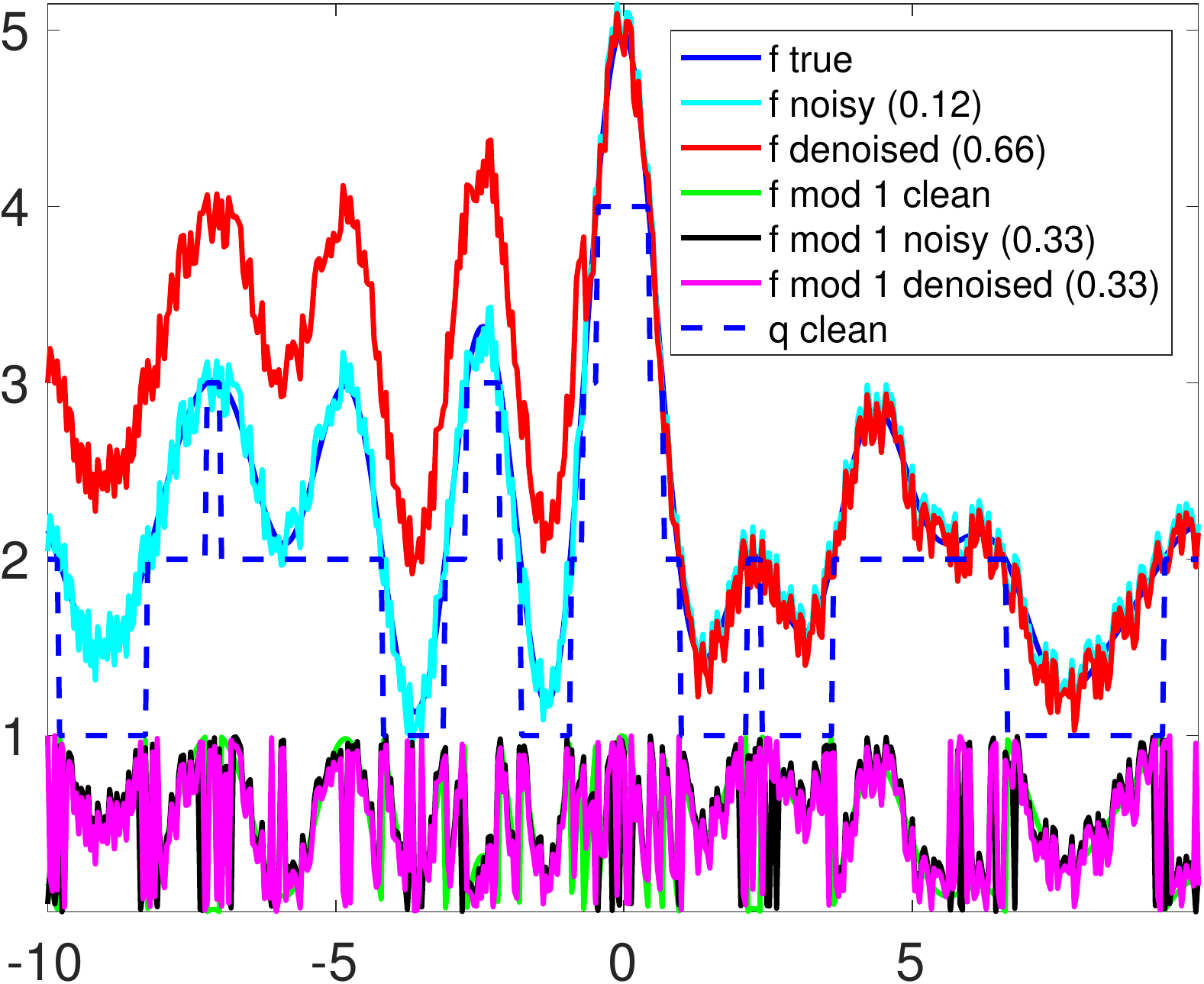} }
%
\subcaptionbox[]{  $\gamma=0.20$, \textbf{iQCQP}
}[ 0.24\textwidth ]
{\includegraphics[width=0.24\textwidth] {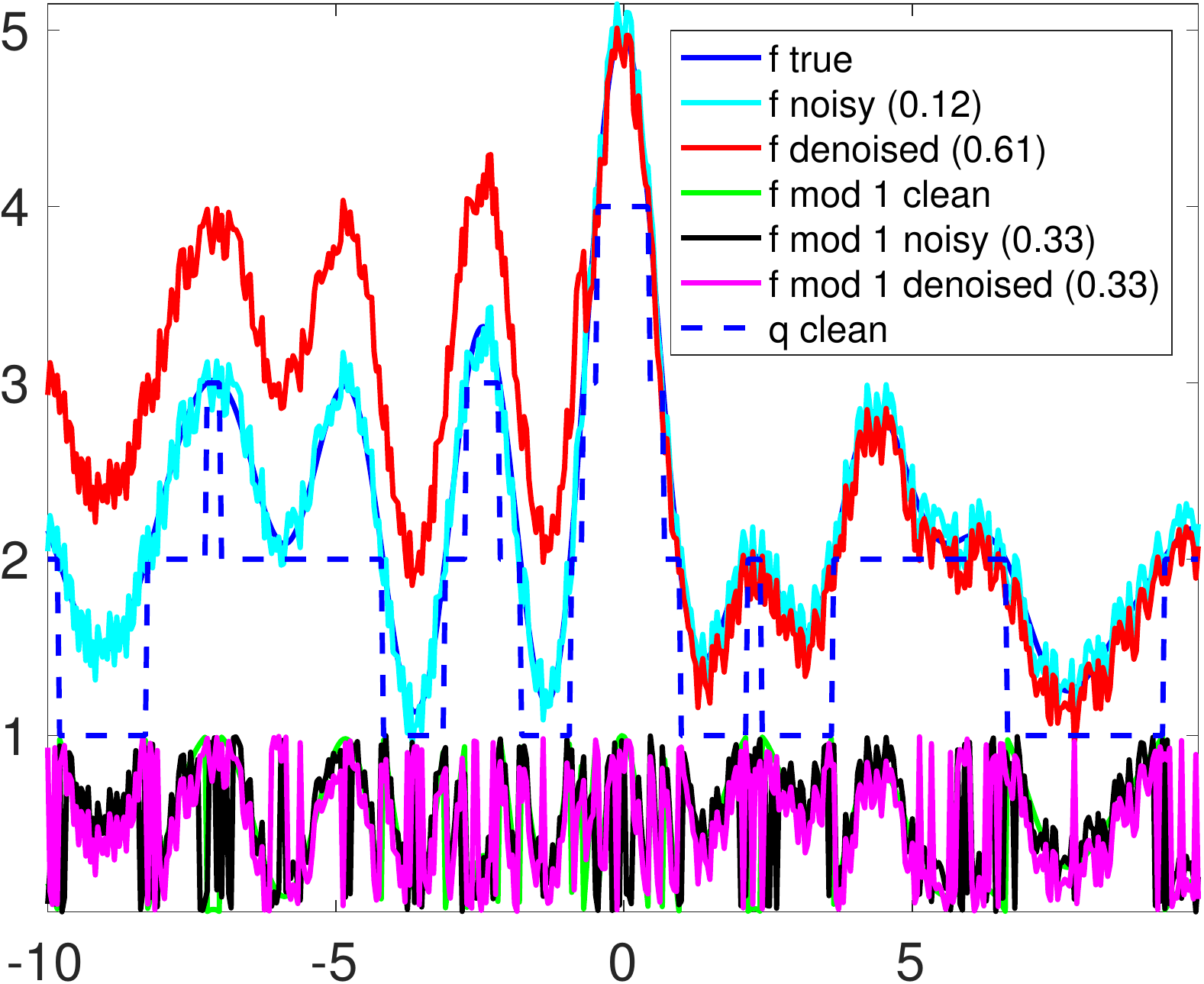} }
%
%
\vspace{-2mm}
\captionsetup{width=0.98\linewidth}
\caption[Short Caption]{Denoised instances for the bandlimited function considered  in  \cite{bhandari17}, under the Bounded-Uniform noise model, for  \textbf{BKR},  \textbf{OLS}, \textbf{QCQP} and \textbf{iQCQP},  as we increase the noise level $\gamma$. We keep fixed the parameters $n=484$, $k=2$, $\lambda= 0.01$. The numerical values in the legend denote the RMSE.  
\textbf{QCQP} denotes Algorithm \ref{algo:two_stage_denoise}, for which  the unwrapping stage is  performed via \textbf{OLS} \eqref{eq:ols_unwrap_lin_system}. 
}
\label{fig:instances_f_BL_Bounded}
\end{figure}

\begin{figure}[!ht]
\centering
\subcaptionbox[]{  $\sigma=0.05$, \textbf{BKR}
}[ 0.24\textwidth ]
{\includegraphics[width=0.24\textwidth] {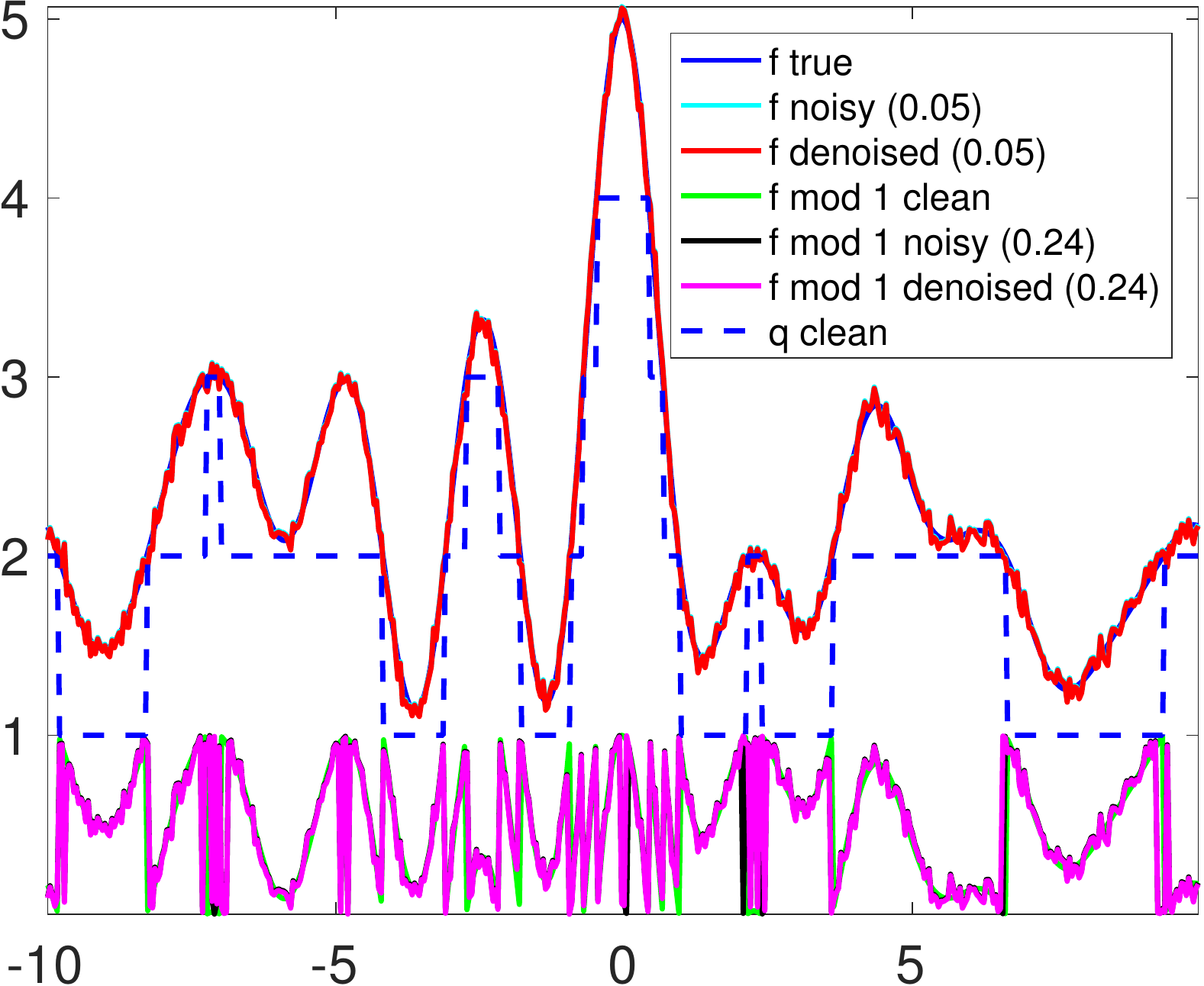} }
%
\subcaptionbox[]{  $\sigma=0.05$, \textbf{OLS}
}[ 0.24\textwidth ]
{\includegraphics[width=0.24\textwidth] {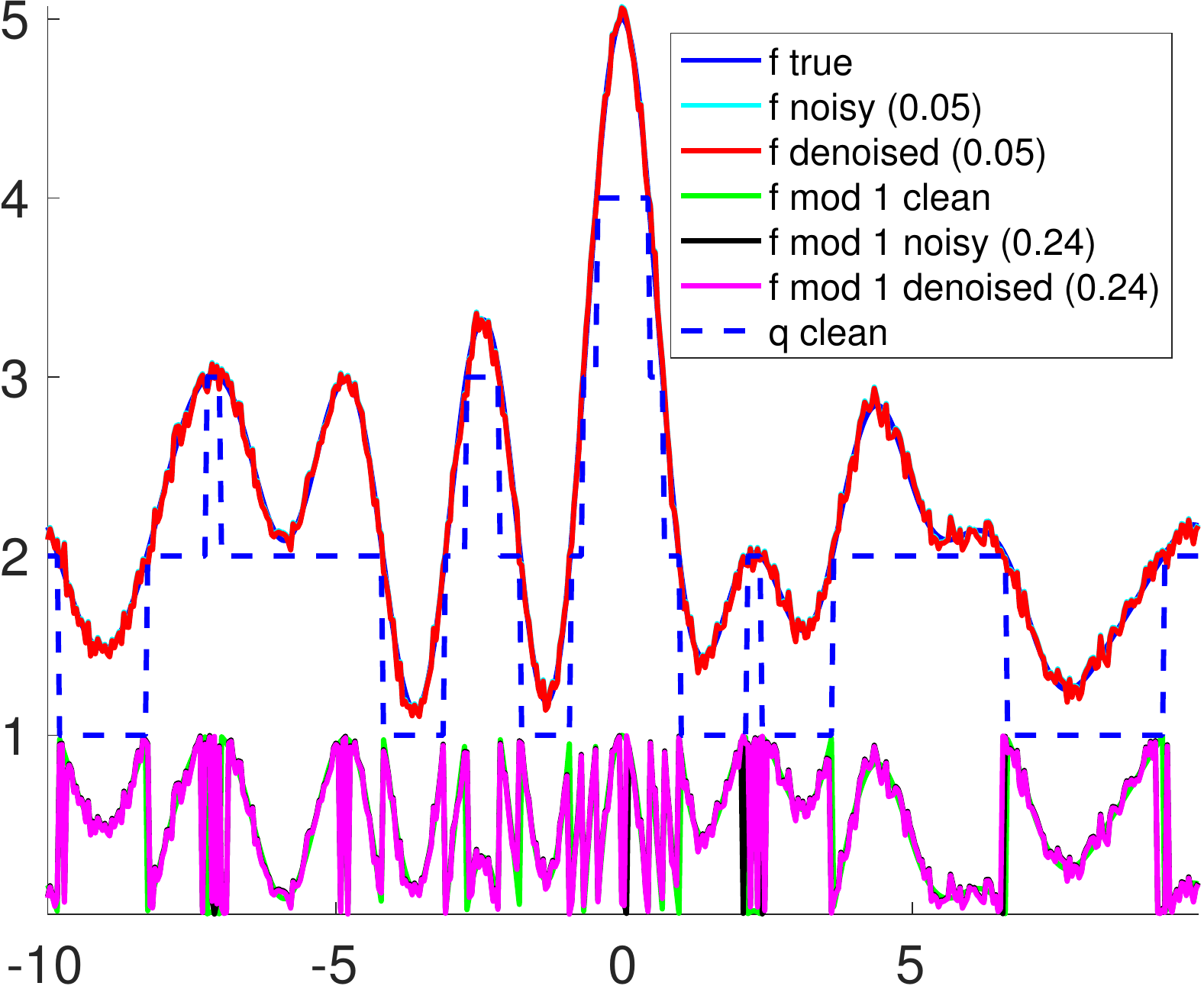} }
%
\subcaptionbox[]{  $\sigma=0.05$, \textbf{QCQP}
}[ 0.24\textwidth ]
{\includegraphics[width=0.24\textwidth] {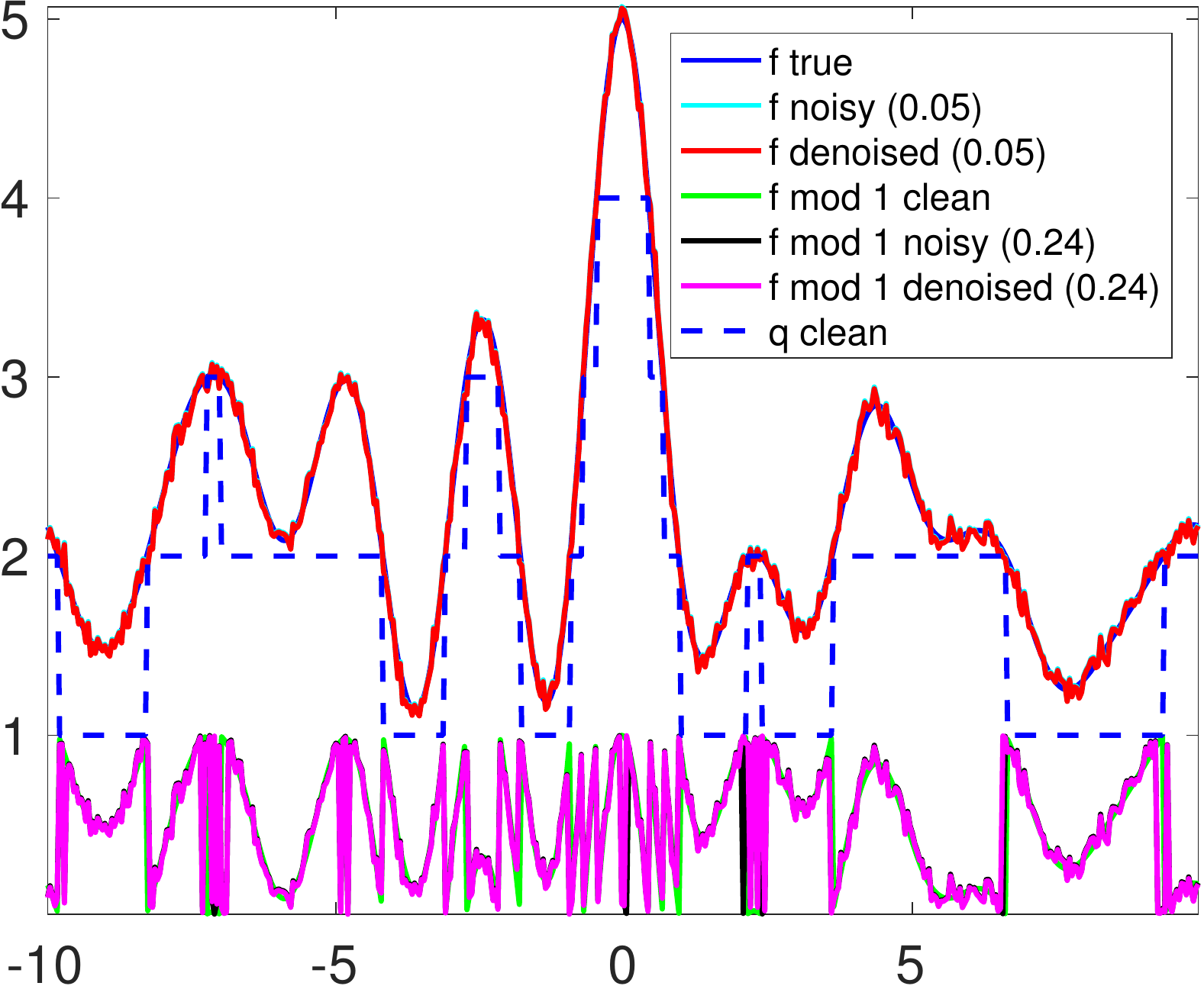} }
%
\subcaptionbox[]{  $\sigma=0.05$, \textbf{iQCQP}
}[ 0.24\textwidth ]
{\includegraphics[width=0.24\textwidth] {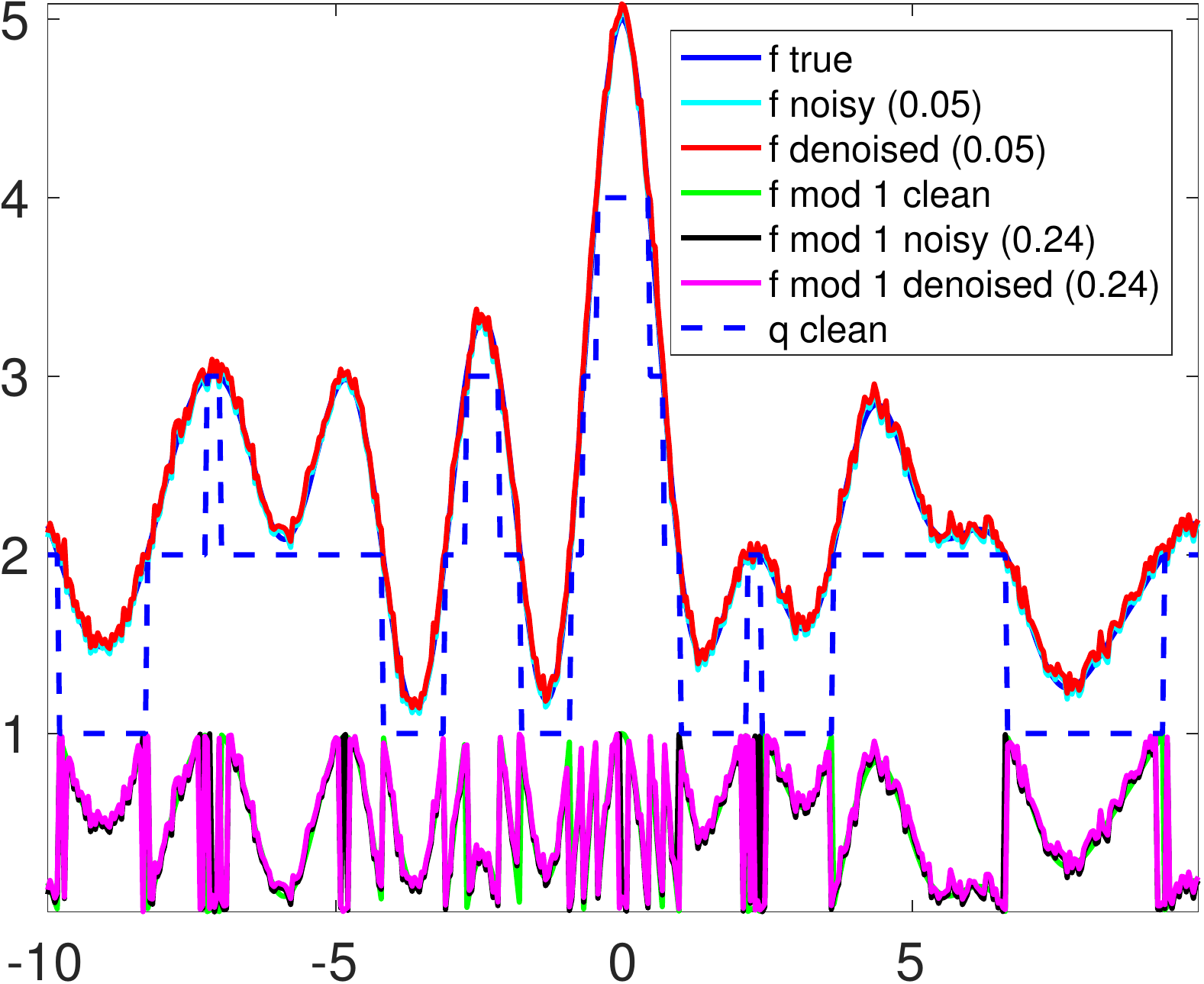} }
%
%
%
\subcaptionbox[]{  $\sigma=0.08$, \textbf{BKR}
}[ 0.24\textwidth ]
{\includegraphics[width=0.24\textwidth] {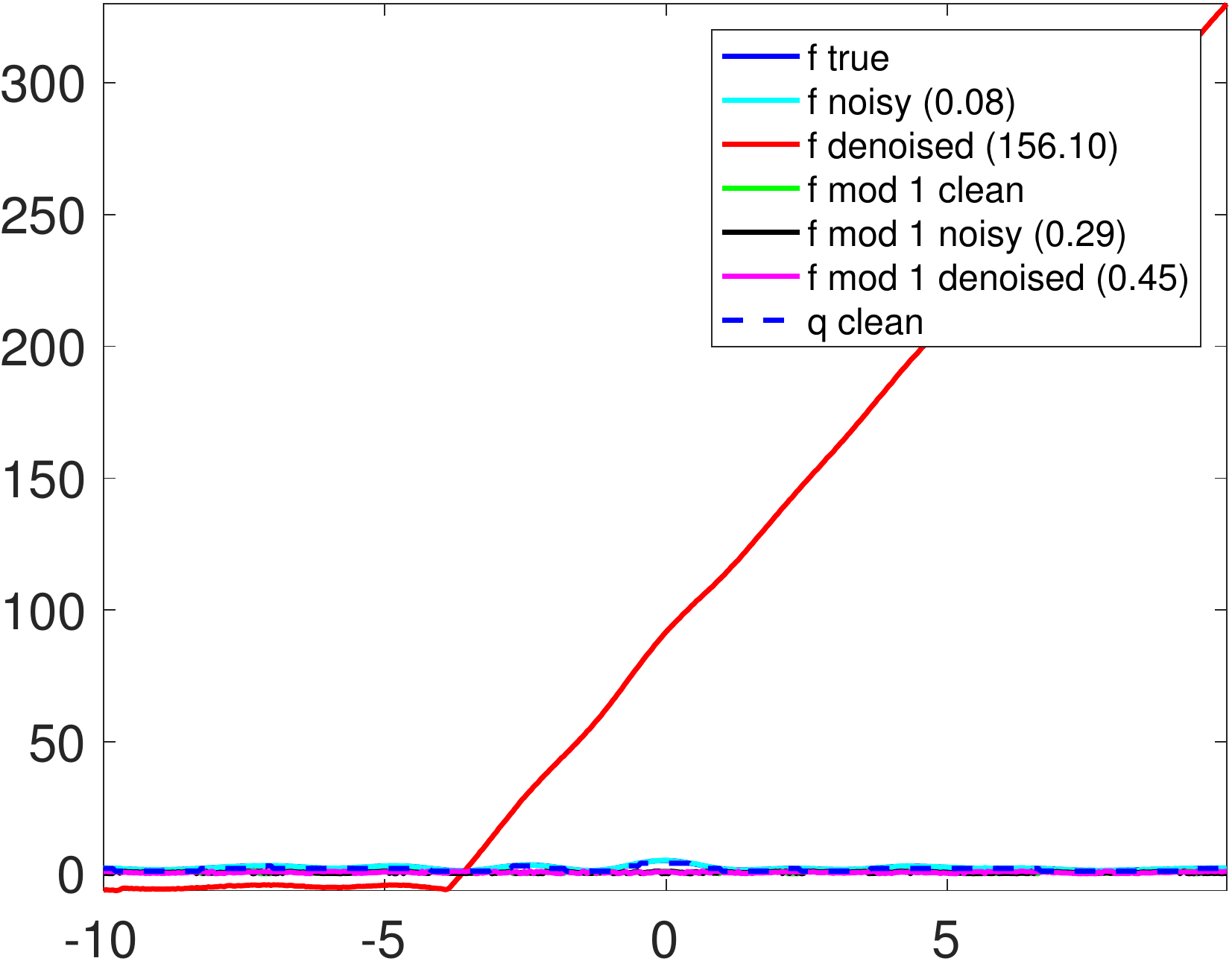} }
%
\subcaptionbox[]{  $\sigma=0.08$, \textbf{OLS}
}[ 0.24\textwidth ]
{\includegraphics[width=0.24\textwidth] {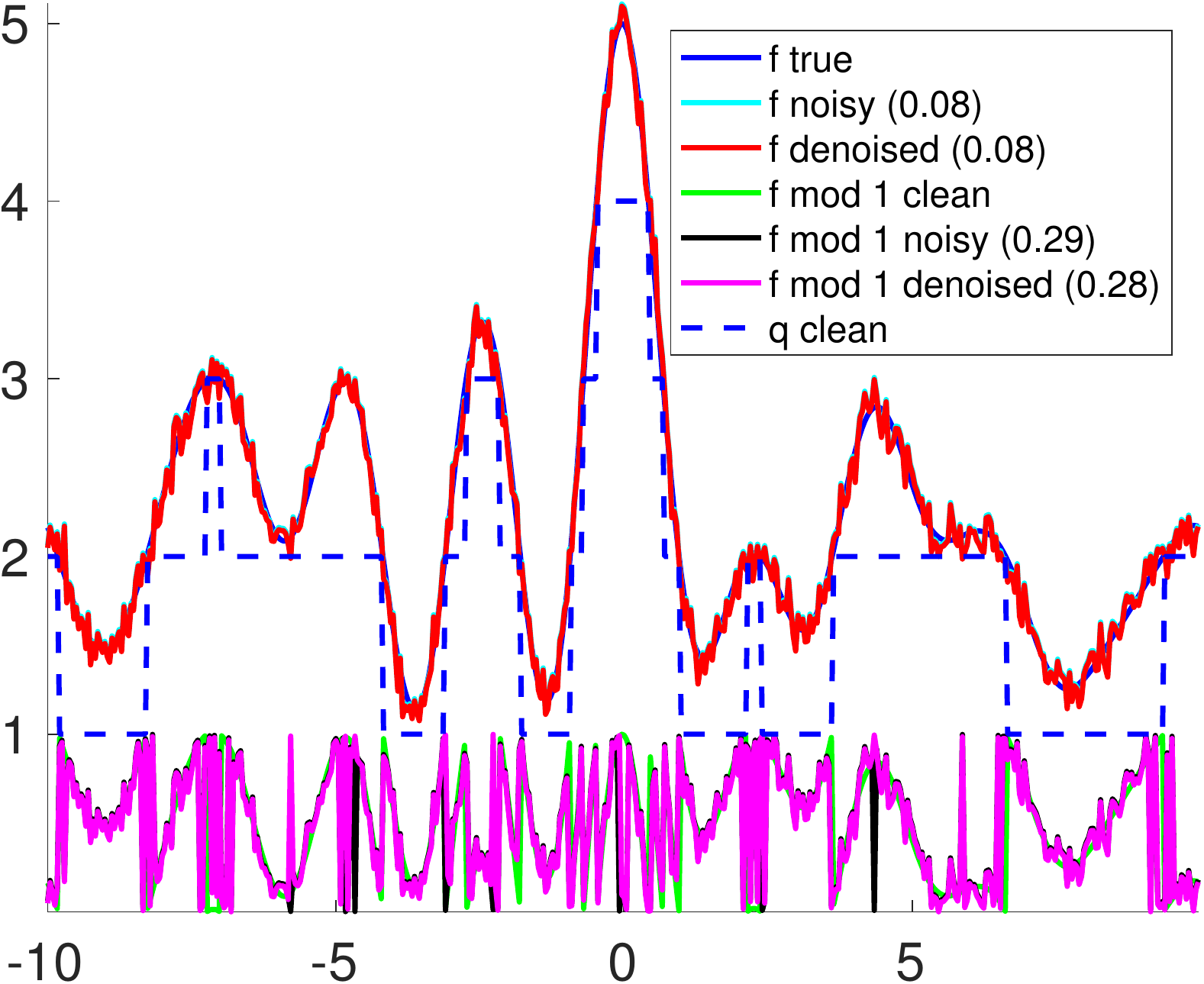} }
%
\subcaptionbox[]{  $\sigma=0.08$, \textbf{QCQP}
}[ 0.24\textwidth ]
{\includegraphics[width=0.24\textwidth] {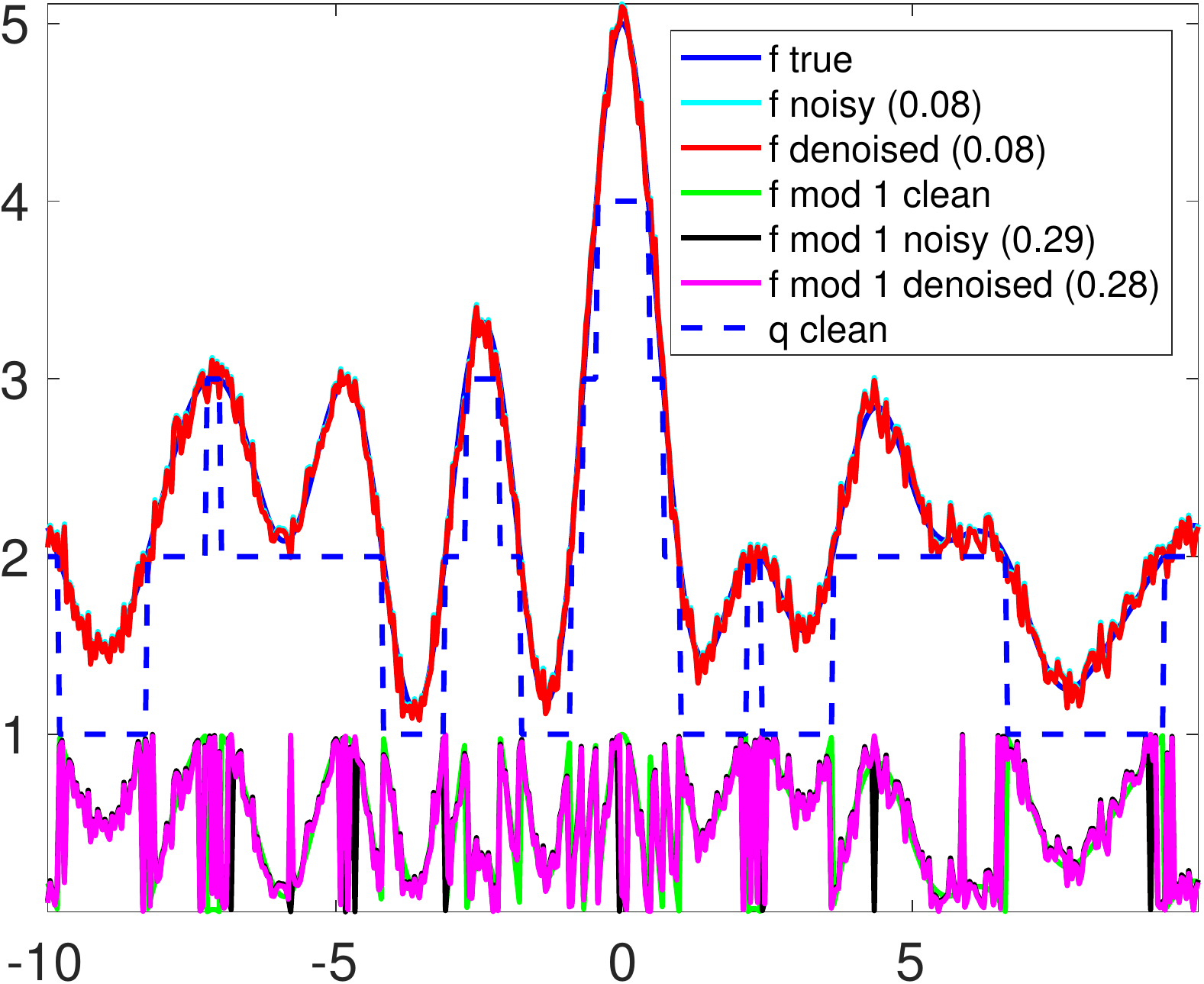} }
%
\subcaptionbox[]{  $\sigma=0.08$, \textbf{iQCQP}
}[ 0.24\textwidth ]
{\includegraphics[width=0.24\textwidth] {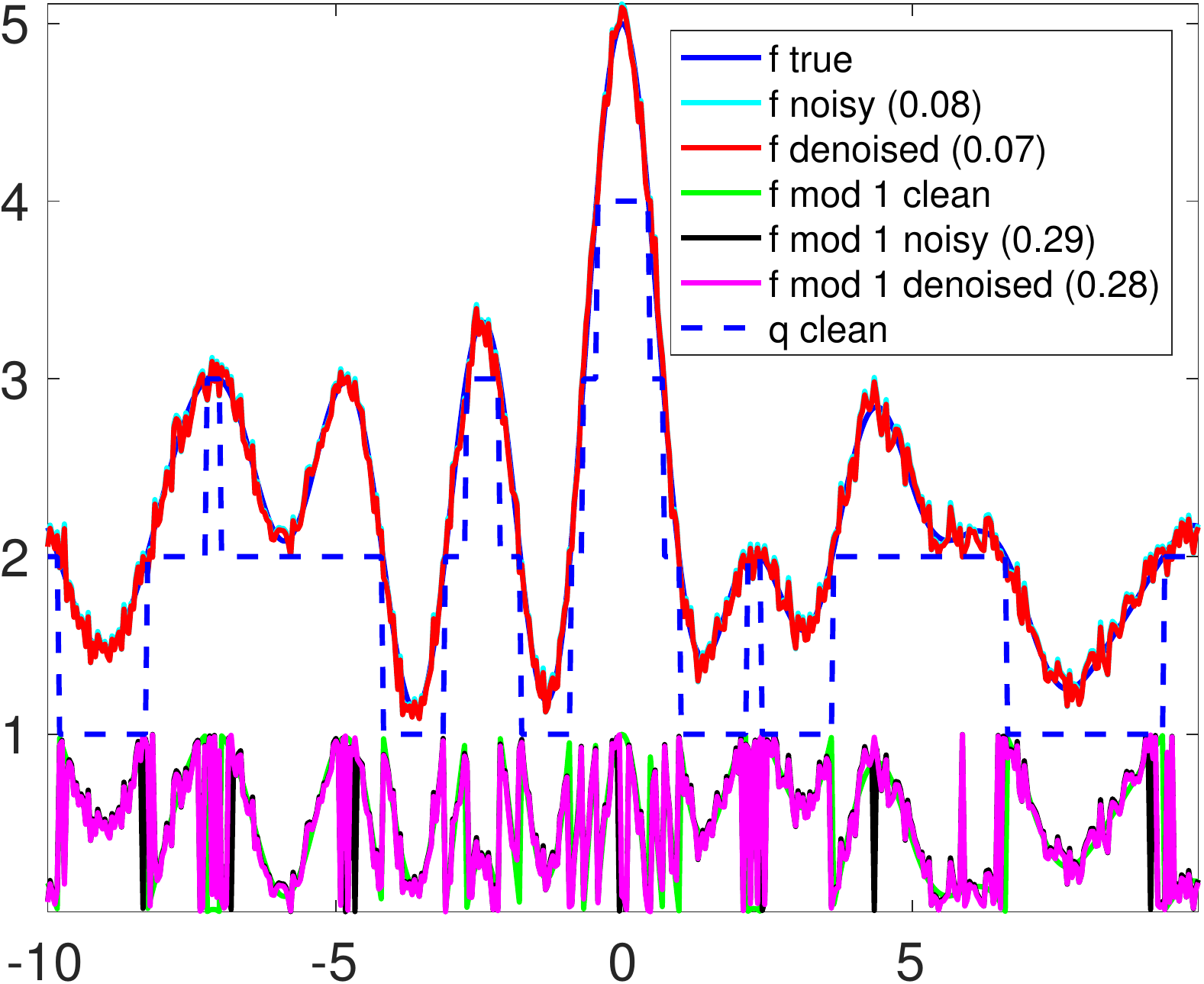} }
%
%
%
%
%
%
\subcaptionbox[]{  $\sigma=0.12$, \textbf{BKR}
}[ 0.24\textwidth ]
{\includegraphics[width=0.24\textwidth] {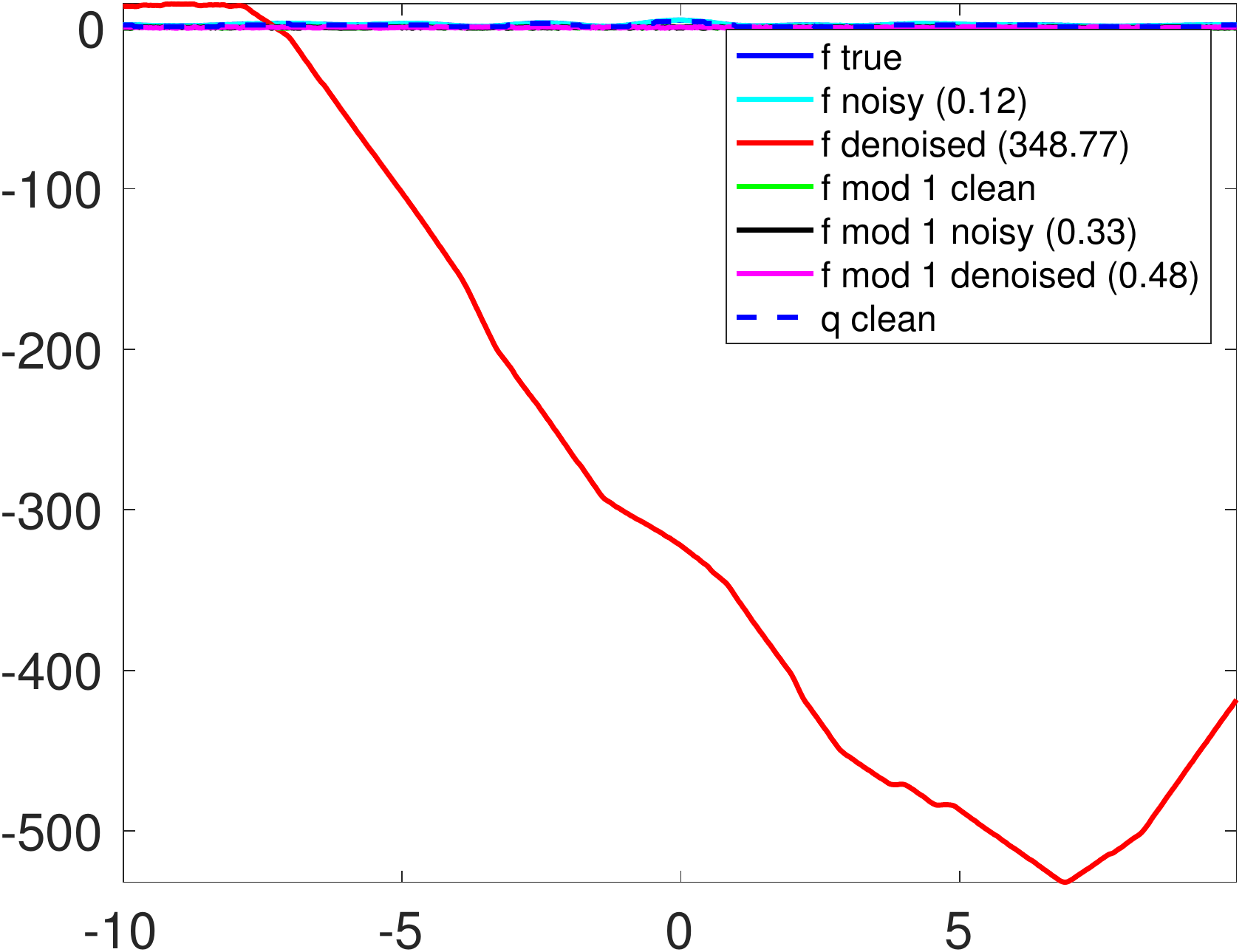} }
%
\subcaptionbox[]{  $\sigma=0.12$, \textbf{OLS}
}[ 0.24\textwidth ]
{\includegraphics[width=0.24\textwidth] {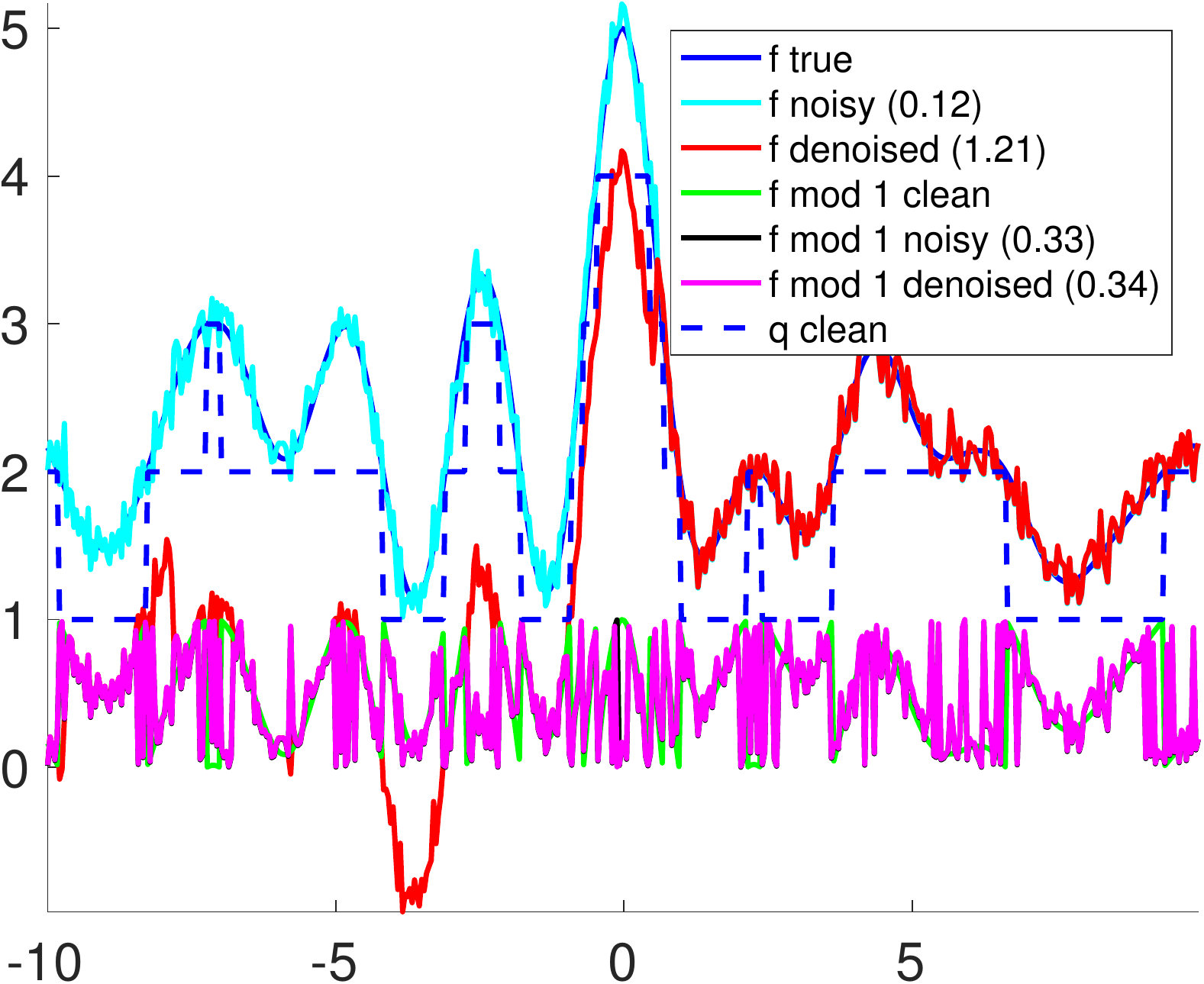} }
%
\subcaptionbox[]{  $\sigma=0.12$, \textbf{QCQP}
}[ 0.24\textwidth ]
{\includegraphics[width=0.24\textwidth] {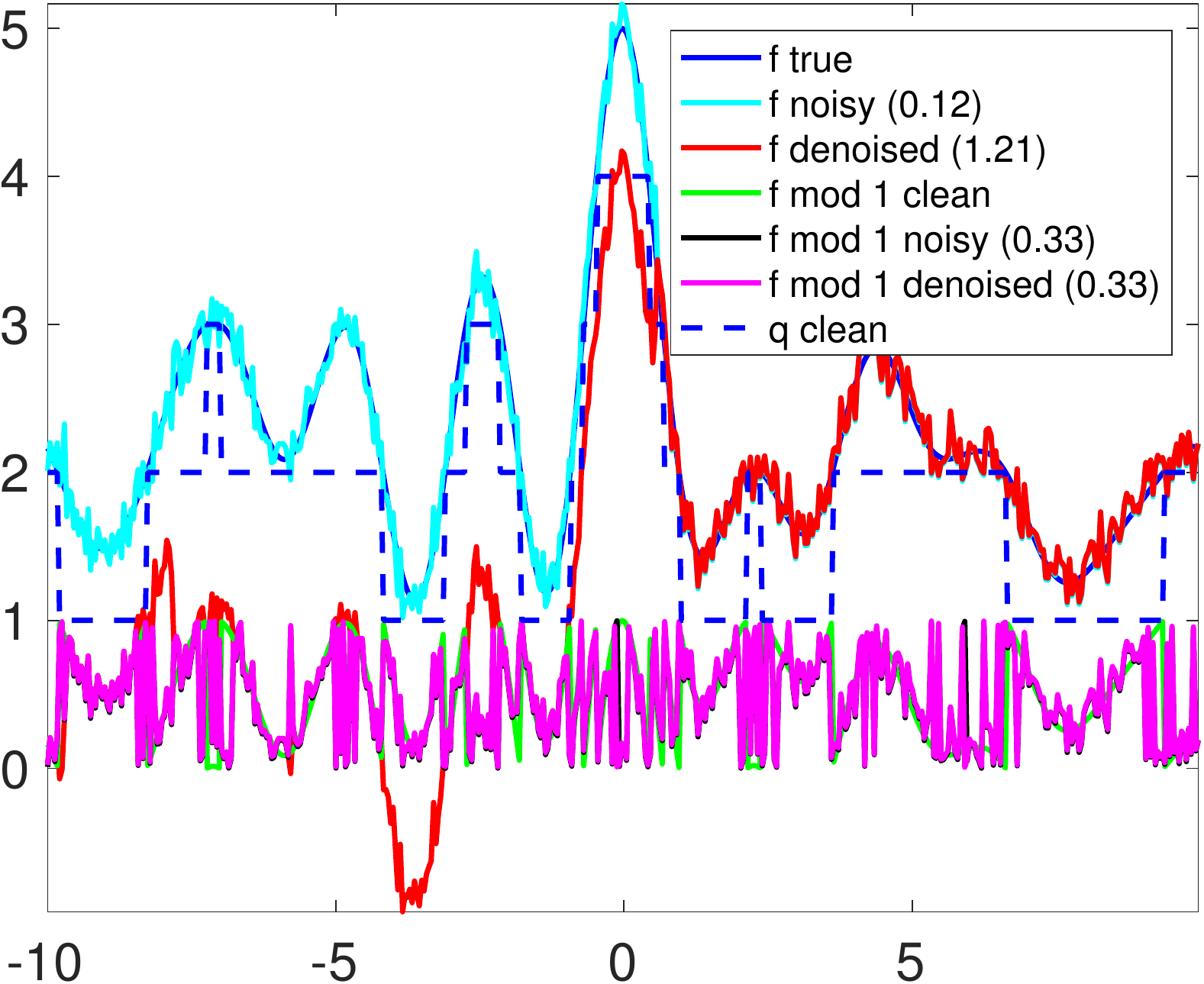} }
%
\subcaptionbox[]{  $\sigma=0.12$, \textbf{iQCQP}
}[ 0.24\textwidth ]
{\includegraphics[width=0.24\textwidth] {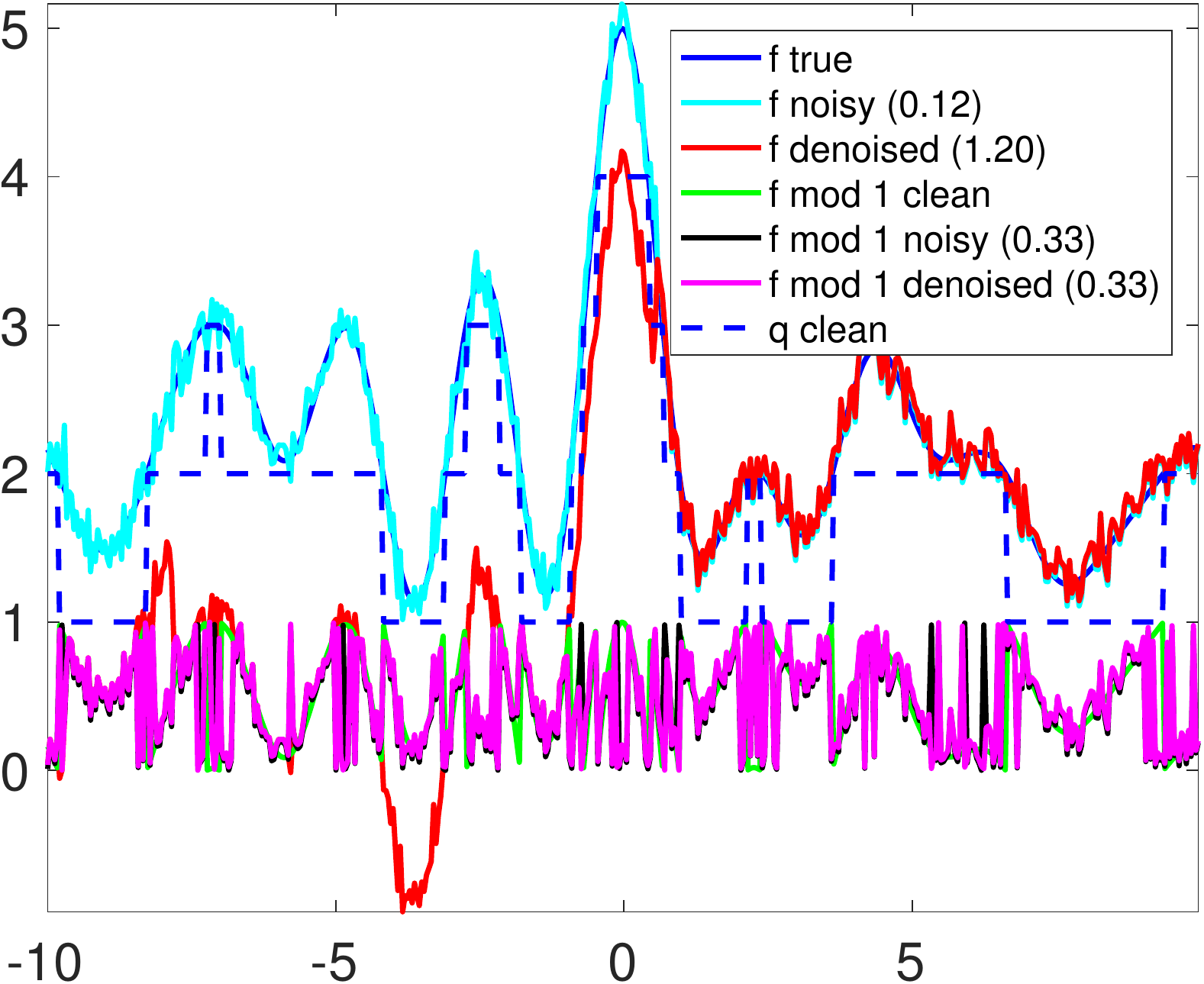} }
%
%
\vspace{-2mm}
\captionsetup{width=0.98\linewidth}
\caption[Short Caption]{Denoised instances for the bandlimited function considered  in  \cite{bhandari17}, under the Gaussian noise model, for  \textbf{BKR},  \textbf{OLS}, \textbf{QCQP} and \textbf{iQCQP},  as we increase the noise level $\sigma$.  \textbf{QCQP} denotes Algorithm \ref{algo:two_stage_denoise}, for which  the unwrapping stage is  performed via \textbf{OLS} \eqref{eq:ols_unwrap_lin_system}.   We keep fixed the parameters $n=484$, $k=2$, $\lambda= 0.01$. The numerical values in the legend denote the RMSE.
}
\label{fig:instances_f_BL_Gaussian}
\end{figure}

\begin{figure}[!ht]
\centering
\subcaptionbox[]{  $\gamma=0.10$, \textbf{BKR}
}[ 0.24\textwidth ]
{\includegraphics[width=0.24\textwidth] {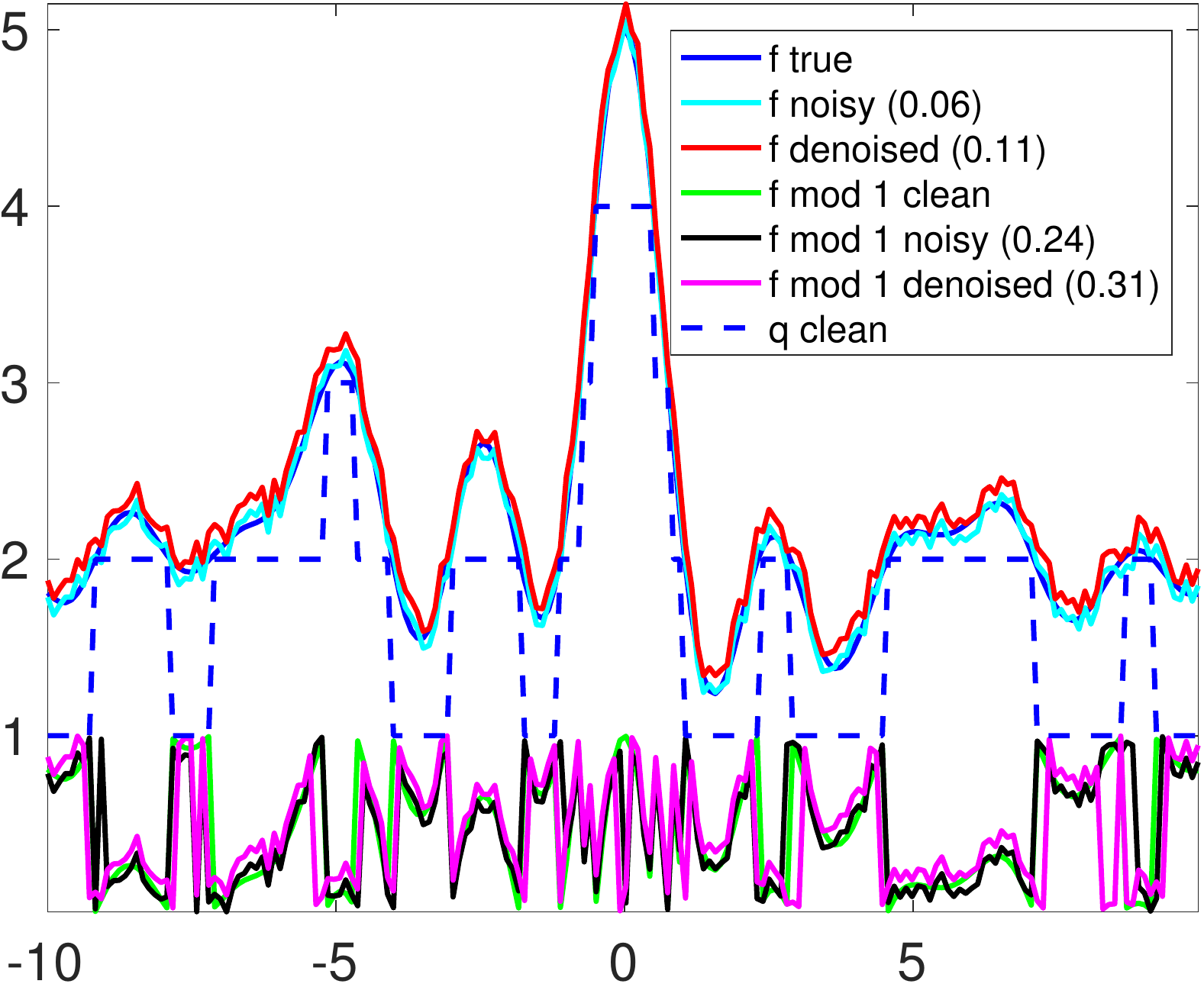} }
%
\subcaptionbox[]{  $\gamma=0.10$, \textbf{OLS}
}[ 0.24\textwidth ]
{\includegraphics[width=0.24\textwidth] {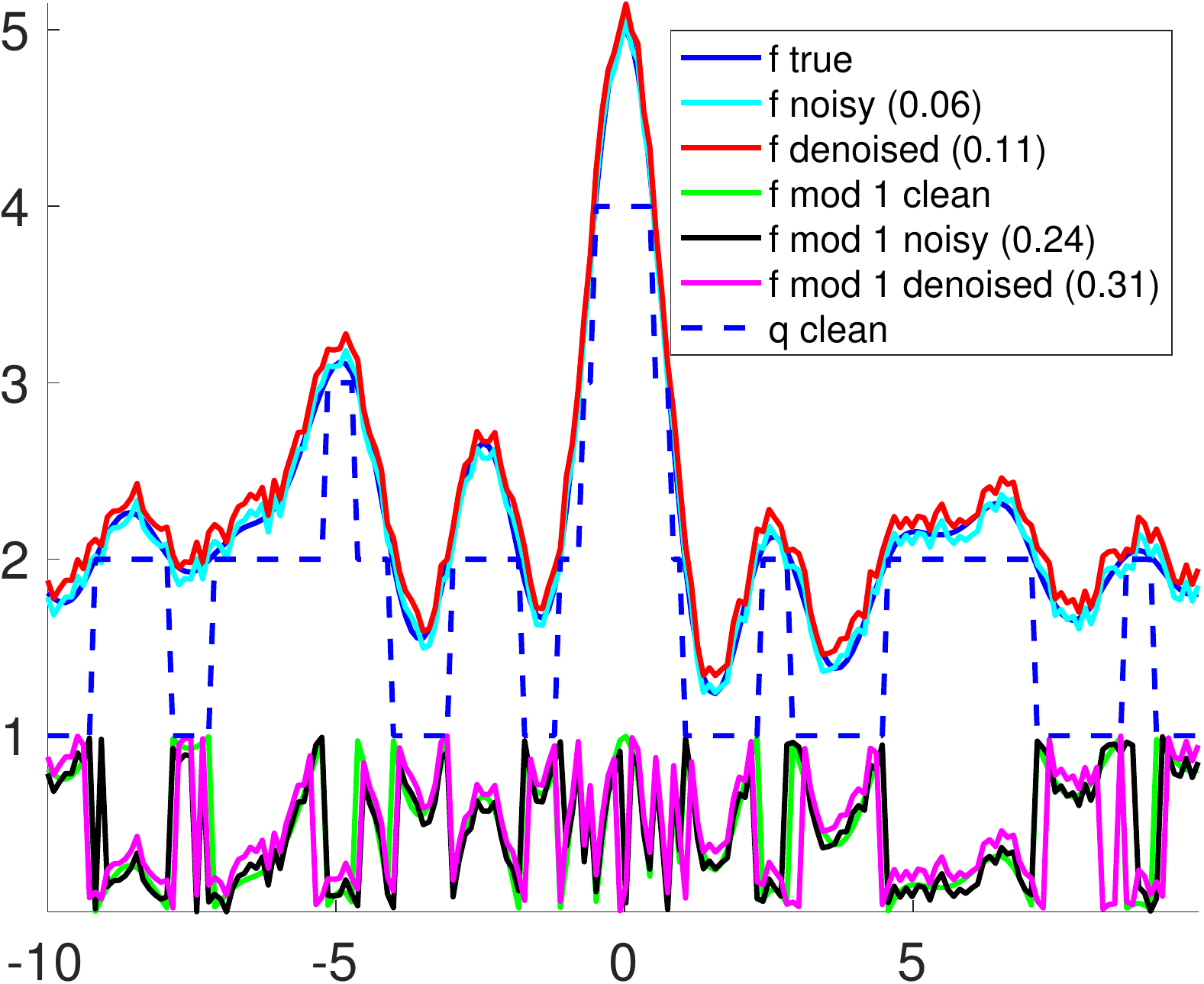} }
%
\subcaptionbox[]{  $\gamma=0.10$, \textbf{QCQP}
}[ 0.24\textwidth ]
{\includegraphics[width=0.24\textwidth] {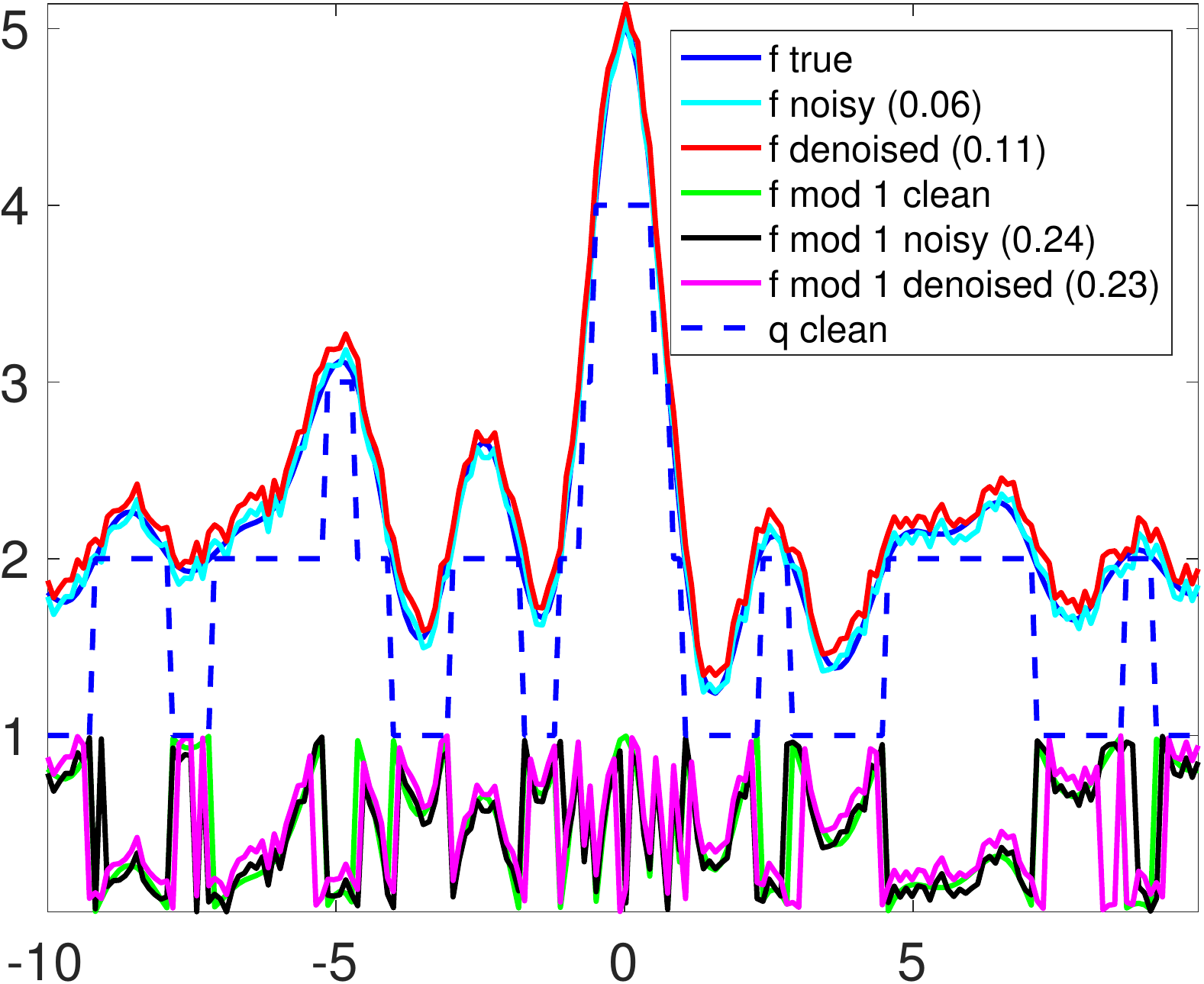} }
%
\subcaptionbox[]{  $\gamma=0.10$, \textbf{iQCQP}
}[ 0.24\textwidth ]
{\includegraphics[width=0.24\textwidth] {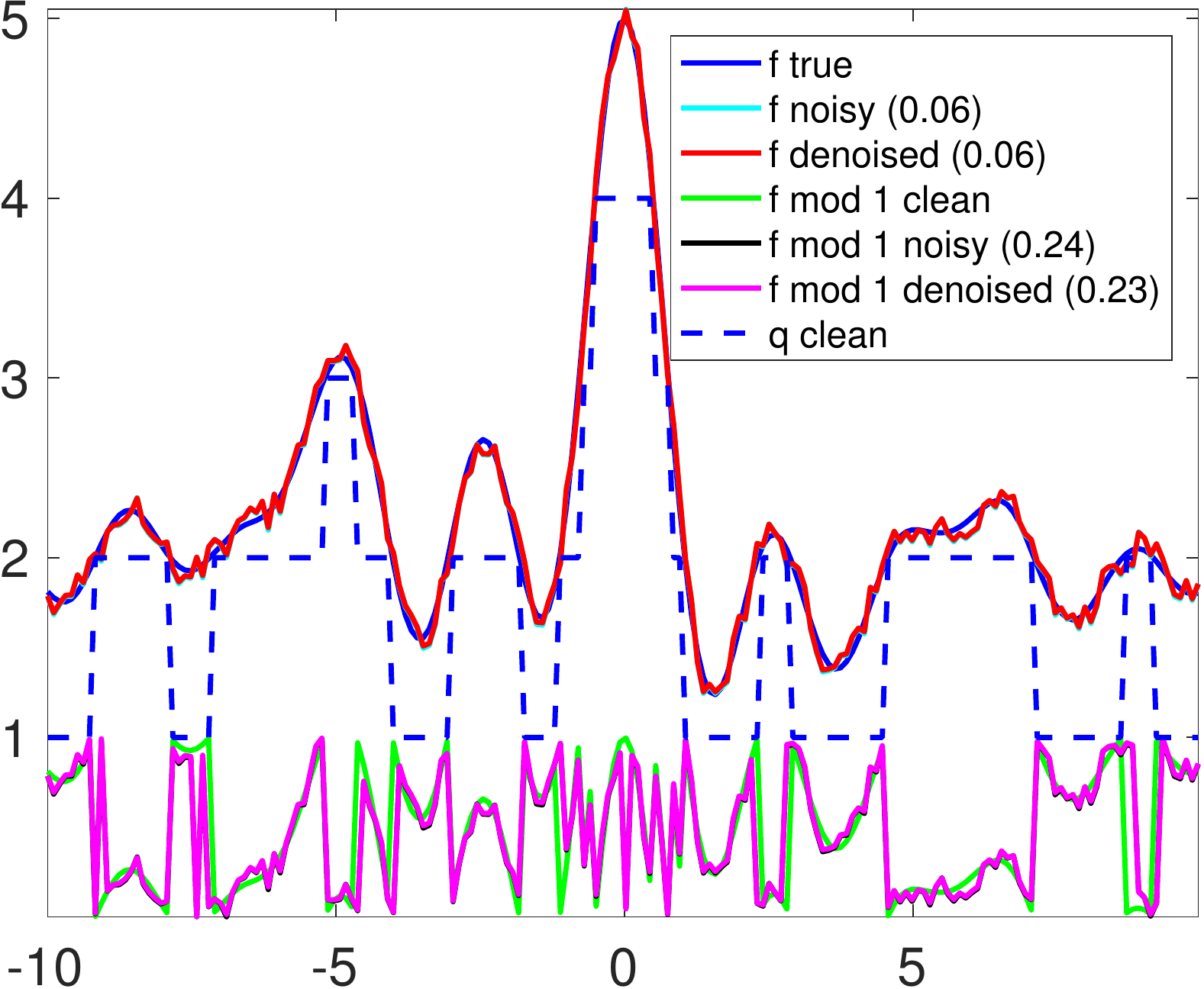} }
%
%
%
\subcaptionbox[]{  $\gamma=0.15$, \textbf{BKR}
}[ 0.24\textwidth ]
{\includegraphics[width=0.24\textwidth] {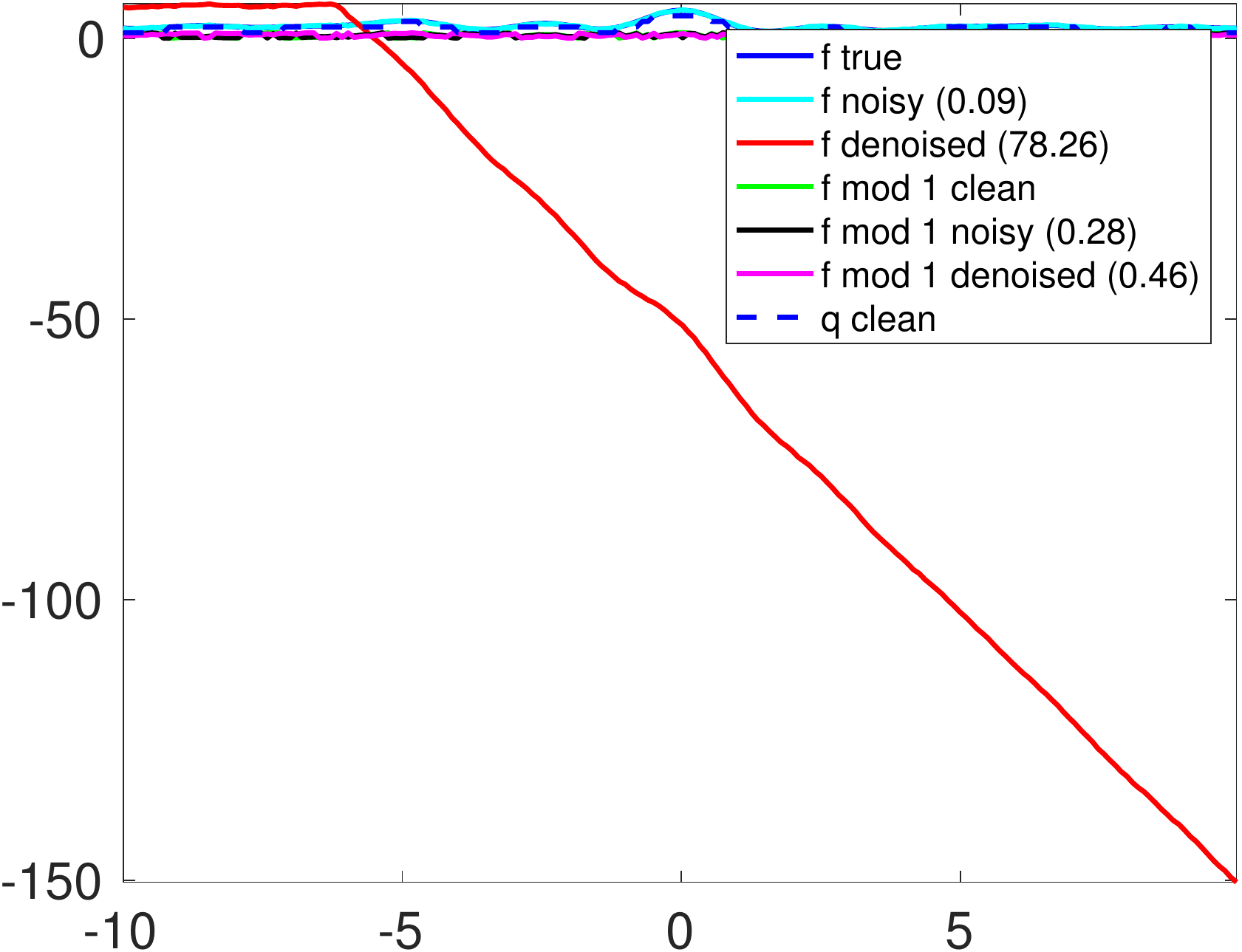} }
%
\subcaptionbox[]{  $\gamma=0.15$, \textbf{OLS}
}[ 0.24\textwidth ]
{\includegraphics[width=0.24\textwidth] {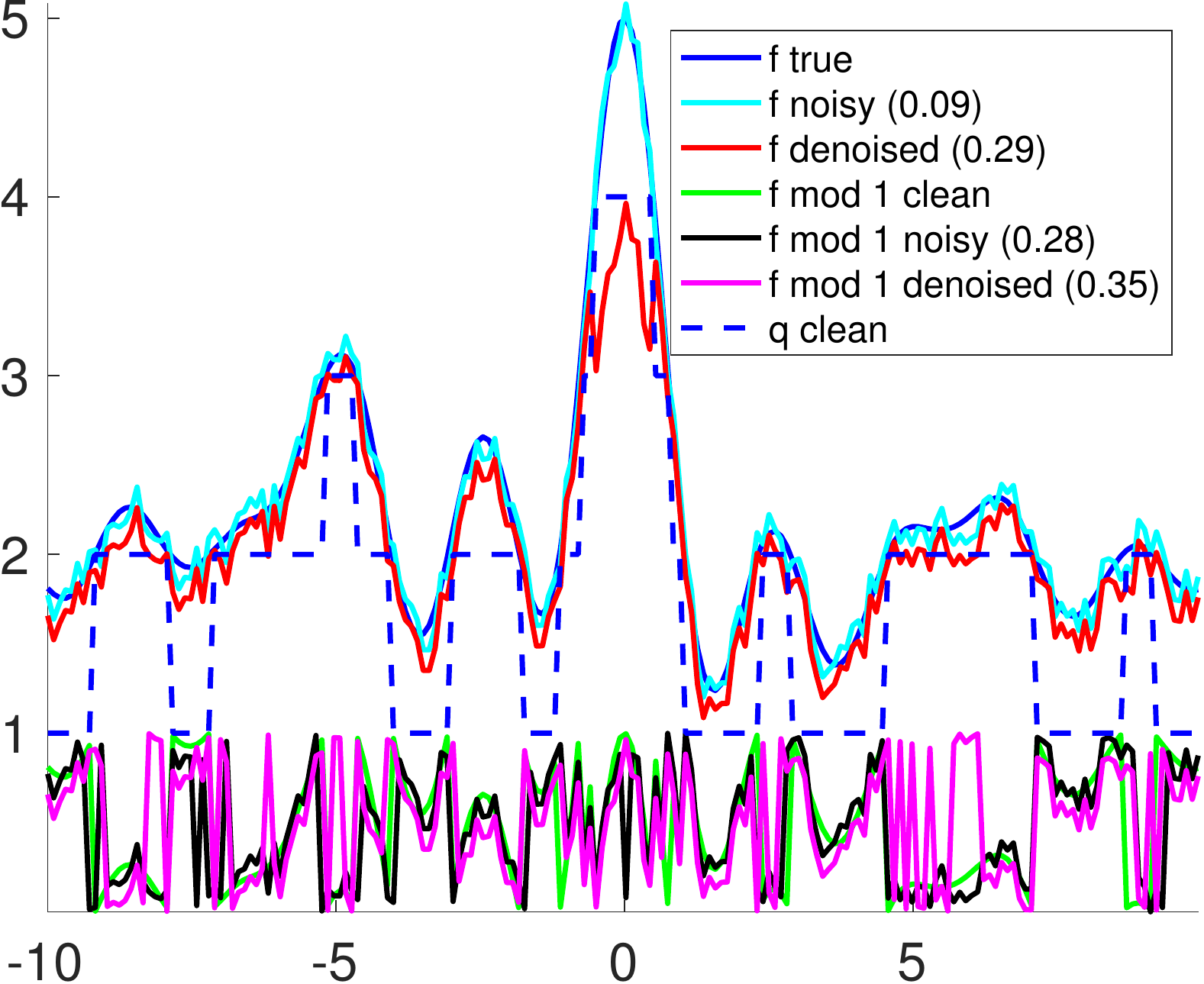} }
%
\subcaptionbox[]{  $\gamma=0.15$, \textbf{QCQP}
}[ 0.24\textwidth ]
{\includegraphics[width=0.24\textwidth] {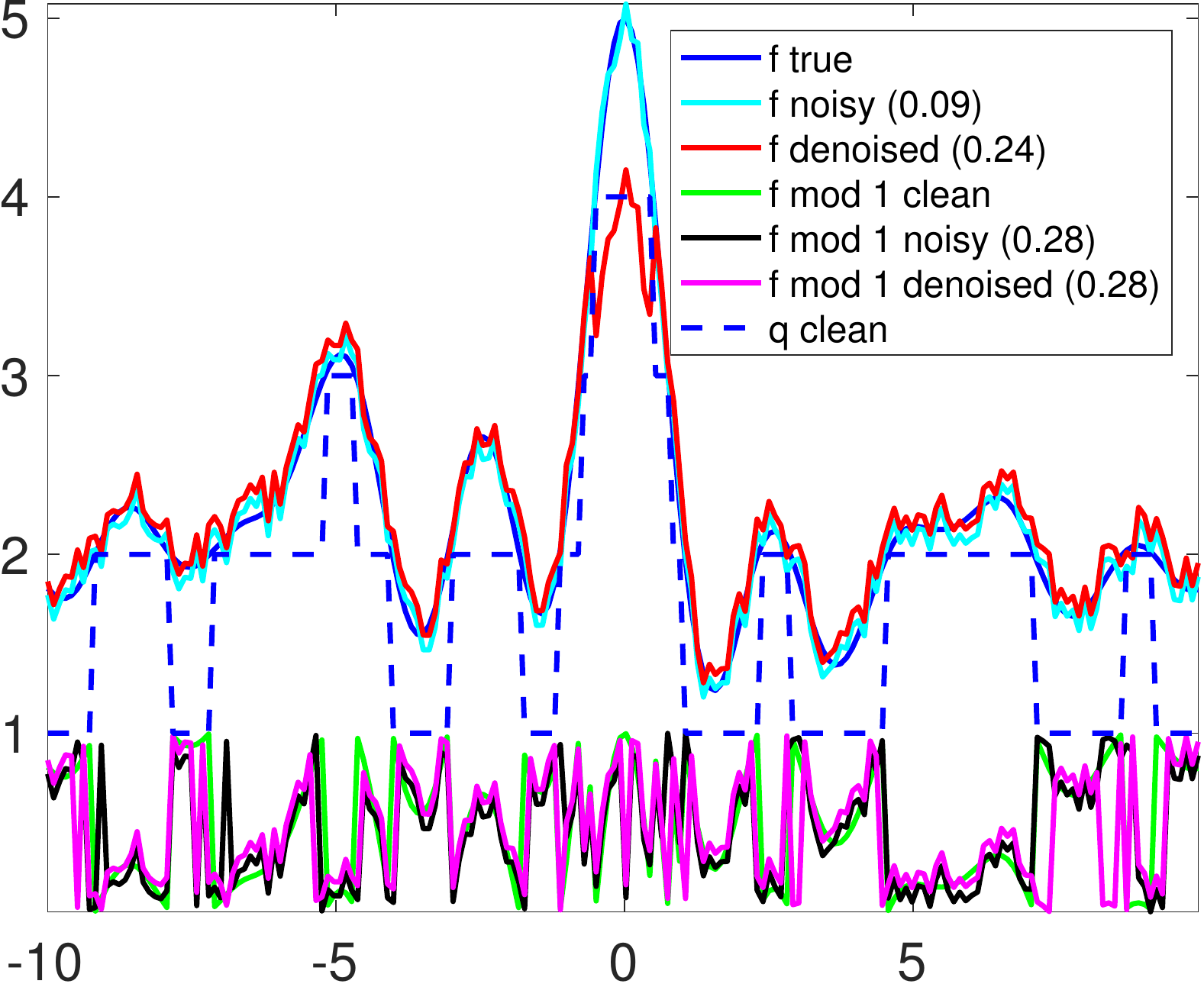} }
%
\subcaptionbox[]{  $\gamma=0.15$, \textbf{iQCQP}
}[ 0.24\textwidth ]
{\includegraphics[width=0.24\textwidth] {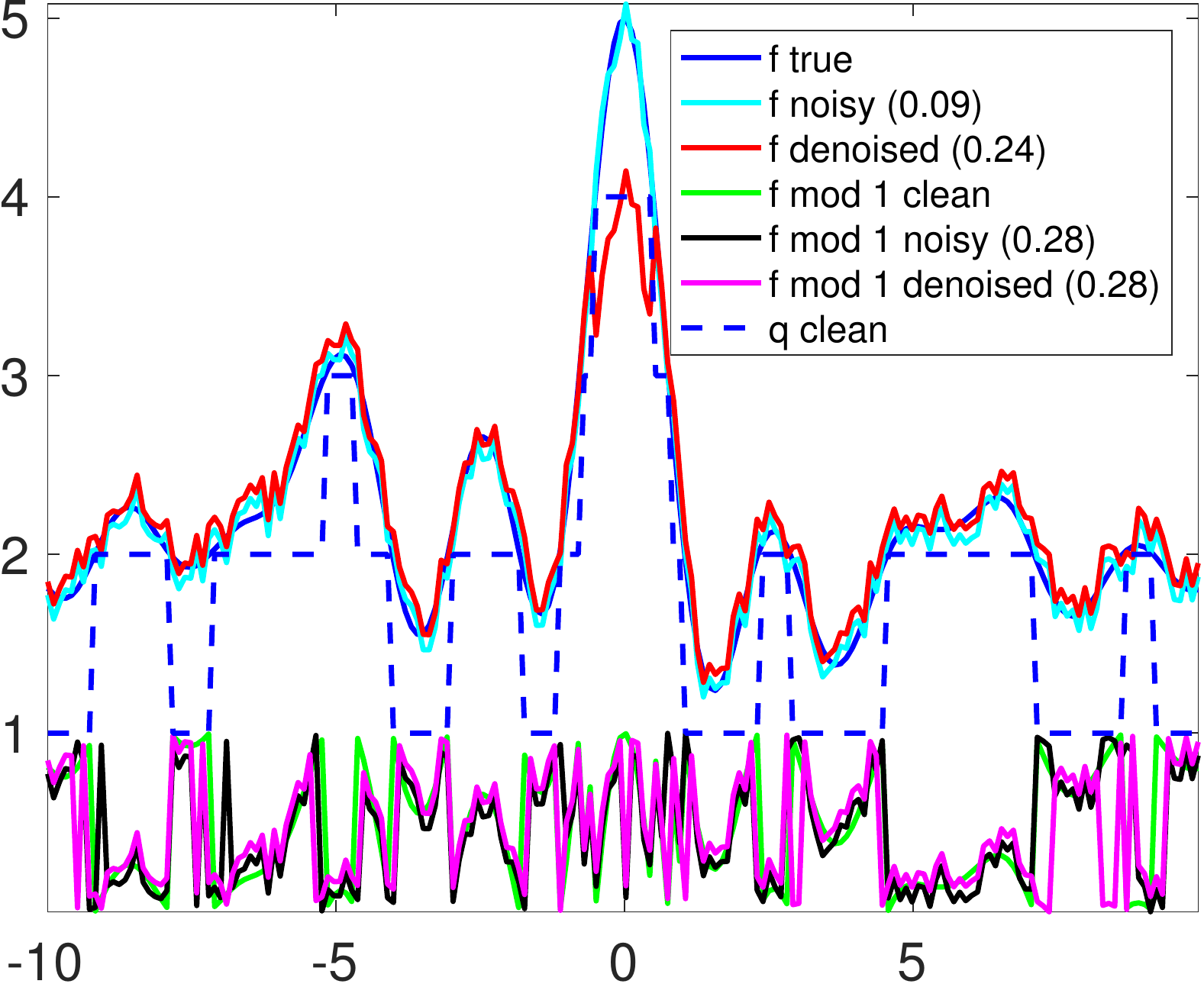} }
%
%
%
%
%
%
\subcaptionbox[]{  $\gamma=0.20$, \textbf{BKR}
}[ 0.24\textwidth ]
{\includegraphics[width=0.24\textwidth] {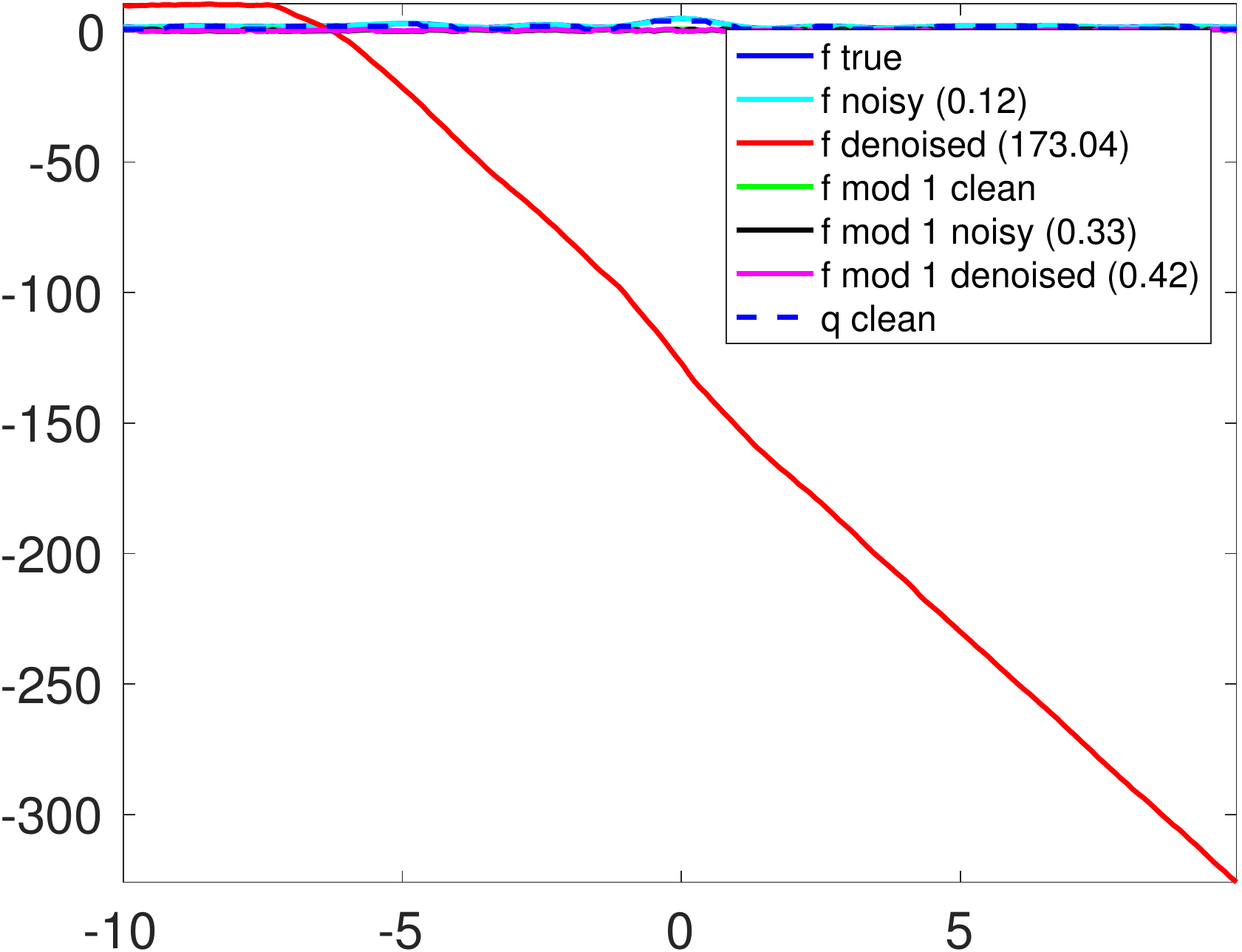} }
%
\subcaptionbox[]{  $\gamma=0.20$, \textbf{OLS}
}[ 0.24\textwidth ]
{\includegraphics[width=0.24\textwidth] {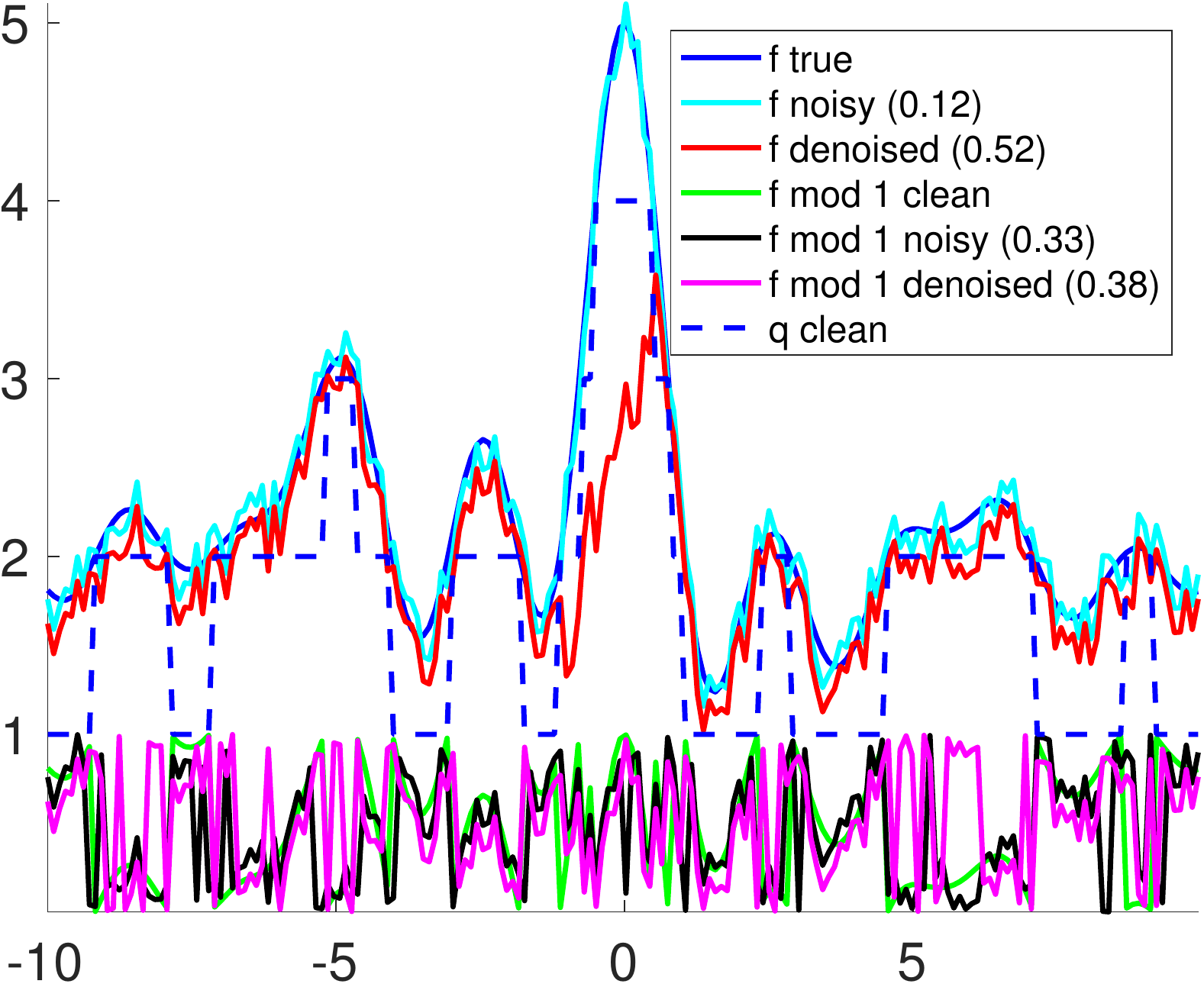} }
%
\subcaptionbox[]{  $\gamma=0.20$, \textbf{QCQP}
}[ 0.24\textwidth ]
{\includegraphics[width=0.24\textwidth] {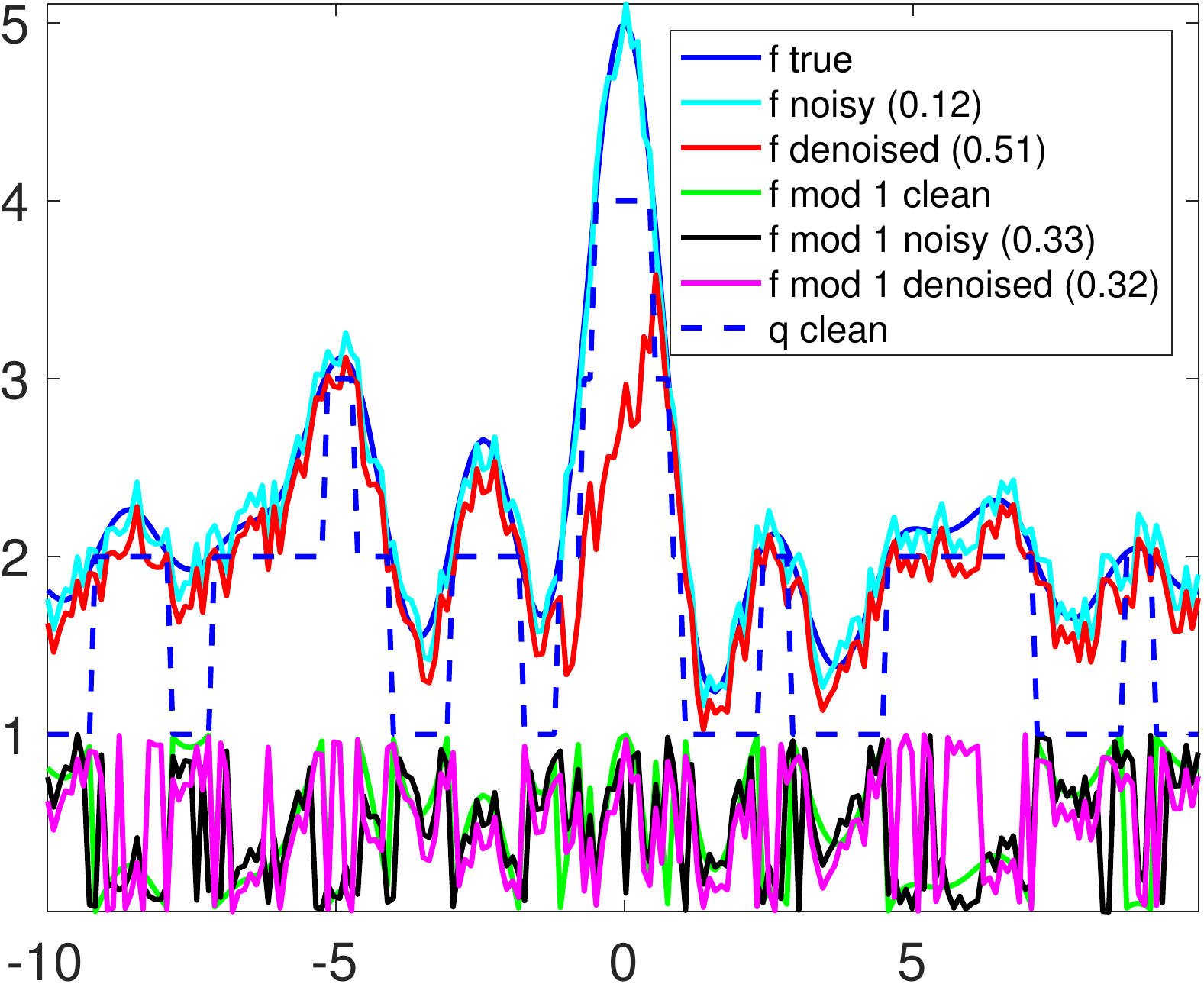} }
%
\subcaptionbox[]{  $\gamma=0.20$, \textbf{iQCQP}
}[ 0.24\textwidth ]
{\includegraphics[width=0.24\textwidth] {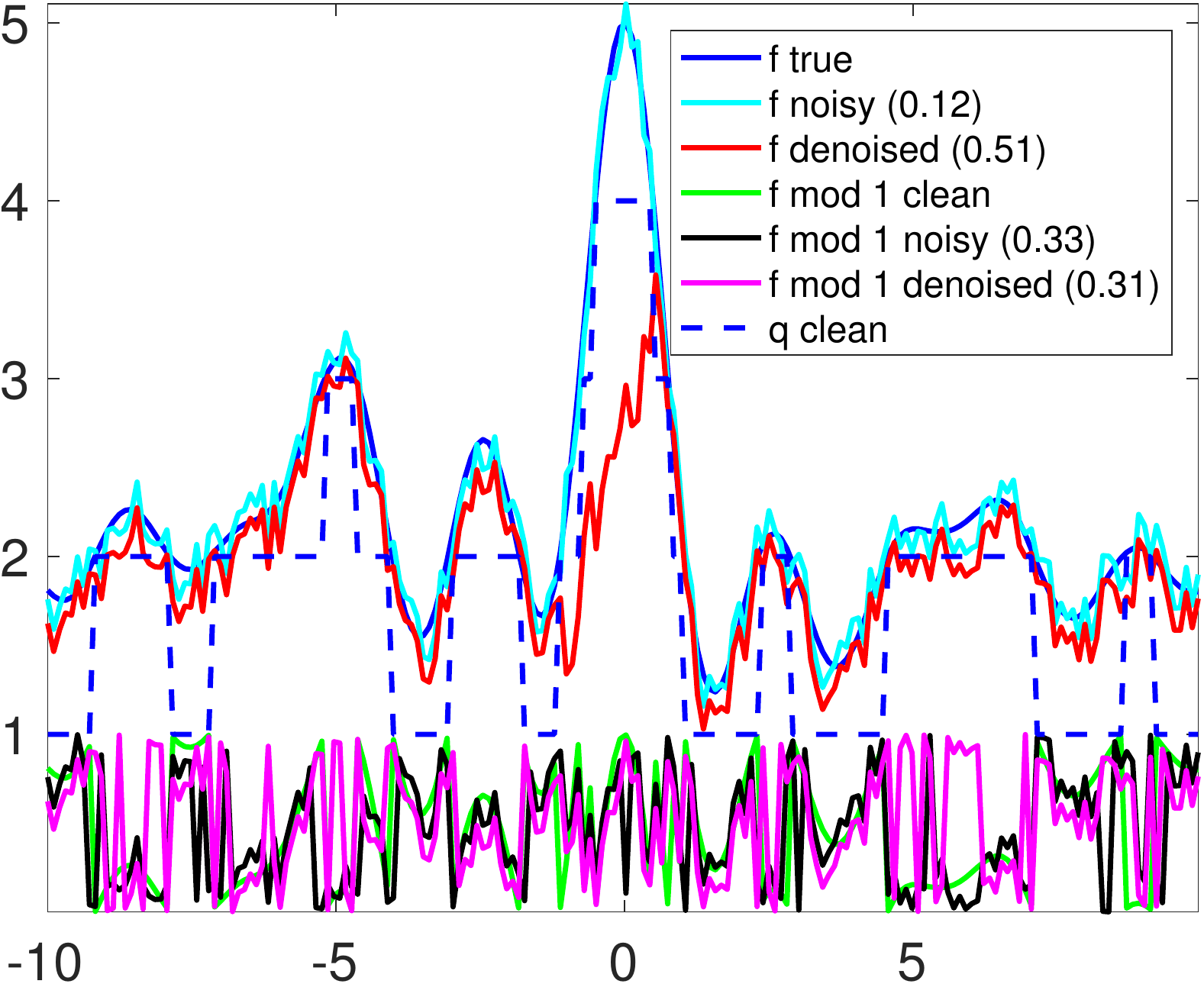} }
%
%
\vspace{-2mm}
\captionsetup{width=0.98\linewidth}
\caption[Short Caption]{Denoised instances for the bandlimited function considered  in  \cite{bhandari17}, under the Bounded-Uniform noise model, for  \textbf{BKR},  \textbf{OLS}, \textbf{QCQP} and \textbf{iQCQP},  as we increase the noise level $\gamma$.   \textbf{QCQP} denotes Algorithm \ref{algo:two_stage_denoise}, for which  the unwrapping stage is  performed via \textbf{OLS} \eqref{eq:ols_unwrap_lin_system}.     The function is sampled less frequently compared to the earlier Figure  \ref{fig:instances_f_BL_Bounded}, which in this experiment leads to  only $n=194$ sample points. We keep fixed  $k=2$ and $\lambda= 0.01$. The numerical values in the legend denote the RMSE. 
}
\label{fig:instances_f_BL_200_Bounded}
\end{figure}

%

\subsection{Additional elevation maps experiments for two-dimensional phase unwrapping} \label{sec:apx_MountainExtras} 


%
Figure \ref{fig:Vesuvius_LOW_Noisy} pertains to the elevation map of  Vesuvius, using  $n = 3600$ samples, under the Gaussian noise model ($\sigma=0.05$). 
Figure  \ref{fig:ETNA_HIGH_Noisy}, pertains to noisy measurements of Mount Etna under the Gaussian noise model ($\sigma=0.10$), 
using $n = 19000$ samples. 

\ifthenelse{\boolean{ShowNoiselessMultivar}}{  
\begin{figure*}
\centering
  
\subcaptionbox[]{  Noiseless function
}[ 0.32\textwidth ]
{\includegraphics[width=0.32\textwidth] {figs/PLOTS_MULTIVARIATE/Vesuvius_LOW/Gaussian_sigma_0_n_3599_kDist_1_lambda_0p03_scale_0_fClean_3d.png} }
%
\subcaptionbox[]{  Noiseless function (depth)
}[ 0.32\textwidth ]
{\includegraphics[width=0.32\textwidth] {figs/PLOTS_MULTIVARIATE/Vesuvius_LOW/Gaussian_sigma_0_n_3599_kDist_1_lambda_0p03_scale_0_fClean_TOP.png} }
%
\subcaptionbox[]{   Clean $f$ mod 1
}[ 0.32\textwidth ]
{\includegraphics[width=0.32\textwidth] {figs/PLOTS_MULTIVARIATE/Vesuvius_LOW/Gaussian_sigma_0_n_3599_kDist_1_lambda_0p03_scale_0_f_mod1_clean_TOP.png} }

\subcaptionbox[]{    Noisy $f$ mod 1
}[ 0.32\textwidth ]
{\includegraphics[width=0.32\textwidth] {figs/PLOTS_MULTIVARIATE/Vesuvius_LOW/Gaussian_sigma_0_n_3599_kDist_1_lambda_0p03_scale_0_f_mod1_noise_TOP.png} }
%
\subcaptionbox[]{      Denoised $f$ mod 1
}[ 0.32\textwidth ]
{\includegraphics[width=0.32\textwidth] {figs/PLOTS_MULTIVARIATE/Vesuvius_LOW/Gaussian_sigma_0_n_3599_kDist_1_lambda_0p03_scale_0_f_mod1_denoised_TOP.png} }
%
\subcaptionbox[]{   Denoised $f$
}[ 0.32\textwidth ]
{\includegraphics[width=0.32\textwidth] {figs/PLOTS_MULTIVARIATE/Vesuvius_LOW/Gaussian_sigma_0_n_3599_kDist_1_lambda_0p03_scale_0_f_denoised_3d.png} }
%
%
\subcaptionbox[]{   Denoised $f$ (depth)
}[ 0.32\textwidth ]
{\includegraphics[width=0.32\textwidth] {figs/PLOTS_MULTIVARIATE/Vesuvius_LOW/Gaussian_sigma_0_n_3599_kDist_1_lambda_0p03_scale_0_f_denoised_TOP.png} }
%
%
\subcaptionbox[]{  Error  $|f - \hat{f}|$ 
}[ 0.32\textwidth ]
{\includegraphics[width=0.32\textwidth] {figs/PLOTS_MULTIVARIATE/Vesuvius_LOW/Gaussian_sigma_0_n_3599_kDist_1_lambda_0p03_scale_0_f_delta_TOP.png} }
\captionsetup{width=0.95\linewidth}
\caption[Short Caption]{Vesuvius $n = 3600$, $k=1$ (Chebychev distance), $\lambda = 0.03$,  and noise level $\sigma=0$.}
\label{fig:Vesuvius_LOW_Clean}
\end{figure*}
  }{}

\begin{figure*}
\centering
   
\subcaptionbox[]{  Noiseless function
}[ 0.24\textwidth ]
{\includegraphics[width=0.24\textwidth] {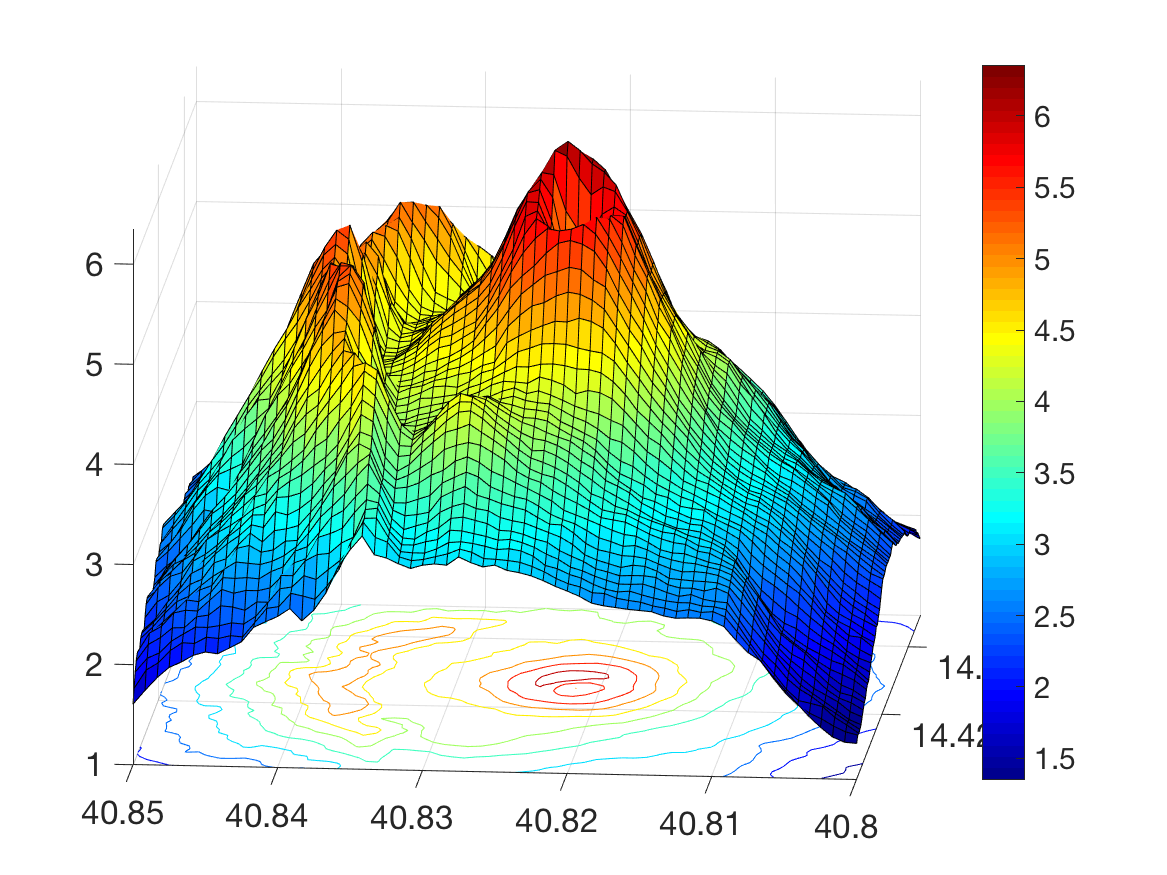} }
%
\subcaptionbox[]{  Noiseless function (depth)
}[ 0.24\textwidth ]
{\includegraphics[width=0.24\textwidth] {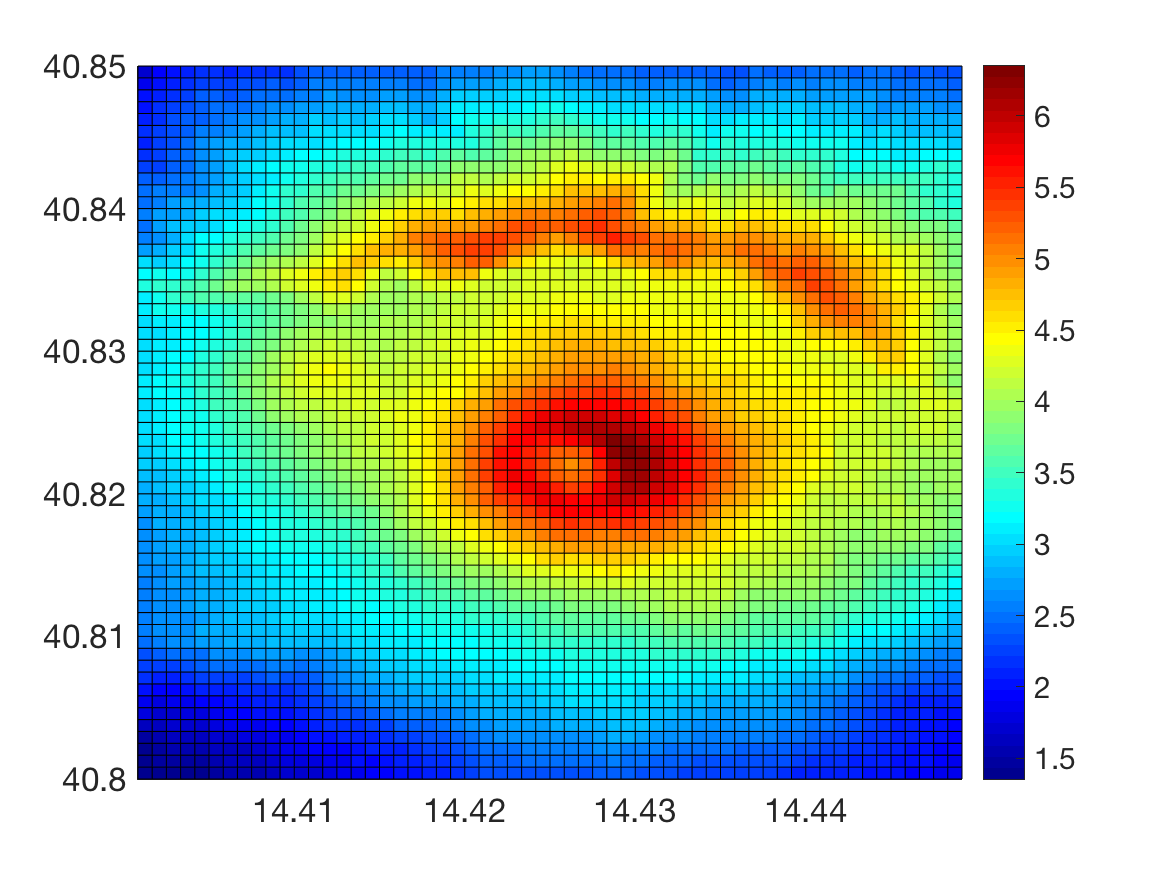} }
%
\subcaptionbox[]{   Clean $f$ mod 1
}[ 0.24\textwidth ]
{\includegraphics[width=0.24\textwidth] {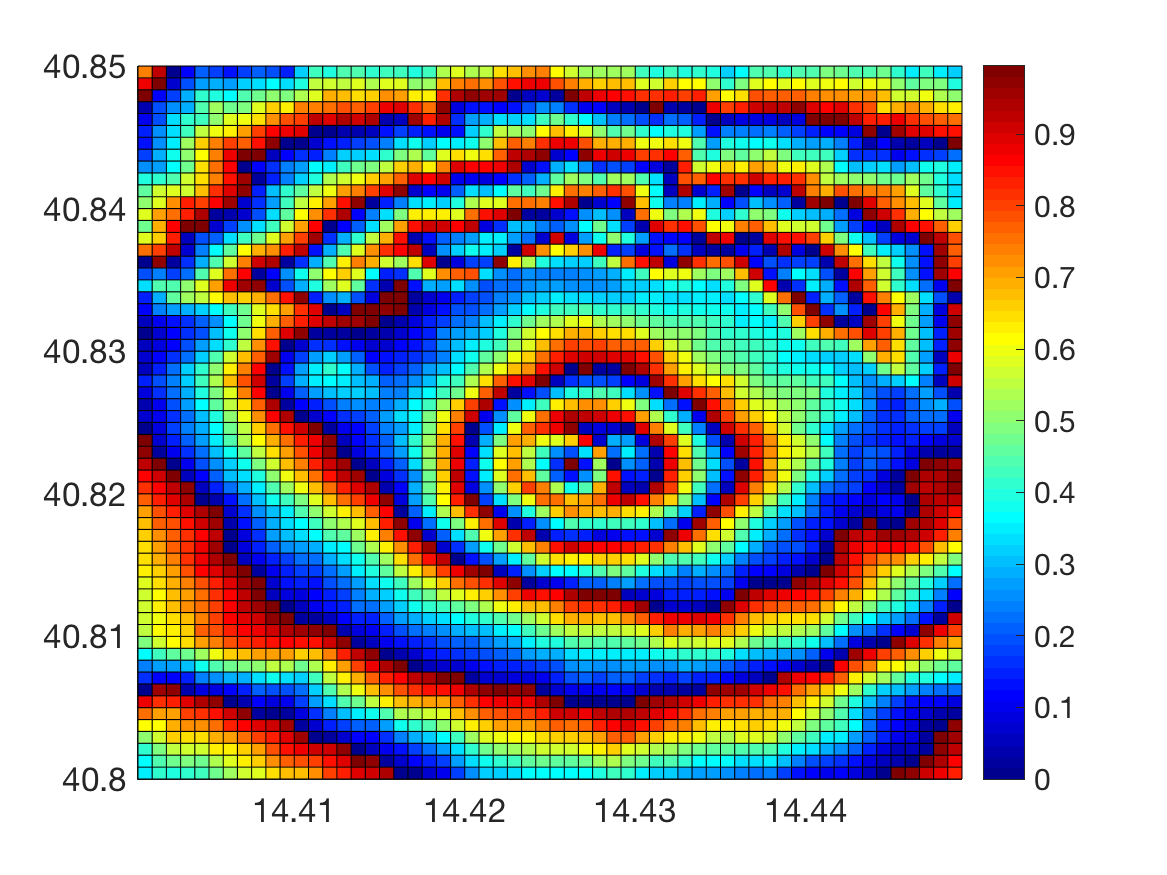} }
\subcaptionbox[]{    Noisy $f$ mod 1
}[ 0.24\textwidth ]
{\includegraphics[width=0.24\textwidth] {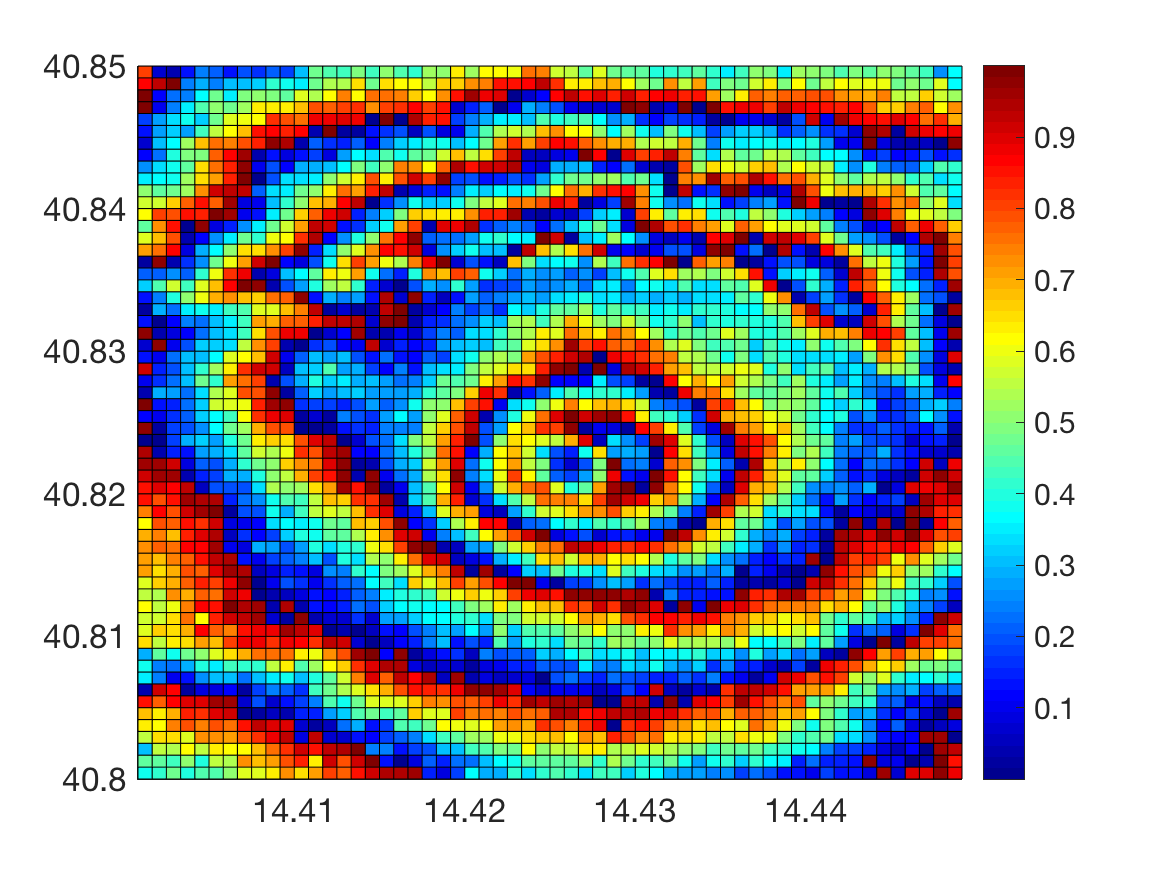} }

\subcaptionbox[]{      Denoised $f$ mod 1 (RMSE = $ 0.173 $) 
}[ 0.24\textwidth ]
{\includegraphics[width=0.24\textwidth] {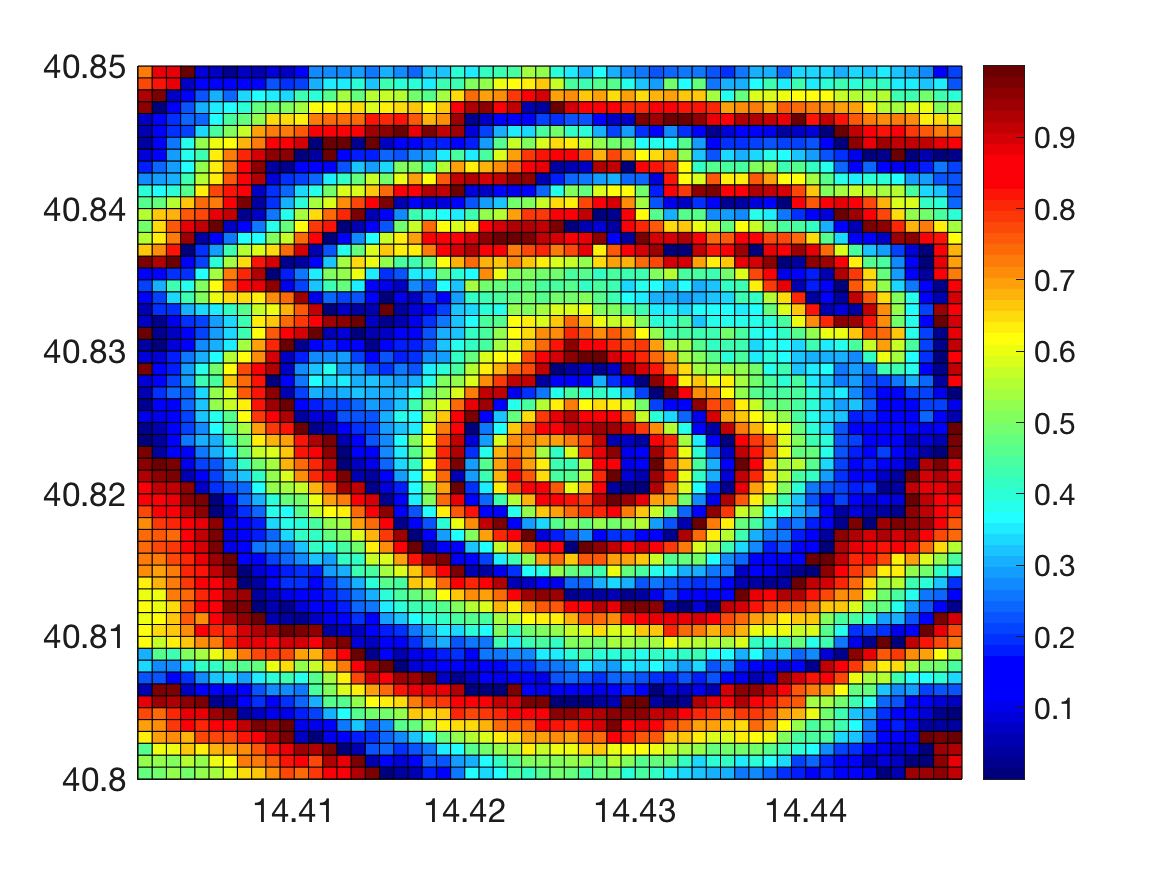} }
%
\subcaptionbox[]{   Denoised $f$   (RMSE = $ 0.114 $)
}[ 0.24\textwidth ]
{\includegraphics[width=0.24\textwidth] {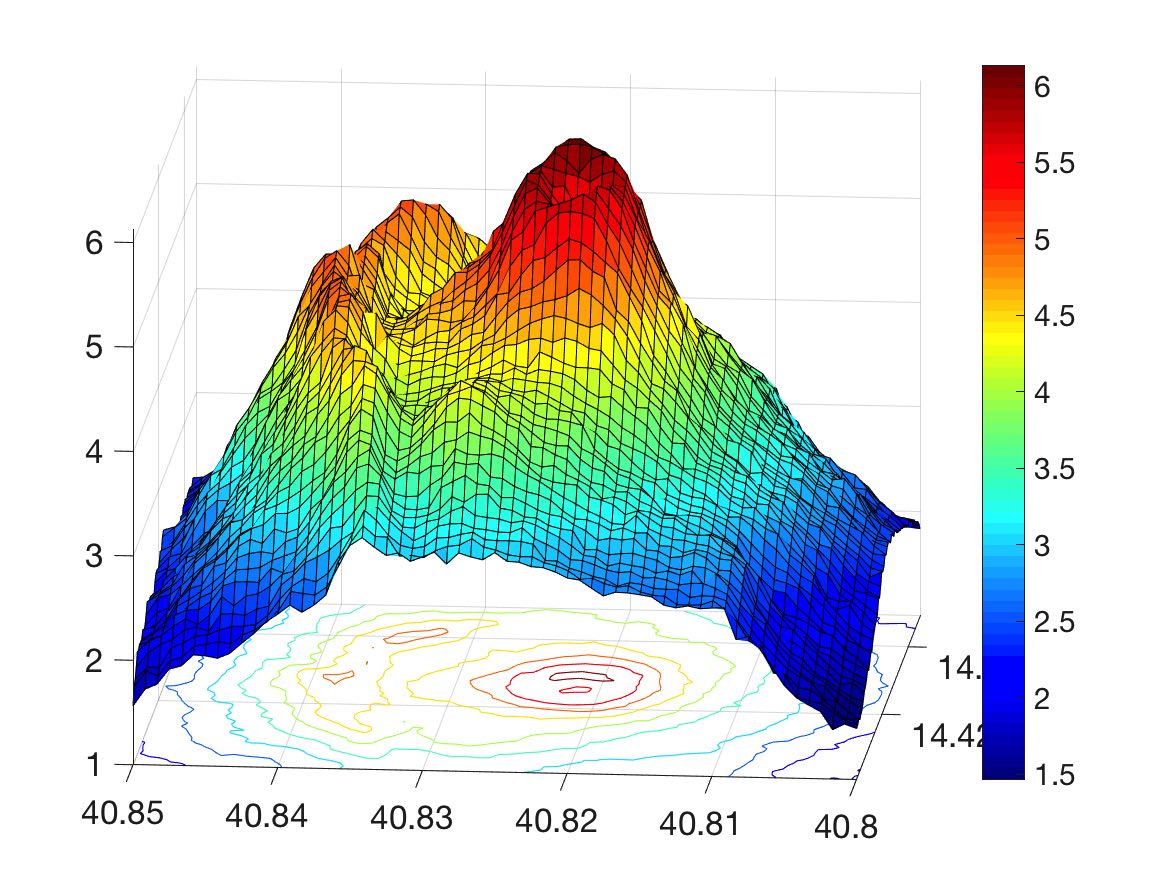} }
%
%
\subcaptionbox[]{   Denoised $f$ (depth)
}[ 0.24\textwidth ]
{\includegraphics[width=0.24\textwidth] {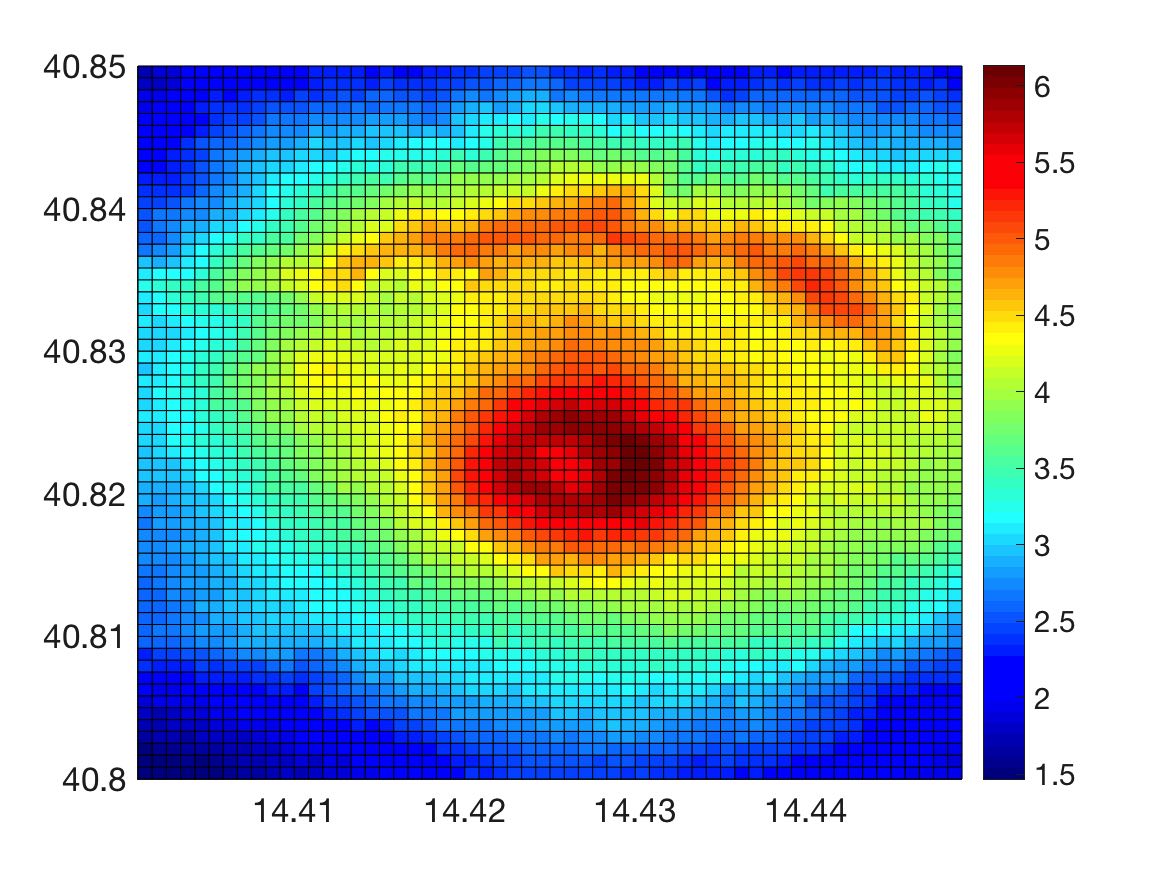} }
%
%
\subcaptionbox[]{  Error  $|f - \hat{f}|$ 
}[ 0.24\textwidth ]
{\includegraphics[width=0.24\textwidth] {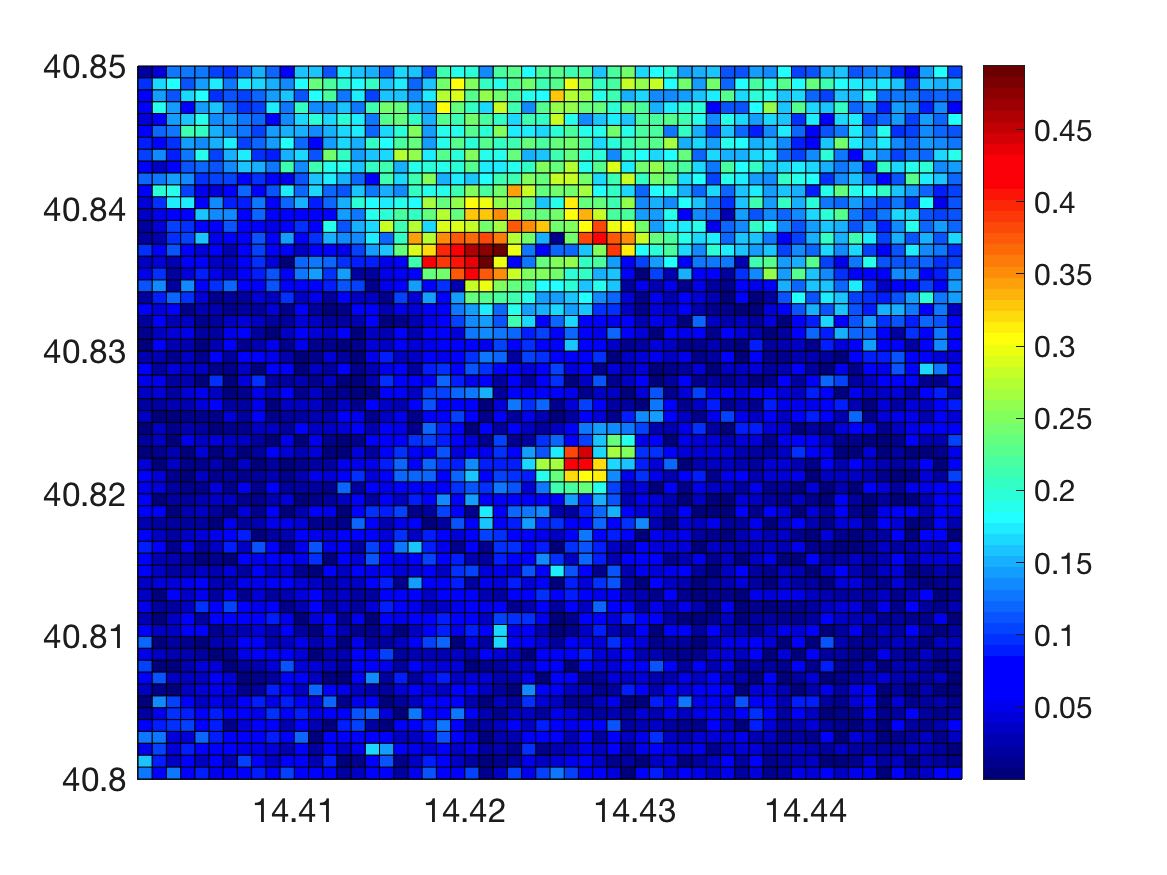} }
\captionsetup{width=0.95\linewidth}
\caption[Short Caption]{Elevation map of Mount Vesuvius with $n = 3600$, $k=1$ (Chebychev distance), $\lambda = 0.1$,  and noise level $\sigma=0.05$ under the Gaussian model, as recovered by Manopt-Phases.} 
\label{fig:Vesuvius_LOW_Noisy}
\end{figure*}

\ifthenelse{\boolean{ShowNoiselessMultivar}}{  
  
 \begin{figure*}
\centering
  \subcaptionbox[]{  Noiseless function
}[ 0.24\textwidth ]
{\includegraphics[width=0.24\textwidth] {figs/PLOTS_MULTIVARIATE/Vesuvius_HIGH/Gaussian_sigma_0_n_32399_kDist_1_lambda_0p03_scale_0_fClean_3d.png} }
%
\subcaptionbox[]{  Noiseless function (depth)
}[ 0.24\textwidth ]
{\includegraphics[width=0.24\textwidth] {figs/PLOTS_MULTIVARIATE/Vesuvius_HIGH/Gaussian_sigma_0_n_32399_kDist_1_lambda_0p03_scale_0_fClean_TOP.png} }
%
\subcaptionbox[]{   Clean $f$ mod 1
}[ 0.24\textwidth ]
{\includegraphics[width=0.24\textwidth] {figs/PLOTS_MULTIVARIATE/Vesuvius_HIGH/Gaussian_sigma_0_n_32399_kDist_1_lambda_0p03_scale_0_f_mod1_clean_TOP.png} }

\subcaptionbox[]{    Noisy $f$ mod 1
}[ 0.24\textwidth ]
{\includegraphics[width=0.24\textwidth] {figs/PLOTS_MULTIVARIATE/Vesuvius_HIGH/Gaussian_sigma_0_n_32399_kDist_1_lambda_0p03_scale_0_f_mod1_noise_TOP.png} }
%
\subcaptionbox[]{      Denoised $f$ mod 1 
}[ 0.24\textwidth ]
{\includegraphics[width=0.24\textwidth] {figs/PLOTS_MULTIVARIATE/Vesuvius_HIGH/Gaussian_sigma_0_n_32399_kDist_1_lambda_0p03_scale_0_f_mod1_denoised_TOP.png} }
%
\subcaptionbox[]{   Denoised $f$ 
}[ 0.24\textwidth ]
{\includegraphics[width=0.24\textwidth] {figs/PLOTS_MULTIVARIATE/Vesuvius_HIGH/Gaussian_sigma_0_n_32399_kDist_1_lambda_0p03_scale_0_f_denoised_3d.png} }
%
%
\subcaptionbox[]{   Denoised $f$ (depth)
}[ 0.24\textwidth ]
{\includegraphics[width=0.24\textwidth] {figs/PLOTS_MULTIVARIATE/Vesuvius_HIGH/Gaussian_sigma_0_n_32399_kDist_1_lambda_0p03_scale_0_f_denoised_TOP.png} }
%
%
\subcaptionbox[]{  Error  $|f - \hat{f}|$ 
}[ 0.24\textwidth ]
{\includegraphics[width=0.24\textwidth] {figs/PLOTS_MULTIVARIATE/Vesuvius_HIGH/Gaussian_sigma_0_n_32399_kDist_1_lambda_0p03_scale_0_f_delta_TOP.png} }
\captionsetup{width=0.95\linewidth}
\caption[Short Caption]{Vesuvius $n = 32400$, $k=1$ (Chebychev distance), $\lambda = 0.03$,  and noise level $\sigma=0$.}
\label{fig:Vesuvius_HIGH_Clean}
\end{figure*}
}{}

\ifthenelse{\boolean{ShowNoiselessMultivar}}{  

\begin{figure*}
\centering
 
\subcaptionbox[]{  Noiseless function
}[ 0.24\textwidth ]
{\includegraphics[width=0.24\textwidth] {figs/PLOTS_MULTIVARIATE/ETNA_LOW/Gaussian_sigma_0_n_4116_kDist_1_lambda_0p03_scale_0_fClean_3d.png} }
%
\subcaptionbox[]{  Noiseless function (depth)
}[ 0.24\textwidth ]
{\includegraphics[width=0.24\textwidth] {figs/PLOTS_MULTIVARIATE/ETNA_LOW/Gaussian_sigma_0_n_4116_kDist_1_lambda_0p03_scale_0_fClean_TOP.png} }
%
\subcaptionbox[]{   Clean $f$ mod 1
}[ 0.24\textwidth ]
{\includegraphics[width=0.24\textwidth] {figs/PLOTS_MULTIVARIATE/ETNA_LOW/Gaussian_sigma_0_n_4116_kDist_1_lambda_0p03_scale_0_f_mod1_clean_TOP.png} }

\subcaptionbox[]{    Noisy $f$ mod 1
}[ 0.24\textwidth ]
{\includegraphics[width=0.24\textwidth] {figs/PLOTS_MULTIVARIATE/ETNA_LOW/Gaussian_sigma_0_n_4116_kDist_1_lambda_0p03_scale_0_f_mod1_noise_TOP.png} }
%
\subcaptionbox[]{      Denoised $f$ mod 1
}[ 0.32\textwidth ]
{\includegraphics[width=0.24\textwidth] {figs/PLOTS_MULTIVARIATE/ETNA_LOW/Gaussian_sigma_0_n_4116_kDist_1_lambda_0p03_scale_0_f_mod1_denoised_TOP.png} }
%
\subcaptionbox[]{   Denoised $f$
}[ 0.24\textwidth ]
{\includegraphics[width=0.24\textwidth] {figs/PLOTS_MULTIVARIATE/ETNA_LOW/Gaussian_sigma_0_n_4116_kDist_1_lambda_0p03_scale_0_f_denoised_3d.png} }
%
%
\subcaptionbox[]{   Denoised $f$ (depth)
}[ 0.24\textwidth ]
{\includegraphics[width=0.24\textwidth] {figs/PLOTS_MULTIVARIATE/ETNA_LOW/Gaussian_sigma_0_n_4116_kDist_1_lambda_0p03_scale_0_f_denoised_TOP.png} }
%
%
\subcaptionbox[]{  Error  $|f - \hat{f}|$ 
}[ 0.24\textwidth ]
{\includegraphics[width=0.24\textwidth] {figs/PLOTS_MULTIVARIATE/ETNA_LOW/Gaussian_sigma_0_n_4116_kDist_1_lambda_0p03_scale_0_f_delta_TOP.png} }
\captionsetup{width=0.95\linewidth}
\caption[Short Caption]{Etna $n = 4100$, $k=1$ (Chebychev distance), $\lambda = 0.03$,  and noise level $\sigma=0$.}
\label{fig:Etna_Low_Clean}
\end{figure*}
}{}

\ifthenelse{\boolean{ShowNoiselessMultivar}}{  

\begin{figure*}
\centering
 
\subcaptionbox[]{  Noiseless function
}[ 0.24\textwidth ]
{\includegraphics[width=0.24\textwidth] {figs/PLOTS_MULTIVARIATE/ETNA_HIGH/Gaussian_sigma_0_n_18876_kDist_1_lambda_0p001_scale_0_fClean_3d.png} }
%
\subcaptionbox[]{  Noiseless function (depth)
}[ 0.24\textwidth ]
{\includegraphics[width=0.24\textwidth] {figs/PLOTS_MULTIVARIATE/ETNA_HIGH/Gaussian_sigma_0_n_18876_kDist_1_lambda_0p001_scale_0_fClean_TOP.png} }
%
\subcaptionbox[]{   Clean $f$ mod 1
}[ 0.24\textwidth ]
{\includegraphics[width=0.24\textwidth] {figs/PLOTS_MULTIVARIATE/ETNA_HIGH/Gaussian_sigma_0_n_18876_kDist_1_lambda_0p001_scale_0_f_mod1_clean_TOP.png} }

\subcaptionbox[]{    Noisy $f$ mod 1
}[ 0.24\textwidth ]
{\includegraphics[width=0.24\textwidth] {figs/PLOTS_MULTIVARIATE/ETNA_HIGH/Gaussian_sigma_0_n_18876_kDist_1_lambda_0p001_scale_0_f_mod1_noise_TOP.png} }
%
\subcaptionbox[]{      Denoised $f$ mod 1
}[ 0.24\textwidth ]
{\includegraphics[width=0.24\textwidth] {figs/PLOTS_MULTIVARIATE/ETNA_HIGH/Gaussian_sigma_0_n_18876_kDist_1_lambda_0p001_scale_0_f_mod1_denoised_TOP.png} }
%
\subcaptionbox[]{   Denoised $f$
}[ 0.24\textwidth ]
{\includegraphics[width=0.24\textwidth] {figs/PLOTS_MULTIVARIATE/ETNA_HIGH/Gaussian_sigma_0_n_18876_kDist_1_lambda_0p001_scale_0_f_denoised_3d.png} }
%
%
\subcaptionbox[]{   Denoised $f$ (depth)
}[ 0.24\textwidth ]
{\includegraphics[width=0.24\textwidth] {figs/PLOTS_MULTIVARIATE/ETNA_HIGH/Gaussian_sigma_0_n_18876_kDist_1_lambda_0p001_scale_0_f_denoised_TOP.png} }
%
%
\subcaptionbox[]{  Error  $|f - \hat{f}|$ 
}[ 0.24\textwidth ]
{\includegraphics[width=0.24\textwidth] {figs/PLOTS_MULTIVARIATE/ETNA_HIGH/Gaussian_sigma_0_n_18876_kDist_1_lambda_0p001_scale_0_f_delta_TOP.png} }
\captionsetup{width=0.95\linewidth}
\caption[Short Caption]{Etna $n = 19000$, $k=1$ (Chebychev distance), $\lambda = 0.001$,  and noise level $\sigma=0$, as recovered by Manopt-Phases.}
\label{fig:ETNA_HIGH_Clean}
\end{figure*}
}{}

\begin{figure*}
\centering
 
\subcaptionbox[]{  Noiseless function
}[ 0.24\textwidth ]
{\includegraphics[width=0.24\textwidth] {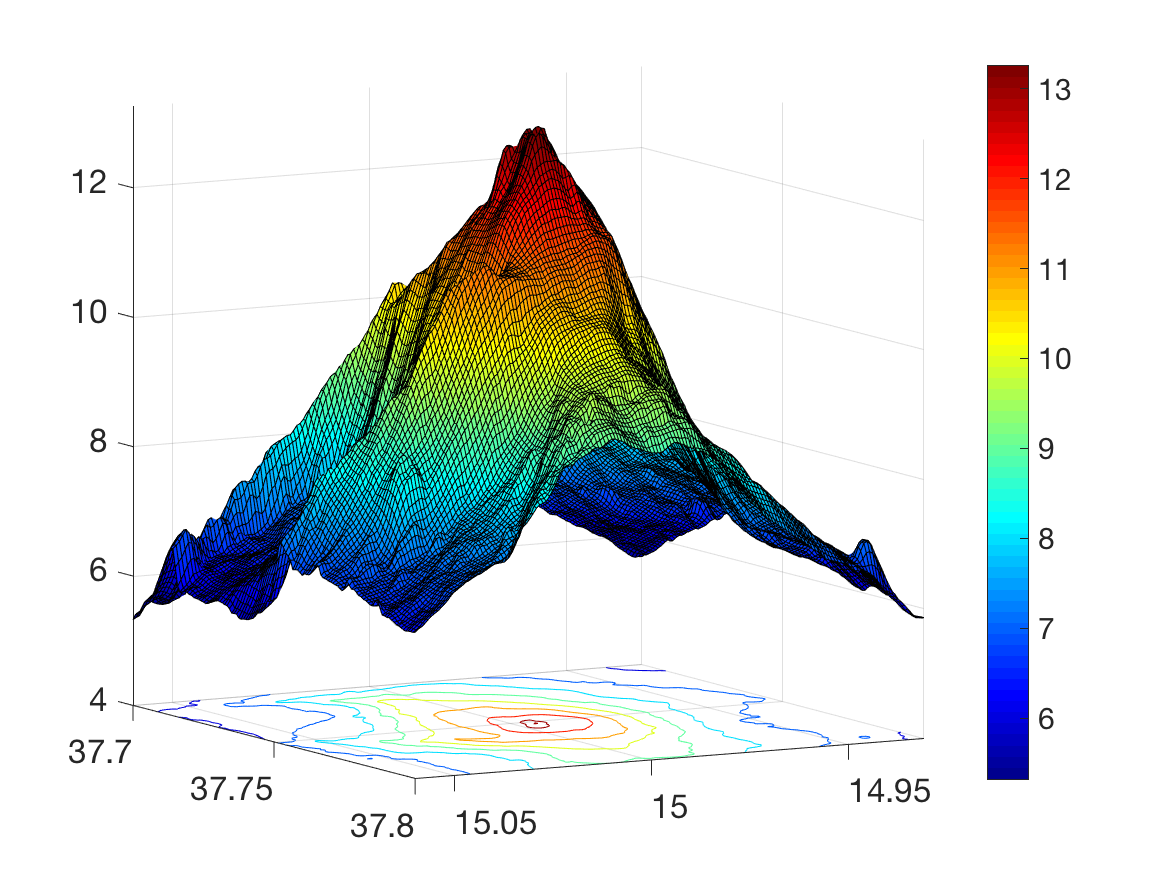} }
%
\subcaptionbox[]{  Noiseless function (depth)
}[ 0.24\textwidth ]
{\includegraphics[width=0.24\textwidth] {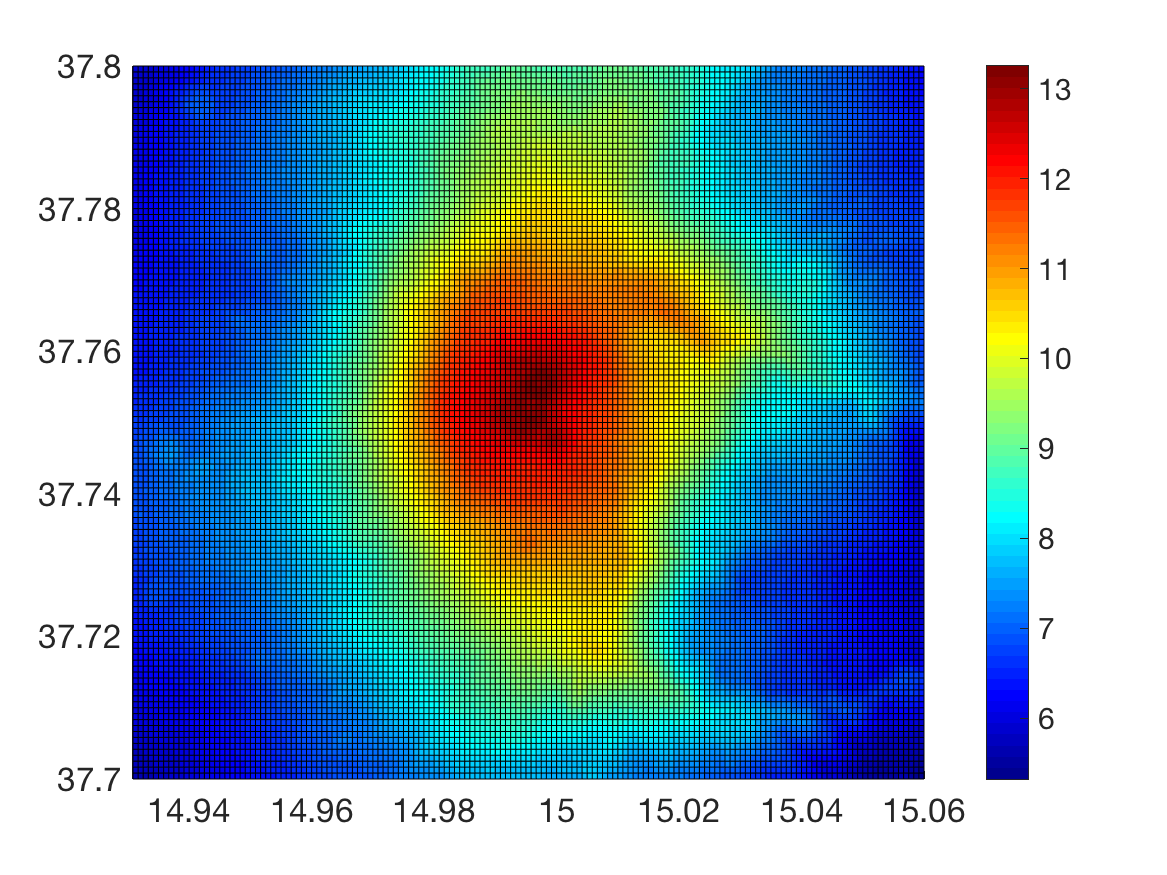} }
%
\subcaptionbox[]{   Clean $f$ mod 1
}[ 0.24\textwidth ]
{\includegraphics[width=0.24\textwidth] {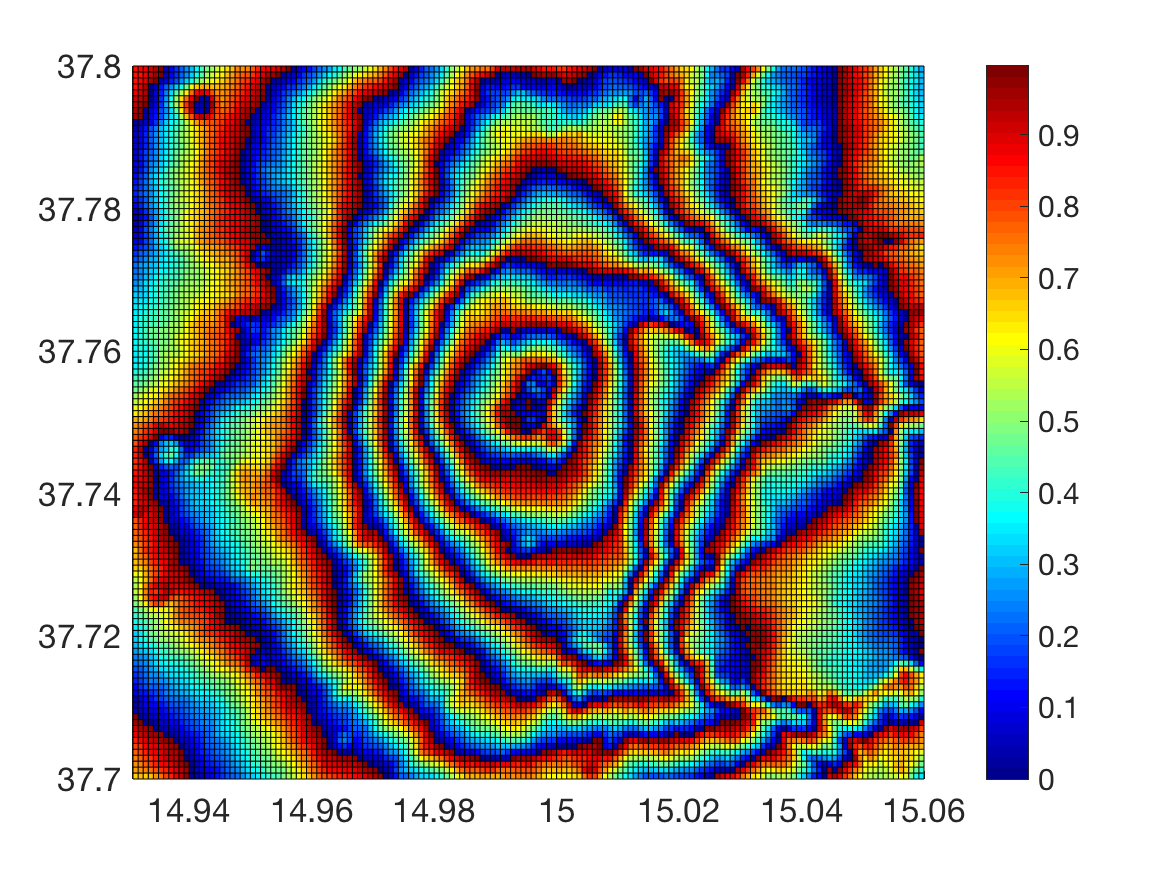} }
\subcaptionbox[]{    Noisy $f$ mod 1
}[ 0.24\textwidth ]
{\includegraphics[width=0.24\textwidth] {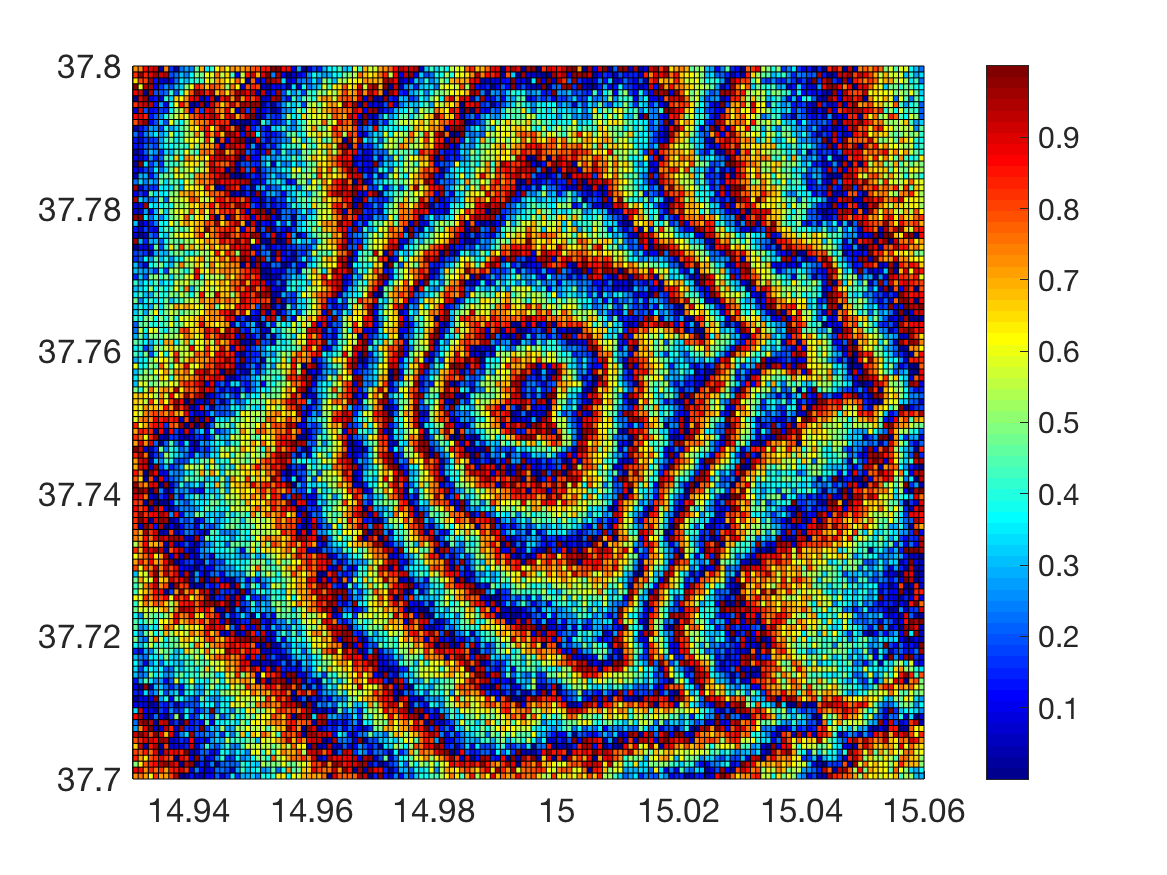} }

\subcaptionbox[]{      Denoised $f$ mod 1 (RMSE = $0.183$)
}[ 0.24\textwidth ]
{\includegraphics[width=0.24\textwidth] {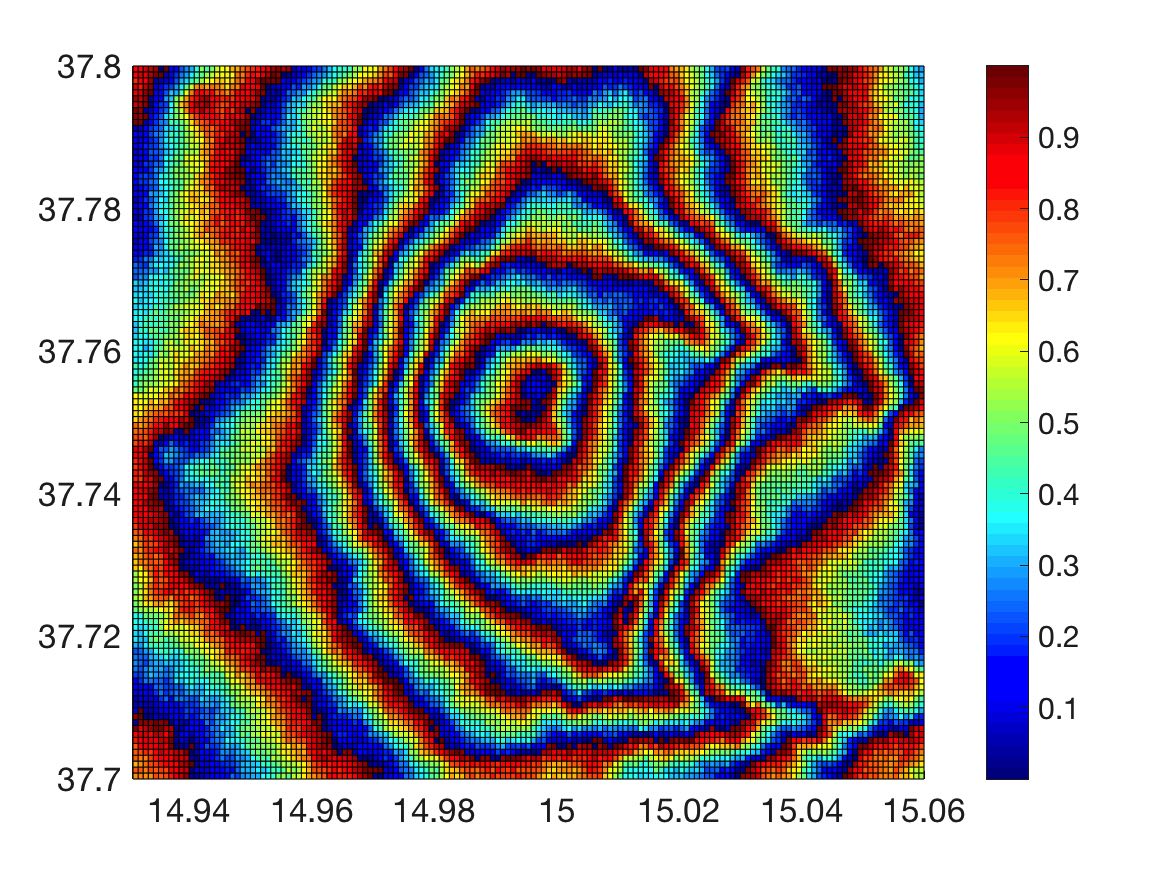} }
%
\subcaptionbox[]{   Denoised $f$ (RMSE=$ 0.073 $)
}[ 0.24\textwidth ]
{\includegraphics[width=0.24\textwidth] {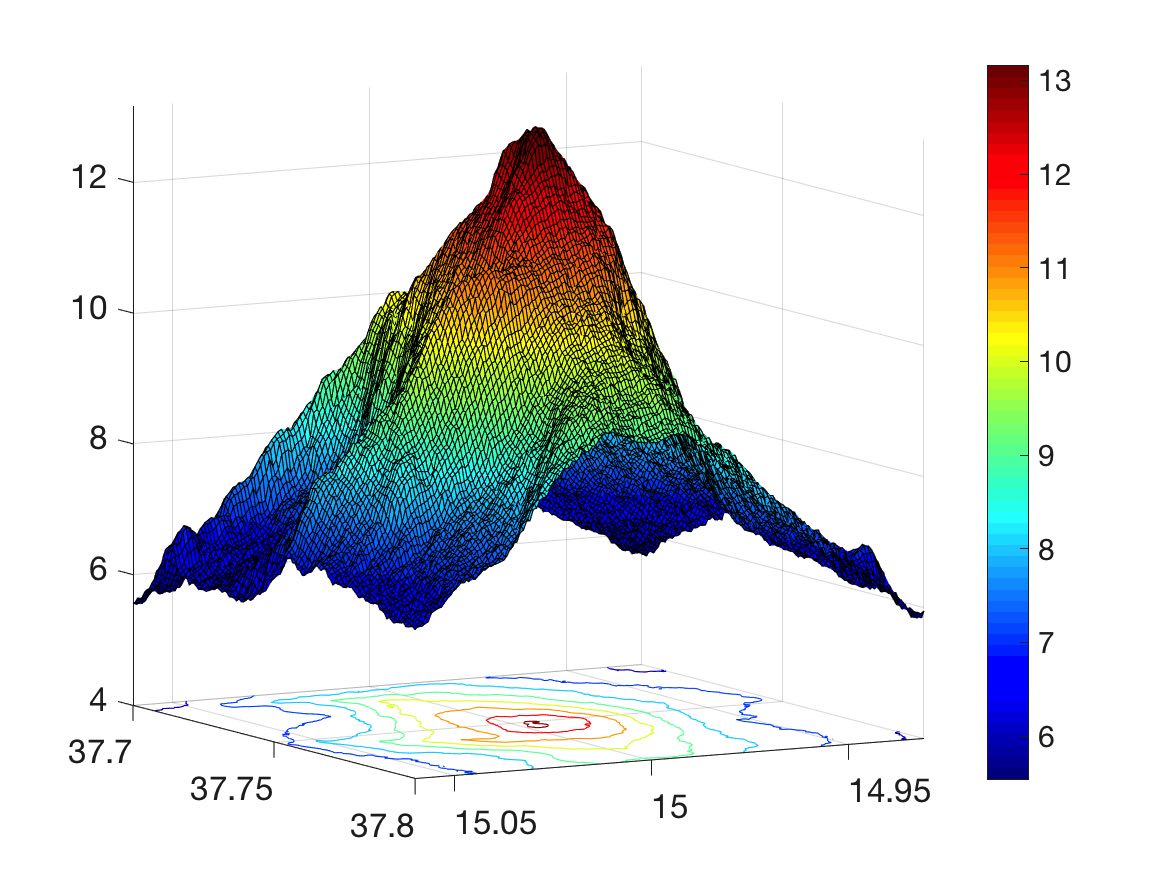} }
%
%
\subcaptionbox[]{   Denoised $f$ (depth)
}[ 0.24\textwidth ]
{\includegraphics[width=0.24\textwidth] {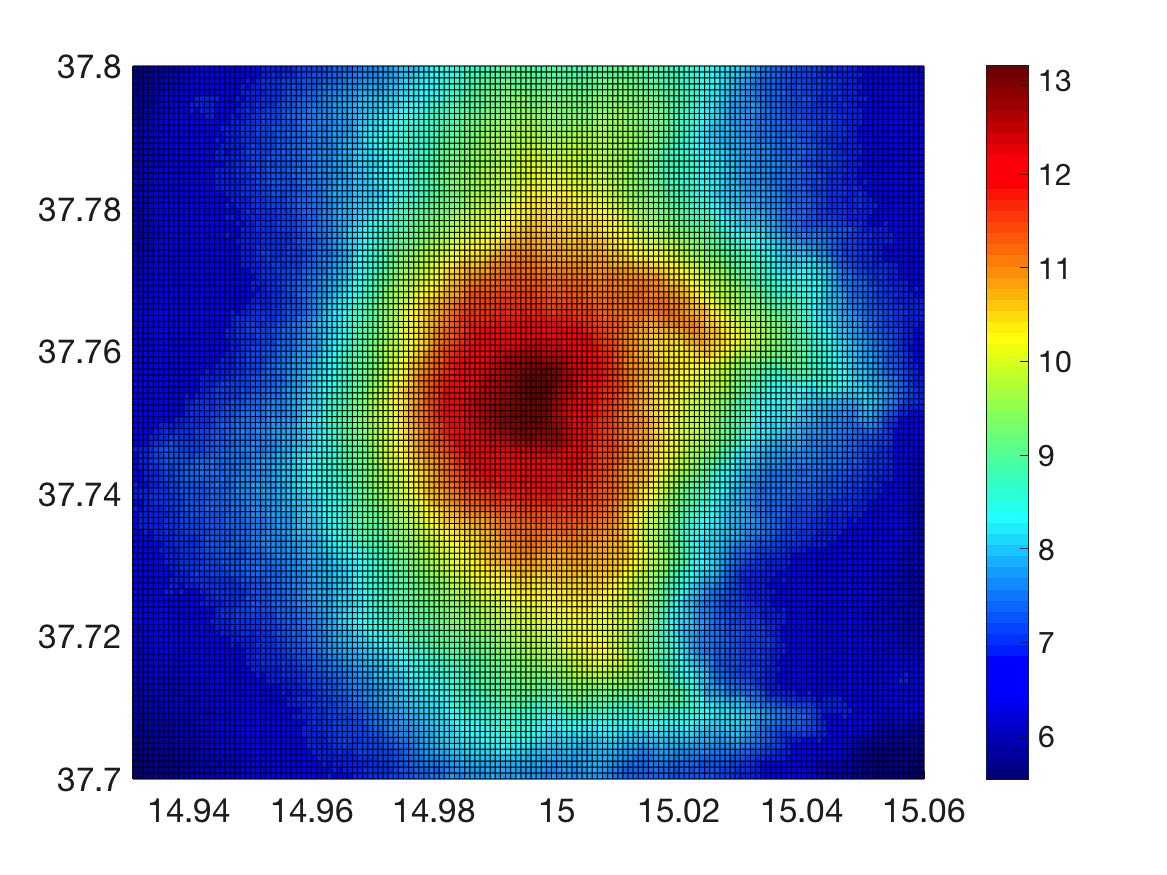} }
%
%
\subcaptionbox[]{  Error  $|f - \hat{f}|$ 
}[ 0.24\textwidth ]
{\includegraphics[width=0.24\textwidth] {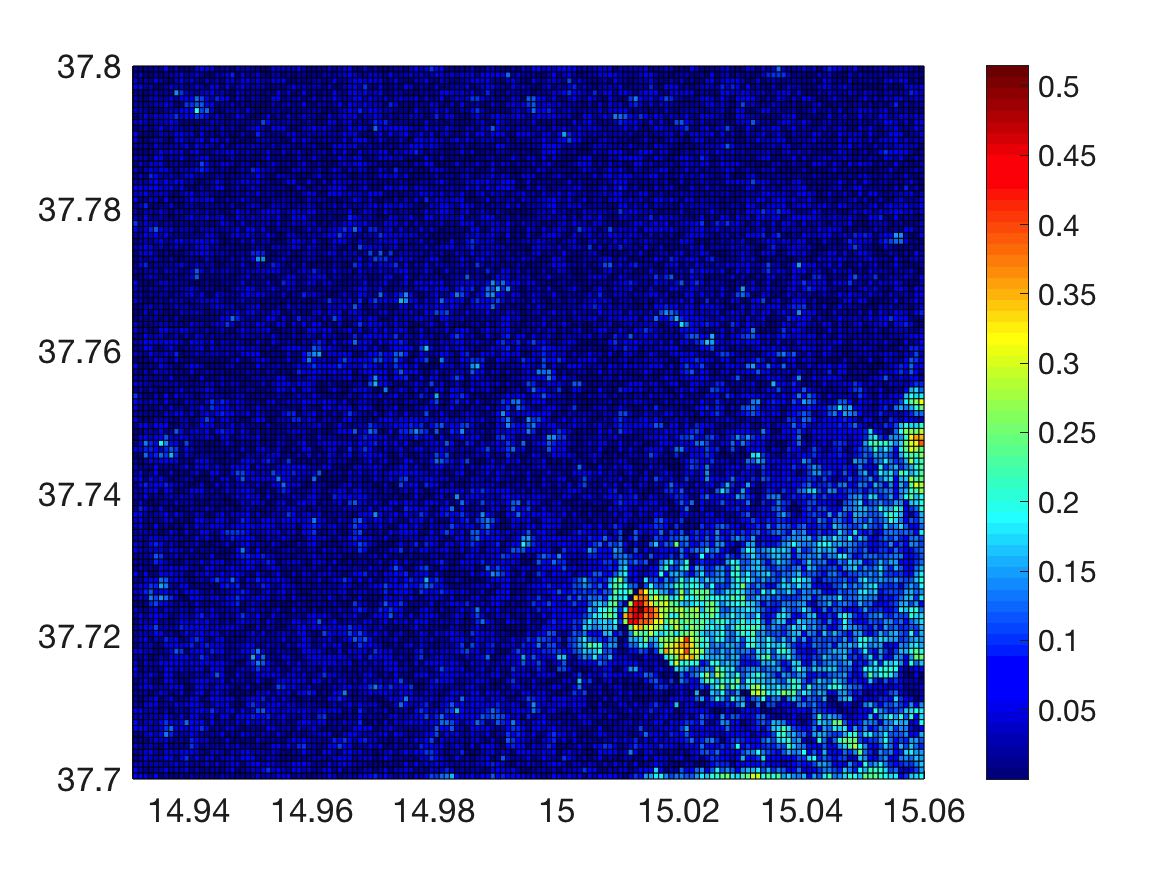} }
\captionsetup{width=0.95\linewidth}
\caption[Short Caption]{  Elevation map of Mount Etna  with  $n = 19000$, $k=1$ (Chebychev distance), $\lambda = 0.3$,  and noise level $\sigma=0.10$ under the Gaussian model, as recovered by Manopt-Phases.}
\label{fig:ETNA_HIGH_Noisy}
\end{figure*}
 


\end{document}